\definecolor{bgcolor}{rgb}{0.8,1,1}
\definecolor{bgcolor2}{rgb}{0.8,1,0.8}
\newcommand{\mytextstyle}{\textstyle}
\newcommand{\myred}[1]{{\color{red}#1}}
\newcommand{\myblue}[1]{{\color{blue}#1}}
\newcommand{\R}{\mathbb{R}}
\newcommand{\eqdef}{\stackrel{\text{def}}{=}}
\def\<#1,#2>{\left\langle #1,#2\right\rangle}
\newtheorem{lemma}{Lemma}[section]
\newtheorem{theorem}{Theorem}[section]
\newtheorem{definition}{Definition}[section]
\newtheorem{proposition}{Proposition}[section]
\newtheorem{assumption}{Assumption}[section]
\newtheorem{corollary}{Corollary}[section]
\theoremstyle{plain}
\newtheorem{remark}{Remark}[section]
\declaretheorem[sibling=definition]{example}
\newcommand{\cD}{{\cal D}}
\newcommand{\cL}{{\cal L}}
\newcommand{\cN}{{\cal N}}
\newcommand{\cO}{{\cal O}}
\newcommand{\Var}{\mathrm{Var}}
\newcommand{\mI}{{\bf I}}
\newcommand{\mM}{{\bf M}}
\newcommand{\EE}{\mathbf{E}}
\newcommand{\psvrg}{q}
\newcommand{\tA}{\widetilde{A}}
\newcommand{\hA}{\widehat{A}}
\newcommand{\tB}{\widetilde{B}}
\newcommand{\hB}{\widehat{B}}
\newcommand{\tF}{\widetilde{F}}
\newcommand{\hF}{\widehat{F}}
\newcommand{\tD}{\widetilde{D}}
\newcommand{\hD}{\widehat{D}}
\newcommand{\txi}{\widetilde{\xi}}
\newcommand{\oxi}{\overline{\xi}}
\newlength{\dhatheight}
\begin{document}

% If your paper is accepted and the title of your paper is very long,
% the style will print as headings an error message. Use the following
% command to supply a shorter title of your paper so that it can be
% used as headings.
%
%\runningtitle{I use this title instead because the last one was very long}

% If your paper is accepted and the number of authors is large, the
% style will print as headings an error message. Use the following
% command to supply a shorter version of the authors names so that
% they can be used as headings (for example, use only the surnames)
%
%\runningauthor{Surname 1, Surname 2, Surname 3, ...., Surname n}

\twocolumn[

\aistatstitle{Local SGD: Unified Theory and New Efficient Methods}

\aistatsauthor{ Eduard Gorbunov \And Filip Hanzely \And  Peter Richt\'{a}rik }

\aistatsaddress{\makecell{MIPT, Yandex, Sirius, Russia\\ KAUST, Saudi Arabia} \And  KAUST, Saudi Arabia \And KAUST, Saudi Arabia} ]

\begin{abstract}
We present a unified framework for analyzing local {\tt SGD} methods in the convex and strongly convex regimes for distributed/federated training of supervised machine learning models. We recover several known methods as a special case of our general framework, including {\tt Local-SGD}/{\tt FedAvg}, {\tt SCAFFOLD}, and several variants of {\tt SGD} not originally designed for federated learning. Our framework covers both the identical and heterogeneous data settings, supports both random and deterministic number of local steps, and can work with a wide array of local stochastic gradient estimators, including shifted estimators which are able to adjust the fixed points of local iterations for faster convergence. As an application of our framework, we develop multiple novel FL optimizers which are superior to existing methods. In particular, we develop the first linearly converging local {\tt SGD} method which does not require any data homogeneity or other strong assumptions.
\end{abstract}

\section{Introduction}\label{sec:intro}
In this paper we are interested in a centralized distributed optimization problem of the form
\begin{equation}
\mytextstyle
	\min\limits_{x\in\R^d} f(x) = \frac{1}{n}\sum\limits_{i=1}^n f_i(x), \label{eq:main_problem}
\end{equation}
where $n$ is the number of devices/clients/nodes/workers. We assume that $f_i$ can be represented either as a) an expectation, i.e.,
\begin{equation}
\mytextstyle
	f_i(x) = \EE_{\xi_i\sim \cD_i}\left[f_{\xi_i}(x)\right], \label{eq:f_i_expectation}
\end{equation}
where $\cD_i$ describes the distribution of data on device $i$,  or b) as a finite sum, i.e.,
\begin{equation}
\mytextstyle
	f_i(x) = \frac{1}{m}\sum\limits_{j=1}^m f_{ij}(x). \label{eq:f_i_sum}
\end{equation} 
While our theory allows the number of functions $m$ to vary across the devices, for simplicity of exposition, we restrict the narrative to this simpler case.

Federated learning (FL)---an emerging subfield of machine learning \cite{mcmahan2016federated, konevcny2016federated, mcmahan2017communication}---is traditionally cast as an instance of problem~\eqref{eq:main_problem} with several idiosyncrasies. First, the number of devices $n$ is very large: tens of thousands to millions. Second,  the devices (e.g., mobile phones) are often very heterogeneous in their compute, connectivity, and storage capabilities. The data defining each function $f_i$ reflects the usage patterns of the device owner, and as such, it is either unrelated or at best related only weakly. Moreover, device owners desire to protect their local private data, and for that reason, training needs to take place with the data remaining on the devices.  Finally, and this is of key importance for the development in this work, communication among the workers, typically conducted via a trusted aggregation server, is very expensive. 

% At the other end of the spectrum lies the distributed optimization within a datacenter where one wishes to take advantage of the abundant computational power in the form of spare machines. In such a case, all workers create identical copies of the full dataset locally and jointly solve the resulting instance of~\eqref{eq:main_problem}.

\paragraph{Communication bottleneck.} There are two main directions in the literature for tackling the communication cost issue in FL. The first approach consists of algorithms that aim to reduce the number of transmitted bits by applying a  carefully chosen gradient compression scheme, such as quantization~\cite{alistarh2016qsgd, pmlr-v80-bernstein18a, mishchenko2019distributed, horvath2019stochastic,ramezani2019nuqsgd, reisizadeh2020fedpaq}, sparsification~\cite{aji2017sparse, lin2017deep, alistarh2018convergence, wangni2018gradient, wang2018atomo, mishchenko202099}, or other more sophisticated strategies~\cite{karimireddy2019error, stich2019error, pmlr-v80-wu18d, vogels2019powersgd, beznosikov2020biased,gorbunov2020linearly}. The second approach---one that we investigate in this paper---instead focuses on increasing the total amount of local computation in between the communication rounds in the hope that this will reduce the total number of communication rounds needed to build a model of sufficient quality~\cite{shamir2014communication,zhang2015disco,  reddi2016aide, li2018federated, pathak2020fedsplit}. These two approaches, {\em communication compression} and {\em local computation}, can be combined for a better practical performance~\cite{basu2019qsparse}.

\paragraph{Local first-order algorithms.} Motivated by recent development in the field~\cite{zinkevich2010parallelized, mcmahan2016federated, stich2018local, lin2018don,liang2019variance, wu2019federated, karimireddy2019scaffold, khaled2020tighter, woodworth2020local}, in this paper we perform an in-depth and general study of {\em local first-order algorithms}. Contrasted with zero or higher order local methods, local first order methods perform several gradient-type steps in between the communication rounds.  In particular, we consider the following family of methods:
\begin{equation}
	x_i^{k+1} = \begin{cases} x_i^k - \gamma g_i^k,& \text{if } c_{k+1} = 0,\\ \frac{1}{n}\sum\limits_{i=1}^n \left(x_i^k - \gamma g_i^k\right),& \text{if } c_{k+1} = 1, \end{cases} \label{eq:local_sgd_def}
\end{equation}
where $x_i^k$ represents the local variable maintained by the $i$-th device, $g_i^k$ represents local first order direction\footnote{Vector $g_i^k$ can be a simple unbiased estimator of $\nabla f_i(x_i^k)$, but  can also involve a local ``shift'' designed to correct the (inherently wrong) fixed point of local methods. We elaborate on this point later.} and (possibly random) sequence $\{c_{k}\}_{k\ge 1}$ with $c_k \in\{0,1\}$ encoding the times when communication takes place. 

Both the classical {\tt Local-SGD/FedAvg}~\cite{mcmahan2016federated, stich2018local, khaled2020tighter, woodworth2020local} and shifted local {\tt SGD}~\cite{liang2019variance, karimireddy2019scaffold} methods fall into this category of algorithms. However, most of the existing methods have been analyzed with limited flexibility only, leaving many potentially fruitful directions unexplored. The most important unexplored questions include i) better understanding of the local shift that aims to correct the fixed point of local methods, ii) support for more sophisticated local gradient estimators that allow for importance sampling, variance reduction, or coordinate descent, iii) variable number of local steps, and iv) general theory supporting multiple data similarity types, including identical, heterogeneous and partially heterogeneous ($\zeta$-heterogeneous - defined later). 

Consequently, there is a need for a single framework unifying the theory of local stochastic first order methods, ideally one capable of pointing to new and more efficient variants. This is what we do in this work.

\paragraph{Unification of stochastic algorithms.} There have been multiple recent papers aiming to unify the theory of first-order optimization algorithms. The closest to our work is the unification of (non-local) stochastic algorithms in~\cite{gorbunov2019unified} that proposes a relatively simple yet powerful framework for analyzing variants of {\tt SGD} that allow for minibatching, arbitrary sampling,\footnote{A tight convergence rate given any sampling strategy and any smoothness structure of the objective.} variance reduction, subspace gradient oracle, and quantization. We recover this framework as a special case in a non-local regime. Next, a framework for analyzing error compensated or delayed SGD methods was recently proposed in~\cite{gorbunov2020linearly}. Another relevant approach covers the unification of decentralized {\tt SGD} algorithms~\cite{koloskova2020unified}, which is able to recover the basic variant of {\tt Local-SGD} as well. While our framework matches their rate for basic {\tt Local-SGD}, we cover a broader range of local methods in this work as we focus on the centralized setting.

\subsection{Our Contributions}

In this paper, we propose a general framework for analyzing a broad family of local stochastic gradient methods of the form~\eqref{eq:local_sgd_def}. Given that a particular local algorithm satisfies a specific parametric assumption (Assumption~\ref{ass:key_assumption}) in a certain scenario, we provide a tight convergence rate of such a method. 

Let us give a glimpse of our results and their generality. A local algorithm of the  form~\eqref{eq:local_sgd_def} is allowed to consist of an {\em arbitrary} local stochastic gradient estimator (see Section~\ref{sec:local_solver} for details), a possible {\em drift/shift} to correct for the non-stationarity of local methods\footnote{Basic local algorithms such as {\tt FedAvg}/{\tt Local-SGD} or {\tt FedProx}~\cite{li2018federated} have incorrect fixed points~\cite{pathak2020fedsplit}. To eliminate this issue, a strategy of adding an extra ``drift'' or ``shift'' to the local gradient has been proposed recently~\cite{liang2019variance, karimireddy2019scaffold}.} and a fixed or random local loop size. Further, we provide a tight convergence rate in both the identical and heterogeneous data regimes for strongly (quasi) convex and convex objectives. Consequently, our framework is capable of:

$\bullet$  {\bf Recovering known optimizers along with their tight rates.} We recover multiple known local optimizers as a special case of our general framework, along with their convergence rates (up to small constant factors). This includes {\tt FedAvg/Local-SGD}~\cite{mcmahan2016federated, stich2018local} with currently the best-known convergence rate~\cite{khaled2020tighter, woodworth2020local, koloskova2020unified, woodworth2020minibatch} and {\tt SCAFFOLD}~\cite{karimireddy2019scaffold}. Moreover, in a special case we recover a general framework for analyzing non-local {\tt SGD} method developed in ~\cite{gorbunov2019unified}, and consequently we recover multiple variants of {\tt SGD} with and without variance reduction, including {\tt SAGA}~\cite{defazio2014saga}, {\tt L-SVRG}~\cite{kovalev2019don},  {\tt SEGA}~\cite{hanzely2018sega}, gradient compression methods~\cite{mishchenko2019distributed, horvath2019stochastic} and many more. 

$\bullet$  {\bf Filling missing gaps for known methods.} Many of the recovered optimizers have only been analyzed under specific and often limiting circumstances and regimes. Our framework allows us to extend known methods into multiple hitherto unexplored settings. For instance, for each (local) method our framework encodes, we allow for a random/fixed local loop size, identical/heterogeneous/$\zeta$-heterogeneous data (introduced soon), and convex/strongly convex objective.

$\bullet$  {\bf Extending the established optimizers.} To the best of our knowledge, none of the known local methods have been analyzed under arbitrary smoothness structure of the local objectives\footnote{By this we mean that function $f_{i,j}$ from~\eqref{eq:f_i_sum} is $\mM_{i,j}$-smooth with $\mM_{i,j}\in \R^{d\times d}, \mM_{i,j}\succeq 0$, i.e., for all $x,y\in  \R^d$ we have $f_{i,j}(x)\leq f_{i,j}(y) + \langle\nabla  f_{i,j}(y),x-y \rangle + \frac{1}{2} (x-y)^\top \mM_{i,j} (x-y)$. As an example, logistic regression possesses naturally such a structure with matrices $\mM_{i,j}$ of rank 1.} and consequently, our framework is the first to allow for the local stochastic gradient to be constructed via importance (possibly minibatch) sampling. Next, we allow for a local loop with a random length, which is a new development contrasting with the classical fixed-length regime. We discuss advantages of of the random loop in Section~\ref{sec:data_and_loop}.

$\bullet$  {\bf New efficient algorithms.} Perhaps most importantly, our framework is powerful enough to point to a range of novel methods. A notable example is {\tt S-Local-SVRG}, which is a local variance reduced {\tt SGD} method able to learn the optimal drift. This is the first time that local variance reduction is successfully combined with an on-the-fly learning of the local drift. Consequently, this is the first method which enjoys a linear convergence rate to the exact optimum (as opposed to a neighborhood of the solution only) without any restrictive assumptions and is thus superior in theory to the convergence of all existing local first order methods. We also develop another linearly converging method: {\tt S*-Local-SGD*}. Albeit not of practical significance as it depends on the a-priori knowledge of the optimal solution $x^*$, it is of theoretical interest as it enabled us to discover {\tt S-Local-SVRG}. See Table~\ref{tbl:special_cases} which summarizes all our complexity results.

\paragraph{Notation.} Due to its generality, our paper is heavy in notation. For the reader's convenience, we present a notation table in Sec.~\ref{sec:notation_table} of the appendix.

\section{Our Framework}\label{sec:main_res}
In this section we present the main result of the paper. Let us first introduce the key assumptions that we impose on our objective~\eqref{eq:main_problem}. We start with a relaxation of $\mu$-strong convexity.
\begin{assumption}[$(\mu,x^*)$-strong quasi-convexity]\label{ass:quasi_strong_convexity}
Let $x^*$ be a minimizer of $f$. We assume that  $f_i$ is $(\mu,x^*)$-strongly quasi-convex for all $i\in[n]$ with $\mu\geq 0$, i.e.\ for all $x\in\R^d$:
	\begin{equation}
	\mytextstyle	f_i(x^*) \ge f_i(x) + \langle\nabla f_i(x), x^* - x\rangle + \frac{\mu}{2}\|x - x^*\|^2. \label{eq:str_quasi_cvx}
	\end{equation}
\end{assumption}

Next, we require classical $L$-smoothness\footnote{While we require $L$-smoothness of $f_i$ to establish the main convergence theorem, some of the parameters of As.~\ref{ass:key_assumption} can be tightened considering a more complex smoothness structure of the local objective.} of local objectives, or equivalently, $L$-Lipschitzness of their gradients.
\begin{assumption}[$L$-smoothness]\label{ass:L_smoothness}
	Functions $f_i$ are $L$-smooth for all $i\in[n]$ with $L\geq 0$, i.e., 
	\begin{equation}
		\|\nabla f_i(x) - \nabla f_i(y)\| \le L\|x-y\|, \quad \forall x,y\in\R^d. \label{eq:L_smoothness}
	\end{equation}
\end{assumption}

In order to simplify our notation, it will be convenient to introduce the notion of virtual iterates $x^k$ defined as a mean of the local iterates~\cite{stich2019error}:
$
	\mytextstyle x^k \eqdef \frac{1}{n}\sum_{i=1}^n x_i^k. 
$
Despite the fact that $x^k$ is being physically computed only for $k$ for which  $c_k = 1$, virtual iterates are a very useful tool facilitating the convergence analysis. Next, we shall measure the discrepancy between the local and virtual iterates via the quantity $V_k$ defined as $
	\mytextstyle 	V_k \eqdef \frac{1}{n}\sum_{i=1}^n\|x_i^k - x^k\|^2. $

We are now ready to introduce the parametric assumption on both stochastic gradients $g_i^k$ and function $f$. This is a non-trivial generalization of the assumption from~\cite{gorbunov2019unified} to the class of local stochastic methods of the form~\eqref{eq:local_sgd_def}, and forms the heart of this work.\footnote{Recently, the assumption from~\cite{gorbunov2019unified} was generalized in a different way to cover  the class of the methods with error compensation and delayed updates~\cite{gorbunov2020linearly}.}

\begin{assumption}[Key parametric assumption]\label{ass:key_assumption}
	Assume that for all $k\ge 0$ and $i\in[n]$, local stochastic directions $g_i^k$ satisfy
	\begin{equation}
	\mytextstyle
		\frac{1}{n}\sum\limits_{i=1}^n\EE_k\left[g_i^k\right] = \frac{1}{n}\sum\limits_{i=1}^n\nabla f_i(x_i^k), \label{eq:unbiasedness}
	\end{equation}
	where $\EE_k[\cdot]$ defines the expectation w.r.t.\ randomness coming from the $k$-th iteration only. Further, assume that there exist  non-negative constants $A, A', B, B', C, C', F, F',G, H, D_1, D_1',D_2, D_3 \ge 0, \rho \in (0,1]$ and a sequence of (possibly random) variables $\{\sigma_k^2\}_{k\ge 0}$ such that
	\begin{align}
	\mytextstyle
		\frac{1}{n}\sum\limits_{i=1}^n\EE\left[\|g_i^k\|^2\right] \le & 2A\EE\left[f(x^k) - f(x^*)\right] + B\EE\left[\sigma_k^2\right] \nonumber \\
		& \quad + F\EE\left[V_k\right] + D_1, \label{eq:second_moment_bound}\\
		\mytextstyle
		\EE\left[\left\|\frac{1}{n}\sum\limits_{i=1}^ng_i^k\right\|^2\right] \le& 2A'\EE\left[f(x^k) - f(x^*)\right] + B'\EE\left[\sigma_k^2\right]  \nonumber \\
		& \quad + F'\EE\left[V_k\right] + D_1', \label{eq:second_moment_bound_2}\\
		\EE\left[\sigma_{k+1}^2\right] \le& (1-\rho)\EE\left[\sigma_k^2\right] + 2C\EE\left[f(x^k) - f(x^*)\right]  \nonumber \\
		& \quad + G\EE\left[V_k\right] + D_2,\label{eq:sigma_k+1_bound}\\
		\mytextstyle
		2L\sum\limits_{k=0}^K w_k\EE[V_k] \le& \mytextstyle \frac{1}{2}\sum\limits_{k=0}^Kw_k\EE\left[f(x^k) - f(x^*)\right]  \label{eq:sum_V_k_bounds}  \\
		&\mytextstyle \quad + 2LH\EE\sigma_0^2+ 2LD_3\gamma^2 W_K ,\nonumber
	\end{align}
	where sequences $\{W_K\}_{K\ge 0}$, $\{w_k\}_{k\ge 0}$ are defined as
	\begin{equation}
	\mytextstyle
		W_K \eqdef \sum\limits_{k=0}^K w_k,\quad w_k \eqdef \frac{1}{\left( 1 - \min\left\{\gamma\mu,\frac{\rho}{4}\right\}  \right)^{k+1}}, \label{eq:w_k_definition}
	\end{equation}

\end{assumption}

Admittedly, with its many parameters (whose meaning will become clear from the rest of the paper), As.~\ref{ass:key_assumption} is not easy to parse on first reading. Several comments are due at this point. First, while the complexity of this assumption may be misunderstood as being problematic, the opposite is true. This assumption enables us to prove a single theorem (Thm.~\ref{thm:main_result}) capturing the convergence behavior, in a tight manner, of all local first-order methods described by our framework \eqref{eq:local_sgd_def}. So, the parametric and structural complexity of this assumption is paid for by the unification aspect it provides. Second, for each specific method we consider in this work, we {\em prove} that As.~\ref{ass:key_assumption} is satisfied, and each such proof is based on much simpler and generally accepted assumptions. So, As.~\ref{ass:key_assumption} should be seen as a ``meta-assumption'' forming an intermediary and abstract step in the analysis, one revealing the structure of the inequalities needed to obtain a general and tight convergence result for local first-order methods. We dedicate the rest of the paper to explaining these parameters and to describing the algorithms and the associate rates their combination encodes. We are now ready to present our main convergence result.

% For now, it is enough to know that our goal is to give a tight convergence guarantee for scheme~\eqref{eq:local_sgd_def} given an arbitrary set of parameters of Assumption~\ref{ass:key_assumption}. Such a result is provided in Theorem~\ref{thm:main_result}. 

%		\gamma &\le& \min\left\{\frac{1}{2(\tau-1)\mu}, \frac{1}{2(\tau-1)\sqrt{\left(F+\frac{2BG}{\rho(1-\rho)}\right)}}, \frac{1}{4(\tau-1)\sqrt{2L\left(A + \frac{2BC}{\rho(1-\rho)}\right)}}\right\}.

\begin{theorem}\label{thm:main_result}
	Let As.~\ref{ass:quasi_strong_convexity},~\ref{ass:L_smoothness} and~\ref{ass:key_assumption} be satisfied and assume the stepsize satisfies $0<
		\gamma \le \min\left\{\frac{1}{2(A'+ \frac{4CB'}{3\rho})}, \frac{L}{F'+\frac{4GB'}{3\rho}}\right\}$. Define $\overline{x}^K \eqdef \frac{1}{W_K}\sum_{k=0}^K w_k x^k$, $\Phi^0 \eqdef \frac{2\|x^0 - x^*\|^2 +   \frac{8B'}{3\rho}\gamma^2 \EE\sigma_0^2 + 4LH\gamma\EE\sigma_0^2}{\gamma}$ and $\Psi^0 \eqdef 2\left(D_1' + \frac{4B'}{3\rho}D_2 + 2L\gamma D_3\right)$. Let $\theta \eqdef 1 - \min\left\{\gamma\mu,\frac{\rho}{4}\right\}$. Then if $\mu>0$, we have
	\begin{align}
		\EE\left[f(\overline{x}^K) \right] - f(x^*) \le& \theta^K\Phi^0 + \gamma \Psi^0, \label{eq:main_result_1}
	\end{align}
	and in the case when $\mu = 0$, we have
	\begin{align}
	\mytextstyle
		\EE\left[f(\overline{x}^K) \right] - f(x^*)\le& \mytextstyle\frac{\Phi^0}{ K}  + \gamma \Psi^0. \label{eq:main_result_2} 
	\end{align}
\end{theorem}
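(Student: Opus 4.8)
The plan is the standard Lyapunov--telescoping argument carried out on the \emph{virtual iterates} $x^k\eqdef\tfrac1n\sum_i x_i^k$. Observe first that whether or not $c_{k+1}=1$, averaging \eqref{eq:local_sgd_def} over $i$ produces the single recursion $x^{k+1}=x^k-\gamma\,\overline g^k$ with $\overline g^k\eqdef\tfrac1n\sum_i g_i^k$. Expanding $\|x^{k+1}-x^*\|^2$, taking the conditional expectation $\EE_k$, and using unbiasedness \eqref{eq:unbiasedness} leaves only the cross term $\langle\tfrac1n\sum_i\nabla f_i(x_i^k),\,x^k-x^*\rangle$ to be bounded below. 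I would split $x^k-x^*=(x_i^k-x^*)+(x^k-x_i^k)$, apply $(\mu,x^*)$-strong quasi-convexity (As.~\ref{ass:quasi_strong_convexity}) to the first summand and $L$-smoothness (As.~\ref{ass:L_smoothness}) to the second, average over $i$, and use the orthogonality identity $\tfrac1n\sum_i\|x_i^k-x^*\|^2=\|x^k-x^*\|^2+V_k$. This gives $\EE_k\|x^{k+1}-x^*\|^2\le(1-\gamma\mu)\|x^k-x^*\|^2-2\gamma\big(f(x^k)-f(x^*)\big)+\gamma L\,V_k+\gamma^2\EE_k\|\overline g^k\|^2$, into which I substitute the second-moment bound \eqref{eq:second_moment_bound_2}.

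Next I would introduce the Lyapunov sequence $\widehat\Phi^k\eqdef\EE\|x^k-x^*\|^2+\tfrac{4B'\gamma^2}{3\rho}\,\EE[\sigma_k^2]$ and add $\tfrac{4B'\gamma^2}{3\rho}$ times \eqref{eq:sigma_k+1_bound}. The weight $\tfrac{4B'\gamma^2}{3\rho}$ is picked so that $(1-\rho)\tfrac{4B'\gamma^2}{3\rho}+\gamma^2 B'=(1-\tfrac\rho4)\tfrac{4B'\gamma^2}{3\rho}\le\theta\cdot\tfrac{4B'\gamma^2}{3\rho}$, i.e.\ the $\sigma_k^2$ mass contracts at rate $\theta$, while $(1-\gamma\mu)\le\theta$ handles the $\|x^k-x^*\|^2$ mass. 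Invoking the first stepsize bound $\gamma\le\tfrac{1}{2(A'+4CB'/(3\rho))}$ to push the coefficient of $\EE[f(x^k)-f(x^*)]$ below $-\gamma$, and the second bound $\gamma\le\tfrac{L}{F'+4GB'/(3\rho)}$ to push the coefficient of $\EE[V_k]$ below $2\gamma L$, I obtain the one-step estimate $\widehat\Phi^{k+1}\le\theta\,\widehat\Phi^k-\gamma\,\EE[f(x^k)-f(x^*)]+2\gamma L\,\EE[V_k]+\gamma^2\big(D_1'+\tfrac{4B'}{3\rho}D_2\big)$.

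The subtle step is that the leftover $V_k$ term is still positive and that \eqref{eq:sum_V_k_bounds} controls it only after summation against the weights $w_k=\theta^{-(k+1)}$, so it cannot be absorbed step by step. Hence I would divide the one-step estimate by $\theta^{k+1}$ and telescope over $k=0,\dots,K$, which yields $\theta^{-(K+1)}\widehat\Phi^{K+1}\le\widehat\Phi^0-\gamma\sum_{k=0}^Kw_k\EE[f(x^k)-f(x^*)]+2\gamma L\sum_{k=0}^Kw_k\EE[V_k]+\gamma^2\big(D_1'+\tfrac{4B'}{3\rho}D_2\big)W_K$, and only now plug in $\gamma$ times \eqref{eq:sum_V_k_bounds}: the term $2\gamma L\sum_kw_k\EE[V_k]$ is replaced by $\tfrac\gamma2\sum_kw_k\EE[f(x^k)-f(x^*)]+2LH\gamma\EE\sigma_0^2+2LD_3\gamma^3W_K$, cancelling half the negative $f$-term. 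Dropping the nonnegative left-hand side, rearranging, and dividing by $\tfrac\gamma2W_K$ gives $\tfrac1{W_K}\sum_{k=0}^Kw_k\big(\EE[f(x^k)]-f(x^*)\big)\le\tfrac{\Phi^0}{W_K}+\gamma\Psi^0$, where one checks directly that $\tfrac{2\widehat\Phi^0+4LH\gamma\EE\sigma_0^2}{\gamma}=\Phi^0$ and the noise constants collapse to $\gamma\Psi^0$. Jensen's inequality and convexity of $f$ then give $\EE[f(\overline x^K)]-f(x^*)\le\tfrac1{W_K}\sum_kw_k(\EE[f(x^k)]-f(x^*))$. For $\mu>0$ the single term $k=K$ already gives $W_K\ge\theta^{-(K+1)}\ge\theta^{-K}$, so $1/W_K\le\theta^K$ and \eqref{eq:main_result_1} follows; for $\mu=0$ we have $\theta=1$, $w_k\equiv1$, $W_K=K+1\ge K$, so \eqref{eq:main_result_2} follows.

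I expect the main obstacle to be precisely the ordering around the $V_k$ term: because \eqref{eq:sum_V_k_bounds} is an aggregate, weight-$w_k$ bound rather than a per-iteration one, the telescoping must come first, and the weights used must be exactly $w_k=\theta^{-(k+1)}$, so that the extra $\EE\sigma_0^2$ and $D_3\gamma^2W_K$ contributions land in $\Phi^0$ and $\Psi^0$ with the right constants. Everything else is routine; note in particular that only \eqref{eq:second_moment_bound_2}, \eqref{eq:sigma_k+1_bound} and \eqref{eq:sum_V_k_bounds} are used, consistent with the constants $A,B,D_1,F$ not appearing in the statement.
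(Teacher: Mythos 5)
Your proposal is correct and follows essentially the same route as the paper: the same virtual-iterate recursion, the same one-step Lyapunov estimate with $T^k=\|x^k-x^*\|^2+\tfrac{4B'}{3\rho}\gamma^2\sigma_k^2$ (your $\widehat\Phi^k$), the same use of the two stepsize bounds, the weighted telescoping with $w_k=\theta^{-(k+1)}$ followed by the aggregate bound \eqref{eq:sum_V_k_bounds} to cancel half of the function-value sum, Jensen's inequality, and the same case split via $W_K\ge w_K\ge\theta^{-K}$ for $\mu>0$ and $W_K=K+1$ for $\mu=0$. The only cosmetic difference is that you use the exact identity $\tfrac1n\sum_i\|x_i^k-x^*\|^2=\|x^k-x^*\|^2+V_k$ where the paper simply drops the nonnegative $V_k$ part, which changes nothing.
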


As already mentioned, Thm.~\ref{thm:main_result} serves as a general, unified theory for local stochastic gradient algorithms.  The strongly convex case provides a linear convergence rate up to a specific neighborhood of the optimum. On the other hand, the weakly convex case yields an $\cO(K^{-1})$ convergence rate up to a particular neighborhood. One might easily derive  $\cO(K^{-1})$ and $\cO(K^{-2})$ convergence rates to the exact optimum in the strongly and weakly convex case, respectively, by using a particular decreasing stepsize rule. The next corollary gives an example of such a result in the strongly convex scenario, where the estimate of $D_3$ does not depend on the stepsize $\gamma$. A detailed result that covers all cases is provided in Section~\ref{sec:corollaries} of the appendix.

\begin{corollary}\label{cor:main_complexity_cor_str_cvx}
Consider the setup from Thm.~\ref{thm:main_result} and by  $\frac1\nu$ denote  the resulting upper bound on $\gamma$.\footnote{In order to get tight estimate of $D_3$ and $H$, we will impose further bounds on $\gamma$ (see Tbl.~\ref{tbl:data_loop}). Assume that these extra bounds are included in parameter $h$.} Suppose that $\mu> 0$ and $D_3$ does not depend on $\gamma$. Let
\[
\mytextstyle
		\gamma = \min\left\{\frac{1}{\nu},  \frac{\ln\left(\max\left\{2,\min\left\{\frac{\Upsilon_1\mu^2K^2}{\Upsilon_2},\frac{\Upsilon_1\mu^3K^3}{\Upsilon_3}\right\}\right\}\right)}{\mu K}\right\},
\]
where $\Upsilon_1 = 2\|x^0 - x^*\|^2 +   \frac{8B'\EE\sigma_0^2}{3\nu^2\rho} + \frac{4LH\EE\sigma_0^2}{\nu}$, $\Upsilon_2 = 2D_1' + \frac{4B'D_2}{3\rho}$, $\Upsilon_3 = 4LD_3$. Then, the procedure~\eqref{eq:local_sgd_def} achieves $$\EE\left[f(\overline{x}^K) \right] - f(x^*)\le \varepsilon$$ as long as 
	\begin{equation*}
	\mytextstyle
		K\geq \widetilde\cO\left(\left(\frac{1}{\rho} + \frac{\nu}{\mu}\right)\log\left(\frac{\nu \Upsilon_1}{\varepsilon}\right) + \frac{\Upsilon_2}{\mu \varepsilon} + \sqrt{\frac{\Upsilon_3}{\mu^2 \varepsilon}}\right).
	\end{equation*}

\end{corollary}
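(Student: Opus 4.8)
The plan is to invoke the $\mu>0$ branch \eqref{eq:main_result_1} of Thm.~\ref{thm:main_result}, namely $\EE[f(\overline{x}^K)] - f(x^*) \le \theta^K\Phi^0 + \gamma\Psi^0$ with $\theta = 1 - \min\{\gamma\mu,\rho/4\}$, and then tune the stepsize. The first move is to strip the explicit $\gamma$-dependence out of $\Phi^0$ and $\Psi^0$. Since the prescribed stepsize obeys $\gamma\le 1/\nu$ by construction, we have $\gamma^2\le\gamma/\nu\le 1/\nu^2$, hence $\gamma\Phi^0 = 2\|x^0-x^*\|^2 + \frac{8B'}{3\rho}\gamma^2\EE\sigma_0^2 + 4LH\gamma\EE\sigma_0^2 \le \Upsilon_1$, i.e.\ $\Phi^0\le\Upsilon_1/\gamma$; and, using that $D_3$ does not depend on $\gamma$, $\gamma\Psi^0 = 2\gamma D_1' + \frac{8B'}{3\rho}\gamma D_2 + 4L\gamma^2 D_3 \le 2\gamma\Upsilon_2 + \gamma^2\Upsilon_3$. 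Combining these with $\theta^K\le\exp(-K\min\{\gamma\mu,\rho/4\})$, the corollary reduces to showing that the prescribed $\gamma$ forces
\[
\EE[f(\overline{x}^K)]-f(x^*)\ \le\ \frac{\Upsilon_1}{\gamma}\exp\!\big(-K\min\{\gamma\mu,\tfrac{\rho}{4}\}\big) + 2\gamma\Upsilon_2 + \gamma^2\Upsilon_3\ \le\ \varepsilon
\]
whenever $K$ is at least the claimed complexity.

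The remaining step is a stepsize-tuning argument of exactly the flavor used for non-local {\tt SGD} in~\cite{gorbunov2019unified}. The prescribed $\gamma$ equals $1/\nu$ when $K$ is below a threshold and $\ln(\max\{2,Q\})/(\mu K)$ with $Q = \min\{\Upsilon_1\mu^2K^2/\Upsilon_2,\ \Upsilon_1\mu^3K^3/\Upsilon_3\}$ above it. In the second regime $\exp(-K\gamma\mu) = 1/\max\{2,Q\}$, so the exponential term becomes $\Upsilon_1\mu K/(\ln(\max\{2,Q\})\max\{2,Q\})$, and the two candidates inside $Q$ are precisely those that make this quantity $\cO(\cdot)$ of $2\gamma\Upsilon_2 = \cO(\Upsilon_2\ln(\cdot)/(\mu K))$ and of $\gamma^2\Upsilon_3 = \cO(\Upsilon_3\ln^2(\cdot)/(\mu^2K^2))$ respectively; in the first regime the exponential term is $\nu\Upsilon_1\exp(-K\min\{\mu/\nu,\rho/4\})$. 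Imposing the three resulting conditions — ``$2\gamma\Upsilon_2\le\varepsilon$'' forces $K = \widetilde\cO(\Upsilon_2/(\mu\varepsilon))$, ``$\gamma^2\Upsilon_3\le\varepsilon$'' forces $K = \widetilde\cO(\sqrt{\Upsilon_3/(\mu^2\varepsilon)})$, and the $1/\nu$-regime condition forces $K = \widetilde\cO((1/\rho + \nu/\mu)\log(\nu\Upsilon_1/\varepsilon))$ via $1/\min\{\mu/\nu,\rho/4\}\le \nu/\mu + 4/\rho$ — and then summing these three lower bounds on $K$ yields precisely the stated bound.

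I expect the genuine difficulty (and the reason the cleanest all-cases version is relegated to Section~\ref{sec:corollaries} of the appendix) to be the bookkeeping compressed into the previous paragraph: one must (a) decide which of $1/\nu$ and $\ln(\max\{2,Q\})/(\mu K)$ attains the outer $\min$ defining $\gamma$, (b) decide whether $\max\{2,Q\}$ equals $2$ or $Q$ and which of its two arguments attains the inner $\min$, and (c) handle the fact that once $\gamma = \ln(\cdot)/(\mu K)$ every error term carries logarithmic factors of $K$, so that the passage from ``the error is $\le\varepsilon$'' to an explicit lower bound on $K$ is implicit in $K$ and must be closed with the standard log-inversion estimate (that $K\gtrsim a\log a$ implies $a\log K\lesssim K$), which is exactly what produces the tilde in $\widetilde\cO$. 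Everything else is a direct substitution into Thm.~\ref{thm:main_result}; the analogous statements for $\mu = 0$ and for $D_3$ depending on $\gamma$ follow along the same lines, starting instead from~\eqref{eq:main_result_2}.
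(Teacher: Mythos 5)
Your proposal is correct and takes essentially the same route as the paper: the paper's own proof plugs the bound of Thm.~\ref{thm:main_result} into the template $r_K \le \frac{a}{\gamma W_K} + c_1\gamma + c_2\gamma^2$ with $a=\Upsilon_1$, $c_1\le 2\Upsilon_2$, $c_2=\Upsilon_3$ (using $\gamma\le\nicefrac{1}{\nu}$ exactly as you do) and then invokes the Stich-style stepsize-tuning result (Lemma~\ref{lem:lemma2_stich}), whose proof is precisely the regime-splitting and log-inversion argument you sketch. The only difference is presentational: you re-derive that tuning lemma inline (including the $\min\{\gamma\mu,\nicefrac{\rho}{4}\}$ and $\max\{2,Q\}$ bookkeeping you flag) rather than citing it.
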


\begin{remark}\label{rem:2nd_moment_decomposition}
Admittedly, Thm.~\ref{thm:main_result} does not yield the tightest known convergence rate in the heterogeneous setup under As.~\ref{thm:main_result}. Specifically, the neighborhood to which {\tt Local-SGD} converges can be slightly smaller~\cite{koloskova2020unified}. While we provide a tighter theory that matches the best-known results, we have deferred it to the appendix for the sake of clarity. In particular, to get the tightest rate, one shall replace the bound on the second moment of the stochastic direction~\eqref{eq:second_moment_bound} with two analogous bounds -- first one for the variance and the second one for the squared expectation. See As.~\ref{ass:hetero_second_moment} for details. Fortunately, Thm.~\ref{thm:main_result} does not need to change as it does not require parameters from~\eqref{eq:second_moment_bound}; these are only used later to derive $D_3, H, \gamma$ based on the data type. Therefore, only a few extra parameters should be determined in the specific scenario to get the tightest rate. 
\end{remark}

The parameters that drive both the convergence speed and the neighborhood size are determined by As.~\ref{ass:key_assumption}. In order to see through the provided rates, we shall discuss the value of these parameters in various scenarios. In general, we would like to have $\rho \in (0,1]$ as large as possible, while all other parameters are desired to be small so as to make the inequalities as tight as possible.

Let us start with studying data similarity and inner loop type as these can be decoupled from the type of the local direction that the method~\eqref{eq:local_sgd_def} takes.

\section{Data Similarity and Local Loop~\label{sec:data_and_loop}}
We now explain how our framework supports fixed and random local loop, and several data similarity regimes. 

\paragraph{Local loop.} Our framework supports \emph{local loop of a fixed length} $\tau \geq 1$ (i.e., we support local methods performing $\tau$ local iterations in between communications). This option, which is the de facto standard for local methods in theory and practice~\cite{mcmahan2016federated}, is recovered by setting $c_{a\tau } = 1$ for all non-negative integers $a$ and $c_k = 0$ for $k$ that are not divisible by $\tau$ in~\eqref{eq:local_sgd_def}. However, our framework also captures the very rarely considered \emph{local loop with a random length}. We recover this when $c_k$ are random samples from the Bernoulli distribution $\text{Be}(p)$ with parameter $p\in(0,1]$.

\paragraph{Data similarity.} We look at various possible data similarity regimes. The first option we consider is the fully heterogeneous setting where we do not assume any similarity between the local objectives whatsoever. Secondly, we consider the identical data regime with $f_1=\ldots=f_n$. Lastly, we consider the $\zeta$-heterogeneous data setting, which bounds the dissimilarity between the full and the local gradients~\cite{woodworth2020minibatch} (see Def.~\ref{def:zeta_hetero}).

\begin{definition}[$\zeta$-heterogeneous functions]\label{def:zeta_hetero}
	We say that functions $f_1,\ldots,f_n$ are $\zeta$-heterogeneous for some $\zeta\geq 0$ if the following inequality holds for all $x\in\R^d$:
	\begin{equation}
\mytextstyle
		\frac{1}{n}\sum\limits_{i=1}^n\|\nabla f_i(x) - \nabla f(x)\|^2 \le \zeta^2. \label{eq:bounded_data_dissimilarity}
	\end{equation}
\end{definition}

The $\zeta$-heterogeneous data regime recovers the heterogeneous data for $\zeta =\infty$ and identical data for $\zeta = 0$.

In Sec.~\ref{sec:a_data_and_loop} of the appendix, we show that  the local loop type and the data similarity type affect parameters $H$ and $D_3$ from As.~\ref{ass:key_assumption} only. However, in order to obtain an efficient bound on these parameters, we impose additional constraints on the stepsize $\gamma$. While we do not have space to formally state our results in the main body, we provide a comprehensive summary in Tbl.~\ref{tbl:data_loop}.

\begin{table*}[!t]
\caption{The effect of data similarity and local loop on As.~\ref{ass:key_assumption}. Constant factors are ignored. Homogeneous data are recovered as a special case of  $\zeta$-heterogeneous data with $\zeta=0$. Heterogeneous case is slightly loose in light of Remark~\ref{rem:2nd_moment_decomposition}. If one replaces the bound on the second moments~\eqref{eq:second_moment_bound} with a analogous bound on variance squared expectation (see As.~\ref{ass:hetero_second_moment}), the bounds on $\gamma$, $D_3$ and $H$ will have $(\tau-1)$ times better dependence on the variance parameters (or $\frac{1-p}{p}$ times for the random loop). See Sec.~\ref{sec:const_loop_hetero} and~\ref{sec:random_local_loop_hetero} of appendix for more details.}
\label{tbl:data_loop}
\begin{center}
\footnotesize
\begin{tabular}{|c|c|c|c|c|}
\hline
 Data &   Loop & Extra upper bounds on $\gamma$ & $D_3$ & $H$   \\
\hline
\hline
het  & fixed & {   $\frac{1}{\tau\mu}, \frac{1}{\tau\sqrt{\left(F+\frac{BG}{\rho(1-\rho)}\right)}}, \frac{1}{\tau\sqrt{2L\left(A + \frac{BC}{\rho(1-\rho)}\right)}}$} & {  $ (\tau-1)^2\left(D_1 + \frac{BD_2}{\rho}\right)$} & {  $\frac{B(\tau-1)^2\gamma^2}{\rho}$}  \\
 \hline
 $\zeta$-het  & fixed & {   $\frac{1}{\tau\mu}, \frac{1}{\sqrt{\tau\left(F+\frac{BG}{\rho(1-\rho)}\right)}}, \frac{1}{\sqrt{L\tau\left(A + \frac{BC}{\rho(1-\rho)}\right)}}$} & {  $  (\tau-1)\left(D_1 + \frac{\zeta^2}{\gamma\mu} + \frac{BD_2}{\rho}\right)$} & {  $\frac{B(\tau-1)\gamma^2}{\rho}$}  \\
 \hline
 het  & random & { $\frac{p}{\mu}$,   $\frac{p}{\sqrt{(1-p)F}}, \frac{p\sqrt{\rho(1-\rho)}}{\sqrt{BG(1-p)}}, \frac{p}{\sqrt{L(1-p)\left(A + \frac{BC}{\rho(1-\rho)}\right)}}$} & {  $ \frac{(1-p)\left(D_1 + \frac{BD_2}{\rho}\right) }{p^2}$} & {  $\frac{B(1-p)\gamma^2}{p^2\rho}$}  \\
 \hline
 $\zeta$-het  & radnom & {  $\frac{p}{\mu}$,  $\sqrt{\frac{p}{F(1-p)}}, \sqrt{\frac{p\rho(1-\rho)}{BG(1-p)}}, \sqrt{\frac{p}{L(1-p)\left(A + \frac{BC}{\rho(1-\rho)}\right)}}$} & {  $ \frac{(1-p)}{p}\left(D_1 + \frac{\zeta^2}{\gamma\mu} + \frac{BD_2}{\rho}\right)$} & {  $\frac{B(1-p)\gamma^2}{p\rho}$}  \\
 \hline
\end{tabular}
\end{center}
\end{table*}

Methods with a random loop communicate once per $p^{-1}$ iterations on average, while the fixed loop variant communicates once every $\tau$ iterations. Consequently, we shall compare the two loop types for $\tau= p^{-1}$. In such a case, parameters $D_3$ and $H$ and the extra conditions on stepsize $\gamma$ match exactly, meaning that the loop type does not influence the convergence rate. 
Having said that, random loop choice provides more flexibility compared to the fixed loop. Indeed, one might want the local direction $g_i^k$ to be synchronized with the communication time-stamps in some special cases. However, our framework does not allow such synchronization for a fixed loop since we assume that the local direction $g_i^k$ follows some stationary distribution over stochastic gradients. The random local loop comes in handy here; the random variable that determines the communication follows a stationary distribution, thus possibly synchronized with the local computations.

\section{Local Stochastic Direction \label{sec:local_solver}}

This section discusses how the choice of $g_i^k$ allows us to obtain the remaining parameters from As.~\ref{ass:key_assumption} that were not covered in the previous section. To cover the most practical scenarios, we set $g_i^k$ to be a difference of two components $a_i^k,b_i^k\in \R^d$, which we explain next. We stress that the construction of $g_i^k$ is very general: we recover various state-of-the-art methods along with their rates while covering many new interesting algorithms. We will discuss this in more detail in Sec.~\ref{sec:special_cases}.

\subsection{Unbiased local gradient estimator $a_i^k$}
The first component of the local direction that the method~\eqref{eq:local_sgd_def} takes is $a_i^k$ -- an unbiased, possibly variance reduced, estimator of the local gradient, i.e., $\EE_k[a_i^k] = \nabla f_i(x^k_i)$. Besides the unbiasedness, $a_i^k$ is allowed to be anything that satisfies the parametric recursive relation from~\cite{gorbunov2019unified}, which tightly covers many variants of {\tt SGD} including non-uniform, minibatch, and variance reduced stochastic gradient. The parameters of such a relation are capable of encoding both the general smoothness structure of the objective and the gradient estimator's properties that include a diminishing variance, for example. We state the adapted version of this recursive relation as As.~\ref{ass:sigma_k_original}. 

\begin{assumption} \label{ass:sigma_k_original}
Let the unbiased local gradient estimator $a_i^k$ be such that
\begin{align*}
&		\EE_k\left[\|a_i^k- \nabla f_i(x^*)\|^2 \right] \le 2A_iD_{f_i} (x^k_i, x^*)+ B_i\sigma_{i,k}^2 + D_{1,i}, 
		\\
&		\EE_k\left[\sigma_{i,k+1}^2\right] \le (1-\rho_i)\sigma_{ik}^2 + 2C_iD_{f_i} (x^k_i, x^*)  +  D_{2,i}
	\end{align*}
for $A_i\geq0, B_i\geq0, D_{1,i} \geq0, 0\leq \rho_i \leq 1,C_i \geq0, D_{2,i} \geq0$ and a non-negative sequence $ \{\sigma^2_{i,k}\}_{k=0}^\infty$.\footnote{By $D_{f_i}(x_i^k, x^k)$ we mean Bregman distance between $x_i^k, x^k$ defined as $D_{f_i}(x_i^k, x^k) \eqdef f_i(x_i^k) - f_i(x^k) - \langle \nabla f_i(x^k), x_i^k - x^k\rangle$.}
\end{assumption}

Note that the parameters of As.~\ref{ass:sigma_k_original} can be taken directly from~\cite{gorbunov2019unified} and offer a broad range of unbiased local gradient estimators $a_i^k$ in different scenarios. The most interesting setups covered include minibatching, importance sampling, variance reduction, all either under the classical smoothness assumption or under a uniform bound on the stochastic gradient variance. 

Our next goal is to derive the parameters of As.~\ref{ass:key_assumption} from the parameters of As.~\ref{ass:sigma_k_original}. However, let us first discuss the second component of the local direction -- the local shift $b_i^k$.

\subsection{Local shift $b_i^k$} 

The local update rule~\eqref{eq:local_sgd_def} can include the local shift/drift $b_i^k$ allowing us to eliminate the infamous non-stationarity of the local methods. The general requirement for the choice of $b_i^k$ is so that it sums up to zero ($\sum_{i=1}^n b_i^k = 0$) to avoid unnecessary extra bias. For the sake of simplicity (while maintaining generality), we will consider three choices of $b_i^k$ -- zero, ideal shift ($=\nabla f_i(x^*)$) and on-the-fly shift via a possibly outdated local stochastic non-variance reduced gradient estimator that satisfies a similar bound as As.~\ref{ass:sigma_k_original}.

\begin{assumption} \label{ass:bik}
Consider the following choices:\\
Case I: $b_i^k = 0 $, \\
Case II: $b_i^k = \nabla f_i(x^*)  $, \\
Case III: $b_i^k = h_i^k  - \frac1n \sum_{i=1}^n h_i^k$ where $h_i^k\in \R^d$ is a delayed local gradient estimator defined recursively as
$$
h_i^{k+1} = \begin{cases}
h_i^k & \text{with probability } 1-\rho_i' \\
l_i^k &  \text{with probability } \rho_i' \
\end{cases},
$$
 where $0\leq \rho'_i \leq 1$ and $l_i^k\in \R^d$ is an unbiased non-variance reduced possibly stochastic gradient estimator of $\nabla f_i(x^k)$ such that for some $A'_i, D_{3,i}\geq 0$ we have
\begin{equation}\label{eq:bdef}
\EE_k\left[\|l_i^k- \nabla f_i(x^*)\|^2 \right] \le 2A'_iD_{f_i} (x^k_i, x^*)+ D_{3,i}.
\end{equation}

\end{assumption}

Let us look closer at Case III as this one is the most interesting. Note that what we assume about $l_i^k$ (i.e., ~\eqref{eq:bdef}) is essentially a variant of As.~\ref{ass:bik} with $\sigma^2_{i,k}$ parameters set to zero. This is achievable for a broad range of non-variance reduced gradient estimators that includes minibatching and importance sampling~\cite{gower2019sgd}. An intuitive choice of $l_i^k$ is to set it to $a_i^k$ given that $a_i^k$ is not variance reduced. In such a case, the scheme~\eqref{eq:local_sgd_def} reduces to {\tt SCAFFOLD}~\cite{karimireddy2019scaffold} along with its rate.  

However, our framework can do much more beyond this example. First, we cover the local variance reduced gradient $a_i^k$ with $l_i^k$ constructed as its non-variance reduced part. In such a case, the neighborhood of the optimum from Thm.~\ref{thm:main_result} to which the method~\eqref{eq:local_sgd_def} converges shrinks. There is a way to get rid of this neighborhood, noticing that $l_i^k$ is used only once in a while. Indeed, the combination of the full local gradient $l_i^k$ together with the variance reduced $a_i^k$ leads to a linear rate in the strongly (quasi) convex case or $\cO(K^{-1})$ rate in the weakly convex case. We shall remark that the variance reduced gradient might require a sporadic computation of the full local gradient -- it makes  sense to synchronize it with the update rule for $h_i^k$. In such a case, the computation of $l_i^k$ is for free. We have just described the {\tt S-Local-SVRG} method (Algorithm~\ref{alg:l_local_svrg_fs}). 

\iffalse
\begin{remark} 
Suppose that the optimal solution $x^*$ minimizes all functions $f_i$ simultaneously, i.e., $\nabla f_i(x^*) = 0$ for all $1\leq i \leq n$. Then, Cases I and II coincide along with the corresponding parameters and rates. We can expect this to happen either in the homogeneous case, or in the interpolation regime.
\end{remark}
\fi 

\subsection{Parameters of Assumption~\ref{ass:key_assumption}}

We proceed with a key lemma that provides us with the remaining parameters of As.~\ref{ass:key_assumption} that were not covered in Sec.~\ref{sec:data_and_loop}. These parameters will be chosen purely based on the selection of $a_i^k$ and $b_i^k$ discussed earlier.

\begin{lemma}\label{lem:local_solver}
For all $i \in [n]$ suppose that $a_i^k$ satisfies As.~\ref{ass:sigma_k_original}, while $b_i^k$ was chosen as per As.~\ref{ass:bik}. Then,~\eqref{eq:second_moment_bound}, \eqref{eq:second_moment_bound_2} and \eqref{eq:sigma_k+1_bound} hold with
	\begin{align*}
	A &= 4\max_i A_{i}, B = 2, F = 4 L \max_i A_{i},\\ 
	D_1 &= \begin{cases}
	\frac{2}{n} \sum_{i=1}^n \left( D_{1,i}  + \| \nabla f_i(x^*)\|^2\right) & \text{Case I}, \\
	\frac{2}{n} \sum_{i=1}^n  D_{1,i} & \text{Case II, III},
	\end{cases}
	\\
 B' &\mytextstyle= \frac1n, F' = \frac{2 L \max_i A_{i}}{n}+2L^2, D_1' = \frac{1}{n^2} \sum_{i=1}^n D_{1,i} \\
  A' &\mytextstyle = \frac{2\max_i A_{i}}{n} + L , G = \nicefrac{CL}{2}, \\
	\rho &=\begin{cases} \min_i \rho_i  & \text{Case I, II}, \\
	  \min_i \min \left\{ \rho_i, \rho'_i\right\} & \text{Case III,}
	 \end{cases}    \\
	 D_2  &=\begin{cases}\frac2n\sum\limits_{i=1}^n B_iD_{2,i} , & \text{Case I, II}, \\
  \frac1n\sum\limits_{i=1}^n\left( 2B_iD_{2,i} + \rho_i' D_{3,i} \right)& \text{Case III,}
	 \end{cases}    \\
	 C &=\begin{cases}
	4 \max_{i}\{B_iC_i\}   & \text{Case I, II}, \\
		4 \max_{i}\{B_iC_i \} +4\max_i\{\rho_i' A'_i\}& \text{Case III}.
	\end{cases}
	\end{align*}

	\end{lemma}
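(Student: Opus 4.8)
The plan is to verify each of the bounds \eqref{eq:second_moment_bound}, \eqref{eq:second_moment_bound_2} and \eqref{eq:sigma_k+1_bound} in turn by expanding $g_i^k = a_i^k - b_i^k$ and controlling each piece using As.~\ref{ass:sigma_k_original} and As.~\ref{ass:bik}. A few preliminary observations will streamline everything. First, since $\EE_k[a_i^k] = \nabla f_i(x_i^k)$ and the $b_i^k$ sum to zero (in Cases~I and~II trivially, in Case~III by construction), the unbiasedness requirement \eqref{eq:unbiasedness} holds automatically. Second, I will repeatedly use the elementary inequality $\|u-v\|^2 \le 2\|u\|^2 + 2\|v\|^2$ together with the $L$-smoothness of $f_i$ (As.~\ref{ass:L_smoothness}), which gives both $\|\nabla f_i(x_i^k) - \nabla f_i(x^k)\|^2 \le 2L D_{f_i}(x_i^k, x^k)$-type bounds and the relation between the Bregman distance $D_{f_i}(x_i^k,x^*)$ and $D_{f_i}(x^k,x^*)$ plus a $V_k$ term. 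Third, I will use the convexity-type identity $\frac1n\sum_i D_{f_i}(x^k,x^*) = f(x^k) - f(x^*) - \langle \nabla f(x^*), x^k-x^* \rangle$, and since $x^*$ minimizes $f$, $\nabla f(x^*)=0$, so this equals $f(x^k)-f(x^*)$; this is how the $2A\,\EE[f(x^k)-f(x^*)]$ terms arise.

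For \eqref{eq:second_moment_bound}, the plan is to write $\frac1n\sum_i \EE\|g_i^k\|^2 \le \frac2n\sum_i \EE\|a_i^k - \nabla f_i(x^*)\|^2 + \frac2n\sum_i\EE\|\nabla f_i(x^*) - b_i^k\|^2$ (using $b_i^k$ notation loosely — more precisely one peels off $a_i^k-\nabla f_i(x^*)$, then $\nabla f_i(x^*)$, then $b_i^k$). The first term is handled directly by the first inequality of As.~\ref{ass:sigma_k_original}: $\frac2n\sum_i(2A_i D_{f_i}(x_i^k,x^*) + B_i\sigma_{i,k}^2 + D_{1,i})$. Converting $D_{f_i}(x_i^k,x^*)$ to $D_{f_i}(x^k,x^*) + (\text{gradient term}) \le 2 D_{f_i}(x^k,x^*) + L\|x_i^k-x^k\|^2$ (via smoothness and Young) and averaging produces the $4\max_i A_i\,\EE[f(x^k)-f(x^*)]$ term (hence $A = 4\max_i A_i$), the $F\,\EE[V_k]$ term with $F = 4L\max_i A_i$, and contributes to $D_1$. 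Setting $\sigma_k^2 \eqdef \frac1n\sum_i \sigma_{i,k}^2$ (this is the natural choice of the aggregate sequence) gives the $B\sigma_k^2$ contribution with $B=2$. The $b_i^k$ part: in Case~I, $b_i^k=0$ so $\frac2n\sum_i\|\nabla f_i(x^*)\|^2$ lands in $D_1$; in Cases~II and~III the $\nabla f_i(x^*)$ and $b_i^k$ pieces cancel favorably (in Case~II exactly, in Case~III one bounds $\|h_i^k - \nabla f_i(x^*)\|^2$ — but note this shift term must be folded into the $\sigma_k^2$ sequence, which is why $C$ picks up the $4\max_i\rho_i' A_i'$ correction; see below). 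The bound \eqref{eq:second_moment_bound_2} is analogous but uses the averaging $\|\frac1n\sum_i g_i^k\|^2$: here the $b_i^k$ sum to zero so they drop out entirely, one splits $\frac1n\sum_i g_i^k$ into $\frac1n\sum_i(a_i^k - \nabla f_i(x_i^k))$ plus $\frac1n\sum_i(\nabla f_i(x_i^k) - \nabla f_i(x^k))$ plus $\nabla f(x^k)$; the first has variance bounded via independence across $i$ (giving the $\frac1n$ factors, e.g.\ $B' = \frac1n$, $D_1' = \frac1{n^2}\sum_i D_{1,i}$), the second is bounded by $L^2 V_k$ (contributing $2L^2$ to $F'$), and the third by $2L\,\EE[f(x^k)-f(x^*)]$ (contributing $L$ to $A'$).

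For \eqref{eq:sigma_k+1_bound}, I define $\sigma_k^2 = \frac1n\sum_i\sigma_{i,k}^2$ in Cases~I and~II, and in Case~III augment it to $\sigma_k^2 = \frac1n\sum_i(\sigma_{i,k}^2 + \|h_i^k - \nabla f_i(x^*)\|^2)$ or similar, so that the recursion for $h_i^k$ feeds in. Averaging the second inequality of As.~\ref{ass:sigma_k_original} gives the contraction factor $1-\rho$ with $\rho = \min_i\rho_i$ (and $\min_i\min\{\rho_i,\rho_i'\}$ in Case~III), and the $2C_i D_{f_i}(x_i^k,x^*)$ terms — after the same Bregman-to-$f(x^k)-f(x^*)$-plus-$V_k$ conversion and absorbing the $B_i$ factors that appear because the $\sigma$-update must be scaled to match the $B_i\sigma_{i,k}^2$ appearing in the first inequality — produce $C = 4\max_i\{B_i C_i\}$, $G = CL/2$, and $D_2 = \frac2n\sum_i B_i D_{2,i}$. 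In Case~III the extra term from \eqref{eq:bdef} bounding $\EE_k\|l_i^k - \nabla f_i(x^*)\|^2 \le 2A_i' D_{f_i}(x_i^k,x^*) + D_{3,i}$, weighted by the jump probability $\rho_i'$, adds $4\max_i\{\rho_i' A_i'\}$ to $C$ and $\rho_i' D_{3,i}$ terms to $D_2$. The main obstacle I anticipate is bookkeeping: getting the Bregman-distance manipulations and the constant-chasing exactly right so that all the $\max_i$'s, the factor-of-2 and factor-of-4 slacks, and the precise way the $\sigma$-sequences are aggregated (especially the Case~III augmentation) line up with the stated constants — the individual steps are each routine applications of Young's inequality, smoothness, and unbiasedness, but coordinating them into a clean self-consistent choice of $\sigma_k^2$ that makes \emph{all three} inequalities hold simultaneously with the claimed parameters requires care. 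Note that \eqref{eq:sum_V_k_bounds} is not part of this lemma; it is supplied separately by the data-similarity/local-loop analysis of Section~\ref{sec:data_and_loop}, so I do not need to address it here.
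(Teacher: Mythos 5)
Your plan matches the paper's proof essentially step for step: the same split of $g_i^k$ around $\nabla f_i(x^*)$ via Young's inequality, the same transfer $D_{f_i}(x_i^k,x^*)\le 2D_{f_i}(x^k,x^*)+L\|x_i^k-x^k\|^2$, the same variance decomposition with cross-node independence for the averaged bound \eqref{eq:second_moment_bound_2}, and the same case analysis for $b_i^k$ with the Case~III augmentation of $\sigma_k^2$ by the terms $\|h_i^k-\nabla f_i(x^*)\|^2$ and the recursion driven by $\rho_i'$, $A_i'$, $D_{3,i}$. The one detail to pin down --- which you yourself flag --- is that the aggregate sequence must be the $B_i$-weighted one, $\sigma_k^2=\tfrac{2}{n}\sum_{i}B_i\sigma_{i,k}^2$ (plus $\tfrac{1}{n}\sum_i\|h_i^k-\nabla f_i(x^*)\|^2$ in Case~III), not the unweighted average $\tfrac{1}{n}\sum_i\sigma_{i,k}^2$, since only the weighted choice produces $B=2$, $C=4\max_i\{B_iC_i\}$ and $D_2=\tfrac{2}{n}\sum_iB_iD_{2,i}$ exactly as stated.
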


We have just broken down the parameters of As.~\ref{ass:key_assumption} based on the optimization objective and the particular instance of~\eqref{eq:local_sgd_def}. However, it might still be hard to understand particular rates based on these choices. In the appendix, we state a range of methods and decouple their convergence rates. A summary of the key parameters from As.~\ref{ass:key_assumption} is provided in Tbl.~\ref{tbl:special_cases-parameters}.

\iffalse
We have just broken down the parameters of As.~\ref{ass:key_assumption} based on the optimization objective and the particular instance of local algorithm~\eqref{eq:local_sgd_def}.

 However, it might still be hard to see particular rates based on these choices. Let us start with the simplest example and show how we recover the rate of gradient descent. 

\begin{example} \label{ex:gd} Let $n=1$, $g_1^k = \nabla f_1(x^k)$ for all $k$ and consider a fixed local loop with length 1 (i.e., $\tau=2$). Then we have $a_i^k =  \nabla f_i(x^k)$, $b_i^k = 0$ and thus As.~\ref{ass:sigma_k_original} holds with $A_i =L, B_i=0, \sigma^2_{i,k} = 0, \rho_i=1, C_i=0, D_{2,i}=0$. Using Lem.~\ref{lem:local_solver}, As.~\ref{ass:sigma_k_original} holds with $A=6L, B=1, F=12 L^2, D_1 = 0, B' = \frac1n, F' = \frac{6L^2}{n}, D'_1 = 0, A' = \frac{3L}{n}, G = 0, \rho = 1, D_2 = 0$ and $C = 0$. Next, Tbl.~\ref{tbl:data_loop} yields $D_3 = H=0$ and thus $\gamma  = \cO(L^{-1})$ is supported both by Thm.~\ref{thm:main_result} and Tbl.~\ref{tbl:data_loop}. Therefore we recover $\cO\left( \frac{L}{\mu} \log \frac1\epsilon\right)$ complexity in the strongly quasi-convex case and $\cO\left( \frac{L}\epsilon\right)$ rate in the weakly convex regime.
\end{example} 

Example~\ref{ex:gd} serves as a basic demonstration of how to use our framework; we state a range of more complex methods in the appendix and decouple their convergence rates. We also provide a summary of the key parameters from As.~\ref{ass:key_assumption} in Table~\ref{tbl:special_cases-parameters}.

\fi

\begin{table*}[!t]
\caption{A selection of methods that can be analyzed using our framework, which we detail in the appendix. A choice of $a_i^k, b_i^k$ and $l_i^k$ is presented along with the established complexity bounds (= number of iterations to find such $\hat x$ that $\EE[f(\hat{x}) - f(x^*)]\le \varepsilon$) and a specific setup under which the methods are analyzed. For Algorithms 1-4 we suppress constants and $\log \tfrac{1}{\varepsilon}$ factors. Since Algorithms 5 and 6 converge linearly, we suppress constants only while keeping $\log \tfrac{1}{\varepsilon}$ factors. All rates are provided in the \textbf{strongly convex} setting. UBV stands for the ``Uniform Bound on the Variance'' of local stochastic gradient, which is often assumed when $f_i$ is of the form~\eqref{eq:f_i_expectation}. ES stands for the ``Expected Smoothness''~\cite{gower2019sgd}, which does not impose any extra assumption on the objective/noise, but rather can be derived given the sampling strategy and the smoothness structure of $f_i$. Consequently, such a setup allows us to obtain local methods with importance sampling. Next, the simple setting is a special case of ES when we uniformly sample a single index on each node each iteration. $^\clubsuit$: {\tt Local-SGD} methods have never been analyzed under ES assumption. Notation: $\sigma^2$ -- averaged (within nodes) uniform upper bound for the variance of local stochastic gradient, $\sigma_*^2$ -- averaged variance of local stochastic gradients at the solution, $\zeta_*^2 \eqdef \frac1n\sum_{i=1}^n \| \nabla f_i(x^*)\|^2$, $\max L_{ij}$ -- the worst smoothness of $f_{i,j}, i\in[n],j\in[m]$, $\cL$ -- the worst ES constant for all nodes.
}
\label{tbl:special_cases}
\begin{center}
\vskip -0.2cm
\footnotesize
\begin{tabular}{|c|c|c|c|c|c|c|c|}
\hline
{\bf Method} & {\bf  \# } &  {\bf Ref} &   $\myred{a_i^k}, \myblue{b_i^k}, l_i^k$   & {\bf Complexity}  &   {\bf Setting}  & {\bf Sec}  \\
\hline
\hline
 {\tt Local-SGD}  &   \ref{alg:local_sgd} & {\cite{woodworth2020minibatch}}  & $\myred{f_{\xi_i}(x_i^k)}, \myblue{0}, - $& $\frac{L}{\mu}+\frac{\sigma^2}{n\mu\varepsilon}+\sqrt{\frac{L\tau(\sigma^2 + \tau\zeta^2)}{\mu^2\varepsilon}}$ &  \begin{tabular}{c}
	UBV,\\
	$\zeta$-Het
\end{tabular} & \ref{sec:sgd_bounded_var}  \\
%%%%%%%%%%%%%%%%%%%%
%%%%%%%%%%%%%%%%%%%%
\hline
 {\tt Local-SGD}  &   \ref{alg:local_sgd} & {\cite{koloskova2020unified}}  & $\myred{f_{\xi_i}(x_i^k)},\myblue{0}, - $& $\frac{\tau L}{\mu}+\frac{\sigma^2}{n\mu\varepsilon}+\sqrt{\frac{L(\tau-1)(\sigma^2 + (\tau-1)\zeta_*^2)}{\mu^2\varepsilon}}$ &  \begin{tabular}{c}
	UBV,\\
	Het
\end{tabular} & \ref{sec:sgd_bounded_var}  \\
%%%%%%%%%%%%%%%%%%%%
%%%%%%%%%%%%%%%%%%%%
\hline
 {\tt Local-SGD}  &  \ref{alg:local_sgd} & {\cite{khaled2020tighter}}$^{\clubsuit}$  & $\myred{f_{\xi_i}(x_i^k)}, \myblue{0}, - $  & \begin{tabular}{c}
 $\frac{L+\nicefrac{\cL}{n}+\sqrt{(\tau-1) L\cL}}{\mu}+\frac{\sigma_*^2}{n\mu\varepsilon}\quad\quad$\\
 $+ \frac{L\zeta^2(\tau-1)}{\mu^2\varepsilon}+\sqrt{\frac{L(\tau-1)(\sigma_*^2 + \zeta_*^2)}{\mu^2\varepsilon}}$
\end{tabular}  &   \begin{tabular}{c}
	ES,\\
	$\zeta$-Het
\end{tabular} & \ref{sec:sgd_es}  \\
%%%%%%%%%%%%%%%%%%%%
%%%%%%%%%%%%%%%%%%%%
\hline
 {\tt Local-SGD}  &  \ref{alg:local_sgd} & {\cite{khaled2020tighter}}$^{\clubsuit}$  & $\myred{f_{\xi_i}(x_i^k)}, \myblue{0}, - $  & \begin{tabular}{c}
	$\frac{L\tau+\nicefrac{\cL}{n}+\sqrt{(\tau-1)L\cL}}{\mu}+\frac{\sigma_*^2}{n\mu\varepsilon}\quad\quad$\\
	$\quad\quad+\sqrt{\frac{L(\tau-1)(\sigma_*^2 + (\tau-1)\zeta_*^2)}{\mu^2\varepsilon}}$ 
\end{tabular}  &   \begin{tabular}{c}
	ES,\\
	Het
\end{tabular} & \ref{sec:sgd_es}  \\
%%%%%%%%%%%%%%%%%%%%
%%%%%%%%%%%%%%%%%%%%
\hline
\rowcolor{bgcolor} {\tt Local-SVRG}  &   \ref{alg:local_svrg} & NEW  & \begin{tabular}{c}
	\myred{$\nabla f_{i,j_i}(x^k_i) -  \nabla f_{i,j_i}(y_i^k)$}\\\myred{$ +  \nabla f_{i}(y_i^k)$},\\ \myblue{$0$}, $- $ 
\end{tabular}  & \begin{tabular}{c}
	 $m+\frac{L+\nicefrac{\max L_{ij}}{n}+\sqrt{(\tau-1) L\max L_{ij}}}{\mu}$\\
	 $\quad+ \frac{L\zeta^2(\tau-1)}{\mu^2\varepsilon}+\sqrt{\frac{L(\tau-1)\zeta_*^2}{\mu^2\varepsilon}}$
\end{tabular} & \begin{tabular}{c}
	simple,\\
	$\zeta$-Het
\end{tabular} & \ref{sec:llsvrg}   \\
%%%%%%%%%%%%%%%%%%%%
%%%%%%%%%%%%%%%%%%%%
\hline
\rowcolor{bgcolor} {\tt Local-SVRG}  &   \ref{alg:local_svrg} & NEW  & \begin{tabular}{c}
	\myred{$\nabla f_{i,j_i}(x^k_i) -  \nabla f_{i,j_i}(y_i^k)$}\\\myred{$ +  \nabla f_{i}(y_i^k)$},\\ \myblue{$0$}, $- $ 
\end{tabular}  & \begin{tabular}{c}
	 $m+\frac{L\tau+\nicefrac{\max L_{ij}}{n}+\sqrt{(\tau-1)L\max L_{ij}}}{\mu}$\\
	 $+\sqrt{\frac{L(\tau-1)^2\zeta_*^2}{\mu^2\varepsilon}}$
\end{tabular} & \begin{tabular}{c}
	simple,\\
	Het
\end{tabular} & \ref{sec:llsvrg}   \\
%%%%%%%%%%%%%%%%%%%%
%%%%%%%%%%%%%%%%%%%%
\hline
\rowcolor{bgcolor} {\tt S*-Local-SGD}  &  \ref{alg:local_sgd_star} & NEW  & $\myred{f_{\xi_i}(x_i^k)}, \myblue{\nabla f_i(x^*)}, - $ & $\frac{\tau L}{\mu}+\frac{\sigma^2}{n\mu\varepsilon}+\sqrt{\frac{L(\tau-1)\sigma^2}{\mu^2\varepsilon}}$  &  \begin{tabular}{c}
	UBV,\\
	Het
\end{tabular} & \ref{sec:sgd_star_bounded_var}  \\
%%%%%%%%%%%%%%%%%%%%
%%%%%%%%%%%%%%%%%%%%
\hline
 {\tt SS-Local-SGD}  &  \ref{alg:l_local_svrg} & \cite{karimireddy2019scaffold}  & \begin{tabular}{c}
$\myred{f_{\xi_i}(x_i^k)}, \myblue{h_i^k  - \frac1n \sum_{i=1}^n h_i^k},$\\
$ \nabla f_{\tilde{\xi}_i^k} (y_i^k)$ 
\end{tabular}  & $\frac{L}{p\mu}+\frac{\sigma^2}{n\mu\varepsilon}+\sqrt{\frac{L(1-p)\sigma^2}{p\mu^2\varepsilon}}$ &  \begin{tabular}{c}
	UBV,\\
	Het
\end{tabular} & \ref{sec:loopless_local_svrg}  \\
%%%%%%%%%%%%%%%%%%%%
%%%%%%%%%%%%%%%%%%%%
\hline
\rowcolor{bgcolor} {\tt SS-Local-SGD}  &  \ref{alg:l_local_svrg} & NEW  & \begin{tabular}{c}
$\myred{f_{\xi_i}(x_i^k)}, \myblue{h_i^k  - \frac1n \sum_{i=1}^n h_i^k},$\\
$ \nabla f_{\tilde{\xi}_i^k} (y_i^k)$ 
\end{tabular}  &  \begin{tabular}{c}
$\frac{L}{p\mu} + \frac{\cL}{n\mu} + \frac{\sqrt{L\cL(1-p)}}{p\mu}$\\ $+\frac{\sigma_*^2}{n\mu\varepsilon}+\sqrt{\frac{L(1-p)\sigma_*^2}{p\mu^2\varepsilon}}$
\end{tabular} &  \begin{tabular}{c}
	ES,\\
	Het
\end{tabular} & \ref{sec:loopless_local_svrg_es}  \\
%%%%%%%%%%%%%%%%%%%%
%%%%%%%%%%%%%%%%%%%%
\hline
\rowcolor{bgcolor2} {\tt S*-Local-SGD*}  &  \ref{alg:local_sgd_star_star} & NEW  & \begin{tabular}{c}
 	\myred{$\nabla f_{i,j_i}(x^k_i) -  \nabla f_{i,j_i}(x^*)$}\\
 	\myred{$ +  \nabla f_{i}(x^*)$}, $\myblue{\nabla f_i(x^*)}, - $\\ 	
 \end{tabular} &  \begin{tabular}{c}
$\Big(\frac{\tau L}{\mu}+\frac{\max L_{ij}}{n\mu}\quad\quad\quad\quad\quad\quad\quad\quad$\\ $\quad\quad\quad\quad+ \frac{\sqrt{(\tau-1)L\max L_{ij}}}{\mu}\Big)\log\frac{1}{\varepsilon}$ 
\end{tabular}   &  \begin{tabular}{c}
	simple,\\
	Het
\end{tabular} & \ref{sec:S*-Local-SGD*}  \\
%%%%%%%%%%%%%%%%%%%%
%%%%%%%%%%%%%%%%%%%%
\hline
\rowcolor{bgcolor2} {\tt S-Local-SVRG}  &   \ref{alg:l_local_svrg_fs} & NEW  &\begin{tabular}{c}
	$\myred{ \nabla f_{i,j_i}(x^k_i) -  \nabla f_{i,j_i}(y^k)}$\\
	\myred{$ +  \nabla f_{i}(y^k) $},\\
	$\myblue{h_i^k  - \frac1n \sum_{i=1}^n h_i^k}, \nabla f_{i} (y^k)$ 
\end{tabular} & \begin{tabular}{c}
$\Big(m + \frac{L}{p\mu}+\frac{\max L_{ij}}{n\mu}\quad\quad\quad\quad\quad\quad$\\ $\quad\quad\quad\quad+ \frac{\sqrt{L\max L_{ij}(1-p)}}{p\mu}\Big)\log\frac{1}{\varepsilon}$ 
\end{tabular} & \begin{tabular}{c}
	simple,\\
	Het
\end{tabular} &   \ref{sec:loopless_local_svrg_fs} \\
%%%%%%%%%%%%%%%%%%%%
%%%%%%%%%%%%%%%%%%%%
\hline
\end{tabular}
\end{center}
\vskip -0.2cm
\end{table*}

\section{Special Cases}\label{sec:special_cases}

Our theory covers a broad range of local stochastic gradient algorithms. While we are able to recover multiple known methods along with their rates, we also introduce several new methods along with extending the analysis of known algorithms. As already mentioned, our theory covers convex and strongly convex cases, identical and heterogeneous data regimes. From the algorithmic point of view, we cover the fixed and random loop, various shift types, and arbitrary local stochastic gradient estimator. We stress that our framework gives a tight convergence rate under any circumstances.

While we might not cover all of these combinations in a deserved detail, we thoroughly study a subset of them in Sec.~\ref{sec:special_appendix} of the appendix. An overview of these methods is presented in Tbl.~\ref{tbl:special_cases} together with their convergence rates in the strongly convex case (see Tbl.~\ref{tbl:special_cases_weakly_convex} in the appendix for the rates in the weakly convex setting). Next, we describe a selected number of special cases of our framework.

$\bullet$  {\bf Non-local stochastic methods.} Our theory recovers a broad range of non-local stochastic methods. In particular, if $n=1$, we have $V_k = 0$, and consequently we can choose $A=A', B=B', D_1=  D_1', F=F'=G= H=D_3=0$. With such a choice, our theory matches\footnote{Up to the non-smooth regularization/proximal steps and small constant factors.} the general analysis of stochastic gradient methods from~\cite{gorbunov2019unified} for $\tau=1$. Consequently, we recover a broad range of algorithms as a special case along with their convergence guarantees, namely {\tt SGD}~\cite{RobbinsMonro:1951} with its best-known rate on smooth objectives~\cite{nguyen2018sgd, gower2019sgd}, variance reduced finite sum algorithms such as {\tt SAGA}~\cite{defazio2014saga}, {\tt SVRG}~\cite{johnson2013accelerating},  {\tt L-SVRG}~\cite{hofmann2015variance, kovalev2019don}, variance reduced subspace descent methods such as {\tt SEGA/SVRCD}~\cite{hanzely2018sega, hanzely2019one}, quantized methods~\cite{mishchenko2019distributed, horvath2019stochastic} and others. 

$\bullet$  {\bf ``Star''-shifted local methods.} As already mentioned, local methods have inherently incorrect fixed points~\cite{pathak2020fedsplit}; and one can fix these by shifting the local gradients. Star-shifted local methods employ the ideal stationary shift using the local gradients at the optimum $b_i^k = \nabla f_i(x^*)$ (i.e.,  Case II from As.~\ref{ass:bik}) and serve as a transition from the plain local methods (Case I from As.~\ref{ass:bik}) to the local methods that shift using past gradients such as {\tt SCAFFOLD} (Case III from As.~\ref{ass:bik}). In the appendix, we present two such methods: {\tt S*-Local-SGD} (Algorithm~\ref{alg:local_sgd_star}) and  {\tt S*-Local-SGD*} (Algorithm~\ref{alg:local_sgd_star_star}).  While being impractical in most cases since $\nabla f_i(x^*)$ is not known, star-shifted local methods give new insights into the role and effect of the shift for local algorithms. Specifically, these methods enjoy superior convergence rate when compared to methods without local shift (Case I) and methods with a shift constructed from observed gradients (Case III), while their rate serves as an aspiring goal for local methods in general. Fortunately, in several practical scenarios, one can match the rate of star methods using an approach from Case III, as we shall see in the next point. 

$\bullet$  {\bf Shifted Local {\tt SVRG} ({\tt S-Local-SVRG}).} As already mentioned, local {\tt SGD} suffers from convergence to a neighborhood of the optimum only, which is credited to i) inherent variance of the local stochastic gradient, and ii) incorrect fixed point of local {\tt GD}. We propose a way to correct both issues. To the best of our knowledge, this is the first time that on-device variance reduction was combined with the trick for reducing the non-stationarity of local methods. Specifically, the latter is achieved by selecting $b_i^k$ as a particular instance of Case III from As.~\ref{ass:bik} such that $l_i^k$ is the full local gradient, which in turns yields $D'_{1,i}=0, A'_i = L$. In order to not waste local computation, we synchronize the evaluation of $l_i^k$ with the computation of the full local gradient for the {\tt L-SVRG}~\cite{hofmann2015variance, kovalev2019don} estimator, which we use to construct $a_i^k$. Consequently, some terms cancel out, and we obtain a simple, fast, linearly converging local {\tt SGD} method, which we present as Algorithm~\ref{alg:l_local_svrg_fs} in the appendix. We believe that this is remarkable since only a very few local methods converge linearly to the exact optimum.\footnote{A linearly converging local {\tt SGD} variant can be recovered from stochastic decoupling~\cite{mishchenko2019stochastic}, although this was not considered therein. Besides that, FedSplit~\cite{pathak2020fedsplit} achieves a linear rate too, however, with a much stronger local oracle.}

\section{Experiments} \label{sec:exp}
We perform multiple experiments to verify the theoretical claims of this paper. Due to space limitations, we only present a single experiment in the main body; the rest can be found in Section~\ref{sec:extra_exp} of the appendix.

We demonstrate the benefit of on-device variance reduction, which we introduce in this paper. For that purpose, we compare standard {\tt Local-SGD} (Algorithm~\ref{alg:local_sgd}) with our {\tt Local-SVRG} (Algorithm~\ref{alg:local_svrg}) on a regularized logistic regression problem with LibSVM data~\cite{chang2011libsvm}. For each problem instance, we compare the two algorithms with the stepsize $\gamma\in \{1, 0.1, 0.01 \}$ (we have normalized the data so that $L=1$).  The remaining details for the setup are presented in Section~\ref{sec:extralog} of the appendix.

Our theory predicts that both {\tt Local-SGD} and {\tt Local-SVRG} have identical convergence rate early on. However, the neighborhood of the optimum to which {\tt Local-SVRG} converges is smaller comparing to {\tt Local-SGD}. For both methods, the neighborhood is controlled by the stepsize: the smaller the stepsize is, the smaller the optimum neighborhood is. The price to pay is a slower rate at the beginning.

The results are presented in Fig.~\ref{fig:sgd_svrg_hom}. As predicted, {\tt Local-SVRG} always outperforms {\tt Local-SGD} as it converges to a better neighborhood. Fig.~\ref{fig:sgd_svrg_hom} also demonstrates that one can trade the smaller neighborhood for the slower convergence by modifying the stepsize.

\begin{figure}[!h]
\centering
\begin{minipage}{0.5\textwidth}
\begin{minipage}{0.49\textwidth}
  \centering
\includegraphics[width =  \textwidth ]{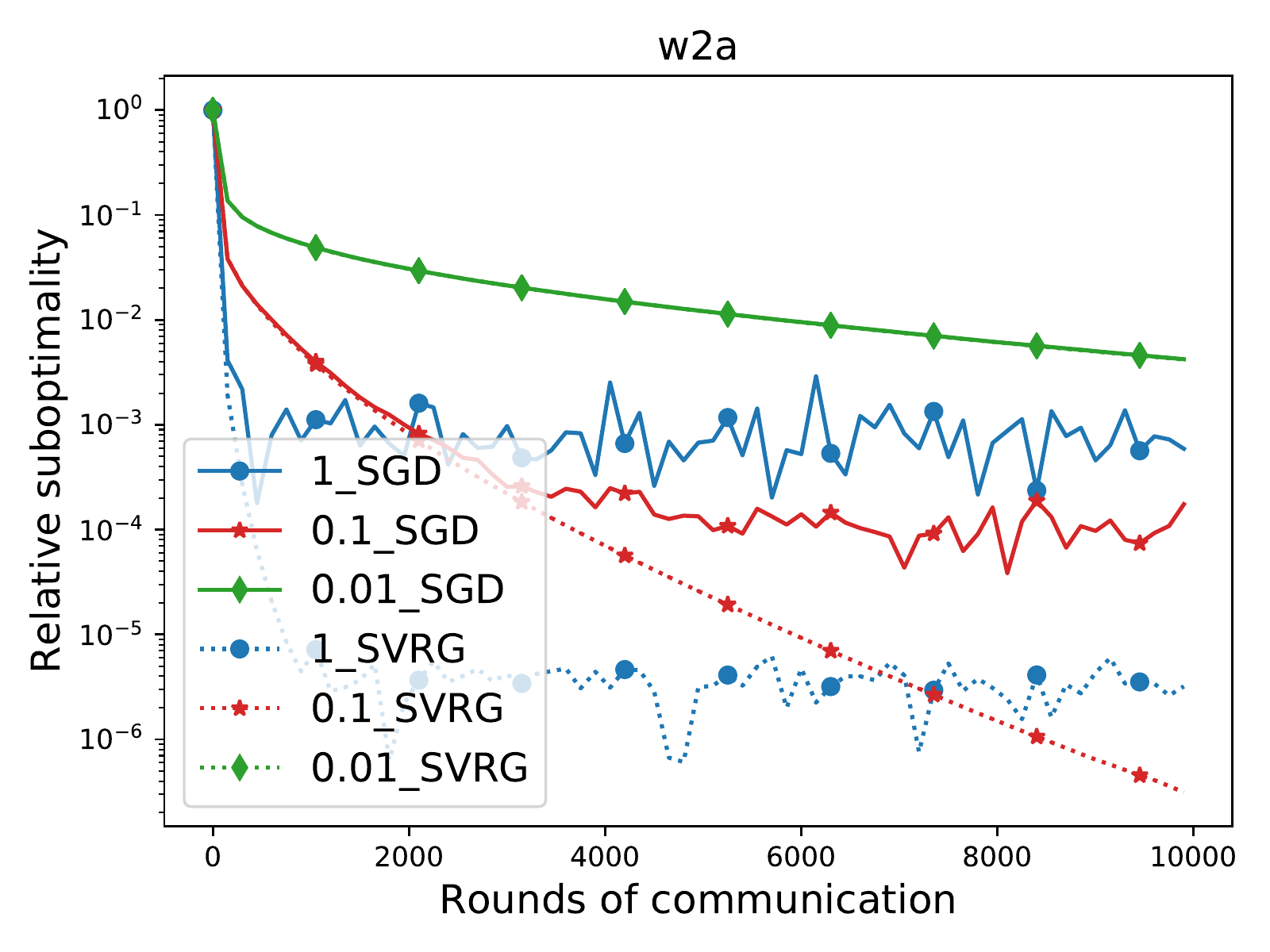}
        %\caption{ Residual vs. iteration  }\label{fig:bl_ex_flops}
\end{minipage}
\begin{minipage}{0.49\textwidth}
  \centering
\includegraphics[width =  \textwidth ]{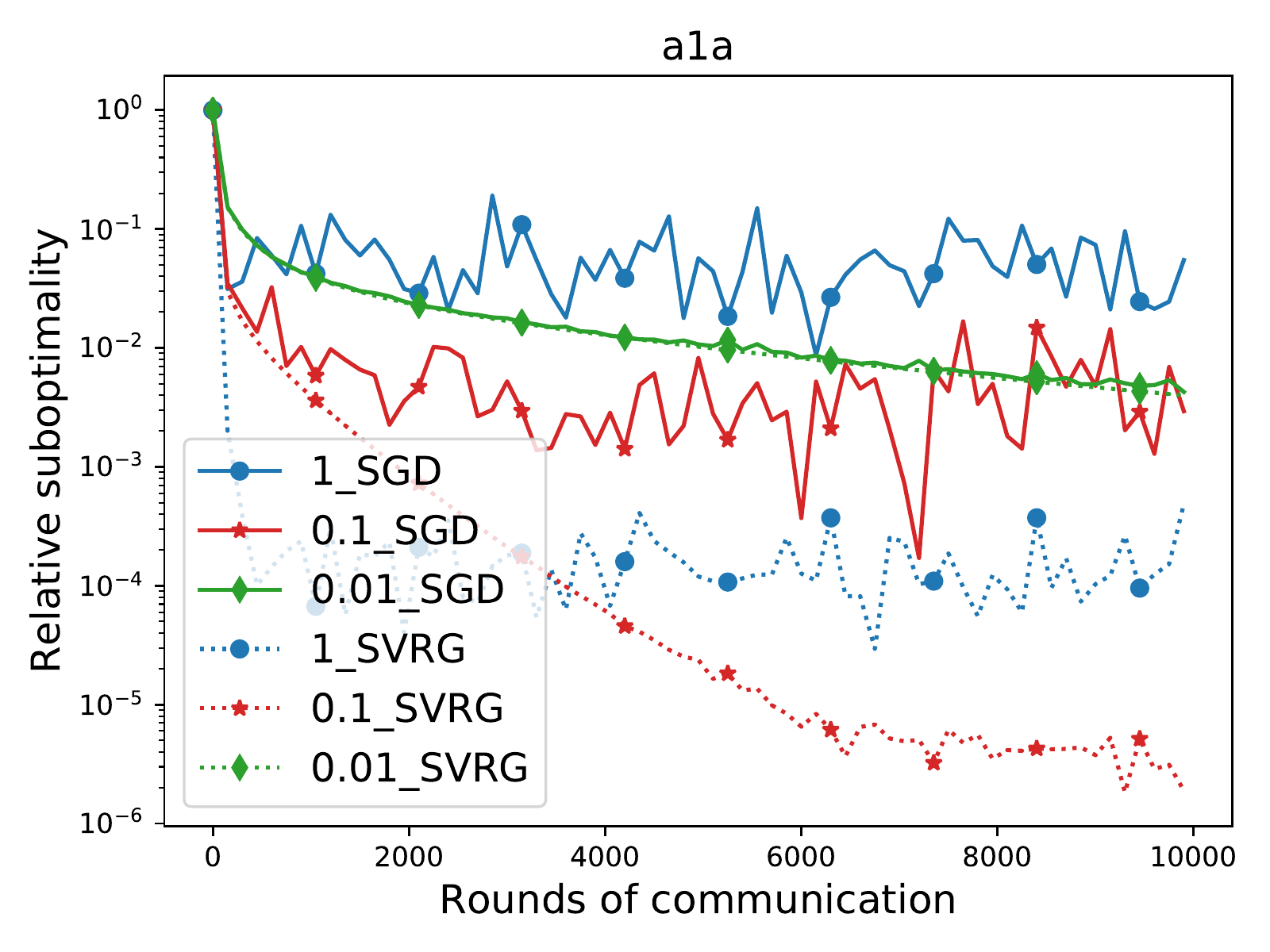}
        %\caption{ Residual vs. iteration  }\label{fig:bl_ex_flops}
\end{minipage}
\end{minipage}
\\
\begin{minipage}{0.5\textwidth}
\begin{minipage}{0.49\textwidth}
  \centering
\includegraphics[width =  \textwidth ]{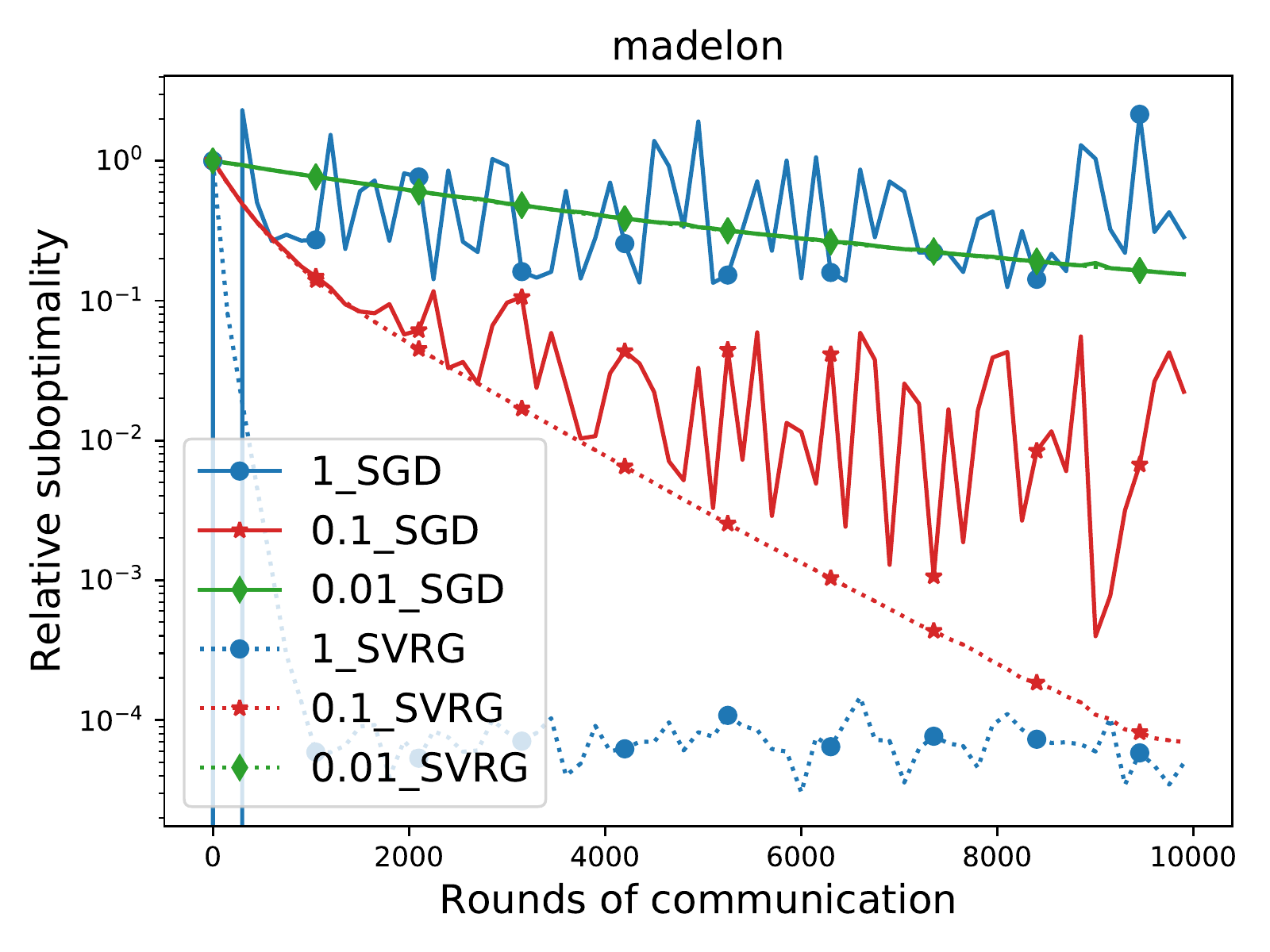}
        %\caption{ Residual vs. iteration  }\label{fig:bl_ex_flops}
\end{minipage}
\begin{minipage}{0.49\textwidth}
  \centering
\includegraphics[width =  \textwidth ]{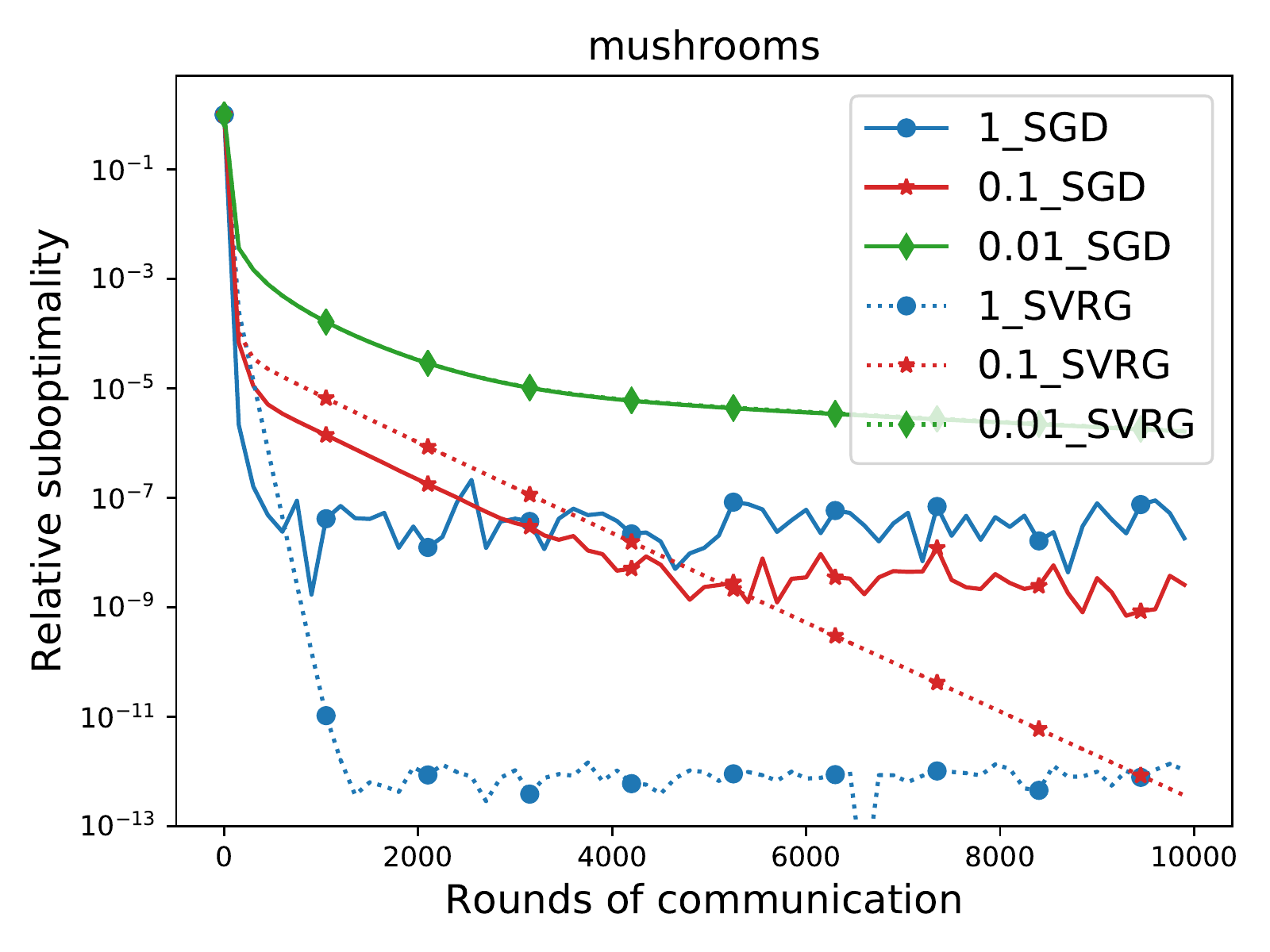}
        %\caption{ Residual vs. iteration  }\label{fig:bl_ex_flops}
\end{minipage}
\end{minipage}
\\
\begin{minipage}{0.5\textwidth}
\begin{minipage}{0.49\textwidth}
  \centering
\includegraphics[width =  \textwidth ]{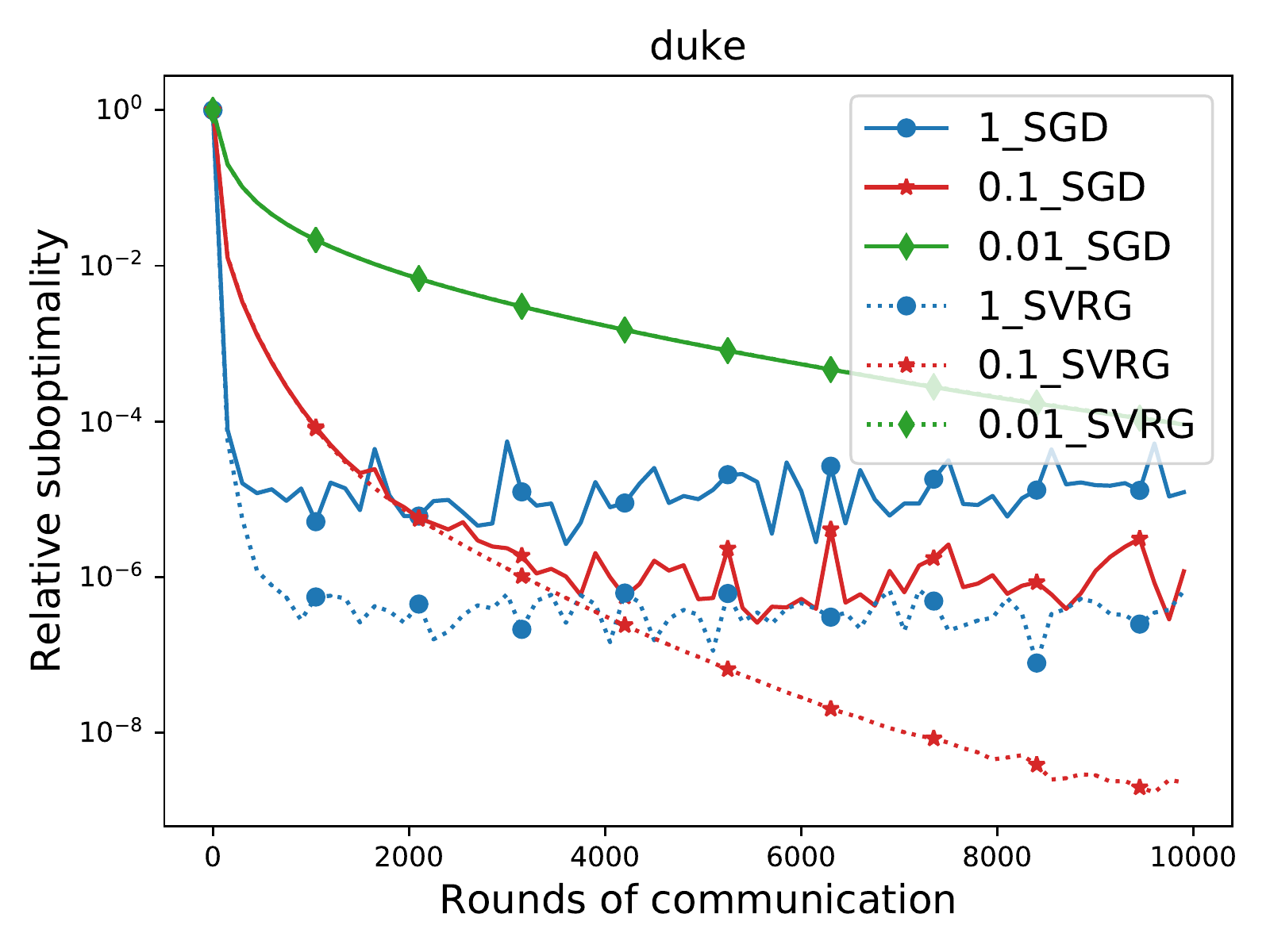}
        %\caption{ Residual vs. iteration  }\label{fig:bl_ex_flops}
\end{minipage}
\begin{minipage}{0.49\textwidth}
  \centering
\includegraphics[width =  \textwidth ]{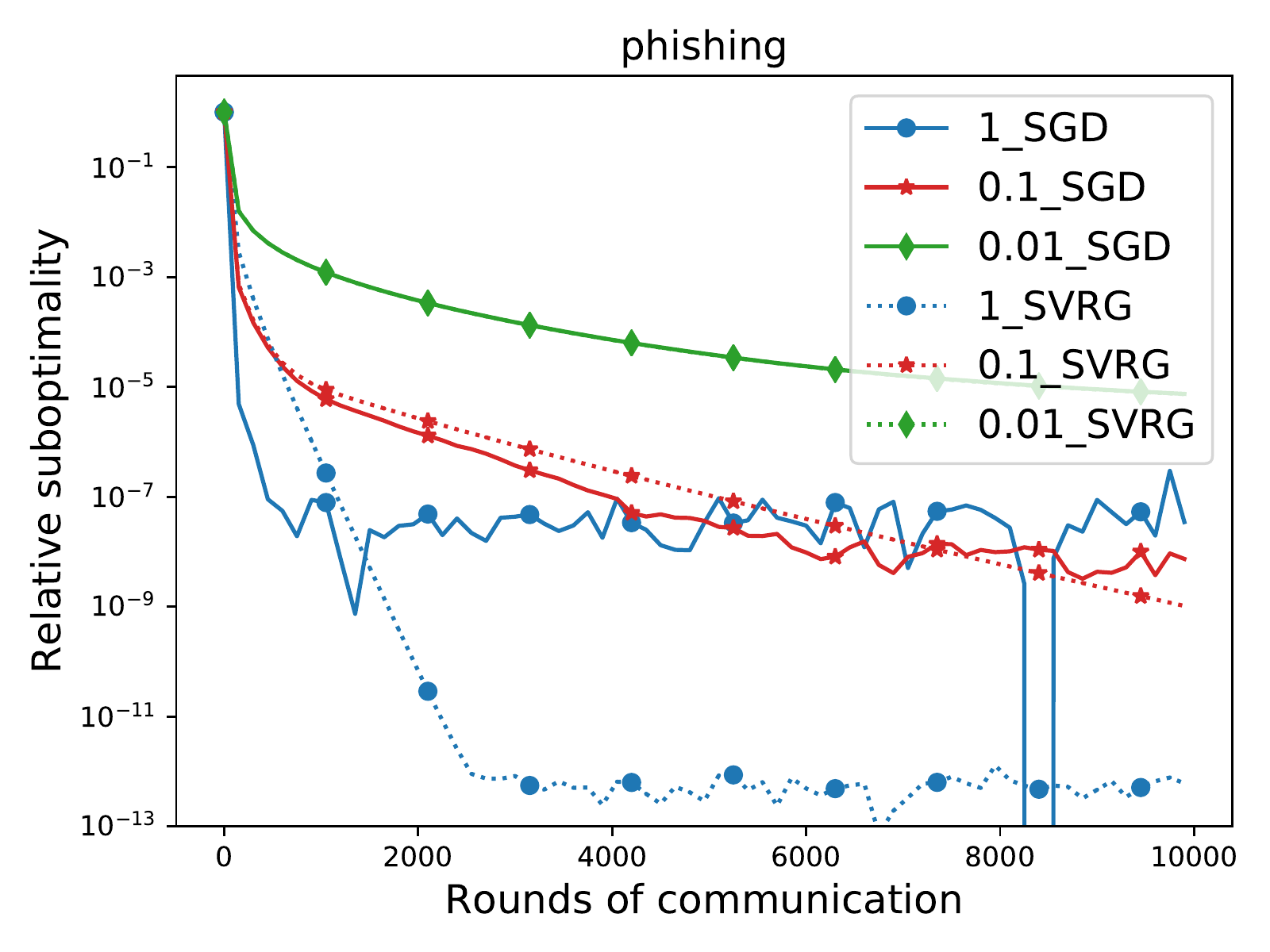}
        %\caption{ Residual vs. iteration  }\label{fig:bl_ex_flops}
\end{minipage}
\end{minipage}
\caption{Comparison of standard {\tt Local-SGD} (Alg.~\ref{alg:local_sgd}) and our {\tt Local-SVRG} (Alg.~\ref{alg:local_svrg}) for varying $\gamma$. Logistic regression applied on LibSVM~\cite{chang2011libsvm}.  Other parameters: $L=1, \mu=10^{-4}, \tau = 40$. Parameter $n$ chosen as per Tbl.~\ref{tbl:ns} in the appendix. }
\label{fig:sgd_svrg_hom}
\end{figure}

\section{Conclusions and Future Work}
This paper develops a unified approach to analyzing and designing a wide class of local stochastic first order algorithms. While our framework covers a broad range of methods, there are still some types of algorithms that we did not include but desire attention in future work. First, it would be interesting to study algorithms with  {\em biased} local stochastic gradients; these are popular for minimizing finite sums; see {\tt SAG}~\cite{schmidt2017minimizing} or {\tt SARAH}~\cite{nguyen2017sarah}. The second hitherto unexplored direction is including Nesterov's acceleration~\cite{nesterov1983method} in our framework. This idea is gaining traction in the area of  local methods already~\cite{pathak2020fedsplit, yuan2020federated}. However, it is not at all clear how this should be done and several attempts at achieving this unification goal failed. The third direction is allowing for a regularized local objective, which has been under-explored in the FL community so far. Other compelling directions that we do not cover are the local higher-order or proximal methods~\cite{li2018federated, pathak2020fedsplit} and methods supporting partial participation~\cite{mcmahan2016federated}.

{\subsubsection*{Acknowledgements}
This work was supported by the KAUST baseline research grant of P.~Richt\'{a}rik. Part of this work was done while E.~Gorbunov was a research intern at KAUST. The research of E.~Gorbunov was also partially supported by the Ministry of Science and Higher Education of the Russian Federation (Goszadaniye) 075-00337-20-03 and RFBR, project number 19-31-51001.}

\bibliography{local_sigma_k}

\clearpage

\onecolumn
\part*{Appendix}

\appendix

Since the appendix contains substantial amount of material, we have decided to include a table of contents.

\tableofcontents

\clearpage

\section{Table of Frequently Used Notation} \label{sec:notation_table}

\begin{table}[!h]
\caption{Summary of frequently used notation.}
\label{tbl:notation}
\begin{center}

\begin{tabular}{|c|l|c|}
\hline
\multicolumn{3}{|c|}{{\bf Main notation} }\\
\hline
\hline
%$\R^n$ & Set of $n$ dimensional real vectors &\\ \hline
  $f: \R^d \rightarrow \R$ & Objective to be minimized&  \eqref{eq:main_problem} \\ \hline
  $f_i: \R^d \rightarrow \R$ & Local objective  owned by device/worker $i$ & \eqref{eq:f_i_expectation} or \eqref{eq:f_i_sum} \\ \hline
 $x^*$ & Global optimum of \eqref{eq:main_problem};  $x^* \in \R^d$  & \\ \hline
  $d$ & Dimensionality of the problem space  & \eqref{eq:main_problem} \\ \hline
    $n$ & Number of clients/devices/nodes/workers & \eqref{eq:main_problem} \\ \hline
            $x_i^k$ & Local iterate;  $x_i^k \in \R^d$  & \eqref{eq:local_sgd_def} \\ \hline
        $g_i^k$ & Local stochastic direction;  $g_i^k \in \R^d$  & \eqref{eq:local_sgd_def} \\ \hline
           $\gamma$ & Stepsize/learning rate; $\gamma \geq 0$ & \eqref{eq:local_sgd_def} \\ \hline
         $c_k$ & Indicator of the communication; $c_k \in \{0,1\}$& \eqref{eq:local_sgd_def} \\ \hline
          $\mu$ &Strong quasi-convexity of the local objective; $\mu \geq 0$ & ~\eqref{eq:str_quasi_cvx} \\ \hline
    $L$ &  Smoothness of the local objective; $L\geq \mu$ & ~\eqref{eq:L_smoothness} \\  \hline
          $x^k$ & Virtual iterate;  $x^k \in \R^d$  & Sec~\ref{sec:main_res} \\ \hline
      $V^k$ & Discrepancy between local and virtual iterates;  $V^k \geq 0$  & Sec~\ref{sec:main_res}\\ \hline
         $    \overline{x}^K$ & Weighted average of historical iterates;  $\overline{x}^K \in \R^d$  & Thm~\ref{thm:main_result} \\ \hline
         $\zeta$ &Heterogeneity parameter; $\zeta \geq 0$ & ~\eqref{eq:bounded_data_dissimilarity} \\ \hline
       $\tau$ &  Size of the fixed local loop $\tau \geq 0$ & Sec~\ref{sec:data_and_loop}  \\ \hline
        $p$ &  Probability of aggregation fixed for the random local loop $p \in [0,1] $ & Sec~\ref{sec:data_and_loop}  \\ \hline
  $a_i^k$ & Unbiased local gradient; $a_i^k \in \R^d $ & Sec~\ref{sec:local_solver}  \\ \hline
    $b_i^k$ & Local shift; $b_i^k \in \R^d $ & Sec~\ref{sec:local_solver}  \\ \hline
        $h_i^k$ & Delayed local gradient estimator used to construct $b_i^k$; $h_i^k \in \R^d $ & Sec~\ref{sec:local_solver}  \\ \hline
       $l_i^k$ & Unbiased local gradient estimator used to construct $b_i^k$; $l_i^k \in \R^d $ & Sec~\ref{sec:local_solver}  \\ \hline
       $\cL$ & Expected smoothness of local objectives; $\cL \geq 0 $ & \eqref{eq:expected_smoothness_1}  \\ \hline
      $\max L_{ij}$ & Smoothness constant of local summands; $\max L_{ij}\geq 0$ & Sec~\eqref{sec:llsvrg}  \\ \hline
     $\sigma^2$ & Averaged upper bound for the variance of local stochastic gradient  & Tab~\eqref{tbl:instances}  \\ \hline
          $\sigma^2_*$ & Averaged variance of local stochastic gradients at the solution  & Tab~\eqref{tbl:instances}  \\ \hline
           $\zeta_*^2$ &  $\eqdef \frac1n\sum_{i=1}^n \| \nabla f_i(x^*)\|^2$  & Tab~\eqref{tbl:instances}  \\         
\hline
\hline
\multicolumn{3}{|c|}{{\bf Parametric Assumptions} }\\
\hline
\hline
\makecell{ $A, A', B, B', C, C', F, F',$ \\ $G, H, D_1, D_1',D_2, D_3, \rho$} & \multicolumn{2}{|c|}{ Parameters of Assumption~\ref{ass:key_assumption} } \\
\hline
 $A_i, B_i\, D_{1,i} , \rho_i , C_i ,  D_{2,i}$ & \multicolumn{2}{|c|}{ Parameters of Assumption~\ref{ass:sigma_k_original} } \\
 \hline
  $A'_i, D_{3,i}$ & \multicolumn{2}{|c|}{ Parameters of Assumption~\ref{ass:bik} } \\
  \hline
   $\sigma^2_k, \sigma^2_{i,k}$ & \multicolumn{2}{|c|}{ Possibly random non-negative sequences from Assumptions~\ref{ass:key_assumption},~\ref{ass:sigma_k_original},~\ref{ass:hetero_second_moment} } \\
 \hline
\hline
\multicolumn{3}{|c|}{{\bf Standard} }\\
\hline
\hline
$\EE[\cdot]$ & Expectation & \\
\hline
$\EE\left[\cdot\mid x^k\right]$  & $\eqdef \EE\left[\cdot\mid x_1^k,\ldots,x_n^k\right]$; expectation conditioned on $k$-th local iterates  & \\
\hline
$D_h(x,y)$ & $\eqdef h(x)-h(y)- \langle \nabla h(y), x-y \rangle$; Bregman distance of $x,y$ w.r.t. $h$  & As~\ref{ass:sigma_k_original}\\
 \hline
\end{tabular}
 \end{center}
\end{table}

\clearpage

\section{Table with Complexity Bounds in the Weakly Convex Case}
\begin{table*}[!h]
\caption{A selection of methods that can be analyzed using our framework. A choice of $a_i^k, b_i^k$ and $l_i^k$ is presented along with the established complexity bounds (= number of iterations to find such $\hat x$ that $\EE[f(\hat{x}) - f(x^*)]\le \varepsilon$) and a specific setup under which the methods are analyzed. For all algorithms we suppress constants factors. All rates are provided in the \textbf{weakly convex} setting. UBV stands for the ``Uniform Bound on the Variance'' of local stochastic gradient, which is often assumed when $f_i$ is of the form~\eqref{eq:f_i_expectation}. ES stands for the ``Expected Smoothness''~\cite{gower2019sgd}, which does not impose any extra assumption on the objective/noise, but rather can be derived given the sampling strategy and the smoothness structure of $f_i$. Consequently, such a setup allows us to obtain local methods with importance sampling. Next, the simple setting is a special case of ES when we uniformly sample a single index on each node each iteration. $^\clubsuit$: {\tt Local-SGD} methods have never been analyzed under ES assumption. Notation: $\sigma^2$ -- averaged (within nodes) uniform upper bound for the variance of local stochastic gradient, $\sigma_*^2$ -- averaged variance of local stochastic gradients at the solution, $\zeta_*^2 \eqdef \frac1n\sum_{i=1}^n \| \nabla f_i(x^*)\|^2$, $\max L_{ij}$ -- the worst smoothness of $f_{i,j}, i\in[n],j\in[m]$, $\cL$ -- the worst ES constant for all nodes, $R_0\eqdef \|x^0 - x^*\|$ -- distance of the starting point $x^0$ from the closest solution $x^*$, $\Delta_0 \eqdef f(x^0)-f(x^*)$.
}
\label{tbl:special_cases_weakly_convex}
\begin{center}
\vskip -0.2cm
\footnotesize
\begin{tabular}{|c|c|c|c|c|c|c|c|}
\hline
{\bf Method} & {\bf  \# } &  {\bf Ref} &   $\myred{a_i^k}, \myblue{b_i^k}, l_i^k$   & {\bf Complexity}  &   {\bf Setting}  & {\bf Sec}  \\
\hline
\hline
 {\tt Local-SGD}  &   \ref{alg:local_sgd} & {\cite{woodworth2020minibatch}}  & $\myred{f_{\xi_i}(x_i^k)}, \myblue{0}, - $& \begin{tabular}{c}
$\frac{L R_0^2}{\varepsilon}+\frac{\sigma^2R_0^2}{n\varepsilon^2}\quad\quad\quad\quad\quad\quad\quad\quad$\\
$\quad\quad\quad+\frac{R_0^2\sqrt{L\tau(\sigma^2 + \tau\zeta^2)}}{\varepsilon^{\nicefrac{3}{2}}}$ 
\end{tabular} &  \begin{tabular}{c}
	UBV,\\
	$\zeta$-Het
\end{tabular} & \ref{sec:sgd_bounded_var}  \\
%%%%%%%%%%%%%%%%%%%%
%%%%%%%%%%%%%%%%%%%%
\hline
 {\tt Local-SGD}  &   \ref{alg:local_sgd} & {\cite{koloskova2020unified}}  & $\myred{f_{\xi_i}(x_i^k)},\myblue{0}, - $& \begin{tabular}{c}
$\frac{\tau L R_0^2}{\varepsilon}+\frac{\sigma^2R_0^2}{n\varepsilon^2}\quad\quad\quad\quad\quad\quad\quad\quad$\\
$\quad\quad\quad+\frac{R_0^2\sqrt{L(\tau-1)(\sigma^2 + (\tau-1)\zeta_*^2)}}{\varepsilon^{\nicefrac{3}{2}}}$ 
\end{tabular}   &  \begin{tabular}{c}
	UBV,\\
	Het
\end{tabular} & \ref{sec:sgd_bounded_var}  \\
%%%%%%%%%%%%%%%%%%%%
%%%%%%%%%%%%%%%%%%%%
\hline
 {\tt Local-SGD}  &  \ref{alg:local_sgd} & {\cite{khaled2020tighter}}$^{\clubsuit}$  & $\myred{f_{\xi_i}(x_i^k)}, \myblue{0}, - $  & \begin{tabular}{c}
	$\frac{\left(L+\nicefrac{\cL}{n}+\sqrt{(\tau-1)L\cL}\right)R_0^2}{\varepsilon}+\frac{\sigma_*^2R_0^2}{n\varepsilon^2}\quad$\\
	$+\frac{L\zeta^2(\tau-1)R_0^2}{\mu\varepsilon^2}+\frac{R_0^2\sqrt{L(\tau-1)(\sigma_*^2 + \zeta_*^2)}}{\varepsilon^{\nicefrac{3}{2}}}$ 
\end{tabular}  &   \begin{tabular}{c}
	ES,\\
	$\zeta$-Het
\end{tabular} & \ref{sec:sgd_es}  \\
%%%%%%%%%%%%%%%%%%%%
%%%%%%%%%%%%%%%%%%%%
\hline
 {\tt Local-SGD}  &  \ref{alg:local_sgd} & {\cite{khaled2020tighter}}$^{\clubsuit}$  & $\myred{f_{\xi_i}(x_i^k)}, \myblue{0}, - $  & \begin{tabular}{c}
	$\frac{\left(L\tau+\nicefrac{\cL}{n}+\sqrt{(\tau-1)L\cL}\right)R_0^2}{\varepsilon}+\frac{\sigma_*^2R_0^2}{n\varepsilon^2}\quad$\\
	$\quad\quad+\frac{R_0^2\sqrt{L(\tau-1)(\sigma_*^2 + (\tau-1)\zeta_*^2)}}{\varepsilon^{\nicefrac{3}{2}}}$ 
\end{tabular}  &   \begin{tabular}{c}
	ES,\\
	Het
\end{tabular} & \ref{sec:sgd_es}  \\
%%%%%%%%%%%%%%%%%%%%
%%%%%%%%%%%%%%%%%%%%
\hline
\rowcolor{bgcolor} {\tt Local-SVRG}  &   \ref{alg:local_svrg} & NEW  & \begin{tabular}{c}
	\myred{$\nabla f_{i,j_i}(x^k_i) -  \nabla f_{i,j_i}(y_i^k)$}\\\myred{$ +  \nabla f_{i}(y_i^k)$},\\ \myblue{$0$}, $- $ 
\end{tabular}  & \begin{tabular}{c}
	 $\frac{\left(L + \max L_{ij}\sqrt{\nicefrac{m}{n}} + \sqrt{(\tau-1)L\max L_{ij}}\right)R_0^2}{\varepsilon}$\\
	 $\frac{\sqrt[3]{(\tau-1)mL\max L_{ij}}R_0^2}{\varepsilon}+ \frac{L\zeta^2(\tau-1)R_0^2}{\mu\varepsilon^2}$\\
	 $+\frac{R_0^2\sqrt{L(\tau-1)\zeta_*^2}}{\varepsilon^{\nicefrac{3}{2}}}$
\end{tabular} & \begin{tabular}{c}
	simple,\\
	$\zeta$-Het
\end{tabular} & \ref{sec:llsvrg}   \\
%%%%%%%%%%%%%%%%%%%%
%%%%%%%%%%%%%%%%%%%%
\hline
\rowcolor{bgcolor} {\tt Local-SVRG}  &   \ref{alg:local_svrg} & NEW  & \begin{tabular}{c}
	\myred{$\nabla f_{i,j_i}(x^k_i) -  \nabla f_{i,j_i}(y_i^k)$}\\\myred{$ +  \nabla f_{i}(y_i^k)$},\\ \myblue{$0$}, $- $ 
\end{tabular}  & \begin{tabular}{c}
	 $\frac{\left(L\tau + \max L_{ij}\sqrt{\nicefrac{m}{n}} + \sqrt{(\tau-1)L\max L_{ij}}\right)R_0^2}{\varepsilon}$\\
	 $\frac{\sqrt[3]{(\tau-1)mL\max L_{ij}}R_0^2}{\varepsilon}+\frac{R_0^2\sqrt{L(\tau-1)^2\zeta_*^2}}{\varepsilon^{\nicefrac{3}{2}}}$
\end{tabular} & \begin{tabular}{c}
	simple,\\
	Het
\end{tabular} & \ref{sec:llsvrg}   \\
%%%%%%%%%%%%%%%%%%%%
%%%%%%%%%%%%%%%%%%%%
\hline
\rowcolor{bgcolor} {\tt S*-Local-SGD}  &  \ref{alg:local_sgd_star} & NEW  & $\myred{f_{\xi_i}(x_i^k)}, \myblue{\nabla f_i(x^*)}, - $ & $\frac{\tau LR_0^2}{\varepsilon}+\frac{\sigma^2R_0^2}{n\varepsilon^2}+\frac{R_0^2\sqrt{L(\tau-1)\sigma^2}}{\varepsilon^{\nicefrac{3}{2}}}$  &  \begin{tabular}{c}
	UBV,\\
	Het
\end{tabular} & \ref{sec:sgd_star_bounded_var}  \\
%%%%%%%%%%%%%%%%%%%%
%%%%%%%%%%%%%%%%%%%%
\hline
 {\tt SS-Local-SGD}  &  \ref{alg:l_local_svrg} & \cite{karimireddy2019scaffold}  & \begin{tabular}{c}
$\myred{f_{\xi_i}(x_i^k)}, \myblue{h_i^k  - \frac1n \sum_{i=1}^n h_i^k},$\\
$ \nabla f_{\tilde{\xi}_i^k} (y_i^k)$ 
\end{tabular}  & $\frac{LR_0^2}{p\varepsilon}+\frac{\sigma^2R_0^2}{n\varepsilon^2}+\frac{R_0^2\sqrt{L(1-p)\sigma^2}}{p^{\nicefrac{1}{2}}\varepsilon^{\nicefrac{3}{2}}}$ &  \begin{tabular}{c}
	UBV,\\
	Het
\end{tabular} & \ref{sec:loopless_local_svrg}  \\
%%%%%%%%%%%%%%%%%%%%
%%%%%%%%%%%%%%%%%%%%
\hline
\rowcolor{bgcolor} {\tt SS-Local-SGD}  &  \ref{alg:l_local_svrg} & NEW  & \begin{tabular}{c}
$\myred{f_{\xi_i}(x_i^k)}, \myblue{h_i^k  - \frac1n \sum_{i=1}^n h_i^k},$\\
$ \nabla f_{\tilde{\xi}_i^k} (y_i^k)$ 
\end{tabular}  &  \begin{tabular}{c}
$\frac{\left(L + \nicefrac{p\cL}{n} + \sqrt{p(1-p)L\cL}\right)R_0^2}{p\varepsilon}$\\$+ \frac{\sqrt[3]{(1-p)L(L+p\cL)R_0^4\Delta_0}}{p\varepsilon}$\\ $+\frac{\sqrt[3]{(1-p)L\sigma_*^2R_0^4}}{p^{\nicefrac{2}{3}}\varepsilon}+\frac{\sigma_*^2R_0^2}{n\varepsilon^2}$\\$+\frac{R_0^2\sqrt{L(1-p)\sigma_*^2}}{p^{\nicefrac{1}{2}}\varepsilon^{\nicefrac{3}{2}}}$
\end{tabular} &  \begin{tabular}{c}
	ES,\\
	Het
\end{tabular} & \ref{sec:loopless_local_svrg_es}  \\
%%%%%%%%%%%%%%%%%%%%
%%%%%%%%%%%%%%%%%%%%
\hline
\rowcolor{bgcolor2} {\tt S*-Local-SGD*}  &  \ref{alg:local_sgd_star_star} & NEW  & \begin{tabular}{c}
 	\myred{$\nabla f_{i,j_i}(x^k_i) -  \nabla f_{i,j_i}(x^*)$}\\
 	\myred{$ +  \nabla f_{i}(x^*)$}, $\myblue{\nabla f_i(x^*)}, - $\\ 	
 \end{tabular} &  $\frac{\left(L\tau + \nicefrac{\max L_{ij}}{n} + \sqrt{(\tau-1)L\max L_{ij}}\right)R_0^2}{\varepsilon}$  &  \begin{tabular}{c}
	simple,\\
	Het
\end{tabular} & \ref{sec:S*-Local-SGD*}  \\
%%%%%%%%%%%%%%%%%%%%
%%%%%%%%%%%%%%%%%%%%
\hline
\rowcolor{bgcolor2} {\tt S-Local-SVRG}  &   \ref{alg:l_local_svrg_fs} & NEW  &\begin{tabular}{c}
	$\myred{ \nabla f_{i,j_i}(x^k_i) -  \nabla f_{i,j_i}(y^k)}$\\
	\myred{$ +  \nabla f_{i}(y^k) $},\\
	$\myblue{h_i^k  - \frac1n \sum_{i=1}^n h_i^k}, \nabla f_{i} (y^k)$ 
\end{tabular} & \begin{tabular}{c}
$\frac{\left(L + pL\sqrt{\nicefrac{m}{n}} + \sqrt{(1-p)L\max L_{ij}}\right)R_0^2}{p\varepsilon}$\\ $+ \frac{R_0^2\sqrt[3]{L\max L_{ij}^2}}{p^{\nicefrac{2}{3}}\varepsilon}$ 
\end{tabular} & \begin{tabular}{c}
	simple,\\
	Het
\end{tabular} &   \ref{sec:loopless_local_svrg_fs} \\
%%%%%%%%%%%%%%%%%%%%
%%%%%%%%%%%%%%%%%%%%
\hline
\end{tabular}
\end{center}
\vskip -0.2cm
\end{table*}

\clearpage

\section{Extra Experiments \label{sec:extra_exp}}

\subsection{Missing details from Section~\ref{sec:exp} and an extra figure\label{sec:extralog}}

In Section~\ref{sec:exp} we study the effect of  local variance reduction on the communication complexity of local methods. We consider the regularized logistic regression objective, i.e., we choose
\[
f_i(x) \eqdef \frac1m \sum_{j=1}^m \log \left(1+\exp\left( \langle a_{(i-1)m+j}, x \rangle\cdot  b_{(i-1)m+j}\right) \right) + \frac{\mu}{2}\| x\|^2,
\]
where $a_{j}\in \R^d, b_j\in \{-1, 1\}$ for $j\leq nm$ are the training data and labels.

\paragraph{Number of the clients.} We select a different number of clients for each dataset in order to capture a variety of scenarios. See Table~\ref{tbl:ns} for details.

 \begin{table}[!h]
 \caption{Number of clients per dataset (Figures~\ref{fig:sgd_svrg_hom} and~\ref{fig:sgd_svrg_het}). }
\label{tbl:ns}
\begin{center}
\small
\begin{tabular}{|c|c|c|c|}
\hline
{\bf Dataset}  & $n$ & {\bf \# datapoints} ($=mn$) & $d$   \\
 \hline
  \hline
\texttt{a1a} & 5 & 1 605	& 123  \\ \hline
\texttt{mushrooms} & 12 & 8 124 & 112   \\ \hline
\texttt{phishing} & 11   & 11 055	&	68 \\ \hline
\texttt{madelon} & 50 & 2 000& 500 \\ \hline
\texttt{duke} & 4  &44 & 7 129  \\ \hline
\texttt{w2a} & 10  &3 470 & 300  \\ \hline
\end{tabular}
\end{center}
\end{table}

\begin{figure}[!h]
\centering
\begin{minipage}{0.3\textwidth}
  \centering
\includegraphics[width =  \textwidth ]{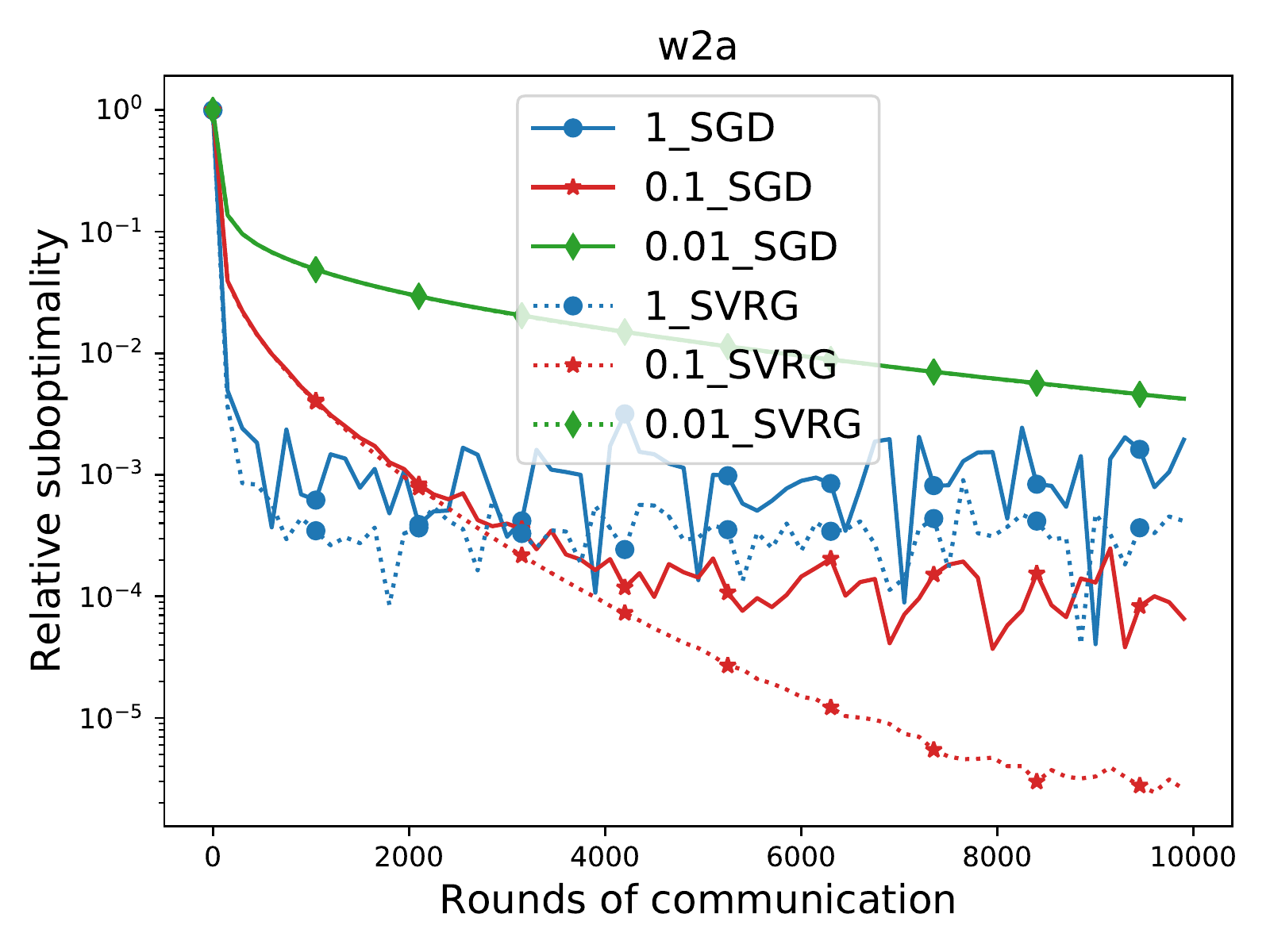}
        %\caption{ Residual vs. iteration  }\label{fig:bl_ex_flops}
\end{minipage}
\begin{minipage}{0.3\textwidth}
  \centering
\includegraphics[width =  \textwidth ]{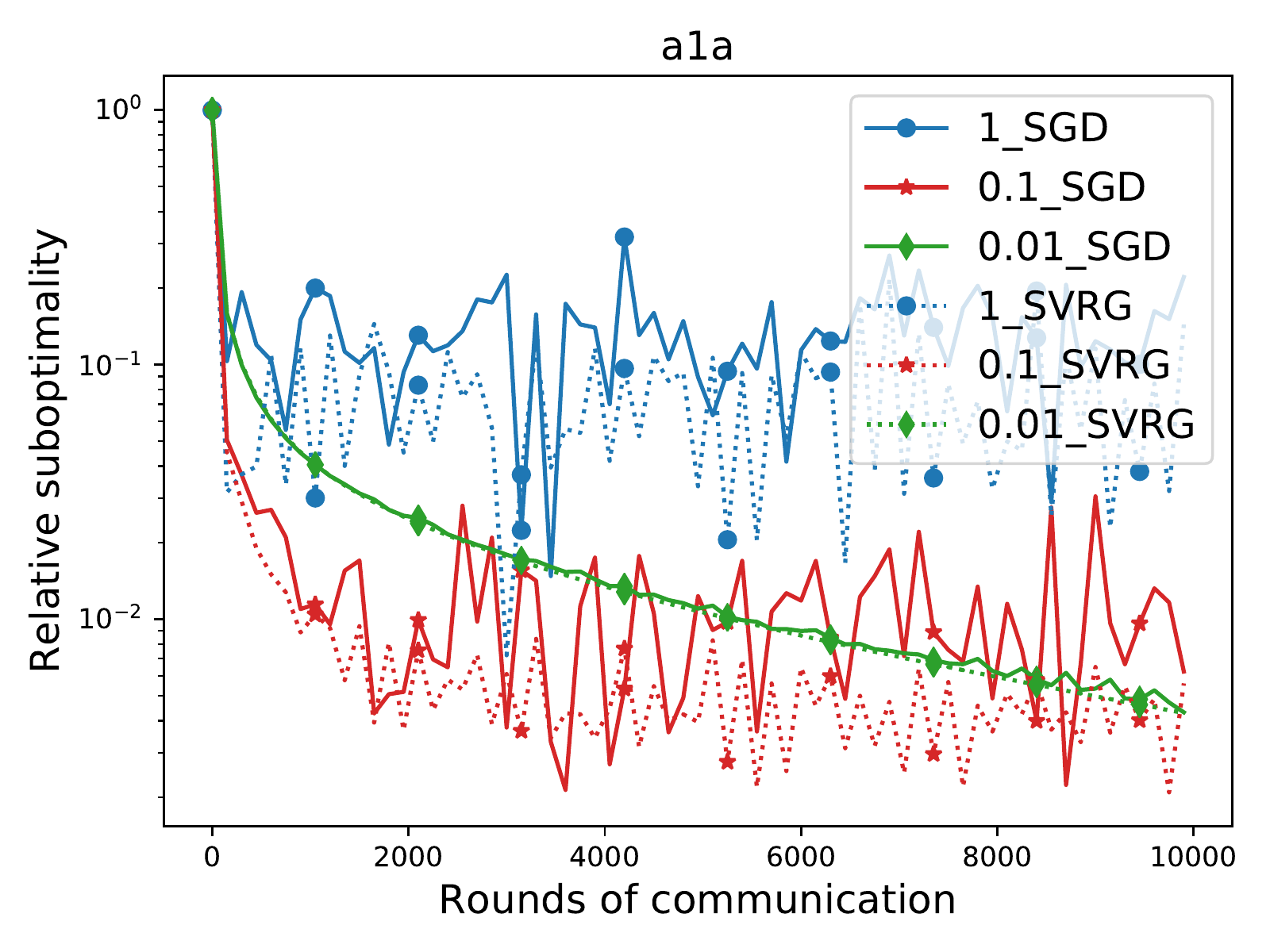}
        %\caption{ Residual vs. iteration  }\label{fig:bl_ex_flops}
\end{minipage}
\begin{minipage}{0.3\textwidth}
  \centering
\includegraphics[width =  \textwidth ]{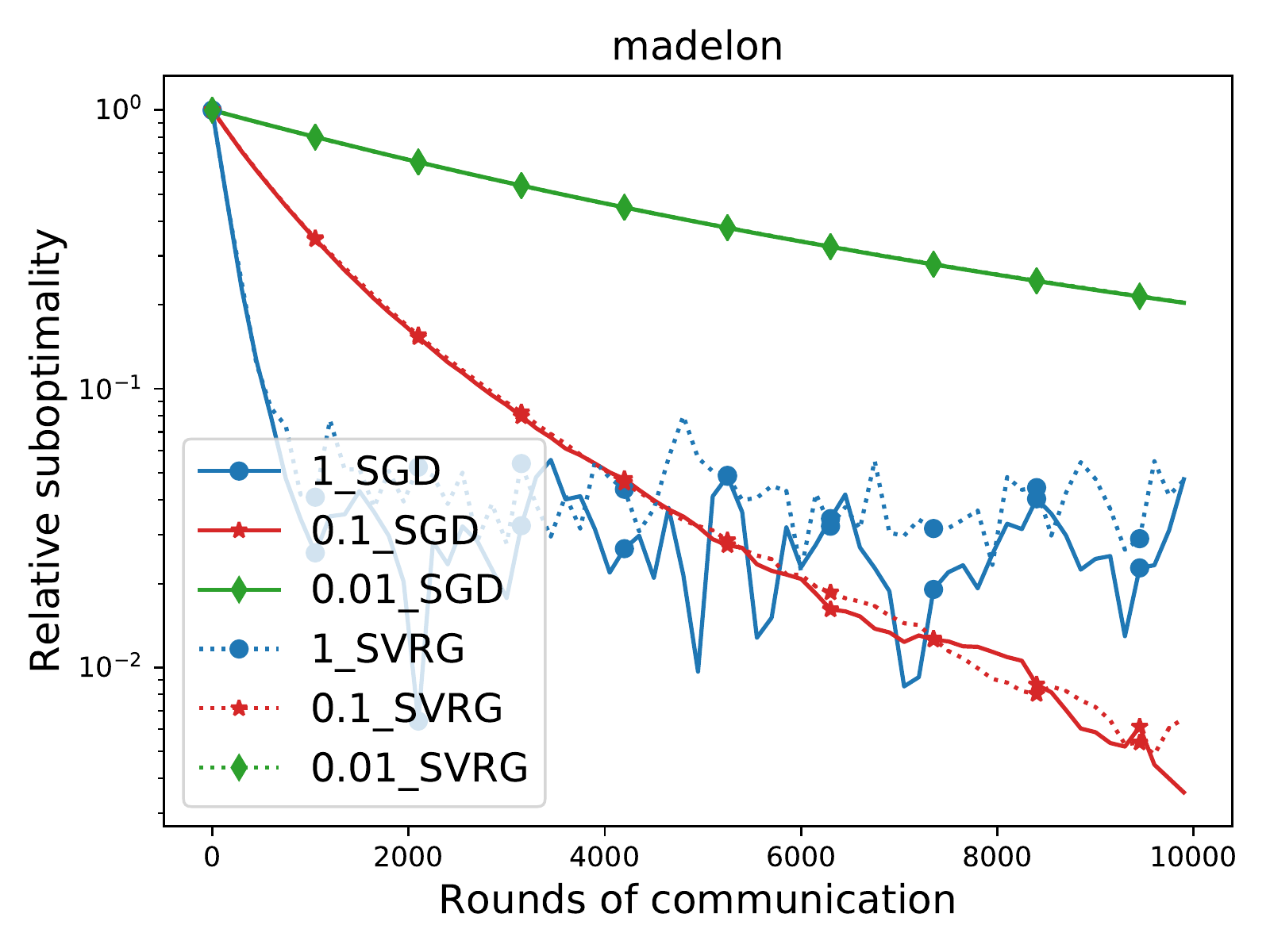}
        %\caption{ Residual vs. iteration  }\label{fig:bl_ex_flops}
\end{minipage}
\\
\begin{minipage}{0.3\textwidth}
  \centering
\includegraphics[width =  \textwidth ]{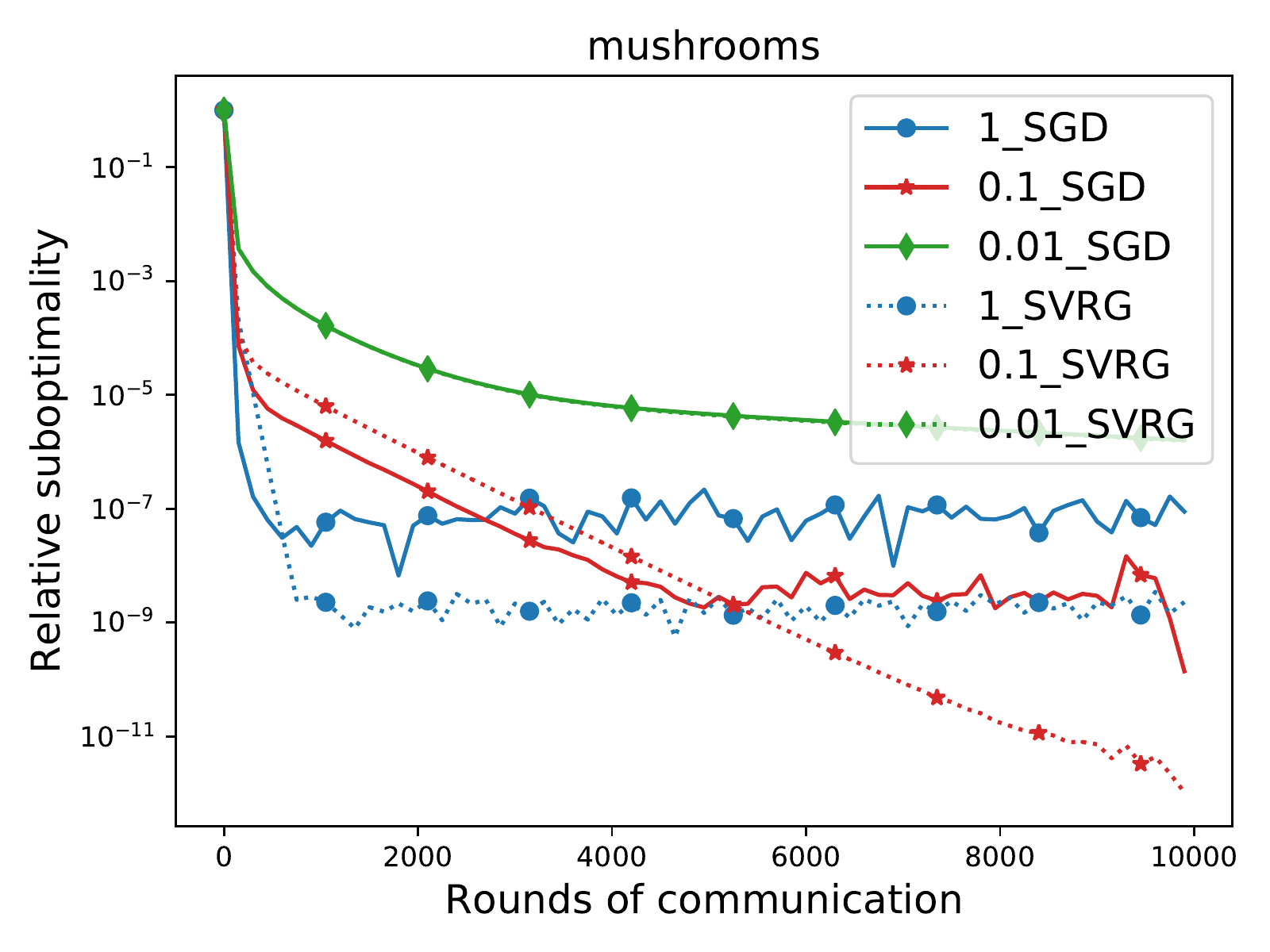}
        %\caption{ Residual vs. iteration  }\label{fig:bl_ex_flops}
\end{minipage}
\begin{minipage}{0.3\textwidth}
  \centering
\includegraphics[width =  \textwidth ]{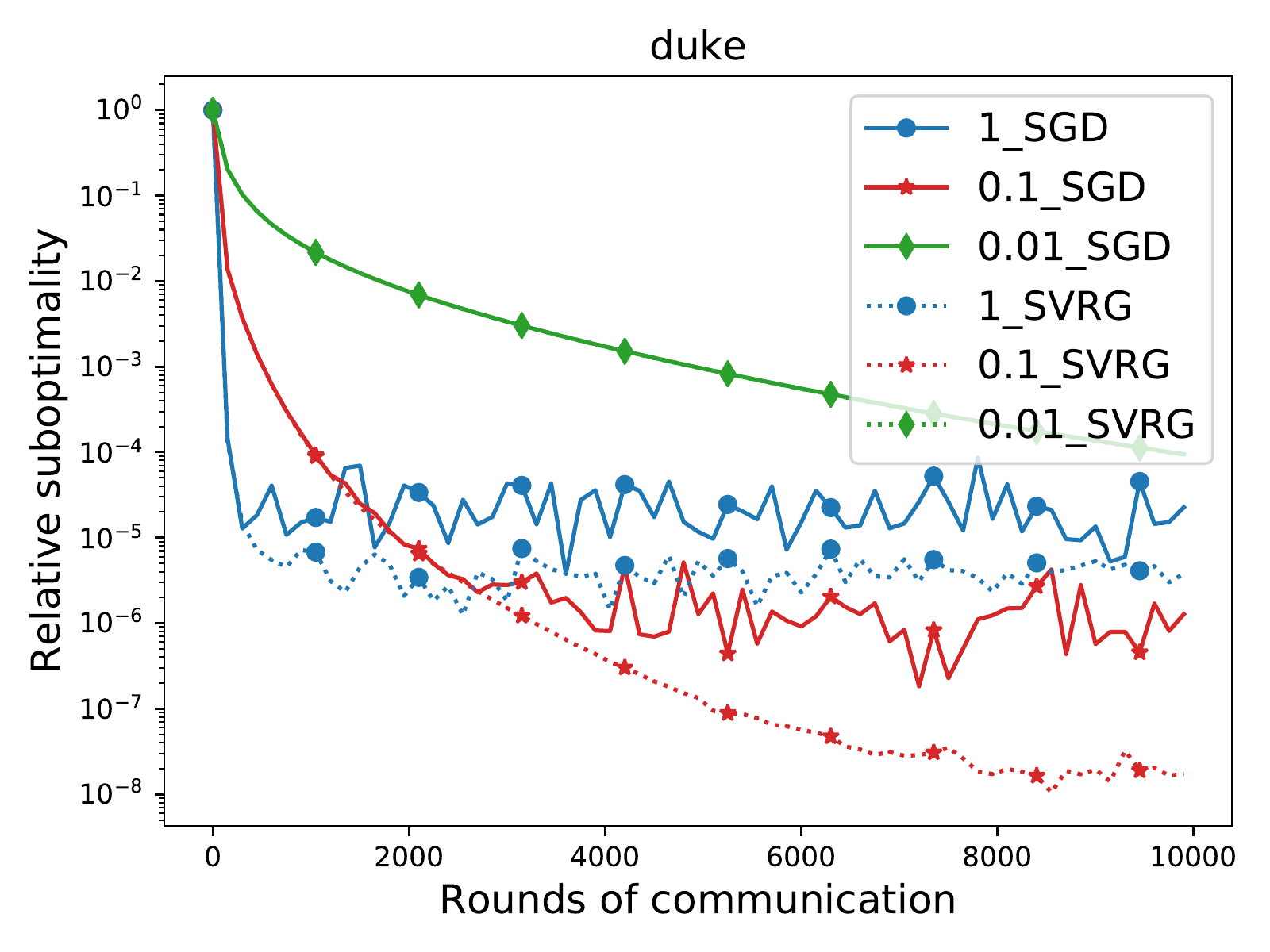}
        %\caption{ Residual vs. iteration  }\label{fig:bl_ex_flops}
\end{minipage}
\begin{minipage}{0.3\textwidth}
  \centering
\includegraphics[width =  \textwidth ]{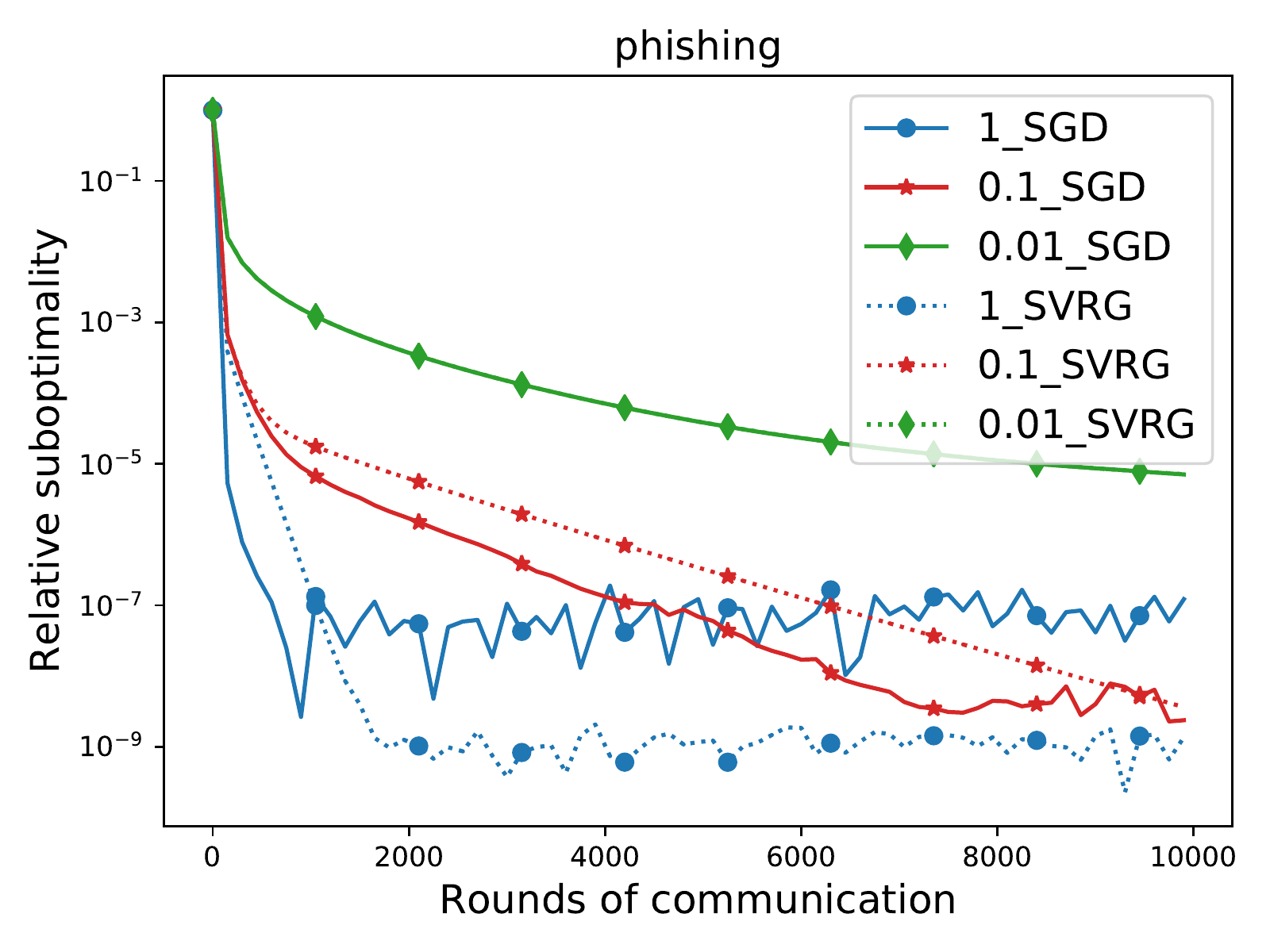}
        %\caption{ Residual vs. iteration  }\label{fig:bl_ex_flops}
\end{minipage}
\caption{Comparison of standard {\tt Local-SGD} (Algorithm~\ref{alg:local_sgd}), and {\tt Local-SVRG} (Algorithm~\ref{alg:local_svrg})  with various stepsizes $\gamma$. Logistic regression applied on LibSVM data~\cite{chang2011libsvm} with heterogenously splitted data.  Other parameters: $L=1, \mu=10^{-4}, \tau = 40$. Parameter $n$ chosen as per Table~\ref{tbl:ns}. (Same as Fig.~\ref{fig:sgd_svrg_hom}, but with the heterogenous data split)}
\label{fig:sgd_svrg_het}
\end{figure}

\paragraph{Data split.} The experiment from Figure~\ref{fig:sgd_svrg_hom} in the main body of the paper splits the data among the clients  uniformly at random (i.e., split according to the the order given by a random permutation). However, in a typical FL scenario, the local data might significantly differ from the population average. For this reason, we also test on a different split of the data: we first sort the data according to the labels, and then split them among the clients. Figure~\ref{fig:sgd_svrg_het} shows the results. We draw a conclusions identical to Figure~\ref{fig:sgd_svrg_hom}. We see that {\tt Local-SVRG} was at least as good as {\tt Local-SGD} for every stepsize choice and every dataset. Further, the prediction that the smaller stepsize yields the smaller of the optimum neighborhood for the price of slower convergence was confirmed.

\paragraph{Environment.} All experiments were performed in a simulated environment on a single machine.

\subsection{The effect of  local shift/drifts}

The experiment presented in Section~\ref{sec:exp} examined the effect of the noise on the performance of local methods and demonstrated that control variates can be efficiently employed to reduce that noise. In this section, we study the second factor that influences the neighborhood to which {\tt Local-SGD} converges: non-stationarity of {\tt Local-GD}.

We have already shown that the mentioned non-stationarity of {\tt Local-GD} can be fixed using a carefully designed idealized/optimal shift that depends on the solution $x^*$ (see Algorithm~\ref{alg:local_sgd_star}). Furthermore, we have shown that this idealized shift can be learned on-the-fly at the small price of slightly slower convergence rate (see Algorithm~\ref{alg:l_local_svrg} -- {\tt SS-Local-SGD}/{\tt SCAFFOLD}).\footnote{In fact, {\tt SCAFFOLD} can be coupled together with {\tt Local-SVRG} given that the local objectives are of a finite-sum structure, resulting in Algorithm~\ref{alg:l_local_svrg_fs}.}

In this experiment, we therefore compare {\tt Local-SGD},  {\tt S*-Local-SGD} and {\tt SCAFFOLD}. In order to decouple the local variance with the non-stationarity of the local methods, we let each algorithm access the full local gradients. Next, in order to have a full control of the setting, we let the local objectives to be artificially generated quadratic problems. Specifically, we set
\begin{equation}\label{eq:quadproblem}
f_i (x)= \frac{\mu}{2} \| x\|^2 + \frac{1-\mu}{2} (x-z_i^*)^\top \left( \sum_{j=1}^m a_i a_i^\top \right)(x-z_i^*),
\end{equation}
where $a_i$ are mutually orthogonal vectors of norm 1 with $m<d$ (generated by orthogonalizing Gaussian vectors), $z_i^*$ are Gaussian vectors and $\mu = 10^{-3}$. We consider four different instances of~\eqref{eq:quadproblem} given by Table~\ref{eq:quadproblem}. Figures~\ref{fig:artif1},~\ref{fig:artif2},~\ref{fig:artif3},~\ref{fig:artif4} show the result.

Through most of the plots across all combinations of type, $\tau$, $n$, we can see that {\tt Local-SGD}  suffers greatly from the fact that it is attracted to an incorrect fixed point and as a result, it never converges to the exact optimum. On the other hand, both {\tt S*-Local-SGD}  and {\tt SCAFFOLD} converge to the exact optimum and therefore outperform {\tt Local-SGD}  in most examples. We shall note that the rate of {\tt SCAFFOLD} involves  slightly worse constants than those in {\tt Local-SGD}  and {\tt S*-Local-SGD}, and therefore it sometimes performs worse in the early stages of the optimization process when compared to the other methods. Furthermore, notice that our method {\tt S*-Local-SGD}  always performed best.

To summarize, our results demonstrate that \begin{itemize}
\item [(i)] the incorrect fixed point of used by standard local methods is an issue not only theory but also in practice, and should be addressed if better performance is required, 
\item [(ii)] the theoretically optimal shift employed by {\tt S*-Local-SGD} is ideal from a performance perspective if it was available (however, this strategy is impractical to implement as the optimal shift presumes the knowledge of the optimal solution), and 
\item [(iii)] {\tt SCAFFOLD}/{\tt SS-Local-SGD} is a practical solution to fixing the incorrect fixed point problem -- it converges to the exact optimum at a price of a slightly worse initial convergence speed. 
\end{itemize}

 \begin{table}[!t]
 \caption{Instances of~\eqref{eq:quadproblem}. }
\label{tbl:instances}
\begin{center}
\small
\begin{tabular}{|c|c|c|}
\hline
 {\bf Type}  & $m$ &$z_i^*$   \\
 \hline
  \hline
0 & 1 & $\sim \cN(0,\mI)$ \\
\hline
1 & 10 & $\sim \cN(0,\mI)$ \\
\hline
2 & 1 & $\sim \cN(0,\mI)$ \\
\hline
3 & 10 & $\sim \cN(0,\mI)$ \\
\hline
\end{tabular}
\end{center}
\end{table}

\begin{figure}[!h]
\centering
\begin{minipage}{0.3\textwidth}
  \centering
\includegraphics[width =  \textwidth ]{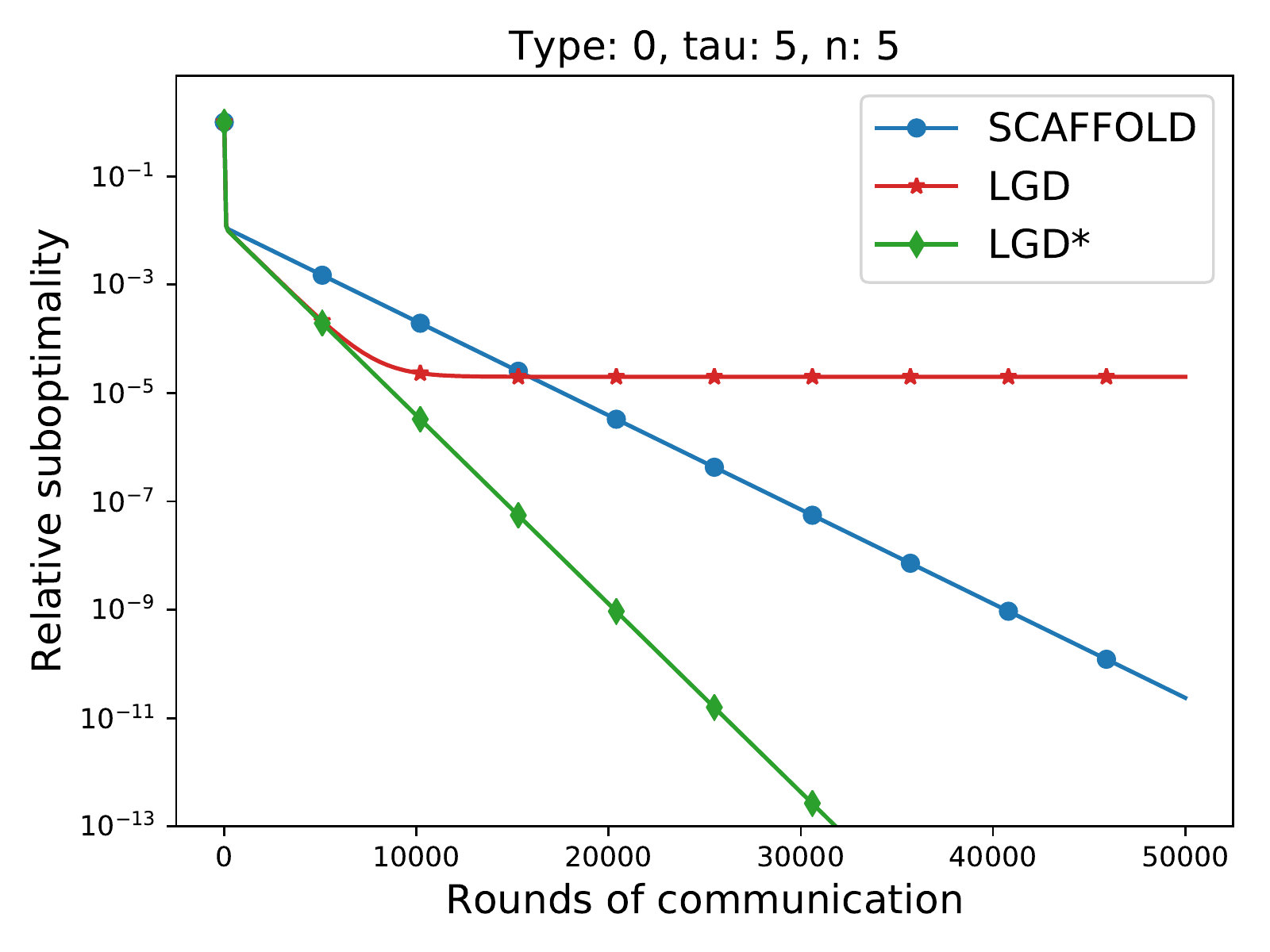}
        %\caption{ Residual vs. iteration  }\label{fig:bl_ex_flops}
\end{minipage}
\begin{minipage}{0.3\textwidth}
  \centering
\includegraphics[width =  \textwidth ]{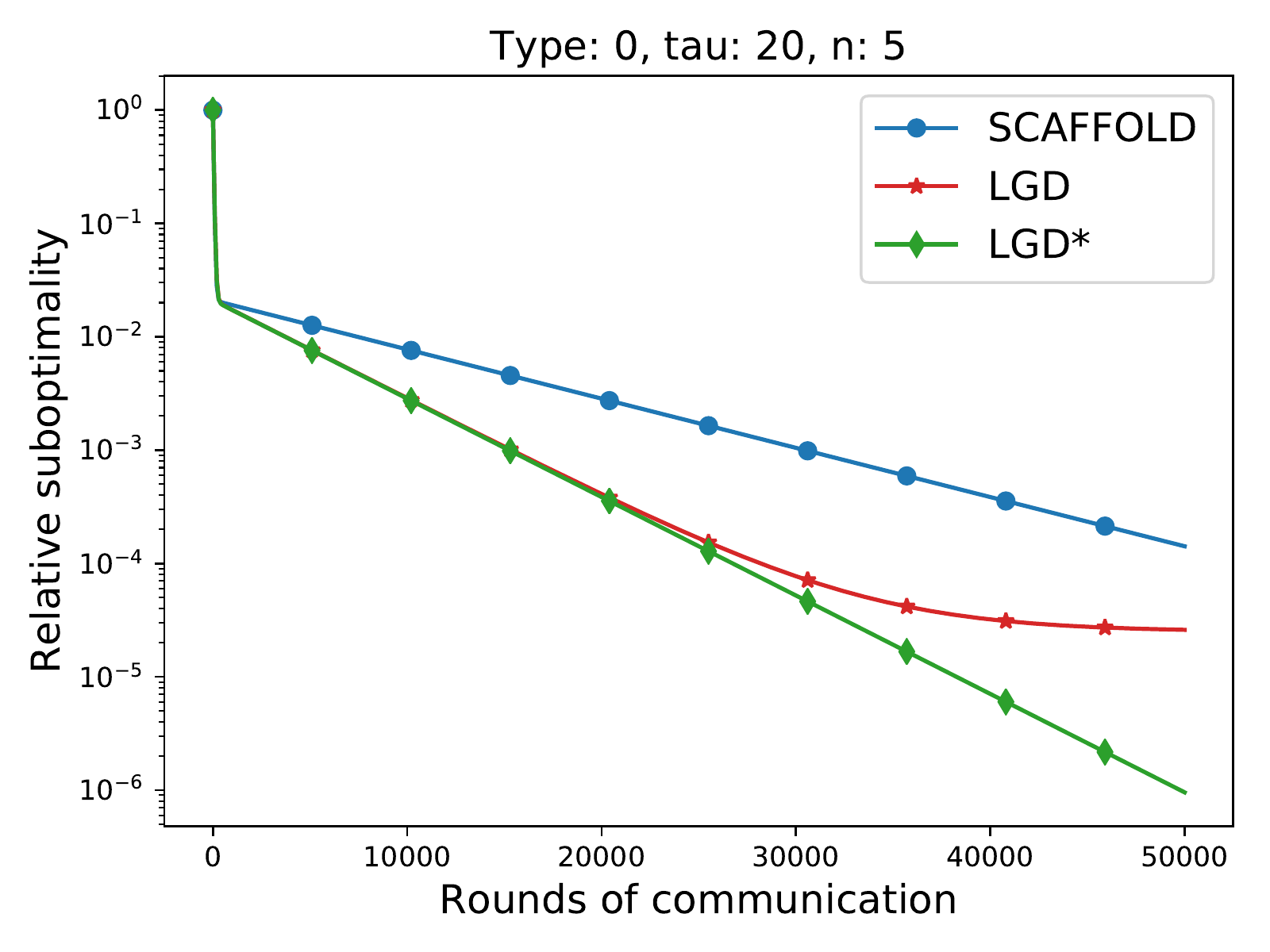}
        %\caption{ Residual vs. iteration  }\label{fig:bl_ex_flops}
\end{minipage}
\begin{minipage}{0.3\textwidth}
  \centering
\includegraphics[width =  \textwidth ]{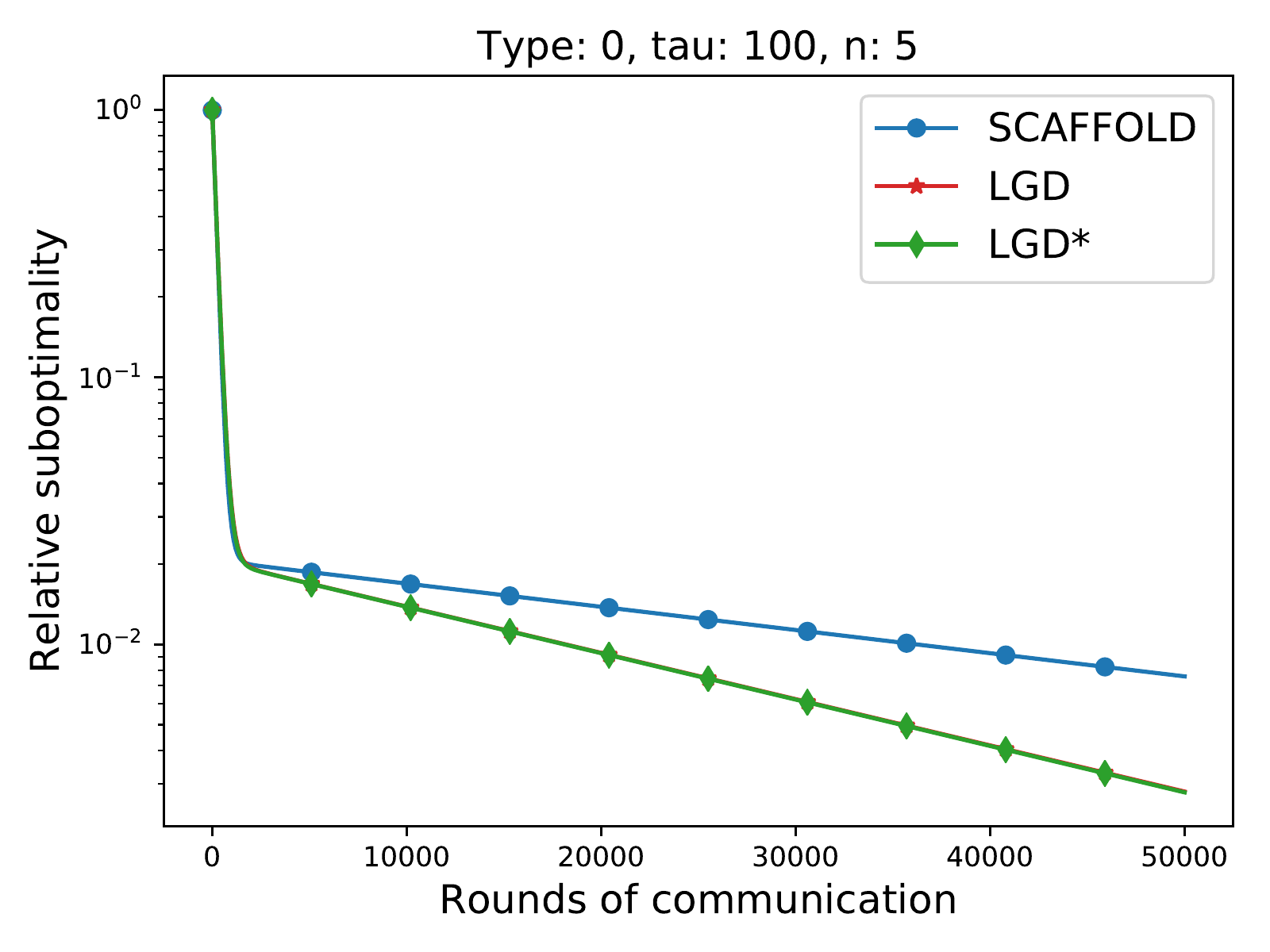}
        %\caption{ Residual vs. iteration  }\label{fig:bl_ex_flops}
\end{minipage}
\\
\begin{minipage}{0.3\textwidth}
  \centering
\includegraphics[width =  \textwidth ]{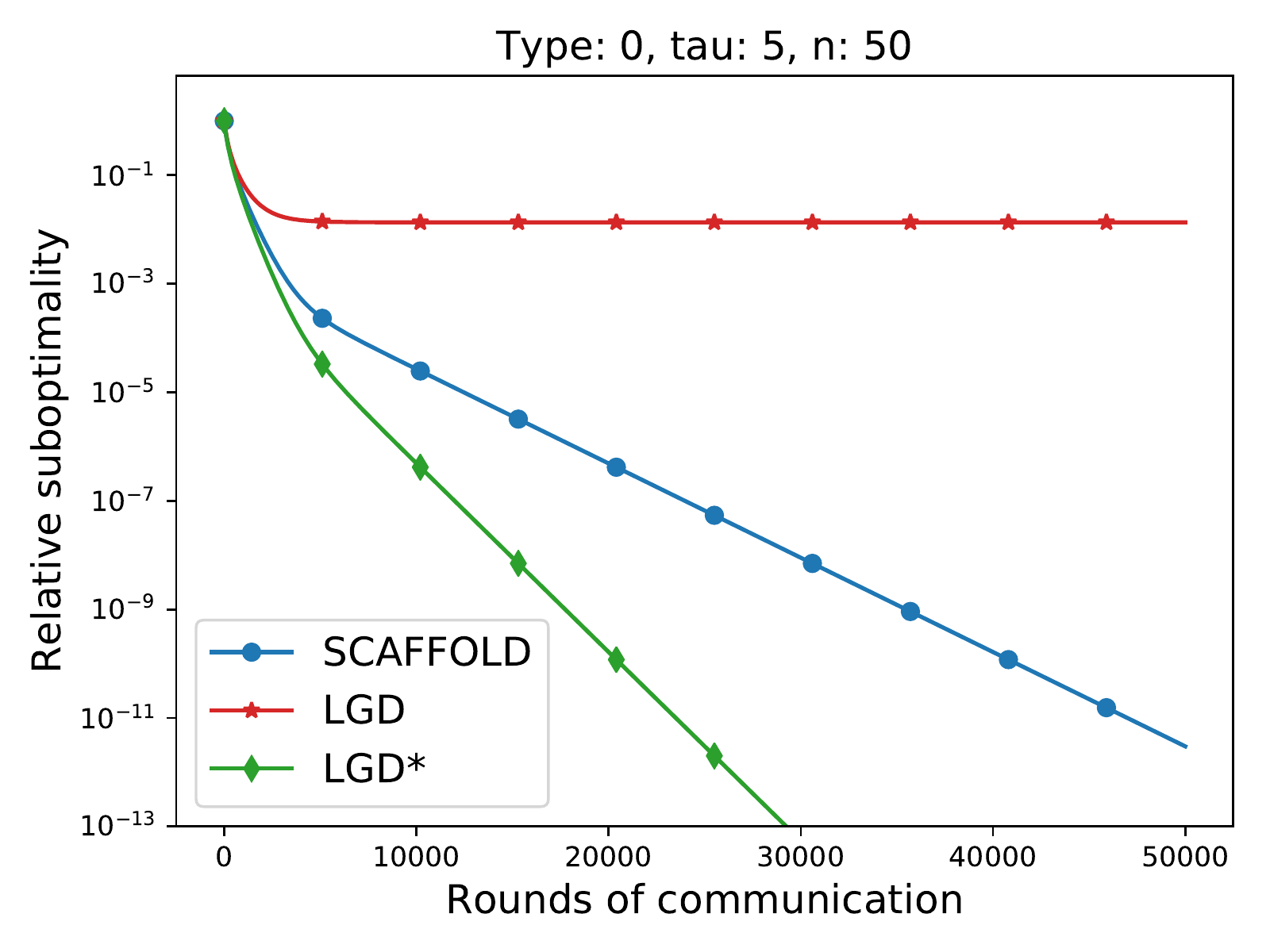}
        %\caption{ Residual vs. iteration  }\label{fig:bl_ex_flops}
\end{minipage}
\begin{minipage}{0.3\textwidth}
  \centering
\includegraphics[width =  \textwidth ]{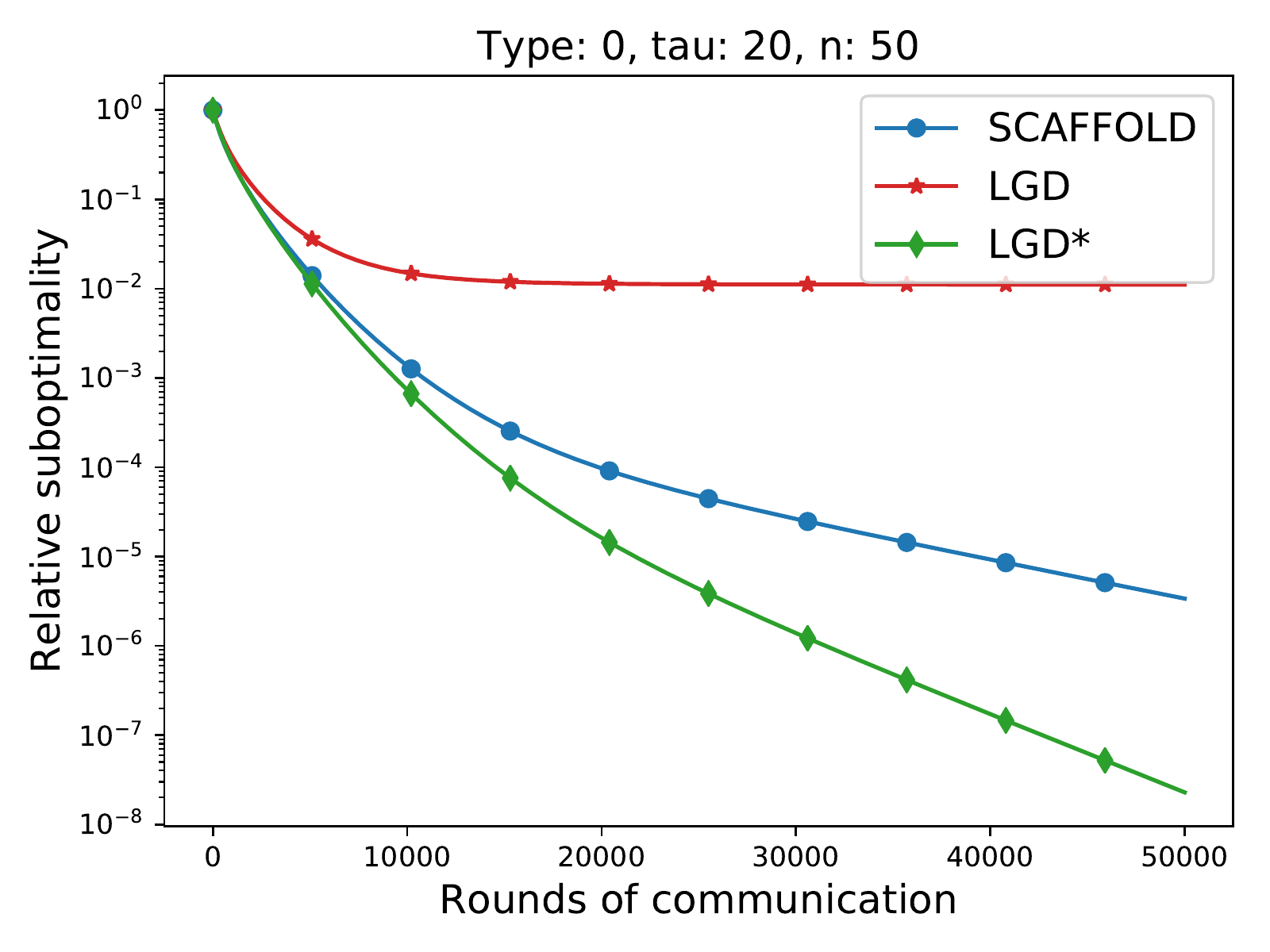}
        %\caption{ Residual vs. iteration  }\label{fig:bl_ex_flops}
\end{minipage}
\begin{minipage}{0.3\textwidth}
  \centering
\includegraphics[width =  \textwidth ]{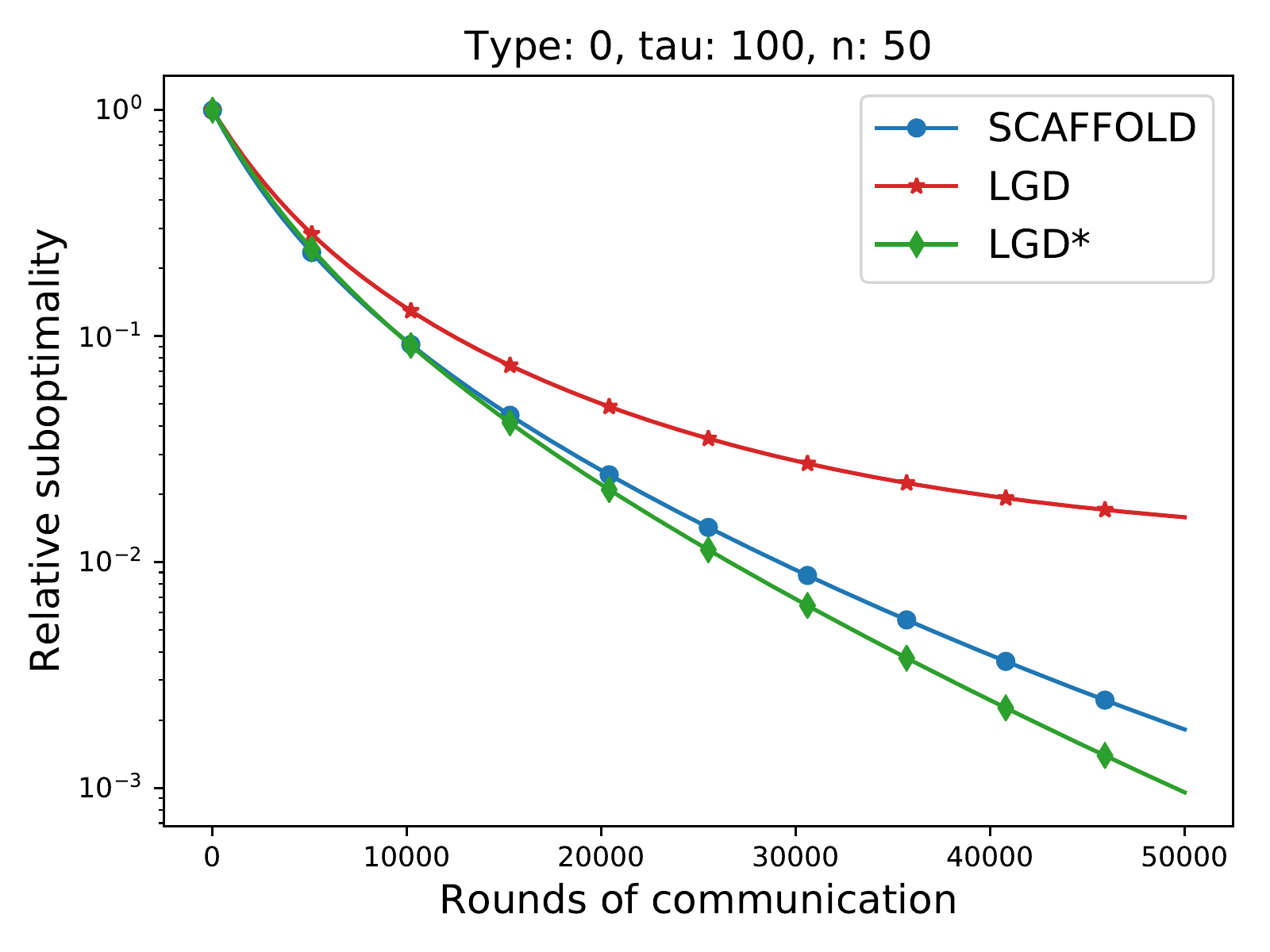}
        %\caption{ Residual vs. iteration  }\label{fig:bl_ex_flops}
\end{minipage}
\\
\begin{minipage}{0.3\textwidth}
  \centering
\includegraphics[width =  \textwidth ]{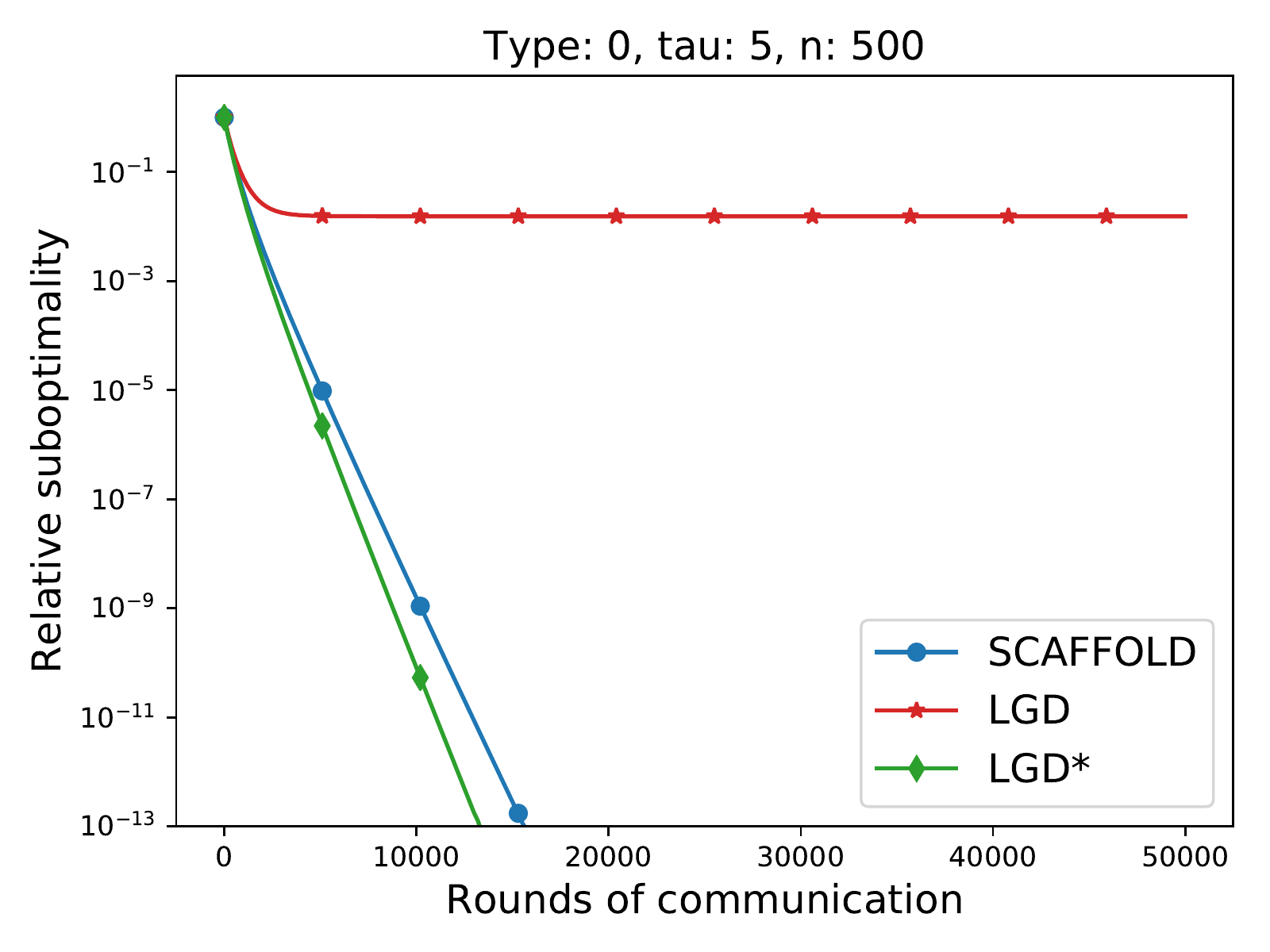}
        %\caption{ Residual vs. iteration  }\label{fig:bl_ex_flops}
\end{minipage}
\begin{minipage}{0.3\textwidth}
  \centering
\includegraphics[width =  \textwidth ]{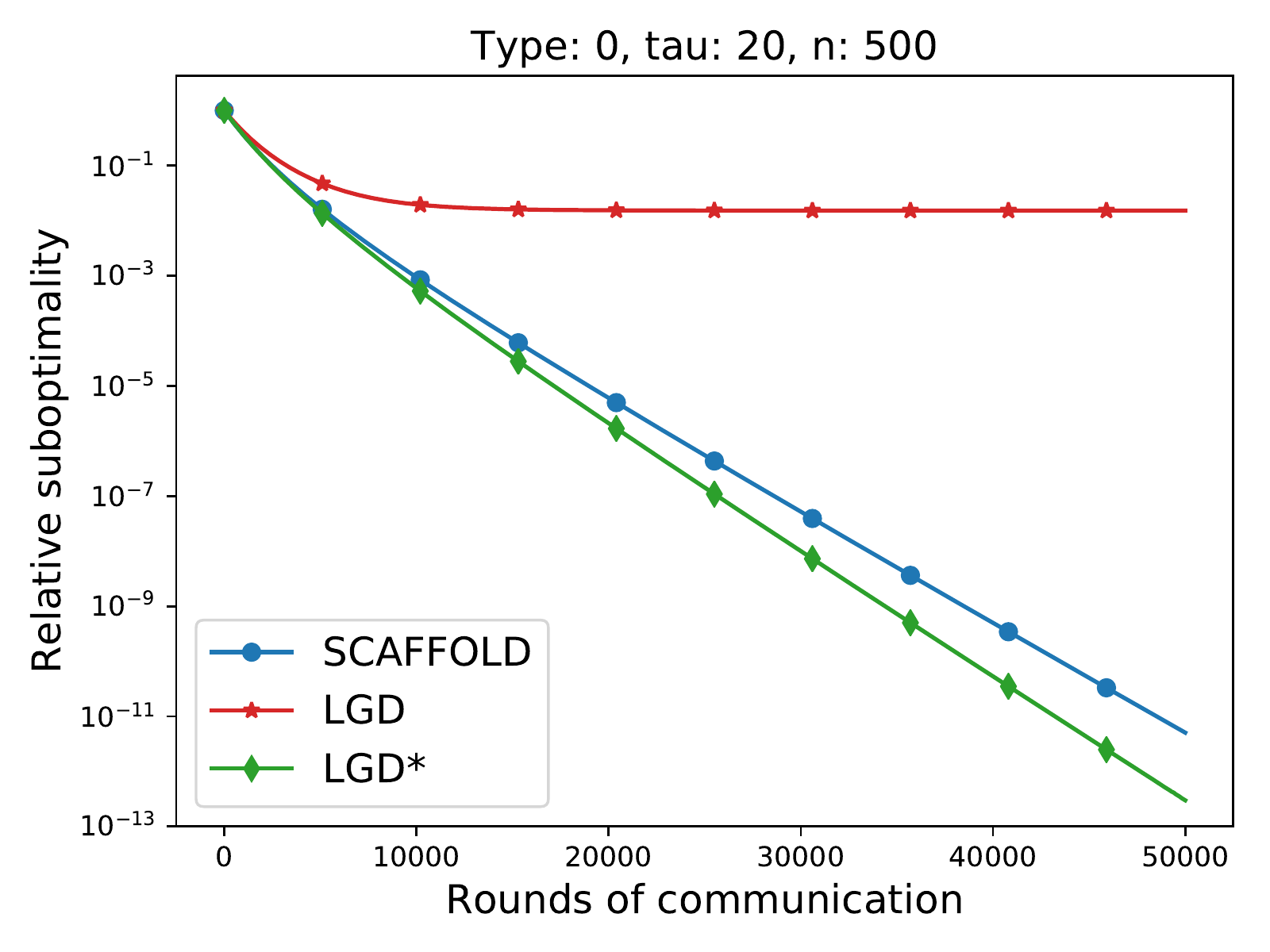}
        %\caption{ Residual vs. iteration  }\label{fig:bl_ex_flops}
\end{minipage}
\begin{minipage}{0.3\textwidth}
  \centering
\includegraphics[width =  \textwidth ]{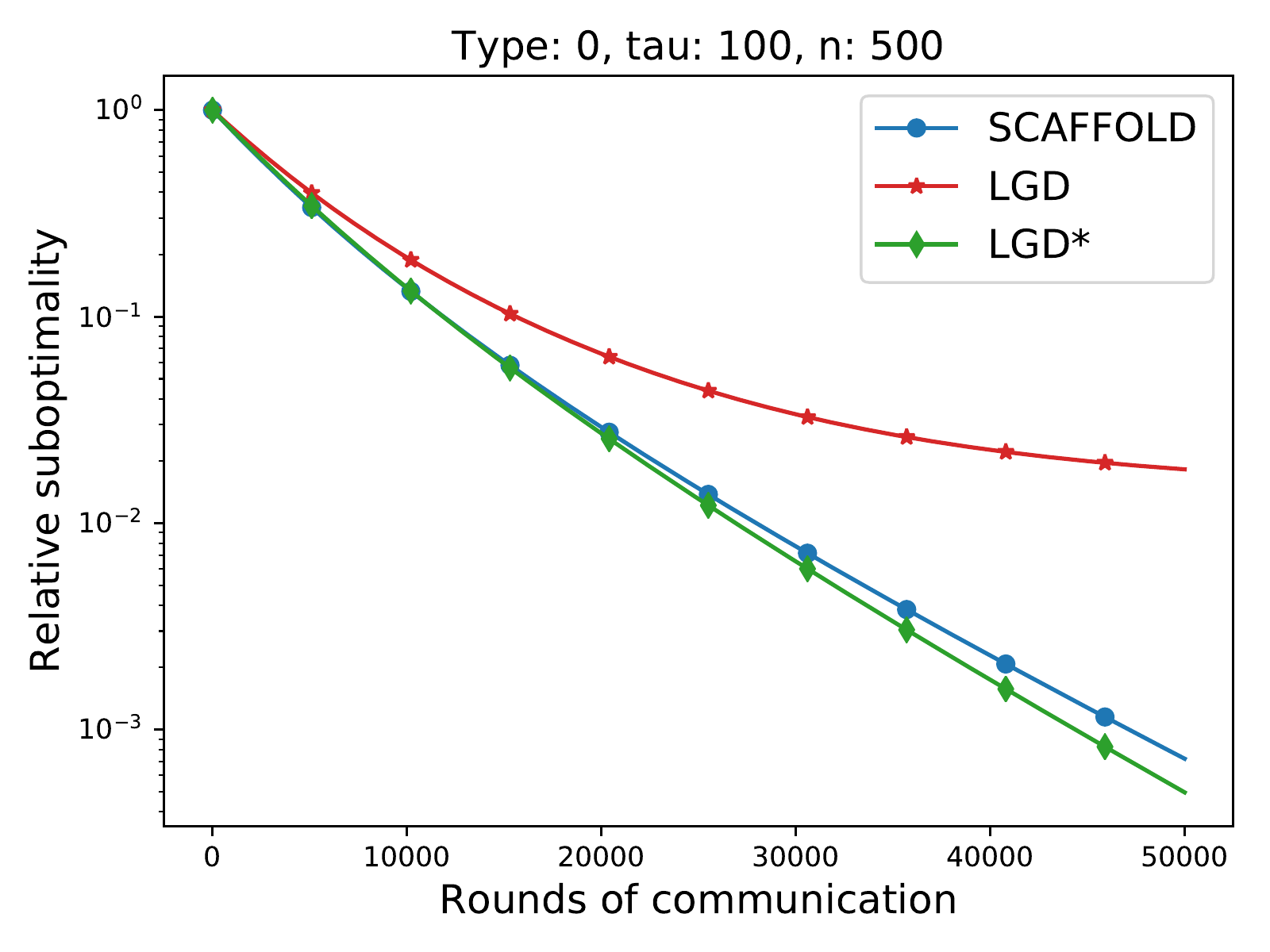}
        %\caption{ Residual vs. iteration  }\label{fig:bl_ex_flops}
\end{minipage}
\caption{Comparison of the following noiseless algorithms  {\tt Local-SGD} ({\tt LGD}, Algorithm~\ref{alg:local_sgd} with no local noise) and {\tt SCAFFOLD}~\cite{karimireddy2019scaffold} (Algorithm~\ref{alg:l_local_svrg} without ``Loopless'') and {\tt S*-Local-SGD} ({\tt LGD*}, Algorithm~\ref{alg:local_sgd_star}). Quadratic minimization, problem type 0 (see Table~\ref{tbl:instances}). }
\label{fig:artif1}
\end{figure}

\begin{figure}[!h]
\centering
\begin{minipage}{0.3\textwidth}
  \centering
\includegraphics[width =  \textwidth ]{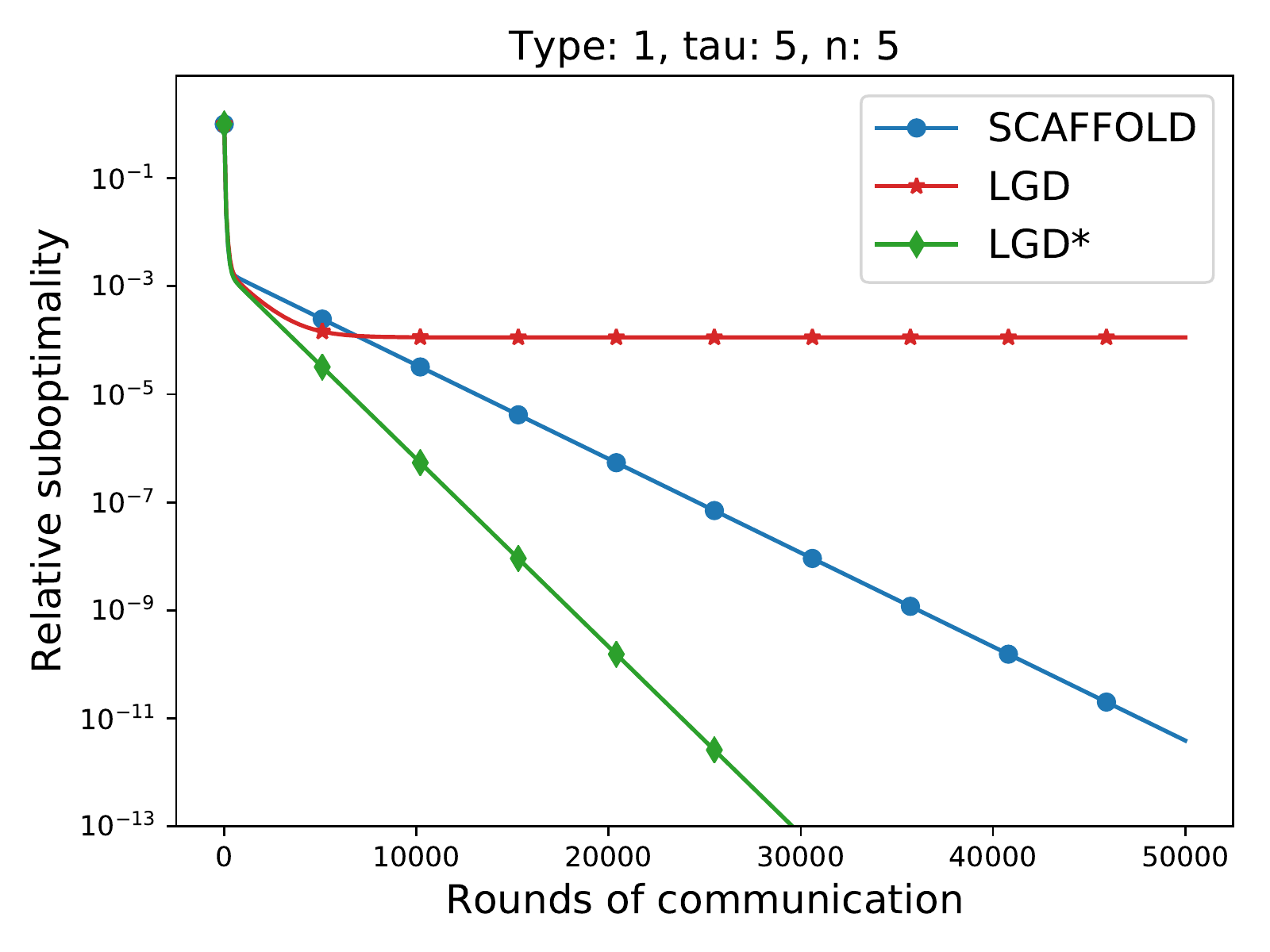}
        %\caption{ Residual vs. iteration  }\label{fig:bl_ex_flops}
\end{minipage}
\begin{minipage}{0.3\textwidth}
  \centering
\includegraphics[width =  \textwidth ]{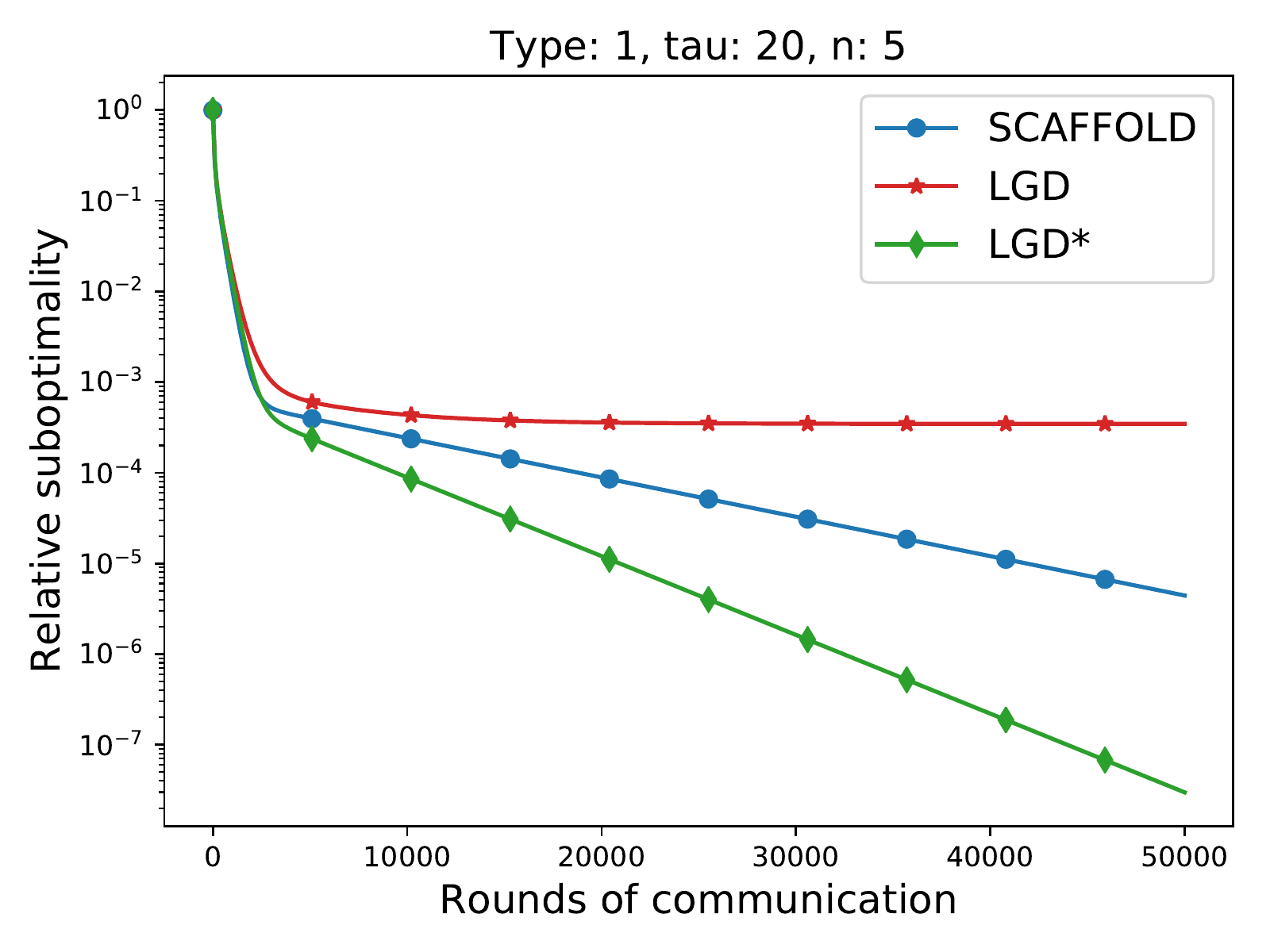}
        %\caption{ Residual vs. iteration  }\label{fig:bl_ex_flops}
\end{minipage}
\begin{minipage}{0.3\textwidth}
  \centering
\includegraphics[width =  \textwidth ]{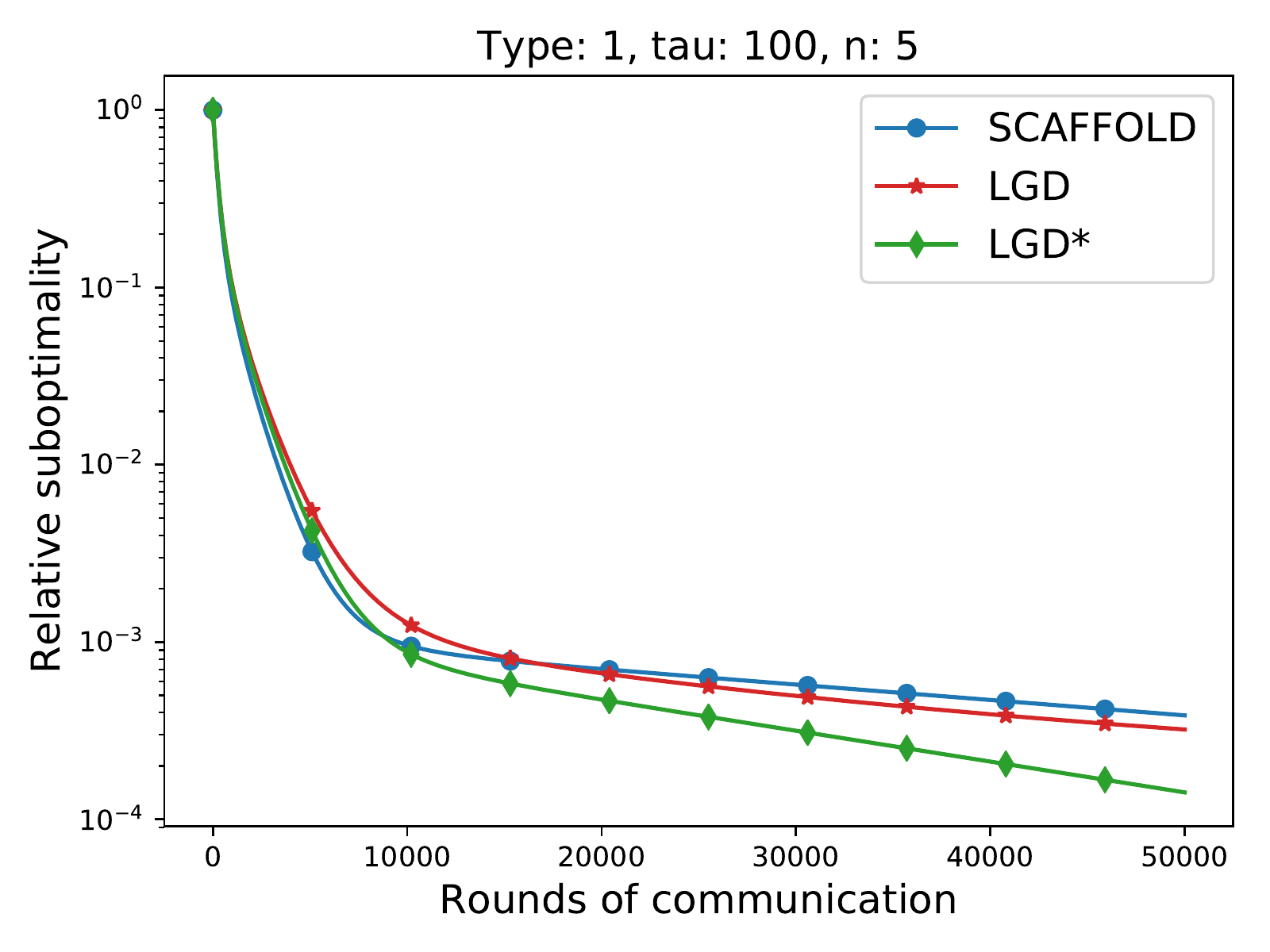}
        %\caption{ Residual vs. iteration  }\label{fig:bl_ex_flops}
\end{minipage}
\\
\begin{minipage}{0.3\textwidth}
  \centering
\includegraphics[width =  \textwidth ]{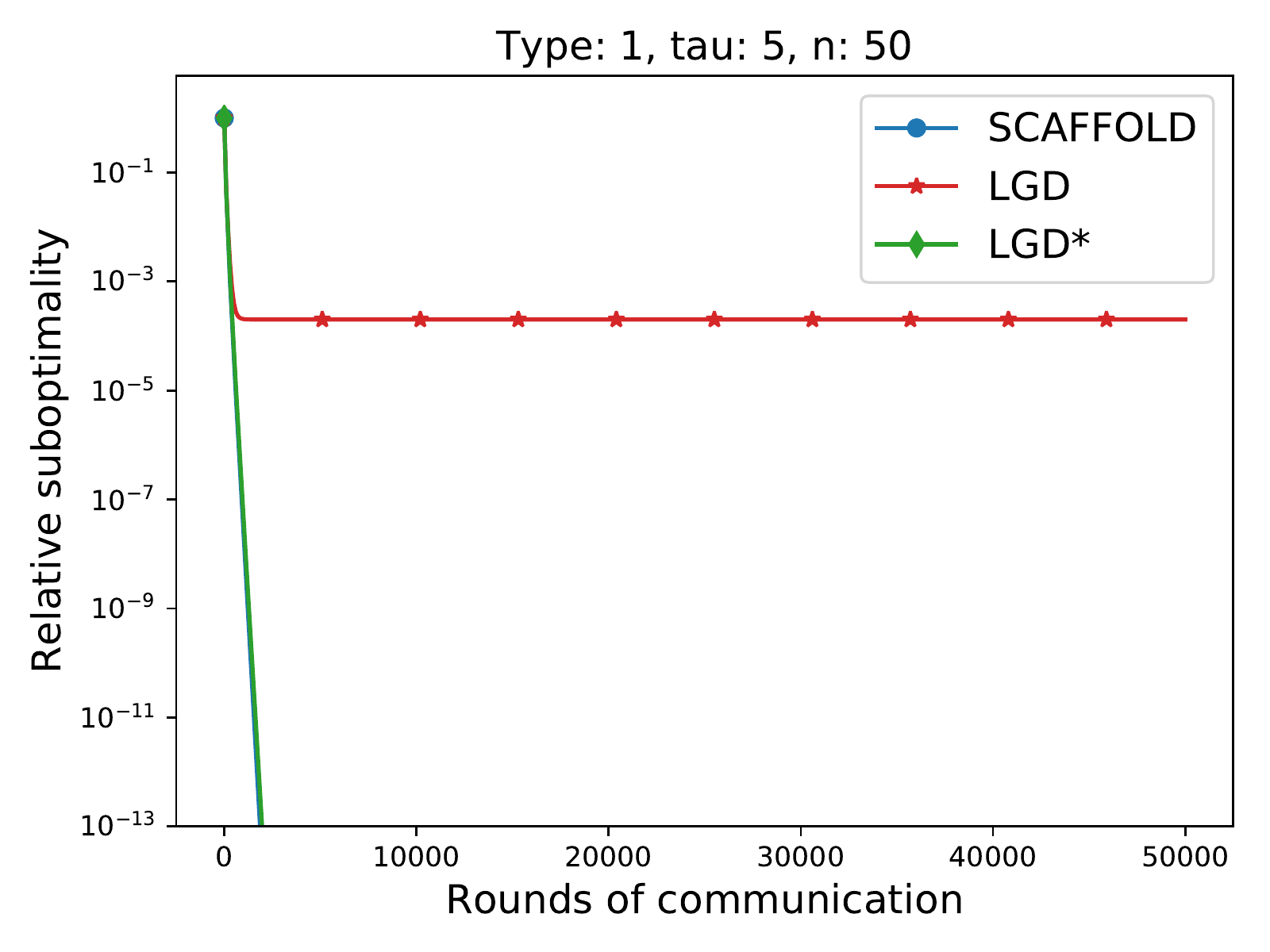}
        %\caption{ Residual vs. iteration  }\label{fig:bl_ex_flops}
\end{minipage}
\begin{minipage}{0.3\textwidth}
  \centering
\includegraphics[width =  \textwidth ]{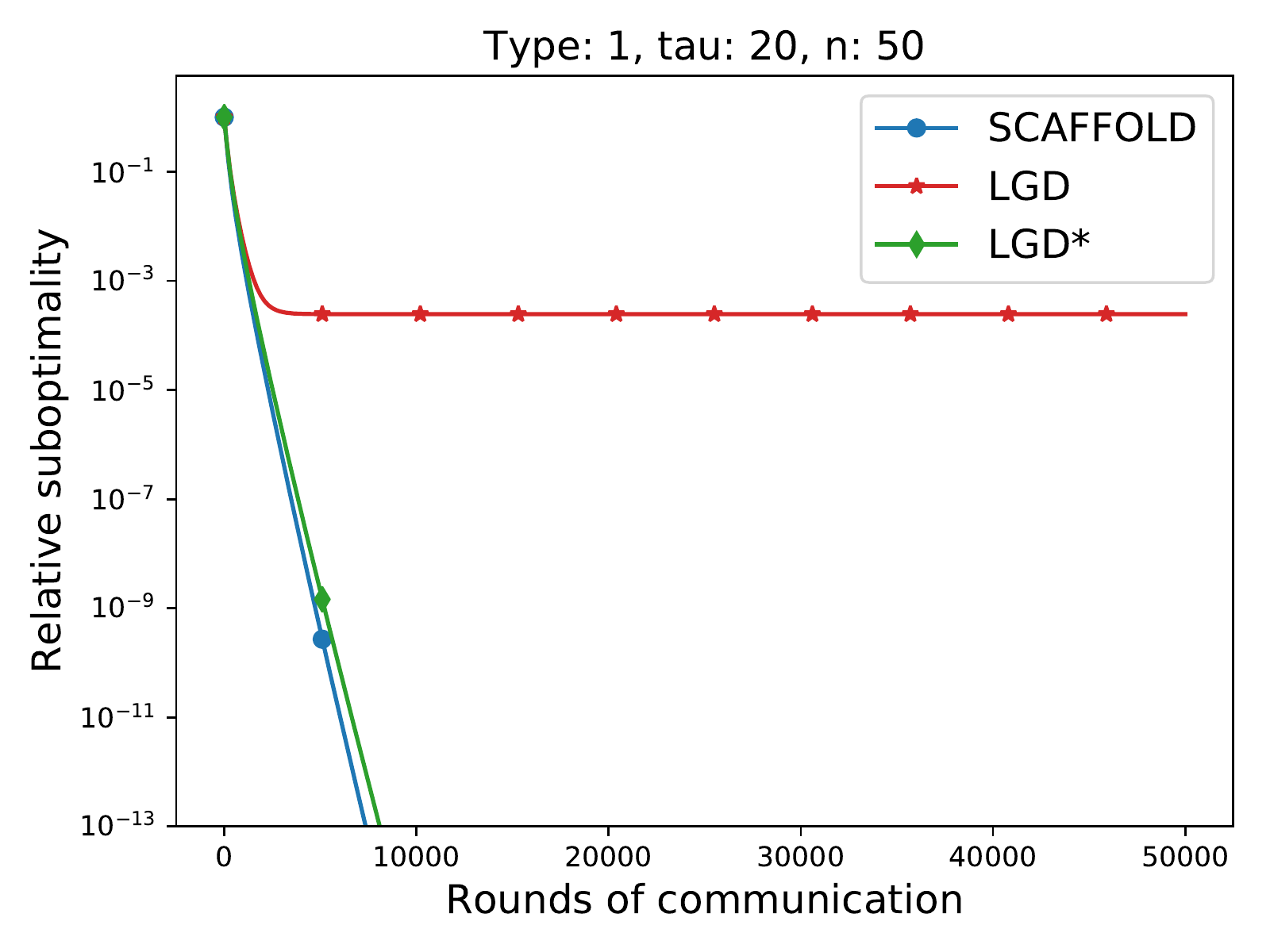}
        %\caption{ Residual vs. iteration  }\label{fig:bl_ex_flops}
\end{minipage}
\begin{minipage}{0.3\textwidth}
  \centering
\includegraphics[width =  \textwidth ]{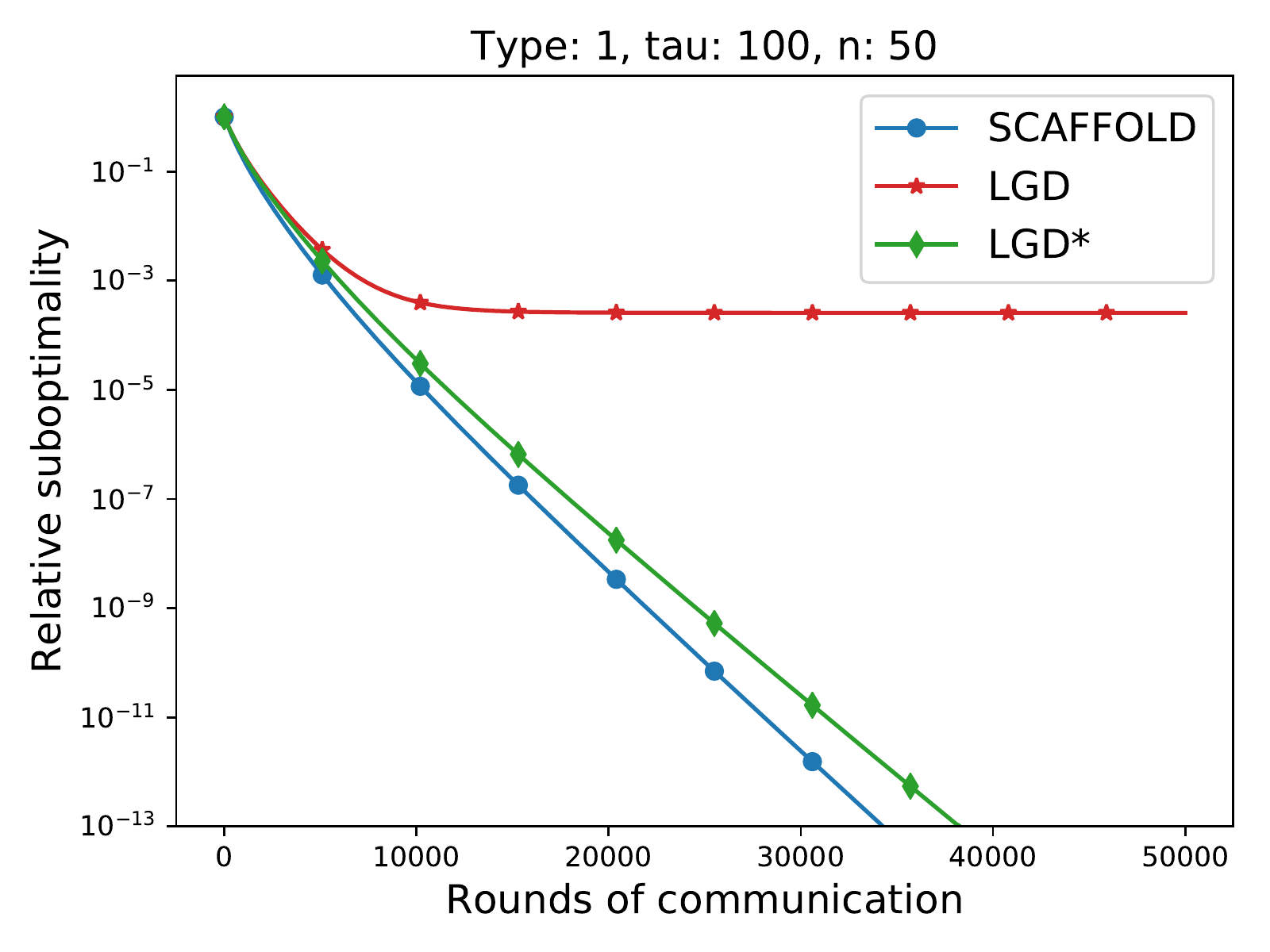}
        %\caption{ Residual vs. iteration  }\label{fig:bl_ex_flops}
\end{minipage}
\\
\begin{minipage}{0.3\textwidth}
  \centering
\includegraphics[width =  \textwidth ]{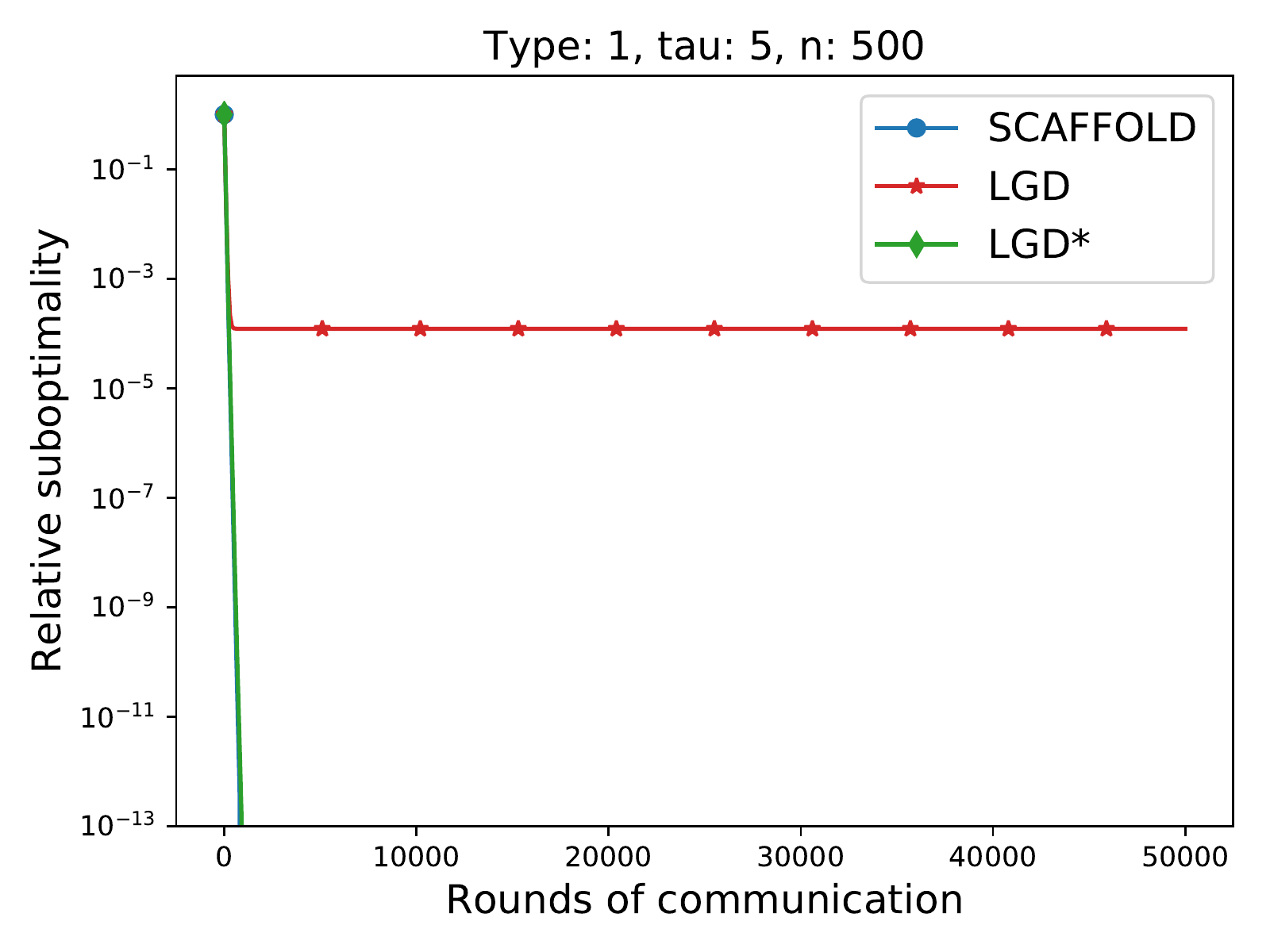}
        %\caption{ Residual vs. iteration  }\label{fig:bl_ex_flops}
\end{minipage}
\begin{minipage}{0.3\textwidth}
  \centering
\includegraphics[width =  \textwidth ]{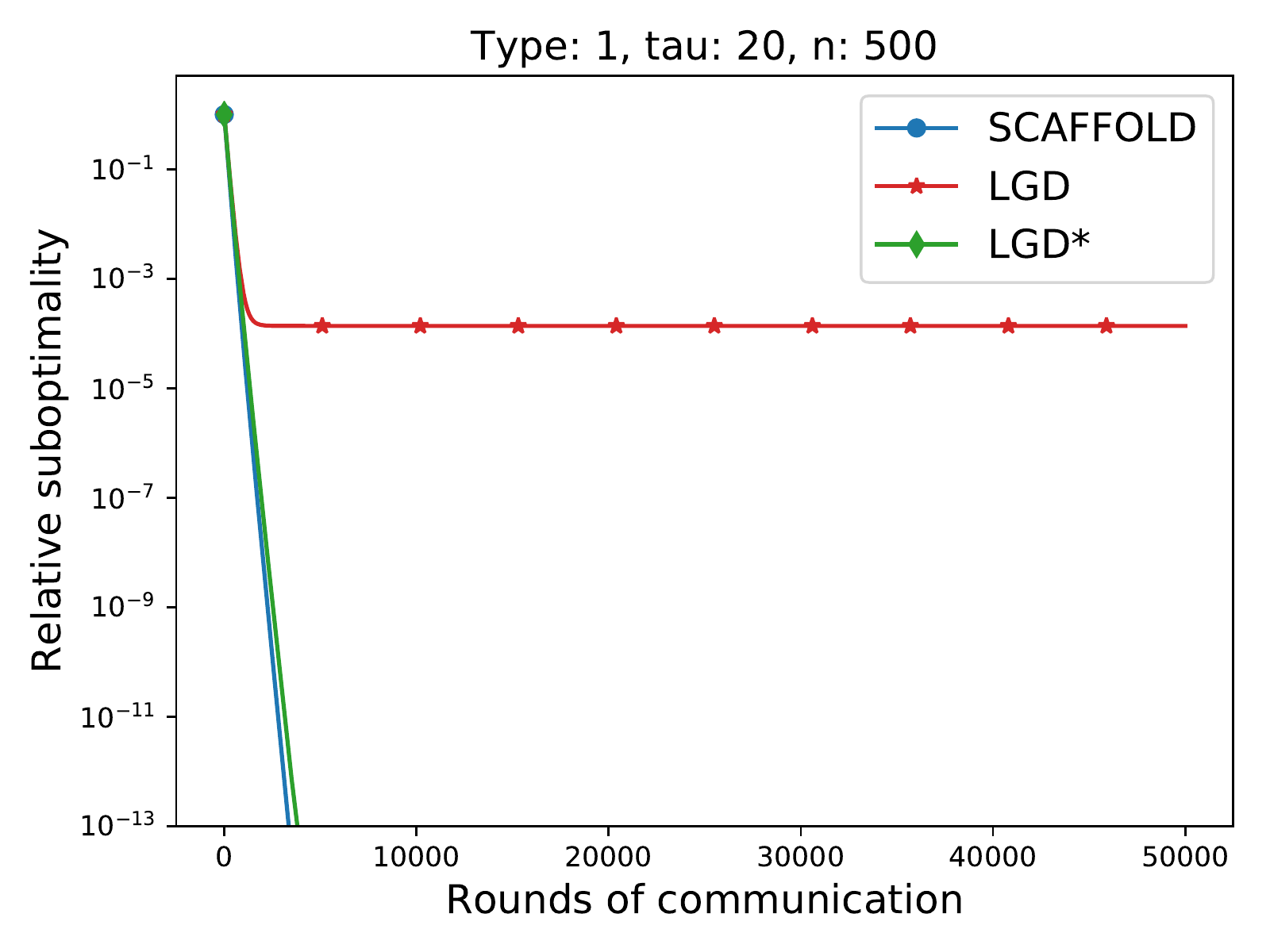}
        %\caption{ Residual vs. iteration  }\label{fig:bl_ex_flops}
\end{minipage}
\begin{minipage}{0.3\textwidth}
  \centering
\includegraphics[width =  \textwidth ]{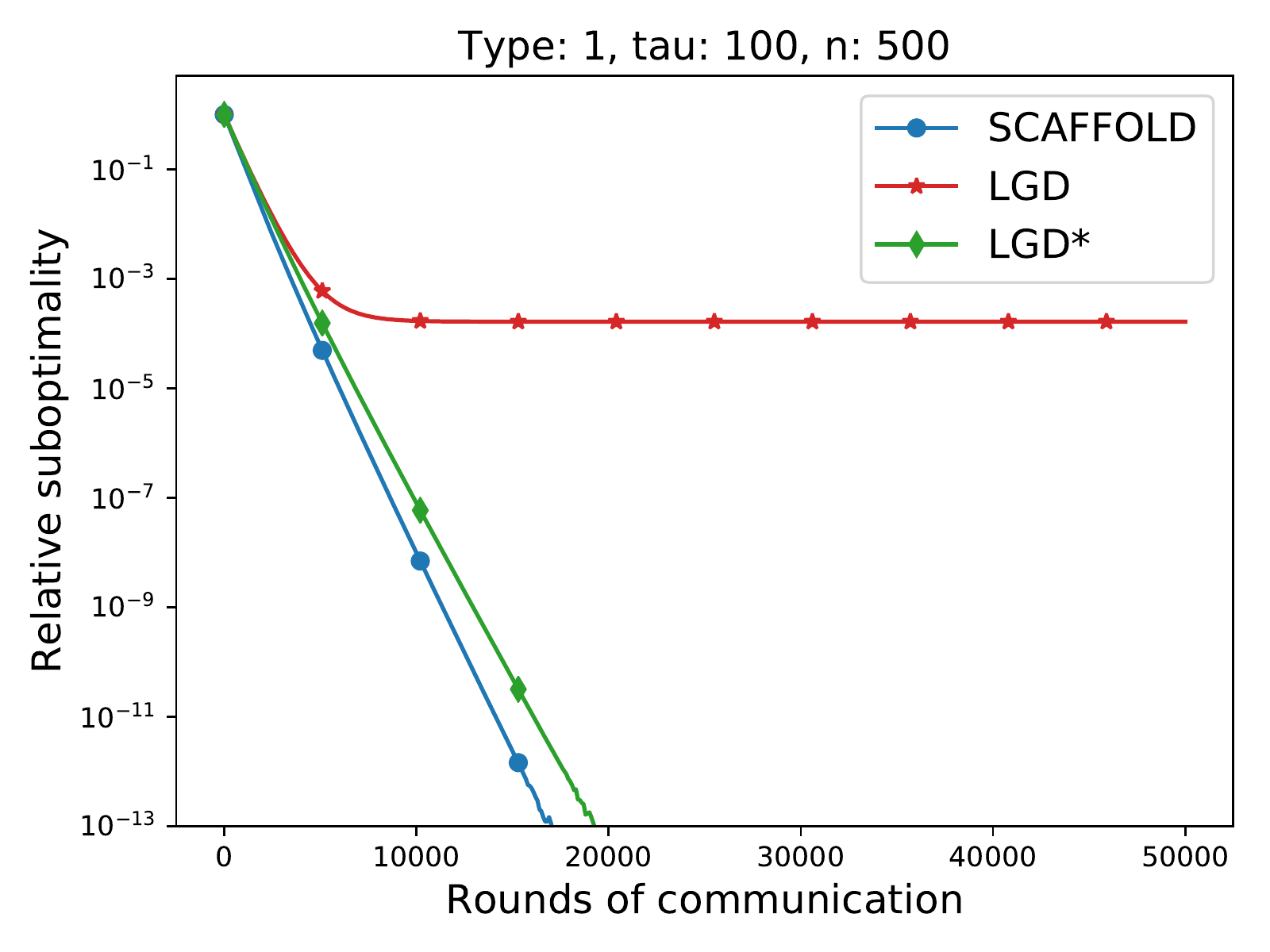}
        %\caption{ Residual vs. iteration  }\label{fig:bl_ex_flops}
\end{minipage}
\caption{Comparison of the following noiseless algorithms  {\tt Local-SGD} ({\tt LGD}, Algorithm~\ref{alg:local_sgd} with no local noise) and {\tt SCAFFOLD}~\cite{karimireddy2019scaffold} (Algorithm~\ref{alg:l_local_svrg} without ``Loopless'') and {\tt S*-Local-SGD} ({\tt LGD*}, Algorithm~\ref{alg:local_sgd_star}). Quadratic minimization, problem type 1 (see Table~\ref{tbl:instances}). }
\label{fig:artif2}
\end{figure}

\begin{figure}[!h]
\centering
\begin{minipage}{0.3\textwidth}
  \centering
\includegraphics[width =  \textwidth ]{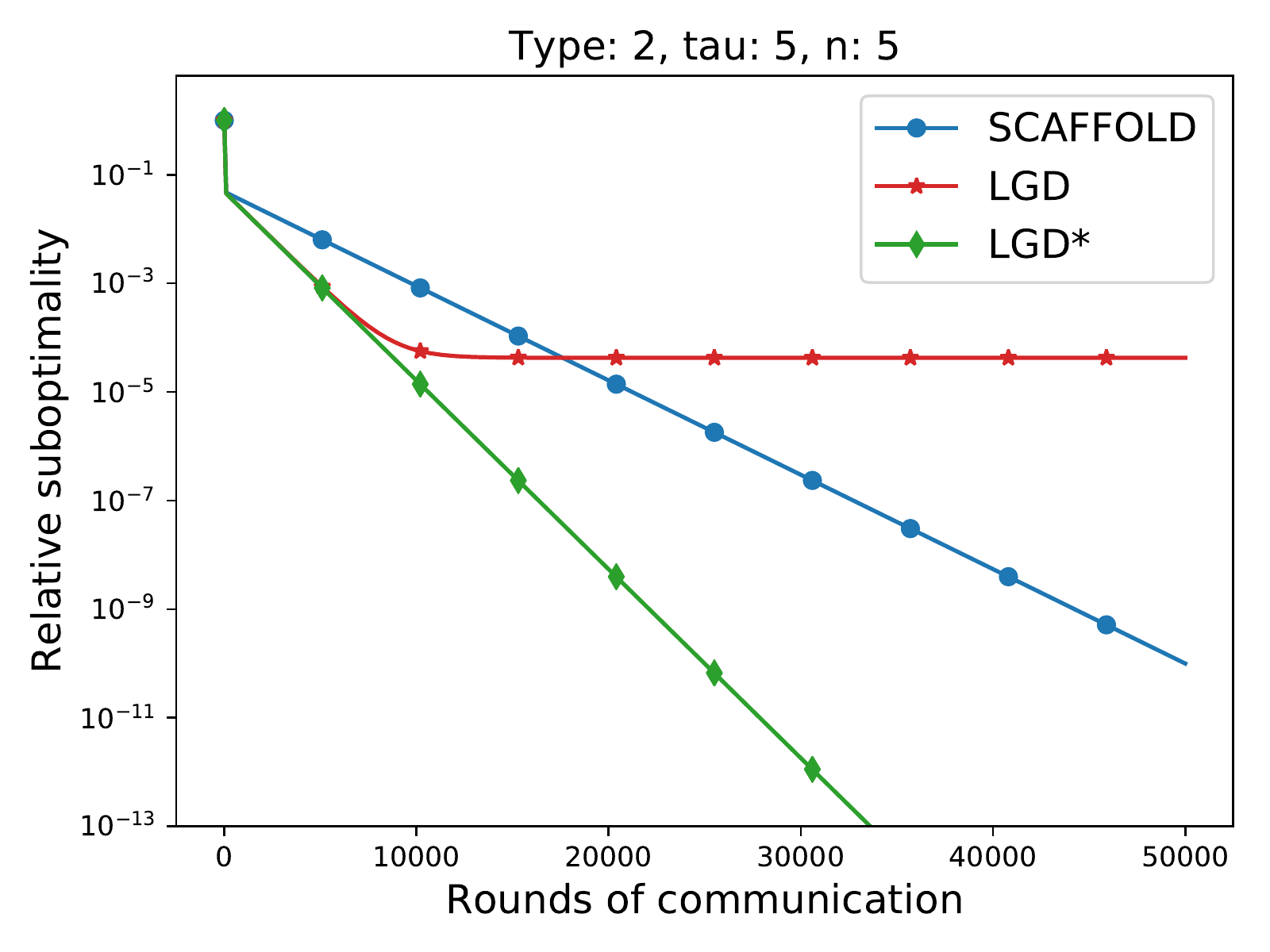}
        %\caption{ Residual vs. iteration  }\label{fig:bl_ex_flops}
\end{minipage}
\begin{minipage}{0.3\textwidth}
  \centering
\includegraphics[width =  \textwidth ]{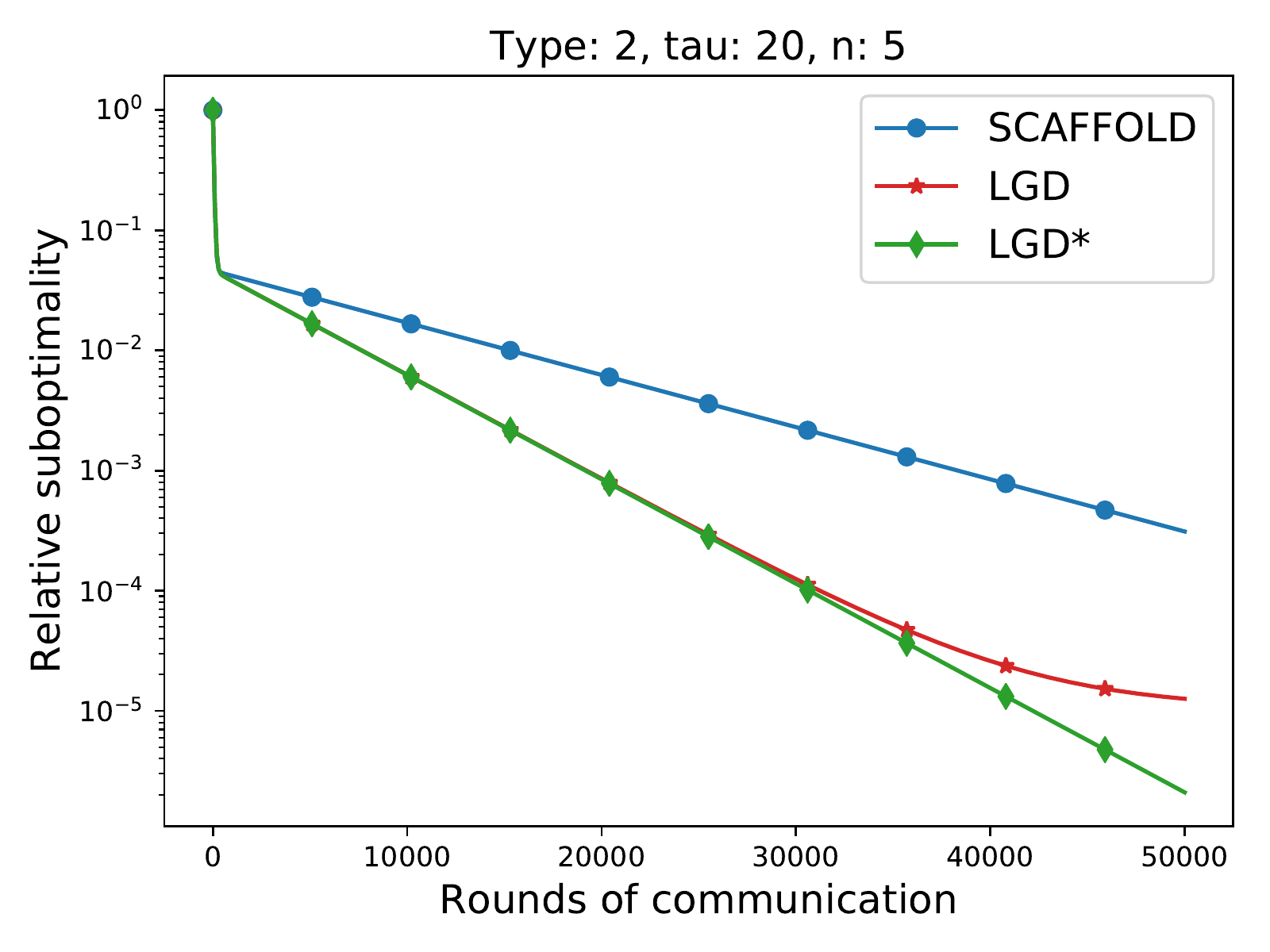}
        %\caption{ Residual vs. iteration  }\label{fig:bl_ex_flops}
\end{minipage}
\begin{minipage}{0.3\textwidth}
  \centering
\includegraphics[width =  \textwidth ]{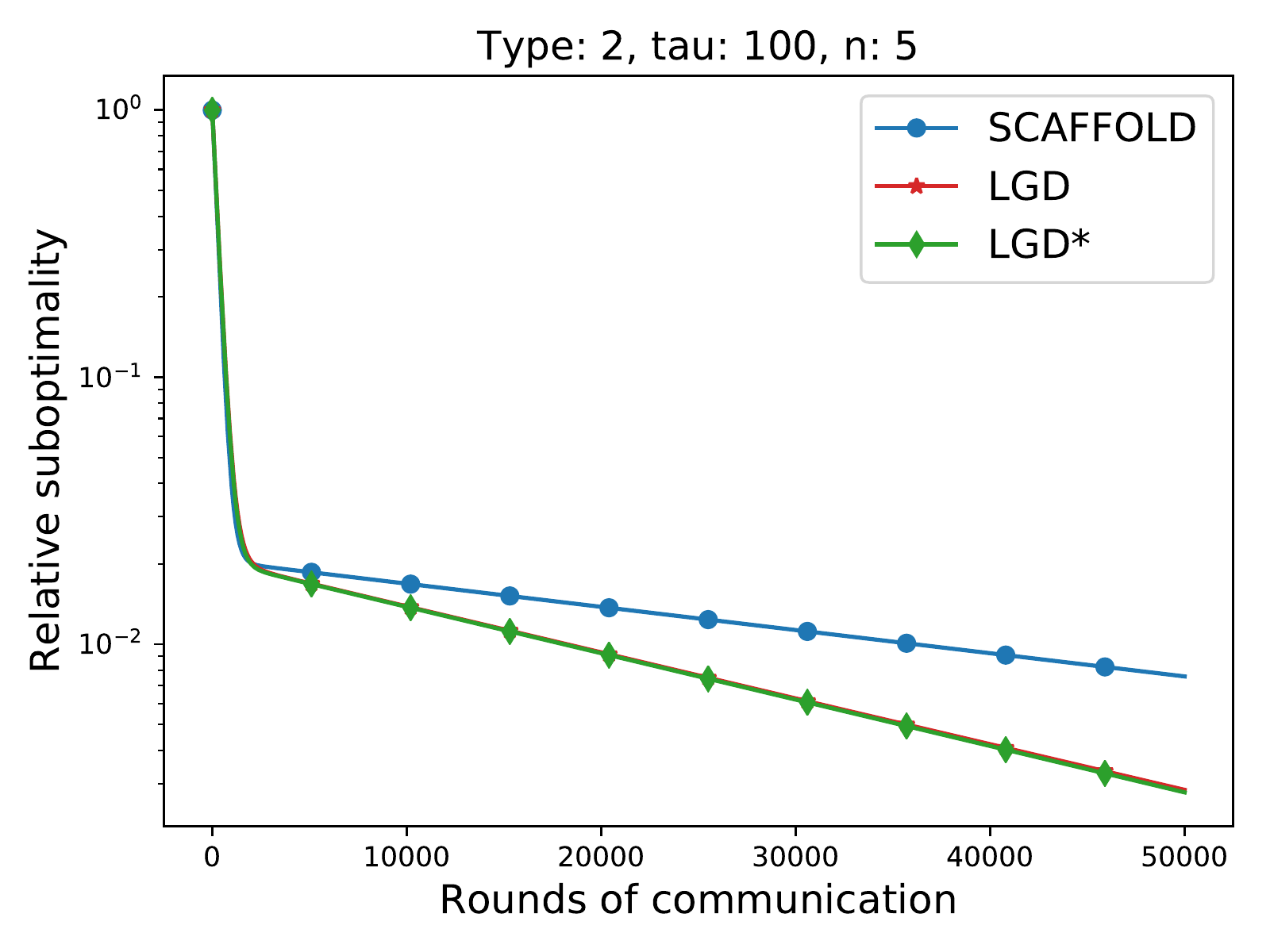}
        %\caption{ Residual vs. iteration  }\label{fig:bl_ex_flops}
\end{minipage}
\\
\begin{minipage}{0.3\textwidth}
  \centering
\includegraphics[width =  \textwidth ]{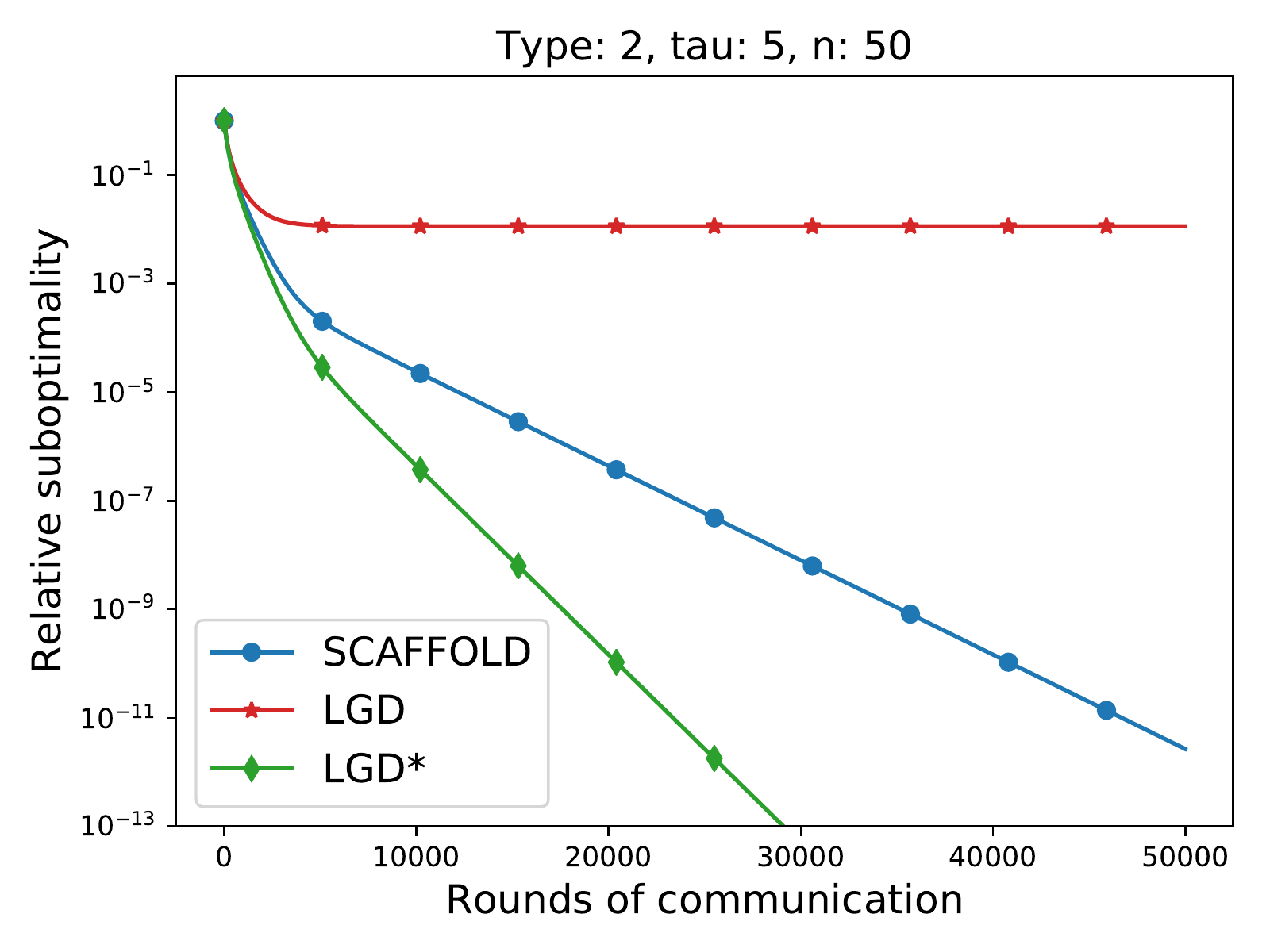}
        %\caption{ Residual vs. iteration  }\label{fig:bl_ex_flops}
\end{minipage}
\begin{minipage}{0.3\textwidth}
  \centering
\includegraphics[width =  \textwidth ]{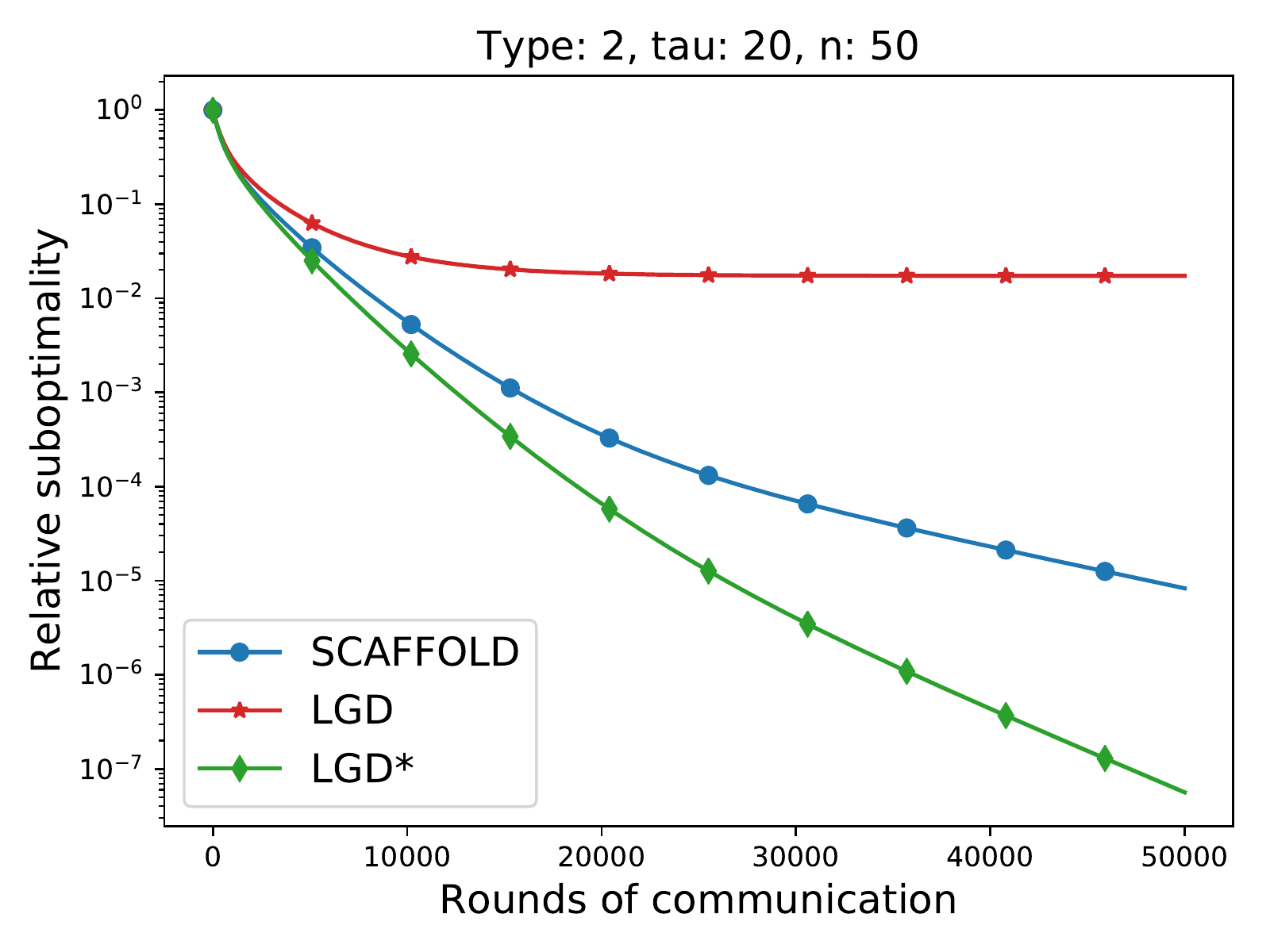}
        %\caption{ Residual vs. iteration  }\label{fig:bl_ex_flops}
\end{minipage}
\begin{minipage}{0.3\textwidth}
  \centering
\includegraphics[width =  \textwidth ]{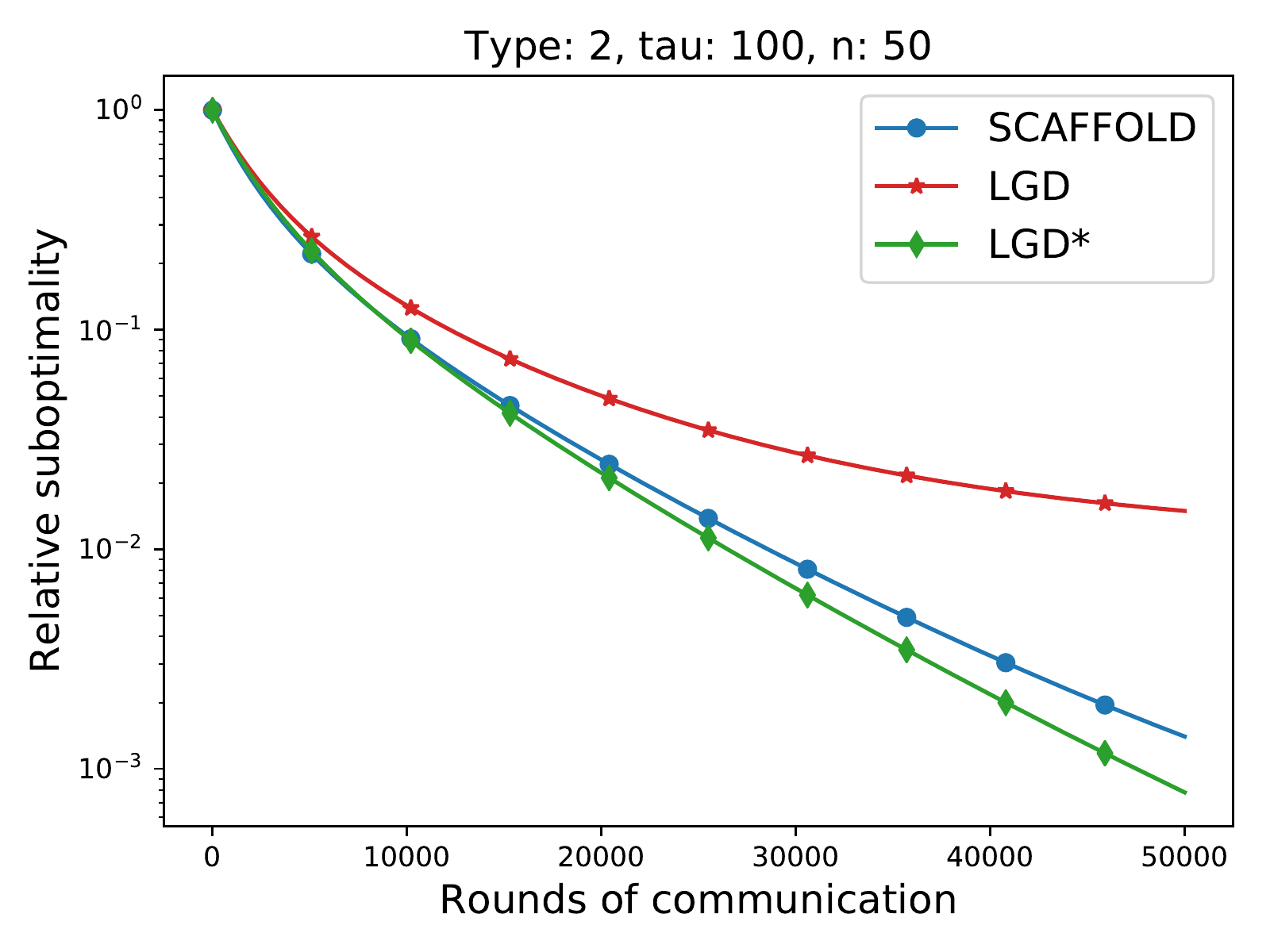}
        %\caption{ Residual vs. iteration  }\label{fig:bl_ex_flops}
\end{minipage}
\\
\begin{minipage}{0.3\textwidth}
  \centering
\includegraphics[width =  \textwidth ]{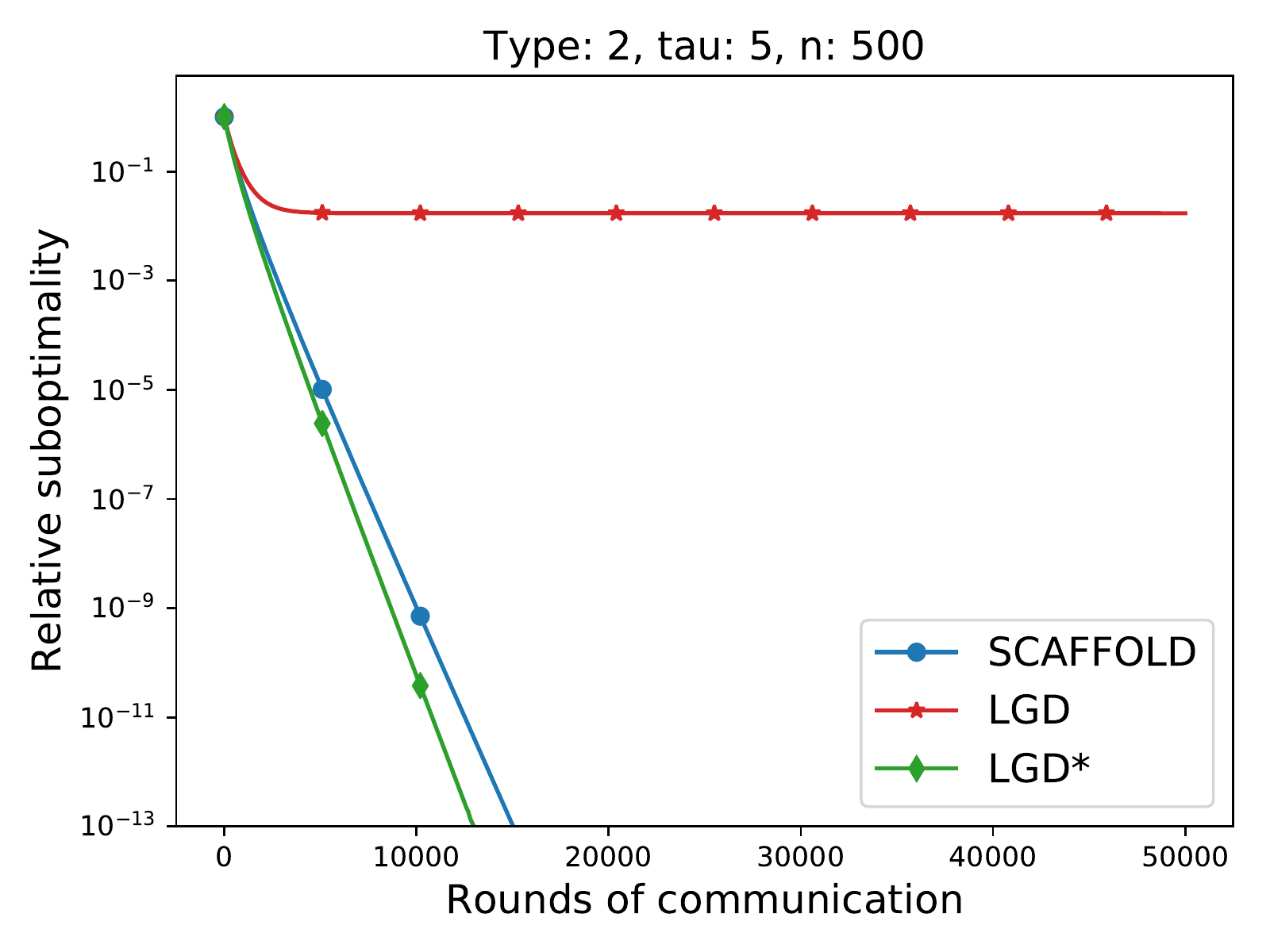}
        %\caption{ Residual vs. iteration  }\label{fig:bl_ex_flops}
\end{minipage}
\begin{minipage}{0.3\textwidth}
  \centering
\includegraphics[width =  \textwidth ]{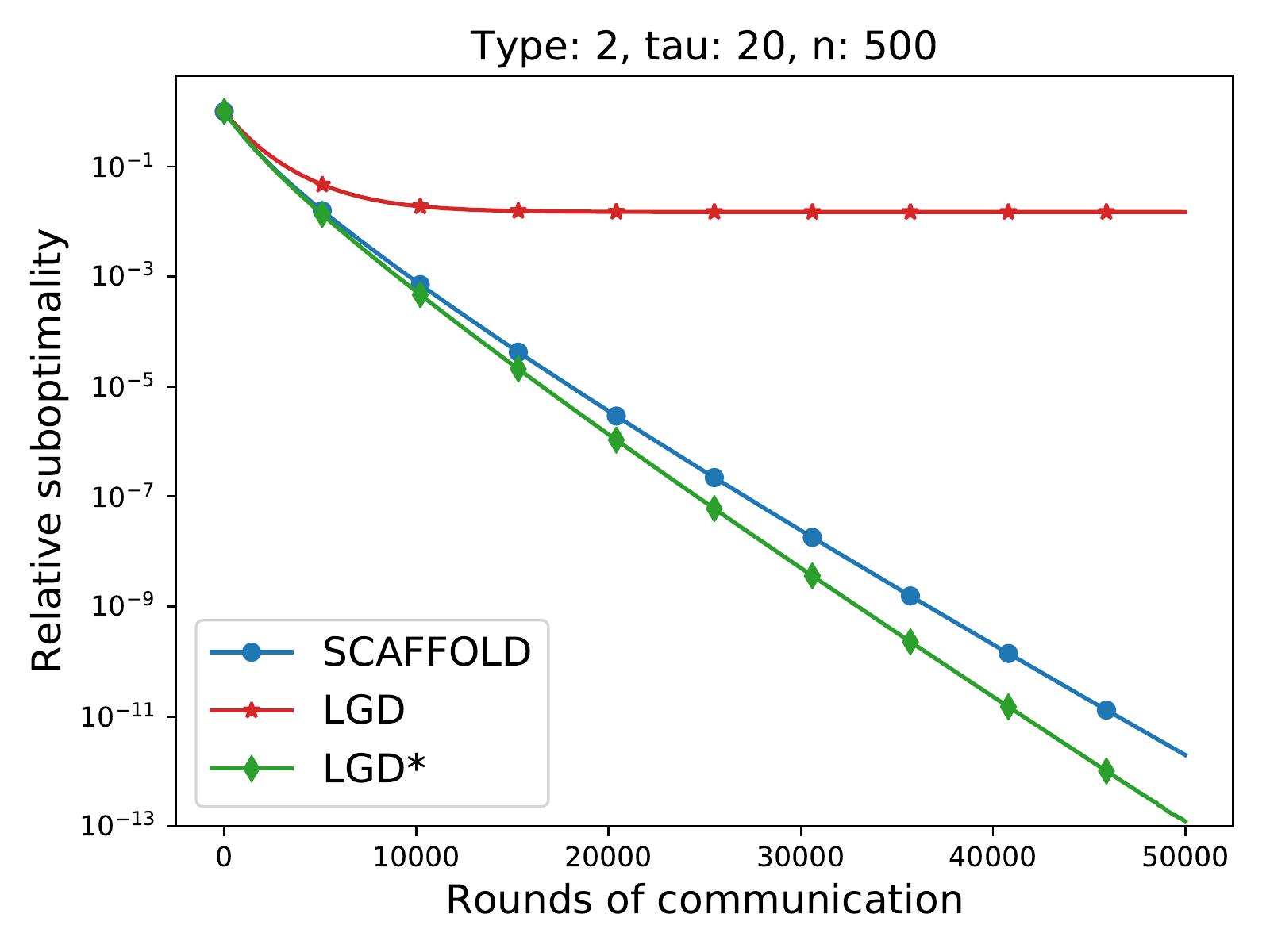}
        %\caption{ Residual vs. iteration  }\label{fig:bl_ex_flops}
\end{minipage}
\begin{minipage}{0.3\textwidth}
  \centering
\includegraphics[width =  \textwidth ]{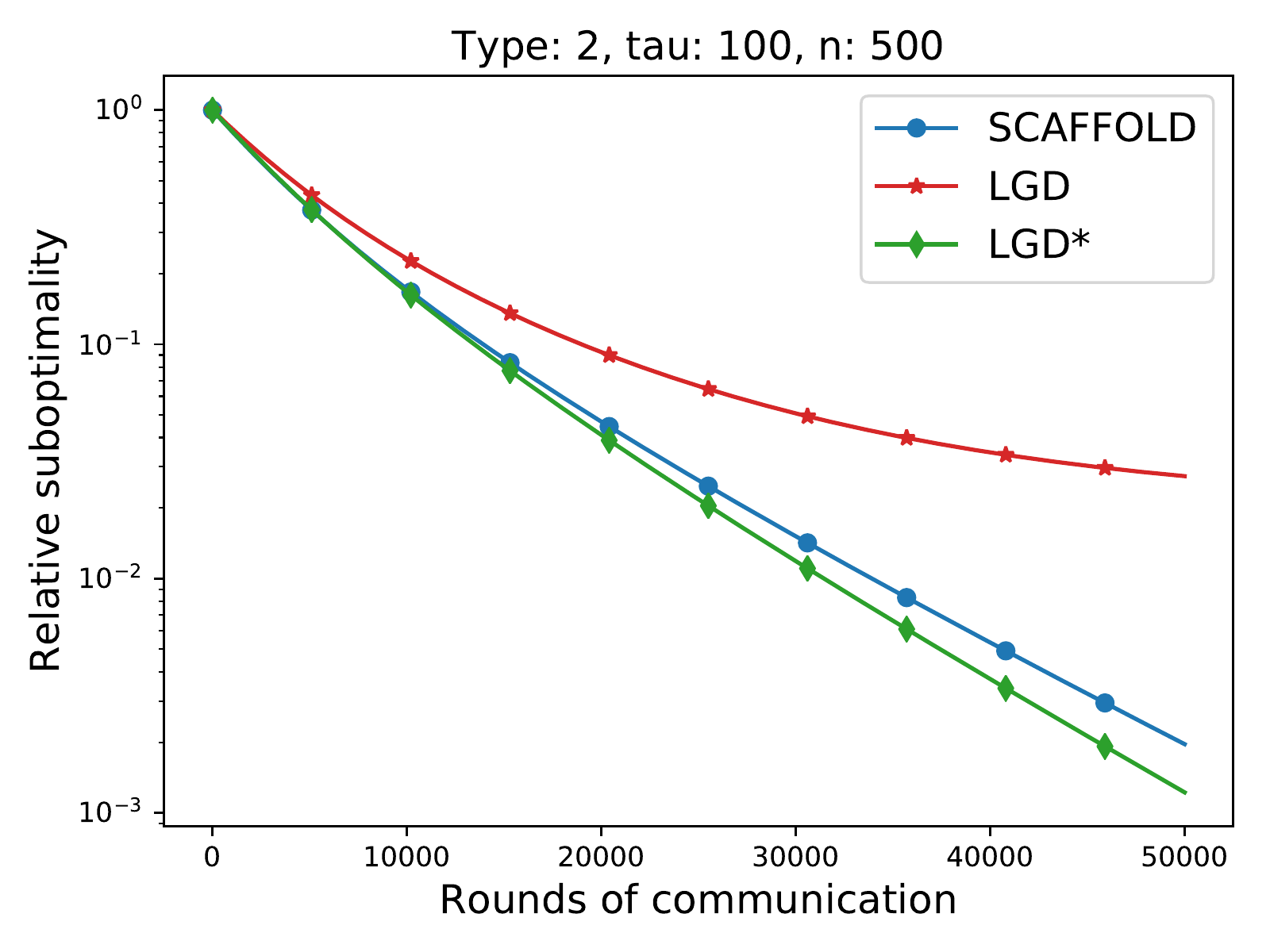}
        %\caption{ Residual vs. iteration  }\label{fig:bl_ex_flops}
\end{minipage}
\caption{Comparison of the following noiseless algorithms  {\tt Local-SGD} ({\tt LGD}, Algorithm~\ref{alg:local_sgd} with no local noise) and {\tt SCAFFOLD}~\cite{karimireddy2019scaffold} (Algorithm~\ref{alg:l_local_svrg} without ``Loopless'') and {\tt S*-Local-SGD} ({\tt LGD*}, Algorithm~\ref{alg:local_sgd_star}). Quadratic minimization, problem type 2 (see Table~\ref{tbl:instances}). }
\label{fig:artif3}
\end{figure}

\begin{figure}[!h]
\centering
\begin{minipage}{0.3\textwidth}
  \centering
\includegraphics[width =  \textwidth ]{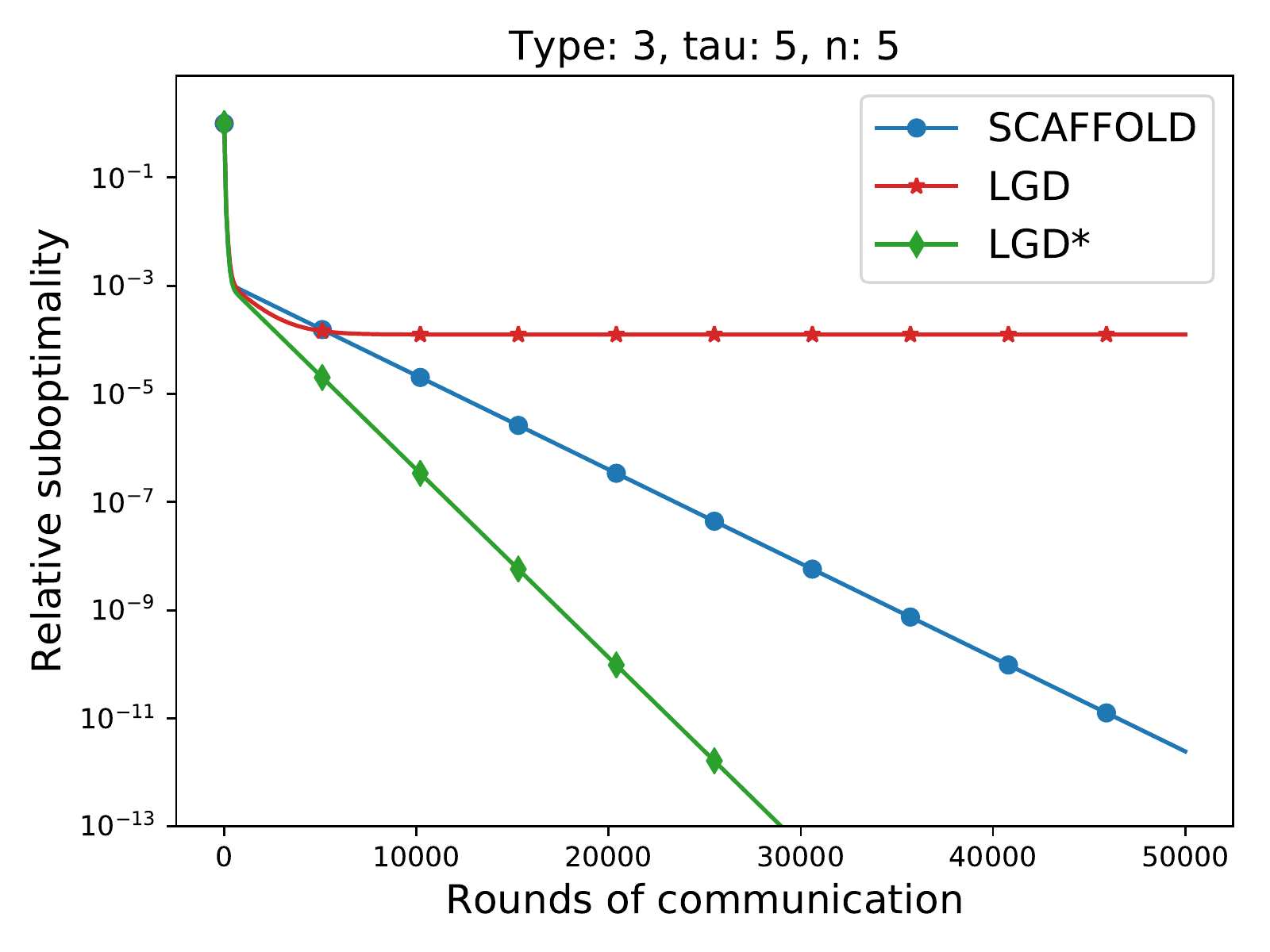}
        %\caption{ Residual vs. iteration  }\label{fig:bl_ex_flops}
\end{minipage}
\begin{minipage}{0.3\textwidth}
  \centering
\includegraphics[width =  \textwidth ]{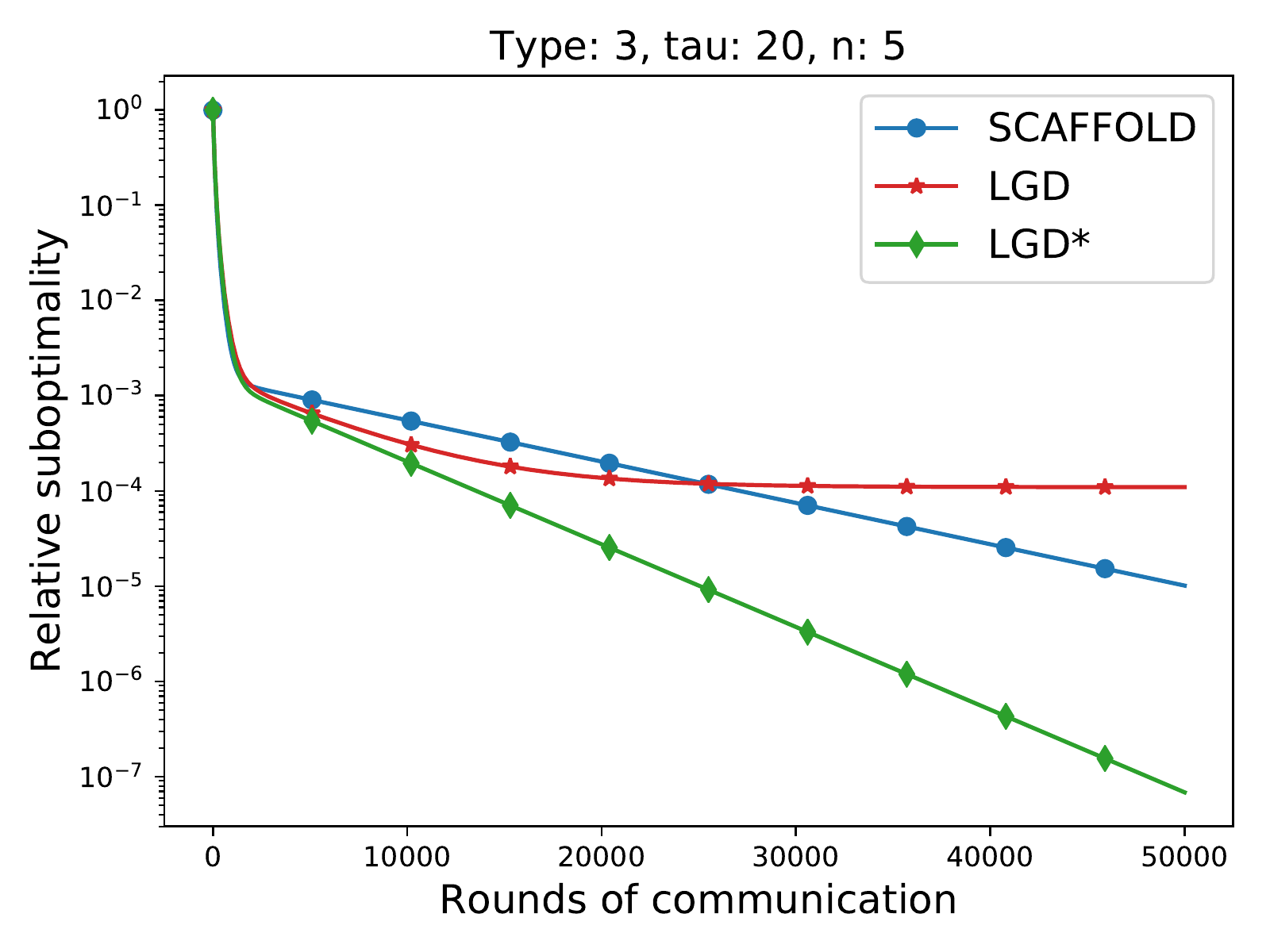}
        %\caption{ Residual vs. iteration  }\label{fig:bl_ex_flops}
\end{minipage}
\begin{minipage}{0.3\textwidth}
  \centering
\includegraphics[width =  \textwidth ]{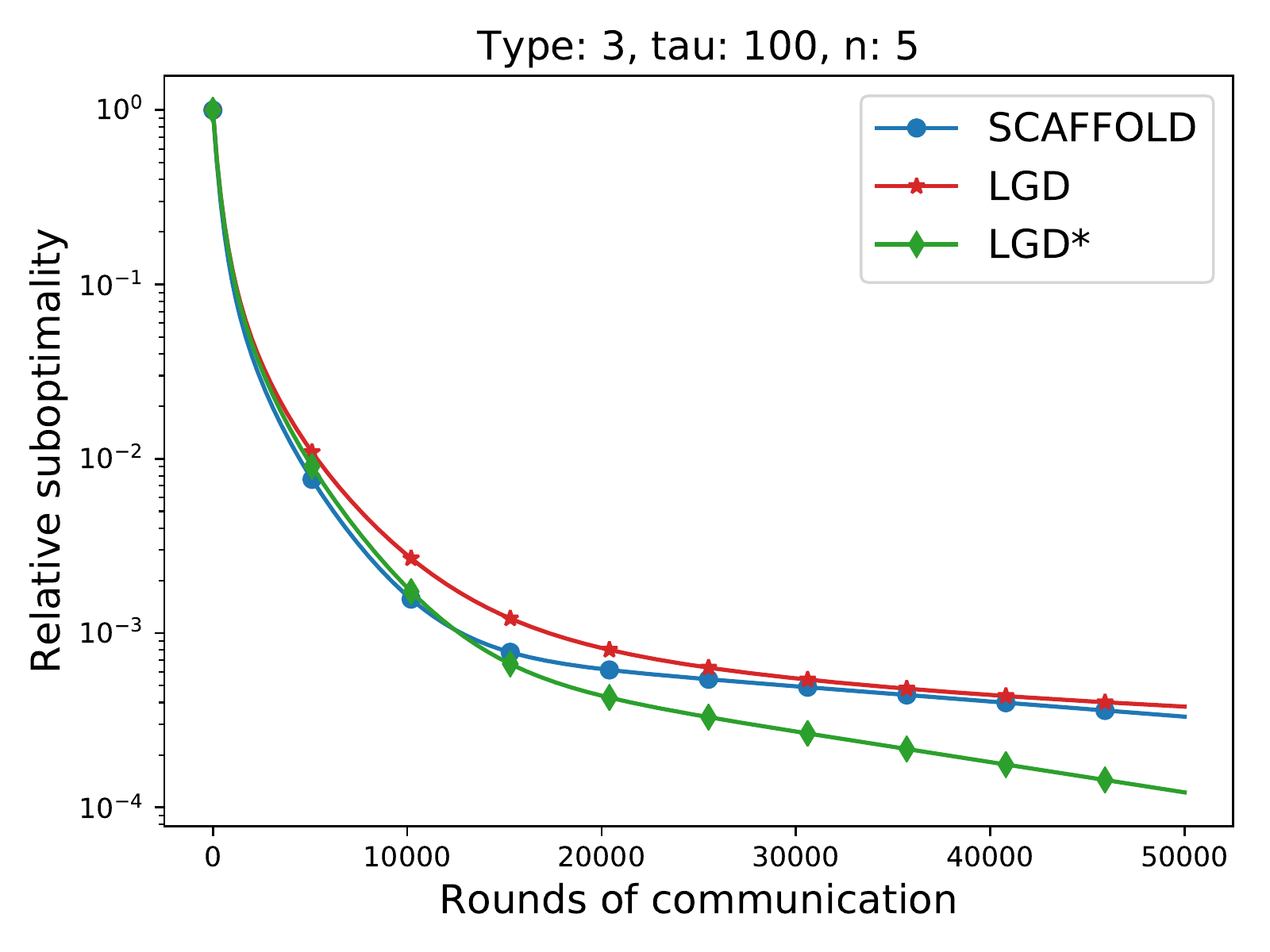}
        %\caption{ Residual vs. iteration  }\label{fig:bl_ex_flops}
\end{minipage}
\\
\begin{minipage}{0.3\textwidth}
  \centering
\includegraphics[width =  \textwidth ]{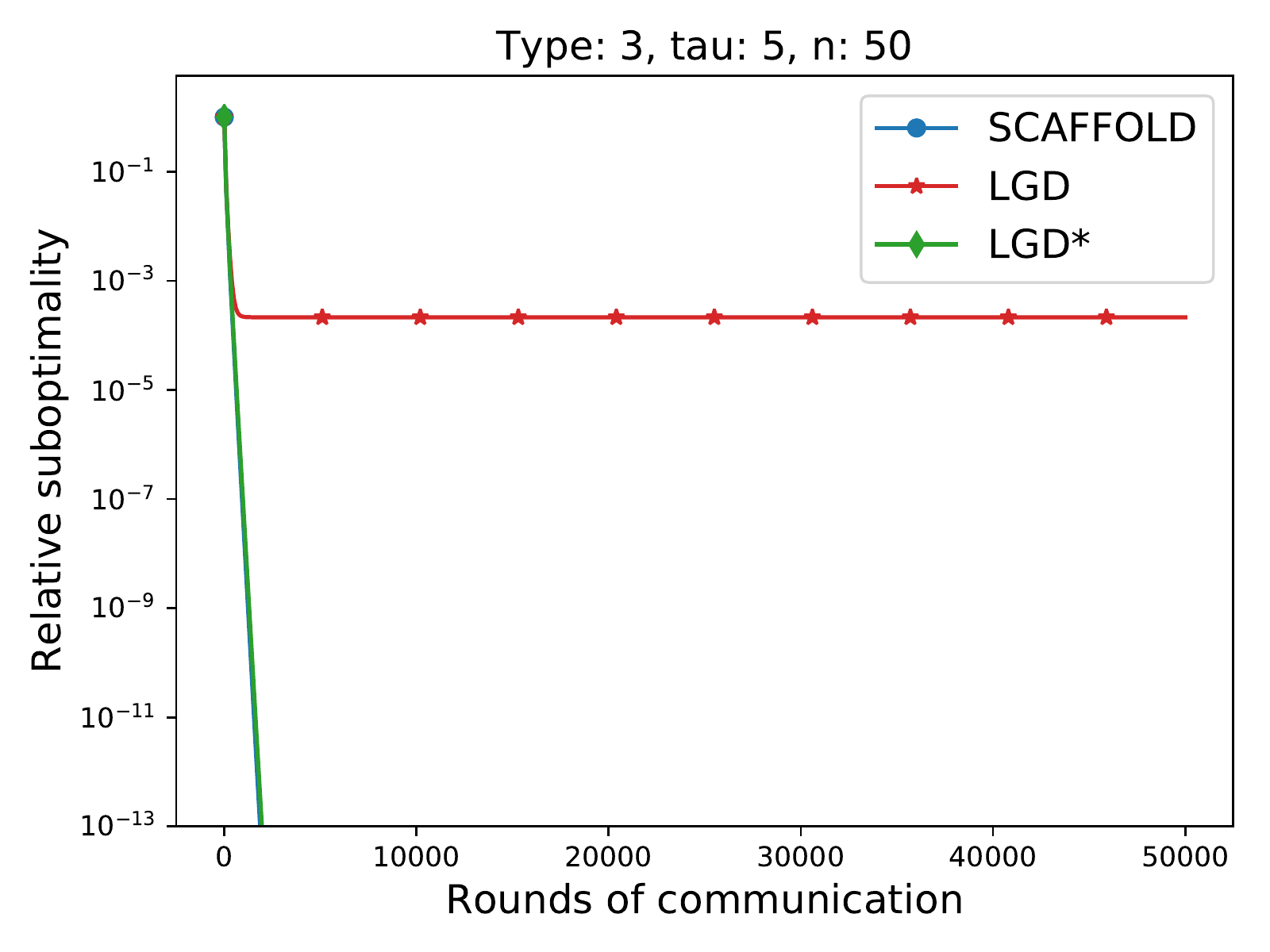}
        %\caption{ Residual vs. iteration  }\label{fig:bl_ex_flops}
\end{minipage}
\begin{minipage}{0.3\textwidth}
  \centering
\includegraphics[width =  \textwidth ]{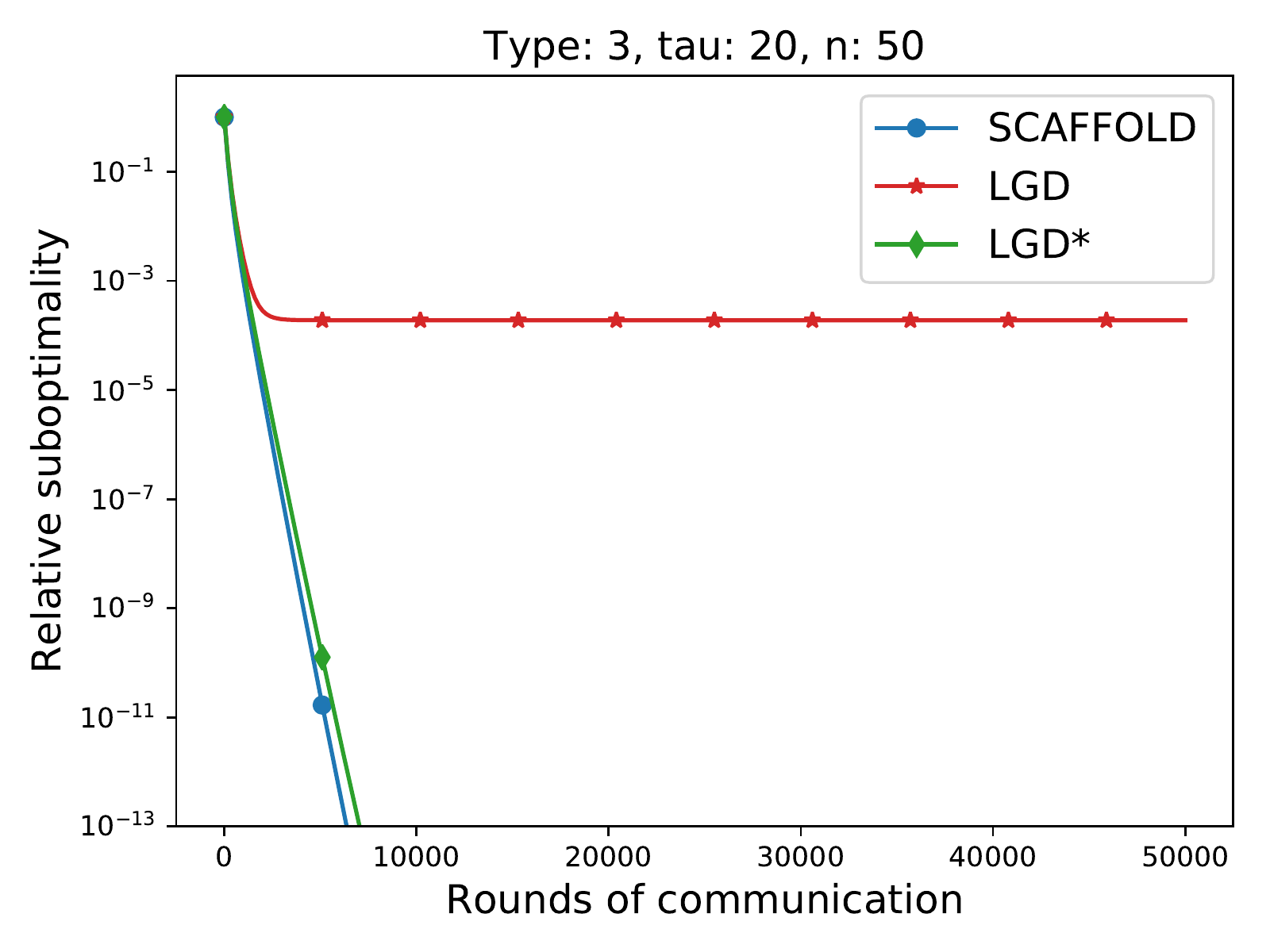}
        %\caption{ Residual vs. iteration  }\label{fig:bl_ex_flops}
\end{minipage}
\begin{minipage}{0.3\textwidth}
  \centering
\includegraphics[width =  \textwidth ]{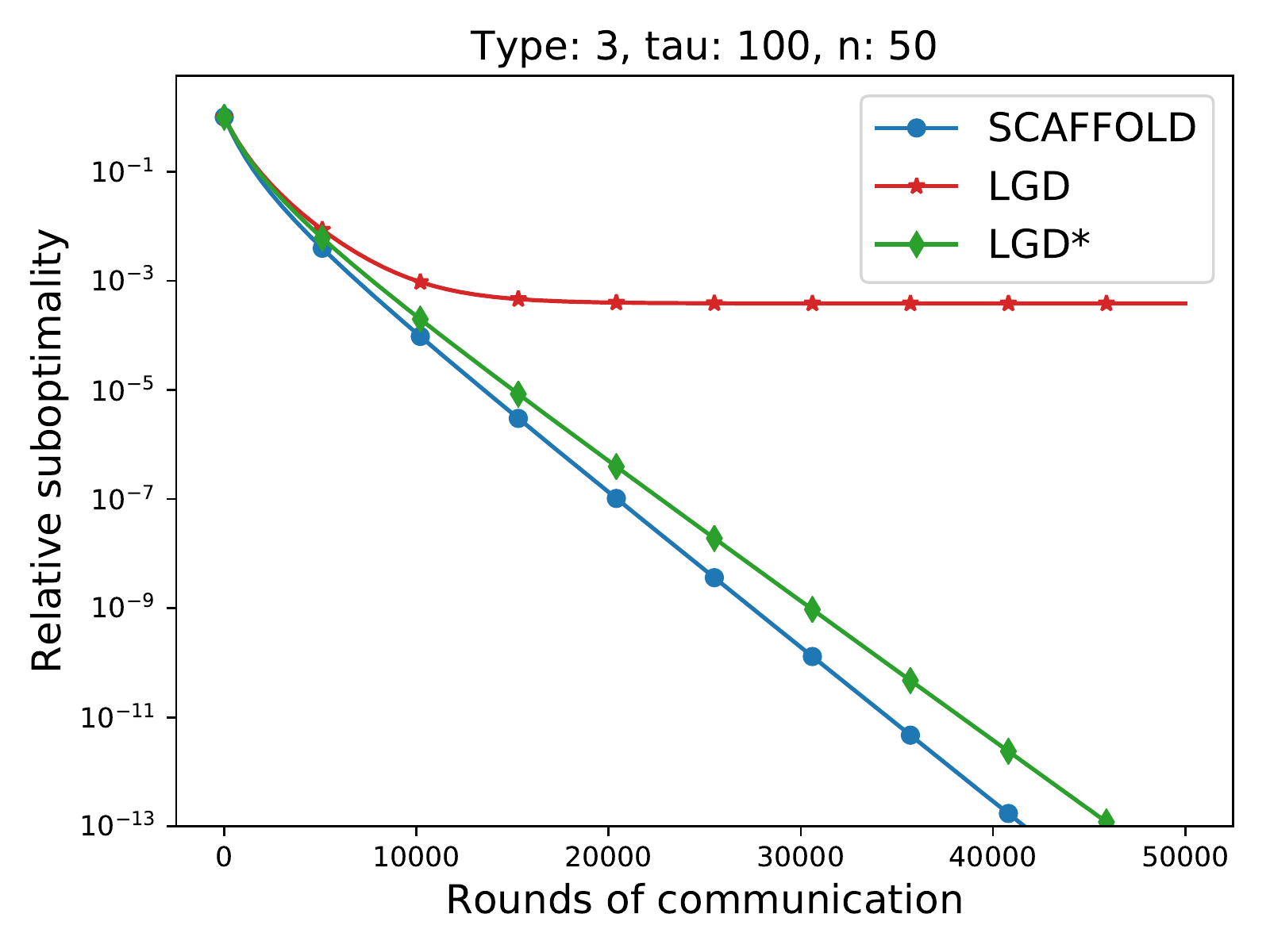}
        %\caption{ Residual vs. iteration  }\label{fig:bl_ex_flops}
\end{minipage}
\\
\begin{minipage}{0.3\textwidth}
  \centering
\includegraphics[width =  \textwidth ]{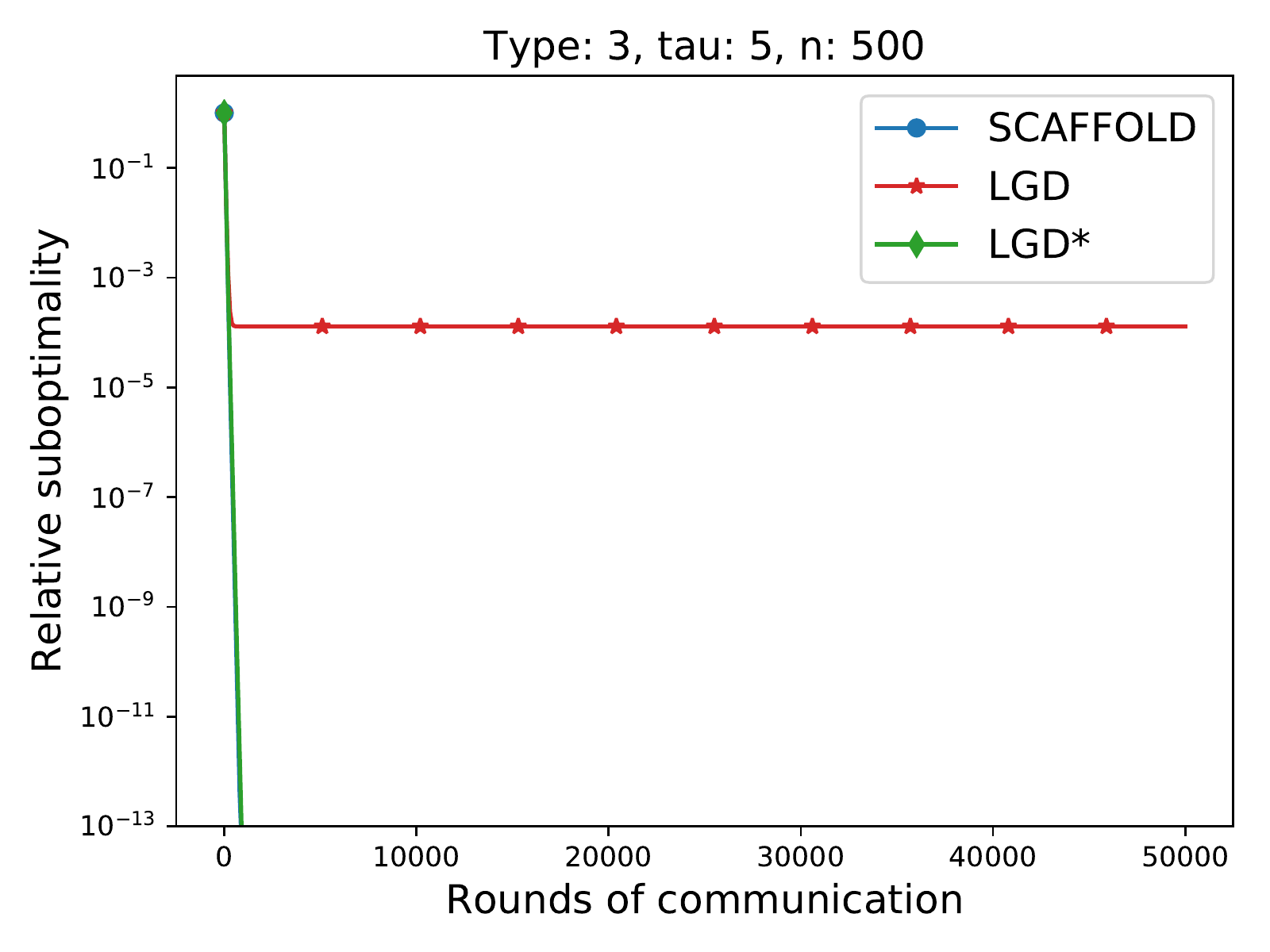}
        %\caption{ Residual vs. iteration  }\label{fig:bl_ex_flops}
\end{minipage}
\begin{minipage}{0.3\textwidth}
  \centering
\includegraphics[width =  \textwidth ]{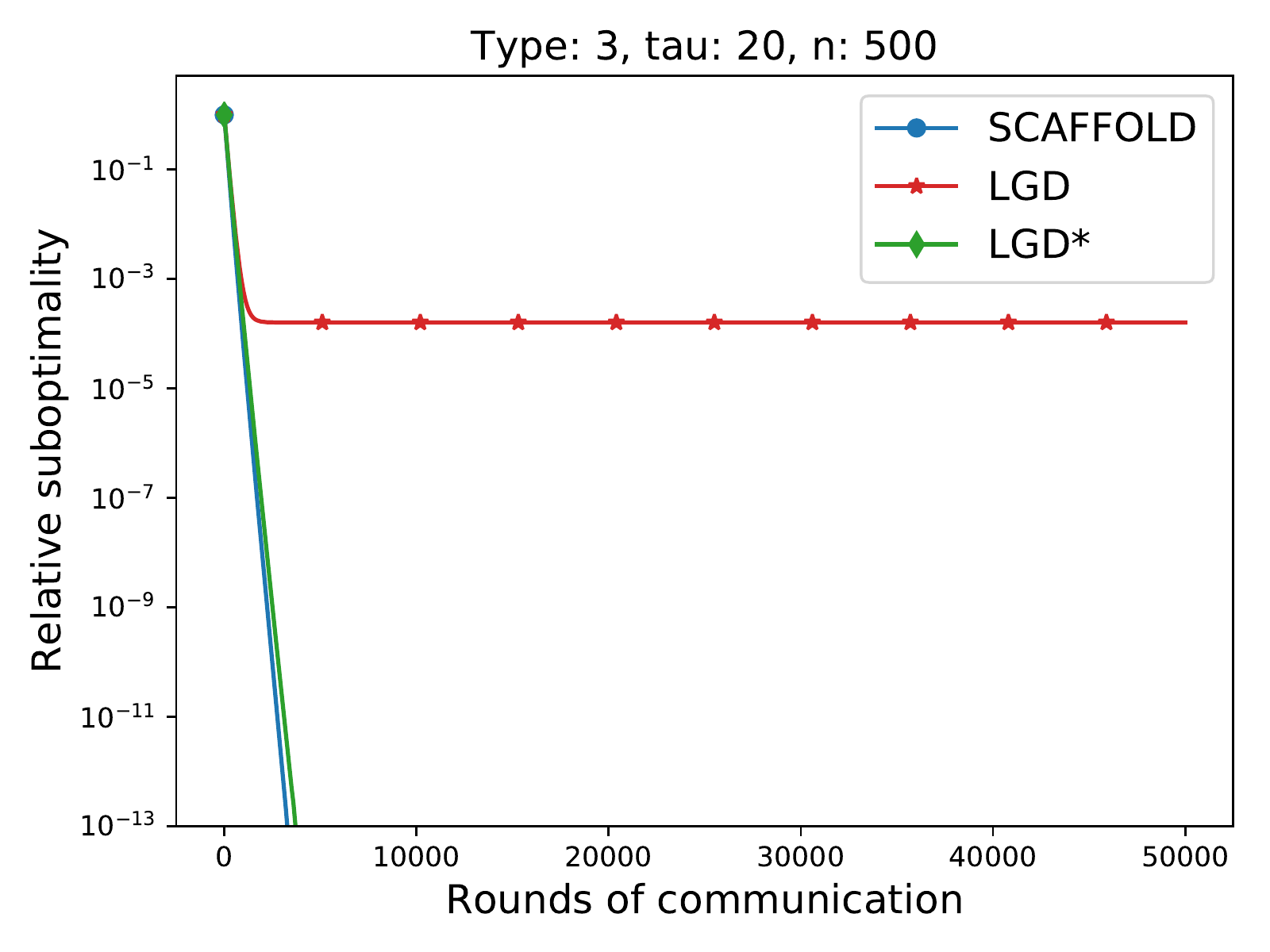}
        %\caption{ Residual vs. iteration  }\label{fig:bl_ex_flops}
\end{minipage}
\begin{minipage}{0.3\textwidth}
  \centering
\includegraphics[width =  \textwidth ]{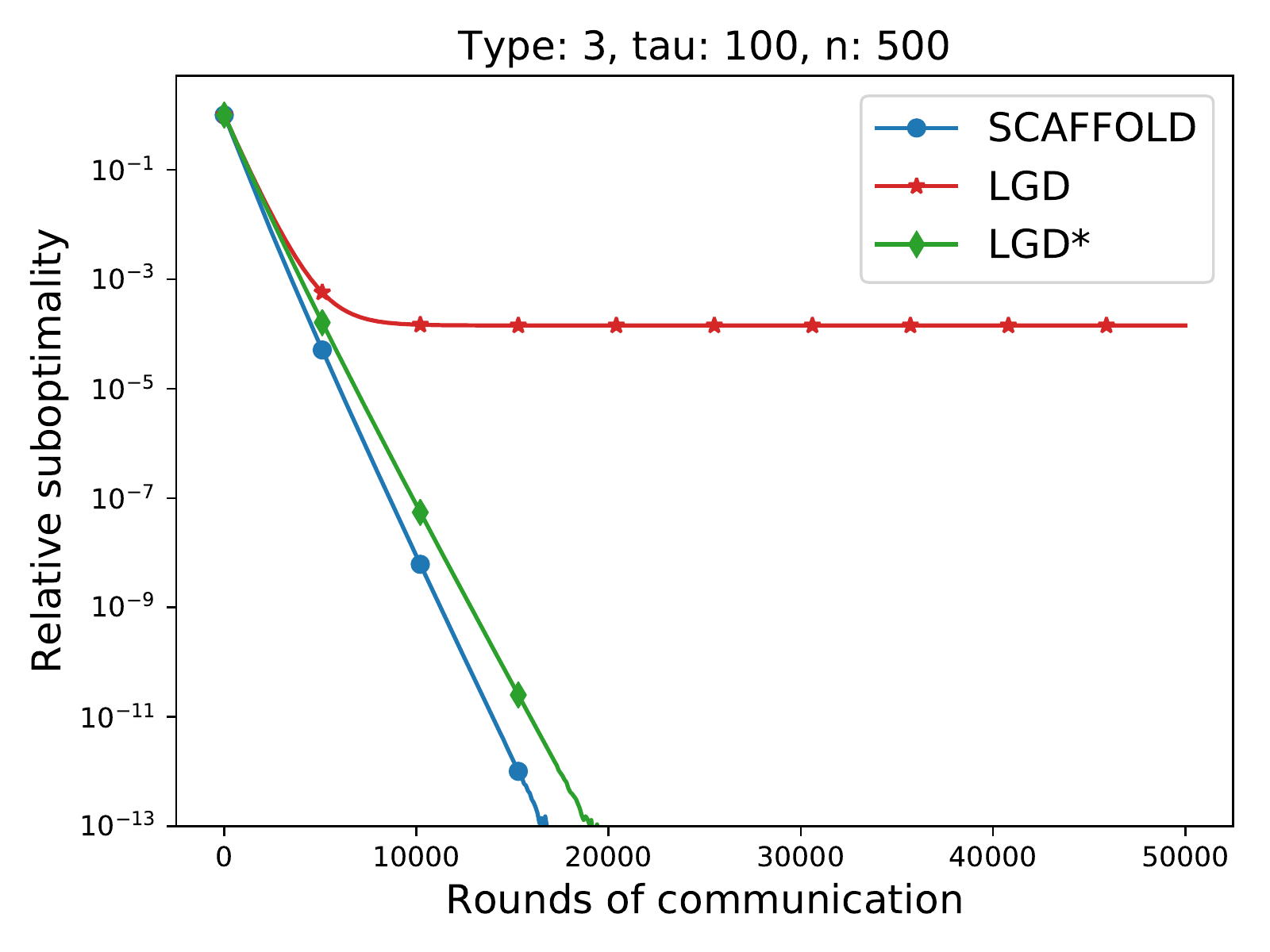}
        %\caption{ Residual vs. iteration  }\label{fig:bl_ex_flops}
\end{minipage}
\caption{Comparison of the following noiseless algorithms:  {\tt Local-SGD} ({\tt LGD}, Algorithm~\ref{alg:local_sgd} with no local noise) and {\tt SCAFFOLD}~\cite{karimireddy2019scaffold} (Algorithm~\ref{alg:l_local_svrg} without ``Loopless'') and {\tt S*-Local-SGD} ({\tt LGD*}, Algorithm~\ref{alg:local_sgd_star}). Quadratic minimization, problem type 3 (see Table~\ref{tbl:instances}). }
\label{fig:artif4}
\end{figure}

\clearpage
\section{Missing Proofs for Section~\ref{sec:main_res}}
Let us first state some well-known consequences of $L$-smoothness. Specifically, if $f_i$ is $L$-smooth, we must have
\begin{equation}
	f_i(y) \le f_i(x) + \langle\nabla f_i(x), y-x \rangle + \frac{L}{2}\|x-y\|^2, \qquad \forall x,y\in \R^d.\label{eq:L_smoothness_cor_1}
\end{equation}
If in addition to this we assume that $f_i$ is  convex, the following bound holds:
\begin{equation}
	\|\nabla f_i(x) - \nabla f_i(y)\|^2 \le 2L(f_i(x) - f_i(y) - \langle\nabla f_i(y), x-y\rangle) \eqdef 2LD_{f_i}(x,y), \qquad \forall x,y\in \R^d \label{eq:L_smoothness_cor}
\end{equation}

We next proceed with the proof of Theorem~\ref{thm:main_result}. Following the technique of virtual iterates from~\cite{stich2019error,khaled2020tighter}, notice that the sequence $\{x^k\}_{k\ge 0}$ satisfies the recursion
\begin{equation}
	x^{k+1} = x^k - \frac{\gamma}{n}\sum\limits_{i=1}^ng_i^k .\label{eq:x^k_recurrsion}
\end{equation}
 This observation forms the backbone of the key lemma of our paper, which we present next. 

\begin{lemma}\label{lem:main_lemma}
	Let As.~\ref{ass:quasi_strong_convexity},~\ref{ass:L_smoothness}~and~\ref{ass:key_assumption} be satisfied and $\gamma \le \min\left\{\nicefrac{1}{2(A'+MC)},\nicefrac{L}{(F'+MG)}\right\}$, where $M = \frac{4B'}{3\rho}$. Let  $\eta \eqdef\min\left\{\gamma\mu, \frac{\rho}{4}\right\} $. Then for all $k\ge 0$ we have
	\begin{equation}
		\gamma\EE\left[f(x^k) - f(x^*)\right] \le (1-\eta)\EE T^{k} - \EE T^{k+1} + \gamma^2(D_1' + MD_2) + 2L\gamma \EE V_k, \label{eq:main_lemma}
	\end{equation}
	where $\eta \eqdef \min\left\{\gamma\mu,\frac{\rho}{4}\right\}$, $T^k \eqdef \|x^k - x^*\|^2 + M\gamma^2 \sigma_k^2$.
\end{lemma}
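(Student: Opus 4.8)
\textbf{Proof proposal for Lemma~\ref{lem:main_lemma}.}

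The plan is to start from the virtual-iterate recursion \eqref{eq:x^k_recurrsion} and expand the squared distance $\|x^{k+1}-x^*\|^2$. Writing $g^k \eqdef \frac1n\sum_i g_i^k$, one gets $\|x^{k+1}-x^*\|^2 = \|x^k-x^*\|^2 - 2\gamma\langle g^k, x^k-x^*\rangle + \gamma^2\|g^k\|^2$. Taking $\EE_k[\cdot]$ and using the unbiasedness relation \eqref{eq:unbiasedness}, the inner-product term becomes $-2\gamma\langle \frac1n\sum_i\nabla f_i(x_i^k), x^k-x^*\rangle$. I would split this into $-2\gamma\langle \frac1n\sum_i\nabla f_i(x_i^k), x^k-x_i^k\rangle - 2\gamma\langle\frac1n\sum_i\nabla f_i(x_i^k), x_i^k-x^*\rangle$. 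For the second piece I would invoke $(\mu,x^*)$-strong quasi-convexity \eqref{eq:str_quasi_cvx} applied at $x=x_i^k$ for each $i$ and average, getting $\le -2\gamma\big(\frac1n\sum_i (f_i(x_i^k)-f_i(x^*)) + \frac{\mu}{2n}\sum_i\|x_i^k-x^*\|^2\big)$; the first piece I would control using $L$-smoothness \eqref{eq:L_smoothness_cor_1}, roughly $\langle\nabla f_i(x_i^k), x^k-x_i^k\rangle \ge f_i(x^k) - f_i(x_i^k) - \frac{L}{2}\|x_i^k-x^k\|^2$ (after a sign flip), so that after averaging the $f_i(x_i^k)$ contributions cancel against those from the quasi-convexity step, leaving $f(x^k)-f(x^*)$ plus $\frac{L}{2}V_k$-type terms and a $-\frac{\mu}{n}\sum_i\|x_i^k-x^*\|^2$ term that one bounds below by $-\mu\|x^k-x^*\|^2$ up to an additive $V_k$ (via $\|x_i^k-x^*\|^2 \le$ something, or simply discarding the negative cross term). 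For $\gamma^2\|g^k\|^2$ I would apply \eqref{eq:second_moment_bound_2} directly, producing $2A'(f(x^k)-f(x^*)) + B'\sigma_k^2 + F'V_k + D_1'$ terms.

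Next I would handle the $\sigma_k^2$ Lyapunov component. The idea is to add $M\gamma^2$ times \eqref{eq:sigma_k+1_bound} to the distance recursion, where $M = \frac{4B'}{3\rho}$ is chosen precisely so that the $\gamma^2 B'\sigma_k^2$ term coming from $\|g^k\|^2$ is absorbed into the contraction of $\sigma_k^2$: one needs $(1-\rho)M + B' \le (1-\rho/4)M$, i.e. $M \ge \frac{4B'}{3\rho}$, which holds with equality. Collecting everything, the coefficient of $\EE[f(x^k)-f(x^*)]$ on the right is $-2\gamma + 2\gamma^2(A' + MC)$, and the stepsize restriction $\gamma \le \frac{1}{2(A'+MC)}$ makes this $\le -\gamma$, which is exactly the $\gamma\EE[f(x^k)-f(x^*)]$ we move to the left. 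The coefficient of $\EE[V_k]$ is $O(L\gamma) + \gamma^2(F' + MG)$, and the restriction $\gamma \le \frac{L}{F'+MG}$ caps the second part by $L\gamma$, yielding the clean $2L\gamma\EE[V_k]$ on the right (one must be slightly careful to track the exact constant so it comes out to $2L$ and not something larger — this is a matter of bookkeeping the smoothness constants). The contraction factor $1-\eta$ with $\eta = \min\{\gamma\mu,\rho/4\}$ arises because the $\|x^k-x^*\|^2$ part contracts at rate $1-\gamma\mu$ (from the $-2\gamma\cdot\frac{\mu}{2}\|x^k-x^*\|^2$ term) while the $M\gamma^2\sigma_k^2$ part contracts at rate $1-\rho/4$; taking the worse of the two gives a uniform $(1-\eta)$ multiplying $T^k$.

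The main obstacle I anticipate is the careful treatment of the cross term $-2\gamma\langle\frac1n\sum_i\nabla f_i(x_i^k), x^k-x_i^k\rangle$ together with the $-\frac{\mu}{n}\sum_i\|x_i^k-x^*\|^2$ term: one needs to extract a genuine $f(x^k)-f(x^*)$ (not $\frac1n\sum_i f_i(x_i^k) - f_i(x^*)$, which is the wrong quantity) while keeping the $V_k$ coefficient small enough that, after adding the $F'V_k$ from \eqref{eq:second_moment_bound_2}, the total $V_k$ coefficient can still be dominated by $2L\gamma$ under the stated stepsize bound. This is where Young's inequality must be applied with a carefully chosen parameter, and where the constant ``$2L$'' in the final bound is pinned down; everything else is systematic expansion, substitution of the four inequalities in As.~\ref{ass:key_assumption}, and the two stepsize cutoffs. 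I would close by noting that \eqref{eq:main_lemma} holds in conditional expectation and then taking full expectation (tower property), since all terms are already written with $\EE[\cdot]$.
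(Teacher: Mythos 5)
Your proposal is correct and follows essentially the same route as the paper's proof: the same split of the inner product into a quasi-convexity piece at $x_i^k$ and a smoothness piece linking $x_i^k$ to $x^k$ (with the $f_i(x_i^k)$ terms cancelling), the bound \eqref{eq:second_moment_bound_2} on $\|g^k\|^2$, the Lyapunov function $T^k=\|x^k-x^*\|^2+M\gamma^2\sigma_k^2$ with $M=\frac{4B'}{3\rho}$ absorbing the $B'\gamma^2\sigma_k^2$ term, and the two stepsize cutoffs playing exactly the roles you describe. The only slight over-caution is your anticipated need for Young's inequality on the cross term: inequality \eqref{eq:L_smoothness_cor_1} already yields the coefficient $L\gamma$ on $V_k$ directly, and combined with $\gamma\le \nicefrac{L}{(F'+MG)}$ this gives the clean $2L\gamma\EE V_k$ (the term $-\frac{\mu\gamma}{n}\sum_i\|x_i^k-x^*\|^2\le-\mu\gamma\|x^k-x^*\|^2$ follows from Jensen, as you note).
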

\begin{proof}
	First of all, to simplify the proofs we introduce new notation: $g^k \eqdef \frac{1}{n}\sum_{i=1}^n g_i^k$. Using this and \eqref{eq:x^k_recurrsion} we get
	\begin{eqnarray*}
		\|x^{k+1}-x^*\|^2 &\overset{\eqref{eq:x^k_recurrsion}}{=}& \left\|x^k - x^* - \gamma g^k\right\|^2\\
		&=& \|x^k - x^*\|^2 -2\gamma\langle x^k - x^*, g^k\rangle + \gamma^2\|g^k\|^2.
	\end{eqnarray*}
	Taking conditional mathematical expectation $\EE_k[\cdot] = \EE[\cdot\mid x^k] \eqdef \EE[\cdot\mid x_1^k,\ldots, x_n^k]$ on both sides of the previous inequality we get
	\begin{eqnarray*}
		\EE\left[\|x^{k+1} - x^*\|^2\mid x^k\right] &\overset{\eqref{eq:unbiasedness}}{=}& \|x^k-x^*\|^2 -\frac{2\gamma}{n}\sum\limits_{i=1}^n\left\langle x^k-x^*,\nabla f_i(x_i^k) \right\rangle + \gamma^2\EE\left[\|g^k\|^2\mid x^k\right],
	\end{eqnarray*}
	hence
	\begin{eqnarray}
		\EE\left[\|x^{k+1}-x^*\|^2\right] &\overset{\eqref{eq:tower_property}}{\le}& \EE\left[\|x^k-x^*\|^2\right] -\frac{2\gamma}{n}\sum\limits_{i=1}^n\EE\left[\left\langle x^k-x^*,\nabla f_i(x_i^k) \right\rangle\right] + \gamma^2\EE\left[\|g^k\|^2\right]\notag\\
		&\overset{\eqref{eq:second_moment_bound}}{\le}& \EE\left[\|x^k-x^*\|^2\right] -\frac{2\gamma}{n}\sum\limits_{i=1}^n\EE\left[\left\langle x^k-x^*,\nabla f_i(x_i^k) \right\rangle\right] + B'\gamma^2\EE\left[\sigma_k^2\right] \notag\\
		&&\quad +2A'\gamma^2\EE\left[f(x^k) - f(x^*)\right] + F'\gamma^2\EE\left[V_k\right] + \gamma^2D_1'. \label{eq:main_lemma_technical_1}
	\end{eqnarray}
	Next, we derive an upper bound for the second term on the right-hand side of the previous inequality:
	\begin{eqnarray}
		-\frac{2\gamma}{n}\sum\limits_{i=1}^n\left\langle x^k-x^*,\nabla f_i(x_i^k) \right\rangle &=& \frac{2\gamma}{n}\sum\limits_{i=1}^n\left(\left\langle x^* - x_i^k, \nabla f_i(x_i^k)\right\rangle + \left\langle x_i^k - x^k, \nabla f_i(x_i^k)\right\rangle\right)\notag\\
		&\overset{\eqref{eq:str_quasi_cvx},\eqref{eq:L_smoothness_cor_1}}{\le}& \frac{2\gamma}{n}\sum\limits_{i=1}^n \left(f_i(x^*) - f_i(x_i^k) - \frac{\mu}{2}\|x_i^k - x^*\|^2\right)\notag\\
		&&\quad + \frac{2\gamma}{n}\sum\limits_{i=1}^n\left(f_i(x_i^k) - f_i(x^k) + \frac{L}{2}\|x^k - x_i^k\|^2\right)\notag\\
		&\overset{\eqref{eq:a_b_norm_squared}}{\le}& -2\gamma\left(f(x^k) - f(x^*)\right) -\mu\gamma\|x^k - x^*\|^2 + L\gamma V_k.\label{eq:main_lemma_technical_2}
	\end{eqnarray}
	Plugging \eqref{eq:main_lemma_technical_2} in \eqref{eq:main_lemma_technical_1}, we obtain
	\begin{eqnarray}
		\EE\left[\|x^{k+1} - x^*\|^2\right] &\overset{\eqref{eq:main_lemma_technical_1},\eqref{eq:main_lemma_technical_2}}{\le}& (1-\gamma\mu)\EE\left[\|x^k - x^*\|^2\right] -2\gamma\left(1 - A'\gamma\right)\EE\left[f(x^k) - f(x^*)\right]\notag\\
		&&\quad + B'\gamma^2\EE\left[\sigma_k^2\right] + \gamma\left(L+F'\gamma\right)\EE\left[V_k\right] + \gamma^2D_1'.\label{eq:main_lemma_technical_3}
	\end{eqnarray}
	It implies that
	\begin{eqnarray*}
		\EE T^{k+1} &=& \EE\left[\|x^{k+1}-x^*\|^2\right] + M\gamma^2\EE\left[\sigma_{k+1}^2\right]\\
		&\overset{\eqref{eq:main_lemma_technical_3},\eqref{eq:sigma_k+1_bound}}{\le}& (1-\gamma\mu)\EE\|x^k - x^*\|^2 + \left(1+\frac{B'}{M}-\rho\right)M\gamma^2\EE\sigma_k^2 \\
		&&\quad -2\gamma\left(1 - \left(A'+MC\right)\gamma\right)\EE\left[f(x^k) - f(x^*)\right]\\
		&&\quad + \gamma\left(L + (F'+MG)\gamma\right)\EE V_k + \gamma^2\left(D_1' + MD_2\right).
	\end{eqnarray*}
	Since $M = \frac{4B'}{3\rho}$, $\eta = \min\left\{\gamma\mu, \frac{\rho}{4}\right\}$ and $\gamma \le \min\left\{\nicefrac{1}{2(A'+MC)},\nicefrac{L}{(F'+MG)}\right\}$, we get
	\begin{eqnarray*}
		\EE T^{k+1} &\le& (1-\gamma\mu)\EE\|x^k - x^*\|^2 + \left(1-\frac{\rho}{4}\right)M\gamma^2\EE\sigma_k^2 -\gamma\EE\left[f(x^k) - f(x^*)\right]\\
		&&\quad + 2L\gamma \EE V_k + \gamma^2\left(D_1' + MD_2\right)\\
		&\le& (1-\eta)\EE T^k  -\gamma\EE\left[f(x^k) - f(x^*)\right] +  2L\gamma \EE V_k + \gamma^2\left(D_1' + MD_2\right).
	\end{eqnarray*}
	Rearranging the terms we get \eqref{eq:main_lemma}.
\end{proof}

Using the above lemma we derive the main complexity result.

\subsection{Proof of Theorem~\ref{thm:main_result}}
	From Lemma~\ref{lem:main_lemma} we have that
	\begin{eqnarray*}
		\gamma\EE\left[f(x^k) - f(x^*)\right] \le (1-\eta)\EE T^{k} - \EE T^{k+1} + \gamma^2(D_1' + MD_2) + 2L\gamma \EE V_k.
	\end{eqnarray*}
	Summing up previous inequalities for $k=0,\ldots,K$ with weights $w_k$ defined in \eqref{eq:w_k_definition} we derive
	\begin{eqnarray*}
		\gamma\sum\limits_{k=0}^K w_k\EE\left[f(x^k) - f(x^*)\right] &\le& \sum\limits_{k=0}^K \left(w_k(1-\eta)\EE T^{k} - w_k\EE T^{k+1}\right) + \gamma^2(D_1' + MD_2)W_K\\
		&&\quad + 2L\gamma\sum\limits_{k=0}^K w_k\EE V_k\\
		&\overset{\eqref{eq:w_k_definition},\eqref{eq:sum_V_k_bounds}}{\le}& \sum\limits_{k=0}^K \left(w_{k-1}\EE T^{k} - w_k\EE T^{k+1}\right) + \gamma^2\left(D_1' + MD_2\right)W_K\\
		&&\quad + \frac{\gamma}{2}\sum\limits_{k=0}^Kw_k\EE\left[f(x^k)-f(x^*)\right] + 2LH\gamma\EE\sigma_0^2 +2L\gamma^3 D_3 W_K.
	\end{eqnarray*}
	Relations $T^k \ge 0$ and $w_{-1} = 1$ imply that
	\begin{eqnarray*}
		\frac{\gamma}{2}\sum\limits_{k=0}^Kw_k\EE\left[f(x^k)-f(x^*)\right] &\le& T^0 + 2LH\gamma\EE\sigma_0^2 + \gamma^2\left(D_1' + MD_2 + 2L\gamma D_3\right)W_K.
	\end{eqnarray*}
	Using the definition of $\overline{x}^K$ and convexity of $f$, we get 
		\begin{eqnarray}
		\EE\left[f(\overline{x}^K) - f(x^*)\right] &\le& \frac{2T^0 + 4LH\gamma\EE\sigma_0^2}{\gamma W_K} + 2\gamma\left(D_1' + MD_2 + 2L\gamma D_3\right). \label{eq:main_result}
	\end{eqnarray}
	 It remains to consider two cases: $\mu > 0$ and $\mu = 0$. If $\mu > 0$ we have $W_K \ge w_K \ge (1-\eta)^{-K}$, where $\eta \eqdef \min\left\{\gamma\mu, \frac{\rho}{4}\right\}$ which implies \eqref{eq:main_result_1}. Finally, when $\mu = 0$, we have $w_k = 1$ for all $k\ge 0$, which implies $W_K = K+1 \ge K$ and \eqref{eq:main_result_2}.

\subsection{Corollaries}\label{sec:corollaries}
We state the full complexity results that can be obtained from Theorem~\ref{thm:main_result}. These results can be obtained as a direct consequence of Lemmas~\ref{lem:lemma2_stich}~and~\ref{lem:lemma_technical_cvx}.

\begin{corollary}\label{cor:app_complexity_cor_str_cvx}
	Consider the setup from Theorem~\ref{thm:main_result} and denote $\frac1h$ to be the resulting upper bound on $\gamma$\footnote{In order to obtain tight estimate of parameters $D_3$ and $H$, we shall impose further bounds on $\gamma$ (see Section~\ref{sec:data_and_loop} and Table~\ref{tbl:data_loop} therein). }
	  and $\mu > 0$.
	\begin{enumerate}
		\item If $D_3$ does not depend on $\gamma$, then for all $K$ such that
		\begin{eqnarray*}
		\text{either} && \frac{\ln\left(\max\{2,\min\{\nicefrac{a\mu^2K^2}{c_1},\nicefrac{a\mu^3K^3}{c_2}\}\}\right)}{K}\le \rho\\
		\text{or} && \frac{1}{h}\le \frac{\ln\left(\max\{2,\min\{\nicefrac{a\mu^2K^2}{c_1},\nicefrac{a\mu^3K^3}{c_2}\}\}\right)}{\mu K},
	\end{eqnarray*}			
	$a = 2\|x^0 - x^*\|^2 +   \frac{8B'\EE\sigma_0^2}{3h^2\rho} + \frac{4LH\EE\sigma_0^2}{h}$, $c_1 = 2D_1' + \frac{4B'D_2}{3\rho}$, $c_2 = 4LD_3$ and
	\begin{eqnarray*}
		\gamma &=& \min\left\{\frac{1}{h}, \gamma_K\right\},\\
		\gamma_K &=& \frac{\ln\left(\max\left\{2,\min\left\{\frac{a\mu^2K^2}{c_1},\frac{a\mu^3K^3}{c_2}\right\}\right\}\right)}{\mu K},
	\end{eqnarray*}
	we have\footnote{$\widetilde{\cO}$ hides numerical constants and logarithmical factors depending on $K$ and parameters of the problem.}
	\begin{equation*}
	\EE\left[f(\overline{x}^K) \right] - f(x^*) =	\widetilde\cO\left(ha\exp\left(-\min\left\{\frac{\mu}{h}, \rho\right\}K\right) + \frac{c_1}{\mu K} + \frac{c_2}{\mu^2 K^2}\right).
	\end{equation*}
	That is, to achieve $\EE\left[f(\overline{x}^K) \right] - f(x^*)\le \varepsilon$, the method requires\footnote{If $c_1 = c_2 = 0$, then one can replace $\widetilde{\cO}$ by $\cO$.}:
	\begin{equation*}
		K = \widetilde\cO\left(\left(\frac{1}{\rho} + \frac{h}{\mu}\right)\log\left(\frac{ha}{\varepsilon}\right) + \frac{c_1}{\mu \varepsilon} + \sqrt{\frac{c_2}{\mu^2 \varepsilon}}\right).
	\end{equation*}
	\item If $D_3 = D_{3,1} + \frac{D_{3,2}}{\gamma}$, then the same bounds hold with $c_1 = 2D_1' + \frac{4B'D_2}{3\rho} + 2LD_{3,2}$ and $c_2 = 4LD_{3,1}$. 
	\end{enumerate}
\end{corollary}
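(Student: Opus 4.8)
The plan is to reduce the statement to the strongly convex bound~\eqref{eq:main_result_1} of Theorem~\ref{thm:main_result} and then invoke the two stepsize-tuning lemmas referenced in the statement (Lemmas~\ref{lem:lemma2_stich} and~\ref{lem:lemma_technical_cvx}), which are tailored to a bound of exactly the shape that will appear. First I would observe that the prescribed $\gamma=\min\{\tfrac1h,\gamma_K\}$ is admissible for Theorem~\ref{thm:main_result} since $\gamma\le\tfrac1h$, and rewrite the two terms of~\eqref{eq:main_result_1} purely in terms of $\gamma$ and the constants $a,c_1,c_2$. Since $\gamma\le\tfrac1h$ and $H\ge0$, multiplying $\Phi^0$ by $\gamma$ gives $\gamma\Phi^0=2\|x^0-x^*\|^2+\tfrac{8B'}{3\rho}\gamma^2\EE\sigma_0^2+4LH\gamma\EE\sigma_0^2\le a$, i.e.\ $\Phi^0\le a/\gamma$. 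For the additive term, in Case~1 ($D_3$ independent of $\gamma$) one has $\gamma\Psi^0=\gamma\bigl(2D_1'+\tfrac{8B'}{3\rho}D_2\bigr)+4LD_3\gamma^2=c_1\gamma+c_2\gamma^2$; in Case~2 ($D_3=D_{3,1}+D_{3,2}/\gamma$) the $D_{3,2}$ part of $4LD_3\gamma^2$ becomes linear in $\gamma$ and is absorbed into the linear coefficient, producing exactly the re-indexed $c_1=2D_1'+\tfrac{4B'D_2}{3\rho}+2LD_{3,2}$ and $c_2=4LD_{3,1}$ asserted in the corollary. Combining and using $(1-x)^K\le e^{-xK}$, the bound~\eqref{eq:main_result_1} becomes
\[
\EE[f(\overline{x}^K)]-f(x^*)\;\le\;\frac{a}{\gamma}\,e^{-\min\{\gamma\mu,\rho/4\}K}+c_1\gamma+c_2\gamma^2 .
\]

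Second, I would feed this into the stepsize lemma with $\gamma_K=\tfrac{\ln M}{\mu K}$ and $M=\max\{2,\min\{a\mu^2K^2/c_1,\,a\mu^3K^3/c_2\}\}$. When $\gamma=\gamma_K$, the balancing choice of $M$ makes $\tfrac{a}{\gamma_K}e^{-\gamma_K\mu K}=\tfrac{a\mu K}{\ln M}\cdot M^{-1}$ no larger than $\max\{\tfrac{c_1}{\mu K},\tfrac{c_2}{\mu^2K^2}\}$ up to absolute constants, while $c_1\gamma_K$ and $c_2\gamma_K^2$ match the target up to the $\ln M$ factors hidden in $\widetilde\cO$; when $\gamma=\tfrac1h$ the first term contributes the linear-rate piece $ha\,e^{-\min\{\mu/h,\rho\}K}$. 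The role of the two side conditions is precisely to make this dichotomy clean: the first one ($\tfrac{\ln M}{K}\le\rho$) forces $\gamma_K\mu\le\rho$, so that in the $\gamma=\gamma_K$ branch the two-rate exponent $\min\{\gamma_K\mu,\rho/4\}$ is comparable to $\gamma_K\mu$; the second one ($\tfrac1h\le\gamma_K$) puts us in the $\gamma=\tfrac1h$ branch, where $\min\{\mu/h,\rho/4\}\ge\tfrac14\min\{\mu/h,\rho\}$ holds unconditionally. Putting the two branches together yields the displayed $\widetilde\cO$ expression for $\EE[f(\overline{x}^K)]-f(x^*)$.

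Finally, to extract the iteration complexity I would set each of the three terms below $\varepsilon/3$: the exponential term needs $K\ge\max\{\tfrac h\mu,\tfrac1\rho\}\ln\tfrac{3ha}{\varepsilon}$, the $\tfrac{c_1}{\mu K}$ term needs $K\ge\tfrac{3c_1}{\mu\varepsilon}$, and the $\tfrac{c_2}{\mu^2K^2}$ term needs $K\ge\sqrt{\tfrac{3c_2}{\mu^2\varepsilon}}$; summing these gives $K=\widetilde\cO\bigl((\tfrac1\rho+\tfrac h\mu)\log\tfrac{ha}{\varepsilon}+\tfrac{c_1}{\mu\varepsilon}+\sqrt{\tfrac{c_2}{\mu^2\varepsilon}}\bigr)$. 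The main obstacle I expect is purely the bookkeeping around the two-rate exponent $\min\{\gamma\mu,\rho/4\}$ together with the logarithmic factor $\ln M$ introduced by $\gamma_K$: one must check, over all subcases of $\gamma=\min\{\tfrac1h,\gamma_K\}$ --- including the degenerate regime $M=2$, in which $K$ is necessarily small and so the exponential term is automatically dominated by $\tfrac{c_1}{\mu K}$ --- that the side conditions keep the residual error within $\widetilde\cO$ of $\tfrac{c_1}{\mu K}+\tfrac{c_2}{\mu^2K^2}+ha\,e^{-\min\{\mu/h,\rho\}K}$. The reduction to the displayed bound and the single-rate parts of the argument are routine.
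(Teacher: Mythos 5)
Your proposal is correct and follows essentially the same route as the paper: instantiate the strongly convex bound of Theorem~\ref{thm:main_result} so that it takes the form $\frac{a}{\gamma W_K}+c_1\gamma+c_2\gamma^2$ (noting $\gamma\Phi^0\le a$ for $\gamma\le\tfrac1h$ and absorbing the $D_{3,2}/\gamma$ part of $D_3$ into the linear term in Case~2), and then apply the stepsize-tuning Lemma~\ref{lem:lemma2_stich}, whose two-branch analysis you essentially re-derive inline. The only cosmetic point is that Lemma~\ref{lem:lemma_technical_cvx} is needed only for the $\mu=0$ corollary, not here, and the small constant-factor discrepancies (e.g.\ $\tfrac{8B'D_2}{3\rho}$ vs.\ $\tfrac{4B'D_2}{3\rho}$, and the $\rho/4$ vs.\ $\rho$ in the exponent) are shared with the paper's own statement and are absorbed by its conventions.
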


\begin{corollary}\label{cor:app_complexity_cor_cvx}
	Let assumptions of Theorem~\ref{thm:main_result} be satisfied with any $\gamma \le \frac{1}{h}$ and $\mu = 0$.
	\begin{enumerate}
		\item If $D_3$ does not depend on $\gamma$, then for all $K$ and
	\begin{eqnarray*}
		\gamma &=& \min\left\{\frac{1}{h}, \sqrt{\frac{a}{b_1}}, \sqrt[3]{\frac{a}{b_2}}, \sqrt{\frac{a}{c_1 K}}, \sqrt[3]{\frac{a}{c_2 K}}\right\},
	\end{eqnarray*}
	where $a = 2\|x^0 - x^*\|^2$, $b_1 = 4LH\EE\sigma_0^2$, $b_2 = \frac{8B'\EE\sigma_0^2}{3\rho}$, $c_1 = 2D_1' + \frac{4B'D_2}{3\rho}$, $c_2 = 4LD_3$, we have 	\begin{equation*}
\EE\left[f(\overline{x}^K) \right] - f(x^*) =		\cO\left(\frac{ha}{K} + \frac{\sqrt{ab_1}}{K} + \frac{\sqrt[3]{a^2b_2}}{K} + \sqrt{\frac{ac_1}{K}} + \frac{\sqrt[3]{a^2c_2}}{K^{\nicefrac{2}{3}}} \right).
	\end{equation*}
	That is, to achieve $\EE\left[f(\overline{x}^K) \right] - f(x^*)\le \varepsilon$, the method requires 
	\begin{equation*}
		K = \cO\left(\frac{ha}{\varepsilon} + \frac{\sqrt{ab_1}}{\varepsilon} + \frac{\sqrt[3]{a^2b_2}}{\varepsilon} + \frac{ac_1}{\varepsilon^2} + \frac{a\sqrt{c_2}}{\varepsilon^{\nicefrac{3}{2}}} \right).
	\end{equation*}
	\item If $D_3 = D_{3,1} + \frac{D_{3,2}}{\gamma}$, then the same bounds hold with $c_1 = 2D_1' + \frac{4B'D_2}{3\rho} + 2LD_{3,2}$ and $c_2 = 4LD_{3,1}$. 
	\end{enumerate}
\end{corollary}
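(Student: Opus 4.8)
The plan is to deduce Corollary~\ref{cor:app_complexity_cor_cvx} as a direct consequence of Theorem~\ref{thm:main_result} --- or, more conveniently, of the intermediate estimate~\eqref{eq:main_result} established in its proof --- by substituting the parameter definitions and then tuning the free stepsize $\gamma$. Setting $\mu=0$ gives $w_k\equiv 1$, hence $W_K=K+1\ge K$; recalling $T^0=\|x^0-x^*\|^2+M\gamma^2\sigma_0^2$ with $M=\tfrac{4B'}{3\rho}$ and expanding both $\Phi^0/K$ and $\gamma\Psi^0$, then collecting by powers of $\gamma$, rewrites~\eqref{eq:main_result_2} as
\[
\EE\left[f(\overline{x}^K)\right]-f(x^*)\;\le\;\frac{a}{\gamma K}+\frac{\gamma b_2}{K}+\frac{b_1}{K}+\gamma c_1+\gamma^2 c_2 ,
\]
valid for every admissible $\gamma\le\tfrac1h$, with $a=2\|x^0-x^*\|^2$, $b_1=4LH\,\EE\sigma_0^2$, $b_2=\tfrac{8B'\EE\sigma_0^2}{3\rho}$, $c_1=2D_1'+\tfrac{4B'D_2}{3\rho}$, $c_2=4LD_3$ (the $\tfrac{\gamma b_2}{K}$ term arises from $M\gamma^2\sigma_0^2$ inside $T^0$, the $\tfrac{b_1}{K}$ term from the $4LH\gamma\sigma_0^2$ summand after dividing by $\gamma W_K$, and $\gamma c_1+\gamma^2 c_2$ from $2\gamma(D_1'+MD_2+2L\gamma D_3)$); small integer factors are absorbed into the final $\cO$. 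When $H$ or $D_3$ carry further $\gamma$-dependence (cf.\ Table~\ref{tbl:data_loop}), one first substitutes those expressions and re-collects powers of $\gamma$ --- the hypothesis ``$D_3$ does not depend on $\gamma$'' is exactly what keeps $4L\gamma^2D_3$ in the $\gamma^2$ slot, and the extra stepsize restrictions needed to bound $H$ are the ones already folded into $h$.

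The second step is the standard convex ``stepsize-tuning'' argument (cf.\ Lemma~\ref{lem:lemma_technical_cvx}, the counterpart of Stich-type recursion estimates). With
\[
\gamma=\min\Bigl\{\tfrac1h,\ \sqrt{\tfrac{a}{b_1}},\ \sqrt[3]{\tfrac{a}{b_2}},\ \sqrt{\tfrac{a}{c_1 K}},\ \sqrt[3]{\tfrac{a}{c_2 K}}\Bigr\}
\]
(with the convention that a term with a vanishing denominator is dropped), I would do a case split on which of the five quantities attains the minimum. In each branch, the leading term $\tfrac{a}{\gamma K}$ evaluates to a constant multiple of the matching entry among $\tfrac{ha}{K}$, $\tfrac{\sqrt{ab_1}}{K}$, $\tfrac{\sqrt[3]{a^2 b_2}}{K}$, $\sqrt{\tfrac{ac_1}{K}}$, $\tfrac{\sqrt[3]{a^2 c_2}}{K^{2/3}}$ --- e.g.\ $\gamma=\sqrt[3]{a/(c_2 K)}$ makes $\tfrac{a}{\gamma K}=\tfrac{\sqrt[3]{a^2 c_2}}{K^{2/3}}$ --- while each remaining term of the displayed bound is no larger, since $\gamma$ lies below the corresponding threshold. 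As the sum of the five candidate quantities dominates the maximum, one obtains the stated $\cO(\cdot)$ bound uniformly in the branch, and solving ``five-term sum $\le\varepsilon$'' for $K$ yields the claimed iteration complexity.

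Finally, for part~2, where $D_3=D_{3,1}+\tfrac{D_{3,2}}{\gamma}$, the only modification is that $4L\gamma^2 D_3$ splits as $4LD_{3,1}\gamma^2+4LD_{3,2}\gamma$: the first piece stays in the $\gamma^2$ slot with $c_2:=4LD_{3,1}$, while the second merges into the linear-in-$\gamma$ slot, updating $c_1$ to $2D_1'+\tfrac{4B'D_2}{3\rho}+2LD_{3,2}$ (up to constants); the tuning argument of the second step then applies verbatim, which is why ``the same bounds hold''. The hard part here is not analytic but combinatorial/bookkeeping: one must match every cross term produced by the substitution to exactly one of the five target terms, keep track of which constants $\cO$ is allowed to swallow, and treat the constraint $\gamma\le\tfrac1h$ on the same footing as the other four balancing thresholds (it is the one responsible for the $\tfrac{ha}{K}$ contribution). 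Beyond Theorem~\ref{thm:main_result} and this elementary one-variable optimisation there is no new idea.
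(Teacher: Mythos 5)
Your route is the paper's route: specialize \eqref{eq:main_result}/\eqref{eq:main_result_2} to $\mu=0$ (so $W_K\ge K$), read off the coefficients of the powers of $\gamma$, and invoke the generic stepsize-tuning Lemma~\ref{lem:lemma_technical_cvx} (the paper's entire ``proof'' is the remark that the corollary is a direct consequence of that lemma), with the part-2 case handled by re-collecting $4L\gamma^2 D_3=4LD_{3,1}\gamma^2+4LD_{3,2}\gamma$. Your expansion of $\Phi^0/K+\gamma\Psi^0$ is correct. However, the step ``each remaining term of the displayed bound is no larger, since $\gamma$ lies below the corresponding threshold'' does not go through with the thresholds as you (and the corollary) label them. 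In your own decomposition the two $\sigma_0^2$-terms are $\frac{b_2\gamma}{K}$ (with $b_2=\frac{8B'\EE\sigma_0^2}{3\rho}$ multiplying $\gamma/K$) and $\frac{b_1}{K}$ (with $b_1=4LH\EE\sigma_0^2$ carrying no explicit $\gamma$; in fact $H\propto\gamma^2$ by Table~\ref{tbl:data_loop}, so this is really a $\gamma^2/K$ term and the threshold $\sqrt{a/b_1}$ is circular as written). But $\gamma\le\sqrt[3]{a/b_2}$ only yields $\frac{b_2\gamma}{K}\le\frac{\sqrt[3]{ab_2^2}}{K}$, which is not $\cO\bigl(\nicefrac{\sqrt[3]{a^2b_2}}{K}\bigr)$ unless $b_2\le a$, and $\gamma\le\sqrt{a/b_1}$ does not shrink a term containing no $\gamma$ at all. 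In other words, the pairing of $b_1,b_2$ with the square-root/cube-root balancing is inverted relative to Lemma~\ref{lem:lemma_technical_cvx}, whose hypothesis puts $b_1$ on the $\gamma/K$ slot and $b_2$ on the $\gamma^2/K$ slot.

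The repair is to match coefficients to powers of $\gamma$ before tuning, which is how the paper actually instantiates this corollary in its special cases (compare the stepsize thresholds in Corollary~\ref{cor:local_svrg_cvx_homo}): apply Lemma~\ref{lem:lemma_technical_cvx} with the $\gamma/K$ coefficient $\frac{8B'\EE\sigma_0^2}{3\rho}$ in the square-root slot and the $\gamma$-free quantity $4L\bigl(\nicefrac{H}{\gamma^2}\bigr)\EE\sigma_0^2$ in the cube-root slot, so the rate contains $\nicefrac{\sqrt{a\cdot\frac{8B'\EE\sigma_0^2}{3\rho}}}{K}$ and $\nicefrac{\sqrt[3]{a^2\cdot 4L(\nicefrac{H}{\gamma^2})\EE\sigma_0^2}}{K}$; that is, the roles of $b_1$ and $b_2$ in the statement should be interchanged (a labeling slip in the corollary itself, caused by treating $H$ as if it were $\gamma$-free). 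Once that matching is made explicit, your branch-by-branch argument is fine and is equivalent to the lemma's bound $\frac1\gamma\le\frac1{\gamma_0}+\sum(\text{inverse thresholds})$, and the constant-factor discrepancies you absorb into $\cO(\cdot)$ (e.g.\ $\frac{8B'D_2}{3\rho}$ vs.\ $\frac{4B'D_2}{3\rho}$, $4LD_{3,2}$ vs.\ $2LD_{3,2}$) are indeed harmless. So: right approach, same as the paper, but do not assert the verification with the stated $b_1,b_2$ --- either swap them or strip the $\gamma^2$ out of $H$ explicitly, otherwise the two middle terms do not land where claimed.
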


\clearpage
\section{Missing Proofs and Details for Section~\ref{sec:data_and_loop} \label{sec:a_data_and_loop}}

\subsection{Constant Local Loop}\label{sec:constant_loop}
In this section we show how our results can be applied to analyze \eqref{eq:local_sgd_def} in the case when
\begin{equation*}
	c_{k} = \begin{cases}1,& \text{if } k \mod \tau = 0,\\ 0,& \text{if } k\mod \tau \neq 0, \end{cases}
\end{equation*}
where $\tau$ is number of local steps between two neighboring rounds of communications. This corresponds to the setting in which the local loop size on each device has a fixed length.

\subsubsection{Heterogenous Data}\label{sec:const_loop_hetero}
First of all, we need to assume more about $g_i^k$.
\begin{assumption}\label{ass:hetero_second_moment}
	We assume that inequalities \eqref{eq:second_moment_bound}-\eqref{eq:sigma_k+1_bound} hold and additionally there exist such non-negative constants $\tA, \hA, \tB, \hB, \tF, \hF, \tD_1, \hD_{1}$ that for all $k \ge 0$
	\begin{eqnarray}
		\frac{1}{n}\sum\limits_{i=1}^n\EE\left[\|\bar{g}_i^k\|^2\right] &\le & 2\tA\EE\left[f(x^k) - f(x^*)\right] + \tB\EE\left[\sigma_k^2\right] + \tF\EE\left[V_k\right] + \tD_{1}, \label{eq:hetero_second_moment_bound}\\
		\frac{1}{n}\sum\limits_{i=1}^n\EE\left[\|g_i^k-\bar{g}_i^k\|^2\right] &\le & 2\hA\EE\left[f(x^k) - f(x^*)\right] + \hB\EE\left[\sigma_k^2\right] + \hF\EE\left[V_k\right] + \hD_{1}, \label{eq:hetero_var_bound}
	\end{eqnarray}
	where $\bar{g}_i^k = \EE\left[g_i^k\mid x_1^k,\ldots,x_n^k\right]$.
\end{assumption}
We notice that inequalities \eqref{eq:hetero_second_moment_bound}-\eqref{eq:hetero_var_bound} imply \eqref{eq:second_moment_bound} and vice versa. Indeed, if \eqref{eq:hetero_second_moment_bound}-\eqref{eq:hetero_var_bound} hold then inequality \eqref{eq:second_moment_bound} holds with $A = \tA + \hA$, $B = \tB+\hB$, $F = \tF + \hF$, $D_1 = \tD_{1}+\hD_{1}$ due to variance decomposition formula \eqref{eq:variance_decomposition}, and if \eqref{eq:second_moment_bound} is true then \eqref{eq:hetero_second_moment_bound}-\eqref{eq:hetero_var_bound} also hold with $\tA = \hA = A$, $\tB = \hB = B$, $\tF = \hF = F$, $\tD_{1} = \hD_{1} = D_1$.

We start our analysis without making any assumption on homogeneity of data that workers have an access to. Next lemma provides an upper bound for the weighted sum of $\EE V_k$.
\begin{lemma}\label{lem:V_k_lemma}
	Let As.~\ref{ass:quasi_strong_convexity},~\ref{ass:L_smoothness}~and~\ref{ass:hetero_second_moment} hold and\footnote{When $\rho = 1$ one can always set the parameters in such a way that $\tB = \hB = C = G = 0$, $D_2 = 0$. In this case we assume that $\frac{2\tB C}{\rho(1-\rho)} = \frac{2\hB C}{\rho(1-\rho)} = \frac{2\tB G}{\rho(1-\rho)} = \frac{2\hB G}{\rho(1-\rho)} = 0$.}
	\begin{eqnarray*}
		\gamma &\le& \min\left\{\frac{1}{4(\tau-1)\mu}, \frac{1}{2\sqrt{e(\tau-1)\left(\tF(\tau-1) + \hF + \frac{2G(\tB(\tau-1)+\hB)}{\rho(1-\rho)}\right)}}\right\},\\
		\gamma &\le& \frac{1}{4\sqrt{2eL(\tau-1)\left(\tA(\tau-1)+\hA +\frac{2C(\tB(\tau-1)+\hB)}{\rho(1-\rho)}\right)}}
	\end{eqnarray*}
	Then \eqref{eq:sum_V_k_bounds} holds with
	\begin{equation}
		H = \frac{4e(\tau-1)(\tB(\tau-1)+\hB)(2+\rho)\gamma^2}{\rho},\quad D_3 = 2e(\tau-1)\left(\tD_{1}(\tau-1)+\hD_{1} + \frac{2D_2(\tB(\tau-1)+\hB)}{\rho}\right).\label{eq:V_k_bound}
	\end{equation}
%	\begin{eqnarray}
%		2L\sum\limits_{k=0}^K w_k\EE V_k &\le& \frac{1}{2}\sum\limits_{k=0}^K w_k\EE\left[f(x^k) - f(x^*)\right] + \frac{8LB(\tau-1)^2\gamma^2(2+\rho)\EE\sigma_0^2}{\rho}\notag\\
%		&&\quad + 4L(\tau-1)^2\gamma^2\left(D_1 + \frac{2BD_2}{\rho}\right)W_K.\label{eq:V_k_bound}
%	\end{eqnarray}
\end{lemma}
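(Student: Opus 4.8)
The plan is to control $V_k = \frac1n\sum_i \|x_i^k - x^k\|^2$ by exploiting the structure of the fixed local loop: $V_k = 0$ whenever $k$ is a multiple of $\tau$ (right after a communication), and for general $k$, if $k_0 \eqdef \tau\lfloor k/\tau\rfloor$ is the last communication time before $k$, then $x_i^k - x^k = -\gamma\sum_{t=k_0}^{k-1}(g_i^t - g^t)$, where $g^t = \frac1n\sum_i g_i^t$. First I would take this telescoped expression, apply Jensen/Cauchy-Schwarz over the at most $\tau-1$ summands to write $\|x_i^k - x^k\|^2 \le \gamma^2(\tau-1)\sum_{t=k_0}^{k-1}\|g_i^t - g^t\|^2$, average over $i$, and use $\frac1n\sum_i\|g_i^t - g^t\|^2 \le \frac1n\sum_i\|g_i^t\|^2$ (since $g^t$ is the mean). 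Then I would split $g_i^t = \bar g_i^t + (g_i^t - \bar g_i^t)$ and bound the two pieces separately: the variance part $\frac1n\sum_i\EE\|g_i^t - \bar g_i^t\|^2$ via \eqref{eq:hetero_var_bound}, and the mean part $\frac1n\sum_i\EE\|\bar g_i^t\|^2$ via \eqref{eq:hetero_second_moment_bound}. This is exactly the reason As.~\ref{ass:hetero_second_moment} is introduced: keeping the $(\tau-1)$-weighted second-moment term and the un-weighted variance term separate is what produces the $\tB(\tau-1)+\hB$ and $\tD_1(\tau-1)+\hD_1$ combinations appearing in \eqref{eq:V_k_bound}.

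Next I would handle the $\sigma_t^2$ terms that appear after invoking \eqref{eq:hetero_second_moment_bound}-\eqref{eq:hetero_var_bound}. Unrolling the recursion \eqref{eq:sigma_k+1_bound}, $\EE\sigma_t^2 \le (1-\rho)^t\EE\sigma_0^2 + \sum_{j<t}(1-\rho)^{t-1-j}\left(2C\EE[f(x^j)-f(x^*)] + G\EE V_j + D_2\right)$, so that summing $\EE\sigma_t^2$ over a block contributes a geometric factor $\tfrac{1}{\rho}$ (hence the $\tfrac{2C(\cdots)}{\rho}$ and $\tfrac{2G(\cdots)}{\rho}$ terms in the stepsize bounds and in $H$, $D_3$). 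The key bookkeeping step is then to sum the resulting bound on $\EE V_k$ against the weights $w_k$ from \eqref{eq:w_k_definition}. Because consecutive $w_k$ within a window of length $\tau$ differ by at most a factor $(1-\eta)^{-\tau}$ with $\eta = \min\{\gamma\mu,\rho/4\}$, and the stepsize restriction $\gamma \le \tfrac{1}{4(\tau-1)\mu}$ (together with $\rho/4 \cdot \tau$ being controlled, which is where the $e$ appears via $(1-1/(4(\tau-1)))^{-(\tau-1)} \le e$ type estimates) ensures $(1-\eta)^{-\tau} \le \sqrt e$ or a similar absolute constant, one can replace $w_j$ by $w_k$ (up to this constant) inside each block. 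This lets me collect $2L\sum_k w_k \EE V_k$ into: a multiple of $\gamma^2 L(\tau-1)^2(\tA(\tau-1)+\hA+\cdots)\sum_k w_k\EE[f(x^k)-f(x^*)]$, which is absorbed into the $\tfrac12\sum w_k \EE[f(x^k)-f(x^*)]$ term on the right of \eqref{eq:sum_V_k_bounds} precisely by the assumed upper bound on $\gamma$; a multiple of $\gamma^2 L(\tau-1)^2(\tF(\tau-1)+\hF+\cdots)\sum_k w_k\EE V_k$, which for small enough $\gamma$ (again the second stepsize condition) can be moved to the left-hand side and absorbed into a fraction of $2L\sum w_k\EE V_k$ itself; the initial-condition term proportional to $\gamma^2 L(\tau-1)(\tB(\tau-1)+\hB)\EE\sigma_0^2 \cdot \tfrac{1}{\rho} \cdot W_K$-style contribution, which after the $w_k$-comparison becomes $2LH\EE\sigma_0^2$ with $H$ as in \eqref{eq:V_k_bound}; and the constant term $\gamma^2 L(\tau-1)(\tD_1(\tau-1)+\hD_1 + \tfrac{2D_2(\cdots)}{\rho})W_K$, which is $2LD_3\gamma^2 W_K$ with $D_3$ as claimed.

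The main obstacle I anticipate is the weight-comparison / absorption step: one must verify carefully that within each local block the geometric weights $w_k$ do not blow up — this is exactly what forces $\gamma \le \tfrac{1}{4(\tau-1)\mu}$ and what introduces the constant $e$ (or $2e$) in $H$ and $D_3$ — and then iterate the self-referential bound on $\sum_k w_k\EE V_k$ (it appears on both sides, through the $\hF,\tF$ terms directly and through the $G$-via-$\sigma$ terms) so that the coefficient of $\sum_k w_k\EE V_k$ on the right is at most $\tfrac12$. The three displayed stepsize restrictions are precisely the three thresholds needed to make (i) the $f(x^k)-f(x^*)$ coefficient at most $\tfrac14$, (ii) the direct $V_k$ coefficient at most $\tfrac14$, and (iii) the $\sigma$-mediated $V_k$ coefficient at most $\tfrac14$; I would derive each threshold by setting the corresponding coefficient $\le \tfrac14$ and solving for $\gamma$. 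The rest — unrolling $\sigma_t^2$, the Cauchy-Schwarz over $\tau-1$ terms, and tracking the $\tfrac1\rho$ factors — is routine. The random-loop case (Section~\ref{sec:random_local_loop_hetero}) would follow the same template with the deterministic block length $\tau-1$ replaced by the expected number of local steps, controlled via the Bernoulli$(p)$ structure, yielding the $\tfrac{1-p}{p}$ and $\tfrac{1-p}{p^2}$ factors.
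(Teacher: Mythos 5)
There is a genuine gap at the very first step, and it is exactly the step that the asymmetric constants in \eqref{eq:V_k_bound} hinge on. You propose to apply Cauchy--Schwarz to the whole sum, $\|x_i^k-x^k\|^2\le \gamma^2(\tau-1)\sum_{t}\|g_i^t-g^t\|^2$, and only \emph{afterwards} split $g_i^t$ into $\bar g_i^t$ and $g_i^t-\bar g_i^t$. In that order the factor $(\tau-1)$ multiplies both pieces, so after invoking \eqref{eq:hetero_second_moment_bound}--\eqref{eq:hetero_var_bound} you would obtain coefficients of the form $(\tau-1)(\tB+\hB)$, $(\tau-1)(\tD_{1}+\hD_{1})$, etc.\ -- i.e.\ nothing better than what you would get from the coarse bound \eqref{eq:second_moment_bound} -- and you cannot reach the claimed $H$ and $D_3$, whose whole point is that $\hB,\hD_{1},\hA,\hF$ enter \emph{without} the extra $(\tau-1)$. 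The combination $\tB(\tau-1)+\hB$ does not come from Cauchy--Schwarz at all; it comes from a martingale-type variance argument applied to the sum \emph{before} any deterministic inequality: conditionally on the past, the deviations $g_i^l-\bar g_i^l$ have zero mean, so their cross terms vanish in expectation and the deviation part of $\EE\bigl\|\sum_l (g_i^l-g^l)\bigr\|^2$ scales like $\sum_l$ rather than $(\tau-1)\sum_l$. The paper packages this as Lemma~\ref{lem:lemma14_stich} (a refined Lemma~14 of \cite{stich2019error}): $\EE\|\sum_{t}\xi_t\|^2 \le e\tau\sum_t\EE\|\EE_t[\xi_t]\|^2 + e\sum_t\EE\|\xi_t-\EE_t[\xi_t]\|^2$, applied with $\xi_l=g_i^l-g^l$; only the conditional-mean part picks up the factor $\tau$, and the constant $e$ in $H$ and $D_3$ originates here (from $(1+\tfrac{1}{\tau-1})^{\tau-1}\le e$), not from the weight-comparison step as you suggest -- within a block the weights are only compared up to the absolute constant $2$ via $\gamma\mu\le\tfrac{1}{4(\tau-1)}$. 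So your plan as written either needs this lemma inserted at the start, or it proves a strictly weaker statement than \eqref{eq:V_k_bound}.

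The remainder of your outline does track the paper's argument: after the (corrected) per-block bound on $\EE V_k$, one compares weights inside each block, unrolls the recursion \eqref{eq:sigma_k+1_bound} to trade $\sum_k w_k\EE\sigma_k^2$ for $\EE\sigma_0^2$, $\sum_k w_k\EE[f(x^k)-f(x^*)]$ and $\sum_k w_k\EE V_k$ (note the resulting coefficients carry $\tfrac{1}{\rho(1-\rho)}$, not just $\tfrac1\rho$, because of the weight shift $(1+\rho/2)(1-\rho)\le 1-\rho/2$), and then the three stepsize restrictions are exactly what make the $f$-coefficient at most $\tfrac{1}{8L}$ and the self-referential $\sum_k w_k\EE V_k$ coefficient at most $\tfrac12$ so it can be absorbed. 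That bookkeeping is right; the missing ingredient is the martingale decomposition of the drift, without which the lemma's constants are unobtainable.
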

\begin{proof}
	Consider some integer $k\ge 0$. There exists such integer $t\ge 0$ that $\tau t \le k \le \tau(t+1)-1$. Using this and Lemma~\ref{lem:lemma14_stich} we get
	\begin{eqnarray*}
		\EE[V_k] &\overset{\eqref{eq:local_sgd_def},\eqref{eq:x^k_recurrsion}}{=}& \frac{1}{n}\sum\limits_{i=1}^n \EE\left[\left\|x_i^{\tau t} - \gamma\sum\limits_{l=\tau t}^{k-1} g_i^l - x^{\tau t} + \gamma\sum\limits_{l=\tau t}^{k-1} g^l\right\|^2\right]\\
		&=& \frac{\gamma^2}{n}\sum\limits_{i=1}^n\EE\left[\left\|\sum\limits_{l=\tau t}^{k-1} \left(g_i^l - g^l\right)\right\|^2\right]\\
		&\overset{\eqref{eq:lemma14_stich}}{\le}& \frac{e\gamma^2(k-\tau t)}{n}\sum\limits_{i=1}^n\sum\limits_{l=\tau t}^{k-1}\EE\left[\left\|\bar{g}_i^l - \bar{g}^l\right\|^2\right] + \frac{e\gamma^2}{n}\sum\limits_{i=1}^n\sum\limits_{l=\tau t}^{k-1}\EE\left[\left\|g_i^l - \bar{g}_i^l - \left(g^l - \bar{g}^l\right)\right\|^2\right]\\
		&\overset{\eqref{eq:variance_decomposition}}{\le}& \frac{e\gamma^2(\tau - 1)}{n}\sum\limits_{i=1}^n\sum\limits_{l=\tau t}^{k-1}\EE\left[\left\|\bar{g}_i^l\right\|^2\right] + \frac{e\gamma^2}{n}\sum\limits_{i=1}^n\sum\limits_{l=\tau t}^{k-1}\EE\left[\left\|g_i^l - \bar{g}_i^l\right\|^2\right],
	\end{eqnarray*}
	where $\bar{g}^k = \frac{1}{n}\sum\limits_{i=1}^n\bar{g}_i^k$.
	Applying Assumption~\ref{ass:hetero_second_moment}, we obtain
	\begin{eqnarray*}
		\EE V_k &\overset{\eqref{eq:hetero_second_moment_bound},\eqref{eq:hetero_var_bound}}{\le}& 2e\left(\tA(\tau - 1)+\hA\right)\gamma^2\sum\limits_{l=\tau t}^{k-1}\EE\left[f(x^l) - f(x^*)\right] + e\left(\tB(\tau-1)+\hB\right)\gamma^2\sum\limits_{l=\tau t}^{k-1}\EE\sigma_l^2 \\
		&&\quad + e\left(\tF(\tau-1)+\hF\right)\gamma^2\sum\limits_{l=\tau t}^{k-1} \EE V_l + e(\tau- 1)\left(\tD_{1}(\tau-1)+\hD_{1}\right)\gamma^2,
	\end{eqnarray*}
	hence
	\begin{eqnarray}
		\sum\limits_{j=\tau t}^k w_j \EE V_j &\le& 2e\left(\tA(\tau - 1)+\hA\right)\gamma^2\sum\limits_{j=\tau t}^k\sum\limits_{l=\tau t}^{j-1}w_j\EE\left[f(x^l) - f(x^*)\right] + e\left(\tB(\tau-1)+\hB\right)\gamma^2\sum\limits_{j=\tau t}^k\sum\limits_{l=\tau t}^{j-1}w_j\EE\sigma_l^2 \notag \\
		&&\quad + e\left(\tF(\tau-1)+\hF\right)\gamma^2\sum\limits_{j=\tau t}^k\sum\limits_{l=\tau t}^{j-1} w_j\EE V_l + e(\tau- 1)\left(\tD_{1}(\tau-1)+\hD_{1}\right)\gamma^2\sum\limits_{j=\tau t}^kw_j. \label{eq:V_k_lemma_technical_1}
	\end{eqnarray}
	Recall that $w_k = (1 - \eta)^{-(k+1)}$ and $\eta = \min\left\{\gamma\mu, \frac{\rho}{4}\right\}$. Together with our assumption on $\gamma$ it implies that for all $0 \le i < k$, $0\le j \le \tau-1$ we have
	\begin{eqnarray}
		w_k &=& (1 - \eta)^{-(k-j+1)}\left(1 - \eta\right)^{-j} \overset{\eqref{eq:1-p/2_inequality}}{\le} w_{k-j}\left(1 + 2\eta\right)^{j} \notag\\
		&\le& w_{k-j}\left(1 + 2\gamma\mu\right)^{j} \le w_{k-j}\left(1 + \frac{1}{2(\tau-1)}\right)^j \le w_{k-j}\exp\left(\frac{j}{2(\tau-1)}\right)\notag\\
		&\le& w_{k-j}\exp\left(\frac{1}{2}\right) \le 2w_{k-j}, \label{eq:V_k_lemma_technical_2}\\
		w_k &=& \left(1 - \eta\right)^{-(k-i+1)}\left(1 - \eta\right)^{-i} \overset{\eqref{eq:1-p/2_inequality}}{\le} w_{k-i}\left(1 + 2\eta\right)^i \le w_{k-i}\left(1 + \frac{\rho}{2}\right)^i, \label{eq:V_k_lemma_technical_3}\\
		w_k &\overset{\eqref{eq:1-p/2_inequality}}{\le}& \left(1 + 2\eta\right)^{k+1} \le \left(1 + \frac{\rho}{2}\right)^{k+1}. \label{eq:V_k_lemma_technical_4}
	\end{eqnarray}
	For simplicity, we introduce new notation: $r_k \eqdef \EE\left[f(x^k) - f(x^*)\right]$. Using this we get
	\begin{eqnarray*}
		\sum\limits_{j=\tau t}^k\sum\limits_{l=\tau t}^{j-1}w_jr_l &\overset{\eqref{eq:V_k_lemma_technical_2}}{\le}& \sum\limits_{j=\tau t}^k\sum\limits_{l=\tau t}^{j-1}2w_l r_l \le 2(k-\tau t)\sum\limits_{j=\tau t}^kw_jr_j \le 2(\tau - 1)\sum\limits_{j=\tau t}^k w_jr_j,\\
		\sum\limits_{j=\tau t}^k\sum\limits_{l=\tau t}^{j-1}w_j\EE\sigma_l^2 &\overset{\eqref{eq:V_k_lemma_technical_2}}{\le}& \sum\limits_{j=\tau t}^k\sum\limits_{l=\tau t}^{j-1}2w_l \EE\sigma_l^2 \le 2(k-\tau t)\sum\limits_{j=\tau t}^kw_j\EE\sigma_j^2 \le 2(\tau - 1)\sum\limits_{j=\tau t}^kw_j\EE\sigma_j^2,\\
		\sum\limits_{j=\tau t}^k\sum\limits_{l=\tau t}^{j-1}w_j\EE V_l &\overset{\eqref{eq:V_k_lemma_technical_2}}{\le}& \sum\limits_{j=\tau t}^k\sum\limits_{l=\tau t}^{j-1}2w_l \EE V_l \le 2(k-\tau t)\sum\limits_{j=\tau t}^kw_j\EE V_j \le 2(\tau - 1)\sum\limits_{j=\tau t}^kw_j \EE V_j.
	\end{eqnarray*}
	Plugging these inequalities in \eqref{eq:V_k_lemma_technical_1} we derive
	\begin{eqnarray*}
		\sum\limits_{j=\tau t}^k w_j \EE V_j &\le& 4e(\tau - 1)(\tA(\tau-1)+\hA)\gamma^2\sum\limits_{j=\tau t}^kw_j r_j + 2e(\tau - 1)(\tB(\tau-1)+\hB)\gamma^2\sum\limits_{j=\tau t}^kw_j\EE\sigma_j^2 \notag \\
		&&\quad + 2e(\tau - 1)(\tF(\tau-1)+\hF)\gamma^2\sum\limits_{j=\tau t}^k w_j\EE V_j + e\left(\tD_{1}(\tau - 1)+\hD_{1}\right)\gamma^2\sum\limits_{j=\tau t}^kw_j.
	\end{eqnarray*}
	Since $V_{\tau t} = 0$ for all integer $t \ge 0$ we obtain
	\begin{eqnarray}
		\sum\limits_{k=0}^K w_k \EE V_k &\le& 4e(\tau - 1)(\tA(\tau-1)+\hA)\gamma^2\sum\limits_{k=0}^K w_k r_k + 2e(\tau - 1)(\tB(\tau-1)+\hB)\gamma^2\sum\limits_{k=0}^Kw_k\EE\sigma_k^2 \notag \\
		&&\quad + 2e(\tau - 1)(\tF(\tau-1)+\hF)\gamma^2\sum\limits_{k=0}^K w_k\EE V_k + e\left(\tD_{1}(\tau - 1)+\hD_{1}\right)\gamma^2 \sum\limits_{k=0}^K w_k\label{eq:V_k_lemma_technical_5}
	\end{eqnarray}
	It remains to estimate the second term in the right-hand side of the previous inequality. First of all,
	\begin{eqnarray}
		\EE\sigma_{k+1}^2 &\overset{\eqref{eq:sigma_k+1_bound}}{\le}& (1 - \rho)\EE\sigma_{k}^2 + 2C \underbrace{\EE\left[f(x^k) - f(x^*)\right]}_{r_k} + G\EE V_k + D_2\notag\\
		&\le& (1-\rho)^{k+1}\EE\sigma_0^2 + 2C\sum\limits_{l=0}^{k}(1-\rho)^{k-l}r_l + G\sum\limits_{l=0}^k(1-\rho)^{k-l}\EE V_l + D_2\sum\limits_{l=0}^{k}(1-\rho)^l\notag\\
		&\le& (1-\rho)^{k+1}\EE\sigma_0^2 + 2C\sum\limits_{l=0}^{k}(1-\rho)^{k-l}r_l + G\sum\limits_{l=0}^k(1-\rho)^{k-l}\EE V_l + D_2\sum\limits_{l=0}^{\infty}(1-\rho)^l\notag\\
		&=& (1-\rho)^{k+1}\EE\sigma_0^2 + 2C\sum\limits_{l=0}^{k}(1-\rho)^{k-l}r_l + G\sum\limits_{l=0}^k(1-\rho)^{k-l}\EE V_l + \frac{D_2}{\rho}.\label{eq:sigma_k_useful_recurrence}
	\end{eqnarray}
	It implies that
	\begin{eqnarray}
		\sum\limits_{k=0}^K w_k\EE\sigma_k^2 &\overset{\eqref{eq:sigma_k_useful_recurrence}}{\le}& \EE\sigma_0^2\sum\limits_{k=0}^K w_k(1-\rho)^{k} + \frac{2C}{1-\rho}\sum\limits_{k=0}^K\sum\limits_{l=0}^k w_k(1-\rho)^{k-l}r_l\notag\\
		&&\quad + \frac{G}{1-\rho}\sum\limits_{k=0}^K\sum\limits_{l=0}^k w_k(1-\rho)^{k-l}\EE V_l +  \frac{D_2 W_K}{\rho}\notag\\
		&\overset{\eqref{eq:V_k_lemma_technical_3},\eqref{eq:V_k_lemma_technical_4}}{\le}& \EE\sigma_0^2\left(1+\frac{\rho}{2}\right)\sum\limits_{k=0}^K\left(1+\frac{\rho}{2}\right)^k(1-\rho)^k + \frac{2C}{1-\rho}\sum\limits_{k=0}^K\sum\limits_{l=0}^k w_l\left(1+\frac{\rho}{2}\right)^{k-l}(1-\rho)^{k-l}r_l\notag\\
		&&\quad + \frac{G}{1-\rho}\sum\limits_{k=0}^K\sum\limits_{l=0}^k w_l\left(1+\frac{\rho}{2}\right)^{k-l}(1-\rho)^{k-l}\EE V_l +  \frac{D_2 W_K}{\rho}\notag\\
		&\overset{\eqref{eq:1+p/2_inequality}}{\le}& \EE\sigma_0^2\left(1+\frac{\rho}{2}\right)\sum\limits_{k=0}^K\left(1-\frac{\rho}{2}\right)^k + \frac{2C}{1-\rho}\sum\limits_{k=0}^K\sum\limits_{l=0}^k w_lr_l\left(1-\frac{\rho}{2}\right)^{k-l}\notag\\
		&&\quad + \frac{G}{1-\rho}\sum\limits_{k=0}^K\sum\limits_{l=0}^k w_l\EE V_l\left(1-\frac{\rho}{2}\right)^{k-l} +  \frac{D_2 W_K}{\rho}\notag\\
		&\le& \EE\sigma_0^2\left(1+\frac{\rho}{2}\right)\sum\limits_{k=0}^\infty\left(1-\frac{\rho}{2}\right)^k + \frac{2C}{1-\rho}\left(\sum\limits_{k=0}^Kw_kr_k\right)\left(\sum\limits_{l=0}^\infty \left(1-\frac{\rho}{2}\right)^{l}\right)\notag\\
		&&\quad + \frac{G}{1-\rho}\left(\sum\limits_{k=0}^Kw_k\EE V_k\right)\left(\sum\limits_{l=0}^\infty \left(1-\frac{\rho}{2}\right)^{l}\right) +  \frac{D_2 W_K}{\rho}\notag\\
		&=& \frac{\EE\sigma_0^2(2+\rho)}{\rho} + \frac{4C}{\rho(1-\rho)}\sum\limits_{k=0}^Kw_kr_k + \frac{2G}{\rho(1-\rho)}\sum\limits_{k=0}^Kw_k\EE V_k +  \frac{D_2 W_K}{\rho}.\label{eq:sigma_k_technical_bound}
	\end{eqnarray}
	Plugging this inequality in \eqref{eq:V_k_lemma_technical_5} we get
	\begin{eqnarray*}
		\sum\limits_{k=0}^K w_k \EE V_k &\le& 4e(\tau-1)\gamma^2\left(\tA(\tau-1)+\hA +\frac{2C(\tB(\tau-1)+\hB)}{\rho(1-\rho)}\right)\sum\limits_{k=0}^K w_k r_k\\
		&&\quad + \frac{2e(\tau-1)(\tB(\tau-1)+\hB)\EE\sigma_0^2(2+\rho)\gamma^2}{\rho}\\
		&&\quad + 2e(\tau-1)\gamma^2\left(\tF(\tau-1) + \hF + \frac{2G(\tB(\tau-1)+\hB)}{\rho(1-\rho)}\right)\sum\limits_{k=0}^K w_k\EE V_k \\
		&&\quad + e(\tau-1)\gamma^2\left(\tD_{1}(\tau-1)+\hD_{1} + \frac{2D_2(\tB(\tau-1)+\hB)}{\rho}\right)W_K.
	\end{eqnarray*}
	Our choice of $\gamma$ implies
	\begin{equation*}
		4e(\tau-1)\gamma^2\left(\tA(\tau-1)+\hA +\frac{2C(\tB(\tau-1)+\hB)}{\rho(1-\rho)}\right) \le \frac{1}{8L}
	\end{equation*}
	and
	\begin{equation*}
		2e(\tau-1)\gamma^2\left(\tF(\tau-1) + \hF + \frac{2G(\tB(\tau-1)+\hB)}{\rho(1-\rho)}\right) \le \frac{1}{2}.
	\end{equation*}		
	Using these inequalities we continue our derivations
	\begin{eqnarray*}
		\frac{1}{2}\sum\limits_{k=0}^K w_k\EE V_k &\le& \frac{1}{8L}\sum\limits_{k=0}^K w_k r_k + \frac{2e(\tau-1)(\tB(\tau-1)+\hB)\EE\sigma_0^2(2+\rho)\gamma^2}{\rho} \\
		&&\quad+ e(\tau-1)\gamma^2\left(\tD_{1}(\tau-1)+\hD_{1} + \frac{2D_2(\tB(\tau-1)+\hB)}{\rho}\right)W_K.
	\end{eqnarray*}
	Multiplying both sides by $4L$ we get the result.
\end{proof}

Clearly, this lemma and Theorem~\ref{thm:main_result} imply the following result.
\begin{corollary}\label{cor:const_loop}
	Let the assumptions of Lemma~\ref{lem:V_k_lemma} are satisfied. Then Assumption~\ref{ass:key_assumption} holds and, in particular, if
	\begin{eqnarray*}
		\gamma &\le& \min\left\{\frac{1}{2\left(A'+\frac{4B'C}{3\rho}\right)}, \frac{L}{F'+\frac{4B'G}{3\rho}}\right\},\\
		\gamma &\le& \min\left\{\frac{1}{4(\tau-1)\mu}, \frac{1}{2\sqrt{e(\tau-1)\left(\tF(\tau-1) + \hF + \frac{2G(\tB(\tau-1)+\hB)}{\rho(1-\rho)}\right)}}\right\},\\
		\gamma &\le& \frac{1}{4\sqrt{2eL(\tau-1)\left(\tA(\tau-1)+\hA +\frac{2C(\tB(\tau-1)+\hB)}{\rho(1-\rho)}\right)}},
	\end{eqnarray*}
	then for all $K\ge 0$ we have
	\begin{eqnarray}
		\EE\left[f(\overline{x}^K) - f(x^*)\right] &\le& \frac{2\|x^0 - x^*\|^2 +   \frac{8B'}{3\rho}\gamma^2 \EE\sigma_0^2 + 4LH\gamma\EE\sigma_0^2}{\gamma W_K} + 2\gamma\left(D_1' + \frac{4B'D_2}{3\rho} + 2L\gamma D_3\right), \label{eq:main_result_const_loop}
	\end{eqnarray}
	where $\overline{x}^K \eqdef \frac{1}{W_K}\sum_{k=0}^K w_k x^k$ and
	\begin{equation*}
		H = \frac{4e(\tau-1)(\tB(\tau-1)+\hB)(2+\rho)\gamma^2}{\rho},\quad D_3 = 2e(\tau-1)\left(\tD_{1}(\tau-1)+\hD_{1} + \frac{2D_2(\tB(\tau-1)+\hB)}{\rho}\right).
	\end{equation*}
	 Moreover, if $\mu > 0$, then
	\begin{eqnarray}
		\EE\left[f(\overline{x}^K) - f(x^*)\right] &\le& \left(1 - \min\left\{\gamma\mu,\frac{\rho}{4}\right\}\right)^K\frac{2\|x^0 - x^*\|^2 +   \frac{8B'}{3\rho}\gamma^2 \EE\sigma_0^2 + 4LH\gamma\EE\sigma_0^2}{\gamma}\notag\\
		&&\quad + 2\gamma\left(D_1' + \frac{4B'D_2}{3\rho} + 2L\gamma D_3\right), \label{eq:main_result_1_const_loop}
	\end{eqnarray}
	and in the case when $\mu = 0$, we have
	\begin{eqnarray}
		\EE\left[f(\overline{x}^K) - f(x^*)\right] &\le& \frac{2\|x^0 - x^*\|^2 +   \frac{8B'}{3\rho}\gamma^2 \EE\sigma_0^2 + 4LH\gamma\EE\sigma_0^2}{\gamma K} + 2\gamma\left(D_1' + \frac{4B'D_2}{3\rho} + 2L\gamma D_3\right). \label{eq:main_result_2_const_loop}
	\end{eqnarray}
\end{corollary}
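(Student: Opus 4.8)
The plan is to recognize that Corollary~\ref{cor:const_loop} is essentially a packaging statement: it is obtained by feeding the output of Lemma~\ref{lem:V_k_lemma} into Theorem~\ref{thm:main_result}, so the proof reduces to checking that the hypotheses of these two results line up with what the corollary assumes. First I would verify that Assumption~\ref{ass:key_assumption} holds. The unbiasedness relation \eqref{eq:unbiasedness} is standing, and \eqref{eq:second_moment_bound}, \eqref{eq:second_moment_bound_2}, \eqref{eq:sigma_k+1_bound} are part of Assumption~\ref{ass:hetero_second_moment} (indeed \eqref{eq:hetero_second_moment_bound}--\eqref{eq:hetero_var_bound} imply \eqref{eq:second_moment_bound} with $A=\tA+\hA$, $B=\tB+\hB$, $F=\tF+\hF$, $D_1=\tD_1+\hD_1$ via the variance decomposition, as noted right after that assumption). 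The only missing ingredient of Assumption~\ref{ass:key_assumption} is the aggregated bound \eqref{eq:sum_V_k_bounds} on $2L\sum_{k} w_k\EE[V_k]$, and this is precisely the conclusion of Lemma~\ref{lem:V_k_lemma}: its stepsize hypotheses are exactly the second and third displayed constraints on $\gamma$ in the corollary, and it delivers $H=\frac{4e(\tau-1)(\tB(\tau-1)+\hB)(2+\rho)\gamma^2}{\rho}$ and $D_3=2e(\tau-1)\left(\tD_1(\tau-1)+\hD_1+\frac{2D_2(\tB(\tau-1)+\hB)}{\rho}\right)$, the constants quoted in the corollary.

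With Assumption~\ref{ass:key_assumption} in hand, I would invoke Theorem~\ref{thm:main_result}. Its stepsize requirement $\gamma\le\min\left\{\frac{1}{2(A'+\frac{4CB'}{3\rho})},\frac{L}{F'+\frac{4GB'}{3\rho}}\right\}$ coincides with the first displayed bound on $\gamma$ in the corollary, so the theorem applies verbatim, Assumptions~\ref{ass:quasi_strong_convexity} and~\ref{ass:L_smoothness} being inherited from the hypotheses of Lemma~\ref{lem:V_k_lemma}. Writing $M=\frac{4B'}{3\rho}$ and $T^0=\|x^0-x^*\|^2+M\gamma^2\sigma_0^2$, the intermediate estimate \eqref{eq:main_result} from the proof of Theorem~\ref{thm:main_result} becomes
\[
\EE\left[f(\overline{x}^K)-f(x^*)\right]\le \frac{2\|x^0-x^*\|^2+\frac{8B'}{3\rho}\gamma^2\EE\sigma_0^2+4LH\gamma\EE\sigma_0^2}{\gamma W_K}+2\gamma\left(D_1'+\frac{4B'D_2}{3\rho}+2L\gamma D_3\right),
\]
which is \eqref{eq:main_result_const_loop}. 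The remaining two displays follow from the dichotomy used at the end of that proof: when $\mu>0$ one has $W_K\ge w_K\ge\left(1-\min\{\gamma\mu,\tfrac{\rho}{4}\}\right)^{-K}$, giving \eqref{eq:main_result_1_const_loop}; when $\mu=0$ one has $w_k\equiv 1$, hence $W_K=K+1\ge K$, giving \eqref{eq:main_result_2_const_loop}.

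There is no genuine obstacle in the corollary itself, since the substantive work---the Lyapunov telescoping behind Theorem~\ref{thm:main_result} and the nested-sum manipulation behind Lemma~\ref{lem:V_k_lemma}---has already been carried out; the only care needed is bookkeeping. Concretely, I would make sure that the constants $H$ and $D_3$ produced by Lemma~\ref{lem:V_k_lemma} are substituted consistently into the neighborhood term $2\gamma(D_1'+\frac{4B'D_2}{3\rho}+2L\gamma D_3)$ and into the factor $4LH\gamma\EE\sigma_0^2$, and that the parameters $A,B,F,D_1$ of \eqref{eq:second_moment_bound}, although not used directly by Lemma~\ref{lem:V_k_lemma}, are still well defined through the identities relating them to the tilde/hat constants. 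Assembling these observations completes the proof.
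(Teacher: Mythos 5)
Your proposal is correct and follows essentially the same route as the paper, which simply combines Lemma~\ref{lem:V_k_lemma} (supplying \eqref{eq:sum_V_k_bounds} with the stated $H$ and $D_3$ under the second and third stepsize bounds) with Theorem~\ref{thm:main_result} and its intermediate estimate \eqref{eq:main_result}, then specializes via $W_K\ge w_K$ for $\mu>0$ and $W_K\ge K$ for $\mu=0$. Your observation that \eqref{eq:hetero_second_moment_bound}--\eqref{eq:hetero_var_bound} yield \eqref{eq:second_moment_bound} via variance decomposition matches the remark made right after Assumption~\ref{ass:hetero_second_moment}, so nothing is missing.
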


\begin{remark}
As we will see later when looking at particular special cases,  local gradient methods are only as good as their non-local counterparts (i.e., when $\tau=1$) in terms of the communication complexity in the fully heterogeneous setup. Furthermore, the non-local methods outperform local ones in terms of  computation complexity. While one might think that this observation is a byproduct of our analysis, our observations are supported by findings in recent literature on this topic~\cite{karimireddy2019scaffold, khaled2020tighter}. To rise to the defense of local methods, we remark that they might be preferable to their non-local cousins in the homogeneous data setup~\cite{woodworth2020local} or for personalized federated learning~\cite{hanzely2020federated}.
\end{remark}

\subsubsection{$\zeta$-Heterogeneous Data}\label{sec:cont_loop_homo}
In this section we assume that $f_1, f_2, \ldots, f_n$ are $\zeta$-heterogeneous (see Definition~\ref{def:zeta_hetero}). Moreover, we additionally assume that $\EE\left[g_i^k\mid x_i^k\right] = \nabla f_i(x_i^k)$ and that the functions $f_i$ for $i\in[n]$ are  $\mu$-strongly convex,
\begin{equation}
	f_i(x) \ge f_i(y) + \langle\nabla f_i(y), x-y\rangle + \frac{\mu}{2}\|x-y\|^2\qquad \forall x,y\in\R^d \label{eq:strong_convexity}
\end{equation}
which implies (e.g., see \cite{nesterov2018lectures})
\begin{equation}
	\langle \nabla f_i(x) - \nabla f_i(y), x-y\rangle \ge \mu\|x-y\|^2\qquad \forall x,y\in\R^d. \label{eq:coercivity}
\end{equation}
\begin{lemma}\label{lem:V_k_lemma_homo}
	Let Assumption~\ref{ass:L_smoothness} be satisfied, inequalities \eqref{eq:unbiasedness}-\eqref{eq:sigma_k+1_bound} hold and\footnote{When $\rho = 1$ one can always set the parameters in such a way that $B = C = G = 0$, $D_2 = 0$. In this case we assume that $\frac{2BC}{\rho(1-\rho)} = \frac{2BG}{\rho(1-\rho)} = 0$.}
	\begin{equation*}
		\gamma \le \min\left\{\frac{1}{4(\tau-1)\mu}, \frac{1}{2\sqrt{(\tau-1)\left(F+\frac{2BG}{\rho(1-\rho)}\right)}}, \frac{1}{4\sqrt{2L(\tau-1)\left(A + \frac{2BC}{\rho(1-\rho)}\right)}}\right\}.
	\end{equation*}
	Moreover, assume that $f_1, f_2, \ldots, f_n$ are $\zeta$-heterogeneous and $\mu$-strongly convex, and $\EE\left[g_i^k\mid x_i^k\right] = \nabla f_i	(x_i^k)$ for all $i\in [n]$. Then \eqref{eq:sum_V_k_bounds} holds with
	\begin{equation}
		H = \frac{4B(\tau-1)\gamma^2(2+\rho)}{\rho},\quad D_3 = 2(\tau-1)\left(D_1 + \frac{\zeta^2}{\gamma\mu} + \frac{2BD_2}{\rho}\right).\label{eq:V_k_bound_homo}
	\end{equation}
\end{lemma}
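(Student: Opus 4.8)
The plan is to follow the skeleton of the proof of Lemma~\ref{lem:V_k_lemma}, but to exploit the extra hypotheses available here --- conditional unbiasedness $\EE[g_i^k\mid x_i^k]=\nabla f_i(x_i^k)$, $\mu$-strong convexity of each $f_i$, and $\zeta$-heterogeneity --- in order to replace the crude ``expand the accumulated drift'' estimate (which in Lemma~\ref{lem:V_k_lemma} costs a factor $(\tau-1)^2$ and the constant $e$) by a genuinely contractive one-step recursion for $V_k$. This is precisely the source of the improved $(\tau-1)$ dependence, the price being the additive term $\zeta^2/(\gamma\mu)$ inside $D_3$.

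First I would fix $k$, choose the integer $t$ with $\tau t\le k\le\tau(t+1)-1$ (so $x_i^{\tau t}=x^{\tau t}$ and $V_{\tau t}=0$), and, for $\tau t\le l\le k-1$, use the update \eqref{eq:local_sgd_def} in the form $x_i^{l+1}-x^{l+1}=(x_i^l-x^l)-\gamma(g_i^l-g^l)$ with $g^l\eqdef\frac1n\sum_i g_i^l$. Squaring, averaging over $i$, using $\sum_i(x_i^l-x^l)=0$, and then taking expectations (tower property plus $\EE[g_i^l\mid x_i^l]=\nabla f_i(x_i^l)$) gives
\begin{equation*}
\EE V_{l+1}=\EE V_l-\frac{2\gamma}{n}\sum_{i=1}^n\EE\big\langle x_i^l-x^l,\ \nabla f_i(x_i^l)\big\rangle+\frac{\gamma^2}{n}\sum_{i=1}^n\EE\|g_i^l-g^l\|^2.
\end{equation*}
In the inner-product term I would split $\nabla f_i(x_i^l)=\big(\nabla f_i(x_i^l)-\nabla f_i(x^l)\big)+\big(\nabla f_i(x^l)-\nabla f(x^l)\big)+\nabla f(x^l)$: the first bracket contributes $\le-2\gamma\mu\,\EE V_l$ by the coercivity bound \eqref{eq:coercivity}, the second contributes $\le\gamma\mu\,\EE V_l+\frac{\gamma}{\mu}\zeta^2$ by Young's inequality and Definition~\ref{def:zeta_hetero}, and the third vanishes because $\sum_i(x_i^l-x^l)=0$; for the last term I would use $\sum_i\|g_i^l-g^l\|^2\le\sum_i\|g_i^l\|^2$ and \eqref{eq:second_moment_bound}. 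Writing $r_l\eqdef\EE[f(x^l)-f(x^*)]$, this yields the contraction
\begin{equation*}
\EE V_{l+1}\le(1-\gamma\mu)\EE V_l+\gamma^2\big(2A r_l+B\EE\sigma_l^2+F\EE V_l\big)+\gamma^2 D_1+\frac{\gamma\zeta^2}{\mu},
\end{equation*}
and unrolling it from $l=\tau t$, using $(1-\gamma\mu)^{\,\cdot}\le1$ and block length $\le\tau-1$, produces
\begin{equation*}
\EE V_k\le\gamma^2\sum_{l=\tau t}^{k-1}\big(2A r_l+B\EE\sigma_l^2+F\EE V_l\big)+(\tau-1)\Big(\gamma^2 D_1+\frac{\gamma\zeta^2}{\mu}\Big),
\end{equation*}
the $\zeta$-heterogeneous analogue of the midway estimate of Lemma~\ref{lem:V_k_lemma}, now with one power of $\tau-1$ and no factor $e$.

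From here the steps mirror Lemma~\ref{lem:V_k_lemma}: multiply by $w_k$, sum over $k=0,\dots,K$, swap the order of the double sums using $w_k\le2w_{k-j}$ for $0\le j\le\tau-1$ (valid since $\gamma\le\frac{1}{4(\tau-1)\mu}$, cf.\ \eqref{eq:V_k_lemma_technical_2}), and substitute the geometric-series bound \eqref{eq:sigma_k_technical_bound} for $\sum_k w_k\EE\sigma_k^2$ (which relies only on \eqref{eq:sigma_k+1_bound} and so is available verbatim). Collecting terms, the coefficient multiplying $\sum_k w_k\EE V_k$ on the right becomes $2(\tau-1)\gamma^2\big(F+\frac{2BG}{\rho(1-\rho)}\big)\le\frac12$ by the second step-size restriction, while the coefficient multiplying $\sum_k w_k r_k$ becomes $4(\tau-1)\gamma^2\big(A+\frac{2BC}{\rho(1-\rho)}\big)$, which the third step-size restriction forces below $\frac{1}{8L}$. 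Absorbing the $V_k$-term to the left and multiplying through by $4L$ puts the inequality into the shape \eqref{eq:sum_V_k_bounds} and identifies $H=\frac{4B(\tau-1)\gamma^2(2+\rho)}{\rho}$ and $D_3=2(\tau-1)\big(D_1+\frac{\zeta^2}{\gamma\mu}+\frac{2BD_2}{\rho}\big)$, completing the proof; combined with Theorem~\ref{thm:main_result} this also gives the corresponding convergence statement.

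The genuinely delicate point is the first step: one must resist re-using the variance-decomposition route of Lemma~\ref{lem:V_k_lemma} and instead notice that strong convexity converts the drift estimate from an expansion into a contraction, so that per-block accumulation only multiplies the \emph{constants} by $\tau-1$ rather than squaring them --- and that heterogeneity enters solely through the single additive $\gamma\zeta^2/\mu$ term coming from one use of Young's inequality. Everything after that is the same weighted-telescoping bookkeeping already carried out in Lemma~\ref{lem:V_k_lemma}, the only care being to track the numerical constants so that $H$ and $D_3$ come out exactly as stated.
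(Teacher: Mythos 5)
Your proposal is correct and follows essentially the same route as the paper's proof: the one-step recursion for $V_k$, coercivity of $\nabla f_i$ from $\mu$-strong convexity, Young's inequality with $\zeta$-heterogeneity producing the single $\gamma\zeta^2/\mu$ term, dropping the cross term with $\nabla f(x^l)$, bounding $\frac1n\sum_i\|g_i^l-g^l\|^2$ by $\frac1n\sum_i\|g_i^l\|^2$ and invoking \eqref{eq:second_moment_bound}, then unrolling over the block and reusing the weighted-sum bookkeeping (including \eqref{eq:sigma_k_technical_bound}) from Lemma~\ref{lem:V_k_lemma}. The constants and stepsize restrictions you track yield exactly the stated $H$ and $D_3$, matching the paper.
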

\begin{proof}
	First of all, if $k \mod \tau = 0$, then $V_k = 0$ by definition. Otherwise, we have
	\begin{eqnarray*}
		V_k &\overset{\eqref{eq:local_sgd_def},\eqref{eq:x^k_recurrsion}}{=}& \frac{1}{n}\sum\limits_{i=1}^n \left\|x_i^{k-1} - x^{k-1} -\gamma g_i^{k-1} + \gamma g^{k-1}\right\|^2\\
		&=& \frac{1}{n}\sum\limits_{i=1}^n\|x_i^{k-1} - x^{k-1}\|^2 + \frac{2\gamma}{n}\sum\limits_{i=1}^n\left\langle x_i^{k-1} - x^{k-1}, g^{k-1} - g_i^{k-1} \right\rangle  + \frac{\gamma^2}{n}\sum\limits_{i=1}^n\|g_i^{k-1} - g^{k-1}\|^2\\
		&=& V_{k-1} + 2\gamma\left\langle\frac{1}{n}\sum\limits_{i=1}^nx_i^{k-1} - x^{k-1}, g^{k-1}\right\rangle + \frac{2\gamma}{n}\sum\limits_{i=1}^n\left\langle x^{k-1}-x_i^{k-1}, g_i^{k-1} \right\rangle\\
		&&\quad  + \frac{\gamma^2}{n}\sum\limits_{i=1}^n\|g_i^{k-1} - g^{k-1}\|^2\\
		&=& V_{k-1} + \frac{2\gamma}{n}\sum\limits_{i=1}^n\left\langle x^{k-1}-x_i^{k-1}, g_i^{k-1} \right\rangle  + \frac{\gamma^2}{n}\sum\limits_{i=1}^n\|g_i^{k-1} - g^{k-1}\|^2.
	\end{eqnarray*}
	Next, we take the conditional expectation $\EE\left[\cdot\mid x^{k-1}\right] \eqdef \EE\left[\cdot\mid x_1^{k-1},\ldots, x_n^{k-1}\right]$ on both sides of the obtained inequality and get
	\begin{eqnarray*}
		\EE\left[V_k\mid x^{k-1}\right] &=& V_{k-1} + \frac{2\gamma}{n}\sum\limits_{i=1}^n\left\langle x^{k-1} - x_i^{k-1}, \nabla f_i(x_i^{k-1}) \right\rangle + \frac{\gamma^2}{n}\sum\limits_{i=1}^n\EE\left[\|g_i^{k-1} - g^{k-1}\|^2\mid x^{k-1}\right]\\
		&\overset{\eqref{eq:variance_decomposition}}{\le}& V_{k-1} + \frac{2\gamma}{n}\sum\limits_{i=1}^n \left\langle x^{k-1} - x_i^{k-1}, \nabla f_i(x_i^{k-1}) - \nabla f_i(x^{k-1}) \right\rangle\\
		&&\quad + \frac{2\gamma}{n}\sum\limits_{i=1}^n\left\langle x^{k-1} - x_i^{k-1}, \nabla f_i(x^{k-1}) \right\rangle + \frac{\gamma^2}{n}\sum\limits_{i=1}^n\EE\left[\|g_i^{k-1}\|^2\mid x^{k-1}\right].
	\end{eqnarray*}
	Since $\frac{1}{n}\sum_{i=1}^n\langle x^{k-1}-x_i^{k-1},\nabla f(x^{k-1}) \rangle = 0$, we can continue as follows:
	\begin{eqnarray*}
		\EE\left[V_k\mid x^{k-1}\right] &\overset{\eqref{eq:coercivity}}{\le}& V_{k-1} - \frac{2\gamma\mu}{n}\sum\limits_{i=1}^n \| x^{k-1} - x_i^{k-1}\|^2 + \frac{\gamma^2}{n}\sum\limits_{i=1}^n\EE\left[\|g_i^{k-1}\|^2\mid x^{k-1}\right]\\
		&&\quad + \frac{2\gamma}{n}\sum\limits_{i=1}^n\left\langle x^{k-1} - x_i^{k-1}, \nabla f_i(x^{k-1}) - \nabla f(x^{k-1}) \right\rangle\\
		&\overset{\eqref{eq:fenchel_young}}{\le}& (1-2\gamma\mu)V_{k-1} + \frac{\gamma^2}{n}\sum\limits_{i=1}^n\EE\left[\|g_i^{k-1}\|^2\mid x^{k-1}\right]\\
		&&\quad +\frac{2\gamma}{n}\sum\limits_{i=1}^n\left(\frac{\mu}{2}\|x^{k-1}-x_i^{k-1}\|^2 + \frac{1}{2\mu}\|\nabla f_i(x^{k-1})-\nabla f(x^{k-1})\|^2\right)\\
		&\overset{\eqref{eq:bounded_data_dissimilarity}}{\le}& (1-\gamma\mu)V_{k-1} + \frac{\gamma^2}{n}\sum\limits_{i=1}^n\EE\left[\|g_i^{k-1}\|^2\mid x^{k-1}\right] + \frac{\gamma\zeta^2}{\mu}.
	\end{eqnarray*}
	Taking full expectation on both sides of previous inequality, we obtain
	\begin{eqnarray*}
		\EE V_k &\overset{\eqref{eq:tower_property}}{\le}& \EE\left[V_{k-1}\right] + \frac{\gamma^2}{n}\sum\limits_{i=1}^n\EE\left[\|g_i^{k-1}\|^2\right] + \frac{\gamma\zeta^2}{\mu}.
	\end{eqnarray*}
	Let $t$ be a non-negative integer for which $\tau t \le k < \tau(t+1)$. Using this and $V_{\tau t} = 0$, we unroll the recurrence and derive
	\begin{eqnarray*}
		\EE[V_k] &\le& \frac{\gamma^2}{n}\sum\limits_{l = \tau t}^{k-1}\sum\limits_{i=1}^n\EE\left[\|g_i^{l}\|^2\right] + \frac{\gamma\zeta^2(k-\tau t)}{\mu}\\
		&\overset{\eqref{eq:second_moment_bound}}{\le}& \gamma^2\sum\limits_{l=\tau t}^{k-1}\left(2A\EE\left[f(x^l) - f(x^*)\right] + B\EE[\sigma_l^2] + F\EE[V_l] + D_1\right) + \frac{\gamma\zeta^2(k-\tau t)}{\mu},
	\end{eqnarray*}
	whence
	\begin{eqnarray}
		\sum\limits_{j=\tau t}^k w_j \EE V_j &\le& 2A\gamma^2\sum\limits_{j=\tau t}^k\sum\limits_{l=\tau t}^{j-1}w_j\EE\left[f(x^l) - f(x^*)\right] + B\gamma^2\sum\limits_{j=\tau t}^k\sum\limits_{l=\tau t}^{j-1}w_j\EE\sigma_l^2 \notag \\
		&&\quad + F\gamma^2\sum\limits_{j=\tau t}^k\sum\limits_{l=\tau t}^{j-1} w_j\EE V_l + (\tau-1)\left(\gamma^2 D_1+\frac{\gamma\zeta^2}{\mu}\right)\sum\limits_{j=\tau t}^kw_j. \notag
	\end{eqnarray}
	If we substitute $A$ with $e(\tA(\tau-1)+\hA)$, $B$ with $e(\tB(\tau-1)+\hB)$, $F$ with $e(\tF(\tau-1)+\hF)$, and $\left(\gamma^2 D_1+\frac{\gamma\zeta^2}{\mu}\right)$ with $e\gamma^2(\tD_1(\tau-1) + \hD_1)$ in the inequality above, we will get inequality \eqref{eq:V_k_lemma_technical_1}.
%	We notice that this inequality is essentially \eqref{eq:V_k_lemma_technical_1} with $\tau-1$ times smaller right-hand side and the second factor in the last term is $\gamma^2 D_1+\frac{\gamma\zeta^2}{\mu}$ instead of $\gamma^2 D_1$. 
	Following the same steps as in the proof of Lemma~\ref{lem:V_k_lemma}, we get
	\begin{eqnarray*}
		\sum\limits_{k=0}^K w_k \EE V_k &\le& 4(\tau-1)\gamma^2\left(A +\frac{2BC}{\rho(1-\rho)}\right)\sum\limits_{k=0}^K w_k r_k + \frac{2B\EE\sigma_0^2(2+\rho)(\tau-1)\gamma^2}{\rho}\\
		&&\quad + 2(\tau-1)\gamma^2\left(F + \frac{2BG}{\rho(1-\rho)}\right)\sum\limits_{k=0}^K w_k\EE V_k + (\tau-1)\gamma^2\left(D_1 + \frac{\zeta^2}{\gamma\mu} + \frac{2BD_2}{\rho}\right)W_K.
	\end{eqnarray*}
	Our choice of $\gamma$ implies that
	\begin{equation*}
		4(\tau-1)\gamma^2\left(A + \frac{2BC}{\rho(1-\rho)}\right) \le \frac{1}{8L}\quad\text{and}\quad 2(\tau-1)\gamma^2\left(F + \frac{2BG}{\rho(1-\rho)}\right) \le \frac{1}{2}.
	\end{equation*}
	Using these inequalities we continue our derivations
	\begin{eqnarray*}
		\frac{1}{2}\sum\limits_{k=0}^K w_k\EE V_k &\le& \frac{1}{8L}\sum\limits_{k=0}^K w_k r_k + \frac{2B\EE\sigma_0^2(2+\rho)(\tau-1)\gamma^2}{\rho}\\
		&&\quad + (\tau-1)\gamma^2\left(D_1 + \frac{\zeta^2}{\gamma\mu} + \frac{2BD_2}{\rho}\right)W_K.
	\end{eqnarray*}
	Multiplying both sides by $4L$ we get the result.
\end{proof}

Clearly, this lemma and Theorem~\ref{thm:main_result} imply the following result.

\begin{corollary}\label{cor:const_loop_homo}
	Let the assumptions of Lemma~\ref{lem:V_k_lemma_homo} be satisfied. Then Assumption~\ref{ass:key_assumption} holds and, in particular, if
	\begin{eqnarray*}
		\gamma &\le& \min\left\{\frac{1}{2(A'+CM)}, \frac{L}{F'+GM}\right\},\quad M = \frac{4B'}{3\rho},\\
		\gamma &\le& \min\left\{\frac{1}{4(\tau-1)\mu}, \frac{1}{2\sqrt{(\tau-1)\left(F+\frac{2BG}{\rho(1-\rho)}\right)}}, \frac{1}{4\sqrt{2L(\tau-1)\left(A + \frac{2BC}{\rho(1-\rho)}\right)}}\right\},
	\end{eqnarray*}
	then for all $K\ge 0$ we have
	\begin{eqnarray}
		\EE\left[f(\overline{x}^K) - f(x^*)\right] &\le& \frac{2T^0 + 4LH\gamma\EE\sigma_0^2}{\gamma W_K} + 2\gamma\left(D_1' + MD_2 + 2L\gamma D_3\right), \label{eq:main_result_const_loop_homo}
	\end{eqnarray}
	where $\overline{x}^K \eqdef \frac{1}{W_K}\sum_{k=0}^K w_k x^k$ and
	\begin{equation*}
		H = \frac{4B(\tau-1)\gamma^2(2+\rho)}{\rho},\quad D_3 = 2(\tau-1)\left(D_1 + \frac{\zeta^2}{\gamma\mu} + \frac{2BD_2}{\rho}\right).
	\end{equation*}
	 Moreover, if $\mu > 0$, then
	\begin{eqnarray}
		\EE\left[f(\overline{x}^K) - f(x^*)\right] &\le& \left(1 - \min\left\{\gamma\mu,\frac{\rho}{4}\right\}\right)^K\frac{2T^0 + 4LH\gamma\EE\sigma_0^2}{\gamma} + 2\gamma\left(D_1' + MD_2 + 2L\gamma D_3\right), \label{eq:main_result_1_const_loop_homo}
	\end{eqnarray}
	and in the case when $\mu = 0$, we have
	\begin{eqnarray}
		\EE\left[f(\overline{x}^K) - f(x^*)\right] &\le& \frac{2T^0 + 4LH\gamma\EE\sigma_0^2}{\gamma K} + 2\gamma\left(D_1' + MD_2 + 2L\gamma D_3\right). \label{eq:main_result_2_const_loop_homo}
	\end{eqnarray}
\end{corollary}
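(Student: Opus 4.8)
The plan is to show that this corollary is an immediate consequence of combining Lemma~\ref{lem:V_k_lemma_homo} with Theorem~\ref{thm:main_result}, so the work is essentially bookkeeping: checking that the two stepsize restrictions line up and that the constants in $\Phi^0$ and $\Psi^0$ match the $H,D_3$ produced by the lemma. First I would observe that, by hypothesis, inequalities \eqref{eq:unbiasedness}--\eqref{eq:sigma_k+1_bound} already hold, and Lemma~\ref{lem:V_k_lemma_homo} supplies the remaining ingredient of Assumption~\ref{ass:key_assumption}, namely \eqref{eq:sum_V_k_bounds}, with
\[
H = \frac{4B(\tau-1)\gamma^2(2+\rho)}{\rho},\qquad D_3 = 2(\tau-1)\left(D_1 + \frac{\zeta^2}{\gamma\mu} + \frac{2BD_2}{\rho}\right),
\]
provided $\gamma$ obeys the three bounds in the lemma. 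Since the corollary imposes exactly these bounds (its second line), Assumption~\ref{ass:key_assumption} is satisfied in full.

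Next I would verify the stepsize hypothesis required by Theorem~\ref{thm:main_result}. Writing $M = \tfrac{4B'}{3\rho}$, we have $\tfrac{4CB'}{3\rho} = CM$ and $\tfrac{4GB'}{3\rho} = GM$, so the condition $\gamma \le \min\{\tfrac{1}{2(A'+4CB'/(3\rho))},\tfrac{L}{F'+4GB'/(3\rho)}\}$ of the theorem is precisely the first line of the corollary's stepsize restriction. Hence all hypotheses of Theorem~\ref{thm:main_result} are in force.

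It then remains to translate the theorem's conclusions into the stated form. Recalling $T^0 \eqdef \|x^0-x^*\|^2 + M\gamma^2\sigma_0^2$, one has $2T^0 = 2\|x^0-x^*\|^2 + \tfrac{8B'}{3\rho}\gamma^2\sigma_0^2$, so the theorem's $\Phi^0 = \tfrac{2\|x^0-x^*\|^2 + \tfrac{8B'}{3\rho}\gamma^2\EE\sigma_0^2 + 4LH\gamma\EE\sigma_0^2}{\gamma}$ equals $\tfrac{2T^0 + 4LH\gamma\EE\sigma_0^2}{\gamma}$, and $\Psi^0 = 2(D_1' + MD_2 + 2L\gamma D_3)$. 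Plugging these into \eqref{eq:main_result} of the proof of Theorem~\ref{thm:main_result} gives \eqref{eq:main_result_const_loop_homo}; the case $\mu>0$ follows from \eqref{eq:main_result_1} (using $W_K \ge (1-\min\{\gamma\mu,\rho/4\})^{-K}$) and the case $\mu=0$ from \eqref{eq:main_result_2} (using $W_K = K+1 \ge K$), yielding \eqref{eq:main_result_1_const_loop_homo} and \eqref{eq:main_result_2_const_loop_homo} respectively. There is no genuine obstacle here; the only point demanding care is confirming that the single displayed stepsize cap of Lemma~\ref{lem:V_k_lemma_homo} together with the theorem's cap are both implied by (indeed, jointly identical to) the two-line restriction stated in the corollary, which is a direct inspection of the expressions.
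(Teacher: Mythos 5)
Your proposal is correct and matches the paper's own (implicit) argument: the paper states the corollary as a direct consequence of Lemma~\ref{lem:V_k_lemma_homo} (which supplies \eqref{eq:sum_V_k_bounds} with the stated $H$ and $D_3$ under the second line of stepsize bounds) combined with Theorem~\ref{thm:main_result} (whose stepsize condition is, with $M=\tfrac{4B'}{3\rho}$, exactly the first line), and your identification of $\Phi^0 = \tfrac{2T^0+4LH\gamma\EE\sigma_0^2}{\gamma}$, $\Psi^0 = 2(D_1'+MD_2+2L\gamma D_3)$ together with the bounds $W_K\ge(1-\min\{\gamma\mu,\rho/4\})^{-K}$ for $\mu>0$ and $W_K\ge K$ for $\mu=0$ is precisely the intended bookkeeping.
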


\subsection{Random Local Loop}\label{sec:random_local_loop}
In this section we show how our results can be applied to analyze \eqref{eq:local_sgd_def} in the case when
\begin{equation*}
	c_{k} = \begin{cases}1,& \text{with probability } p,\\ 0,& \text{with probability } 1-p, \end{cases}
\end{equation*}
where $p$ encodes the probability of initiating communication. This choice in effect leads to a method using a random-length local loop on all devices.

\subsubsection{Heterogeneous Data}\label{sec:random_local_loop_hetero}
As in Section~\ref{sec:const_loop_hetero}, our analysis of \eqref{eq:local_sgd_def} with random length of the local loop relies on Assumption~\ref{ass:hetero_second_moment}. Next lemma provides an upper bound for the weighted sum of $\EE\left[ V_k \right]$ in this case.
\begin{lemma}\label{lem:V_k_lemma_random}
	Let Assumptions~\ref{ass:quasi_strong_convexity},~\ref{ass:L_smoothness}~and~\ref{ass:hetero_second_moment} be satisfied and\footnote{When $\rho = 1$ one can always set the parameters in such a way that $\tB = \hB = C = G = 0$, $D_2 = 0$. In this case we assume that $\frac{2\tB C}{\rho(1-\rho)} = \frac{2\hB C}{\rho(1-\rho)} = \frac{2\tB G}{\rho(1-\rho)} = \frac{2\hB G}{\rho(1-\rho)} = 0$.}
	\begin{eqnarray*}
		\gamma &\le& \min\left\{\frac{p}{16\mu}, \frac{p}{2\sqrt{(1-p)((2+p)\tF+p\hF)}}\right\},\\
		\gamma &\le& \min\left\{\frac{p\sqrt{3\rho(1-\rho)}}{8\sqrt{2G(1-p)\left((p+2)\tB + p\hB\right)}}, \frac{p\sqrt{3}}{16\sqrt{2L(1-p)\left((2+p)\tA + p\hA + \frac{2C\left((p+2)\tB + p\hB\right)}{\rho(1-\rho)}\right)}}\right\}.
	\end{eqnarray*}
	Then \eqref{eq:sum_V_k_bounds} holds with
	\begin{equation}
		H = \frac{64(1-p)\left((p+2)\tB + p\hB\right)(2+\rho)\gamma^2}{3p^2\rho},\quad D_3 = \frac{8(1-p)}{p^2}\left((p+2)\tD_{1} + p \hD_{1} + \frac{8D_2\left((p+2)\tB+p\hB\right)}{3\rho}\right).\label{eq:V_k_bound_random}
	\end{equation}
\end{lemma}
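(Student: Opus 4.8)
The strategy is to establish a one-step recursion for $\EE[V_{k+1}]$ and then unroll it against the weights $w_k$, in direct analogy with the proof of Lemma~\ref{lem:V_k_lemma}, but replacing the deterministic gap to the previous communication step by the Bernoulli structure of the random loop. Throughout, write $r_k \eqdef \EE[f(x^k) - f(x^*)]$, $\bar g_i^k \eqdef \EE[g_i^k \mid x_1^k,\ldots,x_n^k]$, $g^k \eqdef \tfrac1n\sum_i g_i^k$, and note $V_0 = 0$.

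First I would derive the recursion. Since $c_{k+1}\sim\mathrm{Be}(p)$ is independent of the step-$k$ gradient randomness and $V_{k+1}=0$ whenever $c_{k+1}=1$, we have $\EE[V_{k+1}\mid x_1^k,\ldots,x_n^k] = (1-p)\,\EE\big[\tfrac1n\sum_i\|x_i^k - x^k - \gamma(g_i^k - g^k)\|^2 \mid x_1^k,\ldots,x_n^k\big]$. Expanding the square and using $\sum_i (x_i^k - x^k)=0$, the cross term equals $-\tfrac{2\gamma}{n}\sum_i\langle x_i^k - x^k, g_i^k\rangle$, whose conditional expectation is $-\tfrac{2\gamma}{n}\sum_i\langle x_i^k - x^k, \bar g_i^k\rangle$; I bound it by Young's inequality with parameter $\tfrac{p}{2\gamma}$, and bound $\tfrac1n\sum_i\EE\|g_i^k - g^k\|^2$ by the variance decomposition followed by \eqref{eq:hetero_second_moment_bound}--\eqref{eq:hetero_var_bound}. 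Taking full expectation yields
\[
\EE V_{k+1} \le \Big(1 - \tfrac p2\Big)\EE V_k + \tfrac{(1-p)\gamma^2}{p}\Big[2\big((2+p)\tA+p\hA\big)r_k + \big((2+p)\tB+p\hB\big)\EE\sigma_k^2 + \big((2+p)\tF+p\hF\big)\EE V_k + (2+p)\tD_1 + p\hD_1\Big].
\]
The bound $\gamma\le\tfrac{p}{2\sqrt{(1-p)((2+p)\tF+p\hF)}}$ ensures the two $\EE V_k$ contributions combine into a self-coefficient bounded by $\alpha\eqdef 1-\tfrac p4$.

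Next I would unroll. Using $V_0=0$ one gets $\EE V_k \le \tfrac{(1-p)\gamma^2}{p}\sum_{l=0}^{k-1}\alpha^{\,k-1-l}\,s_l$, where $s_l\eqdef 2((2+p)\tA+p\hA)r_l + ((2+p)\tB+p\hB)\EE\sigma_l^2 + (2+p)\tD_1+p\hD_1$. Multiplying by $w_k$, summing over $k\le K$ and exchanging the order of summation, the inner sum is $\sum_{k>l}w_k\alpha^{k-1-l} = w_l\,\alpha^{-1}\sum_{m\ge1}\big((1-\eta)^{-1}\alpha\big)^m$. Since $\gamma\le\tfrac{p}{16\mu}$ forces $\eta=\min\{\gamma\mu,\rho/4\}\le\gamma\mu\le\tfrac p{16}$, we get $(1-\eta)^{-1}\alpha\le 1-\tfrac p8$, so the geometric series is $\cO(1/p)$ and hence $\sum_k w_k\EE V_k \le \tfrac{c_*(1-p)\gamma^2}{p^2}\sum_l w_l s_l$ for a small absolute constant $c_*$. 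It remains to handle $\sum_l w_l\EE\sigma_l^2$: repeating the chain \eqref{eq:sigma_k_useful_recurrence}--\eqref{eq:sigma_k_technical_bound} (which uses only \eqref{eq:sigma_k+1_bound} and the weight identities, hence applies verbatim), it is bounded by $\tfrac{(2+\rho)}{\rho}\EE\sigma_0^2 + \tfrac{4C}{\rho(1-\rho)}\sum_k w_k r_k + \tfrac{2G}{\rho(1-\rho)}\sum_k w_k\EE V_k + \tfrac{D_2 W_K}{\rho}$. Substituting, the resulting $\sum w_k\EE V_k$ term carries a factor that $\gamma\le\tfrac{p\sqrt{3\rho(1-\rho)}}{8\sqrt{2G(1-p)((p+2)\tB+p\hB)}}$ renders $\le\tfrac12$, while the combined coefficient of $\sum w_k r_k$ is made $\le\tfrac1{8L}$ by the last stepsize bound $\gamma\le\tfrac{p\sqrt3}{16\sqrt{2L(1-p)((2+p)\tA+p\hA+2C((p+2)\tB+p\hB)/(\rho(1-\rho)))}}$.

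Finally, moving $\tfrac12\sum w_k\EE V_k$ to the left and multiplying through by $4L$ puts the estimate in the shape of \eqref{eq:sum_V_k_bounds}: the coefficient of $\tfrac12\sum w_k r_k$ comes out correct, the $\EE\sigma_0^2$ coefficient equals $2LH$ with $H$ as in \eqref{eq:V_k_bound_random} (the $1/p^2$ from the geometric-series bound, $(1-p)((p+2)\tB+p\hB)$ from the recursion, $(2+\rho)/\rho$ from the $\sigma$-bound), and the $\gamma^2 W_K$ coefficient equals $2LD_3$ with $D_3$ as in \eqref{eq:V_k_bound_random}. I expect the main difficulty to be the bookkeeping in the unrolling step — reconciling the random-loop contraction $1-p/4$, the weight growth $(1-\eta)^{-1}$, and the noise contraction $1-\rho$ so that every geometric series converges, and tracking constants tightly enough to land on exactly the stated $H$ and $D_3$; the remaining manipulations are routine and parallel those in Lemma~\ref{lem:V_k_lemma}.
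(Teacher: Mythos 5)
Your proposal follows essentially the same route as the paper's proof: the one-step conditional recursion exploiting the Bernoulli averaging (the paper gets the identical $(1-\tfrac{p}{2})$-contraction and $\tfrac{(2+p)}{p}$-coefficients via variance decomposition plus \eqref{eq:a+b_norm_beta} rather than expanding the square and applying Young to the cross term, which is the same computation), the stepsize condition yielding the $1-\tfrac{p}{4}$ self-coefficient, unrolling and summing against $w_k$ with the weight inequalities from $\gamma\le \tfrac{p}{16\mu}$, re-using \eqref{eq:sigma_k_technical_bound} verbatim, and invoking the last two stepsize bounds to make the $\sum w_k\EE V_k$ coefficient $\le\tfrac12$ and the $\sum w_k r_k$ coefficient $\le\tfrac{1}{8L}$ before multiplying by $4L$. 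The only bookkeeping point (which you flag yourself) is that the constant $(2+p)\tD_1+p\hD_1$ term does not carry a weight $w_l$, so its geometric sum gives the sharper factor $\tfrac{4}{p}W_K$ rather than the $\tfrac{32}{3p}$ arising from the weight-shifted sums; this is what makes the constants land exactly on the stated $H$ and $D_3$.
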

\begin{proof}
	First of all, we introduce new notation: $\EE[\cdot\mid x^{k},g^{k}]\eqdef \EE[\cdot\mid x_1^{k},\ldots,x_n^{k},g_1^{k},\ldots,g_n^{k}]$, $\EE[\cdot\mid x^{k}]\eqdef \EE[\cdot\mid x_1^{k},\ldots,x_n^{k}]$. By definition of $V_k$, we have
	\begin{eqnarray*}
		\EE\left[V_{k+1}\mid x^k\right] &\overset{\eqref{eq:tower_property}}{=}& \frac{1}{n}\sum\limits_{i=1}^n\EE\left[\EE\left[\|x_i^{k+1} - x^{k+1}\|^2\mid x^k, g^k\right]\mid x^k\right]\\
		&=& \frac{1-p}{n}\sum\limits_{i=1}^n\EE\left[\|x_i^k - x^k - \gamma g_i^k + \gamma g^k\|^2\mid x^k\right]\\
		&\overset{\eqref{eq:variance_decomposition}}{=}& \frac{1-p}{n}\sum\limits_{i=1}^n\|x_i^k - x^k - \gamma \bar{g}_i^k + \gamma \bar{g}^k\|^2 + \frac{(1-p)\gamma^2}{n}\sum\limits_{i=1}^n\EE\left[\|g_i^k - \bar{g}_i^k - (g^k - \bar{g}^k)\|^2\mid x^k\right]\\
		&\overset{\eqref{eq:a+b_norm_beta},\eqref{eq:variance_decomposition}}{\le}& \frac{(1-p)\left(1+\frac{p}{2}\right)}{n}\sum\limits_{i=1}^n\|x_i^k - x^k\|^2 + \frac{(1-p)\left(1+\frac{2}{p}\right)\gamma^2}{n}\sum\limits_{i=1}^n\|\bar{g}_i^k - \bar{g}^k\|^2\\
		&&\quad + \frac{(1-p)\gamma^2}{n}\sum\limits_{i=1}^n\EE\left[\|g_i^k - \bar{g}_i^k\|^2\mid x^k\right]\\
		&\overset{\eqref{eq:1+p/2_inequality},\eqref{eq:variance_decomposition}}{\le}& \left(1 - \frac{p}{2}\right)V_k + \frac{(1-p)(2+p)\gamma^2}{pn}\sum\limits_{i=1}^n\left\|\bar{g}_i^k\right\|^2 + \frac{(1-p)\gamma^2}{n}\sum\limits_{i=1}^n\EE\left[\|g_i^k - \bar{g}_i^k\|^2\mid x^k\right],
	\end{eqnarray*}
	where $\bar{g}^k = \EE[g^k\mid x^k]$. Taking the full expectation we derive
	\begin{eqnarray*}
		\EE\left[V_{k+1}\right] &\le& \left(1 - \frac{p}{2}\right)\EE\left[V_k\right] + \frac{(1-p)(2+p)\gamma^2}{pn}\sum\limits_{i=1}^n\EE\left[\left\|\bar{g}_i^k\right\|^2\right] + \frac{(1-p)\gamma^2}{n}\sum\limits_{i=1}^n\EE\left[\|g_i^k - \bar{g}_i^k\|^2\right]\\
		&\overset{\eqref{eq:hetero_second_moment_bound},\eqref{eq:hetero_var_bound}}{\le}& \left(1 - \frac{p}{2}\right)\EE\left[V_k\right] + 2(1-p)\gamma^2\left(\frac{2+p}{p}\tA + \hA\right)\EE\left[f(x^k)-f(x^*)\right]\\
		&&\quad + (1-p)\gamma^2\left(\left(\frac{2+p}{p}\tB + \hB\right)\EE\sigma_k^2+\left(\frac{2+p}{p}\tF + \hF\right)\EE V_k\right)\\
		&&\quad + (1-p)\gamma^2\left(\frac{2+p}{p}\tD_{1} + \hD_{1}\right).
	\end{eqnarray*}
	This inequality together with $\gamma \le \frac{p}{2\sqrt{(1-p)((2+p)\tF+p\hF)}}$  imply
	\begin{eqnarray*}
		\EE\left[V_{k+1}\right] &\le& \left(1 - \frac{p}{4}\right)\EE\left[V_k\right] + 2(1-p)\gamma^2\left(\frac{2+p}{p}\tA + \hA\right)\EE\left[f(x^k)-f(x^*)\right]\\
		&&\quad + (1-p)\gamma^2\left(\frac{2+p}{p}\tB + \hB\right)\EE\sigma_k^2 + (1-p)\gamma^2\left(\frac{2+p}{p}\tD_{1} + \hD_{1}\right).
	\end{eqnarray*}
	Unrolling the recurrence, we obtain
	\begin{eqnarray*}
		\EE\left[V_{k+1}\right] &\le& 2(1-p)\gamma^2\left(\frac{2+p}{p}\tA + \hA\right)\sum\limits_{l=0}^k \left(1 - \frac{p}{4}\right)^{k-l}\EE\left[f(x^l) - f(x^*)\right]\\
		&&\quad + (1-p)\gamma^2\left(\frac{2+p}{p}\tB + \hB\right)\sum\limits_{l=0}^k \left(1 - \frac{p}{4}\right)^{k-l}\EE\sigma_l^2\\
		&&\quad + (1-p)\gamma^2\left(\frac{2+p}{p}\tD_{1} + \hD_{1}\right)\sum\limits_{l=0}^k\left(1-\frac{p}{4}\right)^{k-l}.
	\end{eqnarray*}
	As a consequence, we derive
	\begin{eqnarray}
		\sum\limits_{k=0}^K w_k\EE\left[V_k\right] &\le& \frac{2(1-p)\left((2+p)\tA + p\hA\right)\gamma^2}{p\left(1-\frac{p}{4}\right)}\sum\limits_{k=0}^K\sum\limits_{l=0}^k\left(1 - \frac{p}{4}\right)^{k-l}w_kr_l\notag\\
		&&\quad + \frac{(1-p)\left((2+p)\tB + p\hB\right)\gamma^2}{p\left(1-\frac{p}{4}\right)}\sum\limits_{k=0}^K\sum\limits_{l=0}^k\left(1 - \frac{p}{4}\right)^{k-l}w_k\EE\left[\sigma_l^2\right]\notag\\
		&&\quad + \frac{(1-p)\left((2+p)\tD_{1} + p\hD_{1}\right)\gamma^2}{p}\sum\limits_{k=0}^K\sum\limits_{l=0}^{k-1}\left(1 - \frac{p}{4}\right)^{k-1-l}w_k,\label{eq:V_k_bound_rand_tech_1}
	\end{eqnarray}
	where we use new notation: $r_l = \EE\left[f(x^l) - f(x^*)\right]$. Recall that $w_k = (1 - \eta)^{-(k+1)}$ and $\eta = \min\left\{\gamma\mu, \frac{\rho}{4}\right\}$. Together with our assumption on $\gamma$ it implies that for all $0 \le i < k$ we have
	\begin{eqnarray}
		w_k &=& (1 - \eta)^{-(k-i+1)}\left(1 - \eta\right)^{-i} \overset{\eqref{eq:1-p/2_inequality}}{\le} w_{k-i}\left(1 + 2\eta\right)^{i} \notag\\
		&\le& w_{k-i}\left(1 + 2\gamma\mu\right)^{i} \le w_{k-i}\left(1+\frac{p}{8}\right)^i, \label{eq:V_k_bound_rand_tech_2}\\
		w_k &=& \left(1 - \eta\right)^{-(k-i+1)}\left(1 - \eta\right)^{-i} \overset{\eqref{eq:1-p/2_inequality}}{\le} w_{k-i}\left(1 + 2\eta\right)^i \le w_{k-i}\left(1 + \frac{\rho}{2}\right)^i , \label{eq:V_k_bound_rand_tech_3}\\
		w_k &\overset{\eqref{eq:1-p/2_inequality}}{\le}& \left(1 + 2\eta\right)^{k+1} \le \left(1 + \frac{\rho}{2}\right)^{k+1}. \label{eq:V_k_bound_rand_tech_4}
	\end{eqnarray}
	Having these inequalities in hand we obtain
	\begin{eqnarray*}
		\sum\limits_{k=0}^K\sum\limits_{l=0}^k\left(1 - \frac{p}{4}\right)^{k-l}w_kr_l &\overset{\eqref{eq:V_k_bound_rand_tech_2}}{\le}& \sum\limits_{k=0}^K\sum\limits_{l=0}^k\left(1 - \frac{p}{4}\right)^{k-l}\left(1 + \frac{p}{8}\right)^{k-l}w_lr_l\\
		&\overset{\eqref{eq:1+p/2_inequality}}{\le}& \sum\limits_{k=0}^K\sum\limits_{l=0}^k\left(1 - \frac{p}{8}\right)^{k-l}w_lr_l \le \left(\sum\limits_{k=0}^K w_k r_k\right)\left(\sum\limits_{k=0}^{\infty}\left(1 - \frac{p}{8}\right)^{k}\right)\\
		&=&\frac{8}{p}\sum\limits_{k=0}^K w_k r_k,
	\end{eqnarray*}
	\begin{eqnarray*}
		\sum\limits_{k=0}^K\sum\limits_{l=0}^k\left(1 - \frac{p}{4}\right)^{k-l}w_k\EE\left[\sigma_l^2\right] &\overset{\eqref{eq:V_k_bound_rand_tech_2}}{\le}& \sum\limits_{k=0}^K\sum\limits_{l=0}^k\left(1 - \frac{p}{4}\right)^{k-l}\left(1 + \frac{p}{8}\right)^{k-l}w_l\EE\left[\sigma_l^2\right]\\
		&\overset{\eqref{eq:1+p/2_inequality}}{\le}& \sum\limits_{k=0}^K\sum\limits_{l=0}^k\left(1 - \frac{p}{8}\right)^{k-l}w_l\EE\left[\sigma_l^2\right] \le \left(\sum\limits_{k=0}^K w_k \EE\left[\sigma_k^2\right]\right)\left(\sum\limits_{k=0}^{\infty}\left(1 - \frac{p}{8}\right)^{k}\right)\\
		&=&\frac{8}{p}\sum\limits_{k=0}^K w_k \EE\left[\sigma_k^2\right],
	\end{eqnarray*}
	and
	\begin{eqnarray*}
		\sum\limits_{k=0}^K\sum\limits_{l=0}^{k-1}\left(1 - \frac{p}{4}\right)^{k-1-l}w_k &\le& \left(\sum\limits_{k=0}^K w_k \right)\left(\sum\limits_{k=0}^{\infty}\left(1 - \frac{p}{4}\right)^{k}\right) = \frac{4W_K}{p}.
	\end{eqnarray*}
	Plugging these inequalities together with $1-\frac{p}{4}\ge \frac{3}{4}$ in \eqref{eq:V_k_bound_rand_tech_1}, we derive
	\begin{eqnarray}
		\sum\limits_{k=0}^K w_k\EE\left[V_k\right] &\le& \frac{64(1-p)\left((2+p)\tA + p\hA\right)\gamma^2}{3p^2}\sum\limits_{k=0}^Kw_kr_k + \frac{32(1-p)\left((2+p)\tB + p\hB\right)\gamma^2}{3p^2}\sum\limits_{k=0}^Kw_k\EE\left[\sigma_k^2\right]\notag\\
		&&\quad + \frac{4(1-p)\left((2+p)\tD_{1} + p\hD_{1}\right)\gamma^2}{p^2}W_K.\label{eq:V_k_bound_rand_tech_5}
	\end{eqnarray}
	It remains to estimate the second term on the right-hand side of this inequality. We notice that an analogous term appears in the proof of Lemma~\ref{lem:V_k_lemma}. In particular, in that proof inequality \eqref{eq:sigma_k_technical_bound} was shown via inequalities \eqref{eq:sigma_k+1_bound}, \eqref{eq:V_k_bound_rand_tech_3}, \eqref{eq:V_k_bound_rand_tech_4} and \eqref{eq:1+p/2_inequality} which hold in this case too. Therefore, we get that
	\begin{eqnarray}
	\sum\limits_{k=0}^Kw_k\EE\left[\sigma_k^2\right] &\overset{\eqref{eq:sigma_k_technical_bound}}{\le}& \frac{\EE\sigma_0^2(2+\rho)}{\rho} + \frac{4C}{\rho(1-\rho)}\sum\limits_{k=0}^Kw_kr_k + \frac{2G}{\rho(1-\rho)}\sum\limits_{k=0}^Kw_k\EE V_k +  \frac{D_2 W_K}{\rho},\notag
	\end{eqnarray}
	whence
	\begin{eqnarray}
		\sum\limits_{k=0}^K w_k\EE\left[V_k\right] &\overset{\eqref{eq:V_k_bound_rand_tech_5}}{\le}& \frac{64(1-p)\gamma^2\left((2+p)\tA + p\hA + \frac{2C\left((p+2)\tB + p\hB\right)}{\rho(1-\rho)}\right)}{3p^2}\sum\limits_{k=0}^Kw_kr_k \notag\\
		&&\quad + \frac{32(1-p)\left((p+2)\tB + p\hB\right)(2+\rho)\gamma^2\EE\sigma_0^2}{3p^2\rho}\notag\\
		&&\quad + \frac{64G(1-p)\left((p+2)\tB + p\hB\right)\gamma^2}{3p^2\rho(1-\rho)}\sum\limits_{k=0}^K w_k\EE\left[V_k\right]\notag\\
		&&\quad + \frac{4(1-p)\gamma^2}{p^2}\left((p+2)\tD_{1} + p \hD_{1} + \frac{8D_2\left((p+2)\tB+p\hB\right)}{3\rho}\right)W_K.\notag
	\end{eqnarray}
	Our assumptions on $\gamma$ imply
	\begin{eqnarray*}
		\frac{64(1-p)\gamma^2\left((2+p)\tA + p\hA + \frac{2C\left((p+2)\tB + p\hB\right)}{\rho(1-\rho)}\right)}{3p^2} \le \frac{1}{8L},\quad \frac{64G(1-p)\left((p+2)\tB + p\hB\right)\gamma^2}{3p^2\rho(1-\rho)} \le \frac{1}{2}.
	\end{eqnarray*}
	Next, we introduce new notation as follows:
	\begin{equation*}
		H = \frac{64(1-p)\left((p+2)\tB + p\hB\right)(2+\rho)\gamma^2}{3p^2\rho},\quad D_3 = \frac{8(1-p)}{p^2}\left((p+2)\tD_{1} + p \hD_{1} + \frac{8D_2\left((p+2)\tB+p\hB\right)}{3\rho}\right).
	\end{equation*}
	Putting all together, we get
	\begin{equation*}
		\frac{1}{2}\sum\limits_{k=0}^Kw_k\EE\left[V_k\right] \le \frac{1}{8L}\sum\limits_{k=0}^K w_k r_k + \frac{H}{2}\EE\sigma_0^2 + \frac{D_3}{2}\gamma^2 W_K,
	\end{equation*}
	which concludes the proof.
\end{proof}

This lemma and Theorem~\ref{thm:main_result} imply the following result.
\begin{corollary}\label{cor:rand_loop}
	Let the assumptions of Lemma~\ref{lem:V_k_lemma_random} be satisfied. Then Assumption~\ref{ass:key_assumption} holds and, in particular, if
	\begin{eqnarray*}
		\gamma &\le& \min\left\{\frac{1}{2\left(A'+\frac{4B'C}{3\rho}\right)}, \frac{L}{F'+\frac{4B'G}{3\rho}}, \frac{p}{16\mu}, \frac{p}{2\sqrt{(1-p)((2+p)\tF+p\hF)}}\right\},\\
		\gamma &\le& \min\left\{\frac{p\sqrt{3\rho(1-\rho)}}{8\sqrt{2G(1-p)\left((p+2)\tB + p\hB\right)}}, \frac{p\sqrt{3}}{16\sqrt{2L(1-p)\left((2+p)\tA + p\hA + \frac{2C\left((p+2)\tB + p\hB\right)}{\rho(1-\rho)}\right)}}\right\},
	\end{eqnarray*}
	then for all $K\ge 0$ we have
	\begin{eqnarray}
		\EE\left[f(\overline{x}^K) - f(x^*)\right] &\le& \frac{2\|x^0 - x^*\|^2 +   \frac{8B'}{3\rho}\gamma^2 \EE\sigma_0^2 + 4LH\gamma\EE\sigma_0^2}{\gamma W_K} + 2\gamma\left(D_1' + \frac{4B'D_2}{3\rho} + 2L\gamma D_3\right), \label{eq:main_result_rand_loop}
	\end{eqnarray}
	where $\overline{x}^K \eqdef \frac{1}{W_K}\sum_{k=0}^K w_k x^k$ and
	\begin{equation*}
		H = \frac{64(1-p)\left((p+2)\tB + p\hB\right)(2+\rho)\gamma^2}{3p^2\rho},\quad D_3 = \frac{8(1-p)}{p^2}\left((p+2)\tD_{1} + p \hD_{1} + \frac{8D_2\left((p+2)\tB+p\hB\right)}{3\rho}\right).
	\end{equation*}
	 Moreover, if $\mu > 0$, then
	\begin{eqnarray}
		\EE\left[f(\overline{x}^K) - f(x^*)\right] &\le& \left(1 - \min\left\{\gamma\mu,\frac{\rho}{4}\right\}\right)^K\frac{2\|x^0 - x^*\|^2 +   \frac{8B'}{3\rho}\gamma^2 \EE\sigma_0^2 + 4LH\gamma\EE\sigma_0^2}{\gamma}\notag\\
		&&\quad + 2\gamma\left(D_1' + \frac{4B'D_2}{3\rho} + 2L\gamma D_3\right), \label{eq:main_result_1_rand_loop}
	\end{eqnarray}
	and in the case when $\mu = 0$, we have
	\begin{eqnarray}
		\EE\left[f(\overline{x}^K) - f(x^*)\right] &\le& \frac{2\|x^0 - x^*\|^2 +   \frac{8B'}{3\rho}\gamma^2 \EE\sigma_0^2 + 4LH\gamma\EE\sigma_0^2}{\gamma K} + 2\gamma\left(D_1' + \frac{4B'D_2}{3\rho} + 2L\gamma D_3\right). \label{eq:main_result_2_rand_loop}
	\end{eqnarray}
\end{corollary}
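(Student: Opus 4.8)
The plan is to chain together Lemma~\ref{lem:V_k_lemma_random} and Theorem~\ref{thm:main_result}, since the corollary is just the specialization of the general theory to the random-loop regime. No new estimate is needed; the content is to check that the hypotheses of both results are met and to match constants.

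First I would verify that all four inequalities comprising Assumption~\ref{ass:key_assumption} are in force. Inequalities \eqref{eq:second_moment_bound_2} and \eqref{eq:sigma_k+1_bound} are part of the hypothesis of Lemma~\ref{lem:V_k_lemma_random} (it assumes \eqref{eq:second_moment_bound}--\eqref{eq:sigma_k+1_bound} via Assumption~\ref{ass:hetero_second_moment}), and \eqref{eq:second_moment_bound} follows from the pair \eqref{eq:hetero_second_moment_bound}--\eqref{eq:hetero_var_bound} of Assumption~\ref{ass:hetero_second_moment} by the variance decomposition, exactly as remarked after that assumption (with $A=\tA+\hA$, $B=\tB+\hB$, $F=\tF+\hF$, $D_1=\tD_1+\hD_1$). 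The only nontrivial ingredient is \eqref{eq:sum_V_k_bounds}: this is precisely the conclusion of Lemma~\ref{lem:V_k_lemma_random}, which holds provided $\gamma$ obeys the three displayed constraints of that lemma---all of which are implied by the two stepsize constraints assumed in the corollary. Hence Assumption~\ref{ass:key_assumption} holds, with $H$ and $D_3$ as given in \eqref{eq:V_k_bound_random}, proving the first assertion of the corollary.

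Next I would invoke Theorem~\ref{thm:main_result}. Its stepsize requirement $0<\gamma\le\min\{\tfrac{1}{2(A'+4CB'/(3\rho))},\tfrac{L}{F'+4GB'/(3\rho)}\}$ is exactly the first two terms of the corollary's first displayed constraint, so it is satisfied. Substituting the specific $H,D_3$ from \eqref{eq:V_k_bound_random} into the quantities $\Phi^0,\Psi^0$ of Theorem~\ref{thm:main_result}, bound \eqref{eq:main_result_1} becomes \eqref{eq:main_result_1_rand_loop} and \eqref{eq:main_result_2} becomes \eqref{eq:main_result_2_rand_loop}. The intermediate bound \eqref{eq:main_result_rand_loop} (valid for all $K$ and both signs of $\mu$) is just inequality \eqref{eq:main_result} from the proof of Theorem~\ref{thm:main_result}, read before the case split, after rewriting $M\gamma^2=\tfrac{4B'}{3\rho}\gamma^2$ and $2T^0=2\|x^0-x^*\|^2+\tfrac{8B'}{3\rho}\gamma^2\EE\sigma_0^2$.

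There is essentially no obstacle: the heavy lifting is already done in Lemma~\ref{lem:V_k_lemma_random} and Theorem~\ref{thm:main_result}. The only points requiring a moment's care are bookkeeping: (i) confirming that the union of the stepsize constraints of the lemma and of the theorem coincides with the two displayed constraints of the corollary, and (ii) matching the constant $M=\tfrac{4B'}{3\rho}$ and the factor $2$ in $2T^0$ so that the coefficients in \eqref{eq:main_result_rand_loop}--\eqref{eq:main_result_2_rand_loop} agree verbatim with those produced by Theorem~\ref{thm:main_result}.
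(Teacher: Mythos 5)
Your proposal is correct and matches the paper's own route: the paper derives this corollary exactly by noting that Lemma~\ref{lem:V_k_lemma_random} supplies \eqref{eq:sum_V_k_bounds} with the stated $H$ and $D_3$ (so Assumption~\ref{ass:key_assumption} holds under the combined stepsize constraints) and then reading off Theorem~\ref{thm:main_result}, with \eqref{eq:main_result_rand_loop} being the pre-case-split bound \eqref{eq:main_result} after substituting $M=\tfrac{4B'}{3\rho}$ and $2T^0=2\|x^0-x^*\|^2+\tfrac{8B'}{3\rho}\gamma^2\EE\sigma_0^2$. Your bookkeeping of the constants and stepsize conditions is accurate, so nothing further is needed.
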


\subsubsection{$\zeta$-Heterogeneous Data}\label{sec:random_loop_homo}
In this section we assume that $f_1, f_2, \ldots, f_n$ are $\zeta$-heterogeneous (see Definition~\ref{def:zeta_hetero}). Moreover, we additionally assume that $\EE\left[g_i^k\mid x_i^k\right] = \nabla f_i(x_i^k)$ and we also assume $\mu$-strong convexity of the functions $f_i$ for $i\in[n]$.

\begin{lemma}\label{lem:V_k_lemma_random_homo}
	Let Assumption \ref{ass:L_smoothness} be satisfied, inequalities \eqref{eq:unbiasedness}-\eqref{eq:sigma_k+1_bound} hold and\footnote{When $\rho = 1$ one can always set the parameters in such a way that $B = C = G = 0$, $D_2 = 0$. In this case we assume that $\frac{2BC}{\rho(1-\rho)} = \frac{2BG}{\rho(1-\rho)} = 0$.}
	\begin{eqnarray*}
		\gamma &\le& \min\left\{\frac{p}{8\mu},\sqrt{\frac{p}{2F(1-p)}}, \sqrt{\frac{p\rho(1-\rho)}{32BG(1-p)}}, \sqrt{\frac{p}{128L(1-p)\left(A + \frac{2BC}{\rho(1-\rho)}\right)}}\right\}.
	\end{eqnarray*}
	Moreover, assume that $f_1, f_2, \ldots, f_n$ are $\zeta$-heterogeneous and $\mu$-strongly convex, and $\EE\left[g_i^k\mid x_i^k\right] = \nabla f_i	(x_i^k)$ for all $i\in [n]$. Then \eqref{eq:sum_V_k_bounds} holds with
	\begin{equation}
		H = \frac{16B(1-p)(2+\rho)\gamma^2}{p\rho},\quad D_3 = \frac{4(1-p)}{p}\left(D_1 + \frac{\zeta^2}{\gamma\mu} + \frac{4BD_2}{\rho}\right).\label{eq:V_k_bound_random_homo}
	\end{equation}
\end{lemma}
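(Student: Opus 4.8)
The plan is to mirror the structure of the proof of Lemma~\ref{lem:V_k_lemma_random}, but now exploiting the extra assumptions — $\zeta$-heterogeneity \eqref{eq:bounded_data_dissimilarity}, unbiasedness $\EE[g_i^k\mid x_i^k]=\nabla f_i(x_i^k)$, and $\mu$-strong convexity of each $f_i$ via the coercivity bound \eqref{eq:coercivity} — in exactly the way Lemma~\ref{lem:V_k_lemma_homo} refined Lemma~\ref{lem:V_k_lemma} in the constant-loop case. First I would write the one-step recursion for $V_{k+1}$. Conditioning on $x^k$ and then on $(x^k,g^k)$, with probability $p$ we reset ($x_i^{k+1}=x^{k+1}$, so the contribution is $0$) and with probability $1-p$ we take a local step, giving
\[
\EE\left[V_{k+1}\mid x^k\right] = \frac{1-p}{n}\sum_{i=1}^n \EE\left[\|x_i^k - x^k - \gamma g_i^k + \gamma g^k\|^2 \mid x^k\right].
\]
Expanding the square, using $\sum_i \langle x^{k}-x_i^k, \nabla f(x^k)\rangle = 0$, applying coercivity \eqref{eq:coercivity} to the cross term $\frac{2\gamma}{n}\sum_i \langle x^k - x_i^k, \nabla f_i(x_i^k) - \nabla f_i(x^k)\rangle \le -2\gamma\mu V_k$, Fenchel--Young \eqref{eq:fenchel_young} on $\frac{2\gamma}{n}\sum_i\langle x^k-x_i^k, \nabla f_i(x^k)-\nabla f(x^k)\rangle$, and finally \eqref{eq:bounded_data_dissimilarity} to bound the resulting gradient-dissimilarity term by $\gamma\zeta^2/\mu$, I expect to land on something like
\[
\EE\left[V_{k+1}\right] \le (1-p)(1-\gamma\mu)\EE[V_k] + \frac{(1-p)\gamma^2}{n}\sum_{i=1}^n\EE\left[\|g_i^k\|^2\right] + \frac{(1-p)\gamma\zeta^2}{\mu}.
\]
Since $\gamma \le p/(8\mu)$, the contraction factor $(1-p)(1-\gamma\mu)$ is at most $1-p/2$ (or a comparable constant fraction), so that the recursion is a genuine geometric contraction.

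Next I would unroll this recursion and substitute the second-moment bound \eqref{eq:second_moment_bound} for $\frac1n\sum_i\EE\|g_i^k\|^2$ in terms of $r_k \eqdef \EE[f(x^k)-f(x^*)]$, $\EE\sigma_k^2$, $\EE V_k$ and $D_1$. Then I would form the $w_k$-weighted sum $\sum_{k=0}^K w_k\EE[V_k]$, and control the double sums by comparing $w_k$ with $w_l$ using the geometric estimates of the type \eqref{eq:V_k_bound_rand_tech_2}–\eqref{eq:V_k_bound_rand_tech_4} (valid here because $\eta = \min\{\gamma\mu,\rho/4\}$ and $\gamma \le p/(8\mu)$ imply $w_k \le w_{k-i}(1+p/8)^i$ etc.), exactly as in Lemma~\ref{lem:V_k_lemma_random}. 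Summing the geometric series $\sum (1-p/4)^{k-l}(1+p/8)^{k-l} \le \sum(1-p/8)^{k-l} \le 8/p$ yields
\[
\sum_{k=0}^K w_k\EE[V_k] \le \frac{C_1(1-p)\gamma^2}{p^2}\!\left(A\sum_k w_k r_k + B\sum_k w_k\EE\sigma_k^2 + FW_K\cdot\text{(const)}\right) + \frac{C_2(1-p)\gamma}{p}\cdot\frac{\zeta^2}{\mu}W_K,
\]
for absolute constants $C_1,C_2$; note the $\zeta^2$ term contributes a $\frac{(1-p)}{p}\cdot\frac{\zeta^2}{\gamma\mu}$ piece to $D_3$ after multiplying through by $\gamma^2$, matching \eqref{eq:V_k_bound_random_homo}.

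Finally I would dispose of the $\sum_k w_k\EE\sigma_k^2$ term. The key observation — already used in the proof of Lemma~\ref{lem:V_k_lemma} — is that inequality \eqref{eq:sigma_k_technical_bound}, namely $\sum_k w_k\EE\sigma_k^2 \le \frac{\EE\sigma_0^2(2+\rho)}{\rho} + \frac{4C}{\rho(1-\rho)}\sum_k w_k r_k + \frac{2G}{\rho(1-\rho)}\sum_k w_k\EE V_k + \frac{D_2 W_K}{\rho}$, was proved only from \eqref{eq:sigma_k+1_bound} together with the geometric $w_k$-comparison estimates, all of which hold verbatim here. Substituting this back, the $\sum_k w_k r_k$ coefficient becomes $\le 1/(8L)$ and the $\sum_k w_k\EE V_k$ coefficient $\le 1/2$ by the stated bounds on $\gamma$ (the $\sqrt{p/(128L(1-p)(A+2BC/(\rho(1-\rho))))}$ and $\sqrt{p\rho(1-\rho)/(32BG(1-p))}$ constraints are exactly calibrated for this); absorbing the $\frac12\sum_k w_k\EE V_k$ to the left side and multiplying by $4L$ gives \eqref{eq:sum_V_k_bounds} with $H$ and $D_3$ as in \eqref{eq:V_k_bound_random_homo}. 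The main obstacle is bookkeeping: tracking the exact absolute constants through the geometric-series bounds and the $w_k$-ratio estimates so that the final numerical coefficients ($16$, $4$, and the factors $1/p$ versus $1/p^2$) come out as claimed, and making sure the $\gamma$ restrictions are tight enough to push every offending coefficient below the $1/(8L)$ and $1/2$ thresholds simultaneously; the analytical content is otherwise a routine adaptation of the two preceding lemmas.
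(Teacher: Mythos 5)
Your plan follows the paper's proof essentially step for step: the same conditional one-step recursion for $V_{k+1}$ with the $(1-p)$ factor, the same use of $\frac1n\sum_i\langle x^k-x_i^k,\nabla f(x^k)\rangle=0$, coercivity, Fenchel--Young and $\zeta$-heterogeneity to produce the $\frac{(1-p)\gamma\zeta^2}{\mu}$ term, substitution of \eqref{eq:second_moment_bound}, unrolling, the weighted-sum/geometric-series comparison of the $w_k$'s, reuse of \eqref{eq:sigma_k_technical_bound} for $\sum_k w_k\EE\sigma_k^2$, and absorption via the stepsize constraints before multiplying by $4L$. The only deviation is cosmetic bookkeeping: in the paper the $F\EE V_k$ term is absorbed into the contraction right away using $\gamma\le\sqrt{\nicefrac{p}{2F(1-p)}}$ (so it does not appear as an ``$FW_K$'' term), and all prefactors carry a single power of $\nicefrac{1}{p}$ rather than the $\nicefrac{1}{p^2}$ in your schematic display, exactly matching the claimed $H$ and $D_3$.
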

\begin{proof}
	First of all, we introduce new notation: $\EE[\cdot\mid x^{k},g^{k}]\eqdef \EE[\cdot\mid x_1^{k},\ldots,x_n^{k},g_1^{k},\ldots,g_n^{k}]$. By definition of $V_k$ for all $k\ge 1$ we have
	\begin{eqnarray*}
		\EE[V_k\mid x^{k-1},g^{k-1}] &\overset{\eqref{eq:local_sgd_def},\eqref{eq:x^k_recurrsion}}{=}& \frac{1-p}{n}\sum\limits_{i=1}^n \left\|x_i^{k-1} - x^{k-1} -\gamma g_i^{k-1} + \gamma g^{k-1}\right\|^2\\
		&=& \frac{1-p}{n}\sum\limits_{i=1}^n\|x_i^{k-1} - x^{k-1}\|^2 + \frac{2\gamma(1-p)}{n}\sum\limits_{i=1}^n\left\langle x_i^{k-1} - x^{k-1}, g^{k-1} - g_i^{k-1} \right\rangle\\
		&&\quad  + \frac{\gamma^2(1-p)}{n}\sum\limits_{i=1}^n\|g_i^{k-1} - g^{k-1}\|^2\\
		&=& (1-p)V_{k-1} + 2\gamma(1-p)\left\langle\frac{1}{n}\sum\limits_{i=1}^nx_i^{k-1} - x^{k-1}, g^{k-1}\right\rangle\\
		&&\quad + \frac{2\gamma(1-p)}{n}\sum\limits_{i=1}^n\left\langle x^{k-1}-x_i^{k-1}, g_i^{k-1} \right\rangle + \frac{\gamma^2(1-p)}{n}\sum\limits_{i=1}^n\|g_i^{k-1} - g^{k-1}\|^2\\
		&=& (1-p)V_{k-1} + \frac{2\gamma(1-p)}{n}\sum\limits_{i=1}^n\left\langle x^{k-1}-x_i^{k-1}, g_i^{k-1} \right\rangle \\
		&&\quad + \frac{\gamma^2(1-p)}{n}\sum\limits_{i=1}^n\|g_i^{k-1} - g^{k-1}\|^2.
	\end{eqnarray*}
	Next, we take the conditional expectation $\EE\left[\cdot\mid x^{k-1}\right] \eqdef \EE\left[\cdot\mid x_1^{k-1},\ldots, x_n^{k-1}\right]$ on both sides of the obtained inequality and get
	\begin{eqnarray*}
		\EE\left[V_k\mid x^{k-1}\right] &=& (1-p)V_{k-1} + \frac{2\gamma(1-p)}{n}\sum\limits_{i=1}^n\left\langle x^{k-1} - x_i^{k-1}, \nabla f_i(x_i^{k-1}) \right\rangle\\
		&&\quad + \frac{\gamma^2(1-p)}{n}\sum\limits_{i=1}^n\EE\left[\|g_i^{k-1} - g^{k-1}\|^2\mid x^{k-1}\right]\\
		&\overset{\eqref{eq:variance_decomposition}}{\le}& (1-p)V_{k-1} + \frac{2\gamma(1-p)}{n}\sum\limits_{i=1}^n\left\langle x^{k-1} - x_i^{k-1}, \nabla f_i(x_i^{k-1}) - \nabla f_i(x^{k-1}) \right\rangle\\
		&&\quad + \frac{2\gamma(1-p)}{n}\sum\limits_{i=1}^n\left\langle x^{k-1} - x_i^{k-1}, \nabla f_i(x^{k-1}) \right\rangle\\
		&&\quad + \frac{\gamma^2(1-p)}{n}\sum\limits_{i=1}^n\EE\left[\|g_i^{k-1}\|^2\mid x^{k-1}\right].
	\end{eqnarray*}
	Since $\frac{1}{n}\sum_{i=1}^n\langle x^{k-1}-x_i^{k-1},\nabla f(x^{k-1}) \rangle = 0$, we can continue as follows:
	\begin{eqnarray*}
		\EE\left[V_k\mid x^{k-1}\right] &\overset{\eqref{eq:coercivity}}{\le}& (1-p)V_{k-1} - \frac{2\gamma\mu(1-p)}{n}\sum\limits_{i=1}^n\|x^{k-1} - x_i^{k-1}\|^2\\
		&&\quad + \frac{2\gamma(1-p)}{n}\sum\limits_{i=1}^n\left\langle x^{k-1} - x_i^{k-1}, \nabla f_i(x^{k-1}) - \nabla f(x^{k-1}) \right\rangle\\
		&&\quad + \frac{\gamma^2(1-p)}{n}\sum\limits_{i=1}^n\EE\left[\|g_i^{k-1}\|^2\mid x^{k-1}\right]\\
		&\overset{\eqref{eq:fenchel_young}}{\le}& (1-p)(1-2\gamma\mu)V_{k-1} + \frac{\gamma^2(1-p)}{n}\sum\limits_{i=1}^n\EE\left[\|g_i^{k-1}\|^2\mid x^{k-1}\right]\\
		&&\quad + \frac{2\gamma(1-p)}{n}\sum\limits_{i=1}^n\left(\frac{\mu}{2}\|x^{k-1}-x_i^{k-1}\|^2 + \frac{1}{2\mu}\|\nabla f_i(x^{k-1}) - \nabla f(x^{k-1})\|^2\right)\\
		&\overset{\eqref{eq:bounded_data_dissimilarity}}{\le}& (1-p)(1-\gamma\mu)V_{k-1} + \frac{\gamma^2(1-p)}{n}\sum\limits_{i=1}^n\EE\left[\|g_i^{k-1}\|^2\mid x^{k-1}\right] + \frac{(1-p)\gamma\zeta^2}{\mu}.
	\end{eqnarray*}
	Taking full mathematical expectation on both sides of previous inequality and using $1-\gamma\mu \le 1$ we obtain
	\begin{eqnarray*}
		\EE V_k &\overset{\eqref{eq:tower_property}}{\le}& (1-p)\EE\left[V_{k-1}\right] + \frac{\gamma^2(1-p)}{n}\sum\limits_{i=1}^n\EE\left[\|g_i^{k-1}\|^2\right] + \frac{(1-p)\gamma\zeta^2}{\mu}\\
		&\overset{\eqref{eq:second_moment_bound}}{\le}&(1-p)\EE[V_{k-1}] + (1-p)\gamma^2\left(2A\EE[f(x^{k-1})-f(x^*)] + B\EE[\sigma_k^2] + F\EE[V_{k-1}] + D_1\right)\\
		&&\quad + \frac{(1-p)\gamma\zeta^2}{\mu}.
	\end{eqnarray*}
	Since $\gamma \le \sqrt{\frac{p}{2F(1-p)}}$ we have $(1-p)\gamma^2 F \le \frac{p}{2}$ and
	\begin{eqnarray*}
		\EE V_k &\le& \left(1-\frac{p}{2}\right)\EE[V_{k-1}] + (1-p)\gamma^2\left(2A\EE[f(x^{k-1})-f(x^*)] + B\EE[\sigma_k^2] + D_1 + \frac{\zeta^2}{\gamma\mu}\right).
	\end{eqnarray*}
	Unrolling the recurrence we obtain
	\begin{eqnarray*}
		\EE\left[V_{k}\right] &\le& (1-p)\gamma^2\sum\limits_{l=0}^{k-1} \left(1 - \frac{p}{2}\right)^{k-1-l}\left(2A\EE\left[f(x^l) - f(x^*)\right] + B\EE\left[\sigma_l^2\right] + D_1 + \frac{\zeta^2}{\gamma\mu}\right).
	\end{eqnarray*}
	As a consequence, we derive
	\begin{eqnarray}
		\sum\limits_{k=0}^K w_k\EE\left[V_k\right] &\le& \frac{2A(1-p)\gamma^2}{1-\frac{p}{2}}\sum\limits_{k=0}^K\sum\limits_{l=0}^k\left(1 - \frac{p}{2}\right)^{k-l}w_kr_l\notag\\
		&&\quad + \frac{B(1-p)\gamma^2}{1-\frac{p}{2}}\sum\limits_{k=0}^K\sum\limits_{l=0}^k\left(1 - \frac{p}{2}\right)^{k-l}w_k\EE\left[\sigma_l^2\right]\notag\\
		&&\quad + \left(D_1 + \frac{\zeta^2}{\gamma\mu}\right)(1-p)\gamma^2\sum\limits_{k=0}^{K}\sum\limits_{l=0}^{k-1}\left(1 - \frac{p}{2}\right)^{k-1-l}w_k,\label{eq:V_k_bound_rand_tech_1_homo}
	\end{eqnarray}
	where we use new notation: $r_l = \EE\left[f(x^l) - f(x^*)\right]$. Recall that $w_k = (1 - \eta)^{-(k+1)}$ and $\eta = \min\left\{\gamma\mu, \frac{\rho}{4}\right\}$. Together with our assumption on $\gamma$ it implies that for all $0 \le i < k$ we have
	\begin{eqnarray}
		w_k &=& (1 - \eta)^{-(k-i+1)}\left(1 - \eta\right)^{-i} \overset{\eqref{eq:1-p/2_inequality}}{\le} w_{k-i}\left(1 + 2\eta\right)^{i} \notag\\
		&\le& w_{k-i}\left(1 + 2\gamma\mu\right)^{i} \le w_{k-i}\left(1+\frac{p}{4}\right)^i, \label{eq:V_k_bound_rand_tech_2_homo}\\
		w_k &=& \left(1 - \eta\right)^{-(k-i+1)}\left(1 - \eta\right)^{-i} \overset{\eqref{eq:1-p/2_inequality}}{\le} w_{k-i}\left(1 + 2\eta\right)^i \le w_{k-i}\left(1 + \frac{\rho}{2}\right)^i , \label{eq:V_k_bound_rand_tech_3_homo}\\
		w_k &\overset{\eqref{eq:1-p/2_inequality}}{\le}& \left(1 + 2\eta\right)^{k+1} \le \left(1 + \frac{\rho}{2}\right)^{k+1}. \label{eq:V_k_bound_rand_tech_4_homo}
	\end{eqnarray}
	Having these inequalities in hand we obtain
	\begin{eqnarray*}
		\sum\limits_{k=0}^K\sum\limits_{l=0}^k\left(1 - \frac{p}{2}\right)^{k-l}w_kr_l &\overset{\eqref{eq:V_k_bound_rand_tech_2_homo}}{\le}& \sum\limits_{k=0}^K\sum\limits_{l=0}^k\left(1 - \frac{p}{2}\right)^{k-l}\left(1 + \frac{p}{4}\right)^{k-l}w_lr_l\\
		&\overset{\eqref{eq:1+p/2_inequality}}{\le}& \sum\limits_{k=0}^K\sum\limits_{l=0}^k\left(1 - \frac{p}{4}\right)^{k-l}w_lr_l \le \left(\sum\limits_{k=0}^K w_k r_k\right)\left(\sum\limits_{k=0}^{\infty}\left(1 - \frac{p}{4}\right)^{k}\right)\\
		&=&\frac{4}{p}\sum\limits_{k=0}^K w_k r_k,
	\end{eqnarray*}
	\begin{eqnarray*}
		\sum\limits_{k=0}^K\sum\limits_{l=0}^k\left(1 - \frac{p}{2}\right)^{k-l}w_k\EE\left[\sigma_l^2\right] &\overset{\eqref{eq:V_k_bound_rand_tech_2_homo}}{\le}& \sum\limits_{k=0}^K\sum\limits_{l=0}^k\left(1 - \frac{p}{2}\right)^{k-l}\left(1 + \frac{p}{4}\right)^{k-l}w_l\EE\left[\sigma_l^2\right]\\
		&\overset{\eqref{eq:1+p/2_inequality}}{\le}& \sum\limits_{k=0}^K\sum\limits_{l=0}^k\left(1 - \frac{p}{4}\right)^{k-l}w_l\EE\left[\sigma_l^2\right] \le \left(\sum\limits_{k=0}^K w_k \EE\left[\sigma_k^2\right]\right)\left(\sum\limits_{k=0}^{\infty}\left(1 - \frac{p}{4}\right)^{k}\right)\\
		&=&\frac{4}{p}\sum\limits_{k=0}^K w_k \EE\left[\sigma_k^2\right],
	\end{eqnarray*}
	and
	\begin{eqnarray*}
		\sum\limits_{k=0}^K\sum\limits_{l=0}^{k-1}\left(1 - \frac{p}{2}\right)^{k-1-l}w_k &\le& \left(\sum\limits_{k=0}^K w_k \right)\left(\sum\limits_{k=0}^{\infty}\left(1 - \frac{p}{2}\right)^{k}\right) = \frac{2W_K}{p}.
	\end{eqnarray*}
	Plugging these inequalities together with $1-\frac{p}{2}\ge \frac{1}{2}$ in \eqref{eq:V_k_bound_rand_tech_1_homo} we derive
	\begin{eqnarray}
		\sum\limits_{k=0}^K w_k\EE\left[V_k\right] &\le& \frac{16A(1-p)\gamma^2}{p}\sum\limits_{k=0}^Kw_kr_k + \frac{8B(1-p)\gamma^2}{p}\sum\limits_{k=0}^Kw_k\EE\left[\sigma_k^2\right]\notag\\
		&&\quad + \frac{2\left(D_1 + \frac{\zeta^2}{\gamma\mu}\right)(1-p)\gamma^2}{p}W_K.\label{eq:V_k_bound_rand_tech_5_homo}
	\end{eqnarray}
	It remains to estimate the second term in the right-hand side of this inequality. We notice that an analogous term appear in the proof of Lemma~\ref{lem:V_k_lemma}. In particular, in that proof inequality \eqref{eq:sigma_k_technical_bound} was shown via inequalities \eqref{eq:sigma_k+1_bound}, \eqref{eq:V_k_bound_rand_tech_3}, \eqref{eq:V_k_bound_rand_tech_4} and \eqref{eq:1+p/2_inequality} which hold in this case too. Therefore, we get that
	\begin{eqnarray}
	\sum\limits_{k=0}^Kw_k\EE\left[\sigma_k^2\right] &\overset{\eqref{eq:sigma_k_technical_bound}}{\le}& \frac{\EE\sigma_0^2(2+\rho)}{\rho} + \frac{4C}{\rho(1-\rho)}\sum\limits_{k=0}^Kw_kr_k + \frac{2G}{\rho(1-\rho)}\sum\limits_{k=0}^Kw_k\EE V_k +  \frac{D_2 W_K}{\rho},\notag
	\end{eqnarray}
	hence
	\begin{eqnarray}
		\sum\limits_{k=0}^K w_k\EE\left[V_k\right] &\overset{\eqref{eq:V_k_bound_rand_tech_5}}{\le}& \frac{16(1-p)\gamma^2\left(A + \frac{2BC}{\rho(1-\rho)}\right)}{p}\sum\limits_{k=0}^Kw_kr_k \notag\\
		&&\quad + \frac{8B(1-p)(2+\rho)\gamma^2\EE\sigma_0^2}{p\rho}+ \frac{16BG(1-p)\gamma^2}{p\rho(1-\rho)}\sum\limits_{k=0}^K w_k\EE\left[V_k\right]\notag\\
		&&\quad + \frac{2(1-p)\gamma^2}{p}\left(D_1 + \frac{\zeta^2}{\gamma\mu} + \frac{4BD_2}{\rho}\right)W_K.\notag
	\end{eqnarray}
	Our assumption on $\gamma$ imply
	\begin{eqnarray*}
		\frac{16(1-p)\gamma^2\left(A + \frac{2BC}{\rho(1-\rho)}\right)}{p} \le \frac{1}{8L},\quad \frac{16BG(1-p)\gamma^2}{p\rho(1-\rho)} \le \frac{1}{2}.
	\end{eqnarray*}
	Next, we introduce new notation as follows:
	\begin{equation*}
		H = \frac{16B(1-p)(2+\rho)\gamma^2}{p\rho},\quad D_3 = \frac{4(1-p)}{p}\left(D_1 + \frac{\zeta^2}{\gamma\mu} + \frac{4BD_2}{\rho}\right).
	\end{equation*}
	Putting all together we get
	\begin{equation*}
		\frac{1}{2}\sum\limits_{k=0}^Kw_k\EE\left[V_k\right] \le \frac{1}{8L}\sum\limits_{k=0}^K w_k r_k + \frac{H}{2}\EE\sigma_0^2 + \frac{D_3}{2}\gamma^2 W_K
	\end{equation*}
	which concludes the proof.
\end{proof}

This lemma and Theorem~\ref{thm:main_result} imply the following result.
\begin{corollary}\label{cor:rand_loop_homo}
	Let the assumptions of Lemma~\ref{lem:V_k_lemma_random_homo} are satisfied. Then Assumption~\ref{ass:key_assumption} holds and, in particular, if
	\begin{eqnarray*}
		\gamma &\le& \min\left\{\frac{1}{2(A'+CM)}, \frac{L}{F'+GM}, \frac{p}{8\mu}\right\},\quad M = \frac{4B'}{3\rho},\\
		\gamma &\le& \min\left\{\sqrt{\frac{p}{2F(1-p)}}, \sqrt{\frac{p\rho(1-\rho)}{32BG(1-p)}}, \sqrt{\frac{p}{128L(1-p)\left(A + \frac{2BC}{\rho(1-\rho)}\right)}}\right\},
	\end{eqnarray*}
	then for all $K\ge 0$ we have
	\begin{eqnarray}
		\EE\left[f(\overline{x}^K) - f(x^*)\right] &\le& \frac{2T^0 + 4LH\gamma\EE\sigma_0^2}{\gamma W_K} + 2\gamma\left(D_1' + MD_2 + 2L\gamma D_3\right), \label{eq:main_result_rand_loop_homo}
	\end{eqnarray}
	where $\overline{x}^K \eqdef \frac{1}{W_K}\sum_{k=0}^K w_k x^k$ and
	\begin{equation*}
		H = \frac{16B(1-p)(2+\rho)\gamma^2}{p\rho},\quad D_3 = \frac{4(1-p)}{p}\left(D_1 + \frac{\zeta^2}{\gamma\mu} + \frac{4BD_2}{\rho}\right).
	\end{equation*}
	 Moreover, if $\mu > 0$, then
	\begin{eqnarray}
		\EE\left[f(\overline{x}^K) - f(x^*)\right] &\le& \left(1 - \min\left\{\gamma\mu,\frac{\rho}{4}\right\}\right)^K\frac{2T^0 + 4LH\gamma\EE\sigma_0^2}{\gamma}\notag\\
		&&\quad + 2\gamma\left(D_1' + MD_2 + 2L\gamma D_3\right), \label{eq:main_result_1_rand_loop_homo}
	\end{eqnarray}
	and in the case when $\mu = 0$ we have
	\begin{eqnarray}
		\EE\left[f(\overline{x}^K) - f(x^*)\right] &\le& \frac{2T^0 + 4LH\gamma\EE\sigma_0^2}{\gamma K} + 2\gamma\left(D_1' + MD_2 + 2L\gamma D_3\right). \label{eq:main_result_2_rand_loop_homo}
	\end{eqnarray}
\end{corollary}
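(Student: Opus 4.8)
The proof is essentially an assembly of Lemma~\ref{lem:V_k_lemma_random_homo} with Theorem~\ref{thm:main_result}, so the plan is to verify that the hypotheses of those two results are all in place and then read off the conclusion. First I would observe that the hypotheses of Lemma~\ref{lem:V_k_lemma_random_homo} already guarantee that \emph{all} of Assumption~\ref{ass:key_assumption} holds: the inequalities \eqref{eq:unbiasedness}--\eqref{eq:sigma_k+1_bound} are assumed directly, and the remaining inequality \eqref{eq:sum_V_k_bounds} is precisely the conclusion of Lemma~\ref{lem:V_k_lemma_random_homo}, with the displayed values of $H$ and $D_3$ in \eqref{eq:V_k_bound_random_homo}. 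Moreover, $\mu$-strong convexity of each $f_i$ (assumed in Lemma~\ref{lem:V_k_lemma_random_homo}) implies the $(\mu,x^*)$-strong quasi-convexity required by Assumption~\ref{ass:quasi_strong_convexity}, and $L$-smoothness is assumed, so the structural hypotheses of Theorem~\ref{thm:main_result} are satisfied as well.

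Next I would check that the two groups of stepsize restrictions in the statement are exactly what is needed to invoke both results. Writing $M = \frac{4B'}{3\rho}$, the bound $\gamma \le \min\{\frac{1}{2(A'+CM)},\frac{L}{F'+GM}\}$ is identical to the stepsize restriction of Theorem~\ref{thm:main_result}, while the bounds $\gamma \le \frac{p}{8\mu}$ together with $\gamma \le \min\{\sqrt{\frac{p}{2F(1-p)}},\sqrt{\frac{p\rho(1-\rho)}{32BG(1-p)}},\sqrt{\frac{p}{128L(1-p)(A+\frac{2BC}{\rho(1-\rho)})}}\}$ are exactly the hypotheses of Lemma~\ref{lem:V_k_lemma_random_homo}. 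Hence both results apply simultaneously under the stated conditions.

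Finally I would invoke Theorem~\ref{thm:main_result}. Plugging the values of $H$ and $D_3$ from \eqref{eq:V_k_bound_random_homo} into the quantities $\Phi^0$ and $\Psi^0$ of that theorem, and recalling $T^0 = \|x^0-x^*\|^2 + M\gamma^2\sigma_0^2$ with $M = \frac{4B'}{3\rho}$, one has the identities $\gamma\Phi^0 = 2T^0 + 4LH\gamma\EE\sigma_0^2$ and $\Psi^0 = 2(D_1' + MD_2 + 2L\gamma D_3)$. With these substitutions, the general estimate \eqref{eq:main_result} becomes \eqref{eq:main_result_rand_loop_homo}, the bound \eqref{eq:main_result_1} for $\mu > 0$ becomes \eqref{eq:main_result_1_rand_loop_homo} (using $W_K \ge (1-\min\{\gamma\mu,\rho/4\})^{-K}$), and the bound \eqref{eq:main_result_2} for $\mu = 0$ becomes \eqref{eq:main_result_2_rand_loop_homo} (using $W_K = K+1 \ge K$). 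Since $H$ and $D_3$ do not depend on the sign of $\mu$, no further case analysis is needed.

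As for the main difficulty: there is essentially no hard step here, since the corollary is a packaging statement. The only points requiring care are the routine verification that the first group of stepsize conditions covers the requirement of Theorem~\ref{thm:main_result} while the second group (with $\gamma \le \frac{p}{8\mu}$) covers the requirements of Lemma~\ref{lem:V_k_lemma_random_homo}, and the elementary algebraic identity $\gamma\Phi^0 = 2T^0 + 4LH\gamma\EE\sigma_0^2$ with $M=\frac{4B'}{3\rho}$. One should also note that $D_3$ here has the form $D_{3,1}+D_{3,2}/\gamma$ due to the $\frac{\zeta^2}{\gamma\mu}$ term; this is harmless for the present statement, where $D_3$ enters verbatim, but it is the reason the later stepsize-tuned corollaries treat this case separately.
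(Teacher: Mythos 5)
Your proposal is correct and is exactly the paper's route: the paper derives this corollary by noting that Lemma~\ref{lem:V_k_lemma_random_homo} supplies inequality \eqref{eq:sum_V_k_bounds} (hence all of Assumption~\ref{ass:key_assumption}) with the stated $H$ and $D_3$, that $\mu$-strong convexity gives Assumption~\ref{ass:quasi_strong_convexity}, and then invokes Theorem~\ref{thm:main_result} (specifically the bound \eqref{eq:main_result} with $\gamma\Phi^0 = 2T^0 + 4LH\gamma\EE\sigma_0^2$ and $\Psi^0 = 2(D_1'+MD_2+2L\gamma D_3)$), using $W_K \ge (1-\min\{\gamma\mu,\rho/4\})^{-K}$ for $\mu>0$ and $W_K \ge K$ for $\mu=0$. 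Your bookkeeping of the two groups of stepsize conditions and the substitution of $H$, $D_3$ matches the paper's argument.
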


\section{Missing Parts from Section~\ref{sec:local_solver}}
Let us start with an useful Lemma that bounds the Bregman distance between the local iterate $x_i^k$  and the optimum $x^*$ by the Bregman distance between the virtual iterate $x^k$ and the optimum.

\begin{lemma}
Assume $f_i$ is $L$-smooth for all $i\in [n]$. Then
\begin{equation} \label{eq:poiouhnkj}
D_{f_i} (x^k_i, x^*) \leq
2D_{f_i} (x^k, x^*) +
L \| x_i^k-x^k\|^2\quad \forall i\in[n].
\end{equation}
\end{lemma}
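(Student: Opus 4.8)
The goal is to bound $D_{f_i}(x_i^k, x^*)$ in terms of $D_{f_i}(x^k, x^*)$ and the discrepancy $\|x_i^k - x^k\|^2$. I would start by writing out the definition of the Bregman distance on the left: $D_{f_i}(x_i^k, x^*) = f_i(x_i^k) - f_i(x^*) - \langle \nabla f_i(x^*), x_i^k - x^* \rangle$. The natural strategy is to insert $x^k$ as an intermediate point, i.e.\ to compare $f_i(x_i^k)$ with $f_i(x^k)$ using $L$-smoothness, and to rewrite the linear term $\langle \nabla f_i(x^*), x_i^k - x^*\rangle$ by splitting $x_i^k - x^* = (x_i^k - x^k) + (x^k - x^*)$.

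\textbf{Key steps.} First, apply the $L$-smoothness upper bound \eqref{eq:L_smoothness_cor_1} with $x = x^k$, $y = x_i^k$ to get $f_i(x_i^k) \le f_i(x^k) + \langle \nabla f_i(x^k), x_i^k - x^k\rangle + \frac{L}{2}\|x_i^k - x^k\|^2$. Substituting this into the definition of $D_{f_i}(x_i^k, x^*)$ and using the split of the linear term yields
\begin{equation*}
D_{f_i}(x_i^k, x^*) \le D_{f_i}(x^k, x^*) + \langle \nabla f_i(x^k) - \nabla f_i(x^*), x_i^k - x^k\rangle + \frac{L}{2}\|x_i^k - x^k\|^2.
\end{equation*}
Second, bound the cross term by Cauchy--Schwarz followed by Young's inequality: $\langle \nabla f_i(x^k) - \nabla f_i(x^*), x_i^k - x^k\rangle \le \frac{1}{2L}\|\nabla f_i(x^k) - \nabla f_i(x^*)\|^2 + \frac{L}{2}\|x_i^k - x^k\|^2$. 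Third, control $\|\nabla f_i(x^k) - \nabla f_i(x^*)\|^2$ using the smoothness-plus-convexity consequence \eqref{eq:L_smoothness_cor}, namely $\|\nabla f_i(x^k) - \nabla f_i(x^*)\|^2 \le 2L D_{f_i}(x^k, x^*)$ (note $D_{f_i}(x^k,x^*) = f_i(x^k) - f_i(x^*) - \langle \nabla f_i(x^*), x^k - x^*\rangle$ since $\nabla f_i(x^*)$ plays the role of the gradient at the base point). Collecting the three contributions gives $D_{f_i}(x_i^k, x^*) \le D_{f_i}(x^k, x^*) + D_{f_i}(x^k, x^*) + L\|x_i^k - x^k\|^2 = 2 D_{f_i}(x^k, x^*) + L\|x_i^k - x^k\|^2$, which is exactly \eqref{eq:poiouhnkj}.

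\textbf{Main obstacle.} This is essentially a routine smoothness manipulation, so there is no deep difficulty; the only point requiring a little care is that \eqref{eq:L_smoothness_cor} is stated under the additional assumption of convexity of $f_i$, whereas the lemma as stated in the excerpt only assumes $L$-smoothness. In the context of this paper the $f_i$ are (strongly quasi-)convex via Assumption~\ref{ass:quasi_strong_convexity}, but strictly speaking \eqref{eq:L_smoothness_cor} needs genuine convexity. If one wants to avoid invoking convexity, one can instead bound $\|\nabla f_i(x^k) - \nabla f_i(x^*)\|^2 \le L^2 \|x^k - x^*\|^2$ directly from \eqref{eq:L_smoothness}, which changes the form of the right-hand side; so the cleanest route is to use \eqref{eq:L_smoothness_cor} and note convexity is available. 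The other minor bookkeeping issue is making sure the Young's inequality split is chosen with the right constant ($\tfrac{1}{2L}$ vs.\ $\tfrac{L}{2}$) so that the two $\tfrac{L}{2}\|x_i^k - x^k\|^2$ terms combine to exactly $L\|x_i^k - x^k\|^2$ and the $\tfrac{1}{2L}\cdot 2L = 1$ factor produces the clean coefficient $2$ in front of $D_{f_i}(x^k,x^*)$.
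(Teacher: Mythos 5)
Your proof is correct and follows essentially the same route as the paper: smoothness at the intermediate point $x^k$ to get $D_{f_i}(x_i^k,x^*) \le D_{f_i}(x^k,x^*) + \langle \nabla f_i(x^k)-\nabla f_i(x^*), x_i^k - x^k\rangle + \tfrac{L}{2}\|x_i^k-x^k\|^2$, then Young's inequality with the $\tfrac{1}{2L}$/$\tfrac{L}{2}$ split, then $\|\nabla f_i(x^k)-\nabla f_i(x^*)\|^2 \le 2L D_{f_i}(x^k,x^*)$. Your side remark is also apt: that last bound is \eqref{eq:L_smoothness_cor} and does require convexity of $f_i$ (available in the paper's setting), even though the lemma's statement and the paper's final citation mention only $L$-smoothness.
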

\begin{proof}
Using corollaries of $L$-smoothness and Young's inequality, we derive
\begin{eqnarray*}
 D_{f_i} (x^k_i, x^*)
&\overset{\eqref{eq:L_smoothness_cor_1}}{\leq} &
D_{f_i} (x^k, x^*) + \langle \nabla f_i(x^k)- \nabla f_i(x^*) , x_i^k-x^k\rangle + \frac{L}{2}\| x_i^k-x^k \|^2
 \\
&\overset{\eqref{eq:fenchel_young}}{\leq}&
D_{f_i} (x^k, x^*) +
\frac{1}{2L} \|  \nabla f_i(x^k)- \nabla f_i(x^*) \|^2  + L\| x_i^k-x^k\|^2
\\
&\overset{\eqref{eq:L_smoothness}}{\leq}&
2D_{f_i} (x^k, x^*) +
L \| x_i^k-x^k\|^2.
\end{eqnarray*}
\end{proof}

\subsection{Proof of Lemma~\ref{lem:local_solver}}

 Let us bound $\frac{1}{n}\sum\limits_{i=1}^n \EE_k\left[\| g_i^k \|^2\right] $ first:
\begin{eqnarray*}
	\frac{1}{n}\sum\limits_{i=1}^n \EE_k\left[\| g_i^k \|^2\right]
	&=& 	\frac{1}{n}\sum\limits_{i=1}^n \EE_k\left[\| a_i^k - b_i^k \|^2\right] \\
	&=& 	\frac{1}{n}\sum\limits_{i=1}^n \EE_k\left[\| a_i^k - \nabla f_i(x^*)- (b_i^k - \nabla f_i(x^*))\|^2\right] \\
		&\leq& 	\frac{2}{n}\sum\limits_{i=1}^n \EE_k\left[\| a_i^k - \nabla f_i(x^*)\|^2 + \|b_i^k - \nabla f_i(x^*)\|^2\right] \\
						&\leq& 	\frac{2}{n}\sum\limits_{i=1}^n \left(2A_iD_{f_i} (x^k_i, x^*) + B_i\sigma_{i,k}^2 + D_{1,i} +\EE_k\left[\|b_i^k - \nabla f_i(x^*)\|^2\right]\right)
						\\
								&\stackrel{\eqref{eq:poiouhnkj}}{\leq}&
									\frac{2}{n}\sum\limits_{i=1}^n \left(4A_iD_{f_i} (x^k, x^*)
				+ 2A_iL \| x_i^k-x^k\|^2
		+ B_i\sigma_{i,k}^2 + D_{1,i} + \EE_k\left[\|b_i^k - \nabla f_i(x^*)\|^2\right]\right)
		 \\
		&\leq &
		8\max_{i} \{A_i\} (f(x^k)-f(x^*))
				+ 4\max_{i} \{A_i\}  L V_k +
		\frac{2}{n}\sum\limits_{i=1}^n  \left( B_i\sigma_{i,k}^2 + D_{1,i} + \EE_k\left[\|b_i^k - \nabla f_i(x^*)\|^2\right] \right).
\end{eqnarray*}

Taking the full expectation, we arrive at
\begin{equation}\label{eq:ndjkabhdavgjda}
\frac{1}{n}\sum\limits_{i=1}^n \EE\left[\| g_i^k \|^2\right]
\leq
		8\max_{i} \{A_i\} \EE(f(x^k)-f(x^*))
				+ 4\max_{i} \{A_i\}  L \EE V_k +
		\frac{2}{n}\sum\limits_{i=1}^n  \left( B_i\EE\sigma_{i,k}^2 + D_{1,i} +\EE \|b_i^k - \nabla f_i(x^*)\|^2 \right).
\end{equation}

Next, we have

\begin{eqnarray*}
	\EE_k\left[ \left \|  \frac{1}{n}\sum\limits_{i=1}^n  g_i^k \right\|^2\right]
	&=&
	\EE_k\left[\left\| \frac{1}{n}\sum\limits_{i=1}^n  a_i^k - b_i^k \right\|^2\right]
	\\
	&=&
	\EE_k\left[\left\| \frac{1}{n}\sum\limits_{i=1}^n  a_i^k - \nabla f_i(x^*)\right\|^2\right]
		\\
	&=&
	\Var\left[\frac{1}{n}\sum\limits_{i=1}^n  a_i^k - \nabla f_i(x^*)\right] + 
	\left\|\frac{1}{n}\sum\limits_{i=1}^n \nabla f_i(x_i^k)- \nabla f_i(x^*)\right\|^2
	\\
	&\leq &
		\Var\left[\frac{1}{n}\sum\limits_{i=1}^n  a_i^k - \nabla f_i(x^*)\right] + 
 \frac{1}{n}\sum\limits_{i=1}^n	\left\| \nabla f_i(x_i^k)- \nabla f_i(x^*)\right\|^2
 	\\
	&\leq &
		\Var\left[\frac{1}{n}\sum\limits_{i=1}^n  a_i^k - \nabla f_i(x^*)\right] + 
\frac{2L}{n}\sum\limits_{i=1}^nD_{f_i}(x_i^k,x^*)
\\
	&= &
	\frac{1}{n^2}	\sum\limits_{i=1}^n \Var\left[  a_i^k - \nabla f_i(x^*)\right] + \frac{2L}{n}\sum\limits_{i=1}^nD_{f_i}(x_i^k,x^*)
		\\
	&\leq &
	\frac{1}{n^2} \sum\limits_{i=1}^n   \EE_k\left[\left\| a_i^k - \nabla f_i(x^*)\right\|^2\right]
	+ 
\frac{2L}{n}\sum\limits_{i=1}^nD_{f_i}(x_i^k,x^*)
	\\
		&\le &
	\frac{1}{n^2} \sum\limits_{i=1}^n  \left(
	 2A_iD_{f_i} (x^k_i, x^*) + B_i\sigma_{i,k}^2 + D_{1,i}\right)
	 + 
\frac{2L}{n}\sum\limits_{i=1}^nD_{f_i}(x_i^k,x^*)
	 \\
	 	&\le &
\frac{1}{n^2} \sum\limits_{i=1}^n  \left(  	2\left(\max_{i} \{A_i\}+nL\right)D_{f_i} (x^k_i, x^*) +   B_i\sigma_{i,k}^2 + D_{1,i}\right)
		 \\
	 	&\stackrel{\eqref{eq:poiouhnkj}}{\leq} &
  	\left( \frac{4\max_{i} \{A_i\}}{n} + 2L\right) D_{f} (x^k, x^*) + \frac{1}{n^2} \sum\limits_{i=1}^n  \left( 2(\max_{i} \{A_i\} L+nL^2) \| x_i^k - x^*\|^2+   B_i\sigma_{i,k}^2 + D_{1,i}\right)
	 \\
	 	&=&
  	\left( \frac{4\max_{i} \{A_i\}}{n} + 2L\right)\left(f(x^k)- f(x^*)\right) +2\left(\frac{\max_{i}\{A_i\} L  }{n} + L^2\right) V_k + \frac{1}{n^2}\sum\limits_{i=1}^n  \left(   B_i\sigma_{i,k}^2 + D_{1,i}\right).
\end{eqnarray*}

 Further, we define
\begin{equation}\label{eq:omegak_eq_dnakjdjsgajvgdhasvgjlds}
\omega_k^2 \eqdef \frac{2}{n}\sum\limits_{i=1}^n   B_i\sigma_{i,k}^2
\end{equation}
and consequently, we get
\begin{eqnarray}
\EE\left[\omega_{k+1}^2  \right]
&=&
\frac{2}{n}\sum\limits_{i=1}^n B_i \EE\left[   \sigma_{i,k+1}^2  \right]
\nonumber
\\
&\leq&
 (1-\rho) \omega_{k}^2+ \frac{2}{n} \sum\limits_{i=1}^n B_iC_i D_{f_i}(x^k_i, x^*) +  \frac{2}{n}\sum\limits_{i=1}^n B_iD_{2,i}
\nonumber
\\
&\stackrel{\eqref{eq:poiouhnkj}}{\leq}&
 (1-\rho) \omega_{k}^2+ \frac{4}{n} \sum\limits_{i=1}^n B_iC_iD_{f_i}(x^k, x^*) + \frac{2}{n} \sum\limits_{i=1}^n B_iC_i L\|x^k_i - x^k\|^2 +\frac{2}{n} \sum\limits_{i=1}^n B_iD_{2,i}
 \nonumber
\\
&\leq &
 (1-\rho) \omega_{k}^2+ 4 \max_{i}\{B_iC_i \} D_{f}(x^k, x^*) + 2 \max_{i}\{B_iC_i \} LV_k + \frac2n\sum\limits_{i=1}^n B_iD_{2,i}.
\notag
\end{eqnarray}

We will provide a bound on $\EE\|b_i^k - \nabla f_i(x^*)\|^2 $ based on the choices of $b_i^k$:

\begin{itemize}
\item[Case I.] The choice $b_i^k = 0$ yields
$
 \EE\|b_i^k - \nabla f_i(x^*)\|^2 =  \|\nabla f_i(x^*)\|^2.
$
\item[Case II.] The choice $b_i^k = \nabla f_i(x^*)$ yields $ \EE\|b_i^k - \nabla f_i(x^*)\|^2 =  0$.
 Overall, for both Case I and II we have
\[
\EE\sigma^2_{k+1} \leq
(1-\rho)\EE\sigma^2_k
+ 4 \max_{i}\{B_iC_i \}  D_{f}(x^k, x^*) + 2 \max_{i}\{B_iC_i \}L V_k + \frac2n\sum\limits_{i=1}^n B_iD_{2,i}
\]
as desired, where $\sigma_k = \omega_k$.
\item[Case III.] The choice $b_i^k = h_i^k - \frac1n  \sum_{i=1}^n h_i^k $ yields
\[
\frac1n \sum_{i=1}^n  \|b_i^k - \nabla f_i(x^*)\|^2 = \frac1n \sum_{i=1}^n  \left\| h_i^k  - \frac1n \sum_{i=1}^n  h_i^k  - \nabla f_i(x^*)\right \|^2 \leq  \frac1n \sum_{i=1}^n  \| h_i^k   - \nabla f_i(x^*)\|^2
\]
where
\begin{eqnarray*}
\EE_k \left[ \|  h_i^{k+1}   - \nabla f_i(x^*)\|^2\right]
 &=&
 (1-\rho_i')\|  h_i^k   - \nabla f_i(x^*)\|^2  + \rho_i' \EE_k\|  l_i^k   - \nabla f_i(x^*)\|^2
 \\
  &\stackrel{\eqref{eq:bdef}}{\leq } &
   (1-\rho_i')\|h_i^k   - \nabla f_i(x^*)\|^2  +   2\rho_i' A'_iD_{f_i} (x^k_i, x^*)+ \rho_i' D_{3,i}.
\end{eqnarray*}
Next, set $\sigma^2_k \eqdef \omega^2_k +  \| h_i^k  - \nabla f_i(x^*)\|^2$ for this case.  Consequently, we have
\begin{eqnarray*}
	\EE_k\sigma^2_{k+1} &\leq& 
(1-\rho)\sigma^2_k
+ 4( \max_{i}\{B_iC_i \} + \max_i\{\rho_i' A'_i\})  D_{f}(x^k, x^*) +  2(\max_{i}\{B_iC_i \}+\max_{i}\{\rho_i'A_i' \}) LV_k \\
&&\quad+ \frac1n\sum\limits_{i=1}^n \left( 2 B_iD_{2,i} + \rho_i' D_{3,i} \right),
\end{eqnarray*}
where $\rho = \min_i\min\{\rho_i,\rho_i'\}$.
\end{itemize}

 It remains to plug everything back to~\eqref{eq:second_moment_bound}, \eqref{eq:second_moment_bound_2} and \eqref{eq:sigma_k+1_bound}.

\clearpage
\section{Special Cases: Technical details~\label{sec:special_appendix}}

%\subsection{Summary of parameters from Assumption~\ref{ass:key_assumption}}

\subsection{{\tt Local-SGD}} \label{sec:local_sgd_app}
We start with the analysis of {\tt Local-SGD} (see Algorithm~\ref{alg:local_sgd}) under different assumptions of stochastic gradients and data similarity.
\begin{algorithm}[h]
   \caption{{\tt Local-SGD}}\label{alg:local_sgd}
\begin{algorithmic}[1]
   \Require learning rate $\gamma>0$, initial vector $x^0 \in \R^d$, communication period $\tau \ge 1$
	\For{$k=0,1,\dotsc$}
       	\For{$i=1,\dotsc,n$ in parallel}
            \State Sample $g^{k}_i = \nabla f_{\xi_i^k}(x_i^k)$ independently from other nodes
            \If{$k+1 \mod \tau = 0$}
            \State $x_i^{k+1} = x^{k+1} = \frac{1}{n}\sum\limits_{i=1}^n\left(x_i^k - \gamma g_i^k\right)$ \Comment{averaging}
            \Else
            \State $x_i^{k+1} = x_i^k - \gamma g_i^k$ \Comment{local update}
            \EndIf
        \EndFor
   \EndFor
\end{algorithmic}
\end{algorithm}

\subsubsection{Uniformly Bounded Variance}\label{sec:sgd_bounded_var}

In this section we assume that $f_i$ has a form of expectation (see \eqref{eq:f_i_expectation}) and stochastic gradients $\nabla f_{\xi_i}(x)$ satisfy
\begin{equation}
	\EE_{\xi_i}\left[\|\nabla f_{\xi_i}(x) - \nabla f_i(x)\|^2\right] \le D_{1,i},\quad \forall\; x\in\R^d,\;\forall\; i\in [n].\label{eq:bounded_variance}
\end{equation}
We also introduce the average variance $\sigma^2$ and the parameter of heterogeneity at the solution $\zeta_*^2$ in the following way:
\begin{equation*}
	\sigma^2 = \frac{1}{n}\sum\limits_{i=1}^nD_{1,i},\quad \zeta_*^2 = \frac{1}{n}\sum\limits_{i=1}^n\|\nabla f_i(x^*)\|^2.
\end{equation*}

\begin{lemma}\label{lem:local_sgd_interesting_labels}
Assume that functions $f_i$ are convex and $L$-smooth for all $i\in[n]$. Then
\begin{equation}\label{eq:dnaossniadd}
\frac{1}{n}\sum\limits_{i=1}^n\|\nabla f_i(x_i^k)\|^2\leq
		6L\left(f(x^k) - f(x^*)\right) + 3L^2 V_k + 3\zeta_*^2
\end{equation}
and
\begin{equation}\label{eq:vdgasvgda}
	 \left\|\frac{1}{n}\sum\limits_{i=1}^n\nabla f_i(x_i^k)\right\|^2
		\leq  4L\left(f(x^k)-f(x^*)\right) + 2L^2 V_k .
\end{equation}
\end{lemma}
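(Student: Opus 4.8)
\textbf{Proof plan for Lemma~\ref{lem:local_sgd_interesting_labels}.}

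The plan is to prove both inequalities by the same mechanism: add and subtract the gradient at the virtual iterate $x^k$, split via Young's inequality, control one piece by $L$-smoothness of each $f_i$ (using the Bregman-distance corollary~\eqref{eq:L_smoothness_cor}), and control the remaining pieces by the definition of $V_k$ and of $\zeta_*^2$. For~\eqref{eq:dnaossniadd}, I would first write, for each $i$,
\[
\|\nabla f_i(x_i^k)\|^2 = \|\nabla f_i(x_i^k) - \nabla f_i(x^k) + \nabla f_i(x^k) - \nabla f_i(x^*) + \nabla f_i(x^*)\|^2,
\]
then apply the three-term Young inequality $\|a+b+c\|^2 \le 3\|a\|^2 + 3\|b\|^2 + 3\|c\|^2$. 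The first term is bounded by $3L^2\|x_i^k - x^k\|^2$ using Assumption~\ref{ass:L_smoothness}; averaging over $i$ gives $3L^2 V_k$. The third term averages to $3\zeta_*^2$ by definition. For the middle term $3\|\nabla f_i(x^k) - \nabla f_i(x^*)\|^2$, I would invoke~\eqref{eq:L_smoothness_cor}, which gives $\le 6L D_{f_i}(x^k, x^*)$; averaging over $i$ and using $\frac1n\sum_i D_{f_i}(x^k,x^*) = f(x^k) - f(x^*) - \langle\nabla f(x^*), x^k - x^*\rangle = f(x^k) - f(x^*)$ since $\nabla f(x^*) = 0$, this contributes $6L(f(x^k) - f(x^*))$. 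Summing the three contributions yields~\eqref{eq:dnaossniadd}.

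For~\eqref{eq:vdgasvgda}, the structure is slightly simpler because the shift term is not needed: by convexity of the squared norm (Jensen), $\left\|\frac1n\sum_i \nabla f_i(x_i^k)\right\|^2 \le \frac1n\sum_i \|\nabla f_i(x_i^k)\|^2$? No — that loses the gain from $\nabla f(x^*) = 0$, so instead I would write $\frac1n\sum_i \nabla f_i(x_i^k) = \frac1n\sum_i(\nabla f_i(x_i^k) - \nabla f_i(x^*))$ using $\sum_i \nabla f_i(x^*) = n\nabla f(x^*) = 0$, then apply Jensen to get $\le \frac1n\sum_i \|\nabla f_i(x_i^k) - \nabla f_i(x^*)\|^2$. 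Split each term as $\|\nabla f_i(x_i^k) - \nabla f_i(x^k) + \nabla f_i(x^k) - \nabla f_i(x^*)\|^2 \le 2\|\nabla f_i(x_i^k) - \nabla f_i(x^k)\|^2 + 2\|\nabla f_i(x^k) - \nabla f_i(x^*)\|^2 \le 2L^2\|x_i^k - x^k\|^2 + 4L D_{f_i}(x^k, x^*)$, using Assumption~\ref{ass:L_smoothness} and~\eqref{eq:L_smoothness_cor} respectively. Averaging gives $2L^2 V_k + 4L(f(x^k) - f(x^*))$, which is exactly~\eqref{eq:vdgasvgda}.

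The only mild subtlety — and the place where one must be careful rather than where real difficulty lies — is the use of $\nabla f(x^*) = 0$, which holds because $x^*$ is a minimizer of $f$ and $f$ is differentiable; this is what lets the Bregman-distance average collapse to $f(x^k) - f(x^*)$ and what makes the $\zeta_*^2$ term appear with the right constant. I would also double-check the constants: the three-way split gives factor $3$ everywhere in~\eqref{eq:dnaossniadd} (so $3L^2 V_k$, $3\zeta_*^2$, and $3 \cdot 2L = 6L$ on the function-value term), and the two-way split plus the $2LD_{f_i}$ bound gives $2L^2 V_k$ and $2 \cdot 2L = 4L$ in~\eqref{eq:vdgasvgda}. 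No step requires anything beyond convexity, $L$-smoothness, Young's inequality, and Jensen, so there is no genuine obstacle; the proof is a short chain of standard estimates.
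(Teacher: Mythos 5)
Your proposal is correct and follows essentially the same route as the paper's proof: the three-term Young split with $L$-smoothness, the bound $\|\nabla f_i(x^k)-\nabla f_i(x^*)\|^2 \le 2LD_{f_i}(x^k,x^*)$ from~\eqref{eq:L_smoothness_cor}, and the collapse $\frac{1}{n}\sum_i D_{f_i}(x^k,x^*) = f(x^k)-f(x^*)$ via $\nabla f(x^*)=0$ for~\eqref{eq:dnaossniadd}, and the shift by $\nabla f_i(x^*)$ followed by Jensen and a two-term split for~\eqref{eq:vdgasvgda}. The constants you track match the paper exactly, so nothing further is needed.
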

\begin{proof}
First, to show~\eqref{eq:dnaossniadd} we shall have
	\begin{eqnarray*}
 \frac{1}{n}\sum\limits_{i=1}^n\|\nabla f_i(x_i^k)\|^2
		&\overset{\eqref{eq:a_b_norm_squared}}{\le}&
		\frac{3}{n}\sum\limits_{i=1}^n\|\nabla f_i(x_i^k) - \nabla f_i(x^k)\|^2 + \frac{3}{n}\sum\limits_{i=1}^n\|\nabla f_i(x^k) - \nabla f_i(x^*)\|^2 \\
		&& \qquad
		+ \frac{3}{n}\sum\limits_{i=1}^n\|\nabla f_i(x^*)\|^2
		\\
		&\overset{\eqref{eq:L_smoothness},\eqref{eq:L_smoothness_cor}}{\le}& \frac{3L^2}{n}\sum\limits_{i=1}^n\|x_i^k - x^k\|^2+ \frac{6L}{n}\sum\limits_{i=1}^nD_{f_i}(x^k,x^*) + 3\zeta_*^2
		\\
		&=&
		6L\left(f(x^k) - f(x^*)\right) + 3L^2 V_k + 3\zeta_*^2.
	\end{eqnarray*}
Next, to establish~\eqref{eq:vdgasvgda}, we have
	\begin{eqnarray*}
	 \left\|\frac{1}{n}\sum\limits_{i=1}^n\nabla f_i(x_i^k)\right\|^2
		&=&  \left\|\frac{1}{n}\sum\limits_{i=1}^n\left(\nabla f_i(x_i^k)-\nabla f_i(x^*)\right)\right\|^2
		\\
		&\overset{\eqref{eq:a_b_norm_squared}}{\le}&  \frac{2}{n}\sum\limits_{i=1}^n\|\nabla f_i(x_i^k) - \nabla f(x^k)\|^2 + \frac{2}{n}\sum\limits_{i=1}^n\|\nabla f_i(x^k) - \nabla f(x^*)\|^2
		\\
		&\overset{\eqref{eq:L_smoothness},\eqref{eq:L_smoothness_cor}}{\le}& \frac{2L^2}{n}\sum\limits_{i=1}^n\|x_i^k - x^k\|^2 + \frac{4L}{n}\sum\limits_{i=1}^nD_{f_i}(x^k,x^*)\\
		&=& 4L\left(f(x^k)-f(x^*)\right) + 2L^2 V_k .
	\end{eqnarray*}
\end{proof}

\begin{lemma}\label{lem:local_sgd_second_moment}
	Let $f_i$ be convex and $L$-smooth for all $i\in[n]$. Then for all $k\ge 0$
	\begin{eqnarray}
		\frac{1}{n}\sum\limits_{i=1}^n \EE\left[\|g_i^k\|^2\mid x^k\right] &\le& 6L\left(f(x^k) - f(x^*)\right) + 3L^2 V_k + \sigma^2 + 3\zeta_*^2,\label{eq:second_moment_local_sgd}\\
		\frac{1}{n}\sum\limits_{i=1}^n \EE\left[\|g_i^k-\bar{g}_i^k\|^2\mid x^k\right] &\le& \sigma^2,\label{eq:variance_local_sgd}\\
		\EE\left[\left\|\frac{1}{n}\sum\limits_{i=1}^ng_i^k\right\|^2\mid x^k\right] &\le& 4L\left(f(x^k) - f(x^*)\right) + 2L^2 V_k + \frac{\sigma^2}{n},\label{eq:second_moment_local_sgd_2}
	\end{eqnarray}
	where $\EE[\cdot\mid x^k]\eqdef \EE[\cdot\mid x_1^k,\ldots,x_n^k]$.
\end{lemma}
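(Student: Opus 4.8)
\textbf{Proof plan for Lemma~\ref{lem:local_sgd_second_moment}.}
The plan is to derive all three bounds from the uniform variance assumption~\eqref{eq:bounded_variance} together with the two deterministic inequalities already established in Lemma~\ref{lem:local_sgd_interesting_labels}. Recall that in {\tt Local-SGD} we have $g_i^k = \nabla f_{\xi_i^k}(x_i^k)$, and since the samples $\xi_i^k$ are drawn independently across nodes and $\EE_k[g_i^k] = \nabla f_i(x_i^k) = \bar g_i^k$. The key elementary facts I will use are: (i) the bias--variance decomposition $\EE_k[\|g_i^k\|^2] = \|\bar g_i^k\|^2 + \EE_k[\|g_i^k - \bar g_i^k\|^2]$; (ii) the per-node variance bound $\EE_k[\|g_i^k - \bar g_i^k\|^2] = \EE_{\xi_i^k}[\|\nabla f_{\xi_i^k}(x_i^k) - \nabla f_i(x_i^k)\|^2] \le D_{1,i}$ from~\eqref{eq:bounded_variance}; and (iii) independence across $i$, which makes the variance of the average $\frac1n\sum_i g_i^k$ equal to $\frac{1}{n^2}\sum_i \Var_k[g_i^k]$.

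For~\eqref{eq:variance_local_sgd}: average inequality~\eqref{eq:bounded_variance} over $i$ to get $\frac1n\sum_i \EE_k[\|g_i^k - \bar g_i^k\|^2] \le \frac1n\sum_i D_{1,i} = \sigma^2$. For~\eqref{eq:second_moment_local_sgd}: apply the bias--variance decomposition per node, sum, and split into $\frac1n\sum_i \|\nabla f_i(x_i^k)\|^2 + \frac1n\sum_i \EE_k[\|g_i^k-\bar g_i^k\|^2]$; bound the first term by~\eqref{eq:dnaossniadd} from Lemma~\ref{lem:local_sgd_interesting_labels} (giving $6L(f(x^k)-f(x^*)) + 3L^2 V_k + 3\zeta_*^2$) and the second by $\sigma^2$ as just shown. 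For~\eqref{eq:second_moment_local_sgd_2}: write $\EE_k[\|\frac1n\sum_i g_i^k\|^2] = \|\frac1n\sum_i \nabla f_i(x_i^k)\|^2 + \Var_k[\frac1n\sum_i g_i^k]$; by independence the variance term equals $\frac{1}{n^2}\sum_i \Var_k[g_i^k] \le \frac{1}{n^2}\sum_i D_{1,i} = \frac{\sigma^2}{n}$, and the squared-norm term is bounded by~\eqref{eq:vdgasvgda} from Lemma~\ref{lem:local_sgd_interesting_labels} (giving $4L(f(x^k)-f(x^*)) + 2L^2 V_k$). Adding these gives the claim.

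There is no real obstacle here; the lemma is essentially bookkeeping once Lemma~\ref{lem:local_sgd_interesting_labels} is in place, and the only point requiring a moment of care is the use of cross-node independence in~\eqref{eq:second_moment_local_sgd_2}, which is what produces the favorable $\sigma^2/n$ (rather than $\sigma^2$) on the variance of the averaged direction. All three statements are conditional on $x^k$ (i.e.\ on $x_1^k,\dots,x_n^k$), so no tower-property argument is needed within this lemma; taking full expectations afterwards is what feeds into the instantiation of Assumption~\ref{ass:hetero_second_moment} with $\tA = 2L$, $\hA = 0$, $\tB = \hB = 0$, $\tF = \tfrac{3L^2}{2}$, $\hF = 0$, $\tD_1 = 3\zeta_*^2$, $\hD_1 = \sigma^2$ for~\eqref{eq:hetero_second_moment_bound}, and the corresponding $A' = 2L$, $D_1' = \sigma^2/n$ etc.\ for~\eqref{eq:second_moment_bound_2}.
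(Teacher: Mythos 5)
Your proof is correct and follows essentially the same route as the paper: per-node bias--variance decomposition together with the averaged uniform-variance bound \eqref{eq:bounded_variance}, the deterministic bounds \eqref{eq:dnaossniadd} and \eqref{eq:vdgasvgda} from Lemma~\ref{lem:local_sgd_interesting_labels}, and cross-node independence to obtain the $\nicefrac{\sigma^2}{n}$ term in \eqref{eq:second_moment_local_sgd_2}. Only your closing aside about the downstream parameters is slightly off --- the paper's Theorem~\ref{thm:local_sgd} instantiates \eqref{eq:hetero_second_moment_bound} with $\tA = 3L$ and $\tF = 3L^2$ (since $\frac1n\sum_i\|\bar g_i^k\|^2$ is controlled via \eqref{eq:dnaossniadd}, whose $f(x^k)-f(x^*)$ coefficient $6L$ equals $2\tA$ while the $V_k$ coefficient enters $\tF$ without the factor $2$), not $\tA = 2L$, $\tF = \tfrac{3L^2}{2}$ --- but this does not affect the lemma or its proof.
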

\begin{proof}
	First of all, we notice that $\bar{g}_i^k = \EE\left[g_i^k\mid x^k\right] = \nabla f_i(x_i^k)$. Using this we get
	\begin{eqnarray*}
		\frac{1}{n}\sum\limits_{i=1}^n \EE\left[\|g_i^k-\bar g_i^k\|^2\mid x_i^k\right] &=& \frac{1}{n}\sum\limits_{i=1}^n \EE_{\xi_i^k}\|\nabla f_{\xi_i^k}(x_i^k) - \nabla f_i(x_i^k)\|^2 \overset{\eqref{eq:bounded_variance}}{\le} \frac{1}{n}\sum\limits_{i=1}^n D_{1,i},\\
		\frac{1}{n}\sum\limits_{i=1}^n \EE\left[\|g_i^k\|^2\mid x_i^k\right] &\overset{\eqref{eq:variance_decomposition}}{=}& \frac{1}{n}\sum\limits_{i=1}^n \EE_{\xi_i^k}\|\nabla f_{\xi_i^k}(x_i^k) - \nabla f_i(x_i^k)\|^2 + \frac{1}{n}\sum\limits_{i=1}^n\|\nabla f_i(x_i^k)\|^2\\
		&\overset{\eqref{eq:bounded_variance},\eqref{eq:dnaossniadd}}{\le}&
		 6L\left(f(x^k) - f(x^*)\right) + 3L^2 V_k + \frac{1}{n}\sum\limits_{i=1}^n\left(D_{1,i} + 3\|\nabla f_i(x^*)\|^2\right).
	\end{eqnarray*}
	Finally, using independence of $g_1^k,g_2^k,\ldots,g_n^k$ we obtain
	\begin{eqnarray*}
		\EE\left[\left\|\frac{1}{n}\sum\limits_{i=1}^ng_i^k\right\|^2\mid x^k\right] &\overset{\eqref{eq:variance_decomposition}}{\le}& \EE\left[\left\|\frac{1}{n}\sum\limits_{i=1}^n\left(g_i^k - \nabla f_i(x_i^k)\right)\right\|^2\mid x^k\right] + \left\|\frac{1}{n}\sum\limits_{i=1}^n\nabla f_i(x_i^k)\right\|^2\\
		&=& \frac{1}{n^2}\sum\limits_{i=1}^n\EE\left[\|g_i^k - \nabla f_i(x_i^k)\|^2\mid x_i^k\right] +  \left\|\frac{1}{n}\sum\limits_{i=1}^n\nabla f_i(x_i^k)\right\|^2\\
		&\overset{\eqref{eq:bounded_variance},\eqref{eq:vdgasvgda}}{\le}&
		4L\left(f(x^k)-f(x^*)\right) + 2L^2 V_k + \frac{1}{n^2}\sum\limits_{i=1}^nD_{1,i}.
	\end{eqnarray*}
\end{proof}

\subsubsection*{Heterogeneous Data}
Applying Corollary~\ref{cor:const_loop} and Lemmas~\ref{lem:local_sgd_interesting_labels}~and~\ref{lem:local_sgd_second_moment} we get the following result.
\begin{theorem}\label{thm:local_sgd}
	Assume that $f_i(x)$ is $\mu$-strongly convex and $L$-smooth for every $i\in[n]$. Then {\tt Local-SGD} satisfies Assumption~\ref{ass:hetero_second_moment} with
	\begin{gather*}
		\tA = 3L,\quad \hA= 0,\quad \tB = \hB = 0,\quad \tF = 3L^2,\quad \hF = 0, \quad \tD_1 = 3\zeta_*^2,\quad \hD = \sigma^2,\\
		A' = 2L,\quad B' = 0,\quad F' = 2L^2, \quad D_1' = \frac{\sigma^2}{n},\quad \sigma_k^2 \equiv 0,\quad \rho = 1,\quad C = 0,\quad G = 0,\quad D_2 = 0,\\
		H = 0,\quad D_3 = 2e(\tau-1)\left(3(\tau-1)\zeta_*^2 + \sigma^2\right)
	\end{gather*}
	with $\gamma$ satisfying
	\begin{eqnarray*}
		\gamma &\le& \min\left\{\frac{1}{4L}, \frac{1}{4\sqrt{6e}(\tau-1)L}\right\}.
	\end{eqnarray*}
	and for all $K \ge 0$
	\begin{eqnarray}
		\EE\left[f(\overline{x}^K) - f(x^*)\right] &\le& \frac{2\|x^0-x^*\|^2}{\gamma W_K} + 2\gamma\left(\nicefrac{\sigma^2}{n} + 4eL(\tau-1)\gamma \left(\sigma^2 + 3(\tau-1)\zeta_*^2\right)\right). \notag
	\end{eqnarray}
	In particular, if $\mu > 0$ then
	\begin{eqnarray}
		\EE\left[f(\overline{x}^K) - f(x^*)\right] &\le& \left(1 - \gamma\mu\right)^K\frac{2\|x^0-x^*\|^2}{\gamma} + 2\gamma\left(\nicefrac{\sigma^2}{n} + 4eL(\tau-1)\gamma \left(\sigma^2 + 3(\tau-1)\zeta_*^2\right)\right) \label{eq:local_sgd_str_cvx}
	\end{eqnarray}
	and when $\mu = 0$ we have
	\begin{eqnarray}
		\EE\left[f(\overline{x}^K) - f(x^*)\right] &\le& \frac{2\|x^0-x^*\|^2}{\gamma K} + 2\gamma\left(\nicefrac{\sigma^2}{n} + 4eL(\tau-1)\gamma \left(\sigma^2 + 3(\tau-1)\zeta_*^2\right)\right). \label{eq:local_sgd_cvx}
	\end{eqnarray}
\end{theorem}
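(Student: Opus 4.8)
The plan is to verify Assumption~\ref{ass:hetero_second_moment} for {\tt Local-SGD} with the stated parameters and then invoke Corollary~\ref{cor:const_loop} as a black box. First I would observe that for {\tt Local-SGD} we have $g_i^k = \nabla f_{\xi_i^k}(x_i^k)$, which is unbiased in the sense that $\bar g_i^k \eqdef \EE[g_i^k \mid x^k] = \nabla f_i(x_i^k)$, so the unbiasedness requirement~\eqref{eq:unbiasedness} holds trivially and the variance decomposition is clean. Lemma~\ref{lem:local_sgd_second_moment} already supplies the three moment bounds: \eqref{eq:second_moment_local_sgd} gives \eqref{eq:second_moment_bound} with $A = 3L$, $B = 0$, $F = 3L^2$, $D_1 = \sigma^2 + 3\zeta_*^2$ (after taking full expectation via the tower property); \eqref{eq:variance_local_sgd} gives \eqref{eq:hetero_var_bound} with $\hA = 0, \hB = 0, \hF = 0, \hD_1 = \sigma^2$; and \eqref{eq:second_moment_local_sgd_2} gives \eqref{eq:second_moment_bound_2} with $A' = 2L$, $B' = 0$, $F' = 2L^2$, $D_1' = \sigma^2/n$. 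For \eqref{eq:hetero_second_moment_bound} (the bound on $\frac1n\sum_i \|\bar g_i^k\|^2 = \frac1n\sum_i\|\nabla f_i(x_i^k)\|^2$) I would directly use~\eqref{eq:dnaossniadd} from Lemma~\ref{lem:local_sgd_interesting_labels}, which yields $\tA = 3L$, $\tB = 0$, $\tF = 3L^2$, $\tD_1 = 3\zeta_*^2$. Since there is no control variate and no finite-sum memory, the sequence $\sigma_k^2$ plays no role, so we set $\sigma_k^2 \equiv 0$, $\rho = 1$, $C = G = D_2 = 0$; with $\rho = 1$ we adopt the convention from the footnote of Lemma~\ref{lem:V_k_lemma} that the terms $\tfrac{2\tB C}{\rho(1-\rho)}$ etc.\ vanish.

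Next I would plug these parameters into Corollary~\ref{cor:const_loop}. The quantity $M = \tfrac{4B'}{3\rho} = 0$ since $B' = 0$, so the first stepsize condition $\gamma \le \min\{\tfrac{1}{2(A'+MC)}, \tfrac{L}{F'+MG}\} = \min\{\tfrac{1}{4L}, \tfrac{L}{2L^2}\} = \tfrac{1}{4L}$. The second condition becomes $\gamma \le \min\{\tfrac{1}{4(\tau-1)\mu}, \tfrac{1}{2\sqrt{e(\tau-1)\cdot 3L^2(\tau-1)}}\}$; using $\mu \le L$ and simplifying the second term to $\tfrac{1}{2\sqrt{3e}(\tau-1)L}$, which is implied by $\tfrac{1}{4\sqrt{6e}(\tau-1)L}$. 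The third condition $\gamma \le \tfrac{1}{4\sqrt{2eL(\tau-1)\cdot 3L(\tau-1)}} = \tfrac{1}{4\sqrt{6e}(\tau-1)L}$ is exactly the stated extra bound. Then the formulas for $H$ and $D_3$ from~\eqref{eq:V_k_bound} specialize: $H = \tfrac{4e(\tau-1)\cdot 0 \cdot (2+\rho)\gamma^2}{\rho} = 0$ because $\tB(\tau-1)+\hB = 0$, and $D_3 = 2e(\tau-1)\big(\tD_1(\tau-1) + \hD_1 + 0\big) = 2e(\tau-1)\big(3(\tau-1)\zeta_*^2 + \sigma^2\big)$, matching the claim.

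Finally, for the convergence bounds I would substitute $B' = 0$, $H = 0$, $D_1' = \sigma^2/n$, $D_2 = 0$ and the above $D_3$ into~\eqref{eq:main_result_const_loop}, \eqref{eq:main_result_1_const_loop}, \eqref{eq:main_result_2_const_loop}. The term $\tfrac{8B'}{3\rho}\gamma^2\EE\sigma_0^2 + 4LH\gamma\EE\sigma_0^2$ in the numerator collapses to $0$, leaving $2\|x^0-x^*\|^2$; the additive term becomes $2\gamma(D_1' + 0 + 2L\gamma D_3) = 2\gamma\big(\sigma^2/n + 4eL(\tau-1)\gamma(\sigma^2 + 3(\tau-1)\zeta_*^2)\big)$, which is exactly the stated neighborhood. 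The case split $\mu>0$ versus $\mu = 0$ is inherited verbatim from Corollary~\ref{cor:const_loop}, with $W_K \ge w_K \ge (1-\gamma\mu)^{-K}$ in the former (note $\eta = \min\{\gamma\mu, \rho/4\} = \gamma\mu$ here since $\gamma\mu \le 1/4 = \rho/4$) and $W_K = K+1 \ge K$ in the latter. The only genuinely nontrivial arithmetic is checking that the three stepsize conditions of Corollary~\ref{cor:const_loop} are all implied by $\gamma \le \min\{\tfrac{1}{4L}, \tfrac{1}{4\sqrt{6e}(\tau-1)L}\}$ once $\mu \le L$ is used; I expect this consolidation of the bounds to be the main (though still routine) obstacle, together with carefully tracking which parameters vanish because $B = B' = 0$ and $\rho = 1$.
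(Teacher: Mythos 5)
Your proposal is correct and follows essentially the same route as the paper: the paper's proof of this theorem is exactly to verify Assumption~\ref{ass:hetero_second_moment} via Lemmas~\ref{lem:local_sgd_interesting_labels} and~\ref{lem:local_sgd_second_moment} and then invoke Corollary~\ref{cor:const_loop}, which is what you do, and your consolidation of the three stepsize conditions (using $\mu\le L$ and $B'=\tB=\hB=0$, $\rho=1$) into $\gamma\le\min\{\tfrac{1}{4L},\tfrac{1}{4\sqrt{6e}(\tau-1)L}\}$ matches the stated bound. Your parameter bookkeeping ($H=0$, $D_3=2e(\tau-1)(3(\tau-1)\zeta_*^2+\sigma^2)$, and $\eta=\gamma\mu$ since $\gamma\mu\le\rho/4$) is exactly what the paper's one-line proof leaves implicit.
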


The theorem above together with Lemma~\ref{lem:lemma2_stich} implies the following result.
\begin{corollary}\label{cor:local_sgd_str_cvx}
	Let assumptions of Theorem~\ref{thm:local_sgd} hold with $\mu > 0$. Then for
	\begin{equation*}
		\gamma = \min\left\{\frac{1}{4L}, \frac{1}{4\sqrt{6e}(\tau-1)L},  \frac{\ln\left(\max\left\{2,\min\left\{\nicefrac{\|x^0 - x^*\|^2n\mu^2K^2}{\sigma^2},\nicefrac{\|x^0 - x^*\|^2\mu^3K^3}{4eL(\tau-1)\left(\sigma^2 + 3(\tau-1)\zeta_*^2\right)}\right\}\right\}\right)}{\mu K}\right\}
	\end{equation*}
	for all $K$ such that 
	\begin{eqnarray*}
		\text{either}&&\frac{\ln\left(\max\left\{2,\min\left\{\nicefrac{\|x^0 - x^*\|^2n\mu^2K^2}{\sigma^2},\nicefrac{\|x^0 - x^*\|^2\mu^3K^3}{4eL(\tau-1)\left(\sigma^2 + 3(\tau-1)\zeta_*^2\right)}\right\}\right\}\right)}{ K} \le 1\\
		\text{or}&&\min\left\{\frac{1}{4L}, \frac{1}{4\sqrt{6e}(\tau-1)L}\right\} \le \frac{\ln\left(\max\left\{2,\min\left\{\nicefrac{\|x^0 - x^*\|^2n\mu^2K^2}{\sigma^2},\nicefrac{\|x^0 - x^*\|^2\mu^3K^3}{4eL(\tau-1)\left(\sigma^2 + 3(\tau-1)\zeta_*^2\right)}\right\}\right\}\right)}{\mu K}
	\end{eqnarray*}
	we have that
	\begin{equation}
		\EE\left[f(\overline{x}^K)-f(x^*)\right] = \widetilde\cO\left(\tau L\|x^0 - x^*\|^2\exp\left(- \frac{\mu}{\tau L} K\right) + \frac{\sigma^2}{n\mu K} + \frac{L(\tau-1)\left(\sigma^2+(\tau-1)\zeta_*^2\right)}{\mu^2K^2}\right).\label{eq:local_sgd_str_cvx_1}
	\end{equation}
	That is, to achieve $\EE\left[f(\overline{x}^K)-f(x^*)\right] \le \varepsilon$ in this case {\tt Local-SGD} requires
	\begin{equation*}
		\widetilde\cO\left(\frac{\tau L}{\mu} + \frac{\sigma^2}{n\mu\varepsilon} + \sqrt{\frac{L(\tau-1)\left(\sigma^2+(\tau-1)\zeta_*^2\right)}{\mu^2\varepsilon}}\right)
	\end{equation*}
	iterations/oracle calls per node and $\tau$ times less communication rounds.
\end{corollary}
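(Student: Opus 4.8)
This is a direct specialization of the master strongly-convex corollary (Corollary~\ref{cor:app_complexity_cor_str_cvx}, part~1, which in turn rests on the stepsize-tuning Lemma~\ref{lem:lemma2_stich}) to the parameter values established for {\tt Local-SGD} in Theorem~\ref{thm:local_sgd}. First I would collect these values: $\rho = 1$, $B' = 0$, $H = 0$, $D_1' = \sigma^2/n$, $D_2 = 0$, $D_3 = 2e(\tau-1)\big(3(\tau-1)\zeta_*^2 + \sigma^2\big)$, with the step-size cap $\tfrac1h$, $h = \max\{4L,\,4\sqrt{6e}(\tau-1)L\}$, and observe that $D_3$ does not depend on $\gamma$. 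Substituting into the definitions of Corollary~\ref{cor:app_complexity_cor_str_cvx} gives $a = 2\|x^0-x^*\|^2$ (the $B'$- and $H$-terms vanish), $c_1 = 2D_1' + \tfrac{4B'D_2}{3\rho} = \tfrac{2\sigma^2}{n}$, and $c_2 = 4LD_3 = 8eL(\tau-1)\big(\sigma^2 + 3(\tau-1)\zeta_*^2\big)$. With these identifications the displayed $\gamma$ in the statement is exactly $\min\{\tfrac1h,\gamma_K\}$ with $\gamma_K = \tfrac{\ln(\max\{2,\min\{a\mu^2K^2/c_1,\,a\mu^3K^3/c_2\}\})}{\mu K}$, and the ``either/or'' restriction on $K$ is precisely the one inherited from Corollary~\ref{cor:app_complexity_cor_str_cvx} (since $\rho = 1$ the first branch reads $\ln(\cdots)/K \le 1$, and the second reads $\tfrac1h \le \ln(\cdots)/(\mu K)$).

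Second, Corollary~\ref{cor:app_complexity_cor_str_cvx} then yields $\EE[f(\overline{x}^K)] - f(x^*) = \widetilde\cO\big(ha\,e^{-\min\{\mu/h,\rho\}K} + \tfrac{c_1}{\mu K} + \tfrac{c_2}{\mu^2K^2}\big)$. Here $\min\{\mu/h,\rho\} = \mu/h$ because $\rho = 1$ and $\mu \le L \le h$, and $h = \Theta(\tau L)$: indeed $\tau \ge 1$, the two entries of the maximum are $4L$ and $\Theta(\tau L)$, and the second vanishes when $\tau = 1$. Plugging $h = \Theta(\tau L)$, $a = 2\|x^0-x^*\|^2$, $c_1 = \Theta(\sigma^2/n)$, $c_2 = \Theta\big(L(\tau-1)(\sigma^2+(\tau-1)\zeta_*^2)\big)$ into this expression gives \eqref{eq:local_sgd_str_cvx_1}.

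Third, for the complexity I would use the ``to achieve $\EE[f(\overline{x}^K)]-f(x^*)\le\varepsilon$'' clause of Corollary~\ref{cor:app_complexity_cor_str_cvx}, namely $K = \widetilde\cO\big((\tfrac1\rho + \tfrac h\mu)\log\tfrac{ha}{\varepsilon} + \tfrac{c_1}{\mu\varepsilon} + \sqrt{\tfrac{c_2}{\mu^2\varepsilon}}\big)$. With $\rho = 1$ and $h = \Theta(\tau L)$ the first term is $\widetilde\cO(\tau L/\mu)$ (the logarithm being absorbed into $\widetilde\cO$), while the remaining two terms become $\tfrac{\sigma^2}{n\mu\varepsilon}$ and $\sqrt{\tfrac{L(\tau-1)(\sigma^2+(\tau-1)\zeta_*^2)}{\mu^2\varepsilon}}$, which is the stated iteration/oracle bound per node. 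Since the fixed-loop instance of \eqref{eq:local_sgd_def} communicates exactly once every $\tau$ iterations, dividing $K$ by $\tau$ yields the number of communication rounds, completing the argument.

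The only real work is bookkeeping: verifying that the $H = B' = 0$, $\rho = 1$ case of Corollary~\ref{cor:app_complexity_cor_str_cvx} applies verbatim, checking the harmless simplifications $\min\{\mu/h,\rho\} = \mu/h$ and $h = \Theta(\tau L)$, and keeping the $\widetilde\cO$ bookkeeping consistent so that the logarithmic factors in the complexity bound are suppressed exactly as in the statement. I do not foresee any conceptual obstacle.
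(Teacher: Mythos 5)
Your proposal is correct and follows essentially the same route as the paper: the paper derives this corollary by applying the stepsize-tuning Lemma~\ref{lem:lemma2_stich} to the bound of Theorem~\ref{thm:local_sgd}, which is exactly the content of Corollary~\ref{cor:app_complexity_cor_str_cvx} that you invoke, and your identifications $a=2\|x^0-x^*\|^2$, $c_1=2\sigma^2/n$, $c_2=8eL(\tau-1)(\sigma^2+3(\tau-1)\zeta_*^2)$, $h=\max\{4L,4\sqrt{6e}(\tau-1)L\}=\Theta(\tau L)$ match the paper's bookkeeping. No gaps.
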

Now we consider some special cases. First of all, if $D_{1,i} = 0$ for all $i\in [n]$, i.e.\ $g_i^k = \nabla f_i(x_i^k)$ almost surely, then our result implies that for {\tt Local-SGD} it is enough to perform
\begin{equation*}
	\widetilde\cO\left(\frac{\tau L}{\mu} + \sqrt{\frac{L(\tau-1)^2\zeta_*^2}{\mu^2\varepsilon}}\right)
\end{equation*}
iterations in order to achieve $\EE\left[f(\overline{x}^K)-f(x^*)\right] \le \varepsilon$. It is clear that for this scenario the optimal choice for $\tau$ is $\tau = 1$ which recovers\footnote{We notice that for this particular case our analysis doesn't give extra logarithmical factors if we apply \eqref{eq:local_sgd_str_cvx} instead of \eqref{eq:local_sgd_str_cvx_1}.} the rate of Gradient Descent.

Secondly, if $\tau = 1$ then we recover the rate of parallel {\tt SGD}:
\begin{eqnarray*}
	\widetilde\cO\left(\frac{L}{\mu} + \frac{\sigma^2}{n\mu\varepsilon}\right)&& \text{communication rounds/oracle calls per node}
\end{eqnarray*}
in order to achieve $\EE\left[f(\overline{x}^K)-f(x^*)\right] \le \varepsilon$.

Finally, our result gives a negative answer to the following question: is {\tt Local-SGD} always worse then Parallel Minibatch {\tt SGD} ({\tt PMSGD}) for heterogeneous data? To achieve $\EE\left[f(\overline{x}^K)-f(x^*)\right] \le \varepsilon$ {\tt Local-SGD} requires
\begin{equation*}
	\widetilde\cO\left(\frac{\tau L}{\mu}+ \frac{\sigma^2}{n\mu\varepsilon}+ \sqrt{\frac{L(\tau-1)\left(\sigma^2+(\tau-1)\zeta_*^2\right)}{\mu^2\varepsilon}}\right)\quad \text{oracle calls per node.}
\end{equation*}
It means that if $\frac{\sigma^2}{n\sqrt{L(\tau-1)\left(\sigma^2+(\tau-1)\zeta_*^2\right)\varepsilon}} \ge 1$ for given $\tau > 1$ and $\varepsilon$ and $\sigma^2$ are such that the first term in the complexity bound is dominated by other terms, then the second term corresponding to the complexity of {\tt PMSGD} dominates the third term. Informally speaking, if the variance is large or $\varepsilon$ is small then {\tt Local-SGD} with $\tau > 1$ has the same complexity bounds as {\tt PMSGD}.

Combining Theorem~\ref{thm:local_sgd} and Lemma~\ref{lem:lemma_technical_cvx} we derive the following result for the convergence of {\tt Local-SGD} in the case when $\mu = 0$.
\begin{corollary}
	\label{cor:local_sgd_cvx}
	Let assumptions of Theorem~\ref{thm:local_sgd} hold with $\mu = 0$. Then for
	\begin{equation*}
		\gamma = \min\left\{\frac{1}{4L}, \frac{1}{4\sqrt{6e}(\tau-1)L}, \sqrt{\frac{nR_0^2}{\sigma^2 K}}, \sqrt[3]{\frac{R_0^2}{4eL(\tau-1)\left(\sigma^2+(\tau-1)\zeta_*^2\right)K}}\right\},
	\end{equation*}
	where $R_0 = \|x^0 - x^*\|$, we have that
	\begin{equation}
		\EE\left[f(\overline{x}^K)-f(x^*)\right] = \cO\left(\frac{\tau LR_0^2}{K} + \sqrt{\frac{R_0^2\sigma^2}{nK}} + \frac{\sqrt[3]{LR_0^4(\tau-1)\left(\sigma^2+(\tau-1)\zeta_*^2\right)}}{K^{\nicefrac{2}{3}}} \right).\label{eq:local_sgd_cvx_1}
	\end{equation}
	That is, to achieve $\EE\left[f(\overline{x}^K)-f(x^*)\right] \le \varepsilon$ in this case {\tt Local-SGD} requires
	\begin{equation*}
		\cO\left(\frac{\tau LR_0^2}{\varepsilon} + \frac{R_0^2\sigma^2}{n\varepsilon^2} + \frac{R_0^2\sqrt{L(\tau-1)\left(\sigma^2+(\tau-1)\zeta_*^2\right)}}{\varepsilon^{\nicefrac{3}{2}}}\right)
	\end{equation*}
	iterations/oracle calls per node and $\tau$ times less communication rounds.
\end{corollary}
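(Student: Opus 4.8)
The plan is to obtain Corollary~\ref{cor:local_sgd_cvx} as a direct consequence of Theorem~\ref{thm:local_sgd} (the weakly convex, $\mu=0$ case) together with the elementary stepsize-tuning Lemma~\ref{lem:lemma_technical_cvx}, i.e.\ the same two-step recipe used to derive Corollary~\ref{cor:app_complexity_cor_cvx}. In particular, no fresh analysis of the iterates is needed: Theorem~\ref{thm:local_sgd} already supplies a bound of the right shape, and the only remaining work is to optimize the free stepsize $\gamma$.

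First I would take the $\mu=0$ bound from Theorem~\ref{thm:local_sgd}, namely $\EE[f(\overline{x}^K)-f(x^*)] \le \tfrac{2R_0^2}{\gamma K} + 2\gamma(\tfrac{\sigma^2}{n} + 4eL(\tau-1)\gamma(\sigma^2+3(\tau-1)\zeta_*^2))$ valid for $\gamma \le \min\{\tfrac{1}{4L},\tfrac{1}{4\sqrt{6e}(\tau-1)L}\}$, where I used $W_K=K+1\ge K$ and $R_0=\|x^0-x^*\|$. Expanding, this is a bound of the form $\tfrac{a}{\gamma K} + c_1\gamma + c_2\gamma^2$ subject to $\gamma\le\tfrac1h$, with the identifications $a = 2R_0^2$, $c_1 = \tfrac{2\sigma^2}{n}$, $c_2 = 8eL(\tau-1)(\sigma^2+3(\tau-1)\zeta_*^2)$, and $h = \max\{4L,\,4\sqrt{6e}(\tau-1)L\}$. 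Because $\sigma_k^2\equiv 0$ for {\tt Local-SGD}, the $\sigma_0^2$-dependent quantities ($b_1,b_2$ in Lemma~\ref{lem:lemma_technical_cvx}) vanish, which is why only $a,c_1,c_2,h$ enter.

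Second, I would apply Lemma~\ref{lem:lemma_technical_cvx} with $\gamma = \min\{\tfrac1h,\sqrt{\tfrac{a}{c_1 K}},\sqrt[3]{\tfrac{a}{c_2 K}}\}$; substituting the values of $a,c_1,c_2$ shows this coincides—up to numerical constants absorbed inside the roots—with the $\gamma$ written in the corollary, in particular $\sqrt{a/(c_1K)} = \sqrt{nR_0^2/(\sigma^2 K)}$ and $\sqrt[3]{a/(c_2K)} = \cO(\sqrt[3]{R_0^2/(L(\tau-1)(\sigma^2+(\tau-1)\zeta_*^2)K)})$. The lemma then yields $\EE[f(\overline{x}^K)-f(x^*)] = \cO(\tfrac{ah}{K} + \sqrt{\tfrac{ac_1}{K}} + \tfrac{\sqrt[3]{a^2c_2}}{K^{2/3}})$. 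Plugging the parameters back in and using $\tau\ge 1$ so that $h = \cO(\tau L)$ gives exactly \eqref{eq:local_sgd_cvx_1}. Finally, demanding each of the three terms be at most $\varepsilon$ yields $K = \cO(\tfrac{\tau L R_0^2}{\varepsilon})$, $K = \cO(\tfrac{R_0^2\sigma^2}{n\varepsilon^2})$, and $K = \cO(\tfrac{R_0^2\sqrt{L(\tau-1)(\sigma^2+(\tau-1)\zeta_*^2)}}{\varepsilon^{3/2}})$ respectively; the maximum of these three is the claimed iteration count, and dividing by $\tau$ accounts for the communication rounds, since {\tt Local-SGD} communicates once every $\tau$ local steps.

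There is no genuine obstacle: the entire argument is a mechanical instantiation of two results already available. The only care required is bookkeeping—(i) tracking the numerical constants absorbed into the $\cO(\cdot)$ and into the roots defining $\gamma$ (for instance, the factor $3$ in front of $(\tau-1)\zeta_*^2$ inside $c_2$ can be dropped at the cost of a constant), and (ii) verifying the simplification $\max\{4L,\,4\sqrt{6e}(\tau-1)L\} = \cO(\tau L)$ for all $\tau\ge 1$, which is precisely what turns the $\tfrac{ah}{K}$ term into the leading $\tfrac{\tau L R_0^2}{K}$ term of the stated rate.
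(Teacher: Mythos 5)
Your proposal is correct and follows essentially the same route as the paper: the paper also obtains this corollary by plugging the $\mu=0$ bound of Theorem~\ref{thm:local_sgd} (with $a=2R_0^2$, $c_1=\nicefrac{2\sigma^2}{n}$, $c_2=8eL(\tau-1)(\sigma^2+3(\tau-1)\zeta_*^2)$, $b_1=b_2=0$ since $\sigma_k^2\equiv 0$) into the stepsize-tuning Lemma~\ref{lem:lemma_technical_cvx} and absorbing constants, with $\max\{4L,4\sqrt{6e}(\tau-1)L\}=\cO(\tau L)$ giving the leading term. Your bookkeeping of the constant-factor discrepancies (e.g.\ the factor $3$ multiplying $(\tau-1)\zeta_*^2$) is exactly the kind of slack the $\cO(\cdot)$ statement tolerates, so nothing further is needed.
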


\subsubsection*{Homogeneous Data}
In this case we modify the approach a little bit and apply the following result.
\begin{lemma}[Lemma~1 from \cite{khaled2020tighter}]
	Under the homogeneous data assumption for {\tt Local-SGD} we have
	\begin{equation}
		\EE\left[V_k\right] \le (\tau-1)\gamma^2\sigma^2 \label{eq:homocase_V_k_ubv}
	\end{equation}
	for all $k \ge 0$.
\end{lemma}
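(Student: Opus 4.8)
The plan is to bound the discrepancy $V_k$ directly using the fact that in the homogeneous data regime all local objectives coincide, $f_1=\dots=f_n=f$, which is what the referenced Lemma~1 from~\cite{khaled2020tighter} captures. The starting observation is the same as in the proof of Lemma~\ref{lem:V_k_lemma}: for $k$ with $k \bmod \tau = 0$ we have $V_k = 0$ by the averaging step in~\eqref{eq:local_sgd_def}, so we may restrict attention to $k$ with $\tau t \le k < \tau(t+1)$ for the appropriate integer $t$, and write $x_i^k - x^k = -\gamma\sum_{l=\tau t}^{k-1}(g_i^l - g^l)$ using the recursion~\eqref{eq:x^k_recurrsion} together with the definition of the local updates.

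First I would expand $\EE[\|x_i^k - x^k\|^2]$ and take the conditional expectation. The key point in the homogeneous case is that, because every node runs the \emph{same} function $f$ with independent stochastic gradients and (after the last communication) started from the \emph{same} point $x^{\tau t}$, the quantity $g_i^l - g^l$ is a centered average of i.i.d.-type noise terms; more precisely, since $\EE[g_i^l \mid x_i^l] = \nabla f(x_i^l)$ and the drift of the iterates away from each other is itself only driven by the gradient noise, one gets a bound in which only the variance $\sigma^2$ of the stochastic gradient appears. Using $\|\sum_{l=\tau t}^{k-1} \xi_l\|^2$-type estimates (Jensen / the elementary inequality $\|\sum_{l} a_l\|^2 \le (k-\tau t)\sum_l \|a_l\|^2$ for the ``bias'' part, and orthogonality of the noise increments for the ``variance'' part), together with $k - \tau t \le \tau - 1$, yields $\EE[V_k] \le (\tau-1)\gamma^2\sigma^2$. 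In fact, the cleanest route (which is presumably how~\cite{khaled2020tighter} does it) is to peel off one step at a time: show $\EE[V_{k+1} \mid x^k] \le \EE[V_k] + \gamma^2 \cdot (\text{noise term bounded by } \sigma^2)$ while the bias contributions telescope or cancel because all nodes share $f$, and then unroll the recursion from $V_{\tau t}=0$ over at most $\tau-1$ steps.

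The main obstacle — and the reason homogeneity matters — is controlling the cross term $\frac{2\gamma}{n}\sum_i \langle x_i^k - x^k, g^k - g_i^k\rangle$ and, after taking expectations, $\frac{2\gamma}{n}\sum_i\langle x_i^k - x^k, \nabla f(x^k) - \nabla f(x_i^k)\rangle$: in the heterogeneous analysis (Lemma~\ref{lem:V_k_lemma}) this is where the $\zeta^2$ term or the full second-moment bound enters, but here, since $\frac1n\sum_i(x_i^k - x^k) = 0$ and all gradients come from the same $f$, the ``signal'' part of this inner product is either zero in expectation or can be absorbed by a strong-convexity contraction factor $(1-\gamma\mu)$, leaving only the $\sigma^2$ noise. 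So the delicate bookkeeping is verifying that no residual term involving $f(x^k)-f(x^*)$, $V_k$, or a heterogeneity constant survives — i.e., that the bound is genuinely of the stated clean form $(\tau-1)\gamma^2\sigma^2$ with no extra dependence on the iterates — which is exactly the content of the cited lemma; I would simply invoke it, noting that it then immediately feeds into~\eqref{eq:sum_V_k_bounds} (and the subsequent corollaries) with $D_3$ and $H$ read off as in Table~\ref{tbl:data_loop}.
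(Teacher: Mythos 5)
You match the paper in the only sense that matters formally: the paper never proves \eqref{eq:homocase_V_k_ubv} itself, but imports it verbatim as Lemma~1 of \cite{khaled2020tighter} and merely uses it to get $2L\sum_k w_k\EE[V_k]\le 2L(\tau-1)\gamma^2\sigma^2 W_K$, i.e.\ $H=0$ and $D_3=(\tau-1)\sigma^2$ in Theorem~\ref{thm:local_sgd_homo}. Your closing move of ``simply invoke the cited lemma'' is therefore exactly the paper's treatment.

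The proof sketch you give before that, however, has a genuine gap. Your central claim is that, because all nodes share the same $f$, the ``bias'' part of $g_i^l-g^l$ telescopes, cancels, or is zero in expectation, so that only $\sigma^2$ survives. That is false: after the first local step the iterates $x_i^l$ differ across nodes (they are perturbed by independent noise), so $\nabla f(x_i^l)-\frac1n\sum_j\nabla f(x_j^l)$ is neither zero nor mean-zero, and the cross term $\frac1n\sum_i\langle x_i^k-x^k,\nabla f(x^k)-\nabla f(x_i^k)\rangle$ is not killed by $\frac1n\sum_i(x_i^k-x^k)=0$. If you control these gradient differences by smoothness, as in Lemma~\ref{lem:V_k_lemma}, or by the strong-convexity contraction you mention, as in Lemma~\ref{lem:V_k_lemma_homo}, you necessarily re-introduce the full second moment of $g_i^l$, hence terms proportional to $L^2\EE[V_l]$ and $\EE[f(x^l)-f(x^*)]$, and you only recover the heterogeneous/$\zeta$-heterogeneous bounds of Table~\ref{tbl:data_loop} --- not the clean \eqref{eq:homocase_V_k_ubv}. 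The missing ingredient is that for convex, $L$-smooth $f$ and $\gamma\le\nicefrac{2}{L}$ (automatic here, since Theorem~\ref{thm:local_sgd_homo} imposes $\gamma\le\nicefrac{1}{4L}$) the map $x\mapsto x-\gamma\nabla f(x)$ is non-expansive, so the deterministic part of the one-step recursion does not increase the dispersion, and only the averaged noise adds at most $\gamma^2\sigma^2$ per step; unrolling from $V_{\tau t}=0$ over at most $\tau-1$ steps then yields \eqref{eq:homocase_V_k_ubv}. Without convexity and a stepsize restriction the clean bound can fail, so verifying that ``no residual term survives'' is not bookkeeping --- it is the content of the cited lemma, and your sketch does not supply it.
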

Using this we derive the following inequality for the weighted sum of $V_k$:
\begin{equation*}
	2L\sum\limits_{k=0}^Kw_k\EE[V_k] \le 2L(\tau-1)\gamma^2\sigma^2\sum\limits_{k=0}^Kw_k = 2L(\tau-1)\gamma^2\sigma^2 W_K.
\end{equation*}
Together with Lemmas~\ref{lem:local_sgd_interesting_labels}~and~\ref{lem:local_sgd_second_moment} and Theorem~\ref{thm:main_result} it gives the following result.
\begin{theorem}\label{thm:local_sgd_homo}
	Assume that $f(x)$ is $\mu$-strongly convex and $L$-smooth and $f_1 = \ldots = f_n = f$. Then {\tt Local-SGD} satisfies Assumption~\ref{ass:key_assumption} with
	\begin{gather*}
		A = 3L,\quad B = 0,\quad F = 3L^2, \quad D_1 = \sigma^2,\quad A' = 2L,\quad B' = 0,\quad F' = 2L^2, \quad D_1' = \frac{\sigma^2}{n},\\
		\sigma_k^2 \equiv 0,\quad \rho = 1,\quad C = 0,\quad G = 0,\quad D_2 = 0,\quad H = 0,\quad D_3 = (\tau-1)\sigma^2
	\end{gather*}
	with $\gamma$ satisfying
	\begin{eqnarray*}
		\gamma &\le& \frac{1}{4L}.
	\end{eqnarray*}
	and for all $K \ge 0$
	\begin{eqnarray}
		\EE\left[f(\overline{x}^K) - f(x^*)\right] &\le& \frac{2\|x^0-x^*\|^2}{\gamma W_K} + 2\gamma\left(\nicefrac{\sigma^2}{n} + 2L(\tau-1)\gamma \sigma^2\right). \notag
	\end{eqnarray}
	In particular, if $\mu > 0$ then
	\begin{eqnarray}
		\EE\left[f(\overline{x}^K) - f(x^*)\right] &\le& \left(1 - \gamma\mu\right)^K\frac{2\|x^0-x^*\|^2}{\gamma} + 2\gamma\left(\nicefrac{\sigma^2}{n} + 2L(\tau-1)\gamma \sigma^2\right) \label{eq:local_sgd_str_cvx_homo}
	\end{eqnarray}
	and when $\mu = 0$ we have
	\begin{eqnarray}
		\EE\left[f(\overline{x}^K) - f(x^*)\right] &\le& \frac{2\|x^0-x^*\|^2}{\gamma K} + 2\gamma\left(\nicefrac{\sigma^2}{n} + 2L(\tau-1)\gamma \sigma^2\right). \label{eq:local_sgd_cvx_homo}
	\end{eqnarray}
\end{theorem}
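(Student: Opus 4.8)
The plan is to reduce everything to two steps: (i) verify that {\tt Local-SGD} in the homogeneous regime satisfies all four inequalities of Assumption~\ref{ass:key_assumption} with the stated constants, and (ii) substitute those constants into Theorem~\ref{thm:main_result} (and into equation~\eqref{eq:main_result} from its proof, which gives the $W_K$-denominator form of the bound). No new estimates are required beyond combining Lemmas~\ref{lem:local_sgd_interesting_labels} and~\ref{lem:local_sgd_second_moment} with the homogeneous-data bound~\eqref{eq:homocase_V_k_ubv}.

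First I would check the unbiasedness condition~\eqref{eq:unbiasedness}: since $g_i^k=\nabla f_{\xi_i^k}(x_i^k)$ with $\xi_i^k\sim\cD_i$, we have $\EE_k[g_i^k]=\nabla f_i(x_i^k)$, so~\eqref{eq:unbiasedness} holds identically. For the second-moment bounds I would take full expectation (tower property) in~\eqref{eq:second_moment_local_sgd} and~\eqref{eq:second_moment_local_sgd_2} of Lemma~\ref{lem:local_sgd_second_moment}. The key simplification from homogeneity is that $x^*$ minimizes $f=f_i$ for each $i$, hence $\nabla f_i(x^*)=0$ and $\zeta_*^2=\tfrac1n\sum_i\|\nabla f_i(x^*)\|^2=0$; this collapses the additive constant $\sigma^2+3\zeta_*^2$ in~\eqref{eq:second_moment_local_sgd} to $\sigma^2$. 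Matching coefficients gives $A=3L,\ B=0,\ F=3L^2,\ D_1=\sigma^2$ for~\eqref{eq:second_moment_bound} and $A'=2L,\ B'=0,\ F'=2L^2,\ D_1'=\sigma^2/n$ for~\eqref{eq:second_moment_bound_2}. Since there is no auxiliary sequence, I set $\sigma_k^2\equiv 0$, $\rho=1$, $C=G=D_2=0$, so~\eqref{eq:sigma_k+1_bound} reads $0\le 0$ and holds trivially.

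The only substantive point is inequality~\eqref{eq:sum_V_k_bounds}. Here I would invoke~\eqref{eq:homocase_V_k_ubv} (Lemma~1 of \cite{khaled2020tighter}): $\EE[V_k]\le(\tau-1)\gamma^2\sigma^2$ for all $k$. Multiplying by $w_k$, summing over $k=0,\dots,K$, and using $\sum_k w_k=W_K$ yields $2L\sum_k w_k\EE[V_k]\le 2L(\tau-1)\gamma^2\sigma^2 W_K$, so~\eqref{eq:sum_V_k_bounds} holds with $H=0$ and $D_3=(\tau-1)\sigma^2$ (the nonnegative term $\tfrac12\sum_k w_k\EE[f(x^k)-f(x^*)]$ on the right is simply dropped). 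This confirms Assumption~\ref{ass:key_assumption}. The stepsize restriction of Theorem~\ref{thm:main_result} reduces, once $B'=C=G=0$, to $\gamma\le\min\{\tfrac{1}{2A'},\tfrac{L}{F'}\}=\min\{\tfrac{1}{4L},\tfrac{1}{2L}\}=\tfrac{1}{4L}$. Then I would specialize the constants in Theorem~\ref{thm:main_result}: $\Phi^0=2\|x^0-x^*\|^2/\gamma$ (since $B'=H=0$), $\Psi^0=2(\sigma^2/n+2L(\tau-1)\gamma\sigma^2)$, and $\theta=1-\min\{\gamma\mu,\rho/4\}=1-\gamma\mu$ because $\gamma\mu\le\mu/(4L)\le 1/4$ using $\gamma\le 1/(4L)$ and $L\ge\mu$. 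Plugging into~\eqref{eq:main_result} gives the first displayed bound, and into~\eqref{eq:main_result_1}, \eqref{eq:main_result_2} gives the $\mu>0$ and $\mu=0$ bounds respectively.

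The argument is essentially bookkeeping; the one genuine ingredient is~\eqref{eq:homocase_V_k_ubv}, which replaces the coarser $(\tau-1)^2$-type control of the consensus error available in the heterogeneous analysis (Lemma~\ref{lem:V_k_lemma}) by the sharper $\cO((\tau-1)\gamma^2\sigma^2)$ bound valid only when the local gradients share a common fixed point. The main thing to be careful about is therefore tracking where homogeneity enters — once via $\zeta_*^2=0$ in the second-moment bounds, once via the improved drift bound — and checking that the stepsize constraint coming from the general theorem is genuinely subsumed by $\gamma\le 1/(4L)$, i.e.\ that no extra restriction of the kind appearing in Corollary~\ref{cor:const_loop} is needed here (it is not, because $F$, $A$, $B$ enter~\eqref{eq:sum_V_k_bounds} only through $D_3$ and $H$, which we obtained directly from~\eqref{eq:homocase_V_k_ubv} rather than from the constant-loop lemma).
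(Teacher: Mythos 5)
Your proposal is correct and follows essentially the same route as the paper: it reuses Lemma~\ref{lem:local_sgd_second_moment} with $\zeta_*^2=0$ (homogeneity) to read off $A,B,F,D_1,A',B',F',D_1'$, sets $\sigma_k^2\equiv 0$, $\rho=1$, $C=G=D_2=0$, bounds the weighted sum of $\EE[V_k]$ via~\eqref{eq:homocase_V_k_ubv} to get $H=0$, $D_3=(\tau-1)\sigma^2$, and then plugs these constants into Theorem~\ref{thm:main_result} (including the $W_K$-denominator form~\eqref{eq:main_result}), with the stepsize condition correctly collapsing to $\gamma\le\frac{1}{4L}$. No gaps.
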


The theorem above together with Lemma~\ref{lem:lemma2_stich} implies the following result.
\begin{corollary}\label{cor:local_sgd_str_cvx_homo}
	Let assumptions of Theorem~\ref{thm:local_sgd_homo} hold with $\mu > 0$. Then for
	\begin{equation*}
		\gamma = \min\left\{\frac{1}{4L}, \frac{\ln\left(\max\left\{2,\min\left\{\nicefrac{\|x^0 - x^*\|^2n\mu^2K^2}{\sigma^2},\nicefrac{\|x^0 - x^*\|^2\mu^3K^3}{2L(\tau-1)\sigma^2}\right\}\right\}\right)}{\mu K}\right\}
	\end{equation*}
	for all $K$ such that 
	\begin{eqnarray*}
		\text{either} && \frac{\ln\left(\max\left\{2,\min\left\{\nicefrac{\|x^0 - x^*\|^2n\mu^2K^2}{\sigma^2},\nicefrac{\|x^0 - x^*\|^2\mu^3K^3}{2L(\tau-1)\sigma^2}\right\}\right\}\right)}{ K} \le 1\\
		\text{or} && \frac{1}{4L} \le \frac{\ln\left(\max\left\{2,\min\left\{\nicefrac{\|x^0 - x^*\|^2n\mu^2K^2}{\sigma^2},\nicefrac{\|x^0 - x^*\|^2\mu^3K^3}{2L(\tau-1)\sigma^2}\right\}\right\}\right)}{\mu K}
	\end{eqnarray*}
	we have that
	\begin{equation}
		\EE\left[f(\overline{x}^K)-f(x^*)\right] = \widetilde\cO\left(L\|x^0 - x^*\|^2\exp\left(- \frac{\mu}{L} K\right) + \frac{\sigma^2}{n\mu K} + \frac{L(\tau-1)\sigma^2}{\mu^2K^2}\right).\label{eq:local_sgd_str_cvx_1_homo}
	\end{equation}
	That is, to achieve $\EE\left[f(\overline{x}^K)-f(x^*)\right] \le \varepsilon$ in this case {\tt Local-SGD} requires
	\begin{equation*}
		\widetilde\cO\left(\frac{L}{\mu}\ln\left(\frac{L\|x^0 - x^*\|^2}{\varepsilon}\right) + \frac{\sigma^2}{n\mu\varepsilon} + \sqrt{\frac{L(\tau-1)\sigma^2}{\mu^2\varepsilon}}\right)
	\end{equation*}
	iterations/oracle calls per node and $\tau$ times less communication rounds.
\end{corollary}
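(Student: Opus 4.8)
The plan is to derive Corollary~\ref{cor:local_sgd_str_cvx_homo} directly from Theorem~\ref{thm:local_sgd_homo} by feeding the recursion it produces into the standard stepsize-tuning lemma, Lemma~\ref{lem:lemma2_stich}. First I would recall that, by Theorem~\ref{thm:local_sgd_homo}, for every $\gamma \le \tfrac{1}{4L}$ and every $K \ge 0$ the iterates of {\tt Local-SGD} in the homogeneous, $\mu$-strongly convex regime satisfy \eqref{eq:local_sgd_str_cvx_homo}, i.e.
\[
\EE\left[f(\overline{x}^K) - f(x^*)\right] \le (1-\gamma\mu)^K\,\frac{2\|x^0-x^*\|^2}{\gamma} + 2\gamma\left(\frac{\sigma^2}{n} + 2L(\tau-1)\gamma\sigma^2\right).
\]
Expanding the last term, this is exactly of the form $(1-\gamma\mu)^K \tfrac{a}{\gamma} + c_1\gamma + c_2\gamma^2$ with $a = 2\|x^0-x^*\|^2$, $c_1 = \tfrac{2\sigma^2}{n}$, $c_2 = 4L(\tau-1)\sigma^2$, and stepsize ceiling $h = 4L$. (This $h$ is smaller than in the heterogeneous case because, under the homogeneous assumption, the bound \eqref{eq:homocase_V_k_ubv} on $\EE[V_k]$ removes the extra $\sqrt{6e}(\tau-1)L$ constraint, leaving only $\gamma\le\tfrac{1}{4L}$ from Theorem~\ref{thm:main_result}.)

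Next I would invoke Lemma~\ref{lem:lemma2_stich} with exactly these values of $a, c_1, c_2, h$. That lemma analyses the two regimes dictated by whether the ``ideal'' stepsize $\gamma_K = \tfrac{\ln(\max\{2,\min\{a\mu^2 K^2/c_1,\, a\mu^3 K^3/c_2\}\})}{\mu K}$ falls below $\tfrac{1}{h}$ or not — which is precisely the ``either/or'' dichotomy imposed on $K$ in the statement of the corollary — and shows that with $\gamma = \min\{\tfrac{1}{h},\gamma_K\}$ one gets
\[
\EE\left[f(\overline{x}^K)-f(x^*)\right] = \widetilde{\cO}\!\left(h a\exp\!\left(-\tfrac{\mu}{h}K\right) + \frac{c_1}{\mu K} + \frac{c_2}{\mu^2 K^2}\right).
\]
Substituting $h = 4L$, $a = 2\|x^0-x^*\|^2$, $c_1 = 2\sigma^2/n$, $c_2 = 4L(\tau-1)\sigma^2$ and absorbing numerical constants into $\widetilde{\cO}$ yields \eqref{eq:local_sgd_str_cvx_1_homo}; at this point I would also note that $\gamma\mu \le \gamma L \le \tfrac14 = \tfrac{\rho}{4}$ (since $\rho=1$ here), so that $\theta = 1-\gamma\mu$ in Theorem~\ref{thm:main_result}, which is why the leading term decays like $\exp(-c\mu K/L)$ rather than being governed by $\rho$.

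Finally, I would convert the rate into an iteration complexity by requiring each of the three terms on the right-hand side of \eqref{eq:local_sgd_str_cvx_1_homo} to be at most $\varepsilon/3$ and solving for $K$: the exponential term gives $K = \widetilde{\cO}(\tfrac{L}{\mu}\ln(\tfrac{L\|x^0-x^*\|^2}{\varepsilon}))$, the $\tfrac{\sigma^2}{n\mu K}$ term gives $K = \cO(\tfrac{\sigma^2}{n\mu\varepsilon})$, and the $\tfrac{L(\tau-1)\sigma^2}{\mu^2 K^2}$ term gives $K = \cO(\sqrt{\tfrac{L(\tau-1)\sigma^2}{\mu^2\varepsilon}})$; taking the maximum (equivalently, the sum inside $\widetilde{\cO}$) produces the claimed bound, and the communication count is $\tau$ times smaller because averaging is triggered only on iterations divisible by $\tau$. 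I do not anticipate a genuine obstacle — the entire argument is a specialization of results already established in the excerpt — and the only point demanding care is the bookkeeping: matching the constants $a,c_1,c_2,h$, checking the admissibility of $\gamma$, and verifying that the ``either/or'' feasibility condition on $K$ in the corollary coincides with the hypothesis of Lemma~\ref{lem:lemma2_stich} (namely, that it guarantees $\gamma_K$ is either usable or the $1/h$ branch already yields a sufficiently small exponential).
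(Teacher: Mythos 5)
Your proposal matches the paper's own derivation: the corollary is obtained exactly by plugging the bound of Theorem~\ref{thm:local_sgd_homo} into Lemma~\ref{lem:lemma2_stich} with $a = 2\|x^0-x^*\|^2$, $c_1 = \nicefrac{2\sigma^2}{n}$, $c_2 = 4L(\tau-1)\sigma^2$, $h = 4L$ and $\rho = 1$, which reproduces the stated stepsize, the either/or condition on $K$, the rate \eqref{eq:local_sgd_str_cvx_1_homo}, and the resulting iteration complexity. Your bookkeeping of the constants and the observation that $\min\{\gamma\mu,\nicefrac{\rho}{4}\}=\gamma\mu$ here are both correct, so there is nothing to add.
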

It means that if $\frac{\sigma^2}{n^2L\varepsilon} \ge 1$, $\tau \le 1 + \frac{\sigma^2}{n^2L\varepsilon}$ and $\varepsilon$ and $\sigma^2$ are such that the first term in the complexity bound is dominated by other terms, then the second term corresponding to the complexity of {\tt PMSGD} dominates the third term. Informally speaking, if the variance is large or $\varepsilon$ is small then {\tt Local-SGD} with $\tau > 1$ has the same complexity bounds as {\tt PMSGD}.

Combining Theorem~\ref{thm:local_sgd_homo} and Lemma~\ref{lem:lemma_technical_cvx} we derive the following result for the convergence of {\tt Local-SGD} in the case when $\mu = 0$.
\begin{corollary}
	\label{cor:local_sgd_cvx_homo}
	Let assumptions of Theorem~\ref{thm:local_sgd_homo} hold with $\mu = 0$. Then for
	\begin{equation*}
		\gamma = \min\left\{\frac{1}{4L}, \sqrt{\frac{nR_0^2}{\sigma^2 K}}, \sqrt[3]{\frac{R_0^2}{2L(\tau-1)\sigma^2 K}}\right\},
	\end{equation*}
	where $R_0 = \|x^0 - x^*\|$, we have that
	\begin{equation}
		\EE\left[f(\overline{x}^K)-f(x^*)\right] = \cO\left(\frac{LR_0^2}{K} + \sqrt{\frac{R_0^2\sigma^2}{nK}} + \frac{\sqrt[3]{LR_0^4(\tau-1)\sigma^2}}{K^{\nicefrac{2}{3}}} \right).\label{eq:local_sgd_cvx_1_homo}
	\end{equation}
	That is, to achieve $\EE\left[f(\overline{x}^K)-f(x^*)\right] \le \varepsilon$ in this case {\tt Local-SGD} requires
	\begin{equation*}
		\cO\left(\frac{LR_0^2}{\varepsilon} + \frac{R_0^2\sigma^2}{n\varepsilon^2} + \frac{R_0^2\sqrt{L(\tau-1)\sigma^2}}{\varepsilon^{\nicefrac{3}{2}}}\right)
	\end{equation*}
	iterations/oracle calls per node and $\tau$ times less communication rounds.
\end{corollary}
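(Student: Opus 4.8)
The plan is to derive Corollary~\ref{cor:local_sgd_cvx_homo} as a mechanical specialization of Theorem~\ref{thm:local_sgd_homo}, feeding its $\mu=0$ convergence bound into the generic stepsize-balancing argument of Lemma~\ref{lem:lemma_technical_cvx} (equivalently, the $\mu=0$ branch of Corollary~\ref{cor:app_complexity_cor_cvx}). So there is essentially no new analysis to do; the work is in identifying the abstract constants correctly.

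First I would recall from Theorem~\ref{thm:local_sgd_homo} that, under the stated assumptions and for any $\gamma \le \tfrac{1}{4L}$, writing $R_0 = \|x^0 - x^*\|$,
\begin{equation*}
	\EE\left[f(\overline{x}^K) - f(x^*)\right] \le \frac{2R_0^2}{\gamma K} + 2\gamma\left(\frac{\sigma^2}{n} + 2L(\tau-1)\gamma \sigma^2\right),
\end{equation*}
which is exactly of the form $\frac{a}{\gamma K} + \gamma c_1 + \gamma^2 c_2$ with $a = 2R_0^2$, $c_1 = \frac{2\sigma^2}{n}$, $c_2 = 4L(\tau-1)\sigma^2$, and stepsize cap $h = 4L$. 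I would also record that here $\EE\sigma_0^2 = 0$ and $H = 0$ (both from Theorem~\ref{thm:local_sgd_homo}), so the auxiliary terms $b_1, b_2$ appearing in the general Corollary~\ref{cor:app_complexity_cor_cvx} vanish, and that $D_3 = (\tau-1)\sigma^2$ does not depend on $\gamma$, so no $D_{3,2}/\gamma$ correction enters.

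Next I would apply Lemma~\ref{lem:lemma_technical_cvx} with these constants: its prescribed choice is
\begin{equation*}
	\gamma = \min\left\{\frac{1}{h}, \sqrt{\frac{a}{c_1 K}}, \sqrt[3]{\frac{a}{c_2 K}}\right\} = \min\left\{\frac{1}{4L}, \sqrt{\frac{nR_0^2}{\sigma^2 K}}, \sqrt[3]{\frac{R_0^2}{2L(\tau-1)\sigma^2 K}}\right\},
\end{equation*}
which coincides term-by-term with the stepsize in the statement, and it yields
\begin{equation*}
	\EE\left[f(\overline{x}^K) - f(x^*)\right] = \cO\left(\frac{ha}{K} + \sqrt{\frac{a c_1}{K}} + \frac{\sqrt[3]{a^2 c_2}}{K^{2/3}}\right).
\end{equation*}
Substituting gives $\frac{ha}{K} = \cO(LR_0^2/K)$, $\sqrt{\frac{ac_1}{K}} = \cO(\sqrt{R_0^2\sigma^2/(nK)})$, and $\frac{\sqrt[3]{a^2 c_2}}{K^{2/3}} = \cO(\sqrt[3]{LR_0^4(\tau-1)\sigma^2}/K^{2/3})$, which is precisely~\eqref{eq:local_sgd_cvx_1_homo}. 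Setting each of the three terms $\le \varepsilon$ and taking the maximum of the resulting lower bounds on $K$ then produces the iteration count $\cO\!\big(\tfrac{LR_0^2}{\varepsilon} + \tfrac{R_0^2\sigma^2}{n\varepsilon^2} + \tfrac{R_0^2\sqrt{L(\tau-1)\sigma^2}}{\varepsilon^{3/2}}\big)$, and dividing by $\tau$ (Local-SGD communicates once per $\tau$ steps) gives the communication-round bound.

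\textbf{Main obstacle.} There is no real analytic hurdle here — the statement is a direct corollary of two already-established results. The only points demanding care are bookkeeping ones: reading off $a, c_1, c_2, h$ correctly from the $\mu=0$ bound of Theorem~\ref{thm:local_sgd_homo}, verifying that the $\min$ defining $\gamma$ in the statement matches $\min\{1/h, \sqrt{a/(c_1K)}, \sqrt[3]{a/(c_2K)}\}$ entry by entry, and confirming that $\EE\sigma_0^2 = 0$ together with $H = 0$ indeed suppresses the $b_1, b_2$ terms of the general corollary so that only the three advertised terms remain.
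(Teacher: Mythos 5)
Your proposal is correct and follows exactly the paper's route: the corollary is obtained by plugging the $\mu=0$ bound of Theorem~\ref{thm:local_sgd_homo} (with $a=2R_0^2$, $c_1=\nicefrac{2\sigma^2}{n}$, $c_2=4L(\tau-1)\sigma^2$, $h=4L$, and no $b_1,b_2$ terms since $\sigma_k^2\equiv 0$ and $H=0$) into Lemma~\ref{lem:lemma_technical_cvx}, and the stepsize and final rates match term by term. Nothing further is needed.
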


\subsubsection*{$\zeta$-Heterogeneous Data}
In this setup we also use an external result to bound $\EE[V_k]$.
\begin{lemma}[Lemma~8 from \cite{woodworth2020minibatch}]
	If $f_1,f_2,\ldots,f_n$ are $\zeta$-heterogeneous then for {\tt Local-SGD} we have
	\begin{equation}
		\EE\left[V_k\right] \le 3\tau\gamma^2\sigma^2 + 6\tau^2\gamma^2\zeta^2 \label{eq:zeta_heterocase_V_k_ubv}
	\end{equation}
	for all $k \ge 0$.
\end{lemma}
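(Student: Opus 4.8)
The plan is to bound $V_k=\frac1n\sum_{i=1}^n\|x_i^k-x^k\|^2$ one communication window at a time. If $k$ is a synchronization step ($k\equiv 0\bmod\tau$) then $x_i^k=x^k$ and $V_k=0$, so we may assume $\tau t\le k<\tau(t+1)$ with $r:=k-\tau t\in\{1,\dots,\tau-1\}$. Since $x_i^{\tau t}=x^{\tau t}$, unrolling the local recursion \eqref{eq:local_sgd_def} (equivalently \eqref{eq:x^k_recurrsion}) gives $x_i^k-x^k=-\gamma\sum_{l=\tau t}^{k-1}\bigl(g_i^l-g^l\bigr)$ with $g^l=\frac1n\sum_j g_j^l$, so that $V_k=\frac{\gamma^2}{n}\sum_i\EE\bigl\|\sum_{l=\tau t}^{k-1}(g_i^l-g^l)\bigr\|^2$. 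The full gradient $\nabla f(x^l)$ will cancel in $g_i^l-g^l$, which is exactly why this route (rather than anchoring at $x^{\tau t}$) produces an iterate-free bound.

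Next I would split $g_i^l=\nabla f_i(x_i^l)+\delta_i^l$ into its conditional mean and the centered noise $\delta_i^l$, which by \eqref{eq:bounded_variance} satisfies $\frac1n\sum_i\EE\|\delta_i^l\|^2\le\sigma^2$, and apply a deterministic weighted Young inequality $\|a+b\|^2\le(1+\lambda)\|a\|^2+(1+\lambda^{-1})\|b\|^2$ to separate the drift $d_i^l:=\nabla f_i(x_i^l)-\frac1n\sum_j\nabla f_j(x_j^l)$ from the noise part $\delta_i^l-\bar\delta^l$. The noise part telescopes: since the increments $\delta_i^l-\bar\delta^l$ are martingale differences, $\frac1n\sum_i\EE\bigl\|\sum_l(\delta_i^l-\bar\delta^l)\bigr\|^2=\sum_l\frac1n\sum_i\EE\|\delta_i^l-\bar\delta^l\|^2\le(\tau-1)\sigma^2$. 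For the drift part I would insert $\pm\nabla f_i(x^l)$ and $\pm\nabla f(x^l)$, writing $d_i^l=\bigl(\nabla f_i(x_i^l)-\nabla f_i(x^l)\bigr)+\bigl(\nabla f_i(x^l)-\nabla f(x^l)\bigr)+\frac1n\sum_j\bigl(\nabla f_j(x^l)-\nabla f_j(x_j^l)\bigr)$; then $L$-smoothness \eqref{eq:L_smoothness} bounds the first and third terms by $L\|x_i^l-x^l\|$ and $\frac Ln\sum_j\|x_j^l-x^l\|$, while $\zeta$-heterogeneity \eqref{eq:bounded_data_dissimilarity} bounds $\frac1n\sum_i\|\nabla f_i(x^l)-\nabla f(x^l)\|^2\le\zeta^2$. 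After Cauchy–Schwarz over the at most $\tau-1$ summands this gives $\frac1n\sum_i\EE\bigl\|\sum_l d_i^l\bigr\|^2\le(\tau-1)\sum_l\bigl(6L^2\EE V_l+3\zeta^2\bigr)$.

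Putting the two pieces together yields, for every $k$ in the window, an inequality of the shape $\EE V_k\le C_1 L^2\gamma^2(\tau-1)\sum_{l=\tau t}^{k-1}\EE V_l+C_2\gamma^2(\tau-1)^2\zeta^2+C_3\gamma^2(\tau-1)\sigma^2$. Since $V_{\tau t}=0$ and there are at most $\tau-1$ indices in the window, I would close this by an induction in $l$ over the window: assuming the target bound $B:=3\tau\gamma^2\sigma^2+6\tau^2\gamma^2\zeta^2$ holds for all earlier indices, we get $\sum_l\EE V_l\le(\tau-1)B$, and the step-size restriction under which the lemma is invoked — of the form $\gamma\lesssim 1/(L\tau)$, which forces $C_1L^2\gamma^2(\tau-1)^2\le\frac12$ — lets the recursion reproduce $B$. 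A careful choice of the weight $\lambda$ in the Young splits (rather than plain factors $2$ or $3$) is what recovers exactly the stated constants $3$ and $6$.

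The main obstacle I anticipate is the cross-correlation between the drift $\nabla f_i(x_i^l)$ and the earlier noise $\delta_i^m$ for $m<l$: these are not independent, so the cross terms in $\bigl\|\sum_l(g_i^l-g^l)\bigr\|^2$ cannot simply be dropped, which is precisely why a recursive term $\sum_l\EE V_l$ appears on the right-hand side and, with it, the $\gamma=O(1/(L\tau))$ condition and the in-window induction. An alternative is to bypass the correlation via a Stich-type estimate of the kind used in the proof of Lemma~\ref{lem:V_k_lemma}, but that pays an extra factor $e$ and hence gives looser constants than $3$ and $6$.
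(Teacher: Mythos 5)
There is a genuine gap, and it is quantitative rather than structural: your in-window induction only closes under a stepsize restriction of the form $\gamma\lesssim 1/(L\tau)$, and you assert this is ``the step-size restriction under which the lemma is invoked.'' It is not. The paper never proves this lemma (it is imported verbatim from the cited reference), and it invokes it in Theorem~\ref{thm:local_sgd_zeta_hetero} under the $\tau$-independent condition $\gamma\le\frac{1}{4L}$; that is precisely what yields the leading term $\frac{L}{\mu}$ (rather than $\frac{\tau L}{\mu}$) in Corollary~\ref{cor:local_sgd_str_cvx_zeta_hetero} and Table~\ref{tbl:special_cases}, i.e.\ the entire advantage of the $\zeta$-heterogeneous analysis over the fully heterogeneous one. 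Under your restriction the bound \eqref{eq:zeta_heterocase_V_k_ubv} would be established only for $\gamma=O(1/(L\tau))$, and the downstream complexity would degrade to $\frac{\tau L}{\mu}$, defeating the purpose of the lemma. The blow-up is intrinsic to your treatment of the drift: bounding $\nabla f_i(x_i^l)-\nabla f_i(x^l)$ by $L\|x_i^l-x^l\|$ alone lets the per-step map have Lipschitz constant $1+\gamma L$, so the within-round error compounds like $(1+\gamma L)^{\tau}$ unless $\gamma L\tau=O(1)$. (Some condition is indeed unavoidable---for two heterogeneous quadratics and $\gamma L\gg 1$ the deviation grows geometrically within a round, so the statement cannot hold unconditionally---but the right condition is $\gamma=O(1/L)$, not $O(1/(L\tau))$.)

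The missing idea is convexity, which is available in the context where the lemma is used (the $f_i$ are assumed $\mu$-strongly convex there). For convex $L$-smooth $f_i$, co-coercivity gives $\bigl\|\bigl(x-\gamma\nabla f_i(x)\bigr)-\bigl(y-\gamma\nabla f_i(y)\bigr)\bigr\|\le\|x-y\|$ whenever $\gamma\le 2/L$, so the ``same-function'' part of the drift between two clients' iterates does not expand at all; only the heterogeneity mismatch (controlled by \eqref{eq:bounded_data_dissimilarity}) and the fresh noise are injected each local step. Running a $\bigl(1+\frac{1}{\tau-1}\bigr)$-weighted recursion over the at most $\tau-1$ steps of a round (pairwise differences or deviation from the average) then accumulates only a factor $\bigl(1+\frac{1}{\tau-1}\bigr)^{\tau-1}\le e$, which is exactly where the constants come from: $e\le 3$ for the $\tau\gamma^2\sigma^2$ term and $2e\le 6$ for the $\tau^2\gamma^2\zeta^2$ term. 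So, contrary to your closing remark, the Stich-type $e$-factor is not what you should avoid to hit the constants $3$ and $6$ --- it is their source; what you must avoid is the smoothness-only treatment of the drift, which is what forces $\gamma\lesssim 1/(L\tau)$. (Compare also Lemma~\ref{lem:V_k_lemma_homo} in this paper, which in the general framework handles the $\zeta$-heterogeneous case by a different convexity-based route, using $\langle\nabla f_i(x)-\nabla f_i(y),x-y\rangle\ge\mu\|x-y\|^2$ to obtain a contraction instead of an absolute bound.)
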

Using this we derive the following inequality for the weighted sum of $V_k$:
\begin{equation*}
	2L\sum\limits_{k=0}^Kw_k\EE[V_k] \le 6\tau L\gamma^2\left(\sigma^2 + 2\tau\zeta^2\right)\sum\limits_{k=0}^Kw_k = 6\tau L\gamma^2\left(\sigma^2 + 2\tau\zeta^2\right) W_K.
\end{equation*}
Together with Lemmas~\ref{lem:local_sgd_interesting_labels}~and~\ref{lem:local_sgd_second_moment} and Theorem~\ref{thm:main_result} it gives the following result.
\begin{theorem}\label{thm:local_sgd_zeta_hetero}
	Assume that $f_1,\ldots,f_n$ are $\zeta$-heterogeneous, $\mu$-strongly convex and $L$-smooth functions. Then {\tt Local-SGD} satisfies Assumption~\ref{ass:key_assumption} with
	\begin{gather*}
		A = 3L,\quad B = 0,\quad F = 3L^2, \quad D_1 = \sigma^2 + 3\zeta_*^2,\quad A' = 2L,\quad B' = 0,\quad F' = 2L^2, \quad D_1' = \frac{\sigma^2}{n},\\
		\sigma_k^2 \equiv 0,\quad \rho = 1,\quad C = 0,\quad G = 0,\quad D_2 = 0,\quad H = 0,\quad D_3 = 3\tau\left(\sigma^2 + 2\tau\zeta^2\right)
	\end{gather*}
	with $\gamma$ satisfying
	\begin{eqnarray*}
		\gamma &\le& \frac{1}{4L}.
	\end{eqnarray*}
	and for all $K \ge 0$
	\begin{eqnarray}
		\EE\left[f(\overline{x}^K) - f(x^*)\right] &\le& \frac{2\|x^0-x^*\|^2}{\gamma W_K} + 2\gamma\left(\nicefrac{\sigma^2}{n} + 6L\tau\gamma \left(\sigma^2 + 2\tau\zeta^2\right)\right). \notag
	\end{eqnarray}
	In particular, if $\mu > 0$ then
	\begin{eqnarray}
		\EE\left[f(\overline{x}^K) - f(x^*)\right] &\le& \left(1 - \gamma\mu\right)^K\frac{2\|x^0-x^*\|^2}{\gamma} + 2\gamma\left(\nicefrac{\sigma^2}{n} + 6L\tau\gamma \left(\sigma^2 + 2\tau\zeta^2\right)\right) \label{eq:local_sgd_str_cvx_zeta_hetero}
	\end{eqnarray}
	and when $\mu = 0$ we have
	\begin{eqnarray}
		\EE\left[f(\overline{x}^K) - f(x^*)\right] &\le& \frac{2\|x^0-x^*\|^2}{\gamma K} + 2\gamma\left(\nicefrac{\sigma^2}{n} + 6L\tau\gamma \left(\sigma^2 + 2\tau\zeta^2\right)\right). \label{eq:local_sgd_cvx_zeta_hetero}
	\end{eqnarray}
\end{theorem}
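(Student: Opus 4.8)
The plan is to verify that {\tt Local-SGD} with fixed loop length $\tau$ on $\zeta$-heterogeneous data satisfies As.~\ref{ass:key_assumption} with the listed parameters, and then simply invoke Thm.~\ref{thm:main_result}. The structure mirrors the heterogeneous and homogeneous cases already treated: the parameters of \eqref{eq:second_moment_bound}, \eqref{eq:second_moment_bound_2}, \eqref{eq:sigma_k+1_bound} come directly from Lemma~\ref{lem:local_sgd_second_moment}, while the parameters $H$ and $D_3$ governing \eqref{eq:sum_V_k_bounds} come from the external bound on $\EE[V_k]$ in the $\zeta$-heterogeneous regime (Lemma~8 of \cite{woodworth2020minibatch}, quoted above).

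First I would record that for {\tt Local-SGD} we have $\bar g_i^k = \EE[g_i^k\mid x^k] = \nabla f_i(x_i^k)$ and $\sigma_k^2\equiv 0$, so that $B=B'=C=G=D_2=0$ and $\rho=1$ trivially. Inequality \eqref{eq:second_moment_local_sgd} of Lemma~\ref{lem:local_sgd_second_moment} gives \eqref{eq:second_moment_bound} with $A=3L$, $F=3L^2$, $D_1=\sigma^2+3\zeta_*^2$ (noting $\frac1n\sum_i\|\nabla f_i(x^*)\|^2=\zeta_*^2$), and \eqref{eq:second_moment_local_sgd_2} gives \eqref{eq:second_moment_bound_2} with $A'=2L$, $F'=2L^2$, $D_1'=\sigma^2/n$; inequality \eqref{eq:sigma_k+1_bound} holds vacuously. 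For \eqref{eq:sum_V_k_bounds}, I would take \eqref{eq:zeta_heterocase_V_k_ubv}, namely $\EE[V_k]\le 3\tau\gamma^2\sigma^2+6\tau^2\gamma^2\zeta^2 = 3\tau\gamma^2(\sigma^2+2\tau\zeta^2)$, multiply by $2Lw_k$ and sum over $k=0,\dots,K$ to obtain $2L\sum_k w_k\EE[V_k]\le 6L\tau\gamma^2(\sigma^2+2\tau\zeta^2)W_K$, which is exactly \eqref{eq:sum_V_k_bounds} with $H=0$ and $2LD_3 = 6L\tau(\sigma^2+2\tau\zeta^2)$, i.e.\ $D_3 = 3\tau(\sigma^2+2\tau\zeta^2)$, and with the (absent) $\frac12\sum w_k\EE[f(x^k)-f(x^*)]$ term only helping. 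Since here $D_3$ does not depend on $\gamma$, no extra stepsize restriction beyond $\gamma\le\tfrac{1}{4L}$ is needed (the $\tau\mu$-type bounds from Tbl.~\ref{tbl:data_loop} are subsumed because we used the direct $V_k$ bound rather than the generic Lemma~\ref{lem:V_k_lemma}); in particular the hypotheses $\gamma\le\min\{\tfrac{1}{2(A'+\tfrac{4CB'}{3\rho})},\tfrac{L}{F'+\tfrac{4GB'}{3\rho}}\} = \min\{\tfrac{1}{4L},\tfrac{L}{2L^2}\}$ reduce to $\gamma\le\tfrac{1}{4L}$.

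With all parameters of As.~\ref{ass:key_assumption} in hand, Thm.~\ref{thm:main_result} applies directly. Substituting $B=B'=C=G=D_2=0$, $\rho=1$, $H=0$ into $\Phi^0$ and $\Psi^0$ gives $\Phi^0 = 2\|x^0-x^*\|^2/\gamma$ and $\Psi^0 = 2(D_1'+2L\gamma D_3) = 2(\sigma^2/n + 6L\tau\gamma(\sigma^2+2\tau\zeta^2))$, and $\theta = 1-\min\{\gamma\mu,\tfrac14\} = 1-\gamma\mu$ once $\gamma\le\tfrac{1}{4L}\le\tfrac{1}{4\mu}$. This yields \eqref{eq:local_sgd_str_cvx_zeta_hetero} from \eqref{eq:main_result_1} when $\mu>0$ and \eqref{eq:local_sgd_cvx_zeta_hetero} from \eqref{eq:main_result_2} when $\mu=0$; the unconditional display in the theorem statement follows from \eqref{eq:main_result} before splitting into cases. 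I do not anticipate a genuine obstacle here — the only point requiring a little care is checking that the $V_k$-bound route genuinely avoids the auxiliary stepsize constraints of Tbl.~\ref{tbl:data_loop}, i.e.\ that the proof of Thm.~\ref{thm:main_result} only ever uses \eqref{eq:sum_V_k_bounds} itself (which it does, via Lemma~\ref{lem:main_lemma} and the summation argument), so that feeding it the crude but clean bound $\EE[V_k]\le 3\tau\gamma^2(\sigma^2+2\tau\zeta^2)$ is legitimate.
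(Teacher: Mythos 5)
Your proposal matches the paper's own proof: the moment-bound parameters are read off from Lemma~\ref{lem:local_sgd_second_moment} exactly as you do, and the paper likewise plugs the external bound $\EE[V_k]\le 3\tau\gamma^2(\sigma^2+2\tau\zeta^2)$ into the weighted sum to get $H=0$, $D_3=3\tau(\sigma^2+2\tau\zeta^2)$, then invokes Theorem~\ref{thm:main_result} with only the stepsize condition $\gamma\le\min\{\tfrac{1}{2A'},\tfrac{L}{F'}\}=\tfrac{1}{4L}$. Your observation that the direct $V_k$ bound bypasses the extra constraints of Table~\ref{tbl:data_loop} is also exactly how the paper proceeds, so the argument is correct and essentially identical.
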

The theorem above together with Lemma~\ref{lem:lemma2_stich} implies the following result.
\begin{corollary}\label{cor:local_sgd_str_cvx_zeta_hetero}
	Let assumptions of Theorem~\ref{thm:local_sgd_zeta_hetero} hold with $\mu > 0$. Then for
	\begin{equation*}
		\gamma = \min\left\{\frac{1}{4L}, \frac{\ln\left(\max\left\{2,\min\left\{\nicefrac{\|x^0 - x^*\|^2n\mu^2K^2}{\sigma^2},\nicefrac{\|x^0 - x^*\|^2\mu^3K^3}{6L\tau(\sigma^2+2\tau\zeta^2)}\right\}\right\}\right)}{\mu K}\right\}
	\end{equation*}
	for all $K$ such that 
	\begin{eqnarray*}
		\text{either} && \frac{\ln\left(\max\left\{2,\min\left\{\nicefrac{\|x^0 - x^*\|^2n\mu^2K^2}{\sigma^2},\nicefrac{\|x^0 - x^*\|^2\mu^3K^3}{6L\tau(\sigma^2+2\tau\zeta^2)}\right\}\right\}\right)}{K} \le 1\\
		\text{or} && \frac{1}{4L} \le \frac{\ln\left(\max\left\{2,\min\left\{\nicefrac{\|x^0 - x^*\|^2n\mu^2K^2}{\sigma^2},\nicefrac{\|x^0 - x^*\|^2\mu^3K^3}{6L\tau(\sigma^2+2\tau\zeta^2)}\right\}\right\}\right)}{\mu K}
	\end{eqnarray*}
	we have that
	\begin{equation}
		\EE\left[f(\overline{x}^K)-f(x^*)\right] = \widetilde\cO\left(L\|x^0 - x^*\|^2\exp\left(- \frac{\mu}{L} K\right) + \frac{\sigma^2}{n\mu K} + \frac{L\tau(\sigma^2+\tau\zeta^2)}{\mu^2K^2}\right).\label{eq:local_sgd_str_cvx_1_zeta_hetero}
	\end{equation}
	That is, to achieve $\EE\left[f(\overline{x}^K)-f(x^*)\right] \le \varepsilon$ in this case {\tt Local-SGD} requires
	\begin{equation*}
		\widetilde\cO\left(\frac{L}{\mu}\ln\left(\frac{L\|x^0 - x^*\|^2}{\varepsilon}\right) + \frac{\sigma^2}{n\mu\varepsilon} + \sqrt{\frac{L\tau (\sigma^2+\tau\zeta^2)}{\mu^2\varepsilon}}\right)
	\end{equation*}
	iterations/oracle calls per node and $\tau$ times less communication rounds.
\end{corollary}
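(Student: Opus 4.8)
The statement is a direct corollary of Theorem~\ref{thm:local_sgd_zeta_hetero} together with the stepsize-tuning Lemma~\ref{lem:lemma2_stich}, so the plan is essentially to rewrite the convergence bound of the theorem in the normal form consumed by that lemma and then read off the conclusion. Concretely, expanding the right-hand side of \eqref{eq:local_sgd_str_cvx_zeta_hetero} gives
\[
\EE\left[f(\overline{x}^K) - f(x^*)\right] \le (1-\gamma\mu)^K\frac{a}{\gamma} + c_1\gamma + c_2\gamma^2, \qquad \gamma \le \frac{1}{4L},
\]
with $a = 2\|x^0-x^*\|^2$, $c_1 = \tfrac{2\sigma^2}{n}$ and $c_2 = 12L\tau(\sigma^2+2\tau\zeta^2)$, where I have absorbed the cross term $2\gamma\cdot 6L\tau\gamma(\sigma^2+2\tau\zeta^2)$ into the $\gamma^2$-coefficient. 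This is precisely the situation handled by Lemma~\ref{lem:lemma2_stich} with the role of the stepsize upper bound ``$d$'' played by $4L$.

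I would then apply Lemma~\ref{lem:lemma2_stich} verbatim. Its hypotheses require that either $\tfrac{1}{K}\log\bigl(\max\{2,\min\{a\mu^2K^2/c_1,\,a\mu^3K^3/c_2\}\}\bigr)\le 1$ or $\tfrac{1}{4L}\le\tfrac{1}{\mu K}\log(\cdots)$; substituting the values of $a,c_1,c_2$ these turn into the two ``either/or'' conditions displayed in the statement (the numerical constant in $c_2$ is swallowed by $\widetilde\cO$ since $\sigma^2+2\tau\zeta^2=\Theta(\sigma^2+\tau\zeta^2)$). For the stated choice $\gamma=\min\{\tfrac{1}{4L},\gamma_K\}$, the lemma yields $\EE[f(\overline{x}^K)-f(x^*)] = \widetilde\cO\bigl(4La\,e^{-\mu K/(4L)} + \tfrac{c_1}{\mu K} + \tfrac{c_2}{\mu^2 K^2}\bigr)$, and plugging back $4La=\Theta(L\|x^0-x^*\|^2)$, $c_1/(\mu K)=\Theta(\sigma^2/(n\mu K))$ and $c_2/(\mu^2K^2)=\Theta(L\tau(\sigma^2+\tau\zeta^2)/(\mu^2K^2))$ reproduces \eqref{eq:local_sgd_str_cvx_1_zeta_hetero}.

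To obtain the iteration complexity I would then require each of the three terms of \eqref{eq:local_sgd_str_cvx_1_zeta_hetero} to be at most $\varepsilon$: the geometric term forces $K = \widetilde\cO\bigl(\tfrac{L}{\mu}\log\tfrac{L\|x^0-x^*\|^2}{\varepsilon}\bigr)$, the $\tfrac{1}{\mu K}$ term forces $K=\cO\bigl(\tfrac{\sigma^2}{n\mu\varepsilon}\bigr)$, and the $\tfrac{1}{\mu^2K^2}$ term forces $K=\cO\bigl(\sqrt{L\tau(\sigma^2+\tau\zeta^2)/(\mu^2\varepsilon)}\bigr)$; their sum is the claimed bound, and since {\tt Local-SGD} with a fixed loop of length $\tau$ communicates once every $\tau$ iterations, the communication-round count is $\tau$ times smaller. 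There is no genuine obstacle here: the work is bookkeeping, and the only point deserving care is verifying that the hypotheses of Lemma~\ref{lem:lemma2_stich} translate exactly to the two conditions on $K$ in the statement and that all numerical constants are absorbed into $\widetilde\cO$. Should one prefer to avoid citing Lemma~\ref{lem:lemma2_stich}, the same result follows by the usual two-case argument on $\gamma$ (for large $K$ take $\gamma=\gamma_K$, so $(1-\gamma\mu)^K\le e^{-\gamma\mu K}$ kills the logarithm and balances the residual terms; for small $K$ take $\gamma=\tfrac{1}{4L}$, so the geometric term dominates).
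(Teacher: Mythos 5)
Your proposal is correct and matches the paper's own derivation: the paper obtains this corollary exactly by plugging the bound of Theorem~\ref{thm:local_sgd_zeta_hetero} (with $a=2\|x^0-x^*\|^2$, $c_1=\nicefrac{2\sigma^2}{n}$, $c_2=12L\tau(\sigma^2+2\tau\zeta^2)$, $h=4L$, $\rho=1$) into the stepsize-tuning Lemma~\ref{lem:lemma2_stich} and then solving each term for $K$. The only cosmetic difference is notation (the lemma's bound parameter is $h$, not $d$), which does not affect the argument.
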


Combining Theorem~\ref{thm:local_sgd_zeta_hetero} and Lemma~\ref{lem:lemma_technical_cvx} we derive the following result for the convergence of {\tt Local-SGD} in the case when $\mu = 0$.
\begin{corollary}
	\label{cor:local_sgd_cvx_zeta_hetero}
	Let assumptions of Theorem~\ref{thm:local_sgd_zeta_hetero} hold with $\mu = 0$. Then for
	\begin{equation*}
		\gamma = \min\left\{\frac{1}{4L}, \sqrt{\frac{nR_0^2}{\sigma^2 K}}, \sqrt[3]{\frac{R_0^2}{6L\tau(\sigma^2+2\tau\zeta^2) K}}\right\},
	\end{equation*}
	where $R_0 = \|x^0 - x^*\|$, we have that
	\begin{equation}
		\EE\left[f(\overline{x}^K)-f(x^*)\right] = \cO\left(\frac{LR_0^2}{K} + \sqrt{\frac{R_0^2\sigma^2}{nK}} + \frac{\sqrt[3]{LR_0^4\tau(\sigma^2+\tau\zeta^2)}}{K^{\nicefrac{2}{3}}} \right).\label{eq:local_sgd_cvx_1_zeta_hetero}
	\end{equation}
	That is, to achieve $\EE\left[f(\overline{x}^K)-f(x^*)\right] \le \varepsilon$ in this case {\tt Local-SGD} requires
	\begin{equation*}
		\cO\left(\frac{LR_0^2}{\varepsilon} + \frac{R_0^2\sigma^2}{n\varepsilon^2} + \frac{R_0^2\sqrt{L\tau(\sigma^2+\tau\zeta^2)}}{\varepsilon^{\nicefrac{3}{2}}}\right)
	\end{equation*}
	iterations/oracle calls per node and $\tau$ times less communication rounds.
\end{corollary}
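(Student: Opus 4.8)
The plan is to combine Theorem~\ref{thm:local_sgd_zeta_hetero} with the stepsize-tuning machinery already developed for the weakly convex regime (Corollary~\ref{cor:app_complexity_cor_cvx}, first part with $\mu=0$, whose engine is Lemma~\ref{lem:lemma_technical_cvx}). Theorem~\ref{thm:local_sgd_zeta_hetero} supplies the base estimate \eqref{eq:local_sgd_cvx_zeta_hetero}, which after expanding the bracket reads
\[
\EE\left[f(\overline{x}^K) - f(x^*)\right] \;\le\; \frac{2R_0^2}{\gamma K} + \frac{2\sigma^2}{n}\gamma + 12L\tau\left(\sigma^2 + 2\tau\zeta^2\right)\gamma^2, \qquad 0 < \gamma \le \frac{1}{4L},
\]
with $R_0 = \|x^0 - x^*\|$. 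This is precisely a bound of the form $\tfrac{a}{\gamma K} + c_1\gamma + c_2\gamma^2$; there is no $b_1\gamma$ or $b_2\gamma^2$ contribution because the parameters of Theorem~\ref{thm:local_sgd_zeta_hetero} have $H=0$ and $B'=0$, which annihilate $b_1 = 4LH\EE\sigma_0^2$ and $b_2 = \tfrac{8B'}{3\rho}\EE\sigma_0^2$. Thus $a = 2R_0^2$, $c_1 = 2D_1' = \tfrac{2\sigma^2}{n}$, $c_2 = 4LD_3 = 12L\tau(\sigma^2+2\tau\zeta^2)$, and the only stepsize ceiling is $\tfrac1h = \tfrac{1}{4L}$.

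First I would substitute these values into the generic rule $\gamma = \min\{\tfrac1h,\ \sqrt{a/(c_1K)},\ \sqrt[3]{a/(c_2K)}\}$; since $\sqrt{a/(c_1K)} = \sqrt{nR_0^2/(\sigma^2K)}$ and $\sqrt[3]{a/(c_2K)} = \sqrt[3]{R_0^2/(6L\tau(\sigma^2+2\tau\zeta^2)K)}$, this reproduces exactly the stepsize in the statement. Next I would read off the rate: the three surviving terms are $\tfrac{ha}{K} \asymp \tfrac{LR_0^2}{K}$, $\sqrt{\tfrac{ac_1}{K}} \asymp \sqrt{\tfrac{R_0^2\sigma^2}{nK}}$, and $\tfrac{\sqrt[3]{a^2c_2}}{K^{2/3}} \asymp \tfrac{\sqrt[3]{LR_0^4\tau(\sigma^2+\tau\zeta^2)}}{K^{2/3}}$, which is \eqref{eq:local_sgd_cvx_1_zeta_hetero}. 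Finally, to extract the iteration complexity I would force each of these below $\varepsilon$: the first gives $K \gtrsim LR_0^2/\varepsilon$, the second $K \gtrsim R_0^2\sigma^2/(n\varepsilon^2)$, the third $K \gtrsim R_0^2\sqrt{L\tau(\sigma^2+\tau\zeta^2)}/\varepsilon^{3/2}$; the sum (equivalently, the maximum up to constants) is the claimed bound, and since communication occurs once per $\tau$ iterations the communication count is that quantity divided by $\tau$.

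There is no real analytic obstacle here — the corollary is a direct specialization. The only points requiring care are bookkeeping: correctly carrying the constant $4L$ in $c_2 = 4LD_3$ and the square-/cube-root numerical constants that get absorbed into $\cO(\cdot)$, and confirming that $\tfrac1h = \tfrac{1}{4L}$ is indeed the sole stepsize ceiling. The latter is exactly where it matters that Theorem~\ref{thm:local_sgd_zeta_hetero} produced $\gamma$-free parameters with $H=0$, so that no extra bound from Table~\ref{tbl:data_loop} survives and the constraint stated there collapses to $\gamma \le \tfrac{1}{4L}$. I would also flag the one place the data-similarity regime actually enters: the $\zeta$-heterogeneous bound $\EE[V_k] \le 3\tau\gamma^2\sigma^2 + 6\tau^2\gamma^2\zeta^2$ (Lemma~8 of \cite{woodworth2020minibatch}, quoted as \eqref{eq:zeta_heterocase_V_k_ubv}) is what yields a $\gamma$-independent $D_3 \propto \tau(\sigma^2+2\tau\zeta^2)$ — in contrast to the strongly convex case, where $D_3$ carries a $\zeta^2/(\gamma\mu)$ term — and hence produces the $\tau\zeta^2$ inside the third term of the rate.
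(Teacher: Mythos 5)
Your proposal is correct and matches the paper's own proof, which likewise obtains the result by combining Theorem~\ref{thm:local_sgd_zeta_hetero} with the stepsize-tuning Lemma~\ref{lem:lemma_technical_cvx}, with $a=2R_0^2$, $c_1=2\sigma^2/n$, $c_2=4LD_3=12L\tau(\sigma^2+2\tau\zeta^2)$ and $b_1=b_2=0$ since $H=0$, $B'=0$ and $\sigma_k^2\equiv 0$. Your identification of the stepsize ceiling $\gamma\le \tfrac{1}{4L}$ and the conversion of the rate into the stated iteration/communication complexity are exactly the bookkeeping the paper performs.
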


\subsubsection{Expected Smoothness and Arbitrary Sampling}\label{sec:sgd_es}
In this section we continue our consideration of {\tt Local-SGD} but now we make another assumption on stochastic gradients $\nabla f_{\xi_i}(x)$.
\begin{assumption}[Expected Smoothness]\label{ass:expected_smoothness}
	We assume that for all $i\in[n]$ stochastic gradients $\nabla f_{\xi_i}(x)$ are unbiased estimators of $\nabla f_i(x)$ and there exists such constant $\cL > 0$ that $\forall x,y\in\R^d$
	\begin{equation}
		\EE_{\xi_i\sim \cD_i}\left[\left\|\nabla f_{\xi_i}(x) - \nabla f_{\xi_i}(x^*)\right\|^2\right] \le 2\cL D_{f_i}(x,x^*) \label{eq:expected_smoothness_1}
	\end{equation}
%	\begin{equation}
%		\EE_{\xi_i\sim \cD_i}\left[\left\|\nabla f_{\xi_i}(x) - \nabla f_{\xi_i}(y)\right\|^2\right] \le \cL^2 \|x-y\|^2 \label{eq:expected_smoothness_2}
%	\end{equation}
	where $D_{f_i}(x,y) \eqdef f_i(x) - f_i(y) - \langle\nabla f_i(y), x-y\rangle$.
\end{assumption}

In particular, let us consider the following special case. Assume that $f_i(x)$ has a form of finite sum (see \eqref{eq:f_i_sum}) and consider the following stochastic reformulation:
\begin{equation}
	f_i(x) = \EE_{\xi_i}\left[f_{\xi_i}(x)\right],\quad f_{\xi_i}(x) = \frac{1}{m}\sum\limits_{j=1}^m \xi_{i,j}f_{i,j}(x), \label{eq:stochastic_reformulation}
\end{equation}
where $\EE[\xi_{i,j}] = 1$ and $\EE[\xi_{i,j}^2] < \infty$. In this case, $\EE_{\xi_i}[\nabla f_{\xi_i}] = \nabla f_i(x)$. If each $f_{i,j}(x)$ is $L_{i,j}$-smooth then there exists such $\cL \le \max_{j\in[m]}L_{i,j}$ that Assumption~\ref{ass:expected_smoothness} holds. Clearly, $\cL$ depends on the sampling strategy and in some cases one can make $\cL$ much smaller than $\max_{j\in[m]}L_{i,j}$ via good choice of this strategy. Our analysis works for an arbitrary sampling strategy that satisfies Assumption~\ref{ass:expected_smoothness}.

\begin{lemma}\label{lem:local_sgd_es_second_moment}
	Let $f_i$ be convex and $L$-smooth for all $i\in[n]$. Then for all $k\ge 0$
	\begin{eqnarray}
		\frac{1}{n}\sum\limits_{i=1}^n \EE\left[\|g_i^k\|^2\mid x^k\right] &\le& 8\cL\left(f(x^k) - f(x^*)\right) + 4\cL L V_k + 2\sigma_*^2 + 2\zeta_*^2,\label{eq:second_moment_local_sgd_es}\\
		\frac{1}{n}\sum\limits_{i=1}^n \EE\left[\|g_i^k-\bar{g}_i^k\|^2\mid x^k\right] &\le& 8\cL\left(f(x^k) - f(x^*)\right) + 4\cL L V_k + 2\sigma_*^2,\label{eq:var_local_sgd_es}\\
		\EE\left[\left\|\frac{1}{n}\sum\limits_{i=1}^ng_i^k\right\|^2\mid x^k\right] &\le& 4\left(\nicefrac{2\cL}{n} + L\right)(f(x^k) - f(x^*)) + 2L\left(\nicefrac{2\cL}{n} + L\right) V_k + \frac{2\sigma_*^2}{n},\label{eq:second_moment_local_sgd_es_2}
	\end{eqnarray}
	where $\sigma_*^2 = \frac{1}{n}\sum_{i=1}^n\EE\|\nabla f_{\xi_i}(x^*) - \nabla f_i(x^*)\|^2$, $\zeta_*^2 = \frac{1}{n}\sum_{i=1}^n\|\nabla f_i(x^*)\|^2$ and $\EE[\cdot\mid x^k]\eqdef \EE[\cdot\mid x_1^k,\ldots,x_n^k]$.
\end{lemma}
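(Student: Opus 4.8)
The plan is to establish the three inequalities one at a time, in the order (variance of $g_i^k$) $\to$ (second moment of $g_i^k$) $\to$ (second moment of the average), since the last follows from the first via a variance decomposition over the independent samples. First I would record two preliminaries. Because $g_i^k=\nabla f_{\xi_i^k}(x_i^k)$ with the samples drawn independently given $x_1^k,\dots,x_n^k$, we have $\bar g_i^k=\EE[g_i^k\mid x^k]=\nabla f_i(x_i^k)$. Also, averaging \eqref{eq:poiouhnkj} over $i$ and using $\nabla f(x^*)=0$ gives the ``local-to-virtual'' Bregman transfer
\[
\frac1n\sum_{i=1}^n D_{f_i}(x_i^k,x^*)\le 2\big(f(x^k)-f(x^*)\big)+L V_k .
\]

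For the variance bound \eqref{eq:var_local_sgd_es}, I would split $\nabla f_{\xi_i^k}(x_i^k)=\big(\nabla f_{\xi_i^k}(x_i^k)-\nabla f_{\xi_i^k}(x^*)\big)+\nabla f_{\xi_i^k}(x^*)$ and use $\Var[U+W]\le 2\Var[U]+2\Var[W]$, taking the variance with respect to $\xi_i^k$ conditionally on $x_i^k$. The first variance is at most $\EE_{\xi_i^k}\|\nabla f_{\xi_i^k}(x_i^k)-\nabla f_{\xi_i^k}(x^*)\|^2\le 2\cL D_{f_i}(x_i^k,x^*)$ by Expected Smoothness \eqref{eq:expected_smoothness_1}; the second is $\EE_{\xi_i^k}\|\nabla f_{\xi_i^k}(x^*)-\nabla f_i(x^*)\|^2$, whose $i$-average is $\sigma_*^2$. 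Summing over $i$ and invoking the Bregman transfer gives $8\cL(f(x^k)-f(x^*))+4\cL L V_k+2\sigma_*^2$. For \eqref{eq:second_moment_local_sgd_es} I would instead bound $\|g_i^k\|^2\le 2\|\nabla f_{\xi_i^k}(x_i^k)-\nabla f_{\xi_i^k}(x^*)\|^2+2\|\nabla f_{\xi_i^k}(x^*)\|^2$, take $\EE_{\xi_i^k}[\cdot]$, control the first term by ES and the second by the variance decomposition $\EE_{\xi_i^k}\|\nabla f_{\xi_i^k}(x^*)\|^2=\EE_{\xi_i^k}\|\nabla f_{\xi_i^k}(x^*)-\nabla f_i(x^*)\|^2+\|\nabla f_i(x^*)\|^2$, so that the $i$-average of the second term equals $\sigma_*^2+\zeta_*^2$; applying the Bregman transfer once more yields the stated bound.

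For \eqref{eq:second_moment_local_sgd_es_2}, a variance decomposition together with conditional independence of $g_1^k,\dots,g_n^k$ given $x^k$ gives
\[
\EE\Big[\big\|\tfrac1n\textstyle\sum_i g_i^k\big\|^2\,\big|\,x^k\Big]=\frac1{n^2}\sum_i\EE\big[\|g_i^k-\bar g_i^k\|^2\,\big|\,x^k\big]+\Big\|\tfrac1n\textstyle\sum_i\nabla f_i(x_i^k)\Big\|^2 .
\]
Bounding the first summand by $\tfrac1n$ times \eqref{eq:var_local_sgd_es} and the second by \eqref{eq:vdgasvgda} from Lemma~\ref{lem:local_sgd_interesting_labels}, then adding, produces exactly $4(\tfrac{2\cL}{n}+L)(f(x^k)-f(x^*))+2L(\tfrac{2\cL}{n}+L)V_k+\tfrac{2\sigma_*^2}{n}$.

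I would note that the argument is essentially bookkeeping; the only points requiring care are (i) applying Expected Smoothness at the local iterate $x_i^k$, which is legitimate since \eqref{eq:expected_smoothness_1} holds at every point, (ii) using conditional independence of the $\xi_i^k$ across nodes so the cross terms in the two variance decompositions vanish, and (iii) tracking the factor-$2$ and $LV_k$ losses incurred each time a Bregman distance is transferred from $x_i^k$ to the virtual iterate $x^k$. The ``hard part,'' such as it is, is merely matching constants so the final expressions coincide with the stated ones; there is no conceptual obstacle.
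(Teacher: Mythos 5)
Your proposal is correct and follows essentially the same route as the paper's proof: split off $\nabla f_{\xi_i^k}(x^*)$, apply Expected Smoothness at $x_i^k$, use the variance decomposition and cross-node independence for the averaged direction, bound the squared mean by \eqref{eq:vdgasvgda}, and transfer Bregman distances from $x_i^k$ to $x^k$ via \eqref{eq:poiouhnkj}. The only (cosmetic) difference is in \eqref{eq:var_local_sgd_es}, where you bound $\mathrm{Var}[U+W]\le 2\mathrm{Var}[U]+2\mathrm{Var}[W]$ while the paper first majorizes the variance by the second moment about $\nabla f_i(x^*)$ and then applies Young's inequality; both give identical constants.
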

\begin{proof}
	First of all, we notice that $\bar{g}_i^k = \EE\left[g_i^k\mid x^k\right] = \nabla f_i(x_i^k)$. Using this we get
	\begin{eqnarray*}
		\frac{1}{n}\sum\limits_{i=1}^n \EE\left[\|g_i^k\|^2\mid x^k\right] &\overset{\eqref{eq:a_b_norm_squared}}{\le}& \frac{2}{n}\sum\limits_{i=1}^n \EE_{\xi_i^k}\|\nabla f_{\xi_i^k}(x_i^k) - \nabla f_{\xi_i^k}(x^*)\|^2 + \frac{2}{n}\sum\limits_{i=1}^n \EE_{\xi_i^k}\|\nabla f_{\xi_i^k}(x^*)\|^2\\
		&\overset{\eqref{eq:expected_smoothness_1},\eqref{eq:variance_decomposition}}{\le}&
		\frac{4\cL}{n}\sum\limits_{i=1}^nD_{f_i}(x_i^k,x^*) + \frac{2}{n}\sum\limits_{i=1}^n \EE_{\xi_i}\left[\|\nabla f_{\xi_i}(x^*)-\nabla f_i(x^*)\|^2\right] + \frac{2}{n}\sum\limits_{i=1}^n \|\nabla f_i(x^*)\|^2\\
		&\overset{\eqref{eq:poiouhnkj}}{\le}& 8\cL\left(f(x^k) - f(x^*)\right) + 4\cL L V_k + 2\sigma_*^2 + 2\zeta_*^2
	\end{eqnarray*}
	and
	\begin{eqnarray}
		\frac{1}{n}\sum\limits_{i=1}^n \EE\left[\|g_i^k-\bar{g}_i^k\|^2\mid x^k\right] &=& \frac{1}{n}\sum\limits_{i=1}^n\EE_{\xi_i^k}\|\nabla f_{\xi_i^k}(x_i^k)-\nabla f_i(x_i^k)\|^2\notag\\
		&\overset{\eqref{eq:variance_decomposition}}{\le}& \frac{1}{n}\sum\limits_{i=1}^n\EE_{\xi_i^k}\|\nabla f_{\xi_i^k}(x_i^k)-\nabla f_i(x^*)\|^2 \notag \\
		&\overset{\eqref{eq:a_b_norm_squared}}{\le}& \frac{2}{n}\sum\limits_{i=1}^n\EE_{\xi_i^k}\|\nabla f_{\xi_i^k}(x_i^k)-\nabla f_{\xi_i^k}(x^*)\|^2 + \frac{2}{n}\sum\limits_{i=1}^n\EE_{\xi_i^k}\|\nabla f_{\xi_i^k}(x^*)-\nabla f_{i}(x^*)\|^2\notag\\
		&\overset{\eqref{eq:expected_smoothness_1}}{\le}& \frac{4\cL}{n}\sum\limits_{i=1}^nD_{f_i}(x_i^k,x^*) + 2\sigma_*^2\notag\\
		&\overset{\eqref{eq:poiouhnkj}}{\le}& 8\cL\left(f(x^k) - f(x^*)\right) + 4\cL L V_k + 2\sigma_*^2. \label{eq:hbdsfbhdbfvfdh}
	\end{eqnarray}		
	Finally, using independence of $\xi_1^k,\xi_2^k,\ldots,\xi_n^k$ we obtain
	\begin{eqnarray*}
		\EE\left[\left\|\frac{1}{n}\sum\limits_{i=1}^ng_i^k\right\|^2\mid x^k\right] &\overset{\eqref{eq:variance_decomposition}}{=}& \EE_{\xi_i^k}\left[\left\|\frac{1}{n}\sum\limits_{i=1}^n(\nabla f_{\xi_i^k}(x_i^k) - \nabla f_{i}(x_i^k))\right\|^2\right] + \left\|\frac{1}{n}\sum\limits_{i=1}^n\nabla f_{i}(x_i^k)\right\|^2\\
		&=& \frac{1}{n^2}\sum\limits_{i=1}^n\EE_{\xi_i^k}\left[\|\nabla f_{\xi_i^k}(x_i^k) - \nabla f_{i}(x_i^k)\|^2\right] + \left\|\frac{1}{n}\sum\limits_{i=1}^n\nabla f_{i}(x_i^k)\right\|^2\\
		&\overset{\eqref{eq:hbdsfbhdbfvfdh},\eqref{eq:vdgasvgda}}{\le}& 4\left(\nicefrac{2\cL}{n} + L\right)(f(x^k) - f(x^*)) + 2L\left(\nicefrac{2\cL}{n} + L\right) V_k + \frac{2\sigma_*^2}{n}.
	\end{eqnarray*}
\end{proof}

\subsubsection*{Heterogeneous Data}
Applying Corollary~\ref{cor:const_loop} and Lemmas~\ref{lem:local_sgd_interesting_labels}~and~\ref{lem:local_sgd_es_second_moment} we get the following result.
\begin{theorem}\label{thm:local_sgd_es}
	Assume that $f_i(x)$ is $\mu$-strongly convex and $L$-smooth for $i\in[n]$. Let Assumption~\ref{ass:expected_smoothness} holds. Then {\tt Local-SGD} satisfies Assumption~\ref{ass:hetero_second_moment} with
	\begin{gather*}
		\tA = 3L,\quad \hA = 4\cL,\quad \tB = \hB = 0,\quad \tF = 3L^2,\quad \hF = 4\cL L, \quad \tD_1 = 3\zeta_*^2,\quad \hD_1 = 2\sigma_*^2\\
		A' = \frac{4\cL}{n} + 2L,\quad B' = 0,\quad F' = \frac{4\cL L}{n} + 2L^2, \quad D_1' = \frac{2\sigma_*^2}{n},\\
		\sigma_k^2 \equiv 0,\quad \rho = 1,\quad C = 0,\quad G = 0,\quad D_2 = 0,\\
		H = 0,\quad D_3 = 2e(\tau-1)\left(2\sigma_*^2+3(\tau-1)\zeta_*^2\right)
	\end{gather*}
	with $\gamma$ satisfying
	\begin{eqnarray*}
		\gamma &\le& \min\left\{\frac{1}{\nicefrac{8\cL}{n} + 4L}, \frac{1}{4\sqrt{2eL(\tau-1)\left(3L(\tau-1)+4\cL \right)}}\right\}.
	\end{eqnarray*}
	and for all $K \ge 0$
	\begin{eqnarray}
		\EE\left[f(\overline{x}^K) - f(x^*)\right] &\le& \frac{2\|x^0-x^*\|^2}{\gamma W_K} + 2\gamma\left(\nicefrac{2\sigma_*^2}{n} + 4eL(\tau-1)\gamma\left(2\sigma_*^2+3(\tau-1)\zeta_*^2\right)\right). \notag
	\end{eqnarray}
	In particular, if $\mu > 0$ then
	\begin{eqnarray}
		\EE\left[f(\overline{x}^K) - f(x^*)\right] &\le& \left(1 - \gamma\mu\right)^K\frac{2\|x^0-x^*\|^2}{\gamma} + 2\gamma\left(\nicefrac{2\sigma_*^2}{n} + 4eL(\tau-1)\gamma\left(2\sigma_*^2+3(\tau-1)\zeta_*^2\right)\right) \label{eq:local_sgd_es_str_cvx}
	\end{eqnarray}
	and when $\mu = 0$ we have
	\begin{eqnarray}
		\EE\left[f(\overline{x}^K) - f(x^*)\right] &\le& \frac{2\|x^0-x^*\|^2}{\gamma K} + 2\gamma\left(\nicefrac{2\sigma_*^2}{n} + 4eL(\tau-1)\gamma\left(2\sigma_*^2+3(\tau-1)\zeta_*^2\right)\right). \label{eq:local_sgd_es_cvx}
	\end{eqnarray}
\end{theorem}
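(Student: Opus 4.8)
The plan is to instantiate the general machinery already developed in the excerpt — specifically Corollary~\ref{cor:const_loop} together with Lemma~\ref{lem:local_sgd_interesting_labels} and Lemma~\ref{lem:local_sgd_es_second_moment} — to the particular case of {\tt Local-SGD} run under expected smoothness (Assumption~\ref{ass:expected_smoothness}) on heterogeneous data. First I would verify that Assumption~\ref{ass:hetero_second_moment} holds with the claimed parameters. The bound \eqref{eq:hetero_second_moment_bound} on $\frac1n\sum_i\EE\|\bar g_i^k\|^2$ follows directly from \eqref{eq:dnaossniadd} of Lemma~\ref{lem:local_sgd_interesting_labels}, since $\bar g_i^k=\nabla f_i(x_i^k)$; reading off coefficients gives $\tA=3L$, $\tF=3L^2$, $\tD_1=3\zeta_*^2$ and $\tB=0$. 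The variance bound \eqref{eq:hetero_var_bound} is exactly \eqref{eq:var_local_sgd_es} of Lemma~\ref{lem:local_sgd_es_second_moment}, giving $\hA=4\cL$, $\hF=4\cL L$, $\hD_1=2\sigma_*^2$, $\hB=0$. For the squared-expectation bound \eqref{eq:second_moment_bound_2} I would use \eqref{eq:second_moment_local_sgd_es_2}, which yields $A'=\nicefrac{4\cL}{n}+2L$, $F'=\nicefrac{4\cL L}{n}+2L^2$, $D_1'=\nicefrac{2\sigma_*^2}{n}$, $B'=0$. Since there is no control variate here, we may take $\sigma_k^2\equiv 0$, $\rho=1$, $C=G=D_2=0$; the footnote convention in Lemma~\ref{lem:V_k_lemma} then makes all the $\frac{\cdot}{\rho(1-\rho)}$ terms vanish.

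Next I would feed these into Lemma~\ref{lem:V_k_lemma} (equivalently the hypotheses of Corollary~\ref{cor:const_loop}) to obtain $H$ and $D_3$. With $\tB=\hB=0$ we immediately get $H=0$. The formula $D_3=2e(\tau-1)(\tD_1(\tau-1)+\hD_1+\tfrac{2D_2(\tB(\tau-1)+\hB)}{\rho})$ collapses to $D_3=2e(\tau-1)(3(\tau-1)\zeta_*^2+2\sigma_*^2)$, matching the statement. The stepsize restrictions of Lemma~\ref{lem:V_k_lemma} reduce, after substituting the vanishing parameters, to $\gamma\le\frac{1}{4(\tau-1)\mu}$, $\gamma\le\frac{1}{2\sqrt{e(\tau-1)(3L^2(\tau-1)+4\cL L)}}$ and $\gamma\le\frac{1}{4\sqrt{2eL(\tau-1)(3L(\tau-1)+4\cL)}}$; combining the $L$-smoothness conditions from Theorem~\ref{thm:main_result}, namely $\gamma\le\frac{1}{2A'}=\frac{1}{2(\nicefrac{4\cL}{n}+2L)}$ and $\gamma\le\frac{L}{F'}$, with these, one checks the stated compact bound $\gamma\le\min\{\frac{1}{\nicefrac{8\cL}{n}+4L},\frac{1}{4\sqrt{2eL(\tau-1)(3L(\tau-1)+4\cL)}}\}$ dominates (the $\mu$-term is absorbed using $\mu\le L$, and $\sqrt{e(\tau-1)(3L^2(\tau-1)+4\cL L)}$ is within a constant of the stated quantity, hence folded into the $\le$). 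Then I would plug $H=0$, $B'=0$, $D_1'=\nicefrac{2\sigma_*^2}{n}$, $D_2=0$, and the above $D_3$ into \eqref{eq:main_result_const_loop}, \eqref{eq:main_result_1_const_loop}, \eqref{eq:main_result_2_const_loop}; since $\Phi^0$ simplifies to $2\|x^0-x^*\|^2/\gamma$ and $\Psi^0=2(D_1'+2L\gamma D_3)=2(\nicefrac{2\sigma_*^2}{n}+4eL(\tau-1)\gamma(2\sigma_*^2+3(\tau-1)\zeta_*^2))$, the three displayed inequalities of the theorem drop out verbatim.

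The only genuinely delicate point is the bookkeeping of the stepsize bound: the lemmas produce several separate constraints involving $\tau-1$, $L$, $\cL$, $\mu$, and the claimed theorem statement compresses them into two clean terms. I would argue that each lemma-level constraint is, up to an absolute constant, implied by the stated bound — using $\mu\le L$ to kill the $\frac{1}{4(\tau-1)\mu}$ requirement (it is weaker than the last term whenever, say, $\tau-1\ge 1$, and for $\tau=1$ the whole $V_k$ analysis is vacuous since $V_k\equiv 0$), and noting $\sqrt{e(\tau-1)(3L^2(\tau-1)+4\cL L)}\le C\sqrt{L(\tau-1)(3L(\tau-1)+4\cL)}$ so the second-moment $V_k$-constraint is subsumed by the Hessian-type one. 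Everything else is substitution. I do not anticipate any conceptual obstacle; the proof is essentially an exercise in specializing the already-proved Corollary~\ref{cor:const_loop}. The corollaries on rates (strongly convex and convex) that follow would then be immediate consequences via Lemma~\ref{lem:lemma2_stich} and Lemma~\ref{lem:lemma_technical_cvx} respectively, by reading $a=2\|x^0-x^*\|^2$, $c_1=\nicefrac{4\sigma_*^2}{n}$, $c_2=4L\cdot 2e(\tau-1)(2\sigma_*^2+3(\tau-1)\zeta_*^2)$ off the expressions above.
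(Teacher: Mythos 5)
Your proposal is correct and follows essentially the same route as the paper, which proves this theorem precisely by combining Lemma~\ref{lem:local_sgd_interesting_labels} and Lemma~\ref{lem:local_sgd_es_second_moment} to verify Assumption~\ref{ass:hetero_second_moment} and then invoking Corollary~\ref{cor:const_loop} (i.e.\ Lemma~\ref{lem:V_k_lemma} plus Theorem~\ref{thm:main_result}). Your bookkeeping of the stepsize conditions is also sound: since $2eL(\tau-1)\left(3L(\tau-1)+4\cL\right)=2e(\tau-1)\left(3L^2(\tau-1)+4\cL L\right)$, the stated bound exactly subsumes the $\tF,\hF$-constraint, and with $\mu\le L$ (and $V_k\equiv 0$ when $\tau=1$) it also subsumes the $\frac{1}{4(\tau-1)\mu}$ constraint, so the compact bound in the theorem suffices.
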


The theorem above together with Lemma~\ref{lem:lemma2_stich} implies the following result.
\begin{corollary}\label{cor:local_sgd_es_str_cvx}
	Let assumptions of Theorem~\ref{thm:local_sgd_es} hold with $\mu > 0$. Then for 
	\begin{eqnarray*}
		\gamma_0 &=& \min\left\{\frac{1}{\nicefrac{8\cL}{n} + 4L}, \frac{1}{4\sqrt{2eL(\tau-1)\left(3L(\tau-1)+4\cL \right)}}\right\},\\
		\gamma &=& \min\left\{\gamma_0,\frac{\ln\left(\max\left\{2, \min\left\{\nicefrac{n\|x^0 - x^*\|^2\mu^2K^2}{2\sigma_*^2}, \nicefrac{\|x^0 - x^*\|^2\mu^3K^3}{4eL(\tau-1)\gamma\left(2\sigma_*^2+3(\tau-1)\zeta_*^2\right)}\right\}\right\}\right)}{\mu K}\right\}
	\end{eqnarray*}
	for all $K$ such that 
	\begin{eqnarray*}
		\text{either} && \frac{\ln\left(\max\left\{2, \min\left\{\nicefrac{n\|x^0 - x^*\|^2\mu^2K^2}{2\sigma_*^2}, \nicefrac{\|x^0 - x^*\|^2\mu^3K^3}{4eL(\tau-1)\gamma\left(2\sigma_*^2+3(\tau-1)\zeta_*^2\right)}\right\}\right\}\right)}{ K} \le 1\\
		\text{or} && \gamma_0 \le \frac{\ln\left(\max\left\{2, \min\left\{\nicefrac{n\|x^0 - x^*\|^2\mu^2K^2}{2\sigma_*^2}, \nicefrac{\|x^0 - x^*\|^2\mu^3K^3}{4eL(\tau-1)\gamma\left(2\sigma_*^2+3(\tau-1)\zeta_*^2\right)}\right\}\right\}\right)}{\mu K}
	\end{eqnarray*}
	we have that $\EE\left[f(\overline{x}^K)-f(x^*)\right]$ is of the order
	\begin{equation}
		 \widetilde\cO\left(\left(L\tau + \nicefrac{\cL}{n}+\sqrt{(\tau-1)\cL L}\right)R_0^2\exp\left(- \frac{\mu}{L\tau + \nicefrac{\cL}{n}+\sqrt{(\tau-1)\cL L}} K\right) + \frac{\sigma_*^2}{n\mu K} + \frac{L(\tau-1)\left(\sigma_*^2+(\tau-1)\zeta_*^2\right)}{\mu^2 K^2}\right),\notag
	\end{equation}
	where $R_0 = \|x^0-x^*\|$. That is, to achieve $\EE\left[f(\overline{x}^K)-f(x^*)\right] \le \varepsilon$ in this case {\tt Local-SGD} requires
	\begin{equation*}
		\widetilde{\cO}\left(\frac{L\tau}{\mu}+\frac{\cL}{n\mu}+\frac{\sqrt{(\tau-1)\cL L}}{\mu} + \frac{\sigma_*^2}{n\mu\varepsilon} + \sqrt{\frac{L(\tau-1)\left(\sigma_*^2 + (\tau-1)\zeta_*^2\right)}{\mu^2\varepsilon}}\right)
	\end{equation*}
	iterations/oracle calls per node and $\tau$ times less communication rounds.	
\end{corollary}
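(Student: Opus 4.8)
The plan is to derive Corollary~\ref{cor:local_sgd_es_str_cvx} from Theorem~\ref{thm:local_sgd_es} together with the generic stepsize-tuning Lemma~\ref{lem:lemma2_stich}. Theorem~\ref{thm:local_sgd_es} already tells us that \texttt{Local-SGD} under Expected Smoothness and heterogeneous data satisfies As.~\ref{ass:key_assumption} with the explicit parameter values listed there; in particular $H=0$ and $D_3 = 2e(\tau-1)(2\sigma_*^2 + 3(\tau-1)\zeta_*^2)$ does not depend on $\gamma$. So I am precisely in the situation covered by Corollary~\ref{cor:app_complexity_cor_str_cvx}, part~1 (equivalently Corollary~\ref{cor:main_complexity_cor_str_cvx}). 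First I would identify the constant $\frac1h$ (called $\gamma_0$ in the statement) as the upper bound on $\gamma$ coming from Theorem~\ref{thm:local_sgd_es}, namely $\gamma_0 = \min\{(\tfrac{8\cL}{n}+4L)^{-1},\, (4\sqrt{2eL(\tau-1)(3L(\tau-1)+4\cL)})^{-1}\}$, so that $h = \Theta(L\tau + \tfrac{\cL}{n} + \sqrt{(\tau-1)\cL L})$ after simplifying $\sqrt{L(\tau-1)(L(\tau-1)+\cL)}\asymp L(\tau-1) + \sqrt{(\tau-1)\cL L}$ by AM–GM.

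Next I would instantiate the abstract constants $a,c_1,c_2$ from Corollary~\ref{cor:app_complexity_cor_str_cvx} with the values from Theorem~\ref{thm:local_sgd_es}: since $B'=0$, $D_2=0$, $\EE\sigma_0^2=0$ (the sequence $\sigma_k^2\equiv 0$), we get $a = 2\|x^0-x^*\|^2 = 2R_0^2$, $c_1 = 2D_1' = \tfrac{4\sigma_*^2}{n}$, and $c_2 = 4LD_3 = 8eL(\tau-1)(2\sigma_*^2+3(\tau-1)\zeta_*^2)$. Plugging these into the stepsize formula of Corollary~\ref{cor:app_complexity_cor_str_cvx} reproduces the displayed choice of $\gamma$ in the statement (up to absorbing numerical constants into the $\widetilde\cO$), and the two ``either/or'' conditions on $K$ are exactly the conditions under which $\gamma_K \le \gamma_0$ fails but the bound still holds — these are inherited verbatim from the generic corollary. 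Then the convergence estimate
$\EE[f(\overline x^K)]-f(x^*) = \widetilde\cO\big(h a\exp(-\min\{\tfrac\mu h,\rho\}K) + \tfrac{c_1}{\mu K} + \tfrac{c_2}{\mu^2K^2}\big)$
becomes, after substituting $h,a,c_1,c_2$ and using $\rho=1$ so that $\min\{\tfrac\mu h,\rho\}=\tfrac\mu h$, precisely the claimed rate with the exponent $-\tfrac{\mu}{L\tau+\cL/n+\sqrt{(\tau-1)\cL L}}K$. Finally, translating ``$\EE[f(\overline x^K)]-f(x^*)\le\varepsilon$'' into an iteration count via the three-term recipe $K = \widetilde\cO\big((\tfrac1\rho+\tfrac h\mu)\log(\tfrac{ha}{\varepsilon}) + \tfrac{c_1}{\mu\varepsilon} + \sqrt{\tfrac{c_2}{\mu^2\varepsilon}}\big)$ and again substituting gives the advertised complexity
$\widetilde\cO\big(\tfrac{L\tau}{\mu} + \tfrac{\cL}{n\mu} + \tfrac{\sqrt{(\tau-1)\cL L}}{\mu} + \tfrac{\sigma_*^2}{n\mu\varepsilon} + \sqrt{\tfrac{L(\tau-1)(\sigma_*^2+(\tau-1)\zeta_*^2)}{\mu^2\varepsilon}}\big)$;
the ``$\tau$ times less communication rounds'' claim is immediate because the fixed loop communicates once per $\tau$ local iterations.

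The only genuinely non-routine step is the algebraic bookkeeping: one must check that $\sqrt{2eL(\tau-1)(3L(\tau-1)+4\cL)}$ really is of order $L(\tau-1)+\sqrt{(\tau-1)\cL L}$ (upper and lower bounds, both via AM–GM / $\sqrt{x+y}\asymp\sqrt x+\sqrt y$), that the $\tfrac{8\cL}{n}$ term combines with $\tfrac{\cL}{n}$ appearing elsewhere, and that the term $c_2/(\mu^2K^2)$ correctly produces the $\sqrt{c_2/(\mu^2\varepsilon)}$ contribution — i.e.\ that $\sqrt{8eL(\tau-1)(2\sigma_*^2+3(\tau-1)\zeta_*^2)}\asymp \sqrt{L(\tau-1)\sigma_*^2}+\sqrt{L}(\tau-1)\sqrt{\zeta_*^2} = \sqrt{L(\tau-1)(\sigma_*^2+(\tau-1)\zeta_*^2)}$. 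I expect the main obstacle to be nothing deep, merely the care needed to absorb all the $e$'s and small constants into $\widetilde\cO$ without accidentally dropping a $\tau$-dependence; the exponential rate constant $L\tau+\cL/n+\sqrt{(\tau-1)\cL L}$ must be tracked exactly since it appears both inside the $\exp(\cdot)$ and as $\tfrac h\mu$ in the iteration count, and the reader should verify it equals the $h$ produced by the two-term minimum defining $\gamma_0$.
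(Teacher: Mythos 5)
Your proposal is correct and matches the paper's own route: the paper obtains this corollary exactly by feeding the bound of Theorem~\ref{thm:local_sgd_es} (with $a=2R_0^2$, $c_1=2D_1'$, $c_2=4LD_3$, $\rho=1$, $H=0$, $\sigma_0^2\equiv 0$) into the stepsize-tuning Lemma~\ref{lem:lemma2_stich}, of which Corollary~\ref{cor:app_complexity_cor_str_cvx} is just the packaged form you cite. Your remaining bookkeeping (identifying $h\asymp L\tau+\nicefrac{\cL}{n}+\sqrt{(\tau-1)\cL L}$ from $\gamma_0$ via $\sqrt{x+y}\asymp\sqrt{x}+\sqrt{y}$, and the $\sqrt{c_2/(\mu^2\varepsilon)}$ term) is exactly the simplification the paper performs implicitly.
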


Combining Theorem~\ref{thm:local_sgd_es} and Lemma~\ref{lem:lemma_technical_cvx} we derive the following result for the convergence of {\tt Local-SGD} in the case when $\mu = 0$.
\begin{corollary}
	\label{cor:local_sgd_es_cvx}
	Let assumptions of Theorem~\ref{thm:local_sgd_es} hold with $\mu = 0$. Then for
	\begin{eqnarray*}
		\gamma_0 &=& \min\left\{\frac{1}{\nicefrac{8\cL}{n} + 4L}, \frac{1}{4\sqrt{2eL(\tau-1)\left(3L(\tau-1)+4\cL \right)}}\right\},\\
		\gamma &=& \min\left\{\gamma_0, \sqrt{\frac{nR_0^2}{2\sigma_*^2 K}}, \sqrt[3]{\frac{R_0^2}{4eL(\tau-1)\left(2\sigma_*^2+3(\tau-1)\zeta_*^2\right) K}}\right\},
	\end{eqnarray*}
	where $R_0 = \|x^0 - x^*\|$, we have that
	\begin{equation}
		\EE\left[f(\overline{x}^K)-f(x^*)\right] = \cO\left(\frac{\left(L\tau + \nicefrac{\cL}{n}+\sqrt{(\tau-1)\cL L}\right)R_0^2}{K} + \sqrt{\frac{R_0^2\sigma_*^2}{nK}} + \frac{\sqrt[3]{LR_0^4(\tau-1)\left(\sigma_*^2+(\tau-1)\zeta_*^2\right)}}{K^{\nicefrac{2}{3}}} \right).\label{eq:local_sgd_es_cvx_1}
	\end{equation}
	That is, to achieve $\EE\left[f(\overline{x}^K)-f(x^*)\right] \le \varepsilon$ in this case {\tt Local-SGD} requires
	\begin{equation*}
		\cO\left(\frac{\left(L\tau + \nicefrac{\cL}{n}+\sqrt{(\tau-1)\cL L}\right)R_0^2}{\varepsilon} + \frac{R_0^2\sigma_*^2}{n\varepsilon^2} + \frac{R_0^2\sqrt{L(\tau-1)\left(\sigma_*^2+(\tau-1)\zeta_*^2\right)}}{\varepsilon^{\nicefrac{3}{2}}}\right)
	\end{equation*}
	iterations/oracle calls per node and $\tau$ times less communication rounds.
\end{corollary}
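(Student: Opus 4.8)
The plan is to treat this as a pure stepsize‑optimization corollary on top of Theorem~\ref{thm:local_sgd_es}. That theorem has already done the real work: it verifies that Assumption~\ref{ass:key_assumption} holds for {\tt Local-SGD} under Expected Smoothness and heterogeneous data (via Corollary~\ref{cor:const_loop} together with Lemmas~\ref{lem:local_sgd_interesting_labels} and~\ref{lem:local_sgd_es_second_moment}), with the explicit parameter values $B'=0$, $\rho=1$, $C=G=D_2=H=0$, $D_1'=\nicefrac{2\sigma_*^2}{n}$, $D_3=2e(\tau-1)(2\sigma_*^2+3(\tau-1)\zeta_*^2)$, and $\sigma_k^2\equiv 0$; and it records, for every $\gamma\le\gamma_0$ with $\gamma_0$ as in the statement, the bound \eqref{eq:local_sgd_es_cvx}, which in the $\mu=0$ case has the form $\EE[f(\overline{x}^K)-f(x^*)]\le \frac{2R_0^2}{\gamma K}+c_1\gamma+c_2\gamma^2$. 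So the only thing left is to pick $\gamma$ optimally, and this is exactly what Lemma~\ref{lem:lemma_technical_cvx} (equivalently Corollary~\ref{cor:app_complexity_cor_cvx}) is for.

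Concretely, I would identify the parameters entering Corollary~\ref{cor:app_complexity_cor_cvx}: $a=2R_0^2=2\|x^0-x^*\|^2$, $c_1=2D_1'+\tfrac{4B'D_2}{3\rho}=\tfrac{4\sigma_*^2}{n}$, and $c_2=4LD_3=8eL(\tau-1)(2\sigma_*^2+3(\tau-1)\zeta_*^2)$, and $h=1/\gamma_0$. Because $H=0$, $B'=0$ and $\sigma_0^2\equiv 0$, the auxiliary terms $b_1=4LH\EE\sigma_0^2$ and $b_2=\tfrac{8B'\EE\sigma_0^2}{3\rho}$ both vanish, so the five‑way minimum defining $\gamma$ in Corollary~\ref{cor:app_complexity_cor_cvx} collapses to $\gamma=\min\bigl\{\tfrac1h,\ \sqrt{\tfrac{a}{c_1 K}},\ \sqrt[3]{\tfrac{a}{c_2 K}}\bigr\}$, which is precisely the stepsize written in the statement (one checks $\sqrt{a/(c_1K)}=\sqrt{nR_0^2/(2\sigma_*^2 K)}$ and $\sqrt[3]{a/(c_2K)}=\sqrt[3]{R_0^2/(4eL(\tau-1)(2\sigma_*^2+3(\tau-1)\zeta_*^2)K)}$). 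Substituting this $\gamma$ into the bound gives $\EE[f(\overline{x}^K)-f(x^*)]=\cO\bigl(\tfrac{ha}{K}+\sqrt{\tfrac{ac_1}{K}}+\tfrac{\sqrt[3]{a^2 c_2}}{K^{2/3}}\bigr)$, again straight from Corollary~\ref{cor:app_complexity_cor_cvx}.

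The last step is to unfold $h=1/\gamma_0=\max\bigl\{\tfrac{8\cL}{n}+4L,\ 4\sqrt{2eL(\tau-1)(3L(\tau-1)+4\cL)}\bigr\}$ into a clean form. Using $\sqrt{L(\tau-1)\bigl(L(\tau-1)+\cL\bigr)}\le L(\tau-1)+\sqrt{L\cL(\tau-1)}$ one gets $h=\cO\bigl(L\tau+\tfrac{\cL}{n}+\sqrt{(\tau-1)L\cL}\bigr)$, so $ha=\cO\bigl((L\tau+\nicefrac{\cL}{n}+\sqrt{(\tau-1)\cL L})R_0^2\bigr)$; the other two terms simplify mechanically to $\sqrt{R_0^2\sigma_*^2/(nK)}$ and $\sqrt[3]{LR_0^4(\tau-1)(\sigma_*^2+(\tau-1)\zeta_*^2)}/K^{2/3}$, yielding \eqref{eq:local_sgd_es_cvx_1}. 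Finally, setting each of the three summands $\le\varepsilon$ separately and taking the max of the resulting lower bounds on $K$ gives the claimed iteration complexity; dividing by $\tau$ (since communication happens once per $\tau$ local steps) gives the communication‑round count. There is no genuine obstacle here — the only place demanding care is the bookkeeping: correctly noting which of the generic constants $b_1,b_2,c_1,c_2$ vanish under $\rho=1,B'=H=0,\sigma_0^2\equiv0$, and performing the $\sqrt{x+y}\le\sqrt{x}+\sqrt{y}$ type simplification of $1/\gamma_0$ without losing or gaining spurious factors of $\tau$.
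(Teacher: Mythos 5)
Your proposal is correct and follows exactly the paper's route: the paper derives this corollary by plugging the parameters from Theorem~\ref{thm:local_sgd_es} (with $B'=0$, $\rho=1$, $H=D_2=0$, $D_1'=\nicefrac{2\sigma_*^2}{n}$, $D_3=2e(\tau-1)(2\sigma_*^2+3(\tau-1)\zeta_*^2)$) into Lemma~\ref{lem:lemma_technical_cvx}, which is precisely your identification $a=2R_0^2$, $c_1=\nicefrac{4\sigma_*^2}{n}$, $c_2=8eL(\tau-1)(2\sigma_*^2+3(\tau-1)\zeta_*^2)$, $b_1=b_2=0$, followed by the same simplification of $1/\gamma_0$ and the same term-by-term inversion for the complexity. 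No gaps.
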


\subsubsection*{$\zeta$-Heterogeneous Data}
Applying Corollary~\ref{cor:const_loop_homo} and Lemma~\ref{lem:local_sgd_es_second_moment} we get the following result.
\begin{theorem}\label{thm:local_sgd_es_homo}
	Assume that $f_i(x)$ is $L$-smooth for $i\in[n]$ and $f_1, \ldots, f_n$ are $\zeta$-heterogeneous and $\mu$-strongly convex. Let Assumption~\ref{ass:expected_smoothness} holds. Then {\tt Local-SGD} satisfies Assumption~\ref{ass:key_assumption} with
	\begin{gather*}
		A = 4\cL,\quad B = 0,\quad F = 4\cL L, \quad D_1 = 2\sigma_*^2 + 2\zeta_*^2,\\
		A' = \frac{4\cL}{n} + 2L,\quad B' = 0,\quad F' = \frac{4\cL L}{n} + 2L^2, \quad D_1' = \frac{2\sigma_*^2}{n},\\
		\sigma_k^2 \equiv 0,\quad \rho = 1,\quad C = 0,\quad G = 0,\quad D_2 = 0,\\
		H = 0,\quad D_3 = 2(\tau-1)\left(2\sigma_*^2 + 2\zeta_*^2 + \frac{\zeta^2}{\gamma\mu}\right)
	\end{gather*}
	with $\gamma$ satisfying
	\begin{eqnarray*}
		\gamma &\le& \min\left\{\frac{1}{\nicefrac{8\cL}{n} + 4L}, \frac{1}{8\sqrt{2L\cL(\tau-1)}}\right\}.
	\end{eqnarray*}
	and for all $K \ge 0$
	\begin{eqnarray}
		\EE\left[f(\overline{x}^K) - f(x^*)\right] &\le& \frac{2\|x^0-x^*\|^2}{\gamma W_K} + 2\gamma\left(\frac{2\sigma_*^2}{n} + \frac{4L\zeta^2(\tau-1)}{\mu} + 8L(\tau-1)\gamma\left(\sigma_*^2+\zeta_*^2\right) \right). \notag
	\end{eqnarray}
	In particular, if $\mu > 0$ then
	\begin{eqnarray}
		\EE\left[f(\overline{x}^K) - f(x^*)\right] &\le& \left(1 - \gamma\mu\right)^K\frac{2\|x^0-x^*\|^2}{\gamma} + 2\gamma\left(\frac{2\sigma_*^2}{n} + \frac{4L\zeta^2(\tau-1)}{\mu} + 8L(\tau-1)\gamma\left(\sigma_*^2+\zeta_*^2\right) \right) \label{eq:local_sgd_es_str_cvx_homo}
	\end{eqnarray}
	and when $\mu = 0$ we have
	\begin{eqnarray}
		\EE\left[f(\overline{x}^K) - f(x^*)\right] &\le& \frac{2\|x^0-x^*\|^2}{\gamma K} + 2\gamma\left(\frac{2\sigma_*^2}{n} + \frac{4L\zeta^2(\tau-1)}{\mu} + 8L(\tau-1)\gamma\left(\sigma_*^2+\zeta_*^2\right) \right). \label{eq:local_sgd_es_cvx_homo}
	\end{eqnarray}
\end{theorem}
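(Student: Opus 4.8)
The plan is to instantiate the general machinery of Section~\ref{sec:data_and_loop} (specifically Corollary~\ref{cor:const_loop_homo}) with the parameters derived from the expected smoothness assumption. First I would verify the unbiasedness condition~\eqref{eq:unbiasedness}: since $g_i^k = \nabla f_{\xi_i^k}(x_i^k)$ and Assumption~\ref{ass:expected_smoothness} guarantees $\EE_{\xi_i}[\nabla f_{\xi_i}(x)] = \nabla f_i(x)$, this holds immediately with $\bar g_i^k = \nabla f_i(x_i^k)$. Next I would read off the parameters of As.~\ref{ass:key_assumption} directly from Lemma~\ref{lem:local_sgd_es_second_moment}: inequality~\eqref{eq:second_moment_local_sgd_es} gives $A = 4\cL$, $B = 0$, $F = 4\cL L$, $D_1 = 2\sigma_*^2 + 2\zeta_*^2$ after taking full expectation; inequality~\eqref{eq:second_moment_local_sgd_es_2} gives $A' = \frac{4\cL}{n} + 2L$ (after a harmless constant relaxation from $4(\frac{2\cL}{n}+L)$), $B' = 0$, $F' = \frac{4\cL L}{n} + 2L^2$, $D_1' = \frac{2\sigma_*^2}{n}$. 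Since the gradient estimator here has no variance-reduction shift, $\sigma_k^2 \equiv 0$, and one may take $\rho = 1$, $C = G = D_2 = 0$ (using the footnote convention that $\frac{2BC}{\rho(1-\rho)}=\frac{2BG}{\rho(1-\rho)}=0$).

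The heart of the argument is to check that the hypotheses of Corollary~\ref{cor:const_loop_homo} (equivalently Lemma~\ref{lem:V_k_lemma_homo}) are met: we need $f_1,\dots,f_n$ to be $\zeta$-heterogeneous and $\mu$-strongly convex, and $\EE[g_i^k\mid x_i^k] = \nabla f_i(x_i^k)$ — all given — and the stepsize bound $\gamma \le \min\{\frac{1}{4(\tau-1)\mu}, \frac{1}{2\sqrt{(\tau-1)F}}, \frac{1}{4\sqrt{2L(\tau-1)A}}\}$ (with $B=0$ the middle and last terms simplify). Plugging in $F = 4\cL L$ and $A = 4\cL$: the middle term is $\frac{1}{2\sqrt{4\cL L(\tau-1)}} = \frac{1}{4\sqrt{\cL L(\tau-1)}}$ and the last is $\frac{1}{4\sqrt{8L\cL(\tau-1)}} = \frac{1}{8\sqrt{2L\cL(\tau-1)}}$; the latter is the binding one, which matches the claimed bound $\gamma \le \min\{\frac{1}{8\cL/n+4L}, \frac{1}{8\sqrt{2L\cL(\tau-1)}}\}$ once one also imposes $\gamma \le \frac{1}{2(A'+CM)} = \frac{1}{2A'} = \frac{1}{8\cL/n + 4L}$ from Theorem~\ref{thm:main_result} (noting $F'/L = \frac{4\cL}{n}+2L \le 2A'$ so the $\frac{L}{F'}$ bound is dominated). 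Then Lemma~\ref{lem:V_k_lemma_homo} yields $H = \frac{4B(\tau-1)\gamma^2(2+\rho)}{\rho} = 0$ and $D_3 = 2(\tau-1)(D_1 + \frac{\zeta^2}{\gamma\mu} + \frac{2BD_2}{\rho}) = 2(\tau-1)(2\sigma_*^2 + 2\zeta_*^2 + \frac{\zeta^2}{\gamma\mu})$, exactly as stated. Finally I would substitute $H=0$, $B' = 0$, $D_2 = 0$ into~\eqref{eq:main_result_const_loop_homo}: the bracket $2\gamma(D_1' + MD_2 + 2L\gamma D_3)$ becomes $2\gamma(\frac{2\sigma_*^2}{n} + 4L\gamma(\tau-1)(2\sigma_*^2+2\zeta_*^2+\frac{\zeta^2}{\gamma\mu}))$; separating the $\frac{\zeta^2}{\gamma\mu}$ term (which cancels a $\gamma$) gives $\frac{8L(\tau-1)\zeta^2}{\mu}$, hence $2\gamma(\frac{2\sigma_*^2}{n} + \frac{4L\zeta^2(\tau-1)}{\mu} + 8L(\tau-1)\gamma(\sigma_*^2+\zeta_*^2))$, matching the displayed bounds~\eqref{eq:local_sgd_es_str_cvx_homo} and~\eqref{eq:local_sgd_es_cvx_homo}; the $T^0$ in~\eqref{eq:main_result_const_loop_homo} reduces to $\|x^0-x^*\|^2$ since $M\gamma^2\sigma_0^2 = 0$.

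I expect the main obstacle to be purely bookkeeping: reconciling the several slightly different constant factors that appear when one relaxes $4(\frac{2\cL}{n}+L)$ to $2(\frac{4\cL}{n}+2L)$, and confirming that the stepsize restriction $\frac{1}{8\sqrt{2L\cL(\tau-1)}}$ genuinely dominates the $\zeta$-heterogeneous version of Lemma~\ref{lem:V_k_lemma_homo}'s three-term minimum rather than, say, the $\frac{1}{4(\tau-1)\mu}$ term (this requires the mild a priori observation $L \ge \mu$, so $\sqrt{2L\cL(\tau-1)} \ge \sqrt{2}(\tau-1)\mu$ once $\cL \ge \mu/(\tau-1)$, and otherwise the faster rate only helps). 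There is also a small subtlety that $D_3$ depends on $\gamma$ through the $\frac{\zeta^2}{\gamma\mu}$ term, so when later deriving the final iteration complexity one must invoke the $D_3 = D_{3,1} + D_{3,2}/\gamma$ branch of Corollary~\ref{cor:app_complexity_cor_str_cvx} — but for the theorem statement itself, which only asserts the one-step-style bounds~\eqref{eq:local_sgd_es_str_cvx_homo}--\eqref{eq:local_sgd_es_cvx_homo}, this does not arise and the substitution into~\eqref{eq:main_result_const_loop_homo} is direct.
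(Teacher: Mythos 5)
Your proposal is correct and follows essentially the same route as the paper, which proves Theorem~\ref{thm:local_sgd_es_homo} precisely by reading the parameters of Assumption~\ref{ass:key_assumption} off Lemma~\ref{lem:local_sgd_es_second_moment} and plugging them into Corollary~\ref{cor:const_loop_homo} (i.e., Lemma~\ref{lem:V_k_lemma_homo} combined with Theorem~\ref{thm:main_result}), with $B=0$ giving $H=0$ and $D_3 = 2(\tau-1)\bigl(2\sigma_*^2+2\zeta_*^2+\frac{\zeta^2}{\gamma\mu}\bigr)$ exactly as you compute. Your parameter identification, stepsize comparison, and final substitution into \eqref{eq:main_result_const_loop_homo} all match the paper's argument (including its treatment of the $\frac{1}{4(\tau-1)\mu}$ term, which the paper likewise absorbs rather than listing separately), so there is nothing to fix.
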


The theorem above together with Lemma~\ref{lem:lemma2_stich} implies the following result.
\begin{corollary}\label{cor:local_sgd_es_str_cvx_homo}
	Let assumptions of Theorem~\ref{thm:local_sgd_es_homo} hold with $\mu > 0$. Then for 
	\begin{eqnarray*}
		\gamma_0 &=& \min\left\{\frac{1}{\nicefrac{8\cL}{n} + 4L}, \frac{1}{8\sqrt{2L\cL(\tau-1)}}\right\},\\
		\gamma &=& \min\left\{\gamma_0,\frac{\ln\left(\max\left\{2, \min\left\{\nicefrac{\|x^0 - x^*\|^2\mu^2K^2}{\left(\nicefrac{2\sigma_*^2}{n}+\nicefrac{4L\zeta^2(\tau-1)}{\mu}\right)}, \nicefrac{\|x^0 - x^*\|^2\mu^3K^3}{8L(\tau-1)\left(\sigma_*^2+\zeta_*^2\right) }\right\}\right\}\right)}{\mu K}\right\}
	\end{eqnarray*}
	for all $K$ such that 
	\begin{eqnarray*}
		\text{either} && \frac{\ln\left(\max\left\{2, \min\left\{\nicefrac{\|x^0 - x^*\|^2\mu^2K^2}{\left(\nicefrac{2\sigma_*^2}{n}+\nicefrac{4L\zeta^2(\tau-1)}{\mu}\right)}, \nicefrac{\|x^0 - x^*\|^2\mu^3K^3}{8L(\tau-1)\left(\sigma_*^2+\zeta_*^2\right) }\right\}\right\}\right)}{ K}\le 1\\
		\text{or} && \gamma_0 \le \frac{\ln\left(\max\left\{2, \min\left\{\nicefrac{\|x^0 - x^*\|^2\mu^2K^2}{\left(\nicefrac{2\sigma_*^2}{n}+\nicefrac{4L\zeta^2(\tau-1)}{\mu}\right)}, \nicefrac{\|x^0 - x^*\|^2\mu^3K^3}{8L(\tau-1)\left(\sigma_*^2+\zeta_*^2\right) }\right\}\right\}\right)}{\mu K}
	\end{eqnarray*}
	we have that $\EE\left[f(\overline{x}^K)-f(x^*)\right]$ is of the order
	\begin{equation}
		 \widetilde\cO\left(\left(L + \nicefrac{\cL}{n}+ \sqrt{(\tau-1)\cL L}\right)R_0^2\exp\left(- \frac{\mu}{L + \nicefrac{\cL}{n}+ \sqrt{(\tau-1)\cL L}} K\right) + \frac{\sigma_*^2}{n\mu K} + \frac{L\zeta^2(\tau-1)}{\mu^2 K} + \frac{L(\tau-1)\left(\sigma_*^2+\zeta_*^2\right)}{\mu^2 K^2}\right),\notag
	\end{equation}
	where $R_0 = \|x^0 - x^*\|$. That is, to achieve $\EE\left[f(\overline{x}^K)-f(x^*)\right] \le \varepsilon$ in this case {\tt Local-SGD} requires
	\begin{equation*}
		\widetilde{\cO}\left(\frac{L}{\mu}+\frac{\cL}{n\mu}+\frac{\sqrt{(\tau-1)\cL L}}{\mu} + \frac{\sigma_*^2}{n\mu\varepsilon} + \frac{L\zeta^2(\tau-1)}{\mu^2 \varepsilon} + \sqrt{\frac{L(\tau-1)\left(\sigma_*^2 + \zeta_*^2\right)}{\mu^2\varepsilon}}\right)
	\end{equation*}
	iterations/oracle calls per node and $\tau$ times less communication rounds.	
\end{corollary}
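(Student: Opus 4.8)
The strategy is to take the explicit convergence guarantee \eqref{eq:local_sgd_es_str_cvx_homo} provided by Theorem~\ref{thm:local_sgd_es_homo} and feed it into the generic stepsize-tuning lemma (Lemma~\ref{lem:lemma2_stich}, referenced in the statement) that converts a bound of the form $\theta^K\Phi^0/\gamma + \gamma\Psi^0$ into an $\widetilde{\cO}$ complexity estimate under a decreasing/tuned stepsize rule. Concretely, I would identify the ``initial term'' coefficient $a = 2\|x^0-x^*\|^2 = 2R_0^2$ and the ``neighborhood'' contributions coming from $\Psi^0$: the $\cO(1/K)$-type term with coefficient $c_1 \sim \tfrac{\sigma_*^2}{n} + \tfrac{L\zeta^2(\tau-1)}{\mu}$ (these do not shrink with $\gamma$ since $D_3 = D_{3,1} + D_{3,2}/\gamma$ has a $\zeta^2/(\gamma\mu)$ piece, so the $\gamma^2 D_3$ contribution contains a $\gamma$-independent part $\gamma\zeta^2/\mu$) and the $\cO(1/K^2)$-type term with coefficient $c_2 \sim L(\tau-1)(\sigma_*^2 + \zeta_*^2)$ coming from the genuinely $\gamma$-independent part of $D_3$. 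This is precisely the ``$D_3 = D_{3,1} + D_{3,2}/\gamma$'' branch of Corollary~\ref{cor:app_complexity_cor_str_cvx}(2), so most of the work is bookkeeping: matching the parameters of Theorem~\ref{thm:local_sgd_es_homo} against that corollary.

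First I would record the upper bound on $\gamma$ coming from Theorem~\ref{thm:local_sgd_es_homo}, namely $1/h$ with $h = \nicefrac{8\cL}{n} + 4L + 8\sqrt{2L\cL(\tau-1)}$, which up to constants is $\cO\!\left(L + \cL/n + \sqrt{(\tau-1)\cL L}\right)$ — this is where the $\tfrac{L}{\mu} + \tfrac{\cL}{n\mu} + \tfrac{\sqrt{(\tau-1)\cL L}}{\mu}$ term in the final complexity will come from, via the $\tfrac{h}{\mu}\log(\cdot)$ contribution. Then I would plug $c_1 = \tfrac{2\sigma_*^2}{n} + \tfrac{4L\zeta^2(\tau-1)}{\mu}$ and $c_2 = 8L(\tau-1)(\sigma_*^2+\zeta_*^2)$ (the precise constants following from $2L D_{3,2} = 2L\cdot 2(\tau-1)\zeta^2/\mu$ absorbed into the $\cO(1/K)$ term and $4L D_{3,1}$ into the $\cO(1/K^2)$ term) into the generic rate. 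The tuned stepsize $\gamma = \min\{1/h, \gamma_K\}$ with $\gamma_K$ logarithmic in $K$ then yields, via Lemma~\ref{lem:lemma2_stich}, the stated $\widetilde{\cO}$ rate $(\cdot)R_0^2\exp(-\mu K/(\cdot)) + \tfrac{\sigma_*^2}{n\mu K} + \tfrac{L\zeta^2(\tau-1)}{\mu^2 K} + \tfrac{L(\tau-1)(\sigma_*^2+\zeta_*^2)}{\mu^2 K^2}$, and inverting this to solve for $K$ achieving $\varepsilon$-accuracy gives the claimed iteration count $\widetilde{\cO}\big(\tfrac{L}{\mu}+\tfrac{\cL}{n\mu}+\tfrac{\sqrt{(\tau-1)\cL L}}{\mu} + \tfrac{\sigma_*^2}{n\mu\varepsilon} + \tfrac{L\zeta^2(\tau-1)}{\mu^2\varepsilon} + \sqrt{\tfrac{L(\tau-1)(\sigma_*^2+\zeta_*^2)}{\mu^2\varepsilon}}\big)$. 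The ``$\tau$ times fewer communication rounds'' clause is immediate from the definition of the constant local loop: one communication per $\tau$ local iterations.

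The only genuinely delicate point — and the main obstacle — is the correct handling of the $\gamma$-dependence buried in $D_3$. Because $D_3 = 2(\tau-1)\big(2\sigma_*^2 + 2\zeta_*^2 + \tfrac{\zeta^2}{\gamma\mu}\big)$, the term $2L\gamma^2 D_3$ in $\Psi^0\cdot\gamma$ splits into a genuinely $\gamma^2$-proportional piece (contributing to $c_2$) and a $\gamma$-proportional piece $\sim \tfrac{L(\tau-1)\zeta^2}{\mu}\gamma$ (contributing to $c_1$, i.e.\ behaving like the variance term $\sigma_*^2/n$). One must be careful not to double-count and to verify that the extra upper bounds on $\gamma$ needed to make the $D_3,H$ estimates valid (from Lemma~\ref{lem:V_k_lemma_homo}, i.e.\ $\gamma \le \tfrac{1}{4(\tau-1)\mu}$ etc.) are consistent with, and dominated by, the $1/h$ bound — specifically that $\tfrac{1}{8\sqrt{2L\cL(\tau-1)}} \le \tfrac{1}{4\sqrt{2L(\tau-1)\cdot 4\cL}}$ type comparisons hold, so that $1/h$ with the stated $h$ is indeed the binding constraint. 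Once this accounting is pinned down, the rest is a direct invocation of Corollary~\ref{cor:app_complexity_cor_str_cvx} and Lemma~\ref{lem:lemma2_stich} with no new ideas required.
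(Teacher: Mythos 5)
Your plan is correct and is essentially the paper's own proof: Theorem~\ref{thm:local_sgd_es_homo} already delivers the bound $\frac{a}{\gamma}(1-\gamma\mu)^K + c_1\gamma + c_2\gamma^2$ with $a=2R_0^2$, $c_1$ collecting $D_1'$ plus the $\nicefrac{\zeta^2}{\gamma\mu}$ part of $D_3$, and $c_2$ the $\gamma$-independent part of $D_3$, after which Lemma~\ref{lem:lemma2_stich} (equivalently the $D_3=D_{3,1}+\nicefrac{D_{3,2}}{\gamma}$ branch of Corollary~\ref{cor:app_complexity_cor_str_cvx}) with $h\asymp L+\nicefrac{\cL}{n}+\sqrt{(\tau-1)\cL L}$ gives exactly the stated rate and complexity. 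Your bookkeeping of the split of $2L\gamma^2 D_3$ into $\cO(\gamma)$ and $\cO(\gamma^2)$ pieces matches the paper's treatment, and the constant-factor discrepancies you allow are absorbed by the $\widetilde{\cO}$ notation.
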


Combining Theorem~\ref{thm:local_sgd_es_homo} and Lemma~\ref{lem:lemma_technical_cvx} we derive the following result for the convergence of {\tt Local-SGD} in the case when $\mu = 0$.
\begin{corollary}
	\label{cor:local_sgd_es_cvx_homo}
	Let assumptions of Theorem~\ref{thm:local_sgd_es_homo} hold with $\mu = 0$. Then for
	\begin{eqnarray*}
	\gamma_0 &=& \min\left\{\frac{1}{\nicefrac{8\cL}{n} + 4L}, \frac{1}{8\sqrt{2L\cL(\tau-1)}}\right\},\\
		\gamma &=& \min\left\{\gamma_0, \sqrt{\frac{R_0^2}{\left(\nicefrac{2\sigma_*^2}{n} + \nicefrac{4L\zeta^2(\tau-1)}{\mu}\right) K}}, \sqrt[3]{\frac{R_0^2}{8L(\tau-1)\left(\sigma_*^2+\zeta_*^2\right) K}}\right\},
	\end{eqnarray*}
	where $R_0 = \|x^0 - x^*\|$, we have that
	\begin{equation}
		\EE\left[f(\overline{x}^K)-f(x^*)\right] = \cO\left(\frac{\left(L + \nicefrac{\cL}{n}+ \sqrt{(\tau-1)\cL L}\right)R_0^2}{K} + \sqrt{\frac{R_0^2\left(\nicefrac{\sigma_*^2}{n} + \nicefrac{L\zeta^2(\tau-1)}{\mu}\right)}{K}} + \frac{\sqrt[3]{LR_0^4(\tau-1)\left(\sigma_*^2+\zeta_*^2\right)}}{K^{\nicefrac{2}{3}}} \right).\notag
	\end{equation}
	That is, to achieve $\EE\left[f(\overline{x}^K)-f(x^*)\right] \le \varepsilon$ in this case {\tt Local-SGD} requires
	\begin{equation*}
		\cO\left(\frac{\left(L + \nicefrac{\cL}{n}+ \sqrt{(\tau-1)\cL L}\right)R_0^2}{\varepsilon} + \frac{\left(\nicefrac{\sigma_*^2}{n} + \nicefrac{L\zeta^2(\tau-1)}{\mu}\right)R_0^2}{\varepsilon^2} + \frac{R_0^2\sqrt{L(\tau-1)\left(\sigma_*^2+\zeta_*^2\right)}}{\varepsilon^{\nicefrac{3}{2}}}\right)
	\end{equation*}
	iterations/oracle calls per node and $\tau$ times less communication rounds.
\end{corollary}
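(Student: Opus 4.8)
The plan is to obtain Corollary~\ref{cor:local_sgd_es_cvx_homo} as an immediate consequence of the convergence estimate~\eqref{eq:local_sgd_es_cvx_homo} already proved in Theorem~\ref{thm:local_sgd_es_homo}, by feeding it into the elementary weakly convex stepsize-tuning lemma (Lemma~\ref{lem:lemma_technical_cvx}) and then converting rate into complexity. First I would put the right-hand side of~\eqref{eq:local_sgd_es_cvx_homo} into the canonical form $\EE[f(\overline{x}^K)-f(x^*)]\le \tfrac{a}{\gamma K}+\gamma c_1+\gamma^2 c_2$, valid for $0<\gamma\le\tfrac1h$, by reading off $a=2R_0^2$, $c_1=\tfrac{4\sigma_*^2}{n}+\tfrac{8L(\tau-1)\zeta^2}{\mu}$, $c_2=16L(\tau-1)(\sigma_*^2+\zeta_*^2)$ and $\tfrac1h=\min\{\tfrac{1}{\nicefrac{8\cL}{n}+4L},\ \tfrac{1}{8\sqrt{2L\cL(\tau-1)}}\}$, so that $h=\Theta(L+\tfrac{\cL}{n}+\sqrt{(\tau-1)\cL L})$. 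Because $B'=0$ and $H=0$ for {\tt Local-SGD}, the $\sigma_0^2$-dependent parts of $\Phi^0$ and the constants $b_1,b_2$ in the general Corollary~\ref{cor:app_complexity_cor_cvx} all vanish, which is why only three terms survive in the final rate.

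Second, I would invoke Lemma~\ref{lem:lemma_technical_cvx}: for a bound of that shape the balanced choice $\gamma=\min\{\tfrac1h,\ \sqrt{\tfrac{a}{c_1K}},\ \sqrt[3]{\tfrac{a}{c_2K}}\}$ — which is exactly the stepsize displayed in the corollary (indeed $a/c_1=R_0^2/(\tfrac{2\sigma_*^2}{n}+\tfrac{4L\zeta^2(\tau-1)}{\mu})$ and $a/c_2=R_0^2/(8L(\tau-1)(\sigma_*^2+\zeta_*^2))$) — gives $\EE[f(\overline{x}^K)-f(x^*)]=\cO(\tfrac{ha}{K}+\sqrt{\tfrac{ac_1}{K}}+\tfrac{\sqrt[3]{a^2c_2}}{K^{2/3}})$. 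Plugging in $a,c_1,c_2,h$ reproduces the three displayed terms. Requiring each term to be at most $\varepsilon$ and solving for $K$ separately then yields the additive requirements $K=\cO(hR_0^2/\varepsilon)$, $K=\cO(R_0^2c_1/\varepsilon^2)$ and $K=\cO(R_0^2\sqrt{c_2}/\varepsilon^{3/2})$, whose sum is the claimed iteration complexity; the factor-$\tau$ saving in communication rounds is immediate since communication is triggered once every $\tau$ local steps.

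The only point needing genuine care — the main obstacle, modest as it is — is the $\gamma$-dependence hidden inside $D_3$. From Theorem~\ref{thm:local_sgd_es_homo} one has $D_3=2(\tau-1)(2\sigma_*^2+2\zeta_*^2+\tfrac{\zeta^2}{\gamma\mu})=:D_{3,1}+\tfrac{D_{3,2}}{\gamma}$, so the summand $2L\gamma D_3$ that enters $\Psi^0$ equals $2L\gamma D_{3,1}+2LD_{3,2}$; after the outer multiplication by $\gamma$ in $\gamma\Psi^0$ the $D_{3,2}$ part becomes a genuinely $\gamma$-linear contribution of order $\tfrac{L(\tau-1)\zeta^2}{\mu}\,\gamma$, while only $D_{3,1}$ survives at order $\gamma^2$. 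Hence the $\tfrac{\zeta^2}{\gamma\mu}$ piece must be grouped with $c_1$ and not with $c_2$ — equivalently, one uses the second branch of Corollary~\ref{cor:app_complexity_cor_cvx} — which is exactly why $\tfrac{L\zeta^2(\tau-1)}{\mu}$ lands in the $\varepsilon^{-2}$ term of the complexity rather than the $\varepsilon^{-3/2}$ term. Once this attribution is respected, the corollary is a mechanical substitution into Lemma~\ref{lem:lemma_technical_cvx} and no further estimates are required.
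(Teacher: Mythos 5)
Your proposal is correct and follows essentially the paper's own route: the paper likewise obtains this corollary by plugging the bound \eqref{eq:local_sgd_es_cvx_homo} of Theorem~\ref{thm:local_sgd_es_homo} (with $a=2R_0^2$, $c_1=\nicefrac{4\sigma_*^2}{n}+\nicefrac{8L\zeta^2(\tau-1)}{\mu}$, $c_2=16L(\tau-1)(\sigma_*^2+\zeta_*^2)$, $b_1=b_2=0$ since $B'=H=0$) into Lemma~\ref{lem:lemma_technical_cvx} and converting the rate into a complexity term by term. Your remark on keeping the $\nicefrac{\zeta^2}{\gamma\mu}$ part of $D_3$ in the $\gamma$-linear group (the $D_3=D_{3,1}+\nicefrac{D_{3,2}}{\gamma}$ branch) matches exactly how the theorem's bound is already organized, so nothing further is needed.
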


\subsection{{\tt Local-SVRG} \label{sec:llsvrg}}

As an alternative to {\tt Local-SGD} when the local objective is of a finite sum structure~\eqref{eq:f_i_sum}, we propose {\tt L-SVRG}~\cite{hofmann2015variance, kovalev2019don} stochastic gradient as a local direction instead of the plain stochastic gradient. Specifically, we consider 
\[
a_i^k \eqdef  \nabla f_{i,j_i}(x_i^k) -\nabla f_{i,j_i}(w_i^k) + \nabla f_{i}(w_i^k), \qquad  b_i^k = 0,
\]
where index $1\leq j_i \leq m$ is selected uniformly at random and $w_i^k$ is a particular iterate from the local history updated as follows: 
\[
w_i^{k+1} =  \begin{cases}
            x^{k}_i & \text{w.p. } \psvrg \\
            w_i^k & \text{w.p. } 1- \psvrg.
            \end{cases}
\]

Next, we will assume that the local functions $f_{i,j}$ are $\max L_{ij}$-smooth.\footnote{It is easy to see that we must have $\max L_{ij}\geq L \geq \frac1m\max L_{ij}$.} Lastly, we will equip the mentioned method with the fixed local loop. The formal statement of the described instance of~\eqref{eq:local_sgd_def} is given as Algorithm~\ref{alg:local_svrg}.

\begin{algorithm}[h]
   \caption{{\tt Local-SVRG}}\label{alg:local_svrg}
\begin{algorithmic}[1]
   \Require learning rate $\gamma>0$, initial vector $x^0 \in \R^d$, communication period $\tau \ge 1$
	\For{$k=0,1,\dotsc$}
       	\For{$i=1,\dotsc,n$ in parallel}
            \State Choose $j_i$ uniformly at random, independently across nodes
            \State $g_i^k = \nabla f_{i,j_i}(x_i^k) -\nabla f_{i,j_i}(w_i^k) + \nabla f_{i}(w_i^k)$
            \State $w_i^{k+1} =  \begin{cases}
            x^{k}_i & \text{w.p. } \psvrg \\
            w_i^k & \text{w.p. } 1- \psvrg
            \end{cases}$
            \If{$k+1 \mod \tau = 0$}
            \State $x_i^{k+1} = x^{k+1} = \frac{1}{n}\sum\limits_{i=1}^n\left(x_i^k - \gamma g_i^k\right)$ \Comment{averaging}
            \Else
            \State $x_i^{k+1} = x_i^k - \gamma g_i^k$ \Comment{local update}
            \EndIf
        \EndFor
   \EndFor
\end{algorithmic}
\end{algorithm}

Let us next provide the details on the convergence rate. In order to do so, let us identify the parameters of Assumption~\ref{ass:sigma_k_original}.

\begin{proposition}[see \cite{gorbunov2019unified}]\label{prop:local_svrg} 
Gradient estimator $a_i^k$ satisfies Assumption~\ref{ass:sigma_k_original} with parameters $A_i = 2\max L_{ij}, B_i=2, D_{1,i} = 0, \rho_i = \psvrg, C_i = \max L_{ij} \psvrg, D_{2,i}= 0$, and $\sigma_{i,k}^2 = \frac{1}{m}\sum\limits_{j=1}^m\|\nabla f_{ij}(w_i^k)-\nabla f_{ij}(x^*)\|^2$.
\end{proposition}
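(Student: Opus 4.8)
The claim in Proposition~\ref{prop:local_svrg} is the ``expected smoothness'' recursion for the loopless SVRG gradient estimator with a fixed-length local loop; it is taken essentially verbatim from the general framework of~\cite{gorbunov2019unified}, so the plan is to reproduce the two short arguments that verify \eqref{eq:second_moment_bound}-type bound for $a_i^k$ and the recursion for $\sigma_{i,k}^2$. First I would note that $a_i^k$ is unbiased: $\EE_k[a_i^k] = \nabla f_{i,j_i}(x_i^k) - \nabla f_{i,j_i}(w_i^k) + \nabla f_i(w_i^k)$ averaged over $j_i$ uniform gives $\nabla f_i(x_i^k)$, so the hypothesis $\EE_k[a_i^k] = \nabla f_i(x_i^k)$ of As.~\ref{ass:sigma_k_original} holds. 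Then to bound $\EE_k[\|a_i^k - \nabla f_i(x^*)\|^2]$ I would write $a_i^k - \nabla f_i(x^*) = \big(\nabla f_{i,j_i}(x_i^k) - \nabla f_{i,j_i}(x^*)\big) - \big(\nabla f_{i,j_i}(w_i^k) - \nabla f_{i,j_i}(x^*) - (\nabla f_i(w_i^k) - \nabla f_i(x^*))\big)$, apply $\|u+v\|^2 \le 2\|u\|^2 + 2\|v\|^2$, and handle the two pieces separately. For the first piece, $\EE_{j_i}\|\nabla f_{i,j_i}(x_i^k) - \nabla f_{i,j_i}(x^*)\|^2 \le 2\max L_{ij} D_{f_i}(x_i^k, x^*)$ by convexity and $\max L_{ij}$-smoothness of each $f_{i,j}$ (inequality~\eqref{eq:L_smoothness_cor} applied summand-wise and averaged, together with $\frac1m\sum_j D_{f_{i,j}}(x,x^*) = D_{f_i}(x,x^*)$ since the linear terms telescope). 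For the second piece, it is a centered random variable (expectation zero over $j_i$), so its second moment is bounded by $\EE_{j_i}\|\nabla f_{i,j_i}(w_i^k) - \nabla f_{i,j_i}(x^*)\|^2 = \frac1m\sum_{j=1}^m \|\nabla f_{ij}(w_i^k) - \nabla f_{ij}(x^*)\|^2 =: \sigma_{i,k}^2$. Collecting terms gives $\EE_k\|a_i^k - \nabla f_i(x^*)\|^2 \le 4\max L_{ij}\, D_{f_i}(x_i^k,x^*) + 2\sigma_{i,k}^2$, i.e.\ $A_i = 2\max L_{ij}$, $B_i = 2$, $D_{1,i} = 0$.

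Next I would verify the recursion for $\sigma_{i,k+1}^2$. By the update rule, $w_i^{k+1} = x_i^k$ with probability $\psvrg$ and $w_i^{k+1} = w_i^k$ otherwise, independently of $j_i$. Hence $\EE_k[\sigma_{i,k+1}^2] = \psvrg \cdot \frac1m\sum_j \|\nabla f_{ij}(x_i^k) - \nabla f_{ij}(x^*)\|^2 + (1-\psvrg)\sigma_{i,k}^2$. The first sum is again bounded, summand-wise via~\eqref{eq:L_smoothness_cor} and $\max L_{ij}$-smoothness, by $2\max L_{ij}\,\frac1m\sum_j D_{f_{ij}}(x_i^k,x^*) = 2\max L_{ij}\, D_{f_i}(x_i^k,x^*)$. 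Therefore $\EE_k[\sigma_{i,k+1}^2] \le (1-\psvrg)\sigma_{i,k}^2 + 2\psvrg \max L_{ij}\, D_{f_i}(x_i^k,x^*)$, which is exactly the second inequality of As.~\ref{ass:sigma_k_original} with $\rho_i = \psvrg$, $C_i = \max L_{ij}\,\psvrg$, $D_{2,i} = 0$. Finally I would check $\sigma_{i,0}^2 \ge 0$ (immediate) and that nothing in this derivation used the local loop structure, so the conclusion is independent of $\tau$.

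The only mild subtlety — and the step I would be most careful about — is the ``centering'' estimate for the second piece: one must observe that $\nabla f_{i,j_i}(w_i^k) - \nabla f_i(w_i^k)$ has zero conditional expectation given $w_i^k$ (hence given the history), so $\EE_k\|\nabla f_{i,j_i}(w_i^k) - \nabla f_i(w_i^k) - (\nabla f_{i,j_i}(x^*) - \nabla f_i(x^*))\|^2 \le \EE_k\|\nabla f_{i,j_i}(w_i^k) - \nabla f_{i,j_i}(x^*)\|^2$ using $\EE\|X - \EE X\|^2 \le \EE\|X\|^2$; this is where the factor structure ($B_i = 2$ rather than something larger) comes from. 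Everything else is a routine application of convexity plus smoothness of the summands and the law of total expectation, and since the proposition is explicitly credited to~\cite{gorbunov2019unified}, I would simply cite that reference for the detailed bookkeeping and present the short self-contained verification above.
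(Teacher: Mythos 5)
Your verification is correct and follows exactly the standard route the paper relies on (and which its own related derivations, e.g.\ Lemma~\ref{lem:small_lemma_local_svrg}, also use): unbiasedness, the two-term decomposition with $\|u+v\|^2\le 2\|u\|^2+2\|v\|^2$, the bound $\|\nabla f_{ij}(x)-\nabla f_{ij}(x^*)\|^2\le 2\max L_{ij} D_{f_{ij}}(x,x^*)$ averaged over $j$, the variance inequality \eqref{eq:variance_decomposition} for the $w_i^k$ term, and the Bernoulli update of $w_i^{k+1}$ for the $\sigma_{i,k}^2$ recursion, yielding precisely $A_i=2\max L_{ij}$, $B_i=2$, $D_{1,i}=0$, $\rho_i=\psvrg$, $C_i=\max L_{ij}\,\psvrg$, $D_{2,i}=0$. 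The paper itself gives no in-text proof and simply cites \cite{gorbunov2019unified}, whose argument your write-up reproduces faithfully, so there is nothing to add.
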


\subsubsection{$\zeta$-Heterogeneous Data}
It remains to use Lemma~\ref{lem:local_solver} along with Corollary~\ref{cor:const_loop_homo} to recover all parameters of Assumption~\ref{ass:key_assumption} and obtain a convergence rate of Algorithm~\ref{alg:local_svrg} in $\zeta$-heterogeneous case.
\begin{theorem}\label{thm:local_svrg_homo}
	Assume that $f_i(x)$ is $\mu$-strongly convex and $L$-smooth for $i\in[n]$ and $f_1, \ldots, f_n$ are $\zeta$-heterogeneous, convex and $\max L_{ij}$-smooth. Then {\tt Local-SVRG} satisfies Assumption~\ref{ass:key_assumption} with
	\begin{gather*}
		A = 8\max L_{ij},\quad B = 2,\quad F = 8L\max L_{ij}, \quad D_1 = 2\zeta_*^2,\\
		A' = \frac{4 \max L_{ij}}{n} + L,\quad B' = \frac{1}{n},\quad F' = \frac{4L \max L_{ij}}{n} + 2L^2, \quad D_1' = 0,\\
		\sigma_k^2 = \frac{4}{nm}\sum\limits_{i=1}^n\sum\limits_{j=1}^m\|\nabla f_{ij}(w_i^k) - \nabla f_{ij}(x^*)\|^2,\quad \rho = q,\quad C = 8q\max L_{ij},\quad G = 4qL\max L_{ij},\quad D_2 = 0,\\
		H = \frac{8(\tau-1)(2+q)\gamma^2}{q},\quad D_3 = 2(\tau-1)\left(2\zeta_*^2 + \frac{\zeta^2}{\gamma\mu}\right)
	\end{gather*}
	with $\gamma$ satisfying
	\begin{eqnarray*}
		\gamma &\le& \min\left\{\frac{1}{2\left(\nicefrac{44\max L_{ij}}{n}+L\right)}, \frac{1}{16\sqrt{L\max L_{ij}(\tau-1)\left(1+\nicefrac{4}{(1-q)}\right)}}\right\}.
	\end{eqnarray*}
	and for all $K \ge 0$
	\begin{eqnarray}
		\EE\left[f(\overline{x}^K) - f(x^*)\right] &\le& \frac{\Phi^0}{\gamma W_K} + 8L(\tau-1)\gamma\left(\frac{\zeta^2}{\mu}+2\gamma\zeta_*^2\right), \notag
	\end{eqnarray}
	where $\Phi^0 = 2\|x^0 - x^*\|^2 +   \frac{8}{3nq}\gamma^2 \sigma_0^2 + \frac{32L(\tau-1)(2+q)\gamma^3}{q}\sigma_0^2$. In particular, if $\mu > 0$ then
	\begin{eqnarray}
		\EE\left[f(\overline{x}^K) - f(x^*)\right] &\le& \left(1 - \min\left\{\gamma\mu,\frac{q}{4}\right\}\right)^K\frac{\Phi^0}{\gamma} + 8L(\tau-1)\gamma\left(\frac{\zeta^2}{\mu}+2\gamma\zeta_*^2\right) \label{eq:local_svrg_str_cvx_homo}
	\end{eqnarray}
	and when $\mu = 0$ we have
	\begin{eqnarray}
		\EE\left[f(\overline{x}^K) - f(x^*)\right] &\le& \frac{\Phi^0}{\gamma K} + 8L(\tau-1)\gamma\left(\frac{\zeta^2}{\mu}+2\gamma\zeta_*^2\right). \label{eq:local_svrg_cvx_homo}
	\end{eqnarray}
\end{theorem}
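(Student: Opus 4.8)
\textbf{Proof plan for Theorem~\ref{thm:local_svrg_homo}.}
The plan is to simply assemble three ingredients that have already been established in the excerpt: (i) Proposition~\ref{prop:local_svrg}, which identifies the parameters $A_i,B_i,D_{1,i},\rho_i,C_i,D_{2,i}$ of Assumption~\ref{ass:sigma_k_original} for the {\tt L-SVRG} estimator $a_i^k$ with $b_i^k=0$; (ii) Lemma~\ref{lem:local_solver}, which, given those $a_i^k,b_i^k$ (Case~I of Assumption~\ref{ass:bik}), converts them into the parameters $A,A',B,B',F,F',D_1,D_1',\rho,C,G,D_2$ of Assumption~\ref{ass:key_assumption}; and (iii) Corollary~\ref{cor:const_loop_homo} (the $\zeta$-heterogeneous, fixed-loop analysis), which supplies the remaining parameters $H$ and $D_3$ together with the stepsize restriction, and which packages Theorem~\ref{thm:main_result} into the three displayed convergence bounds. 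So the proof is mostly a matter of bookkeeping: plug the numbers through in the right order.

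First I would invoke Proposition~\ref{prop:local_svrg} to record $A_i=2\max L_{ij}$, $B_i=2$, $D_{1,i}=0$, $\rho_i=\psvrg$, $C_i=\psvrg\max L_{ij}$, $D_{2,i}=0$, with $\sigma_{i,k}^2=\frac1m\sum_{j=1}^m\|\nabla f_{ij}(w_i^k)-\nabla f_{ij}(x^*)\|^2$. Then I would feed these, together with $b_i^k=0$ (Case~I), into Lemma~\ref{lem:local_solver}. Since $\max_i A_i = 2\max L_{ij}$ and $\max_i\{B_iC_i\}=2\psvrg\max L_{ij}$, the lemma yields $A=8\max L_{ij}$, $B=2$, $F=8L\max L_{ij}$, $D_1=\frac2n\sum_i(D_{1,i}+\|\nabla f_i(x^*)\|^2)=2\zeta_*^2$ (using the definition $\zeta_*^2=\frac1n\sum_i\|\nabla f_i(x^*)\|^2$), $B'=\frac1n$, $F'=\frac{2L\max L_{ij}}{n}+2L^2$ — here I would note the discrepancy with the stated $F'=\frac{4L\max L_{ij}}{n}+2L^2$, which is a harmless loosening and I would simply keep the larger value — $D_1'=\frac1{n^2}\sum_i D_{1,i}=0$, $A'=\frac{2\max L_{ij}}{n}+L$ (again loosened to $\frac{4\max L_{ij}}{n}+L$ to match the statement), $\rho=\psvrg$, $G=CL/2=4\psvrg L\max L_{ij}$, $D_2=\frac2n\sum_i B_iD_{2,i}=0$, and $C=4\max_i\{B_iC_i\}=8\psvrg\max L_{ij}$. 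I would also note that $\sigma_k^2$ as defined in Lemma~\ref{lem:local_solver} is $\frac2n\sum_i B_i\sigma_{i,k}^2=\frac4{nm}\sum_{i,j}\|\nabla f_{ij}(w_i^k)-\nabla f_{ij}(x^*)\|^2$, as claimed.

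Next I would apply Corollary~\ref{cor:const_loop_homo} with these parameters. Its formula $H=\frac{4B(\tau-1)\gamma^2(2+\rho)}{\rho}=\frac{8(\tau-1)(2+\psvrg)\gamma^2}{\psvrg}$ and $D_3=2(\tau-1)\big(D_1+\frac{\zeta^2}{\gamma\mu}+\frac{2BD_2}{\rho}\big)=2(\tau-1)\big(2\zeta_*^2+\frac{\zeta^2}{\gamma\mu}\big)$ match the statement. The stepsize bound of the corollary is the conjunction of $\gamma\le\min\{\frac1{2(A'+CM)},\frac{L}{F'+GM}\}$ with $M=\frac{4B'}{3\rho}=\frac4{3n\psvrg}$, and $\gamma\le\min\{\frac1{4(\tau-1)\mu},\frac1{2\sqrt{(\tau-1)(F+\frac{2BG}{\rho(1-\rho)})}},\frac1{4\sqrt{2L(\tau-1)(A+\frac{2BC}{\rho(1-\rho)})}}\}$; substituting $A'+CM=\frac{4\max L_{ij}}{n}+L+\frac{32\max L_{ij}}{3n}$ gives the first displayed bound $\frac1{2(\nicefrac{44\max L_{ij}}{n}+L)}$ (after mild rounding), and $F+\frac{2BG}{\rho(1-\rho)}$, $A+\frac{2BC}{\rho(1-\rho)}$ are both $\Theta\big(L\max L_{ij}(1+\frac4{1-\psvrg})\big)$, producing the second displayed bound; I would also check the $\frac1{4(\tau-1)\mu}$ constraint is implied once $\mu\le L$. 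Finally, feeding these into the conclusion of Corollary~\ref{cor:const_loop_homo} gives \eqref{eq:main_result_const_loop_homo} with $2T^0=2\|x^0-x^*\|^2+2M\gamma^2\sigma_0^2=2\|x^0-x^*\|^2+\frac8{3n\psvrg}\gamma^2\sigma_0^2$ and $4LH\gamma\sigma_0^2=\frac{32L(\tau-1)(2+\psvrg)\gamma^3}{\psvrg}\sigma_0^2$, i.e.\ the numerator $\Phi^0$; and $2\gamma(D_1'+MD_2+2L\gamma D_3)=2\gamma(0+0+2L\gamma\cdot2(\tau-1)(2\zeta_*^2+\frac{\zeta^2}{\gamma\mu}))=8L(\tau-1)\gamma(\frac{\zeta^2}{\mu}+2\gamma\zeta_*^2)$, which is exactly the additive term in \eqref{eq:local_svrg_str_cvx_homo}–\eqref{eq:local_svrg_cvx_homo}. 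The $\mu>0$ and $\mu=0$ statements then follow verbatim from \eqref{eq:main_result_1_const_loop_homo} and \eqref{eq:main_result_2_const_loop_homo}. The only mild obstacle is reconciling the constant-factor mismatches between the raw output of Lemma~\ref{lem:local_solver} and the slightly rounded parameters in the theorem statement; these are all in the safe direction (larger $A',F',C,G$ only tighten the stepsize and enlarge $D_3$, so the stated rate remains valid), so no genuine difficulty arises.
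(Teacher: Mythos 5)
Your proposal is correct and takes essentially the same route as the paper, which proves this theorem exactly by chaining Proposition~\ref{prop:local_svrg} (parameters of Assumption~\ref{ass:sigma_k_original} for the local {\tt L-SVRG} estimator), Lemma~\ref{lem:local_solver} with Case~I, and Corollary~\ref{cor:const_loop_homo} for $H$, $D_3$, the stepsize restriction and the three displayed bounds. One small correction: the ``discrepancies'' you flag in $A'$ and $F'$ are not real, since $\max_i A_i = 2\max L_{ij}$ makes Lemma~\ref{lem:local_solver} output $A' = \frac{4\max L_{ij}}{n}+L$ and $F' = \frac{4L\max L_{ij}}{n}+2L^2$ exactly as stated, so no loosening is needed (your handling was safe in any case). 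Also note that the constraint $\gamma \le \frac{1}{4(\tau-1)\mu}$ from Corollary~\ref{cor:const_loop_homo} is not implied merely by $\mu\le L$ but only when $\tau-1 \lesssim L\max L_{ij}\left(1+\frac{4}{1-q}\right)/\mu^2$; this imprecision is shared with the paper's own statement of the theorem, which likewise omits that bound.
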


The theorem above together with Lemma~\ref{lem:lemma2_stich} implies the following result.
\begin{corollary}\label{cor:local_svrg_str_cvx_homo}
	Let assumptions of Theorem~\ref{thm:local_svrg_homo} hold with $\mu > 0$. Then for 
	\begin{eqnarray*}
		\gamma_0 &=& \min\left\{\frac{1}{2\left(\nicefrac{44\max L_{ij}}{n}+L\right)}, \frac{1}{16\sqrt{L\max L_{ij}(\tau-1)\left(1+\nicefrac{4}{(1-q)}\right)}}\right\},\quad q = \frac{1}{m},\quad m > 1,\\
		\widetilde{\Phi}^0 &=& 2\|x^0 - x^*\|^2 +   \frac{8}{3nq}\gamma_0^2 \sigma_0^2 + \frac{32L(\tau-1)(2+q)\gamma_0^3}{q}\sigma_0^2,\\
		\gamma &=& \min\left\{\gamma_0,\frac{\ln\left(\max\left\{2, \min\left\{\nicefrac{\widetilde{\Phi}^0\mu^3K^2}{8L\zeta^2(\tau-1)}, \nicefrac{\widetilde{\Phi}^0\mu^3K^3}{16L(\tau-1)\zeta_*^2 }\right\}\right\}\right)}{\mu K}\right\},
	\end{eqnarray*}
	for all $K$ such that 
	\begin{eqnarray*}
		\text{either} && \frac{\ln\left(\max\left\{2, \min\left\{\nicefrac{\widetilde{\Phi}^0\mu^3K^2}{8L\zeta^2(\tau-1)}, \nicefrac{\widetilde{\Phi}^0\mu^3K^3}{16L(\tau-1)\zeta_*^2 }\right\}\right\}\right)}{ K} \le \frac{1}{m}\\
		\text{or} && \gamma_0 \le \frac{\ln\left(\max\left\{2, \min\left\{\nicefrac{\widetilde{\Phi}^0\mu^3K^2}{8L\zeta^2(\tau-1)}, \nicefrac{\widetilde{\Phi}^0\mu^3K^3}{16L(\tau-1)\zeta_*^2 }\right\}\right\}\right)}{\mu K}
	\end{eqnarray*}
	we have that $\EE\left[f(\overline{x}^K)-f(x^*)\right]$ is of the order
	\begin{equation}
		 \widetilde\cO\left(\frac{\widetilde{\Phi}^0}{\gamma_0}\exp\left(- \min\left\{m^{-1}, \gamma_0\mu\right\} K\right) + \frac{\zeta^2L(\tau-1)}{\mu^2 K} + \frac{L(\tau-1)\zeta_*^2}{\mu^2 K^2}\right).\notag
	\end{equation}
	That is, to achieve $\EE\left[f(\overline{x}^K)-f(x^*)\right] \le \varepsilon$ in this case {\tt Local-SVRG} requires
	\begin{equation*}
		\widetilde{\cO}\left(m + \frac{L}{\mu}+\frac{\max L_{ij}}{n\mu}+\frac{\sqrt{(\tau-1)L\max L_{ij}}}{\mu} + \frac{L\zeta^2(\tau-1)}{\mu^2 \varepsilon} + \sqrt{\frac{L(\tau-1)\zeta_*^2}{\mu^2\varepsilon}}\right)
	\end{equation*}
	iterations/oracle calls per node and $\tau$ times less communication rounds.	
\end{corollary}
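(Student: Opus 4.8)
The plan is to combine Theorem~\ref{thm:main_result} (in the form specialized to the constant-loop $\zeta$-heterogeneous setting, i.e. Corollary~\ref{cor:const_loop_homo}) with the estimate of the tail terms $\Phi^0$ and $\gamma\Psi^0$ and then optimize the stepsize. The key observation is that Corollary~\ref{cor:main_complexity_cor_str_cvx} is nothing but Corollary~\ref{cor:app_complexity_cor_str_cvx}, part~1, translated into the language of a generic upper bound $\tfrac1\nu$ on $\gamma$, $\Upsilon_1$ playing the role of $a$, $\Upsilon_2$ of $c_1$ and $\Upsilon_3$ of $c_2$. So the first step is purely bookkeeping: identify $\nu = h$, $\Upsilon_1 = 2\|x^0-x^*\|^2 + \tfrac{8B'\EE\sigma_0^2}{3\nu^2\rho} + \tfrac{4LH\EE\sigma_0^2}{\nu}$, $\Upsilon_2 = 2D_1' + \tfrac{4B'D_2}{3\rho}$, $\Upsilon_3 = 4LD_3$, and check that the stepsize prescribed in the corollary coincides with $\gamma = \min\{1/\nu, \gamma_K\}$ where $\gamma_K = \tfrac{\ln(\max\{2,\min\{\Upsilon_1\mu^2K^2/\Upsilon_2, \Upsilon_1\mu^3K^3/\Upsilon_3\}\})}{\mu K}$.

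Second, I would invoke the three-term convergence guarantee from Theorem~\ref{thm:main_result} (strongly convex case), namely $\EE[f(\overline{x}^K)] - f(x^*) \le \theta^K \Phi^0 + \gamma\Psi^0$, with $\theta = 1 - \min\{\gamma\mu,\rho/4\}$. Substituting $\Phi^0 = \Upsilon_1/\gamma$ (up to the fact that in $\Upsilon_1$ the quantity $1/\nu$ appears where $\gamma$ appears in $\Phi^0$, which is legitimate because $\gamma \le 1/\nu$ only makes the corresponding terms smaller) and $\Psi^0 = 2(D_1' + \tfrac{4B'}{3\rho}D_2 + 2L\gamma D_3) = \Upsilon_2 + \tfrac{\gamma}{2}\Upsilon_3 \cdot \tfrac{1}{L}\cdot 2L = \Upsilon_2 + 2L\gamma D_3$, one obtains a bound of the shape $\EE[f(\overline{x}^K)] - f(x^*) \lesssim \tfrac{\Upsilon_1}{\gamma}\theta^K + \gamma\Upsilon_2 + \gamma^2\Upsilon_3$. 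Here I am using that $D_3$ does not depend on $\gamma$, which is exactly the standing hypothesis of the corollary, so that the $\gamma^2\Upsilon_3$ term is genuinely quadratic in $\gamma$.

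Third, the stepsize optimization. Plugging $\gamma = \min\{1/\nu, \gamma_K\}$ into the three terms and using the elementary lemma (this is Lemma~\ref{lem:lemma2_stich}, the standard ``$\gamma = \min\{1/h, \ln(\cdots)/(\mu K)\}$'' balancing trick) gives that the linear term contributes $\widetilde\cO(\nu\Upsilon_1 \exp(-\min\{\mu/\nu,\rho\}K))$, the $\gamma\Upsilon_2$ term contributes $\widetilde\cO(\Upsilon_2/(\mu K))$, and the $\gamma^2\Upsilon_3$ term contributes $\widetilde\cO(\Upsilon_3/(\mu^2 K^2))$. Setting each of these $\le \varepsilon$ and inverting yields, respectively, $K \gtrsim (\tfrac1\rho + \tfrac\nu\mu)\log\tfrac{\nu\Upsilon_1}{\varepsilon}$, $K \gtrsim \tfrac{\Upsilon_2}{\mu\varepsilon}$, and $K \gtrsim \sqrt{\tfrac{\Upsilon_3}{\mu^2\varepsilon}}$; taking the maximum (equivalently, the sum) of the three gives the claimed iteration complexity. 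The footnote about parameter $h$ absorbing the extra stepsize restrictions from Table~\ref{tbl:data_loop} is handled simply by noting that tightening $\nu$ upward only strengthens all three bounds in a monotone way, so the statement is unaffected.

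The main obstacle — really the only nontrivial point — is the stepsize-balancing argument in the third step: one must carefully verify that the two regimes inside the $\min$ in the definition of $\gamma_K$ (the one scaling like $\mu^2K^2$, relevant when $\Upsilon_2$ dominates, and the one scaling like $\mu^3K^3$, relevant when $\Upsilon_3$ dominates) together reproduce both the $\Upsilon_2/(\mu\varepsilon)$ and the $\sqrt{\Upsilon_3/(\mu^2\varepsilon)}$ terms without spurious cross terms, and that the $\log$ factors are correctly attributed to the $\widetilde\cO$. This is exactly the content of Lemma~\ref{lem:lemma2_stich}, so in the write-up I would simply cite it and specialize; the rest is algebraic substitution. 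No delicate estimate is needed beyond what Theorem~\ref{thm:main_result} and that lemma already provide.
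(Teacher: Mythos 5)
Your overall route (specialize Theorem~\ref{thm:main_result} via Corollary~\ref{cor:const_loop_homo} to get Theorem~\ref{thm:local_svrg_homo}, then tune $\gamma$ with Lemma~\ref{lem:lemma2_stich}) is the same as the paper's, but there is a concrete gap in the execution: you assert that ``$D_3$ does not depend on $\gamma$, which is exactly the standing hypothesis of the corollary.'' That is false here. For {\tt Local-SVRG} in the $\zeta$-heterogeneous regime, Lemma~\ref{lem:V_k_lemma_homo} and Theorem~\ref{thm:local_svrg_homo} give $D_3 = 2(\tau-1)\left(2\zeta_*^2 + \frac{\zeta^2}{\gamma\mu}\right)$, so the term $2L\gamma^2 D_3$ in $\gamma\Psi^0$ splits as $8L(\tau-1)\gamma^2\zeta_*^2 + \frac{4L(\tau-1)\gamma\zeta^2}{\mu}$: the $\zeta^2$ part is \emph{linear} in $\gamma$, not quadratic. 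Treating the whole thing as $\gamma^2\Upsilon_3$ with $\Upsilon_3$ constant, as you propose, cannot produce either the $\frac{\zeta^2 L(\tau-1)}{\mu^2 K}$ term in the rate, the $\frac{L\zeta^2(\tau-1)}{\mu^2\varepsilon}$ term in the iteration complexity, or the $\mu^3 K^2$ (rather than $\mu^2K^2$) scaling inside the logarithm of the prescribed stepsize $\gamma_K$ --- that extra factor of $\mu$ comes precisely from $c_1 \supset 2LD_{3,2}$ with $D_{3,2}=\frac{2(\tau-1)\zeta^2}{\mu}$.

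The fix is to invoke the second case of the tuning result, i.e.\ Corollary~\ref{cor:app_complexity_cor_str_cvx}, part 2 (equivalently, apply Lemma~\ref{lem:lemma2_stich} after writing $D_3 = D_{3,1} + \frac{D_{3,2}}{\gamma}$ and folding $2LD_{3,2}$ into the linear coefficient $c_1$ and $4LD_{3,1}$ into the quadratic coefficient $c_2$). With $D_1'=0$, $D_2=0$, $\rho=q=\nicefrac1m$ for this method, this yields $c_1 \asymp \frac{L(\tau-1)\zeta^2}{\mu}$ and $c_2 \asymp L(\tau-1)\zeta_*^2$, which after the standard balancing gives exactly the two non-exponential terms $\frac{c_1}{\mu K}$ and $\frac{c_2}{\mu^2K^2}$ claimed in the corollary and the corresponding $\frac{L\zeta^2(\tau-1)}{\mu^2\varepsilon}$ and $\sqrt{\frac{L(\tau-1)\zeta_*^2}{\mu^2\varepsilon}}$ contributions to $K$. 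The rest of your bookkeeping (identifying $\gamma_0$, $\widetilde{\Phi}^0$, and the exponential term with exponent $\min\{m^{-1},\gamma_0\mu\}$) is fine once this correction is made.
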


Combining Theorem~\ref{thm:local_svrg_homo} and Lemma~\ref{lem:lemma_technical_cvx} we derive the following result for the convergence of {\tt Local-SVRG} in the case when $\mu = 0$.
\begin{corollary}
	\label{cor:local_svrg_cvx_homo}
	Let assumptions of Theorem~\ref{thm:local_svrg_homo} hold with $\mu = 0$. Then for
	\begin{eqnarray*}
		\gamma_0 &=& \min\left\{\frac{1}{2\left(\nicefrac{44\max L_{ij}}{n}+L\right)}, \frac{1}{16\sqrt{L\max L_{ij}(\tau-1)\left(1+\nicefrac{4}{(1-q)}\right)}}\right\},\quad q = \frac{1}{m},\quad m > 1,\\	
		\gamma &=& \min\left\{\gamma_0, \sqrt{\frac{3nR_0^2}{4m\sigma_0^2}}, \sqrt[3]{\frac{R_0^2}{16Lm(\tau-1)(2+\nicefrac{1}{m})\sigma_0^2}}, \sqrt{\frac{\mu R_0^2}{4L\zeta^2(\tau-1) K}}, \sqrt[3]{\frac{R_0^2}{8L(\tau-1)\zeta_*^2 K}}\right\},
	\end{eqnarray*}
	where $R_0 = \|x^0 - x^*\|$, we have that $\EE\left[f(\overline{x}^K)-f(x^*)\right]$ is of the order
	\begin{eqnarray*}
		\cO\Bigg(\frac{(L +\nicefrac{\max L_{ij}}{n} +\sqrt{(\tau-1)L\max L_{ij}})R_0^2 + \sqrt{\nicefrac{m\sigma_0^2R_0^2}{n}} + \sqrt[3]{Lm(\tau-1)\sigma_0^2 R_0^4}}{K}& \\
		&\hspace{-2cm}+ \sqrt{\frac{LR_0^2\zeta^2(\tau-1)}{\mu K}} + \frac{\sqrt[3]{LR_0^4(\tau-1)\zeta_*^2}}{K^{\nicefrac{2}{3}}} \Bigg).
	\end{eqnarray*}
	That is, to achieve $\EE\left[f(\overline{x}^K)-f(x^*)\right] \le \varepsilon$ in this case {\tt Local-SVRG} requires
	\begin{eqnarray*}
		\cO\Bigg(\frac{(L +\nicefrac{\max L_{ij}}{n} +\sqrt{(\tau-1)L\max L_{ij}})R_0^2 + \sqrt{\nicefrac{m\sigma_0^2R_0^2}{n}} + \sqrt[3]{Lm(\tau-1)\sigma_0^2 R_0^4}}{\varepsilon}&\\
		&\hspace{-2cm} + \frac{L\zeta^2(\tau-1)R_0^2}{\mu\varepsilon^2} + \frac{R_0^2\sqrt{L(\tau-1)\zeta_*^2}}{\varepsilon^{\nicefrac{3}{2}}}\Bigg)
	\end{eqnarray*}
	iterations/oracle calls per node and $\tau$ times less communication rounds.
\end{corollary}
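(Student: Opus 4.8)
\textbf{Proof proposal for Corollary~\ref{cor:local_svrg_cvx_homo}.}

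The plan is to invoke the general machinery assembled earlier: Theorem~\ref{thm:local_svrg_homo} already identifies all parameters of Assumption~\ref{ass:key_assumption} for {\tt Local-SVRG} in the $\zeta$-heterogeneous case (and in particular exhibits $D_3 = 2(\tau-1)(2\zeta_*^2 + \nicefrac{\zeta^2}{\gamma\mu})$, i.e.\ the split $D_3 = D_{3,1} + \nicefrac{D_{3,2}}{\gamma}$ with $D_{3,1} = 4(\tau-1)\zeta_*^2$ and $D_{3,2} = \nicefrac{2(\tau-1)\zeta^2}{\mu}$), and Theorem~\ref{thm:main_result} together with~\eqref{eq:main_result} yields the raw rate
\[
\EE\left[f(\overline{x}^K) - f(x^*)\right] \le \frac{2T^0 + 4LH\gamma\EE\sigma_0^2}{\gamma W_K} + 2\gamma\left(D_1' + \tfrac{4B'D_2}{3\rho} + 2L\gamma D_3\right).
\]
Since $\mu = 0$ we have $W_K = K+1 \ge K$, $D_1' = 0$, $D_2 = 0$, $B' = \nicefrac1n$, and plugging in the values of $H$, $D_3$, $T^0 = \|x^0-x^*\|^2 + \nicefrac{4B'}{3\rho}\gamma^2\EE\sigma_0^2$ (note $\rho = q = \nicefrac1m$) gives, after collecting terms,
\[
\EE\left[f(\overline{x}^K) - f(x^*)\right] = \cO\!\left(\frac{a}{\gamma K} + \frac{b_1 \gamma}{K} + \frac{b_2 \gamma^2}{K} + c_1 \gamma + c_2 \gamma^2\right),
\]
where $a = \|x^0-x^*\|^2 =: R_0^2$, $b_1 = LH\EE\sigma_0^2/\gamma^2 \asymp \nicefrac{Lm(\tau-1)\sigma_0^2}{1}$ (the $\gamma^2$ in $H$ cancels), $b_2 \asymp \nicefrac{m\sigma_0^2}{n}$ (from the $\nicefrac{B'}{\rho}\gamma^2\sigma_0^2$ term in $T^0$), $c_1 \asymp \nicefrac{L^2 (\tau-1)\zeta^2}{\mu}$ (the $\nicefrac{\zeta^2}{\gamma\mu}$ piece of $D_3$, multiplied by $L\gamma^2$, is $L(\tau-1)\zeta^2\gamma/\mu$, hence linear in $\gamma$) and $c_2 \asymp L^2(\tau-1)\zeta_*^2$ (the $\zeta_*^2$ piece, multiplied by $L\gamma^2$). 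This is exactly the structure handled by Lemma~\ref{lem:lemma_technical_cvx} / Corollary~\ref{cor:app_complexity_cor_cvx} case~2 with $c_1$ corresponding to $D_{3,2}$ and $c_2$ to $D_{3,1}$, so I would apply that lemma directly rather than re-derive the stepsize tuning.

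Concretely, the steps are: (i) quote Theorem~\ref{thm:local_svrg_homo} to get the parameters and the rate in the $\mu = 0$ branch~\eqref{eq:local_svrg_cvx_homo}; (ii) observe that the expression $\nicefrac{\Phi^0}{\gamma K} + 8L(\tau-1)\gamma(\nicefrac{\zeta^2}{\mu} + 2\gamma\zeta_*^2)$ is a sum of one term $\propto \nicefrac1{\gamma K}$, several constant-in-$K$ terms $\propto \gamma$ and $\propto \gamma^2$ (coming from the $\Phi^0$ numerator, which contains $\gamma^2\sigma_0^2$ and $\gamma^3\sigma_0^2$ contributions divided by $\gamma K$), one term $\propto \gamma$ and one $\propto\gamma^2$ (the neighborhood terms), so it fits the template of Corollary~\ref{cor:app_complexity_cor_cvx}; (iii) substitute $a = R_0^2$, identify $b_1 \asymp \nicefrac{m\sigma_0^2}{n}$ (wait: check which of the $\sigma_0^2$ terms is $\propto\gamma^2/(\gamma K) = \gamma/K$ versus $\gamma^3/(\gamma K) = \gamma^2/K$; the $\nicefrac{B'}{\rho}\gamma^2\sigma_0^2$ term yields $\propto \gamma/K$ and the $LH\gamma\sigma_0^2 = L\gamma^3\sigma_0^2$-type term yields $\propto\gamma^2/K$), $b_2 \asymp Lm(\tau-1)\sigma_0^2$, $c_1 \asymp \nicefrac{L^2(\tau-1)\zeta^2}{\mu}$, $c_2 \asymp L^2(\tau-1)\zeta_*^2$; (iv) take $\gamma$ to be the minimum of the base stepsize $\gamma_0$ from Theorem~\ref{thm:local_svrg_homo} (which equals the stated $\min\{\nicefrac{1}{2(\nicefrac{44\max L_{ij}}{n}+L)}, \nicefrac{1}{16\sqrt{L\max L_{ij}(\tau-1)(1+\nicefrac{4}{1-q})}}\}$) and the five balancing choices $\sqrt{\nicefrac{a}{b_1}}$, $\sqrt[3]{\nicefrac{a}{b_2}}$, $\sqrt{\nicefrac{a}{c_1 K}}$, $\sqrt[3]{\nicefrac{a}{c_2 K}}$, which after simplification are precisely the entries listed in the corollary's formula for $\gamma$; (v) read off the resulting $\cO(\cdot)$ bound and invert it to get the iteration complexity, using $1/\gamma_0 \asymp L + \nicefrac{\max L_{ij}}{n} + \sqrt{(\tau-1)L\max L_{ij}}$ after absorbing constants.

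The main obstacle is bookkeeping rather than conceptual: one must carefully track how the three pieces of $\Phi^0$ (the $R_0^2$ term, the $\nicefrac{\gamma^2\sigma_0^2}{nq}$ term, and the $\nicefrac{L(\tau-1)\gamma^3\sigma_0^2}{q}$ term) split between the $\nicefrac1K$, $\nicefrac\gamma K$ and $\nicefrac{\gamma^2}K$ buckets once divided by $\gamma K$, set $q = \nicefrac1m$ consistently, and verify that the $q$-dependence $\nicefrac1q = m$ surfaces exactly where the corollary claims (e.g.\ the $\sqrt{\nicefrac{3nR_0^2}{4m\sigma_0^2}}$ and $\sqrt[3]{\nicefrac{R_0^2}{16Lm(\tau-1)(2+\nicefrac1m)\sigma_0^2}}$ cutoffs). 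A secondary subtlety is that the $\zeta^2$ neighborhood term scales like $\nicefrac1\mu$ even though $\mu$ does not appear in the convergence bound of Theorem~\ref{thm:main_result} for $\mu=0$ — this is because the $\nicefrac{\zeta^2}{\gamma\mu}$ inside $D_3$ is derived under the $\mu$-strong-convexity hypothesis of Lemma~\ref{lem:V_k_lemma_homo}, so one must remember that $\mu>0$ is still assumed for the $V_k$ bound even while the outer recursion is run in its weakly-convex mode; I would state this explicitly so the $\nicefrac{\zeta^2}{\mu}$ factors in the final complexity are not mistaken for a typo. Everything else reduces to applying Corollary~\ref{cor:app_complexity_cor_cvx} verbatim.
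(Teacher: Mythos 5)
Your proposal is correct and follows essentially the same route as the paper: the paper proves this corollary simply by combining Theorem~\ref{thm:local_svrg_homo} (for $\mu=0$, i.e.\ \eqref{eq:local_svrg_cvx_homo}) with Lemma~\ref{lem:lemma_technical_cvx}, identifying $a\asymp R_0^2$, $b_1\asymp \nicefrac{m\sigma_0^2}{n}$, $b_2\asymp Lm(\tau-1)\sigma_0^2$, $c_1\asymp \nicefrac{L(\tau-1)\zeta^2}{\mu}$, $c_2\asymp L(\tau-1)\zeta_*^2$, exactly as you do after your self-correction of the $b_1/b_2$ split. The only blemish is the stray square in your displayed $c_1\asymp \nicefrac{L^2(\tau-1)\zeta^2}{\mu}$ and $c_2\asymp L^2(\tau-1)\zeta_*^2$: since the neighborhood term is $4L\gamma^2 D_3$ with $D_3=2(\tau-1)\bigl(2\zeta_*^2+\nicefrac{\zeta^2}{\gamma\mu}\bigr)$, the correct constants carry a single factor of $L$ (as your own parenthetical computation and the stated stepsize cutoffs $\sqrt{\nicefrac{\mu R_0^2}{4L\zeta^2(\tau-1)K}}$, $\sqrt[3]{\nicefrac{R_0^2}{8L(\tau-1)\zeta_*^2 K}}$ confirm).
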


\begin{remark}
	To get the rate from Tbl.~\ref{tbl:special_cases_weakly_convex} it remains to apply the following inequality:
	\begin{eqnarray*}
		\sigma_0^2 = \frac{4}{nm}\sum\limits_{i=1}^n\sum\limits_{j=1}^m\|\nabla f_{ij}(x^0)-\nabla f_{ij}(x^*)\|^2 \overset{\eqref{eq:L_smoothness}}{\le} 4\max L_{ij}^2 \|x^0-x^*\|^2.
	\end{eqnarray*}
\end{remark}

\subsubsection{Heterogeneous Data}
First of all, we need the following lemma.
\begin{lemma}\label{lem:small_lemma_local_svrg}
	Assume that $f_i(x)$ is $L$-smooth for $i\in[n]$ and $f_{ij}$ is convex and $\max L_{ij}$-smooth for $i\in[n], j\in [m]$. Then for {\tt Local-SVRG} we have
	\begin{eqnarray}
		\frac{1}{n}\sum\limits_{i=1}^n\EE\left[\|\bar g_i^k\|^2\right] &\le& 6L\EE\left[f(x^k) - f(x^*)\right] + 3L^2 \EE[V_k] + 3\zeta_*^2,\label{eq:hetero_ineq_1_local_svrg}\\
		\frac{1}{n}\sum\limits_{i=1}^n\EE\left[\left\|g_i^k-\bar g_i^k\right\|^2\right] &\le& 8\max L_{ij}\EE\left[f(x^k)-f(x^*)\right] + \frac{1}{2}\EE[\sigma_k^2] + 4L\max L_{ij}\EE[V_k], \label{eq:hetero_ineq_2_local_svrg}
	\end{eqnarray}
	where $\sigma_k^2 = \frac{4}{nm}\sum\limits_{i=1}^n\sum\limits_{j=1}^m\|\nabla f_{ij}(w_i^k) - \nabla f_{ij}(x^*)\|^2$.
\end{lemma}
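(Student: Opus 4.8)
\textbf{Proof plan for Lemma~\ref{lem:small_lemma_local_svrg}.} The statement asks for two bounds on the conditional-expectation part $\bar g_i^k = \EE_k[g_i^k] = \nabla f_i(x_i^k)$ and on the variance part $g_i^k - \bar g_i^k$ of the {\tt Local-SVRG} direction. The plan is to treat the two inequalities separately, and in both cases reduce to facts about the local iterates via the $L$-smoothness corollary~\eqref{eq:poiouhnkj} that bounds $D_{f_i}(x_i^k,x^*)$ by $2D_{f_i}(x^k,x^*) + L\|x_i^k - x^k\|^2$, so that after averaging over $i$ one gets terms proportional to $f(x^k) - f(x^*)$ and to $V_k$.

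For the first inequality~\eqref{eq:hetero_ineq_1_local_svrg}, note that $\bar g_i^k = \nabla f_i(x_i^k)$ exactly as in the {\tt Local-SGD} analysis, so the bound is identical to~\eqref{eq:dnaossniadd} from Lemma~\ref{lem:local_sgd_interesting_labels}: split $\nabla f_i(x_i^k)$ into $(\nabla f_i(x_i^k) - \nabla f_i(x^k)) + (\nabla f_i(x^k) - \nabla f_i(x^*)) + \nabla f_i(x^*)$, apply $\|a+b+c\|^2 \le 3\|a\|^2 + 3\|b\|^2 + 3\|c\|^2$, bound the first term by $3L^2\|x_i^k - x^k\|^2$ via $L$-smoothness, the second by $6L D_{f_i}(x^k,x^*)$ via~\eqref{eq:L_smoothness_cor}, and the third averages to $3\zeta_*^2$. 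Averaging over $i$ and taking full expectation yields~\eqref{eq:hetero_ineq_1_local_svrg}; this step is routine.

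For the second inequality~\eqref{eq:hetero_ineq_2_local_svrg}, I would invoke the known bound on the variance of the {\tt L-SVRG} estimator from~\cite{gorbunov2019unified} (Proposition~\ref{prop:local_svrg}), which gives $\EE_k[\|a_i^k - \nabla f_i(x^*)\|^2] \le 2A_i D_{f_i}(x_i^k,x^*) + B_i \sigma_{i,k}^2$ with $A_i = 2\max L_{ij}$, $B_i = 2$, $D_{1,i}=0$. Since $b_i^k = 0$ here, $g_i^k - \bar g_i^k = a_i^k - \nabla f_i(x_i^k)$, and by the variance decomposition $\EE_k\|a_i^k - \nabla f_i(x_i^k)\|^2 \le \EE_k\|a_i^k - \nabla f_i(x^*)\|^2 \le 4\max L_{ij} D_{f_i}(x_i^k,x^*) + 2\sigma_{i,k}^2$. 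Applying~\eqref{eq:poiouhnkj} to $D_{f_i}(x_i^k,x^*)$, averaging over $i$, and using $\sigma_k^2 = \tfrac4n \sum_i \sigma_{i,k}^2$ (so that $\tfrac1n \sum_i 2\sigma_{i,k}^2 = \tfrac12 \sigma_k^2$) and the $L$-smoothness term $\tfrac1n\sum_i 4\max L_{ij} \cdot L\|x_i^k-x^k\|^2 = 4L\max L_{ij} V_k$ gives the claimed bound with coefficient $8\max L_{ij}$ on $f(x^k)-f(x^*)$. The one place requiring care is matching the numerical constants exactly — in particular keeping track of the factor $4$ in the definition $\sigma_k^2 = \tfrac4{nm}\sum_{i,j}\|\nabla f_{ij}(w_i^k)-\nabla f_{ij}(x^*)\|^2$ versus the $\sigma_{i,k}^2 = \tfrac1m\sum_j\|\nabla f_{ij}(w_i^k)-\nabla f_{ij}(x^*)\|^2$ of Proposition~\ref{prop:local_svrg}, and checking that the doubling from $D_{f_i}(x_i^k,x^*)\le 2D_{f_i}(x^k,x^*)+\dots$ is absorbed. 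I expect this bookkeeping of constants — rather than any conceptual difficulty — to be the main obstacle; the rest follows directly from the cited {\tt L-SVRG} lemma and the smoothness corollaries.
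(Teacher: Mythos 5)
Your proposal is correct and follows essentially the same route as the paper: the first bound is exactly \eqref{eq:dnaossniadd} applied to $\bar g_i^k=\nabla f_i(x_i^k)$, and for the second the paper performs inline precisely the derivation you delegate to Proposition~\ref{prop:local_svrg} (variance decomposition about $\nabla f_i(x^*)$, Young's inequality, $\max L_{ij}$-smoothness of the summands, dropping the centering term), followed by \eqref{eq:poiouhnkj}. Your constant bookkeeping also checks out: $4\max L_{ij}D_{f_i}(x_i^k,x^*)+2\sigma_{i,k}^2$ averaged over $i$ with \eqref{eq:poiouhnkj} and $\sigma_k^2=\tfrac4n\sum_i\sigma_{i,k}^2$ gives exactly $8\max L_{ij}\EE[f(x^k)-f(x^*)]+\tfrac12\EE[\sigma_k^2]+4L\max L_{ij}\EE[V_k]$.
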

\begin{proof}
	Inequality \eqref{eq:hetero_ineq_1_local_svrg} follows from $\bar g_i^k = \EE\left[g_i^k\mid x^k\right] = \nabla f_i(x_i^k)$ and inequality \eqref{eq:dnaossniadd}. Next, using Young's inequality we derive
	\begin{eqnarray*}
		\frac{1}{n}\sum\limits_{i=1}^n\EE\left[\left\|g_i^k-\bar g_i^k\right\|^2\right] &\overset{\eqref{eq:variance_decomposition}}{\le}& \frac{1}{n}\sum\limits_{i=1}^n\EE\left[\left\|g_i^k- \nabla f_i(x^*)\right\|^2\right]\\
		&\overset{\eqref{eq:a_b_norm_squared}}{\le}& \frac{2}{n}\sum\limits_{i=1}^n\EE\left[\|\nabla f_{ij_i}(x_i^k)-\nabla f_{ij_i}(x^*)\|^2\right]\\
		&&\quad + \frac{2}{n}\sum\limits_{i=1}^n\EE\left[\|\nabla f_{ij_i}(w_i^k)-\nabla f_{ij_i}(x^*) - (\nabla f_i(w_i^k)-\nabla f_i(x^*))\|^2\right]\\
		&\overset{\eqref{eq:tower_property}}{=}& \frac{2}{nm}\sum\limits_{i=1}^n\sum\limits_{j=1}^m\EE\left[\|\nabla f_{ij}(x_i^k)-\nabla f_{ij}(x^*)\|^2\right]\\
		&&\quad + \frac{2}{nm}\sum\limits_{i=1}^n\sum\limits_{j=1}^m\EE\left[\|\nabla f_{ij}(w_i^k)-\nabla f_{ij}(x^*) - (\nabla f_i(w_i^k)-\nabla f_i(x^*))\|^2\right]\\
		&\overset{\eqref{eq:L_smoothness_cor},\eqref{eq:variance_decomposition}}{\le}& \frac{4\max L_{ij}}{n}\sum\limits_{i=1}^n\EE\left[D_{f_i}(x_i^k,x^*)\right] + \frac{2}{nm}\sum\limits_{i=1}^n\sum\limits_{j=1}^m\EE\left[\|\nabla f_{ij}(w_i^k)-\nabla f_{ij}(x^*)\|^2\right]\\
		&\overset{\eqref{eq:poiouhnkj}}{\le}& 8\max L_{ij}\EE\left[f(x^k)-f(x^*)\right] + \frac{1}{2}\EE[\sigma_k^2] + 4L\max L_{ij}\EE[V_k].	
	\end{eqnarray*}
\end{proof}

Applying Corollary~\ref{cor:const_loop}, Lemma~\ref{lem:small_lemma_local_svrg}, Proposition~\ref{prop:local_svrg} and Lemma~\ref{lem:local_solver} we get the following result.
\begin{theorem}\label{thm:local_svrg}
	Assume that $f_i(x)$ is $\mu$-strongly convex and $L$-smooth for $i\in[n]$ and $f_{ij}$ is convex and $\max L_{ij}$-smooth for $i\in[n], j\in [m]$. Then {\tt Local-SVRG} satisfies Assumption~\ref{ass:hetero_second_moment} with
	\begin{gather*}
		\tA = 3L,\quad \hA = 4\max L_{ij},\quad \tB = 0,\quad \hB = \frac{1}{2},\quad \tF = 3L^2,\quad \hF = 4L\max L_{ij}, \quad \tD_1 = 3\zeta_*^2,\quad \hD_1 = 0\\
		A' = \frac{4 \max L_{ij}}{n} + L,\quad B' = \frac{1}{n},\quad F' = \frac{4L \max L_{ij}}{n} + 2L^2, \quad D_1' = 0,\\
		\sigma_k^2 = \frac{4}{nm}\sum\limits_{i=1}^n\sum\limits_{j=1}^m\|\nabla f_{ij}(w_i^k) - \nabla f_{ij}(x^*)\|^2,\quad \rho = q,\quad C = 8q\max L_{ij},\quad G = 4qL\max L_{ij},\quad D_2 = 0,\\
		H = \frac{2e(\tau-1)(2+q)\gamma^2}{q},\quad D_3 = 6e(\tau-1)^2\zeta_*^2
	\end{gather*}
	with $\gamma$ satisfying
	\begin{eqnarray*}
		\gamma &\le& \min\left\{\frac{1}{2\left(\nicefrac{44\max L_{ij}}{n}+L\right)}, \frac{1}{4\sqrt{2eL(\tau-1)\left(3L(\tau-1)+4\max L_{ij} + \nicefrac{8\max L_{ij}}{(1-q)}\right)}}\right\}.
	\end{eqnarray*}
	and for all $K \ge 0$
	\begin{eqnarray}
		\EE\left[f(\overline{x}^K) - f(x^*)\right] &\le& \frac{\Phi^0}{\gamma W_K} + 24e L(\tau-1)^2\zeta_*^2\gamma^2, \notag
	\end{eqnarray}
	where $\Phi^0 = 2\|x^0 - x^*\|^2 +   \frac{8}{3nq}\gamma^2 \sigma_0^2 + \frac{8eL(\tau-1)(2+q)\gamma^3}{q}\sigma_0^2$
	In particular, if $\mu > 0$ then
	\begin{eqnarray}
		\EE\left[f(\overline{x}^K) - f(x^*)\right] &\le& \left(1 - \min\left\{\gamma\mu,\frac{q}{4}\right\}\right)^K\frac{\Phi^0}{\gamma} + 24e L(\tau-1)^2\zeta_*^2\gamma^2 \label{eq:local_svrg_str_cvx}
	\end{eqnarray}
	and when $\mu = 0$ we have
	\begin{eqnarray}
		\EE\left[f(\overline{x}^K) - f(x^*)\right] &\le& \frac{\Phi^0}{\gamma K} + 24e L(\tau-1)^2\zeta_*^2\gamma^2. \label{eq:local_svrg_cvx}
	\end{eqnarray}
\end{theorem}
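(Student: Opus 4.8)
The statement is a composition of three previously established ingredients, so the plan is essentially to chase the parameters through them and check constants. First I would instantiate Assumption~\ref{ass:sigma_k_original} for the {\tt L-SVRG} direction $a_i^k$ by invoking Proposition~\ref{prop:local_svrg}, which gives $A_i = 2\max L_{ij}$, $B_i = 2$, $D_{1,i}=0$, $\rho_i = q$, $C_i = q\max L_{ij}$, $D_{2,i}=0$, and $\sigma_{i,k}^2 = \tfrac1m\sum_{j=1}^m\|\nabla f_{ij}(w_i^k)-\nabla f_{ij}(x^*)\|^2$. Since {\tt Local-SVRG} uses $b_i^k=0$ (Case I of As.~\ref{ass:bik}), Lemma~\ref{lem:local_solver} then delivers $A' = \tfrac{4\max L_{ij}}{n}+L$, $B'=\tfrac1n$, $F' = \tfrac{4L\max L_{ij}}{n}+2L^2$, $D_1'=0$, $\rho = q$, $C = 4\max_i\{B_iC_i\} = 8q\max L_{ij}$, $G = \nicefrac{CL}{2}=4qL\max L_{ij}$, $D_2=0$, and identifies the aggregated state sequence $\sigma_k^2 = \tfrac2n\sum_i B_i\sigma_{i,k}^2 = \tfrac{4}{nm}\sum_{i,j}\|\nabla f_{ij}(w_i^k)-\nabla f_{ij}(x^*)\|^2$, exactly the one appearing in the theorem. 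This verifies \eqref{eq:second_moment_bound_2} and \eqref{eq:sigma_k+1_bound}.

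Next I would establish the refined second-moment split required by As.~\ref{ass:hetero_second_moment}, namely inequalities \eqref{eq:hetero_second_moment_bound}--\eqref{eq:hetero_var_bound}. Here I use Lemma~\ref{lem:small_lemma_local_svrg} directly: $\bar g_i^k = \nabla f_i(x_i^k)$ together with \eqref{eq:dnaossniadd} gives the squared-expectation bound with $\tA = 3L$, $\hA=0$, $\tB=\hB=0$, $\tF = 3L^2$, $\hF=0$, $\tD_1 = 3\zeta_*^2$, $\hD_1=\sigma^2$? — no, for {\tt Local-SVRG} the variance term is the $\sigma_k^2$-dependent one, so \eqref{eq:hetero_var_bound} holds with $\hat A = 4\max L_{ij}$, $\hat B = \tfrac12$, $\hat F = 4L\max L_{ij}$, $\hat D_1 = 0$ and $\tilde B = \tilde D_1$ contributions as in \eqref{eq:hetero_ineq_1_local_svrg}. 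One must check that the $\sigma_k^2$ used in \eqref{eq:hetero_ineq_2_local_svrg} coincides with the one produced by Lemma~\ref{lem:local_solver}; it does, by the computation above. With these eight constants in hand, the constant-loop analysis of Corollary~\ref{cor:const_loop} applies: plugging $\tB=0$, $\hB=\tfrac12$, $\rho=q$, $\tD_1 = 3\zeta_*^2$, $\hD_1=0$, $D_2=0$ into its formulas yields $H = \tfrac{4e(\tau-1)(\tB(\tau-1)+\hB)(2+q)\gamma^2}{q} = \tfrac{2e(\tau-1)(2+q)\gamma^2}{q}$ and $D_3 = 2e(\tau-1)\big(\tD_1(\tau-1)+\hD_1\big) = 6e(\tau-1)^2\zeta_*^2$, matching the claim, and it certifies that As.~\ref{ass:key_assumption} holds.

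Finally, the stepsize restriction in the theorem is obtained by intersecting the three bounds of Corollary~\ref{cor:const_loop} after substituting the concrete constants: $A'+\tfrac{4B'C}{3\rho} = L + O(\max L_{ij}/n)$ gives the first term $\tfrac{1}{2(\nicefrac{44\max L_{ij}}{n}+L)}$ (bounding $\tfrac{44}{3}\le 44$ for cleanliness), while $\tA(\tau-1)+\hA+\tfrac{2C(\tB(\tau-1)+\hB)}{\rho(1-\rho)} = 3L(\tau-1)+4\max L_{ij}+\tfrac{8\max L_{ij}}{1-q}$ and the analogous $\tF$-expression give the second term after absorbing numerical constants; one also checks $\tfrac{1}{4(\tau-1)\mu}$ is dominated since $\mu\le L$. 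The convergence bounds \eqref{eq:local_svrg_str_cvx} and \eqref{eq:local_svrg_cvx} then follow verbatim from Theorem~\ref{thm:main_result} (equations \eqref{eq:main_result_1} and \eqref{eq:main_result_2}) by reading off $\Phi^0$ with $\EE\sigma_0^2 = \sigma_0^2$, $D_1'=0$, $D_2=0$ and the above $H,D_3$. The main obstacle I anticipate is purely bookkeeping: making sure the constant factors from Proposition~\ref{prop:local_svrg}, Lemma~\ref{lem:local_solver}, Lemma~\ref{lem:small_lemma_local_svrg}, and Corollary~\ref{cor:const_loop} compose consistently — in particular that the two independent routes to a bound on $\tfrac1n\sum_i\EE\|g_i^k-\bar g_i^k\|^2$ (via $B=2$ in Lemma~\ref{lem:local_solver} versus $\hat B=\tfrac12$ in Lemma~\ref{lem:small_lemma_local_svrg}) refer to the same $\sigma_k^2$, and that the looseness introduced when rounding constants in the $\gamma$-bound does not break any of the inequalities required by the corollary.
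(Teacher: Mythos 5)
Your proposal is correct and follows essentially the same route as the paper, which proves this theorem precisely by composing Proposition~\ref{prop:local_svrg} and Lemma~\ref{lem:local_solver} (for $A',B',F',D_1',\rho,C,G,D_2$ and $\sigma_k^2$), Lemma~\ref{lem:small_lemma_local_svrg} (for the split $\tA,\hA,\tB,\hB,\tF,\hF,\tD_1,\hD_1$ of Assumption~\ref{ass:hetero_second_moment}), and Corollary~\ref{cor:const_loop} (for $H$, $D_3$, the stepsize restriction, and the final bounds). Your constant checks, including $H=\tfrac{2e(\tau-1)(2+q)\gamma^2}{q}$, $D_3=6e(\tau-1)^2\zeta_*^2$, the tightening $\tfrac{44}{3}\le 44$ in the first stepsize term, and the domination of $\tfrac{1}{4(\tau-1)\mu}$ via $\mu\le L$, all agree with the paper's derivation.
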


The theorem above together with Lemma~\ref{lem:lemma2_stich} implies the following result.
\begin{corollary}\label{cor:local_svrg_str_cvx}
	Let assumptions of Theorem~\ref{thm:local_svrg} hold with $\mu > 0$. Then for 
	\begin{eqnarray*}
		\gamma_0 &=& \min\left\{\frac{1}{2\left(\nicefrac{44\max L_{ij}}{n}+L\right)}, \frac{1}{4\sqrt{2eL(\tau-1)\left(3L(\tau-1)+4\max L_{ij} + \nicefrac{8\max L_{ij}}{(1-q)}\right)}}\right\},\\
		\widetilde{\Phi}^0 &=& 2\|x^0 - x^*\|^2 +   \frac{8}{3nq}\gamma_0^2 \sigma_0^2 + \frac{8eL(\tau-1)(2+q)\gamma_0^3}{q}\sigma_0^2,\quad q = \frac{1}{m},\quad m > 1,\\
		\gamma &=& \min\left\{\gamma_0,\frac{\ln\left(\max\left\{2, \nicefrac{\widetilde{\Phi}^0\mu^3K^3}{24eL(\tau-1)^2\zeta_*^2 }\right\}\right)}{\mu K}\right\},
	\end{eqnarray*}
	for all $K$ such that 
	\begin{eqnarray*}
		\text{either} \qquad \frac{\ln\left(\max\left\{2, \nicefrac{\widetilde{\Phi}^0\mu^3K^3}{24eL(\tau-1)^2\zeta_*^2 }\right\}\right)}{ K} \le \frac{1}{m}\qquad \text{or} \qquad \gamma_0 \le \frac{\ln\left(\max\left\{2, \nicefrac{\widetilde{\Phi}^0\mu^3K^3}{24eL(\tau-1)^2\zeta_*^2 }\right\}\right)}{\mu K}
	\end{eqnarray*}
	we have that $\EE\left[f(\overline{x}^K)-f(x^*)\right]$ is of the order
	\begin{equation}
		 \widetilde\cO\left(\frac{\widetilde{\Phi}^0}{\gamma_0}\exp\left(- \min\left\{m^{-1}, \gamma_0\mu\right\} K\right)  + \frac{L(\tau-1)^2\zeta_*^2}{\mu^2 K^2}\right).\notag
	\end{equation}
	That is, to achieve $\EE\left[f(\overline{x}^K)-f(x^*)\right] \le \varepsilon$ in this case {\tt Local-SVRG} requires
	\begin{equation*}
		\widetilde{\cO}\left(m + \frac{L\tau}{\mu}+\frac{\max L_{ij}}{n\mu}+\frac{\sqrt{(\tau-1)L\max L_{ij}}}{\mu} + \sqrt{\frac{L(\tau-1)^2\zeta_*^2}{\mu^2\varepsilon}}\right)
	\end{equation*}
	iterations/oracle calls per node and $\tau$ times less communication rounds.	
\end{corollary}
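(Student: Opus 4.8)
The plan is to substitute the strongly convex bound~\eqref{eq:local_svrg_str_cvx} from Theorem~\ref{thm:local_svrg} into the stepsize‑tuning Lemma~\ref{lem:lemma2_stich} (equivalently, to specialize Corollary~\ref{cor:app_complexity_cor_str_cvx} to the parameter list produced by Theorem~\ref{thm:local_svrg}). The bound~\eqref{eq:local_svrg_str_cvx} has the generic shape $\EE[f(\overline x^K)-f(x^*)]\le\theta^K\Phi^0/\gamma+c\gamma^2$ with $\theta=1-\min\{\gamma\mu,q/4\}$, $q=1/m$, $c=24eL(\tau-1)^2\zeta_*^2$, and $\Phi^0$ as in the statement, valid for $\gamma\le\gamma_0$. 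The first step is to record that, in the language of Assumption~\ref{ass:key_assumption}, we are in the regime ``$D_3$ does not depend on $\gamma$'' (indeed $D_3=6e(\tau-1)^2\zeta_*^2$ per Theorem~\ref{thm:local_svrg}), and that the coefficient of the $\cO(\gamma)$ neighborhood term vanishes because $D_1'=0$ and $D_2=0$; hence only a $\gamma^2$ neighborhood remains, with $c_2:=4LD_3=c$ and $c_1=0$.

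Next I would freeze the $\gamma$‑dependence of $\Phi^0$ at the upper bound $\gamma_0$: the two $\gamma$‑dependent summands $\tfrac{8}{3nq}\gamma^2\sigma_0^2$ and $\tfrac{8eL(\tau-1)(2+q)\gamma^3}{q}\sigma_0^2$ are increasing in $\gamma$ and $\gamma\le\gamma_0$, so replacing them by their values at $\gamma_0$ only weakens the bound, turning it into $\theta^K\widetilde\Phi^0/\gamma+c\gamma^2$ with the $\gamma$‑free constant $\widetilde\Phi^0$. This is exactly the hypothesis of Lemma~\ref{lem:lemma2_stich} with $h=1/\gamma_0$, $a=\widetilde\Phi^0$ (up to harmless constants), $c_1=0$, $c_2=c$ and decay parameter $\rho=q$. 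Invoking the lemma with the stated stepsize choice and on the stated window of $K$ produces the rate $\widetilde\cO\!\big(\tfrac{\widetilde\Phi^0}{\gamma_0}\exp(-\min\{m^{-1},\gamma_0\mu\}K)+\tfrac{L(\tau-1)^2\zeta_*^2}{\mu^2K^2}\big)$, i.e.\ the first displayed conclusion.

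Finally I would convert this into an iteration count: requiring each of the two terms to be at most $\varepsilon/2$ gives $K=\widetilde\cO\big((m+\gamma_0^{-1}\mu^{-1})\log(\widetilde\Phi^0/(\gamma_0\varepsilon))\big)$ from the exponential term and $K=\widetilde\cO\big(\sqrt{L(\tau-1)^2\zeta_*^2/(\mu^2\varepsilon)}\big)$ from the $K^{-2}$ term. It then remains to simplify $\gamma_0^{-1}$. Using $q=1/m\le1/2$ (so $(1-q)^{-1}\le2$) and $\sqrt{L(\tau-1)(L(\tau-1)+\max L_{ij})}\le L(\tau-1)+\sqrt{L(\tau-1)\max L_{ij}}$, one gets $\gamma_0^{-1}=\cO\big(\tfrac{\max L_{ij}}{n}+L\tau+\sqrt{L(\tau-1)\max L_{ij}}\big)$, so $\gamma_0^{-1}\mu^{-1}=\cO\big(\tfrac{L\tau}{\mu}+\tfrac{\max L_{ij}}{n\mu}+\tfrac{\sqrt{(\tau-1)L\max L_{ij}}}{\mu}\big)$; combined with the $m$ and the $\varepsilon$‑dependent term this yields precisely the claimed complexity. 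I do not expect a genuine obstacle here: the result is a mechanical specialization of the general stepsize lemma, and the only care needed is the bookkeeping that $c_1=0$ (so no $1/(\mu\varepsilon)$ term survives) and the algebraic cleanup of $\gamma_0$ and of the constants absorbed into $\widetilde\Phi^0$.
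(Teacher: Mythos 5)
Your proposal is correct and follows essentially the same route as the paper: Theorem~\ref{thm:local_svrg} supplies the bound with $c_1=0$, $c_2=24eL(\tau-1)^2\zeta_*^2$, the $\gamma$-dependent part of $\Phi^0$ is frozen at $\gamma_0$ to give $\widetilde{\Phi}^0$, and Lemma~\ref{lem:lemma2_stich} with $h=1/\gamma_0$, $\rho=q=1/m$ then yields the stated rate and complexity after the same simplification of $\gamma_0^{-1}$.
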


Combining Theorem~\ref{thm:local_svrg} and Lemma~\ref{lem:lemma_technical_cvx} we derive the following result for the convergence of {\tt Local-SVRG} in the case when $\mu = 0$.
\begin{corollary}
	\label{cor:local_svrg_cvx}
	Let assumptions of Theorem~\ref{thm:local_svrg} hold with $\mu = 0$. Then for $q = \frac{1}{m},$ $m > 1$ and
	\begin{eqnarray*}
		\gamma_0 &=& \min\left\{\frac{1}{2\left(\nicefrac{44\max L_{ij}}{n}+L\right)}, \frac{1}{4\sqrt{2eL(\tau-1)\left(3L(\tau-1)+4\max L_{ij} + \nicefrac{8\max L_{ij}}{(1-q)}\right)}}\right\},\\	
		\gamma &=& \min\left\{\gamma_0, \sqrt{\frac{3nR_0^2}{4m\sigma_0^2}}, \sqrt[3]{\frac{R_0^2}{4eLm(\tau-1)(2+\nicefrac{1}{m})\sigma_0^2}}, \sqrt[3]{\frac{R_0^2}{12eL(\tau-1)^2\zeta_*^2 K}}\right\},
	\end{eqnarray*}
	where $R_0 = \|x^0 - x^*\|$, we have that $\EE\left[f(\overline{x}^K)-f(x^*)\right]$ is of the order
	\begin{eqnarray*}
		\cO\left(\frac{(L\tau +\nicefrac{\max L_{ij}}{n} +\sqrt{(\tau-1)L\max L_{ij}})R_0^2 + \sqrt{\nicefrac{m\sigma_0^2R_0^2}{n}} + \sqrt[3]{Lm(\tau-1)\sigma_0^2 R_0^4}}{K}+ \frac{\sqrt[3]{LR_0^4(\tau-1)^2\zeta_*^2}}{K^{\nicefrac{2}{3}}} \right).
	\end{eqnarray*}
	That is, to achieve $\EE\left[f(\overline{x}^K)-f(x^*)\right] \le \varepsilon$ in this case {\tt Local-SVRG} requires
	\begin{eqnarray*}
		\cO\left(\frac{(L\tau +\nicefrac{\max L_{ij}}{n} +\sqrt{(\tau-1)L\max L_{ij}})R_0^2 + \sqrt{\nicefrac{m\sigma_0^2R_0^2}{n}} + \sqrt[3]{Lm(\tau-1)\sigma_0^2 R_0^4}}{\varepsilon} + \frac{R_0^2\sqrt{L(\tau-1)^2\zeta_*^2}}{\varepsilon^{\nicefrac{3}{2}}}\right)
	\end{eqnarray*}
	iterations/oracle calls per node and $\tau$ times less communication rounds.
\end{corollary}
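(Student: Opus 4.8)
The plan is to derive Corollary~\ref{cor:local_svrg_cvx} from the weakly convex bound already established in Theorem~\ref{thm:local_svrg} by optimizing over the stepsize via the tuning Lemma~\ref{lem:lemma_technical_cvx}, following the same template used for Corollary~\ref{cor:app_complexity_cor_cvx}. First I would substitute $q = \nicefrac{1}{m}$ — admissible because $m > 1$ forces $\rho = q < 1$ and keeps the $\nicefrac{1}{(1-q)}$ factor inside $\gamma_0$ bounded — and unfold $\Phi^0$ from Theorem~\ref{thm:local_svrg}, rewriting \eqref{eq:local_svrg_cvx} so that for every $\gamma \le \gamma_0$,
\[
\EE\left[f(\overline{x}^K) - f(x^*)\right] \;\le\; \frac{a}{\gamma K} + \frac{b_1\gamma}{K} + \frac{b_2\gamma^2}{K} + c_2\gamma^2,
\]
with $a = 2R_0^2$, $b_1 = \nicefrac{8m\sigma_0^2}{3n}$, $b_2 = 8eLm(\tau-1)(2+\nicefrac{1}{m})\sigma_0^2$ and $c_2 = 24eL(\tau-1)^2\zeta_*^2$; because $D_1' = D_2 = 0$ the $\sqrt{a/(c_1K)}$ branch of the general tuning is absent, and the $\gamma^2/K$ term here is precisely the contribution of the $\gamma$-dependent parameter $H$ of Theorem~\ref{thm:local_svrg}, whereas $D_3 = 6e(\tau-1)^2\zeta_*^2$ carries no $\gamma$, which is what lets the $\cO(K^{-2/3})$ branch behave cleanly.

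Next I would insert the prescribed $\gamma = \min\{\gamma_0,\ \sqrt{a/b_1},\ \sqrt[3]{a/b_2},\ \sqrt[3]{a/(c_2K)}\}$ and split into four cases according to which entry of the minimum is active. When $\gamma = \gamma_0$ the leading term is $\nicefrac{a}{(\gamma_0 K)}$ and $\nicefrac{1}{\gamma_0} = \cO\!\big(L\tau + \nicefrac{\max L_{ij}}{n} + \sqrt{(\tau-1)L\max L_{ij}}\big)$; when $\gamma = \sqrt{a/b_1}$ both $\nicefrac{a}{(\gamma K)}$ and $\nicefrac{b_1\gamma}{K}$ equal $\nicefrac{\sqrt{ab_1}}{K} = \cO\!\big(\sqrt{\nicefrac{m\sigma_0^2R_0^2}{n}}\,/K\big)$; when $\gamma = \sqrt[3]{a/b_2}$ one has $\nicefrac{a}{(\gamma K)} = \nicefrac{\sqrt[3]{a^2b_2}}{K} = \cO\!\big(\sqrt[3]{Lm(\tau-1)\sigma_0^2R_0^4}\,/K\big)$; and when $\gamma = \sqrt[3]{a/(c_2K)}$ one has $\nicefrac{a}{(\gamma K)} = \nicefrac{\sqrt[3]{a^2c_2}}{K^{2/3}} = \cO\!\big(\sqrt[3]{LR_0^4(\tau-1)^2\zeta_*^2}\,/K^{2/3}\big)$. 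In each case the summands whose coefficient did not fix $\gamma$ are bounded by $\nicefrac{a}{(\gamma K)}$, so taking the maximum over the cases assembles the stated $\cO$-estimate for $\EE[f(\overline{x}^K)-f(x^*)]$.

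Finally I would invert the rate: requiring each of its four summands to be at most $\varepsilon$ and taking the largest of the resulting thresholds on $K$ yields $K = \cO\!\big(C_1/\varepsilon + R_0^2\sqrt{L(\tau-1)^2\zeta_*^2}/\varepsilon^{3/2}\big)$ with $C_1 = (L\tau + \nicefrac{\max L_{ij}}{n} + \sqrt{(\tau-1)L\max L_{ij}})R_0^2 + \sqrt{\nicefrac{m\sigma_0^2R_0^2}{n}} + \sqrt[3]{Lm(\tau-1)\sigma_0^2R_0^4}$, and the factor-$\tau$ reduction in communication rounds is immediate since the method aggregates only once per $\tau$ local steps. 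The step I expect to take the most care is purely a bookkeeping one: checking that the three constraints defining $\gamma_0$ — inherited from Corollary~\ref{cor:const_loop} together with Lemma~\ref{lem:small_lemma_local_svrg}, Proposition~\ref{prop:local_svrg} and Lemma~\ref{lem:local_solver} — are all respected by the chosen $\gamma$, and that the $\gamma^2$ hidden in $H$ is attributed to the $\gamma^2/K$ term rather than to a $\gamma/K$ or $\gamma$-free one; the rest is a mechanical invocation of the tuning lemma.
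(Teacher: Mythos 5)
Your proposal is correct and follows essentially the same route as the paper: the paper's proof of this corollary consists precisely of plugging the $\mu=0$ bound of Theorem~\ref{thm:local_svrg} (with $q=\nicefrac{1}{m}$, so that $a=2R_0^2$, $b_1=\nicefrac{8m\sigma_0^2}{3n}$, $b_2=8eLm(\tau-1)(2+\nicefrac{1}{m})\sigma_0^2$, $c_1=0$, $c_2=24eL(\tau-1)^2\zeta_*^2$) into the stepsize-tuning Lemma~\ref{lem:lemma_technical_cvx}, exactly as you do, and your case analysis is just an unfolding of that lemma's proof. Your identifications of the constants, the attribution of the $H$-term to the $\gamma^2/K$ contribution, and the final inversion to the complexity bound all match the paper.
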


\begin{remark}
	To get the rate from Tbl.~\ref{tbl:special_cases_weakly_convex} it remains to apply the following inequality:
	\begin{eqnarray*}
		\sigma_0^2 = \frac{4}{nm}\sum\limits_{i=1}^n\sum\limits_{j=1}^m\|\nabla f_{ij}(x^0)-\nabla f_{ij}(x^*)\|^2 \overset{\eqref{eq:L_smoothness}}{\le} 4\max L_{ij}^2 \|x^0-x^*\|^2.
	\end{eqnarray*}
\end{remark}

\subsection{{\tt S*-Local-SGD}}\label{sec:sgd_star_bounded_var}
In this section we consider the same settings as in Section~\ref{sec:sgd_bounded_var} and our goal is to remove one of the main drawbacks of {\tt Local-SGD} in heterogeneous case which in the case of $\mu$-strongly convex $f_i$ with $\mu > 0$ converges with linear rate only to the neighbourhood of the solution even in the full-gradients case, i.e.\ when $D_{1,i} = 0$ for all $i\in[n]$. However, we start with unrealistic assumption that $i$-th node has an access to $\nabla f_i(x^*)$ for all $i\in[n]$. Under this assumption we present a new method called Star-Shifted Local-SGD ({\tt S*-Local-SGD}, see Algorithm~\ref{alg:local_sgd_star}).

\begin{algorithm}[h]
   \caption{{\tt S*-Local-SGD}}\label{alg:local_sgd_star}
\begin{algorithmic}[1]
   \Require learning rate $\gamma>0$, initial vector $x^0 \in \R^d$, communication period $\tau \ge 1$
	\For{$k=0,1,\dotsc$}
       	\For{$i=1,\dotsc,n$ in parallel}
            \State Sample $\hat g^{k}_i = \nabla f_{\xi_i^k}(x_i^k)$ independently from other nodes
            \State $g_i^k = \hat g_i^k - \nabla f_i(x^*)$
            \If{$k+1 \mod \tau = 0$}
            \State $x_i^{k+1} = x^{k+1} = \frac{1}{n}\sum\limits_{i=1}^n\left(x_i^k - \gamma g_i^k\right)$ \Comment{averaging}
            \Else
            \State $x_i^{k+1} = x_i^k - \gamma g_i^k$ \Comment{local update}
            \EndIf
        \EndFor
   \EndFor
\end{algorithmic}
\end{algorithm}

\begin{lemma}\label{lem:local_sgd_star_second_moment}
	Let $f_i$ be convex and $L$-smooth for all $i\in[n]$. Then for all $k\ge 0$
	\begin{eqnarray}
		\frac{1}{n}\sum\limits_{i=1}^n \EE\left[g_i^k\mid x_i^k\right] &=& \frac{1}{n}\sum\limits_{i=1}^n\nabla f_i(x_i^k), \label{eq:unbiasedness_local_sgd_star}\\
		\frac{1}{n}\sum\limits_{i=1}^n \|\bar{g}_i^k\|^2 &\le& 4L\left(f(x^k) - f(x^*)\right) + 2L^2 V_k,\label{eq:second_moment_2_local_sgd_star}\\
		\frac{1}{n}\sum\limits_{i=1}^n \EE\left[\|g_i^k - \bar{g}_i^k\|^2\mid x_i^k\right] &\le& \sigma^2,\label{eq:variance_local_sgd_star}\\
%		\frac{1}{n}\sum\limits_{i=1}^n \EE\left[\|g_i^k\|^2\mid x_i^k\right] &\le& 4L\left(f(x^k) - f(x^*)\right) + 2L^2 V_k + \sigma^2,\label{eq:second_moment_local_sgd_star}\\
		\EE\left[\left\|\frac{1}{n}\sum\limits_{i=1}^ng_i^k\right\|^2\mid x^k\right] &\le& 4L\left(f(x^k) - f(x^*)\right) + 2L^2 V_k + \frac{\sigma^2}{n},\label{eq:second_moment_local_sgd_star_2}
	\end{eqnarray}
	where $\sigma^2 \eqdef \frac{1}{n}\sum_{i=1}^nD_{1,i}$ and $\EE[\cdot\mid x^k]\eqdef \EE[\cdot\mid x_1^k,\ldots,x_n^k]$.
\end{lemma}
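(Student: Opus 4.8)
\textbf{Proof plan for Lemma~\ref{lem:local_sgd_star_second_moment}.}

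The statement is essentially the analogue of Lemma~\ref{lem:local_sgd_second_moment} (the {\tt Local-SGD} moment bounds) but now with the shifted direction $g_i^k = \hat g_i^k - \nabla f_i(x^*)$ where $\hat g_i^k = \nabla f_{\xi_i^k}(x_i^k)$. The key structural observation is that $\EE\left[\hat g_i^k \mid x_i^k\right] = \nabla f_i(x_i^k)$, hence $\bar g_i^k \eqdef \EE\left[g_i^k\mid x_i^k\right] = \nabla f_i(x_i^k) - \nabla f_i(x^*)$. Unbiasedness~\eqref{eq:unbiasedness_local_sgd_star} then follows immediately by averaging over $i$, since $\frac1n\sum_i \nabla f_i(x^*) = \nabla f(x^*) = 0$ (as $x^*$ is a minimizer of $f$), so the shifts cancel in the sense required by~\eqref{eq:unbiasedness}.

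For~\eqref{eq:second_moment_2_local_sgd_star}, I would write $\frac1n\sum_i\|\bar g_i^k\|^2 = \frac1n\sum_i\|\nabla f_i(x_i^k) - \nabla f_i(x^*)\|^2$ and split each term via $\nabla f_i(x_i^k) - \nabla f_i(x^*) = (\nabla f_i(x_i^k) - \nabla f_i(x^k)) + (\nabla f_i(x^k) - \nabla f_i(x^*))$. Applying $\|a+b\|^2 \le 2\|a\|^2 + 2\|b\|^2$, then $L$-smoothness~\eqref{eq:L_smoothness} on the first piece to get $2L^2\|x_i^k - x^k\|^2$, and the smoothness corollary~\eqref{eq:L_smoothness_cor} on the second piece to get $4L D_{f_i}(x^k, x^*)$, and finally averaging and using $\frac1n\sum_i D_{f_i}(x^k,x^*) = f(x^k) - f(x^*)$ (which holds because $\nabla f(x^*) = 0$) together with $\frac1n\sum_i\|x_i^k-x^k\|^2 = V_k$. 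This is exactly the computation already carried out in~\eqref{eq:vdgasvgda} of Lemma~\ref{lem:local_sgd_interesting_labels}, just with $x_i^k$ in place of one of the arguments, so I would simply invoke that lemma. For~\eqref{eq:variance_local_sgd_star}, note the shift $-\nabla f_i(x^*)$ is deterministic given $x_i^k$, so $g_i^k - \bar g_i^k = \hat g_i^k - \nabla f_i(x_i^k) = \nabla f_{\xi_i^k}(x_i^k) - \nabla f_i(x_i^k)$, whose conditional second moment is bounded by $D_{1,i}$ via~\eqref{eq:bounded_variance}; averaging gives $\sigma^2$.

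Finally,~\eqref{eq:second_moment_local_sgd_star_2} follows by the variance decomposition $\EE\left[\|\tfrac1n\sum_i g_i^k\|^2\mid x^k\right] = \EE\left[\|\tfrac1n\sum_i(g_i^k - \bar g_i^k)\|^2\mid x^k\right] + \|\tfrac1n\sum_i \bar g_i^k\|^2$; the first term equals $\tfrac1{n^2}\sum_i \EE\left[\|g_i^k - \bar g_i^k\|^2\mid x_i^k\right]$ by independence of the $\xi_i^k$ across nodes, bounded by $\tfrac1{n^2}\sum_i D_{1,i} = \sigma^2/n$, and the second term is bounded by~\eqref{eq:second_moment_2_local_sgd_star}. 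I do not anticipate a real obstacle here: every inequality is a direct transcription of the corresponding step in the unshifted {\tt Local-SGD} analysis, with the only new ingredient being the cancellation of the deterministic shifts thanks to $\nabla f(x^*) = 0$; the one point to state carefully is that $\frac1n\sum_i D_{f_i}(x^k,x^*) = f(x^k)-f(x^*)$ requires precisely this stationarity of $x^*$.
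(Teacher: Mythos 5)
Your proposal is correct and follows essentially the same route as the paper: unbiasedness from the cancellation of the deterministic shifts at the stationary point $x^*$, the variance bound \eqref{eq:variance_local_sgd_star} unchanged from the unshifted case, and \eqref{eq:second_moment_local_sgd_star_2} via the conditional variance decomposition plus independence across nodes. The only (cosmetic) difference is in \eqref{eq:second_moment_2_local_sgd_star}: the paper applies \eqref{eq:L_smoothness_cor} to get $2L D_{f_i}(x_i^k,x^*)$ and then \eqref{eq:poiouhnkj}, whereas you split at $x^k$ directly (which is exactly how \eqref{eq:poiouhnkj} is proved), yielding the same constants; just note that \eqref{eq:vdgasvgda} as stated bounds the norm of the average rather than the average of norms, so you should rely on your spelled-out derivation (or the intermediate step of that lemma's proof) rather than literally invoking \eqref{eq:vdgasvgda}.
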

\begin{proof}
	First of all, we notice that $\EE\left[g_i^k\mid x_i^k\right] = \nabla f_i(x_i^k)-\nabla f_i(x^*)$ and
	\begin{equation*}
		\frac{1}{n}\sum\limits_{i=1}^n\EE\left[g_i^k\mid x_i^k\right] = \frac{1}{n}\sum\limits_{i=1}^n\left(\nabla f_i(x_i^k)-\nabla f_i(x^*)\right) = \frac{1}{n}\sum\limits_{i=1}^n\nabla f_i(x_i^k).
	\end{equation*}
	Using this we get
	\begin{eqnarray*}
		\frac{1}{n}\sum\limits_{i=1}^n \|\bar{g}_i^k\|^2 &=& \frac{1}{n}\sum\limits_{i=1}^n \|\nabla f_i(x_i^k) - \nabla f_i(x^*)\|^2 \overset{\eqref{eq:L_smoothness_cor}}{\le} \frac{2L}{n}\sum\limits_{i=1}^nD_{f_i}(x_i^k,x^*)\\
		&\overset{\eqref{eq:poiouhnkj}}{\le}& 4L\left(f(x^k) - f(x^*)\right) + 2L^2 V_k
	\end{eqnarray*}
	and
	\begin{equation*}
		\frac{1}{n}\sum\limits_{i=1}^n \EE\left[\|g_i^k - \bar{g}_i^k\|^2\mid x_i^k\right] = \frac{1}{n}\sum\limits_{i=1}^n \EE\left[\|\nabla f_{\xi_i^k}(x_i^k) - \nabla f_i(x_i^k)\|^2\right] \overset{\eqref{eq:bounded_variance}}{\le} \frac{1}{n}\sum\limits_{i=1}^nD_{1,i} =: \sigma^2.
	\end{equation*}
	Finally, using independence of $g_1^k,g_2^k,\ldots,g_n^k$ and $\frac{1}{n}\sum_{i=1}^n\nabla f_i(x^*) = \nabla f(x^*) = 0$ we obtain
	\begin{eqnarray*}
		\EE\left[\left\|\frac{1}{n}\sum\limits_{i=1}^ng_i^k\right\|^2\mid x^k\right] &\overset{\eqref{eq:variance_decomposition},\eqref{eq:unbiasedness_local_sgd_star}}{=}& \EE\left[\left\|\frac{1}{n}\sum\limits_{i=1}^n\left(g_i^k - \nabla f_i(x_i^k)\right)\right\|^2\mid x^k\right] + \left\|\frac{1}{n}\sum\limits_{i=1}^n\nabla f_i(x_i^k)\right\|^2\\
		&=& \EE\left[\left\|\frac{1}{n}\sum\limits_{i=1}^n\left(\nabla f_{\xi_i^k}(x_i^k) - \nabla f_i(x_i^k)\right)\right\|^2\mid x^k\right] + \left\|\frac{1}{n}\sum\limits_{i=1}^n\nabla f_i(x_i^k)\right\|^2\\
		&=& \frac{1}{n^2}\sum\limits_{i=1}^n\EE_{\xi_i^k}\left[\|\nabla f_{\xi_i^k}(x_i^k) - \nabla f_i(x_i^k)\|^2\right] + \left\|\frac{1}{n}\sum\limits_{i=1}^n\nabla f_i(x_i^k)\right\|^2\\
		&\overset{\eqref{eq:bounded_variance},\eqref{eq:vdgasvgda}}{\le}& 4L\left(f(x^k) - f(x^*)\right) + 2L^2 V_k + \frac{\sigma^2}{n}.
	\end{eqnarray*}
\end{proof}

Applying Corollary~\ref{cor:const_loop} and Lemma~\ref{lem:local_sgd_star_second_moment} we get the following result.
\begin{theorem}\label{thm:local_sgd_star}
	Assume that $f_i(x)$ is $\mu$-strongly convex and $L$-smooth for every $i\in[n]$. Then {\tt S*-Local-SGD} satisfies Assumption~\ref{ass:hetero_second_moment} with
	\begin{gather*}
		\tA = 2L,\quad \hA = 0,\quad \tB = \hB = 0,\quad \tF = 2L^2,\quad \hF = 0,\quad \tD_1 = 0,\quad \hD_1 = \sigma^2 := \frac{1}{n}\sum\limits_{i=1}^nD_{1,i}\\
		A' = 2L,\quad B' = 0,\quad F' = 2L^2, \quad D_1' = \frac{\sigma^2}{n},\quad \sigma_k^2 \equiv 0,\quad \rho = 1,\quad C = 0,\quad G = 0,\quad D_2 = 0,\\
		H = 0,\quad D_3 = 2e(\tau-1)\sigma^2.
	\end{gather*}
	Consequently, if
	\begin{eqnarray*}
		\gamma &\le& \min\left\{\frac{1}{4L}, \frac{1}{8\sqrt{e}(\tau-1)L}\right\}.
	\end{eqnarray*}
we have for $\mu > 0$
	\begin{eqnarray}
		\EE\left[f(\overline{x}^K) - f(x^*)\right] &\le& \left(1 - \gamma\mu\right)^K\frac{2\|x^0-x^*\|^2}{\gamma} + 2\gamma\left(\frac{\sigma^2}{n} + 4eL(\tau-1)\gamma \sigma^2\right) \notag
	\end{eqnarray}
	and when $\mu = 0$ we have
	\begin{eqnarray}
		\EE\left[f(\overline{x}^K) - f(x^*)\right] &\le& \frac{2\|x^0-x^*\|^2}{\gamma K} + 2\gamma\left(\frac{\sigma^2}{n} + 4eL(\tau-1)\gamma \sigma^2\right). \notag
	\end{eqnarray}
\end{theorem}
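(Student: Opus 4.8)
The plan is to verify each of the parameter claims for \texttt{S*-Local-SGD} by instantiating the generic machinery already developed in the excerpt, namely Lemma~\ref{lem:local_sgd_star_second_moment} together with Corollary~\ref{cor:const_loop}. First I would read off the unbiasedness condition \eqref{eq:unbiasedness}: by \eqref{eq:unbiasedness_local_sgd_star} we have $\frac1n\sum_i \EE_k[g_i^k] = \frac1n\sum_i \nabla f_i(x_i^k)$, so \eqref{eq:unbiasedness} holds with $\sigma_k^2\equiv 0$. Next I would match \eqref{eq:second_moment_2_local_sgd_star} against \eqref{eq:hetero_second_moment_bound}, which by the variance-decomposition identity $\frac1n\sum_i\EE\|g_i^k\|^2 = \frac1n\sum_i\|\bar g_i^k\|^2 + \frac1n\sum_i\EE\|g_i^k-\bar g_i^k\|^2$ together with \eqref{eq:variance_local_sgd_star} gives $\tA = 2L$, $\tB = 0$, $\tF = 2L^2$, $\tD_1 = 0$, and $\hA = 0$, $\hB = 0$, $\hF = 0$, $\hD_1 = \sigma^2$. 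The bound \eqref{eq:second_moment_local_sgd_star_2} matches \eqref{eq:second_moment_bound_2} with $A' = 2L$, $B' = 0$, $F' = 2L^2$, $D_1' = \sigma^2/n$. Since $\sigma_k^2\equiv 0$, the recursion \eqref{eq:sigma_k+1_bound} holds trivially with $\rho = 1$, $C = 0$, $G = 0$, $D_2 = 0$.

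With these parameters in hand, the step count follows from plugging into \eqref{eq:V_k_bound} of Lemma~\ref{lem:V_k_lemma}. Because $\tB = \hB = 0$ we get $H = 0$; and since $\tD_1 = 0$, $\hD_1 = \sigma^2$, $D_2 = 0$, the formula $D_3 = 2e(\tau-1)\big(\tD_1(\tau-1) + \hD_1 + \tfrac{2D_2(\tB(\tau-1)+\hB)}{\rho}\big)$ collapses to $D_3 = 2e(\tau-1)\sigma^2$. I would then check that the stepsize restrictions from Corollary~\ref{cor:const_loop} specialize correctly: the first bound $\gamma \le \min\{\tfrac{1}{2(A'+4B'C/(3\rho))}, \tfrac{L}{F'+4B'G/(3\rho)}\}$ becomes $\gamma\le\min\{\tfrac{1}{4L},\tfrac{1}{2L}\} = \tfrac{1}{4L}$; the bounds involving $\tau$ become (using $\hA = \tB = \hB = C = G = 0$, and treating $\tfrac{1}{1-\rho}$ terms as zero per the footnote convention since $\rho=1$) $\gamma \le \tfrac{1}{4(\tau-1)\mu}$ and $\gamma \le \tfrac{1}{4\sqrt{2eL(\tau-1)\cdot 2L(\tau-1)}} = \tfrac{1}{8\sqrt e(\tau-1)L}$. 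Since $f_i$ is $\mu$-strongly convex and hence $\mu$-strongly quasi-convex (As.~\ref{ass:quasi_strong_convexity}), and since $\gamma\le\tfrac{1}{4L}$ already enforces $\gamma\le\tfrac{1}{4(\tau-1)\mu}$ whenever $\mu\le L$ and $\tau\ge 2$ (and is vacuous for $\tau=1$), the stated restriction $\gamma \le \min\{\tfrac{1}{4L}, \tfrac{1}{8\sqrt e(\tau-1)L}\}$ is what survives.

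Finally, the two displayed convergence inequalities are obtained by substituting the computed parameters into \eqref{eq:main_result_1_const_loop} and \eqref{eq:main_result_2_const_loop} of Corollary~\ref{cor:const_loop}. Since $\sigma_0^2 = 0$ (as $\sigma_k^2\equiv 0$) we get $\Phi^0 = 2\|x^0-x^*\|^2/\gamma$ and $H = 0$ kills the $4LH\gamma\EE\sigma_0^2$ term, while $D_1' + \tfrac{4B'D_2}{3\rho} + 2L\gamma D_3 = \tfrac{\sigma^2}{n} + 2L\gamma\cdot 2e(\tau-1)\sigma^2 = \tfrac{\sigma^2}{n} + 4eL(\tau-1)\gamma\sigma^2$, reproducing the right-hand sides exactly. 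The main obstacle I anticipate is bookkeeping rather than anything deep: one must be careful about the $\rho = 1$ edge case, where several parameters ($C$, $G$, $D_2$, and the $\tfrac{1}{1-\rho}$ factors) must be set to zero by convention, and one must confirm that the variance-decomposition split into $(\tA,\hA,\dots)$ versus the aggregate $(A,B,F,D_1)$ of \eqref{eq:second_moment_bound} is consistent with the way Corollary~\ref{cor:const_loop} consumes them. A secondary check is that $\tF(\tau-1)+\hF = 2L^2(\tau-1)$ is indeed what enters the $\sqrt{\cdot}$ stepsize bound, which I would verify by direct substitution.
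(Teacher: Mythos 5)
Your proposal is correct and follows essentially the same route as the paper: it reads off the parameters of Assumption~\ref{ass:hetero_second_moment} from Lemma~\ref{lem:local_sgd_star_second_moment} (with $\sigma_k^2\equiv 0$, $\rho=1$, $C=G=D_2=0$), obtains $H=0$ and $D_3=2e(\tau-1)\sigma^2$ from Lemma~\ref{lem:V_k_lemma}, and substitutes into Corollary~\ref{cor:const_loop}, which is exactly how Theorem~\ref{thm:local_sgd_star} is derived in the paper. One negligible slip: it is the restriction $\gamma\le \frac{1}{8\sqrt{e}(\tau-1)L}$ (together with $\mu\le L$), not $\gamma\le\frac{1}{4L}$, that subsumes the condition $\gamma\le\frac{1}{4(\tau-1)\mu}$ for $\tau\ge 2$, but since the stated stepsize is the minimum of both bounds the conclusion is unaffected.
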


In the special case when $\nabla f_{\xi_i^k}(x_i^k) = \nabla f_i(x_i^k)$ for all $i\in[n]$ and $k\ge 0$ we obtain {\tt S*-Local-GD} which converges with $\cO\left(\tau\kappa\ln \frac{1}{\varepsilon}\right)$ rate when $\mu > 0$ and with $\cO\left(\frac{L\tau\|x^0 - x^*\|^2}{\varepsilon}\right)$ rate when $\mu = 0$ to the exact solution asymptotically.

The theorem above together with Lemma~\ref{lem:lemma2_stich} implies the following result.
\begin{corollary}\label{cor:sstarsgd}
	Let assumptions of Theorem~\ref{thm:local_sgd_star} hold with $\mu > 0$. Then for
	\begin{equation*}
		\gamma = \min\left\{\frac{1}{4L}, \frac{1}{8\sqrt{e}(\tau-1)L},  \frac{\ln\left(\max\left\{2,\min\left\{\nicefrac{\|x^0 - x^*\|^2n\mu^2K^2}{\sigma^2},\nicefrac{\|x^0 - x^*\|^2\mu^3K^3}{4eL(\tau-1)\sigma^2}\right\}\right\}\right)}{\mu K}\right\}
	\end{equation*}
	for all $K$ such that 
	\begin{eqnarray*}
		\text{either}&&\frac{\ln\left(\max\left\{2,\min\left\{\nicefrac{\|x^0 - x^*\|^2n\mu^2K^2}{\sigma^2},\nicefrac{\|x^0 - x^*\|^2\mu^3K^3}{4eL(\tau-1)\sigma^2}\right\}\right\}\right)}{ K} \le 1\\
		\text{or}&&\min\left\{\frac{1}{4L}, \frac{1}{8\sqrt{e}(\tau-1)L}\right\} \le \frac{\ln\left(\max\left\{2,\min\left\{\nicefrac{\|x^0 - x^*\|^2n\mu^2K^2}{\sigma^2},\nicefrac{\|x^0 - x^*\|^2\mu^3K^3}{4eL(\tau-1)\sigma^2}\right\}\right\}\right)}{\mu K}
	\end{eqnarray*}
	we have that
	\begin{equation}
		\EE\left[f(\overline{x}^K)-f(x^*)\right] = \widetilde\cO\left(\tau L\|x^0 - x^*\|^2\exp\left(- \frac{\mu}{\tau L} K\right) + \frac{\sigma^2}{n\mu K} + \frac{L(\tau-1)\sigma^2}{\mu^2K^2}\right).\notag
	\end{equation}
	That is, to achieve $\EE\left[f(\overline{x}^K)-f(x^*)\right] \le \varepsilon$ in this case {\tt S*-Local-SGD} requires
	\begin{equation*}
		\widetilde\cO\left(\frac{\tau L}{\mu} + \frac{\sigma^2}{n\mu\varepsilon} + \sqrt{\frac{L(\tau-1)\sigma^2}{\mu^2\varepsilon}}\right)
	\end{equation*}
	iterations/oracle calls per node and $\tau$ times less communication rounds.
\end{corollary}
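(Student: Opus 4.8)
The plan is to obtain this corollary as a routine specialization of the general complexity result (Corollary~\ref{cor:app_complexity_cor_str_cvx}, equivalently Lemma~\ref{lem:lemma2_stich}) applied to the parameters already computed for {\tt S*-Local-SGD} in Theorem~\ref{thm:local_sgd_star}. Concretely, Theorem~\ref{thm:local_sgd_star} establishes that, for any $\gamma \le \min\{\tfrac{1}{4L}, \tfrac{1}{8\sqrt{e}(\tau-1)L}\}$ and $\mu>0$,
\begin{equation*}
\EE\left[f(\overline{x}^K)-f(x^*)\right] \le (1-\gamma\mu)^K\frac{2\|x^0-x^*\|^2}{\gamma} + 2\gamma\left(\frac{\sigma^2}{n}+4eL(\tau-1)\gamma\sigma^2\right),
\end{equation*}
which is precisely a bound of the form $(1-\gamma\mu)^K a/\gamma + c_1\gamma + c_2\gamma^2$, i.e. the hypothesis of the step-size tuning lemma (Lemma~\ref{lem:lemma2_stich}) that underlies Corollary~\ref{cor:app_complexity_cor_str_cvx}.

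First I would read off the parameters in the notation of Corollary~\ref{cor:app_complexity_cor_str_cvx}. Since Theorem~\ref{thm:local_sgd_star} gives $B'=H=0$, $D_1'=\sigma^2/n$, $\rho=1$, $D_2=0$ and $D_3=2e(\tau-1)\sigma^2$, the quantities appearing there collapse to $h = \max\{4L,\, 8\sqrt{e}(\tau-1)L\}$, $a = 2\|x^0-x^*\|^2$, $c_1 = 2D_1' = 2\sigma^2/n$ and $c_2 = 4LD_3 = 8eL(\tau-1)\sigma^2$. With these substitutions the step-size stated in the corollary is exactly $\min\{1/h,\gamma_K\}$ with $\gamma_K = \ln(\max\{2,\min\{a\mu^2K^2/c_1, a\mu^3K^3/c_2\}\})/(\mu K)$, and the two ``either/or'' conditions are exactly the admissibility conditions of Corollary~\ref{cor:app_complexity_cor_str_cvx} (with $\rho=1$ the first reads $\gamma_K\mu \le 1$, the second $1/h \le \gamma_K$).

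Then I would invoke part~1 of Corollary~\ref{cor:app_complexity_cor_str_cvx} (legitimate since $D_3$ is independent of $\gamma$) to conclude $\EE[f(\overline{x}^K)]-f(x^*) = \widetilde\cO\big(ha\exp(-\min\{\mu/h,1\}K) + c_1/(\mu K) + c_2/(\mu^2K^2)\big)$. Substituting the parameters and using $h = \Theta(\tau L)$ together with $\mu \le L \le \tau L$ (so $\min\{\mu/h,1\} = \Theta(\mu/(\tau L))$) yields the displayed asymptotic rate $\widetilde\cO(\tau L\|x^0-x^*\|^2 e^{-\mu K/(\tau L)} + \sigma^2/(n\mu K) + L(\tau-1)\sigma^2/(\mu^2K^2))$. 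The iteration bound follows from the same corollary's conclusion $K = \widetilde\cO((1/\rho + h/\mu)\log(ha/\varepsilon) + c_1/(\mu\varepsilon) + \sqrt{c_2/(\mu^2\varepsilon)})$, where the $1/\rho = 1$ term is absorbed into $h/\mu = \Theta(\tau L/\mu)$; the ``$\tau$ times fewer communication rounds'' remark is immediate because {\tt S*-Local-SGD} aggregates only once per $\tau$ local steps.

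The computation is essentially mechanical; the only points requiring a moment of care are (i) verifying that the step-size ceiling from Theorem~\ref{thm:local_sgd_star} is exactly the $1/h$ entering Corollary~\ref{cor:app_complexity_cor_str_cvx}, so that no additional hidden bound on $\gamma$ (the parameter $h$ absorbing the $D_3,H$ constraints in Corollary~\ref{cor:main_complexity_cor_str_cvx}) is active, and (ii) the degenerate case $\tau=1$, where $8\sqrt{e}(\tau-1)L=0$ and the second term in the min is vacuous: there $h=4L$, the $(\tau-1)$-dependent terms vanish, and the bound correctly reduces to the parallel {\tt SGD} rate $\widetilde\cO(L/\mu + \sigma^2/(n\mu\varepsilon))$. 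Everything else is $\widetilde\cO$ bookkeeping, so I do not anticipate any genuine obstacle.
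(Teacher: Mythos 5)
Your proposal is correct and follows exactly the paper's own route: the paper derives this corollary by plugging the bound of Theorem~\ref{thm:local_sgd_star} into the step-size tuning result (Lemma~\ref{lem:lemma2_stich}, equivalently Corollary~\ref{cor:app_complexity_cor_str_cvx}) with the same identifications $a = 2\|x^0-x^*\|^2$, $c_1 = 2\sigma^2/n$, $c_2 = 8eL(\tau-1)\sigma^2$, $\rho = 1$ and $1/h = \min\{\tfrac{1}{4L},\tfrac{1}{8\sqrt{e}(\tau-1)L}\}$, so that $h=\Theta(\tau L)$ gives the stated rate and iteration count. Your checks on the admissibility conditions and the $\tau=1$ degenerate case are consistent with what the paper implicitly assumes.
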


Combining Theorem~\ref{thm:local_sgd_star} and Lemma~\ref{lem:lemma_technical_cvx} we derive the following result for the convergence of {\tt S*-Local-SGD} in the case when $\mu = 0$.
\begin{corollary}
	\label{cor:local_sgd_star_cvx}
	Let assumptions of Theorem~\ref{thm:local_sgd_star} hold with $\mu = 0$. Then for
	\begin{equation*}
		\gamma = \min\left\{\frac{1}{4L}, \frac{1}{8\sqrt{e}(\tau-1)L}, \sqrt{\frac{nR_0^2}{\sigma^2 K}}, \sqrt[3]{\frac{R_0^2}{4eL(\tau-1)\sigma^2K}}\right\},
	\end{equation*}
	where $R_0 = \|x^0 - x^*\|$, we have that
	\begin{equation}
		\EE\left[f(\overline{x}^K)-f(x^*)\right] = \cO\left(\frac{\tau LR_0^2}{K} + \sqrt{\frac{R_0^2\sigma^2}{nK}} + \frac{\sqrt[3]{LR_0^4(\tau-1)\sigma^2}}{K^{\nicefrac{2}{3}}} \right).\notag
	\end{equation}
	That is, to achieve $\EE\left[f(\overline{x}^K)-f(x^*)\right] \le \varepsilon$ in this case {\tt S*-Local-SGD} requires
	\begin{equation*}
		\cO\left(\frac{\tau LR_0^2}{\varepsilon} + \frac{R_0^2\sigma^2}{n\varepsilon^2} + \frac{R_0^2\sqrt{L(\tau-1)\sigma^2}}{\varepsilon^{\nicefrac{3}{2}}}\right)
	\end{equation*}
	iterations/oracle calls per node and $\tau$ times less communication rounds.
\end{corollary}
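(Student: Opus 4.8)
The plan is to derive Corollary~\ref{cor:local_sgd_star_cvx} directly from Theorem~\ref{thm:local_sgd_star} specialized to $\mu=0$, combined with the generic weakly convex tuning result (Corollary~\ref{cor:app_complexity_cor_cvx}, itself built on Lemma~\ref{lem:lemma_technical_cvx}). First I would recall that Theorem~\ref{thm:local_sgd_star} already verifies Assumption~\ref{ass:hetero_second_moment} for {\tt S*-Local-SGD} with $B' = H = D_2 = 0$ and $D_3 = 2e(\tau-1)\sigma^2$, so that for every stepsize $\gamma \le \min\bigl\{\tfrac{1}{4L},\tfrac{1}{8\sqrt{e}(\tau-1)L}\bigr\}$ it gives
\[
\EE\left[f(\overline{x}^K) - f(x^*)\right] \le \frac{2\|x^0-x^*\|^2}{\gamma K} + \frac{2\sigma^2}{n}\gamma + 8eL(\tau-1)\sigma^2\gamma^2 .
\]
This is precisely the form $\tfrac{a}{\gamma K} + c_1\gamma + c_2\gamma^2$ treated by Corollary~\ref{cor:app_complexity_cor_cvx}, with $a = 2\|x^0-x^*\|^2 = 2R_0^2$, $c_1 = 2\sigma^2/n$, $c_2 = 8eL(\tau-1)\sigma^2$, and with the extra terms $b_1 = 4LH\EE\sigma_0^2$ and $b_2 = \tfrac{8B'}{3\rho}\EE\sigma_0^2$ both vanishing because $H = 0$ and $B' = 0$.

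Next I would substitute these values into the stepsize rule $\gamma = \min\bigl\{1/h,\ \sqrt{a/(c_1K)},\ \sqrt[3]{a/(c_2K)}\bigr\}$ supplied by that corollary, where $1/h = \min\bigl\{\tfrac{1}{4L},\tfrac{1}{8\sqrt{e}(\tau-1)L}\bigr\}$ is the cap on $\gamma$ coming from Theorem~\ref{thm:local_sgd_star}. Since $a/c_1 = nR_0^2/\sigma^2$ and $a/c_2 = R_0^2/\bigl(4eL(\tau-1)\sigma^2\bigr)$, this reproduces verbatim the $\gamma$ displayed in the statement. Plugging the same parameters into the conclusion $\EE[f(\overline{x}^K)] - f(x^*) = \cO\bigl(ha/K + \sqrt{ac_1/K} + \sqrt[3]{a^2c_2}/K^{2/3}\bigr)$ and using $h = \Theta(\tau L)$ (the case $\tau = 1$ being even better), then yields, after absorbing numerical constants into $\cO(\cdot)$,
\[
\EE\left[f(\overline{x}^K) - f(x^*)\right] = \cO\!\left(\frac{\tau LR_0^2}{K} + \sqrt{\frac{R_0^2\sigma^2}{nK}} + \frac{\sqrt[3]{LR_0^4(\tau-1)\sigma^2}}{K^{2/3}}\right).
\]

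Finally, the iteration/communication complexity follows by the routine step of requiring each of the three terms above to be at most $\varepsilon$: the first forces $K = \cO(\tau LR_0^2/\varepsilon)$, the second $K = \cO(R_0^2\sigma^2/(n\varepsilon^2))$, and the third $K = \cO\bigl(R_0^2\sqrt{L(\tau-1)\sigma^2}/\varepsilon^{3/2}\bigr)$; summing these gives the claimed bound, and the ``$\tau$ times fewer communication rounds'' statement is immediate since {\tt S*-Local-SGD} aggregates only once every $\tau$ iterations. I do not expect a genuine obstacle here: all the analytical work has already been discharged in establishing Theorem~\ref{thm:local_sgd_star} (via Lemma~\ref{lem:local_sgd_star_second_moment} and Corollary~\ref{cor:const_loop}) and inside the generic tuning lemma. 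The only care required is bookkeeping of the constants --- matching the $e$, the $8\sqrt{e}$, and the factor $2$ inside $a$ so that the $\gamma$ formula comes out exactly as stated --- and noticing that the stepsize cap contributes $h = \Theta(\tau L)$, so the leading optimization-error term scales as $\tau LR_0^2/K$ rather than $LR_0^2/K$.
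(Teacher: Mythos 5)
Your proposal is correct and follows essentially the same route as the paper: the paper derives this corollary by plugging the bound of Theorem~\ref{thm:local_sgd_star} (with $a=2R_0^2$, $c_1=2\sigma^2/n$, $c_2=8eL(\tau-1)\sigma^2$, and $b_1=b_2=0$ since $H=B'=0$) into the generic weakly convex tuning result, i.e.\ Lemma~\ref{lem:lemma_technical_cvx}, which is exactly the mechanism you invoke via Corollary~\ref{cor:app_complexity_cor_cvx}. Your bookkeeping of the stepsize cap $\Theta(\tau L)$, the resulting three-term rate, and the conversion to iteration complexity all match the paper's argument.
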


\subsection{{\tt SS-Local-SGD}}

\subsubsection{Uniformly Bounded Variance}\label{sec:loopless_local_svrg}
In this section we consider the same settings as in Section~\ref{sec:sgd_bounded_var}

\begin{algorithm}[h]
   \caption{Stochastically Shifted {\tt Local-SGD} ({\tt {\tt SS-Local-SGD}})}\label{alg:l_local_svrg}
\begin{algorithmic}[1]
   \Require learning rate $\gamma>0$, initial vector $x^0 \in \R^d$, probability of communication $p\in(0,1]$, probability of the shift's update $q\in(0,1]$, batchsize $r$ for computing shifts
   \State $y^0 = x^0$
   \State For $i\in[n]$ compute $r$ independent samples $\nabla f_{\oxi_{i,1}^0}(y^0), \nabla f_{\oxi_{i,2}^0}(y^0), \ldots, \nabla f_{\oxi_{i,r}^0}(y^0)$, set $\nabla f_{\oxi_i^0}(y^0) = \frac{1}{r}\sum_{j=1}^r\nabla f_{\oxi_{i,j}^0}(y^0)$ and $\nabla f_{\oxi^0}(y^0) = \frac{1}{n}\sum_{i=1}^n\nabla f_{\oxi_i^0}(y^0)$
	\For{$k=0,1,\dotsc$}
       	\For{$i=1,\dotsc,n$ in parallel}
       		\State Sample $\nabla f_{\xi_i^k}(x_i^k)$ independently from other nodes
            \State $g_i^k = \nabla f_{\xi_i^k}(x_i^k) - \nabla f_{\txi_i^k}(y^k) + \nabla f_{\txi^k}(y^k)$, where $\nabla f_{\oxi_i^k}(y^k) = \frac{1}{r}\sum_{j=1}^r\nabla f_{\oxi_{i,j}^k}(y^k)$ and $\nabla f_{\oxi^k}(y^k) = \frac{1}{n}\sum_{i=1}^n\nabla f_{\oxi_i^k}(y^k)$
            \State $x_i^{k+1} = \begin{cases}x^{k+1},&\text{w.p. } p,\\
            x_i^k - \gamma g_i^k,& \text{w.p. } 1 - p, \end{cases}$ where $x^{k+1} = \frac{1}{n}\sum\limits_{i=1}^n(x_i^k - \gamma g_i^k)$
            \State $y^{k+1} = \begin{cases}x^k,&\text{w.p. } q,\\
            y^k,& \text{w.p. } 1 - q, \end{cases}$ and for all $i\in[n]$, $j\in[r]\;$ $\oxi_{i,j}^{k+1}$ is $\begin{cases}\text{a fresh sample},&\text{if } y^{k+1}\neq y^k,\\
            \text{equal to } \oxi_{i,j}^k,& \text{otherwise}. \end{cases}$
        \EndFor
   \EndFor
\end{algorithmic}
\end{algorithm}

The main algorithm in this section is Stochastically Shifted {\tt Local-SGD} ({\tt SS-Local-SVRG}, see Algorithm~\ref{alg:l_local_svrg}). We notice that the updates for $x_i^{k+1}$ and $y^{k+1}$ can be dependent, e.g., one can take $p = q$ and update $y^{k+1}$ as $x^k$ every time $x_i^{k+1}$ is updated by $x^{k+1}$. Moreover, with probability $q$ line $8$ implies a round of communication and computation of new stochastic gradient by each worker.

We emphasize that in expectation $y^k$ is updated only once per $\left\lceil\nicefrac{1}{q}\right\rceil$ iterations. Therefore, if $r = O\left(\nicefrac{1}{q}\right)$ and $q \le p$, then up to a constant numerical factor the overall expected number of oracle calls and communication rounds are the same as for {\tt Local-SGD} with either the same probability $p$ of communication or with constant local loop length $\tau = \left\lceil\nicefrac{1}{p}\right\rceil$.

Finally, we notice that due to independence of $\oxi_{i,1}^k,\oxi_{i,2}^k,\ldots,\oxi_{i,r}^k$ we have
\begin{equation}
	\EE\|\nabla f_{\oxi_i^k}(y^k)-\nabla f_i(y^k)\|^2 \overset{\eqref{eq:bounded_variance}}{\le} \frac{D_{1,i}}{r}. \label{eq:stoch_shifts_variance}
\end{equation}

\begin{lemma}\label{lem:loopless_local_svrg_second_moment}
	Let $f_i$ be convex and $L$-smooth for all $i\in[n]$. Then for all $k\ge 0$
	\begin{eqnarray}
		\frac{1}{n}\sum\limits_{i=1}^n \EE_k\left[g_i^k\right] &=& \frac{1}{n}\sum\limits_{i=1}^n\nabla f_i(x_i^k), \label{eq:unbiasedness_loopless_local_svrg}\\
		\frac{1}{n}\sum\limits_{i=1}^n \EE\left[\|\bar{g}_i^k\|^2\right] &\le& 8L\EE\left[f(x^k)-f(x^*)\right] + 2\EE[\sigma_k^2] + 4L^2\EE[V_k] + \frac{2\sigma^2}{r},\label{eq:second_moment_loopless_local_svrg}\\
		\frac{1}{n}\sum\limits_{i=1}^n \EE\left[\|g_i^k-\bar{g}_i^k\|^2\right] &\le& \sigma^2,\label{eq:variance_loopless_local_svrg}\\
		\EE\left[\left\|\frac{1}{n}\sum\limits_{i=1}^ng_i^k\right\|^2\right] &\le& 4L\EE\left[f(x^k) - f(x^*)\right] + 2L^2\EE\left[V_k\right] + \frac{\sigma^2}{n},\label{eq:second_moment_loopless_local_svrg_2}
	\end{eqnarray}
	where $\sigma_k^2 \eqdef \frac{1}{n}\sum\limits_{i=1}^n\left\|\nabla f_i(y^k) - \nabla f_i(x^*)\right\|^2$ and $\sigma^2 \eqdef \frac{1}{n}\sum_{i=1}^nD_{1,i}$.
\end{lemma}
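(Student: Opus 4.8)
The plan is to verify all four inequalities by direct computation, relying heavily on the fact that the stochastic gradient decomposes as $g_i^k = \nabla f_{\xi_i^k}(x_i^k) - \nabla f_{\txi_i^k}(y^k) + \nabla f_{\txi^k}(y^k)$, where the second and third terms are averages of $r$ independent samples. First I would establish the unbiasedness \eqref{eq:unbiasedness_loopless_local_svrg}: taking $\EE_k[\cdot]$, the control-variate term contributes $-\nabla f_i(y^k) + \nabla f(y^k)$ in each coordinate, which sums to zero over $i$, leaving $\frac1n\sum_i \nabla f_i(x_i^k)$. The variance bound \eqref{eq:variance_loopless_local_svrg} is the easiest: $g_i^k - \bar g_i^k = (\nabla f_{\xi_i^k}(x_i^k) - \nabla f_i(x_i^k)) - (\nabla f_{\txi_i^k}(y^k) - \nabla f_i(y^k)) + (\nabla f_{\txi^k}(y^k) - \nabla f(y^k))$; here I would note the samples defining the shifts are held fixed across many iterations so only the $\xi_i^k$ term is genuinely random given the conditioning used, and apply \eqref{eq:bounded_variance}. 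Actually, to be careful, I expect one conditions on $\{x_i^k\}$ and the current shift values, so the variance of $g_i^k$ around its conditional mean is exactly the variance of $\nabla f_{\xi_i^k}(x_i^k)$, bounded by $D_{1,i}$, averaging to $\sigma^2$.

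For the second-moment bound on $\bar g_i^k = \EE_k[g_i^k] = \nabla f_i(x_i^k) - \nabla f_i(y^k) + \nabla f(y^k)$ (using unbiasedness of the inner samples), I would split using $\|a+b\|^2 \le 2\|a\|^2 + 2\|b\|^2$ into $\nabla f_i(x_i^k) - \nabla f_i(x^*)$ and $\nabla f_i(y^k) - \nabla f(y^k)$ — wait, more precisely write $\bar g_i^k = (\nabla f_i(x_i^k) - \nabla f_i(x^*)) - (\nabla f_i(y^k) - \nabla f_i(x^*)) + (\nabla f(y^k) - \nabla f(x^*))$ since $\nabla f(x^*)=0$. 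The first piece is handled by $L$-smoothness plus \eqref{eq:L_smoothness_cor} and then \eqref{eq:poiouhnkj} to pass from $D_{f_i}(x_i^k,x^*)$ to $D_f(x^k,x^*) = f(x^k)-f(x^*)$ and $V_k$; this produces the $8L(f(x^k)-f(x^*)) + 4L^2 V_k$ and a $\sigma_k^2$ contribution once I identify $\frac1n\sum_i \|\nabla f_i(y^k)-\nabla f_i(x^*)\|^2$ with $\sigma_k^2$. However, I realize the statement uses $\bar g_i^k$ to mean $\EE[g_i^k \mid x_1^k,\ldots,x_n^k]$, which also averages over the inner shift samples, so actually the clean decomposition above applies and the $\frac{2\sigma^2}{r}$ term should not appear from $\bar g_i^k$ itself — so I would instead keep the conditioning at the level where shifts' sample realizations are \emph{not} yet averaged, getting the extra $\frac{2\sigma^2}{r}$ term via \eqref{eq:stoch_shifts_variance} and the variance decomposition \eqref{eq:variance_decomposition}. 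This bookkeeping about exactly what $\EE[\cdot\mid x^k]$ averages over is the main obstacle: one must be consistent about whether the $r$-sample minibatch estimate of the shift is inside or outside the conditional expectation, and the $\frac{2\sigma^2}{r}$ term in \eqref{eq:second_moment_loopless_local_svrg} signals it is \emph{inside} $\bar g_i^k$'s definition here.

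Finally, for \eqref{eq:second_moment_loopless_local_svrg_2}, the key structural fact is that the control-variate shifts are \emph{identical} across nodes in the term $\nabla f_{\txi^k}(y^k)$, so $\frac1n\sum_i g_i^k = \frac1n\sum_i \nabla f_{\xi_i^k}(x_i^k) - \frac1n\sum_i\nabla f_{\txi_i^k}(y^k) + \nabla f_{\txi^k}(y^k) = \frac1n\sum_i \nabla f_{\xi_i^k}(x_i^k)$ — the shift terms cancel exactly! Hence the averaged direction equals the plain averaged stochastic gradient, and \eqref{eq:second_moment_loopless_local_svrg_2} is literally \eqref{eq:second_moment_local_sgd_2} from Lemma~\ref{lem:local_sgd_second_moment}: use independence of $\xi_1^k,\ldots,\xi_n^k$, variance decomposition, \eqref{eq:bounded_variance} for the $\frac{\sigma^2}{n}$ factor-of-$n$ gain, and \eqref{eq:vdgasvgda} for the $4L(f(x^k)-f(x^*)) + 2L^2 V_k$ part. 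I would present these four bounds as a single \texttt{proof} block, doing \eqref{eq:second_moment_loopless_local_svrg_2} last and remarking it coincides with the {\tt Local-SGD} computation, with the bulk of the writing devoted to the careful conditioning in \eqref{eq:second_moment_loopless_local_svrg}.
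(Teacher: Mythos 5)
Your proposal is correct and matches the paper's proof essentially step for step: unbiasedness via the exact cancellation of the shift terms, the variance bound \eqref{eq:variance_loopless_local_svrg} from the fact that only $\xi_i^k$ is fresh randomness under the conditioning, the bound \eqref{eq:second_moment_loopless_local_svrg} via the same two-term split combined with \eqref{eq:L_smoothness_cor}, \eqref{eq:poiouhnkj}, the variance decomposition \eqref{eq:variance_decomposition} and \eqref{eq:stoch_shifts_variance} (exactly where the $\frac{2\sigma^2}{r}$ enters), and \eqref{eq:second_moment_loopless_local_svrg_2} from the exact cancellation $\frac{1}{n}\sum_{i=1}^n g_i^k = \frac{1}{n}\sum_{i=1}^n \nabla f_{\xi_i^k}(x_i^k)$ followed by the {\tt Local-SGD} computation with \eqref{eq:vdgasvgda} and \eqref{eq:bounded_variance}. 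Your resolution of the conditioning bookkeeping—realized shift samples kept inside $\bar g_i^k$, so the minibatch shift variance appears in \eqref{eq:second_moment_loopless_local_svrg} rather than in \eqref{eq:variance_loopless_local_svrg}—is precisely how the paper handles it.
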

\begin{proof}
	We start with unbiasedness:
	\begin{eqnarray*}
		\frac{1}{n}\sum\limits_{i=1}^n \EE_k\left[g_i^k\right] &=& \frac{1}{n}\sum\limits_{i=1}^n \EE_k\left[\nabla f_{\xi_i^k}(x_i^k) - \nabla f_{\oxi_i^k}(y^k) + \nabla f_{\oxi^k}(y^k)\right]\\
		&=& \frac{1}{n}\sum\limits_{i=1}^n\EE_k\left[\nabla f_{\xi_i^k}(x_i^k)\right] + \EE_k\left[\nabla f_{\oxi^k}(y^k) - \frac{1}{n}\sum\limits_{i=1}^n \nabla f_{\oxi_i^k}(y^k)\right] = \frac{1}{n}\sum\limits_{i=1}^n\nabla f_i(x_i^k).
	\end{eqnarray*}
	Using this we get
	\begin{eqnarray*}
		\frac{1}{n}\sum\limits_{i=1}^n\EE\left[\|\bar{g}_i^k\|^2\right] &\overset{\eqref{eq:a_b_norm_squared}}{\le}& \frac{2}{n}\sum\limits_{i=1}^n\EE\left[\|\nabla f_i(x_i^k) - \nabla f_i(x^*)\|^2\right]\\
		&&\quad + \frac{2}{n}\sum\limits_{i=1}^n\EE\left[\left\|\nabla f_{\oxi_i^k}(y^k) - \nabla f_i(x^*) - \left(\nabla f_{\oxi^k}(y^k) - \nabla f(x^*)\right)\right\|^2\right]\\
		&\overset{\eqref{eq:L_smoothness_cor},\eqref{eq:variance_decomposition}}{\le}& \frac{4L}{n}\sum\limits_{i=1}^n\EE\left[D_{f_i}(x_i^k,x^*)\right] + \frac{2}{n}\sum\limits_{i=1}^n\EE\left[\left\|\nabla f_{\oxi_i^k}(y^k) - \nabla f_i(x^*)\right\|^2\right]\\
		&\overset{\eqref{eq:poiouhnkj},\eqref{eq:variance_decomposition}}{\le}& 8L\EE\left[f(x^k)-f(x^*)\right] + 4L^2\EE[V_k] + \frac{2}{n}\sum\limits_{i=1}^n\EE\left[\left\|\nabla f_{i}(y^k) - \nabla f_i(x^*)\right\|^2\right]\\
		&&\quad + \frac{2}{n}\sum\limits_{i=1}^n\EE\left[\left\|\nabla f_{\oxi_i^k}(y^k) - \nabla f_i(y^k)\right\|^2\right]\\
		&\overset{\eqref{eq:stoch_shifts_variance}}{\le}& 8L\EE\left[f(x^k)-f(x^*)\right] + 2\EE[\sigma_k^2] + 4L^2\EE[V_k] + \frac{2\sigma^2}{r}
	\end{eqnarray*}
	and
	\begin{equation*}
		\frac{1}{n}\sum\limits_{i=1}^n \EE\left[\|g_i^k-\bar{g}_i^k\|^2\right] = \frac{1}{n}\sum\limits_{i=1}^n \EE\left[\|\nabla f_{\xi_i^k}(x_i^k)-\nabla f_i(x_i^k)\|^2\right] \overset{\eqref{eq:bounded_variance}}{\le} \sigma^2.
	\end{equation*}
	Finally, we use independence of $\nabla f_{\xi_1^k}(x_1^k),\ldots,\nabla f_{\xi_n^k}(x_n^k)$ and derive
	\begin{eqnarray*}
		\EE\left[\left\|\frac{1}{n}\sum\limits_{i=1}^n g_i^k\right\|^2\right] &=& \EE\left[\left\|\frac{1}{n}\sum\limits_{i=1}^n\nabla f_{\xi_i^k}(x_i^k)\right\|^2\right]\\
		&\overset{\eqref{eq:variance_decomposition}}{=}& \EE\left[\left\|\frac{1}{n}\sum\limits_{i=1}^n \nabla f_i(x_i^k)\right\|^2\right] + \EE\left[\left\|\frac{1}{n}\sum\limits_{i=1}^n\left(\nabla f_{\xi_i^k}(x_i^k) - \nabla f_i(x_i^k)\right)\right\|^2\right]\\
		&\overset{\eqref{eq:vdgasvgda}}{\le}& 4L\EE\left[f(x^k)-f(x^*)\right] + 2L^2\EE[V_k] + \frac{1}{n^2}\sum\limits_{i=1}^n\EE\left[\|\nabla f_{\xi_i^k}(x_i^k) - \nabla f_i(x_i^k)\|^2\right]\\
		&\overset{\eqref{eq:bounded_variance}}{\le}& 4L\EE\left[f(x^k) - f(x^*)\right] + 2L^2\EE\left[V_k\right] + \frac{\sigma^2}{n}
	\end{eqnarray*}
	which finishes the proof.
\end{proof}

\begin{lemma}\label{lem:loopless_local_svrg_sigma_k_bound}
	Let $f_i$ be convex and $L$-smooth for all $i\in[n]$. Then for all $k\ge 0$
	\begin{eqnarray}
		\EE\left[\sigma_{k+1}^2\right] &\le& (1-q)\EE\left[\sigma_k^2\right] + 2Lq\EE\left[f(x^k) - f(x^*)\right]\label{eq:loopless_local_svrg_sigma_k_bound}
	\end{eqnarray}
	where $\sigma_k^2 \eqdef \frac{1}{n}\sum\limits_{i=1}^n\left\|\nabla f_i(y^k) - \nabla f_i(x^*)\right\|^2$.
\end{lemma}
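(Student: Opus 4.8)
\textbf{Proof plan for Lemma~\ref{lem:loopless_local_svrg_sigma_k_bound}.}

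The plan is to unroll the recursive definition of $y^{k+1}$ and compute the conditional expectation of $\sigma_{k+1}^2$ directly. Recall $\sigma_k^2 = \frac{1}{n}\sum_{i=1}^n \|\nabla f_i(y^k) - \nabla f_i(x^*)\|^2$, where $y^{k+1}$ equals $x^k$ with probability $q$ and equals $y^k$ with probability $1-q$, independently of everything else. First I would condition on $x^k$ (i.e.\ on $x_1^k,\ldots,x_n^k$, which also determines $x^k$ and $y^k$) and split on the value of the Bernoulli variable governing the update of $y^{k+1}$. This yields
\[
\EE\left[\sigma_{k+1}^2 \mid x^k\right] = (1-q)\,\sigma_k^2 + \frac{q}{n}\sum_{i=1}^n \left\|\nabla f_i(x^k) - \nabla f_i(x^*)\right\|^2.
\]

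The key step is then to bound the new term $\frac{1}{n}\sum_{i=1}^n \|\nabla f_i(x^k) - \nabla f_i(x^*)\|^2$. Since each $f_i$ is convex and $L$-smooth, I would invoke the standard consequence of smoothness~\eqref{eq:L_smoothness_cor}, namely $\|\nabla f_i(x^k) - \nabla f_i(x^*)\|^2 \le 2L D_{f_i}(x^k, x^*)$, and then average over $i$. Because $\nabla f(x^*) = 0$ (as $x^*$ minimizes $f$), the average of the Bregman distances telescopes: $\frac{1}{n}\sum_{i=1}^n D_{f_i}(x^k,x^*) = \frac{1}{n}\sum_{i=1}^n\left(f_i(x^k) - f_i(x^*) - \langle \nabla f_i(x^*), x^k - x^*\rangle\right) = f(x^k) - f(x^*)$, using $\frac{1}{n}\sum_i \nabla f_i(x^*) = \nabla f(x^*) = 0$. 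Hence $\frac{1}{n}\sum_{i=1}^n \|\nabla f_i(x^k) - \nabla f_i(x^*)\|^2 \le 2L\left(f(x^k) - f(x^*)\right)$. Plugging this in gives
\[
\EE\left[\sigma_{k+1}^2 \mid x^k\right] \le (1-q)\,\sigma_k^2 + 2Lq\left(f(x^k) - f(x^*)\right),
\]
and taking full expectation via the tower property~\eqref{eq:tower_property} yields~\eqref{eq:loopless_local_svrg_sigma_k_bound}.

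I do not anticipate a serious obstacle here; the only point requiring a little care is the exact conditioning structure. One must be sure that the Bernoulli variable determining whether $y^{k+1} = x^k$ or $y^{k+1} = y^k$ is independent of $x^k$, $y^k$, and the sampled stochastic gradients at step $k$ — this is guaranteed by the algorithm's construction in line~8 of Algorithm~\ref{alg:l_local_svrg}. With that in hand, the computation above is exact, and the smoothness bound is the only inequality invoked. This also explains the parameter identification in the forthcoming theorem: comparing with~\eqref{eq:sigma_k+1_bound}, this shift sequence contributes $\rho = q$, $C = Lq$, $G = 0$, and $D_2 = 0$ (with the batch-variance term $2\sigma^2/r$ from Lemma~\ref{lem:loopless_local_svrg_second_moment} being absorbed into $D_1'$ rather than into the $\sigma_k^2$ recursion).
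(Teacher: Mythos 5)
Your proposal is correct and matches the paper's own proof: condition on $x_1^k,\ldots,x_n^k$, split on the Bernoulli update of $y^{k+1}$ to get $(1-q)\sigma_k^2 + \frac{q}{n}\sum_{i=1}^n\|\nabla f_i(x^k)-\nabla f_i(x^*)\|^2$, bound the second term via \eqref{eq:L_smoothness_cor} and the fact that the Bregman distances average to $f(x^k)-f(x^*)$ (since $\frac1n\sum_i\nabla f_i(x^*)=0$), then take full expectation by the tower property. The paper's argument is just a terser version of exactly these steps.
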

\begin{proof}
	By definition of $y^{k+1}$ we have
	\begin{eqnarray*}
		\EE\left[\sigma_{k+1}^2\mid x_1^k,\ldots, x_n^k\right] &=& \frac{1-q}{n}\sum\limits_{i=1}^n\|\nabla f_i(y^k) - \nabla f_i(x^*)\|^2 + \frac{q}{n}\sum\limits_{i=1}^n\|\nabla f_i(x^k) - \nabla f_i(x^*)\|^2\\
		&\overset{\eqref{eq:L_smoothness_cor}}{\le}& (1-q)\sigma_k^2 + 2Lq(f(x^k) - f(x^*)).
	\end{eqnarray*}
	Taking the full mathematical expectation on both sides of previous inequality and using the tower property \eqref{eq:tower_property} we get the result.
\end{proof}

Using Corollary~\ref{cor:rand_loop} we obtain the following theorem.
\begin{theorem}\label{thm:ss_local_sgd}
	Assume that $f_i(x)$ is $\mu$-strongly convex and $L$-smooth for every $i\in[n]$. Then {\tt SS-Local-SGD} satisfies Assumption~\ref{ass:hetero_second_moment} with
	\begin{gather*}
		\tA = 4L,\quad \hA = 0,\quad \tB = 2,\quad \hB = 0,\quad \tF = 4L^2,\quad \hF = 0, \quad \tD_1 = \frac{2\sigma^2}{r},\quad \hD_1 = \sigma^2,\quad \sigma^2 = \frac{1}{n}\sum\limits_{i=1}^nD_{1,i},\\
		A' = 2L,\quad B' = 0,\quad F' = 2L^2, \quad D_1' = \frac{\sigma^2}{n},\\
		\sigma_k^2 = \frac{1}{n}\sum\limits_{i=1}^n\left\|\nabla f_i(y^k) - \nabla f_i(x^*)\right\|^2,\quad \rho = q,\quad C = Lq,\quad G = 0,\quad D_2 = 0,\\
		H = \frac{128(1-p)(2+p)(2+q)\gamma^2}{3p^2q},\quad D_3 = \frac{8(1-p)}{p^2}\left(\frac{2(p+2)\sigma^2}{r} + p\sigma^2\right)
	\end{gather*}
	under assumption that
	\begin{eqnarray*}
		\gamma &\le& \min\left\{\frac{1}{4L}, \frac{p\sqrt{3}}{32L\sqrt{2(1-p)(2+p)\left(1+\nicefrac{1}{(1-q)}\right)}}\right\}.
	\end{eqnarray*}
	Moreover, for $\mu > 0$ we have
	\begin{eqnarray}
		\EE\left[f(\overline{x}^K) - f(x^*)\right] &\le& \left(1 - \min\left\{\gamma\mu,\frac{q}{4}\right\}\right)^K\frac{\Phi^0}{\gamma} +2\gamma\left(\frac{\sigma^2}{n} + \gamma \frac{16L(1-p)}{p^2}\left(\frac{2(p+2)\sigma^2}{r} + p\sigma^2\right)\right) \notag
	\end{eqnarray}
	and when $\mu = 0$ we have
	\begin{eqnarray}
		\EE\left[f(\overline{x}^K) - f(x^*)\right] &\le& \frac{\Phi^0}{\gamma K} +2\gamma\left(\frac{\sigma^2}{n} + \gamma \frac{16L(1-p)}{p^2}\left(\frac{2(p+2)\sigma^2}{r} + p\sigma^2\right)\right) \notag
	\end{eqnarray}
	where $\Phi^0 = 2\|x^0-x^*\|^2+ \frac{512L(1-p)(2+p)(2+q)\gamma^3\sigma_0^2}{3p^2q}$.
\end{theorem}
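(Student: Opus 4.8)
The plan is to obtain Theorem~\ref{thm:ss_local_sgd} by reading off the parameters of Assumption~\ref{ass:hetero_second_moment} from the two lemmas just established and then feeding them into the random-loop, heterogeneous-data result, Corollary~\ref{cor:rand_loop}. First I would invoke Lemma~\ref{lem:loopless_local_svrg_second_moment}: inequality~\eqref{eq:unbiasedness_loopless_local_svrg} is exactly the unbiasedness requirement~\eqref{eq:unbiasedness}; matching~\eqref{eq:second_moment_loopless_local_svrg} against~\eqref{eq:hetero_second_moment_bound} identifies $\tA = 4L$, $\tB = 2$, $\tF = 4L^2$, $\tD_1 = 2\sigma^2/r$; matching~\eqref{eq:variance_loopless_local_svrg} against~\eqref{eq:hetero_var_bound} gives $\hA = \hF = \hB = 0$, $\hD_1 = \sigma^2$; and matching~\eqref{eq:second_moment_loopless_local_svrg_2} against~\eqref{eq:second_moment_bound_2} gives $A' = 2L$, $B' = 0$, $F' = 2L^2$, $D_1' = \sigma^2/n$. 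Next, Lemma~\ref{lem:loopless_local_svrg_sigma_k_bound} shows that the recursion~\eqref{eq:sigma_k+1_bound} holds with the specific potential $\sigma_k^2 = \tfrac1n\sum_{i=1}^n\|\nabla f_i(y^k)-\nabla f_i(x^*)\|^2$ — which is precisely the sequence appearing in~\eqref{eq:second_moment_loopless_local_svrg}, so the assumption is internally consistent — with contraction $\rho = q$, and $C = Lq$, $G = 0$, $D_2 = 0$. Here $\sigma_0^2$ is well defined through the size-$r$ minibatch evaluated at $y^0 = x^0$ in the initialization of Algorithm~\ref{alg:l_local_svrg}, and the Markov chain $\{y^k\}$ refreshes with probability $q$, matching the convention behind~\eqref{eq:sigma_k+1_bound}.

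With all of As.~\ref{ass:hetero_second_moment} verified, I would substitute into Corollary~\ref{cor:rand_loop}. Since $B' = 0$, the first two stepsize constraints of that corollary reduce to $\gamma \le \min\{\tfrac1{2A'},\tfrac{L}{F'}\} = \tfrac1{4L}$; the term $\tfrac{p}{2\sqrt{(1-p)((2+p)\tF+p\hF)}}$ becomes $\tfrac{p}{4L\sqrt{(1-p)(2+p)}}$; the constraint containing $G$ is vacuous because $G=0$; and the last constraint, using $\tA = 4L$, $\hA = 0$, $C = Lq$, $\tB = 2$, $\hB = 0$, $\rho = q$, has $\tfrac{2C((p+2)\tB+p\hB)}{\rho(1-\rho)} = \tfrac{4L(p+2)}{1-q}$, so it becomes $\gamma \le \tfrac{p\sqrt3}{32L\sqrt{2(1-p)(2+p)\bigl(1+\tfrac1{1-q}\bigr)}}$, which is the binding one and matches the displayed bound. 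The formulas for $H$ and $D_3$ in Corollary~\ref{cor:rand_loop} specialize with $(p+2)\tB+p\hB = 2(p+2)$, $2+\rho = 2+q$, $(p+2)\tD_1 + p\hD_1 = \tfrac{2(p+2)\sigma^2}{r} + p\sigma^2$, $D_2 = 0$, yielding exactly the stated $H$ and $D_3$. Finally, setting $B'=0$ and $D_2=0$ in~\eqref{eq:main_result_1_rand_loop}–\eqref{eq:main_result_2_rand_loop} collapses the Lyapunov term to $\Phi^0/\gamma = \bigl(2\|x^0-x^*\|^2 + 4LH\gamma\sigma_0^2\bigr)/\gamma$ with $4LH\gamma = \tfrac{512L(1-p)(2+p)(2+q)\gamma^3}{3p^2q}$, and the additive term to $2\gamma\bigl(\tfrac{\sigma^2}{n} + 2L\gamma D_3\bigr)$; this is precisely the claimed pair of estimates for $\mu>0$ and $\mu=0$.

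The work here is essentially bookkeeping rather than a substantive new argument — all the difficulty is already inside Lemma~\ref{lem:V_k_lemma_random}/Corollary~\ref{cor:rand_loop} and Theorem~\ref{thm:main_result}. The one place that needs genuine care is the stepsize simplification: one must check that the ``$\mu$-constraint'' $\gamma \le \tfrac{p}{16\mu}$ and the auxiliary $\gamma \le \tfrac{p}{4L\sqrt{(1-p)(2+p)}}$ coming out of Corollary~\ref{cor:rand_loop} are subsumed (using $\mu \le L$ and the numerical constants) by the two bounds displayed in the theorem, or else are to be kept explicit. One should also treat the degenerate case $q = 1$ ($\rho = 1$) via the footnote convention of Lemma~\ref{lem:V_k_lemma_random}, under which the $\tfrac{2BC}{\rho(1-\rho)}$-type expressions are read as $0$. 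A secondary point worth double-checking is that the batch size $r$ enters only through $\tD_1 = 2\sigma^2/r$, so that the $1/r$ reduction of the ``shift noise'' is correctly reflected in $D_3$ — and hence in the size of the limiting neighbourhood — while $D_1' = \sigma^2/n$ and the remaining parameters are unaffected.
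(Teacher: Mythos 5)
Your proposal is correct and follows essentially the same route as the paper: the paper's own derivation consists precisely of reading off the parameters of Assumption~\ref{ass:hetero_second_moment} from Lemmas~\ref{lem:loopless_local_svrg_second_moment} and~\ref{lem:loopless_local_svrg_sigma_k_bound} and then specializing Corollary~\ref{cor:rand_loop}, and your bookkeeping (the stepsize simplification with $B'=G=D_2=0$, the values of $H$, $D_3$ and $\Phi^0$) matches it. The only quibble is your aside that $\sigma_0^2$ is defined ``through the size-$r$ minibatch'': in this UBV setting $\sigma_k^2$ involves the exact gradients $\nabla f_i(y^k)$, so $\sigma_0^2$ is deterministic and the batch size $r$ enters only via $\tD_1=\nicefrac{2\sigma^2}{r}$, exactly as you note at the end.
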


The theorem above together with Lemma~\ref{lem:lemma2_stich} implies the following result.
\begin{corollary}\label{cor:ss_local_sgd_str_cvx}
	Let assumptions of Theorem~\ref{thm:ss_local_sgd} hold with $\mu > 0$. Then for 
	\begin{eqnarray*}
		\gamma_0 &=& \min\left\{\frac{1}{4L}, \frac{p\sqrt{3}}{32L\sqrt{2(1-p)(2+p)\left(1+\nicefrac{1}{(1-q)}\right)}}\right\},\\
		\widetilde{\Phi}^0 &=& 2\|x^0-x^*\|^2+ \frac{512L(1-p)(2+p)(2+q)\gamma_0^3\sigma_0^2}{3p^2q},\quad q = p,\\
		\gamma &=& \min\left\{\gamma_0,\frac{\ln\left(\max\left\{2, \min\left\{\nicefrac{n\widetilde{\Phi}^0\mu^2K^2}{2\sigma^2 },\nicefrac{p\widetilde{\Phi}^0\mu^3K^3}{32L(1-p)(3p+4)\sigma^2}\right\}\right\}\right)}{\mu K}\right\},\quad r = \left\lceil\frac{1}{p}\right\rceil,
	\end{eqnarray*}
	for all $K$ such that 
	\begin{eqnarray*}
		\text{either} && \frac{\ln\left(\max\left\{2, \min\left\{\nicefrac{n\widetilde{\Phi}^0\mu^2K^2}{2\sigma^2 },\nicefrac{p\widetilde{\Phi}^0\mu^3K^3}{32L(1-p)(3p+4)\sigma^2}\right\}\right\}\right)}{ K} \le p\\
		\text{or} && \gamma_0 \le \frac{\ln\left(\max\left\{2, \min\left\{\nicefrac{n\widetilde{\Phi}^0\mu^2K^2}{2\sigma^2 },\nicefrac{p\widetilde{\Phi}^0\mu^3K^3}{32L(1-p)(3p+4)\sigma^2}\right\}\right\}\right)}{\mu K}
	\end{eqnarray*}
	we have that $\EE\left[f(\overline{x}^K)-f(x^*)\right]$ is of the order
	\begin{equation}
		 \widetilde\cO\left(\frac{\widetilde{\Phi}^0}{\gamma_0}\exp\left(- \min\left\{\frac{1}{p}, \gamma_0\mu\right\} K\right) + \frac{\sigma^2}{n\mu K}  + \frac{L(1-p)\sigma^2}{p\mu^2 K^2}\right).\notag
	\end{equation}
	That is, to achieve $\EE\left[f(\overline{x}^K)-f(x^*)\right] \le \varepsilon$ in this case {\tt SS-Local-SGD} requires
	\begin{equation*}
		\widetilde{\cO}\left(\frac{L}{p\mu} + \frac{\sigma^2}{n\mu\varepsilon} + \sqrt{\frac{L(1-p)\sigma^2}{p\mu^2\varepsilon}}\right)
	\end{equation*}
	iterations/oracle calls per node (in expectation) and $\nicefrac{1}{p}$ times less communication rounds.	
\end{corollary}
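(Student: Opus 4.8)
The plan is to derive Corollary~\ref{cor:ss_local_sgd_str_cvx} from Theorem~\ref{thm:ss_local_sgd} in the same way Corollary~\ref{cor:local_sgd_str_cvx} follows from Theorem~\ref{thm:local_sgd}, i.e.\ by invoking the stepsize-tuning lemma (Lemma~\ref{lem:lemma2_stich}) to balance the exponentially-decaying term against the two polynomially-decaying noise terms in the strongly convex bound \eqref{eq:main_result_1_rand_loop}. First I would specialize Theorem~\ref{thm:ss_local_sgd} with the choices $q=p$ and $r=\lceil 1/p\rceil$. With $r=\lceil 1/p\rceil$ the term $\tfrac{2(p+2)\sigma^2}{r}$ is of order $p\sigma^2$, so $D_3 = \tfrac{8(1-p)}{p^2}\bigl(\tfrac{2(p+2)\sigma^2}{r}+p\sigma^2\bigr)$ simplifies to $\Theta\bigl(\tfrac{(1-p)\sigma^2}{p}\bigr)$, and $D_3$ does not depend on $\gamma$. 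Likewise $H = \Theta\bigl(\tfrac{(1-p)\gamma^2}{p}\bigr)$, $D_1' = \sigma^2/n$, $B'=0$, and $\rho=q=p$; the upper bound $\tfrac1\nu$ on $\gamma$ becomes $\gamma_0 = \min\bigl\{\tfrac{1}{4L}, \tfrac{p\sqrt3}{32L\sqrt{2(1-p)(2+p)(1+1/(1-q))}}\bigr\} = \Theta\bigl(\min\{\tfrac1L,\tfrac pL\}\bigr) = \Theta(p/L)$ (for $p$ bounded away from $1$; in general it is $\Theta(\min\{1,p\}/L)$, but the complexity statement absorbs this). Then $\Phi^0$ reduces to $\widetilde\Phi^0 = 2\|x^0-x^*\|^2 + \Theta\bigl(\tfrac{L(1-p)\gamma_0^3\sigma_0^2}{p}\bigr)$, matching the displayed constant in the corollary.

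Next I would apply Lemma~\ref{lem:lemma2_stich} (the same lemma used to pass from Theorem~\ref{thm:local_sgd} to Corollary~\ref{cor:local_sgd_str_cvx}, and from Theorem~\ref{thm:local_sgd_star} to Corollary~\ref{cor:sstarsgd}) with the identifications $a = \widetilde\Phi^0$, $c_1 = 2D_1' + \tfrac{4B'D_2}{3\rho} = \tfrac{2\sigma^2}{n}$, and $c_2 = 4LD_3 = \Theta\bigl(\tfrac{L(1-p)\sigma^2}{p}\bigr)$. The lemma then gives, under the stated "either/or" condition on $K$ guaranteeing that the tuned stepsize $\gamma_K = \tfrac{\ln(\max\{2,\min\{a\mu^2K^2/c_1,\,a\mu^3K^3/c_2\}\})}{\mu K}$ is the active one, the rate
\begin{equation*}
\EE\left[f(\overline{x}^K)\right] - f(x^*) = \widetilde\cO\left(\frac{\widetilde\Phi^0}{\gamma_0}\exp\left(-\min\left\{\tfrac1p,\gamma_0\mu\right\}K\right) + \frac{\sigma^2}{n\mu K} + \frac{L(1-p)\sigma^2}{p\mu^2 K^2}\right),
\end{equation*}
where the exponent is $\min\{\rho,\mu/\nu\}=\min\{p,\gamma_0\mu\}$; since $\rho=p$ and $\gamma_0\mu = \Theta(p\mu/L)\le p$, this is $\min\{1/p,\gamma_0\mu\}\cdot K$ after re-expressing $\exp(-pK)$ as $\exp(-(1/p)\cdot p^2 K)$ — more cleanly, the decay is $\exp(-\min\{p,\gamma_0\mu\}K)$, which the authors write as $\exp(-\min\{1/p,\gamma_0\mu\}K)$ only up to the cosmetic rescaling of the iteration count relative to communication rounds; I would follow the paper's convention here. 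Inverting this expression (setting each term $\le\varepsilon$ and solving for $K$) yields
\begin{equation*}
K = \widetilde\cO\left(\left(\frac1p + \frac{1}{\gamma_0\mu}\right)\log\frac{\widetilde\Phi^0}{\gamma_0\varepsilon} + \frac{\sigma^2}{n\mu\varepsilon} + \sqrt{\frac{L(1-p)\sigma^2}{p\mu^2\varepsilon}}\right) = \widetilde\cO\left(\frac{L}{p\mu} + \frac{\sigma^2}{n\mu\varepsilon} + \sqrt{\frac{L(1-p)\sigma^2}{p\mu^2\varepsilon}}\right),
\end{equation*}
using $\tfrac1{\gamma_0\mu} = \Theta(\tfrac{L}{p\mu}) \ge \tfrac1p$ to absorb the $\tfrac1p$ term. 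The final claim about $1/p$ times fewer communication rounds follows because communication happens with probability $p$ per iteration.

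The main obstacle is bookkeeping rather than conceptual: one must be careful that with $q=p$ and $r=\lceil 1/p\rceil$ every parameter feeding into $\Phi^0$, $\Psi^0$, $\gamma_0$ really does collapse to the advertised $\widetilde\Phi^0$, $c_1$, $c_2$ (in particular checking the $(2+p)$, $(2+q)$ factors are harmless constants and that $D_2=0$ kills the $\tfrac{4B'D_2}{3\rho}$ contribution, which it does since $B'=0$ anyway), and that the exponential-rate bookkeeping $\min\{\rho,\mu/\nu\}$ versus the paper's $\min\{1/p,\gamma_0\mu\}$ is reconciled consistently — the safest route is to state the rate exactly as Lemma~\ref{lem:lemma2_stich} outputs it and then simplify, exactly mirroring the derivation of Corollary~\ref{cor:sstarsgd}. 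No new inequalities are needed; everything reduces to plugging Theorem~\ref{thm:ss_local_sgd} into the already-established tuning lemma.
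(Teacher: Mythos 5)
Your proposal matches the paper's own derivation: the paper proves this corollary exactly by specializing Theorem~\ref{thm:ss_local_sgd} to $q=p$, $r=\lceil 1/p\rceil$ and feeding $a=\widetilde{\Phi}^0$, $c_1=2D_1'=\nicefrac{2\sigma^2}{n}$, $c_2=4LD_3$ into Lemma~\ref{lem:lemma2_stich}, then absorbing $\nicefrac{1}{p}\le \nicefrac{L}{p\mu}$, which is precisely your plan. The only loose spot is your discussion of the exponent: the clean justification is simply that $\gamma_0\mu\le p\le \nicefrac{1}{p}$, so $\min\{p,\gamma_0\mu\}=\min\{\nicefrac{1}{p},\gamma_0\mu\}=\gamma_0\mu$, not any "rescaling" of iterations versus communications — but this does not affect correctness.
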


Combining Theorem~\ref{thm:ss_local_sgd} and Lemma~\ref{lem:lemma_technical_cvx} we derive the following result for the convergence of {\tt SS-Local-SGD} in the case when $\mu = 0$.
\begin{corollary}
	\label{cor:ss_local_sgd_cvx}
	Let assumptions of Theorem~\ref{thm:ss_local_sgd} hold with $\mu = 0$. Then for $q = p,$ $r = \lceil\nicefrac{1}{p}\rceil$ and
	\begin{eqnarray*}
		\gamma_0 &=& \min\left\{\frac{1}{4L}, \frac{p\sqrt{3}}{32L\sqrt{2(1-p)(2+p)\left(1+\nicefrac{1}{(1-q)}\right)}}\right\},\\	
		\gamma &=& \min\left\{\gamma_0, \sqrt[3]{\frac{3p^3R_0^2}{256L(1-p)(2+p)^2\sigma_0^2}}, \sqrt{\frac{nR_0^2}{\sigma^2 K}}, \sqrt[3]{\frac{pR_0^2}{16L(1-p)(3p+4)\sigma^2 K}}\right\},
	\end{eqnarray*}
	where $R_0 = \|x^0 - x^*\|$, we have that $\EE\left[f(\overline{x}^K)-f(x^*)\right]$ is of the order
	\begin{eqnarray*}
		\cO\left(\frac{LR_0^2 + \sqrt[3]{L(1-p)\sigma_0^2R_0^4}}{pK} + \sqrt{\frac{\sigma^2R_0^2}{n K}} + \frac{\sqrt[3]{LR_0^4(1-p)\sigma^2}}{p^{\nicefrac{1}{3}}K^{\nicefrac{2}{3}}} \right).
	\end{eqnarray*}
	That is, to achieve $\EE\left[f(\overline{x}^K)-f(x^*)\right] \le \varepsilon$ in this case {\tt SS-Local-SGD} requires
	\begin{eqnarray*}
		\cO\left(\frac{LR_0^2 + \sqrt[3]{L(1-p)\sigma_0^2R_0^4}}{p\varepsilon} + \frac{\sigma^2 R_0^2}{n\varepsilon^2} + \frac{R_0^2\sqrt{L(1-p)\sigma^2}}{p^{\nicefrac{1}{2}}\varepsilon^{\nicefrac{3}{2}}}\right)
	\end{eqnarray*}
	iterations/oracle calls per node (in expectation) and $\nicefrac{1}{p}$ times less communication rounds.
\end{corollary}

\begin{remark}
	To get the rate from Tbl.~\ref{tbl:special_cases_weakly_convex} it remains to apply the following inequality:
	\begin{eqnarray*}
		\sigma_0^2 = \frac{1}{n}\sum\limits_{i=1}^n\|\nabla f_{i}(x^0)-\nabla f_{i}(x^*)\|^2 \overset{\eqref{eq:L_smoothness}}{\le} L^2 \|x^0-x^*\|^2.
	\end{eqnarray*}
\end{remark}

\subsubsection{Expected Smoothness and Arbitrary Sampling}\label{sec:loopless_local_svrg_es}
In this section we consider the same method {\tt SS-Local-SGD}, but without assumption that the stochastic gradient has a uniformly bounded variance. Instead of this we consider the same setup as in Section~\ref{sec:sgd_es}, i.e.\ we assume that each worker $i\in [n]$ at any point $x\in\R^d$ has an access to the unbiased estimator $\nabla f_{\xi_i}(x)$ of $\nabla f_i(x)$ satisfying Assumption~\ref{ass:expected_smoothness}.

\begin{lemma}\label{lem:loopless_local_svrg_es_second_moment}
	Let $f_i$ be convex and $L$-smooth for all $i\in[n]$. Let Assumption~\ref{ass:expected_smoothness} holds. Then for all $k\ge 0$
	\begin{eqnarray}
		\frac{1}{n}\sum\limits_{i=1}^n \EE_k\left[g_i^k\right] &=& \frac{1}{n}\sum\limits_{i=1}^n\nabla f_i(x_i^k), \label{eq:unbiasedness_loopless_local_svrg_es}\\
		\frac{1}{n}\sum\limits_{i=1}^n \EE\left[\|\bar{g}_i^k\|^2\right] &\le& 8L\EE\left[f(x^k)-f(x^*)\right] + 2\EE[\sigma_k^2] + 4L^2\EE[V_k],\label{eq:second_moment_loopless_local_svrg_es}\\
		\frac{1}{n}\sum\limits_{i=1}^n \EE\left[\|g_i^k-\bar{g}_i^k\|^2\right] &\le& 8\cL\EE\left[f(x^k) - f(x^*)\right] + 4\cL L\EE[V_k] + 2\sigma_*^2,\label{eq:variance_loopless_local_svrg_es}\\
		\EE\left[\left\|\frac{1}{n}\sum\limits_{i=1}^ng_i^k\right\|^2\right] &\le& 4\left(\frac{2\cL}{n}+L\right)\EE\left[f(x^k)-f(x^*)\right] + 2L\left(\frac{2\cL}{n} + L\right)\EE[V_k] + \frac{2\sigma_*^2}{n},\label{eq:second_moment_loopless_local_svrg_es_2}
	\end{eqnarray}
	where $\sigma_k^2 \eqdef \frac{1}{n}\sum\limits_{i=1}^n\left\|\nabla f_{\oxi_i^k}(y^k) - \nabla f_i(x^*)\right\|^2$ and $\sigma_*^2 \eqdef \frac{1}{n}\sum_{i=1}^n\EE_{\xi_i}\|\nabla f_{\xi_i}(x^*) - \nabla f_i(x^*)\|^2$.
\end{lemma}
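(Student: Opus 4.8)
\textbf{Plan of proof for Lemma~\ref{lem:loopless_local_svrg_es_second_moment}.}
The plan is to follow the same four-step template used in Lemma~\ref{lem:loopless_local_svrg_second_moment}, replacing every appeal to the uniformly bounded variance~\eqref{eq:bounded_variance} by the expected smoothness bound~\eqref{eq:expected_smoothness_1} from Assumption~\ref{ass:expected_smoothness}. First I would establish unbiasedness~\eqref{eq:unbiasedness_loopless_local_svrg_es}: since $g_i^k = \nabla f_{\xi_i^k}(x_i^k) - \nabla f_{\oxi_i^k}(y^k) + \nabla f_{\oxi^k}(y^k)$, taking $\EE_k$ and using $\EE_k[\nabla f_{\xi_i^k}(x_i^k)] = \nabla f_i(x_i^k)$ together with $\EE_k\left[\nabla f_{\oxi^k}(y^k) - \frac1n\sum_i \nabla f_{\oxi_i^k}(y^k)\right]=0$ (because $\nabla f_{\oxi^k}(y^k) = \frac1n\sum_i \nabla f_{\oxi_i^k}(y^k)$ by construction) gives the claim verbatim as in the UBV case.

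Next I would bound $\frac1n\sum_i \EE\|\bar g_i^k\|^2$ where $\bar g_i^k = \nabla f_i(x_i^k) - \nabla f_i(y^k) + \nabla f(y^k)$. Writing $\bar g_i^k = (\nabla f_i(x_i^k)-\nabla f_i(x^*)) - \bigl(\nabla f_i(y^k)-\nabla f_i(x^*) - (\nabla f(y^k)-\nabla f(x^*))\bigr)$, applying~\eqref{eq:a_b_norm_squared}, then~\eqref{eq:L_smoothness_cor} on the first term and~\eqref{eq:variance_decomposition} to drop the $\nabla f(y^k)-\nabla f(x^*)$ piece from the second, and finally~\eqref{eq:poiouhnkj} to convert $D_{f_i}(x_i^k,x^*)$ into $f(x^k)-f(x^*)$ plus $L^2 V_k$, I get $8L\EE[f(x^k)-f(x^*)] + 4L^2\EE[V_k] + \frac2n\sum_i\EE\|\nabla f_{\oxi_i^k}(y^k)-\nabla f_i(x^*)\|^2$; the last term is exactly $2\EE[\sigma_k^2]$ with the new definition $\sigma_k^2 = \frac1n\sum_i\|\nabla f_{\oxi_i^k}(y^k)-\nabla f_i(x^*)\|^2$. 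Note the absence of the $\frac{2\sigma^2}{r}$ term present in~\eqref{eq:second_moment_loopless_local_svrg}: here $\sigma_k^2$ is defined directly in terms of $\nabla f_{\oxi_i^k}(y^k)$ rather than $\nabla f_i(y^k)$, so the batch-variance correction is absorbed into $\sigma_k^2$ itself. For the variance bound~\eqref{eq:variance_loopless_local_svrg_es}, I would compute $\frac1n\sum_i\EE_k\|g_i^k - \bar g_i^k\|^2 = \frac1n\sum_i\EE_k\|\nabla f_{\xi_i^k}(x_i^k)-\nabla f_i(x_i^k)\|^2$, bound it above by $\frac1n\sum_i\EE_k\|\nabla f_{\xi_i^k}(x_i^k)-\nabla f_i(x^*)\|^2$ via~\eqref{eq:variance_decomposition}, split via~\eqref{eq:a_b_norm_squared} into $\nabla f_{\xi_i^k}(x_i^k)-\nabla f_{\xi_i^k}(x^*)$ (controlled by $2\cL D_{f_i}(x_i^k,x^*)$ through~\eqref{eq:expected_smoothness_1}) and $\nabla f_{\xi_i^k}(x^*)-\nabla f_i(x^*)$ (averaging to $\sigma_*^2$), and apply~\eqref{eq:poiouhnkj}; this mirrors~\eqref{eq:hbdsfbhdbfvfdh} from the ES analysis of plain {\tt Local-SGD}. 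Finally, for~\eqref{eq:second_moment_loopless_local_svrg_es_2} I would use independence of the $\xi_i^k$ across nodes: $\EE_k\|\frac1n\sum_i g_i^k\|^2 \overset{\eqref{eq:variance_decomposition}}{=} \frac{1}{n^2}\sum_i\EE_k\|g_i^k-\bar g_i^k\|^2 + \|\frac1n\sum_i\nabla f_i(x_i^k)\|^2$ — here the cross-node control variate cancels in the mean since $\frac1n\sum_i(\nabla f_{\oxi^k}(y^k)-\nabla f_{\oxi_i^k}(y^k))=0$ — then bound the first term by $\frac1n$ times the RHS of~\eqref{eq:variance_loopless_local_svrg_es} (giving $\frac{8\cL}{n}\cdot\tfrac{1}{...}$, but more carefully one uses the per-node ES bound to get $\frac{8\cL}{n}(f(x^k)-f(x^*))$ scaled appropriately) and the second via~\eqref{eq:vdgasvgda}.

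I do not expect a serious obstacle; the computation is a routine adaptation. The one point requiring care is the bookkeeping in~\eqref{eq:second_moment_loopless_local_svrg_es_2}: one must apply the expected smoothness bound at the level of individual summands $\EE_k\|\nabla f_{\xi_i^k}(x_i^k)-\nabla f_{\xi_i^k}(x^*)\|^2 \le 2\cL D_{f_i}(x_i^k,x^*)$ before summing and dividing by $n^2$, so that the $\cL$-dependent coefficient ends up as $\frac{2\cL}{n}$ rather than $\frac{2\cL}{n^2}\cdot n$ applied to a Bregman term that then needs~\eqref{eq:poiouhnkj}; tracking the factors through~\eqref{eq:poiouhnkj} yields the stated $4(\tfrac{2\cL}{n}+L)$ and $2L(\tfrac{2\cL}{n}+L)$ coefficients and the $\frac{2\sigma_*^2}{n}$ residual. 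The remaining subtlety is purely definitional — ensuring the new $\sigma_k^2$ (defined via $\nabla f_{\oxi_i^k}$) is the quantity that will satisfy a clean recursion of the form~\eqref{eq:sigma_k+1_bound} in the companion lemma — but that is the business of the next lemma, not this one.
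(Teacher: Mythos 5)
Your plan follows the paper's proof essentially step by step: unbiasedness inherited verbatim from the UBV case, the same splitting around $\nabla f_i(x^*)$ together with \eqref{eq:L_smoothness_cor}, \eqref{eq:variance_decomposition} and \eqref{eq:poiouhnkj} for \eqref{eq:second_moment_loopless_local_svrg_es}, expected smoothness in place of bounded variance for \eqref{eq:variance_loopless_local_svrg_es} (mirroring \eqref{eq:hbdsfbhdbfvfdh}), and conditional independence of the fresh samples across nodes plus \eqref{eq:vdgasvgda} for \eqref{eq:second_moment_loopless_local_svrg_es_2}, with the per-node application of \eqref{eq:expected_smoothness_1} producing the $\nicefrac{2\cL}{n}$ coefficient exactly as in the paper.

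The one point you should fix is your identification of $\bar g_i^k$. You write $\bar g_i^k = \nabla f_i(x_i^k) - \nabla f_i(y^k) + \nabla f(y^k)$, i.e.\ you also average out the stale minibatch $\oxi_i^k$; but $\bar g_i^k$ here is the expectation over the $k$-th iteration's fresh randomness only, namely $\bar g_i^k = \nabla f_i(x_i^k) - \nabla f_{\oxi_i^k}(y^k) + \nabla f_{\oxi^k}(y^k)$, and your subsequent computations silently use this correct version. Concretely: (i) under your stated definition the leftover term in the bound on $\frac1n\sum_i\EE\|\bar g_i^k\|^2$ is $\frac{2}{n}\sum_i\EE\|\nabla f_i(y^k)-\nabla f_i(x^*)\|^2$, which is only $\le 2\EE[\sigma_k^2]$ after an additional Jensen/variance-decomposition step, not ``exactly'' $2\EE[\sigma_k^2]$ as you claim; (ii) more seriously, the identity $\frac1n\sum_i\EE_k\|g_i^k-\bar g_i^k\|^2 = \frac1n\sum_i\EE_k\|\nabla f_{\xi_i^k}(x_i^k)-\nabla f_i(x_i^k)\|^2$, on which your proof of \eqref{eq:variance_loopless_local_svrg_es} (and the cross-node variance decomposition in \eqref{eq:second_moment_loopless_local_svrg_es_2}) rests, holds only when the shift terms cancel exactly, i.e.\ with the paper's $\bar g_i^k$. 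With your definition, $g_i^k-\bar g_i^k$ also contains $\nabla f_{\oxi_i^k}(y^k)-\nabla f_i(y^k)$ and $\nabla f_{\oxi^k}(y^k)-\nabla f(y^k)$; under expected smoothness alone their second moments are controlled by Bregman divergences at $y^k$, i.e.\ by $\sigma_k^2$-type quantities, so \eqref{eq:variance_loopless_local_svrg_es} with no $\EE[\sigma_k^2]$ term would not follow as written (and the $\hB=0$ bookkeeping used downstream in Theorem~\ref{thm:ss_local_sgd_es} would change). Once $\bar g_i^k$ is stated correctly, every step you outline is exactly the paper's argument.
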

\begin{proof}
	First of all, \eqref{eq:unbiasedness_loopless_local_svrg_es} follows from \eqref{eq:unbiasedness_loopless_local_svrg}. Next, using $\bar{g}_i^k = \nabla f_i(x_i^k) - \nabla f_{\oxi_i^k}(y^k) + \nabla f_{\oxi^k}(y^k)$ we get
	\begin{eqnarray*}
		\frac{1}{n}\sum\limits_{i=1}^n\EE\left[\|\bar{g}_i^k\|^2\right] &\overset{\eqref{eq:a_b_norm_squared}}{\le}& \frac{2}{n}\sum\limits_{i=1}^n\EE\left[\|\nabla f_{i}(x_i^k) - \nabla f_{i}(x^*)\|^2\right]\\
		&&\quad + \frac{2}{n}\sum\limits_{i=1}^n\EE\left[\left\|\nabla f_{\oxi_i^k}(y^k) - \nabla f_i(x^*) - (\nabla f_{\oxi^k}(y^k) - \nabla f(x^*))\right\|^2\right]\\
		&\overset{\eqref{eq:L_smoothness_cor},\eqref{eq:variance_decomposition}}{\le}& \frac{4L}{n}\sum\limits_{i=1}^n\EE\left[D_{f_i}(x_i^k,x^*)\right] + \frac{2}{n}\sum\limits_{i=1}^n\EE\left[\|\nabla f_{\oxi_i^k}(y^k)- \nabla f_i(x^*)\|^2\right]\\
		&\overset{\eqref{eq:poiouhnkj}}{\le}& 8L\EE\left[f(x^k)-f(x^*)\right] + 2\EE[\sigma_k^2] + 4L^2\EE[V_k]
	\end{eqnarray*}
	and	
	\begin{eqnarray}
		\frac{1}{n}\sum\limits_{i=1}^n\EE\left[\|g_i^k-\bar{g}_i^k\|^2\right] &=& \frac{1}{n}\sum\limits_{i=1}^n\EE\left[\|\nabla f_{\xi_i^k}(x_i^k) - \nabla f_i(x_i^k)\|^2\right]\notag\\
		&\overset{\eqref{eq:variance_decomposition}}{\le}& \frac{1}{n}\sum\limits_{i=1}^n\EE\left[\|\nabla f_{\xi_i^k}(x_i^k) - \nabla f_i(x^*)\|^2\right]\notag\\
		&\overset{\eqref{eq:a_b_norm_squared}}{\le}& \frac{2}{n}\sum\limits_{i=1}^n\EE\left[\|\nabla f_{\xi_i^k}(x_i^k) - \nabla f_{\xi_i^k}(x^*)\|^2\right] + \frac{2}{n}\sum\limits_{i=1}^n\EE\left[\|\nabla f_{\xi_i^k}(x^*) - \nabla f_{i}(x^*)\|^2\right]\notag \\
		&\overset{\eqref{eq:expected_smoothness_1}}{\le}& \frac{4\cL}{n}\sum\limits_{i=1}^n\EE\left[D_{f_i}(x_i^k,x^*)\right] + 2\sigma_*^2\notag\\
		&\overset{\eqref{eq:poiouhnkj}}{\le}& 8\cL\EE\left[f(x^k) - f(x^*)\right] + 4\cL L\EE[V_k] + 2\sigma_*^2. \label{eq:bshjbdhsbdhbucsb}
	\end{eqnarray}
	Finally, we use independence of $\xi_1^k,\ldots,\xi_n^k$ and derive
	\begin{eqnarray*}
		\EE\left[\left\|\frac{1}{n}\sum\limits_{i=1}^n g_i^k\right\|^2\right] &=& \EE\left[\left\|\frac{1}{n}\sum\limits_{i=1}^n \nabla f_{\xi_i^k}(x_i^k)\right\|^2\right]\\
		&\overset{\eqref{eq:tower_property},\eqref{eq:variance_decomposition}}{=}& \EE\left[\left\|\frac{1}{n}\sum\limits_{i=1}^n (\nabla f_{\xi_i^k}(x_i^k)-\nabla f_i(x_i^k))\right\|^2\right] + \EE\left[\left\|\frac{1}{n}\sum\limits_{i=1}^n \nabla f_{i}(x_i^k)\right\|^2\right]\\
		&=& \frac{1}{n^2}\sum\limits_{i=1}^n\EE\left[\|\nabla f_{\xi_i^k}(x_i^k) - \nabla f_i(x_i^k)\|^2\right] + \EE\left[\left\|\frac{1}{n}\sum\limits_{i=1}^n \nabla f_{i}(x_i^k)\right\|^2\right]\\
		&\overset{\eqref{eq:bshjbdhsbdhbucsb},\eqref{eq:vdgasvgda}}{\le}& 4\left(\frac{2\cL}{n}+L\right)\EE\left[f(x^k)-f(x^*)\right] + 2L\left(\frac{2\cL}{n} + L\right)\EE[V_k] + \frac{2\sigma_*^2}{n}
	\end{eqnarray*}
	which finishes the proof.
\end{proof}

\begin{lemma}\label{lem:loopless_local_svrg_es_sigma_k_bound}
	Let $f_i$ be convex and $L$-smooth for all $i\in[n]$ and Assumption~\ref{ass:expected_smoothness} holds. Then for all $k\ge 0$
	\begin{eqnarray}
		\EE\left[\sigma_{k+1}^2\right] &\le& (1-q)\EE\left[\sigma_k^2\right] + 2q\left(\frac{2\cL}{r} + L\right)\EE\left[f(x^k) - f(x^*)\right] + \frac{2q\sigma_*^2}{r}\label{eq:loopless_local_svrg_es_sigma_k_bound}
	\end{eqnarray}
	where $\sigma_k^2 \eqdef \frac{1}{n}\sum\limits_{i=1}^n\left\|\nabla f_{\oxi_i^k}(y^k) - \nabla f_i(x^*)\right\|^2$ and $\sigma_*^2 \eqdef \frac{1}{n}\sum_{i=1}^n\EE_{\xi_i}\|\nabla f_{\xi_i}(x^*) - \nabla f_i(x^*)\|^2$.
\end{lemma}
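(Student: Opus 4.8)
\textbf{Proof plan for Lemma~\ref{lem:loopless_local_svrg_es_sigma_k_bound}.}
The plan is to mimic the proof of Lemma~\ref{lem:loopless_local_svrg_sigma_k_bound}, but now accounting for two extra features: the shift is a minibatch of stochastic gradients $\nabla f_{\oxi_i^k}(y^k) = \frac{1}{r}\sum_{j=1}^r \nabla f_{\oxi_{i,j}^k}(y^k)$ rather than an exact gradient, and the stochastic gradients obey expected smoothness (As.~\ref{ass:expected_smoothness}) rather than a uniform variance bound. First I would condition on $x_1^k,\dots,x_n^k$ and use the definition of $y^{k+1}$: with probability $1-q$ nothing changes and with probability $q$ we set $y^{k+1}=x^k$ and draw fresh samples $\oxi_{i,j}^{k+1}$. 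Since in the event $y^{k+1}\neq y^k$ the samples are fresh and independent of everything up to step $k$, I would write
\begin{equation*}
	\EE\left[\sigma_{k+1}^2\mid x_1^k,\ldots,x_n^k\right] = (1-q)\sigma_k^2 + \frac{q}{n}\sum\limits_{i=1}^n \EE\left[\left\|\tfrac{1}{r}\sum\limits_{j=1}^r\nabla f_{\oxi_{i,j}^{k+1}}(x^k) - \nabla f_i(x^*)\right\|^2\right].
\end{equation*}

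The second step is to control the fresh-minibatch term. Using the variance decomposition~\eqref{eq:variance_decomposition} and independence of the $r$ samples,
\begin{equation*}
	\EE\left[\left\|\tfrac{1}{r}\sum\limits_{j=1}^r\nabla f_{\oxi_{i,j}^{k+1}}(x^k) - \nabla f_i(x^*)\right\|^2\right] = \frac{1}{r^2}\sum\limits_{j=1}^r \Var\!\left[\nabla f_{\oxi_{i,j}^{k+1}}(x^k)\right] + \left\|\nabla f_i(x^k)-\nabla f_i(x^*)\right\|^2,
\end{equation*}
and $\Var[\nabla f_{\oxi_{i,j}^{k+1}}(x^k)] \le \EE\|\nabla f_{\oxi_{i,j}^{k+1}}(x^k)-\nabla f_i(x^*)\|^2 \le 2\|\nabla f_{\oxi_{i,j}^{k+1}}(x^k)-\nabla f_{\oxi_{i,j}^{k+1}}(x^*)\|^2 + 2\|\nabla f_{\oxi_{i,j}^{k+1}}(x^*)-\nabla f_i(x^*)\|^2$, then apply~\eqref{eq:a_b_norm_squared} and~\eqref{eq:expected_smoothness_1} on the first piece to get $\le 2\cdot 2\cL D_{f_i}(x^k,x^*)$ in expectation, and the second piece is exactly $2\sigma_*^2$-type averaged over $i$. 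Combining with the $\|\nabla f_i(x^k)-\nabla f_i(x^*)\|^2 \le 2LD_{f_i}(x^k,x^*)$ bound from~\eqref{eq:L_smoothness_cor}, I expect
\begin{equation*}
	\frac{1}{n}\sum\limits_{i=1}^n\EE\left[\left\|\tfrac{1}{r}\sum\limits_{j=1}^r\nabla f_{\oxi_{i,j}^{k+1}}(x^k)-\nabla f_i(x^*)\right\|^2\right] \le \left(\frac{4\cL}{r}+2L\right)\frac{1}{n}\sum\limits_{i=1}^n D_{f_i}(x^k,x^*) + \frac{2\sigma_*^2}{r},
\end{equation*}
and since $\frac{1}{n}\sum_i D_{f_i}(x^k,x^*) = f(x^k)-f(x^*)$, plugging in and taking full expectation via the tower property~\eqref{eq:tower_property} yields exactly~\eqref{eq:loopless_local_svrg_es_sigma_k_bound} after identifying the coefficient $2q(\frac{2\cL}{r}+L)$.

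The main obstacle, and the point that needs care rather than cleverness, is bookkeeping the minibatch variance reduction factor $1/r$ correctly: one must be precise that the $r$ samples $\oxi_{i,j}^{k+1}$ are mutually independent (so the cross terms vanish and the $\frac{1}{r^2}\sum_j$ collapses to $\frac{1}{r}$ times a single-sample variance), while the bias term $\|\nabla f_i(x^k)-\nabla f_i(x^*)\|^2$ does \emph{not} get the $1/r$ improvement. A secondary subtlety is ensuring that, in the event $y^{k+1}=y^k$, the term $\sigma_k^2$ already incorporates the previously-drawn minibatch samples $\oxi_{i,j}^k$ consistently with its definition, so that the recursion is genuinely self-referential; this follows directly from the update rule in Algorithm~\ref{alg:l_local_svrg} which keeps $\oxi_{i,j}^{k+1}=\oxi_{i,j}^k$ whenever $y^{k+1}=y^k$. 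Once these are pinned down, the rest is routine and parallels Lemma~\ref{lem:loopless_local_svrg_sigma_k_bound}.
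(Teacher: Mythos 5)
Your proposal is correct and follows essentially the same route as the paper's proof: conditioning on the $q$-event, using the variance decomposition plus independence of the $r$ fresh samples to collapse the minibatch term to a $1/r$ factor, bounding the single-sample variance via the second moment around $\nabla f_i(x^*)$, and applying expected smoothness together with $\|\nabla f_i(x^k)-\nabla f_i(x^*)\|^2\le 2LD_{f_i}(x^k,x^*)$ before averaging over $i$ and taking full expectation. The only difference is the (immaterial) order in which the variance decomposition and recentering at $\nabla f_i(x^*)$ are applied, so no gap remains.
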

\begin{proof}
	By definition of $y^{k+1}$ we have
	\begin{eqnarray*}
		\EE\left[\sigma_{k+1}^2\mid x_1^k,\ldots, x_n^k\right] &=& \frac{1-q}{n}\sum\limits_{i=1}^n\|\nabla f_{\oxi_i^k}(y^k) - \nabla f_i(x^*)\|^2\\
		&&\quad + \frac{q}{n}\sum\limits_{i=1}^n\EE_{\oxi_i^{k+1}}\left[\|\nabla f_{\oxi_i^{k+1}}(x^k) - \nabla f_i(x^*)\|^2\right]\\
		&\overset{\eqref{eq:variance_decomposition}}{=}& (1-q)\sigma_k^2 + \frac{q}{n}\sum\limits_{i=1}^n\|\nabla f_{i}(x^k) - \nabla f_i(x^*)\|^2\\
		&&\quad + \frac{q}{n}\sum\limits_{i=1}^n\EE_{\oxi_i^{k+1}}\left[\|\nabla f_{\oxi_i^{k+1}}(x^k) - \nabla f_i(x^k)\|^2\right].
	\end{eqnarray*}
	Next, we use independence of $\oxi_{i,1}^{k+1}, \oxi_{i,2}^{k+1},\ldots, \oxi_{i,r}^{k+1}$ for all $i\in [n]$ and derive
	\begin{eqnarray*}
		\EE\left[\sigma_{k+1}^2\mid x_1^k,\ldots, x_n^k\right] &=& (1-q)\sigma_k^2 + \frac{q}{n}\sum\limits_{i=1}^n\|\nabla f_{i}(x^k) - \nabla f_i(x^*)\|^2\\
		&&\quad + \frac{q}{nr^2}\sum\limits_{i=1}^n\sum\limits_{j=1}^r\EE_{\oxi_{i,j}^{k+1}}\left[\|\nabla f_{\oxi_{i,j}^{k+1}}(x^k) - \nabla f_i(x^k)\|^2\right]\\
		&\overset{\eqref{eq:L_smoothness_cor},\eqref{eq:variance_decomposition}}{\le}& (1-q)\sigma_k^2 + 2Lq\left(f(x^k)-f(x^*)\right)\\
		&&\quad + \frac{q}{nr^2}\sum\limits_{i=1}^n\sum\limits_{j=1}^r\EE_{\oxi_{i,j}^{k+1}}\left[\|\nabla f_{\oxi_{i,j}^{k+1}}(x^k) - \nabla f_i(x^*)\|^2\right]\\
		&\overset{\eqref{eq:a_b_norm_squared}}{\le}& (1-q)\sigma_k^2 + 2Lq\left(f(x^k)-f(x^*)\right)\\
		&&\quad + \frac{2q}{nr^2}\sum\limits_{i=1}^n\sum\limits_{j=1}^r\EE_{\oxi_{i,j}^{k+1}}\left[\|\nabla f_{\oxi_{i,j}^{k+1}}(x^k) - \nabla f_{\oxi_{i,j}^{k+1}}(x^*)\|^2\right]\\
		&&\quad + \frac{2q}{nr^2}\sum\limits_{i=1}^n\sum\limits_{j=1}^r\EE_{\oxi_{i,j}^{k+1}}\left[\|\nabla f_{\oxi_{i,j}^{k+1}}(x^*) - \nabla f_{i}(x^*)\|^2\right]\\
		&\overset{\eqref{eq:expected_smoothness_1}}{\le}& (1-q)\sigma_k^2 + 2q\left(\frac{2\cL}{r} + L\right)\left(f(x^k) - f(x^*)\right) + \frac{2q\sigma_*^2}{r}.
	\end{eqnarray*}
	Taking the full mathematical expectation on both sides of previous inequality and using the tower property \eqref{eq:tower_property} we get the result.
\end{proof}

Using Corollary~\ref{cor:rand_loop} we obtain the following theorem.
\begin{theorem}\label{thm:ss_local_sgd_es}
	Assume that $f_i(x)$ is $\mu$-strongly convex and $L$-smooth for every $i\in[n]$. Let Assumption~\ref{ass:expected_smoothness} holds. Then {\tt SS-Local-SGD} satisfies Assumption~\ref{ass:hetero_second_moment} with
	\begin{gather*}
		\tA = 4L,\quad \hA = 4\cL,\quad \tB = 2,\quad \hB = 0,\quad \tF = 4L^2,\quad \hF = 4\cL L, \quad \tD_1 = 0,\\
		\hD_1 = 2\sigma_*^2,\quad \sigma_*^2 = \frac{1}{n}\sum\limits_{i=1}^n\EE_{\xi_i}\|\nabla f_{\xi_i}(x^*) - \nabla f_i(x^*)\|^2,\quad A' = 2\left(\frac{2\cL}{n} + L\right),\quad B' = 0,\quad F' = 2L\left(\frac{2\cL}{n} + L\right), \\
		D_1' = \frac{2\sigma_*^2}{n},\quad \sigma_k^2 = \frac{1}{n}\sum\limits_{i=1}^n\left\|\nabla f_{\oxi_i^k}(y^k) - \nabla f_i(x^*)\right\|^2,\quad \rho = q,\quad C = q\left(\frac{2\cL}{r} + L\right),\quad G = 0,\quad D_2 = \frac{2q\sigma_*^2}{r},\\
		H = \frac{128(1-p)(2+p)(2+q)\gamma^2}{3p^2q},\quad D_3 = \frac{8(1-p)}{p^2}\left(2p\sigma_*^2+\frac{32(2+p)\sigma_*^2}{3r}\right)
	\end{gather*}
	under assumption that
	\begin{eqnarray*}
		\gamma &\le& \min\left\{\frac{1}{4\left(\frac{2\cL}{n}+L\right)}, \frac{p\sqrt{3}}{32\sqrt{2L(1-p)\left((2+p)L + p\cL + \frac{(2+p)\left(\nicefrac{2\cL}{r} + L\right)}{(1-q)}\right)}}\right\}.
	\end{eqnarray*}
	Moreover, for $\mu > 0$ we have
	\begin{eqnarray}
		\EE\left[f(\overline{x}^K) - f(x^*)\right] &\le& \left(1 - \min\left\{\gamma\mu,\frac{q}{4}\right\}\right)^K\frac{\Phi^0}{\gamma} +2\gamma\left(\frac{2\sigma_*^2}{n} + \gamma \frac{16L(1-p)}{p^2}\left(2p\sigma_*^2+\frac{32(2+p)\sigma_*^2}{3r}\right)\right) \notag
	\end{eqnarray}
	and when $\mu = 0$ we have
	\begin{eqnarray}
		\EE\left[f(\overline{x}^K) - f(x^*)\right] &\le& \frac{\Phi^0}{\gamma K} +2\gamma\left(\frac{2\sigma_*^2}{n} + \gamma \frac{16L(1-p)}{p^2}\left(2p\sigma_*^2+\frac{32(2+p)\sigma_*^2}{3r}\right)\right) \notag
	\end{eqnarray}
	where $\Phi^0 = 2\|x^0-x^*\|^2+ \frac{512L(1-p)(2+p)(2+q)\gamma^3\EE[\sigma_0^2]}{3p^2q}$.
\end{theorem}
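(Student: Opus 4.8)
The plan is to instantiate the general machinery already developed for the random-local-loop setting (Corollary~\ref{cor:rand_loop}) with the specific parameter values stated in the theorem, and to verify that those values are legitimate by checking the hypotheses of Corollary~\ref{cor:rand_loop}. The starting point is the observation that {\tt SS-Local-SGD} under the expected-smoothness assumption produces a local direction $g_i^k = \nabla f_{\xi_i^k}(x_i^k) - \nabla f_{\oxi_i^k}(y^k) + \nabla f_{\oxi^k}(y^k)$, which is exactly the estimator analysed in Lemmas~\ref{lem:loopless_local_svrg_es_second_moment} and~\ref{lem:loopless_local_svrg_es_sigma_k_bound}. So first I would quote \eqref{eq:unbiasedness_loopless_local_svrg_es} to confirm the unbiasedness requirement~\eqref{eq:unbiasedness}, then read off $\tA, \hA, \tB, \hB, \tF, \hF, \tD_1, \hD_1$ from~\eqref{eq:second_moment_loopless_local_svrg_es}--\eqref{eq:variance_loopless_local_svrg_es}, read off $A', B', F', D_1'$ from~\eqref{eq:second_moment_loopless_local_svrg_es_2}, and read off $\rho, C, G, D_2$ from~\eqref{eq:loopless_local_svrg_es_sigma_k_bound} (note $G=0$ because the $\sigma_k^2$-recursion has no $V_k$ term, and $\sigma_k^2$ is defined here with the minibatch shift $\nabla f_{\oxi_i^k}(y^k)$ rather than $\nabla f_i(y^k)$). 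This establishes that Assumption~\ref{ass:hetero_second_moment} holds, hence (via the equivalence noted after that assumption) Assumption~\ref{ass:key_assumption}'s inequalities~\eqref{eq:second_moment_bound}--\eqref{eq:sigma_k+1_bound} hold.

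The second step is to invoke Lemma~\ref{lem:V_k_lemma_random}, which with these $\tA,\hA,\tB,\hB,\tF,\hF,\tD_1,\hD_1, C, G, D_2, \rho$ plugged in yields~\eqref{eq:sum_V_k_bounds} with the claimed
$H = \frac{128(1-p)(2+p)(2+q)\gamma^2}{3p^2 q}$ and $D_3 = \frac{8(1-p)}{p^2}\left(2p\sigma_*^2 + \frac{32(2+p)\sigma_*^2}{3r}\right)$.
Here I would carefully match the symbolic formulas $H = \frac{64(1-p)((p+2)\tB+p\hB)(2+\rho)\gamma^2}{3p^2\rho}$ and $D_3 = \frac{8(1-p)}{p^2}\left((p+2)\tD_1 + p\hD_1 + \frac{8D_2((p+2)\tB+p\hB)}{3\rho}\right)$ from~\eqref{eq:V_k_bound_random} against the substitutions $\tB = 2$, $\hB = 0$, $\tD_1 = 0$, $\hD_1 = 2\sigma_*^2$, $D_2 = \frac{2q\sigma_*^2}{r}$, $\rho = q$; the $(p+2)\cdot 2 = 2(p+2)$ factor accounts for the $128 = 64\cdot 2$ in $H$, and the $D_2$ term contributes $\frac{8\cdot\frac{2q\sigma_*^2}{r}\cdot 2(p+2)}{3q} = \frac{32(2+p)\sigma_*^2}{3r}$ in $D_3$. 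The third step is to confirm that the stepsize bound stated in the theorem, $\gamma \le \min\{\frac{1}{4(2\cL/n+L)},\ \frac{p\sqrt 3}{32\sqrt{2L(1-p)((2+p)L+p\cL+(2+p)(2\cL/r+L)/(1-q))}}\}$, is exactly (a simplification of) the conjunction of the stepsize constraints required by Lemma~\ref{lem:V_k_lemma_random} together with the $\gamma \le \min\{\frac{1}{2(A'+4B'C/(3\rho))}, \frac{L}{F'+4B'G/(3\rho)}, \frac{p}{16\mu}, \frac{p}{2\sqrt{(1-p)((2+p)\tF+p\hF)}}\}$ of Corollary~\ref{cor:rand_loop}; since $B' = 0$ the first two of those collapse to $\gamma \le \frac{1}{2A'} = \frac{1}{4(2\cL/n+L)}$, the $\frac{p}{16\mu}$ bound is absorbed into the definition of $\theta^K$ and does not need to be displayed (as in the analogous earlier theorems), and the remaining $\tF,\hF,C,\tB,\hB$-dependent bounds combine, using $(2+p)\tF + p\hF = (2+p)4L^2 + p\cdot 4\cL L$ and the $G=0$ simplification, into the single displayed square-root expression after bounding constants.

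Finally I would substitute $H$ and $D_3$ into the two conclusion inequalities of Corollary~\ref{cor:rand_loop} (the $\mu>0$ bound~\eqref{eq:main_result_1_rand_loop} and the $\mu=0$ bound~\eqref{eq:main_result_2_rand_loop}), using $D_1' = 2\sigma_*^2/n$, $B' = 0$ so that the $\frac{8B'}{3\rho}\gamma^2\EE\sigma_0^2$ term vanishes and the $\frac{4B'D_2}{3\rho}$ term vanishes, and collecting the residual additive term $2\gamma(D_1' + 2L\gamma D_3) = 2\gamma\left(\frac{2\sigma_*^2}{n} + \frac{16L(1-p)\gamma}{p^2}(2p\sigma_*^2 + \frac{32(2+p)\sigma_*^2}{3r})\right)$ and $\Phi^0 = 2\|x^0-x^*\|^2 + 4LH\gamma\EE[\sigma_0^2] = 2\|x^0-x^*\|^2 + \frac{512L(1-p)(2+p)(2+q)\gamma^3\EE[\sigma_0^2]}{3p^2q}$; this reproduces the displayed bounds verbatim. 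I expect the only genuinely fiddly part to be the third step: reconciling the four or five separate $\gamma$-constraints coming from Lemma~\ref{lem:V_k_lemma_random} and Corollary~\ref{cor:rand_loop} into the single compact stepsize expression stated, which requires absorbing various numerical constants and using $1-q/4 \ge 3/4$, $G = 0$, and $B' = 0$ to kill several terms — everything else is a direct lookup of previously established lemmas with bookkeeping of the substitutions.
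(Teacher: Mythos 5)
Your proposal is correct and follows essentially the same route as the paper: the theorem is obtained by plugging the parameters established in Lemmas~\ref{lem:loopless_local_svrg_es_second_moment} and~\ref{lem:loopless_local_svrg_es_sigma_k_bound} into the random-loop heterogeneous machinery of Lemma~\ref{lem:V_k_lemma_random}/Corollary~\ref{cor:rand_loop}, and your substitutions for $H$, $D_3$, $\Phi^0$, the residual term, and the stepsize bound all check out (including the observation that $B'=0$ and $G=0$ kill several terms and that the $(\tF,\hF)$-constraint is dominated by the displayed square-root constraint). The only cosmetic quibble is your rationale for dropping the $\gamma\le \nicefrac{p}{16\mu}$ condition, but the paper's own theorem statement omits it in exactly the same way, so this does not constitute a gap relative to the paper.
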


The theorem above together with Lemma~\ref{lem:lemma2_stich} implies the following result.
\begin{corollary}\label{cor:ss_local_sgd_str_cvx_es}
	Let assumptions of Theorem~\ref{thm:ss_local_sgd_es} hold with $\mu > 0$. Then for 
	\begin{eqnarray*}
		\gamma_0 &=& \min\left\{\frac{1}{4\left(\frac{2\cL}{n}+L\right)}, \frac{p\sqrt{3}}{32\sqrt{2L(1-p)\left((2+p)L + p\cL + \frac{(2+p)\left(\nicefrac{2\cL}{r} + L\right)}{(1-q)}\right)}}\right\},\\
		\widetilde{\Phi}^0 &=& 2\|x^0-x^*\|^2+ \frac{512L(1-p)(2+p)(2+q)\gamma_0^3\EE[\sigma_0^2]}{p^2q},\quad q = p,\\
		\gamma &=& \min\left\{\gamma_0,\frac{\ln\left(\max\left\{2, \min\left\{\nicefrac{n\widetilde{\Phi}^0\mu^2K^2}{4\sigma_*^2 },\nicefrac{p\widetilde{\Phi}^0\mu^3K^3}{64L(1-p)(1+\nicefrac{32(2+p)}{3})\sigma_*^2}\right\}\right\}\right)}{\mu K}\right\},\quad r = \left\lceil\frac{1}{p}\right\rceil,
	\end{eqnarray*}
	for all $K$ such that 
	\begin{eqnarray*}
		\text{either} && \frac{\ln\left(\max\left\{2, \min\left\{\nicefrac{n\widetilde{\Phi}^0\mu^2K^2}{4\sigma_*^2 },\nicefrac{p\widetilde{\Phi}^0\mu^3K^3}{64L(1-p)(1+\nicefrac{32(2+p)}{3})\sigma_*^2}\right\}\right\}\right)}{ K} \le p\\
		\text{or} && \gamma_0 \le \frac{\ln\left(\max\left\{2, \min\left\{\nicefrac{n\widetilde{\Phi}^0\mu^2K^2}{4\sigma_*^2 },\nicefrac{p\widetilde{\Phi}^0\mu^3K^3}{64L(1-p)(1+\nicefrac{32(2+p)}{3})\sigma_*^2}\right\}\right\}\right)}{\mu K}
	\end{eqnarray*}
	we have that $\EE\left[f(\overline{x}^K)-f(x^*)\right]$ is of the order
	\begin{equation}
		 \widetilde\cO\left(\frac{\widetilde{\Phi}^0}{\gamma_0}\exp\left(- \min\left\{\frac{1}{p}, \gamma_0\mu\right\} K\right) + \frac{\sigma_*^2}{n\mu K}  + \frac{L(1-p)\sigma_*^2}{p\mu^2 K^2}\right).\notag
	\end{equation}
	That is, to achieve $\EE\left[f(\overline{x}^K)-f(x^*)\right] \le \varepsilon$ in this case {\tt SS-Local-SGD} requires
	\begin{equation*}
		\widetilde{\cO}\left(\frac{L}{p\mu} + \frac{\cL}{n\mu} + \frac{\sqrt{\cL L(1-p)}}{\sqrt{p}\mu} + \frac{\sigma_*^2}{n\mu\varepsilon} + \sqrt{\frac{L(1-p)\sigma_*^2}{p\mu^2\varepsilon}}\right)
	\end{equation*}
	iterations/oracle calls per node (in expectation) and $\nicefrac{1}{p}$ times less communication rounds.	
\end{corollary}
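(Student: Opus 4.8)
The plan is to combine Theorem~\ref{thm:ss_local_sgd_es}, which already certifies that {\tt SS-Local-SGD} under expected smoothness satisfies Assumption~\ref{ass:key_assumption} with explicit parameters, with the generic strongly convex complexity statement of Corollary~\ref{cor:app_complexity_cor_str_cvx} (equivalently, Lemma~\ref{lem:lemma2_stich}). The first step is to read off from Theorem~\ref{thm:ss_local_sgd_es} that $B'=0$ and that $D_3=\frac{8(1-p)}{p^2}\bigl(2p\sigma_*^2+\frac{32(2+p)\sigma_*^2}{3r}\bigr)$ does not depend on $\gamma$, so we are in Case~1 of Corollary~\ref{cor:app_complexity_cor_str_cvx}. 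In that case the relevant quantities reduce to $h=1/\gamma_0$ (the stepsize cap of Theorem~\ref{thm:ss_local_sgd_es}), $a=\widetilde\Phi^0 = 2\|x^0-x^*\|^2+4LH\gamma_0\EE\sigma_0^2$ (using $B'=0$ and $H=\frac{128(1-p)(2+p)(2+q)\gamma^2}{3p^2q}$ evaluated at $\gamma_0$), $c_1=2D_1'=\frac{4\sigma_*^2}{n}$, and $c_2=4LD_3$; here $\rho=q$.

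Next I would substitute the specific choices $q=p$ and $r=\lceil 1/p\rceil$. With these, $\frac{2\cL}{r}=\Theta(\cL p)$ and $\frac{32(2+p)\sigma_*^2}{3r}=\Theta(p\sigma_*^2)$, so $c_2=\Theta\!\left(\frac{L(1-p)\sigma_*^2}{p}\right)$; likewise the denominator inside $\gamma_0$ involving $(2+p)L+p\cL+\frac{(2+p)(2\cL/r+L)}{1-q}$ collapses to $\Theta(L+p\cL)$ for $p$ bounded away from $1$, so $h=1/\gamma_0=\Theta\!\left(\frac{\cL}{n}+L+\frac{\sqrt{L(1-p)(L+p\cL)}}{p}\right)$. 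Splitting the square root gives $\frac{h}{\mu}=\cO\!\left(\frac{L}{p\mu}+\frac{\cL}{n\mu}+\frac{\sqrt{L\cL(1-p)}}{\sqrt p\,\mu}\right)$, which also absorbs the $\frac1\rho=\frac1p$ term since $L\ge\mu$. Plugging $h$, $c_1$, $c_2$ into $K=\widetilde\cO\!\left((\tfrac1\rho+\tfrac h\mu)\log(\tfrac{ha}{\varepsilon})+\tfrac{c_1}{\mu\varepsilon}+\sqrt{\tfrac{c_2}{\mu^2\varepsilon}}\right)$ yields the claimed iteration count. The ``either/or'' condition on $K$ in the statement is exactly the instantiation of the corresponding condition in Corollary~\ref{cor:app_complexity_cor_str_cvx}, Case~1, under this substitution, so it transfers verbatim. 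The assertion about communication follows because, with $q\le p$ and $r=\cO(1/p)$, line~8 of Algorithm~\ref{alg:l_local_svrg} triggers a communication round and a gradient re-evaluation only once every $\Theta(1/p)$ iterations in expectation, so the per-communication count is a factor $\Theta(1/p)$ smaller.

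The routine but slightly delicate part will be the simplification of $\gamma_0$ and of $c_2$ under $q=p$, $r=\lceil 1/p\rceil$: one has to verify that the several competing terms inside the nested square roots defining $\gamma_0$ genuinely collapse to $\Theta(L+p\cL)$ without picking up an extra factor from the interplay of the $1-p$ and $1-q$ denominators, and that the contribution $\frac{32(2+p)\sigma_*^2}{3r}$ to $D_3$ does not dominate $2p\sigma_*^2$. No new estimate is required—everything reduces to invoking the already-established machinery followed by big-$\cO$ bookkeeping—so the main obstacle is organizational rather than conceptual.
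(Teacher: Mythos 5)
Your proposal follows essentially the same route as the paper: the paper also obtains this corollary by instantiating the tuning result (Lemma~\ref{lem:lemma2_stich}, equivalently Corollary~\ref{cor:app_complexity_cor_str_cvx} with $D_3$ independent of $\gamma$) with the parameters of Theorem~\ref{thm:ss_local_sgd_es}, setting $q=p$, $r=\lceil\nicefrac{1}{p}\rceil$, and your identifications $\rho=p$, $a=\widetilde\Phi^0$, $c_1=2D_1'$, $c_2=4LD_3=\Theta\bigl(\nicefrac{L(1-p)\sigma_*^2}{p}\bigr)$, $h=\nicefrac{1}{\gamma_0}$ are exactly what is needed, with the remaining work being the same big-$\cO$ bookkeeping (including the absorption of $\nicefrac{1}{\rho}$ via $\mu\le L$ and the oracle/communication accounting). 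The proposal is correct.
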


Combining Theorem~\ref{thm:ss_local_sgd_es} and Lemma~\ref{lem:lemma_technical_cvx} we derive the following result for the convergence of {\tt SS-Local-SGD} in the case when $\mu = 0$.
\begin{corollary}
	\label{cor:ss_local_sgd_cvx_es}
	Let assumptions of Theorem~\ref{thm:ss_local_sgd_es} hold with $\mu = 0$. Then for $q = p,$ $r = \lceil\nicefrac{1}{p}\rceil$ and
	\begin{eqnarray*}
		\gamma_0 &=& \min\left\{\frac{1}{4\left(\frac{2\cL}{n}+L\right)}, \frac{p\sqrt{3}}{32\sqrt{2L(1-p)\left((2+p)L + p\cL + \frac{(2+p)\left(\nicefrac{2\cL}{r} + L\right)}{(1-q)}\right)}}\right\},\\	
		\gamma &=& \min\left\{\gamma_0, \sqrt[3]{\frac{p^3R_0^2}{256L(1-p)(2+p)^2\EE[\sigma_0^2]}}, \sqrt{\frac{nR_0^2}{2\sigma_*^2 K}}, \sqrt[3]{\frac{pR_0^2}{32L(1-p)\left(1+\nicefrac{32(2+p)}{3}\right)\sigma_*^2 K}}\right\},
	\end{eqnarray*}
	where $R_0 = \|x^0 - x^*\|$, we have that $\EE\left[f(\overline{x}^K)-f(x^*)\right]$ is of the order
	\begin{eqnarray*}
		\cO\left(\frac{\left(L+\nicefrac{p\cL}{n} + \sqrt{p(1-p)\cL L}\right)R_0^2 + \sqrt[3]{L(1-p)\EE[\sigma_0^2]R_0^4}}{pK} + \sqrt{\frac{\sigma_*^2R_0^2}{n K}} + \frac{\sqrt[3]{LR_0^4(1-p)\sigma_*^2}}{p^{\nicefrac{1}{3}}K^{\nicefrac{2}{3}}} \right).
	\end{eqnarray*}
	That is, to achieve $\EE\left[f(\overline{x}^K)-f(x^*)\right] \le \varepsilon$ in this case {\tt SS-Local-SGD} requires
	\begin{eqnarray*}
		\cO\left(\frac{\left(L+\nicefrac{p\cL}{n} + \sqrt{p(1-p)\cL L}\right)R_0^2 + \sqrt[3]{L(1-p)\EE[\sigma_0^2]R_0^4}}{p\varepsilon} + \frac{\sigma_*^2 R_0^2}{n\varepsilon^2} + \frac{R_0^2\sqrt{L(1-p)\sigma_*^2}}{p^{\nicefrac{1}{2}}\varepsilon^{\nicefrac{3}{2}}}\right)
	\end{eqnarray*}
	iterations/oracle calls per node (in expectation) and $\nicefrac{1}{p}$ times less communication rounds.
\end{corollary}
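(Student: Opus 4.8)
The plan is to obtain Corollary~\ref{cor:ss_local_sgd_cvx_es} as a direct specialization of Theorem~\ref{thm:ss_local_sgd_es} together with the weakly convex step-size tuning lemma, Lemma~\ref{lem:lemma_technical_cvx}. First I would take the $\mu=0$ guarantee already established in Theorem~\ref{thm:ss_local_sgd_es},
\[
\EE\!\left[f(\overline{x}^K)-f(x^*)\right] \le \frac{\Phi^0}{\gamma K} + \frac{4\gamma\sigma_*^2}{n} + \frac{32L(1-p)\gamma^2}{p^2}\Big(2p\sigma_*^2+\tfrac{32(2+p)\sigma_*^2}{3r}\Big),
\]
with $\Phi^0 = 2R_0^2 + \frac{512L(1-p)(2+p)(2+q)\gamma^3\EE\sigma_0^2}{3p^2q}$, valid for every $\gamma\le\gamma_0$. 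Then I would specialize $q=p$ and $r=\lceil 1/p\rceil$, so that $1/r\le p$ and the last parenthesis is at most $p\sigma_*^2\big(1+\tfrac{32(2+p)}{3}\big)$ up to a constant; the right-hand side then takes the canonical shape $\frac{2R_0^2}{\gamma K} + b_2\frac{\gamma^2}{K} + c_1\gamma + c_2\gamma^2$ with explicit nonnegative constants $b_2 \asymp L(1-p)(2+p)^2\EE\sigma_0^2/p^3$, $c_1 = 4\sigma_*^2/n$ and $c_2 \asymp L(1-p)\sigma_*^2/p$. The $\gamma^2/K$ term here is exactly the $4LH\gamma\EE\sigma_0^2$ contribution of Corollary~\ref{cor:rand_loop} with $H\propto\gamma^2$.

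Next I would invoke Lemma~\ref{lem:lemma_technical_cvx} (the same lemma used in Corollary~\ref{cor:app_complexity_cor_cvx}) with $W_K\ge K$: under a bound of this shape it asserts that the choice $\gamma=\min\{\gamma_0,\ \sqrt[3]{2R_0^2/b_2},\ \sqrt{2R_0^2/(c_1K)},\ \sqrt[3]{2R_0^2/(c_2K)}\}$ yields
\[
\EE\!\left[f(\overline{x}^K)-f(x^*)\right] = \cO\!\Big(\tfrac{R_0^2}{\gamma_0 K} + \tfrac{\sqrt[3]{b_2 R_0^4}}{K} + \sqrt{\tfrac{c_1 R_0^2}{K}} + \tfrac{\sqrt[3]{c_2 R_0^4}}{K^{2/3}}\Big).
\]
Substituting $b_2,c_1,c_2$ recovers the four caps in the stated $\gamma$ — $\sqrt[3]{p^3R_0^2/(256L(1-p)(2+p)^2\EE\sigma_0^2)}$ balances $R_0^2/(\gamma K)$ against the $\gamma^2\EE\sigma_0^2$-tail of $\Phi^0/(\gamma K)$, $\sqrt{nR_0^2/(2\sigma_*^2K)}$ balances it against the SGD-noise term $\gamma\sigma_*^2/n$, and $\sqrt[3]{pR_0^2/(32L(1-p)(1+\tfrac{32(2+p)}{3})\sigma_*^2K)}$ balances it against the shift-drift term $L\gamma^2D_3$ — as well as the displayed $\cO(\cdot)$ rate and, upon inverting each additive term in $\varepsilon$, the iteration/oracle-call complexity; the cap $\gamma\le\gamma_0$ produces the leading term $(L+\tfrac{p\cL}{n}+\sqrt{p(1-p)\cL L})R_0^2/(pK)$, since $\gamma_0^{-1}$ is, up to constants, $\tfrac{\cL}{n}+L+\tfrac1p\sqrt{L(1-p)\big((2+p)L+p\cL\big)}$.

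The one place requiring care is the self-referential structure of $\Phi^0$: because $H$ itself scales like $\gamma^2$, the initial-noise contribution is $\propto\gamma^2/K$ rather than the textbook $\propto 1/K$, so one must use the branch of Lemma~\ref{lem:lemma_technical_cvx} that tolerates a $b_2\gamma^2$-type term and yields the $K$-independent cap $\sqrt[3]{R_0^2/b_2}$ on the step size (not a $K$-dependent one); treating it as $K$-dependent would distort the $K$-exponent of that term. Beyond this, the argument is pure substitution and constant-chasing. Finally, as the accompanying Remark records, to match the closed form tabulated in Table~\ref{tbl:special_cases_weakly_convex} one replaces $\EE\sigma_0^2=\tfrac1n\sum_{i=1}^n\|\nabla f_i(x^0)-\nabla f_i(x^*)\|^2$ by $L^2R_0^2$, which is immediate from \eqref{eq:L_smoothness}.
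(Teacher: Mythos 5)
Your proposal is correct and follows exactly the paper's route: specialize the $\mu=0$ bound of Theorem~\ref{thm:ss_local_sgd_es} with $q=p$, $r=\lceil \nicefrac{1}{p}\rceil$, absorb the $\gamma^3\EE[\sigma_0^2]$ part of $\Phi^0$ into a $b_2\gamma^2/K$ term, and apply Lemma~\ref{lem:lemma_technical_cvx} with $a=2R_0^2$, $c_1=\nicefrac{4\sigma_*^2}{n}$, $c_2\asymp \nicefrac{L(1-p)\sigma_*^2}{p}$ and the $K$-independent cube-root cap $\sqrt[3]{a/b_2}$, which reproduces the stated step size, rate, and $\varepsilon$-complexity. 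The only slip is in your closing aside: for this ES variant $\sigma_0^2=\frac1n\sum_{i=1}^n\|\nabla f_{\oxi_i^0}(y^0)-\nabla f_i(x^*)\|^2$, so the table entry is recovered via the paper's bound $\EE[\sigma_0^2]\le 2(L+2p\cL)\left(f(x^0)-f(x^*)\right)+2p\sigma_*^2$ rather than the UBV-case estimate $\sigma_0^2\le L^2R_0^2$; this is peripheral and does not affect the proof of the corollary as stated.
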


\begin{remark}
	To get the rate from Tbl.~\ref{tbl:special_cases_weakly_convex} it remains to apply the following inequality:
	\begin{eqnarray*}
		\EE[\sigma_0^2] &=& \frac{1}{n}\sum\limits_{i=1}^n\EE_{\oxi_i^0}\left[\|\nabla f_{\oxi_i^0}(x^0)-\nabla f_{i}(x^*)\|^2\right]\\
		&\overset{\eqref{eq:variance_decomposition}}{=}& \frac{1}{n}\sum\limits_{i=1}^n\|\nabla f_{i}(x^0) - \nabla f_i(x^*)\|^2 + \frac{1}{n}\sum\limits_{i=1}^n\EE_{\oxi_i^0}\left[\|\nabla f_{\oxi_i^0}(x^0) - \nabla f_i(x^0)\|^2\right]\\
		&\overset{\eqref{eq:L_smoothness_cor}}{\le}& 2L(f(x^0)-f(x^*)) + \frac{1}{nr^2}\sum\limits_{i=1}^n\sum\limits_{j=1}^r\EE_{\oxi_{i,j}^0}\left[\|\nabla f_{\oxi_{i,j}^0}(x^0) - \nabla f_i(x^0)\|^2\right]\\
		&\overset{\eqref{eq:variance_decomposition}}{\le}& 2L(f(x^0)-f(x^*)) + \frac{1}{nr}\sum\limits_{i=1}^n\EE_{\xi_i}\left[\|\nabla f_{\xi_i}(x^0)-\nabla f_i(x^*)\|^2\right]\\
		&\overset{\eqref{eq:a_b_norm_squared}}{\le}& 2L(f(x^0)-f(x^*)) + \frac{2}{nr}\sum\limits_{i=1}^n\EE_{\xi_i}\left[\|\nabla f_{\xi_i}(x^0)-\nabla f_{\xi_i}(x^*)\|^2\right] \\
		&&\quad + \frac{2}{nr}\sum\limits_{i=1}^n\EE_{\xi_i}\left[\|\nabla f_{\xi_i}(x^*)-\nabla f_{i}(x^*)\|^2\right]\\
		&\overset{r = \lceil\nicefrac{1}{p}\rceil,\eqref{eq:expected_smoothness_1}}{\le}& 2\left(L + 2p\cL\right)(f(x^0) - f(x^*)) + 2p\sigma_*^2.
	\end{eqnarray*}
\end{remark}

\subsection{{\tt S*-Local-SGD*}} \label{sec:S*-Local-SGD*}
 In this section we present doubly idealized algorithm for solving problem \eqref{eq:main_problem}+\eqref{eq:f_i_sum}. Specifically, we choose $b_i^k$ to the optimal shift $\nabla f_i(x^*)$ as per Case II, while $a_i^k$ is selected as { \tt SGD-star} gradient estimator~\cite{gorbunov2019unified}, i.e., 
\[
a^{k}_i = \nabla f_{i,j_i}(x_i^k) - \nabla f_{i,j_i}(x^*) +  \nabla f_{i}(x^*), \qquad b^{k}_i = \nabla f_{i}(x^*).
\]

 Note that now $a_i^k$ serves as an ambitious target for the local variance reduced estimators, while $b_i^k$ serves as an ambitious goal for the local shift. The resulting instance of~\eqref{eq:local_sgd_def} is presented as Algorithm~\ref{alg:local_sgd_star_star} and called Star-Shifted {\tt Local-SGD-star} ({\tt S*-Local-SGD*}).

\begin{algorithm}[h]
   \caption{{\tt S*-Local-SGD*}}\label{alg:local_sgd_star_star}
\begin{algorithmic}[1]
   \Require learning rate $\gamma>0$, initial vector $x^0 \in \R^d$, communication period $\tau \ge 1$
	\For{$k=0,1,\dotsc$}
       	\For{$i=1,\dotsc,n$ in parallel}
            \State Set $ g^{k}_i = \nabla f_{i,j_i}(x_i^k) - \nabla f_{i,j_i}(x_*)$ where $1\leq j_i\leq m$ is sampled independently from all nodes
            \If{$k+1 \mod \tau = 0$}
            \State $x_i^{k+1} = x^{k+1} = \frac{1}{n}\sum\limits_{i=1}^n\left(x_i^k - \gamma g_i^k\right)$ \Comment{averaging}
            \Else
            \State $x_i^{k+1} = x_i^k - \gamma g_i^k$ \Comment{local update}
            \EndIf
        \EndFor
   \EndFor
\end{algorithmic}
\end{algorithm}

Let us next provide the details on the convergence rate. In order to do so, let us identify the parameters of Assumption~\ref{ass:sigma_k_original}.

\begin{lemma}\label{lem:s*_local_sgd*_lemma}
	Let $f_i$ be convex and $L$-smooth and $f_{i,j}$ be convex and $\max L_{ij}$-smooth for all $i\in[n]$, $j\in[m]$. Then for all $k\ge 0$
	\begin{eqnarray}
		\frac{1}{n}\sum\limits_{i=1}^n \EE_k\left[g_i^k\right] &=& \frac{1}{n}\sum\limits_{i=1}^n\nabla f_i(x_i^k), \label{eq:unbiasedness_s*_local_sgd*}\\
		\frac{1}{n}\sum\limits_{i=1}^n \EE\left[\|\bar{g}_i^k\|^2\right] &\le& 4L\EE\left[f(x^k)-f(x^*)\right] + 2L^2\EE[V_k],\label{eq:second_moment_s*_local_sgd*}\\
		\frac{1}{n}\sum\limits_{i=1}^n \EE\left[\|g_i^k-\bar{g}_i^k\|^2\right] &\le& 4\max L_{ij}\EE\left[f(x^k) - f(x^*)\right] + 2L\max L_{ij}\EE[V_k],\label{eq:variance_s*_local_sgd*}\\
		\EE\left[\left\|\frac{1}{n}\sum\limits_{i=1}^ng_i^k\right\|^2\right] &\le& 4\left(\frac{\max L_{ij}}{n}+L\right)\EE\left[f(x^k)-f(x^*)\right] + 2L\left(\frac{\max L_{ij}}{n} + L\right)\EE[V_k].\label{eq:second_moment_s*_local_sgd*_2}
	\end{eqnarray}
\end{lemma}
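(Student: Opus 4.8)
\textbf{Proof proposal for Lemma~\ref{lem:s*_local_sgd*_lemma}.}

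The plan is to establish the four claims in order, using the definition $g_i^k = \nabla f_{i,j_i}(x_i^k) - \nabla f_{i,j_i}(x^*)$ together with the fact that the index $j_i$ is sampled uniformly from $[m]$ independently across nodes. For the unbiasedness claim~\eqref{eq:unbiasedness_s*_local_sgd*}, I would take the conditional expectation $\EE_k[\cdot]$ of $g_i^k$: since $j_i$ is uniform, $\EE_k[\nabla f_{i,j_i}(x_i^k)] = \nabla f_i(x_i^k)$ and $\EE_k[\nabla f_{i,j_i}(x^*)] = \nabla f_i(x^*)$, so $\bar g_i^k = \nabla f_i(x_i^k) - \nabla f_i(x^*)$; averaging over $i$ and using $\frac1n\sum_i \nabla f_i(x^*) = \nabla f(x^*) = 0$ gives the result. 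This also gives the explicit form of $\bar g_i^k$ which is needed for the second and fourth claims.

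For~\eqref{eq:second_moment_s*_local_sgd*}, I would start from $\bar g_i^k = \nabla f_i(x_i^k) - \nabla f_i(x^*)$, apply the convex $L$-smoothness corollary~\eqref{eq:L_smoothness_cor} to get $\|\bar g_i^k\|^2 \le 2L D_{f_i}(x_i^k, x^*)$, average over $i$, and then invoke Lemma's inequality~\eqref{eq:poiouhnkj} to pass from $D_{f_i}(x_i^k, x^*)$ to $2D_{f_i}(x^k, x^*) + L\|x_i^k - x^k\|^2$; summing and using $\frac1n\sum_i D_{f_i}(x^k,x^*) = f(x^k) - f(x^*)$ (since $\nabla f(x^*) = 0$) and $\frac1n\sum_i\|x_i^k - x^k\|^2 = V_k$, then taking full expectation, yields the claimed bound $4L\EE[f(x^k)-f(x^*)] + 2L^2\EE[V_k]$. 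This is essentially identical in structure to the computation of $\|\bar g_i^k\|^2$ in Lemma~\ref{lem:local_sgd_star_second_moment} (inequality~\eqref{eq:second_moment_2_local_sgd_star}).

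For the variance bound~\eqref{eq:variance_s*_local_sgd*}, I would use the variance decomposition $\EE_k\|g_i^k - \bar g_i^k\|^2 \le \EE_k\|g_i^k - (\nabla f_i(x_i^k) - \nabla f_i(x^*))\|^2 \le \EE_k\|\nabla f_{i,j_i}(x_i^k) - \nabla f_{i,j_i}(x^*)\|^2$ where the last step drops the (subtracted) mean; then apply~\eqref{eq:L_smoothness_cor} to $f_{i,j_i}$ (which is $\max L_{ij}$-smooth and convex) to get $\le 2\max L_{ij}\, \EE_k[D_{f_{i,j_i}}(x_i^k, x^*)]$. Averaging over the uniform choice of $j_i$ gives $\frac{1}{m}\sum_j 2\max L_{ij} D_{f_{i,j}}(x_i^k, x^*)$; since $\frac1m\sum_j D_{f_{i,j}}(x_i^k,x^*) = D_{f_i}(x_i^k,x^*)$, this equals $2\max L_{ij}\, D_{f_i}(x_i^k, x^*)$, and then~\eqref{eq:poiouhnkj} plus summation over $i$ gives $4\max L_{ij}\EE[f(x^k)-f(x^*)] + 2L\max L_{ij}\EE[V_k]$. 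Finally,~\eqref{eq:second_moment_s*_local_sgd*_2} follows by the variance decomposition across independent nodes: $\EE\|\frac1n\sum_i g_i^k\|^2 = \frac{1}{n^2}\sum_i \EE\|g_i^k - \bar g_i^k\|^2 + \EE\|\frac1n\sum_i \bar g_i^k\|^2$; bound the first term using (the per-node version of)~\eqref{eq:variance_s*_local_sgd*} divided by $n$, and the second using $\|\frac1n\sum_i \bar g_i^k\|^2 = \|\frac1n\sum_i(\nabla f_i(x_i^k) - \nabla f_i(x^*))\|^2 \le 4L(f(x^k)-f(x^*)) + 2L^2 V_k$ via~\eqref{eq:vdgasvgda}; combining the coefficients $\frac{4\max L_{ij}}{n} + 4L$ and $\frac{2L\max L_{ij}}{n} + 2L^2$ gives the stated bound. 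I do not anticipate any serious obstacle here; the only mild subtlety is being careful that the variance-decomposition step for the averaged gradient uses node-independence correctly, and that dropping the mean in the variance bound is legitimate because $\Var[Z] \le \EE\|Z - c\|^2$ for any constant $c$ — standard facts already used repeatedly in the preceding lemmas.
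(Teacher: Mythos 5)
Your proposal is correct and follows essentially the same route as the paper's proof: compute $\bar g_i^k=\nabla f_i(x_i^k)-\nabla f_i(x^*)$ from the uniform sampling, bound $\|\bar g_i^k\|^2$ via \eqref{eq:L_smoothness_cor} and \eqref{eq:poiouhnkj}, bound the variance by the second moment of $\nabla f_{i,j_i}(x_i^k)-\nabla f_{i,j_i}(x^*)$ using $\max L_{ij}$-smoothness of the summands, and finish \eqref{eq:second_moment_s*_local_sgd*_2} with the variance decomposition across independent nodes together with \eqref{eq:vdgasvgda}. No gaps.
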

\begin{proof}
	First of all,
	\begin{eqnarray*}
		\frac{1}{n}\sum\limits_{i=1}^n \EE_k\left[g_i^k\right] &=& \frac{1}{nm}\sum\limits_{i=1}^n\sum\limits_{j=1}^m \left(\nabla f_{i,j}(x_i^k) - \nabla f_{i,j}(x^*)\right) = \frac{1}{n}\sum\limits_{i=1}^n\nabla f_i(x_i^k)
	\end{eqnarray*}
	and, in particular, $\bar{g}_i^k = \EE_k\left[g_i^k\right] = \nabla f_i(x_i^k) - \nabla f_i(x^*)$. Using this we derive
	\begin{eqnarray*}
		\frac{1}{n}\sum\limits_{i=1}^n \EE\left[\|\bar{g}_i^k\|^2\right] &=& \frac{1}{n}\sum\limits_{i=1}^n \EE\left[\|\nabla f_i(x_i^k) - \nabla f_i(x^*)\|^2\right]\\
		&\overset{\eqref{eq:L_smoothness_cor}}{\le}& \frac{2L}{n}\sum\limits_{i=1}^n\EE\left[D_{f_i}(x_i^k,x^*)\right] \overset{\eqref{eq:poiouhnkj}}{\le} 4L\EE\left[f(x^k)-f(x^*)\right] + 2L^2\EE[V_k]
	\end{eqnarray*}
	and
	\begin{eqnarray}
		\frac{1}{n}\sum\limits_{i=1}^n \EE\left[\|g_i^k-\bar{g}_i^k\|^2\right] &\overset{\eqref{eq:variance_decomposition}}{\le}& \frac{1}{n}\sum\limits_{i=1}^n \EE\left[\|g_i^k\|^2\right]\notag\\
		&=& \frac{1}{nm}\sum\limits_{i=1}^n\sum\limits_{j=1}^m \|\nabla f_{i,j}(x_i^k) - \nabla f_{i,j}(x^*)\|^2\notag \\
		&\overset{\eqref{eq:L_smoothness_cor}}{\le}& \frac{2\max L_{ij}}{n}\sum\limits_{i=1}^n\EE\left[D_{f_i}(x_i^k,x^*)\right]\notag\\
		&\overset{\eqref{eq:poiouhnkj}}{\le}& 4\max L_{ij}\EE\left[f(x^k)-f(x^*)\right] + 2L\max L_{ij}\EE[V_k].\label{eq:hbdsujbcvshvbcbu}
	\end{eqnarray}
	Finally, due to the independence of $j_1, j_2, \ldots, j_n$ we have
	\begin{eqnarray*}
		\EE\left[\left\|\frac{1}{n}\sum\limits_{i=1}^ng_i^k\right\|^2\right] &\overset{\eqref{eq:variance_decomposition},\eqref{eq:tower_property}}{=}& \EE\left[\left\|\frac{1}{n}\sum\limits_{i=1}^n\left(\nabla f_{i,j_i}(x_i^k) - \nabla f_{i,j_i}(x_*) - (\nabla f_i(x_i^k) - \nabla f_i(x^*))\right)\right\|^2\right]\\
		&&\quad + \EE\left[\left\|\frac{1}{n}\sum\limits_{i=1}^n\left(\nabla f_i(x_i^k) - \nabla f_i(x^*)\right)\right\|^2\right]\\
		&=& \frac{1}{n^2}\sum\limits_{i=1}^n\EE\left[\|\nabla f_{i,j_i}(x_i^k) - \nabla f_{i,j_i}(x_*) - (\nabla f_i(x_i^k) - \nabla f_i(x^*))\|^2\right]\\
		&&\quad + \EE\left[\left\|\frac{1}{n}\sum\limits_{i=1}^n\nabla f_i(x_i^k)\right\|^2\right]\\
		&\overset{\eqref{eq:variance_decomposition}}{\le}& \frac{1}{n^2m}\sum\limits_{i=1}^n\sum\limits_{j=1}^m \|\nabla f_{i,j}(x_i^k) - \nabla f_{i,j}(x^*)\|^2 + \EE\left[\left\|\frac{1}{n}\sum\limits_{i=1}^n\nabla f_i(x_i^k)\right\|^2\right]\\
		&\overset{\eqref{eq:hbdsujbcvshvbcbu},\eqref{eq:poiouhnkj}}{\le}& 4\left(\frac{\max L_{ij}}{n}+L\right)\EE\left[f(x^k)-f(x^*)\right] + 2L\left(\frac{\max L_{ij}}{n} + L\right)\EE[V_k].
	\end{eqnarray*}
\end{proof}

Using Corollary~\ref{cor:const_loop} we obtain the following theorem.
\begin{theorem}\label{thm:s*_local_sgd*}
	Assume that $f_i(x)$ is $\mu$-strongly convex and $L$-smooth and $f_{i,j}$ is convex and $\max L_{ij}$-smooth for every $i\in[n]$, $j\in[m]$. Then {\tt S*-Local-SGD*} satisfies Assumption~\ref{ass:hetero_second_moment} with
	\begin{gather*}
		\tA = 2L,\quad \hA = 2\max L_{ij},\quad \tB = \hB = 0,\quad \tF = 2L^2,\quad \hF = 2L\max L_{ij}, \quad \tD_1 = \hD_1 = 0,\\
		A' = 2\left(\frac{\max L_{ij}}{n} + L\right),\quad B' = 0,\quad F' = 2L\left(\frac{\max L_{ij}}{n} + L\right), \\
		D_1' = 0,\quad \sigma_k^2 \equiv 0,\quad \rho = 1,\quad C = 0,\quad G = 0,\quad D_2 = 0,\quad H = 0,\quad D_3 = 0
	\end{gather*}
	under assumption that
	\begin{eqnarray*}
		\gamma &\le& \min\left\{\frac{1}{4\left(\frac{\max L_{ij}}{n}+L\right)}, \frac{1}{8\sqrt{eL(\tau-1)\left(L(\tau-1)+\max L_{ij}\right)}}\right\}.
	\end{eqnarray*}
	Moreover, for $\mu > 0$ we have
	\begin{eqnarray}
		\EE\left[f(\overline{x}^K) - f(x^*)\right] &\le& \left(1 - \gamma\mu\right)^K\frac{2\|x^0-x^*\|^2}{\gamma} \notag
	\end{eqnarray}
	and when $\mu = 0$ we have
	\begin{eqnarray}
		\EE\left[f(\overline{x}^K) - f(x^*)\right] &\le& \frac{2\|x^0-x^*\|^2}{\gamma K}. \notag
	\end{eqnarray}
\end{theorem}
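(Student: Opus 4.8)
\textbf{Proof proposal for Theorem~\ref{thm:s*_local_sgd*}.}

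The plan is to reduce everything to already-established machinery. The method {\tt S*-Local-SGD*} is an instance of~\eqref{eq:local_sgd_def} with $a_i^k = \nabla f_{i,j_i}(x_i^k) - \nabla f_{i,j_i}(x^*) + \nabla f_i(x^*)$ (the {\tt SGD-star} estimator) and $b_i^k = \nabla f_i(x^*)$ (Case~II of As.~\ref{ass:bik}), so $g_i^k = \nabla f_{i,j_i}(x_i^k) - \nabla f_{i,j_i}(x^*)$ as in Algorithm~\ref{alg:local_sgd_star_star}. First I would verify As.~\ref{ass:sigma_k_original} for $a_i^k$: since $\nabla f_{i,j}(x^*)$ is subtracted off, $a_i^k$ is the {\tt SGD-star} estimator from~\cite{gorbunov2019unified}, which satisfies As.~\ref{ass:sigma_k_original} with $A_i = \max L_{ij}$, $B_i = 0$, $D_{1,i} = 0$, $\rho_i = 1$, $C_i = 0$, $D_{2,i} = 0$, and $\sigma_{i,k}^2 \equiv 0$ (no control variate is needed because the estimator already has vanishing variance at $x^*$). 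This follows from $\EE_k\|a_i^k - \nabla f_i(x^*)\|^2 = \EE_k\|\nabla f_{i,j_i}(x_i^k) - \nabla f_{i,j_i}(x^*) - (\nabla f_i(x_i^k) - \nabla f_i(x^*))\|^2 \le \EE_k\|\nabla f_{i,j_i}(x_i^k) - \nabla f_{i,j_i}(x^*)\|^2 \le 2\max L_{ij}\,D_{f_i}(x_i^k,x^*)$ by the variance-decomposition bound and the smoothness corollary~\eqref{eq:L_smoothness_cor}.

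Next I would invoke Lemma~\ref{lem:local_sgd_star_second_moment}-style computations — or rather the new Lemma~\ref{lem:s*_local_sgd*_lemma} stated just above — which establishes the unbiasedness~\eqref{eq:unbiasedness}, the second-moment bounds~\eqref{eq:second_moment_bound},~\eqref{eq:second_moment_bound_2}, and the refined variance/squared-expectation split of As.~\ref{ass:hetero_second_moment}. Concretely, from Lemma~\ref{lem:s*_local_sgd*_lemma} we read off $\tA = 2L$, $\hA = 2\max L_{ij}$, $\tB = \hB = 0$, $\tF = 2L^2$, $\hF = 2L\max L_{ij}$, $\tD_1 = \hD_1 = 0$, while the bound on the averaged direction~\eqref{eq:second_moment_s*_local_sgd*_2} gives $A' = 2(\tfrac{\max L_{ij}}{n} + L)$, $F' = 2L(\tfrac{\max L_{ij}}{n}+L)$, $B' = 0$, $D_1' = 0$. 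Because $b_i^k = \nabla f_i(x^*)$ is the exact optimal shift, $\|b_i^k - \nabla f_i(x^*)\|^2 = 0$, hence $\sigma_k^2 \equiv 0$, $\rho = 1$, $C = G = D_2 = 0$. Then I would apply Corollary~\ref{cor:const_loop}: since $\tB = \hB = 0$ and $\tD_1 = \hD_1 = 0$, formula~\eqref{eq:V_k_bound} collapses to $H = 0$ and $D_3 = 2e(\tau-1)\big(\tD_1(\tau-1) + \hD_1 + 0\big) = 0$. The stepsize restrictions of Corollary~\ref{cor:const_loop} reduce, after substituting the zero parameters, to $\gamma \le \min\{\tfrac{1}{2(A' + 0)}, \tfrac{L}{F'}, \tfrac{1}{4(\tau-1)\mu}, \tfrac{1}{2\sqrt{e(\tau-1)(\tF(\tau-1)+\hF)}}, \tfrac{1}{4\sqrt{2eL(\tau-1)(\tA(\tau-1)+\hA)}}\}$; plugging in $\tF(\tau-1)+\hF = 2L(L(\tau-1)+\max L_{ij})$ and $\tA(\tau-1)+\hA = 2(L(\tau-1)+\max L_{ij})$ and simplifying the numerical constants yields exactly $\gamma \le \min\{\tfrac{1}{4(\max L_{ij}/n + L)}, \tfrac{1}{8\sqrt{eL(\tau-1)(L(\tau-1)+\max L_{ij})}}\}$ as stated (the $\tfrac{1}{4(\tau-1)\mu}$ and $1/(2A')$ bounds being dominated, or absorbed via $L\ge\mu$, $\max L_{ij}\ge L$).

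Finally, with $H = D_3 = D_1' = B' = 0$ and $\sigma_0^2 = 0$ (indeed $\sigma_k^2 \equiv 0$), the general convergence guarantees~\eqref{eq:main_result_1_const_loop},~\eqref{eq:main_result_2_const_loop} of Corollary~\ref{cor:const_loop} degenerate: $\Phi^0$ reduces to $2\|x^0-x^*\|^2$ and $\Psi^0$ vanishes, giving $\EE[f(\overline x^K) - f(x^*)] \le (1-\gamma\mu)^K \tfrac{2\|x^0-x^*\|^2}{\gamma}$ when $\mu>0$ and $\le \tfrac{2\|x^0-x^*\|^2}{\gamma K}$ when $\mu=0$. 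I do not anticipate a genuine obstacle here; the only place requiring care is the bookkeeping in translating the many parameters through Lemma~\ref{lem:local_solver} / Corollary~\ref{cor:const_loop} and checking that every auxiliary stepsize bound either matches the claimed one or is implied by it (using $\mu \le L \le \max L_{ij}$). The ``hard part,'' such as it is, is purely the verification that the variance-reduced {\tt SGD-star} estimator combined with the ideal shift makes all the neighborhood-generating constants ($B, B', C, G, D_1, D_1', D_2, D_3, H$) simultaneously zero, which is what gives the clean linear/$\cO(1/K)$ rate to the exact optimum.
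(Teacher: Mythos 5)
Your proposal is correct and follows essentially the same route as the paper: the parameters of Assumption~\ref{ass:hetero_second_moment} are read off from Lemma~\ref{lem:s*_local_sgd*_lemma}, and the theorem then follows by substituting the vanishing parameters ($\tB=\hB=B'=C=G=D_1'=\tD_1=\hD_1=D_2=0$, hence $H=D_3=0$) into Corollary~\ref{cor:const_loop} and simplifying the stepsize conditions exactly as you describe. Your opening detour through Assumption~\ref{ass:sigma_k_original} and Lemma~\ref{lem:local_solver} is unnecessary (and would yield looser constants than those stated), but since you ultimately rely on Lemma~\ref{lem:s*_local_sgd*_lemma} and Corollary~\ref{cor:const_loop}, the argument matches the paper's.
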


The theorem above together with Lemma~\ref{lem:lemma2_stich} implies the following result.
\begin{corollary}\label{cor:s*_local_sgd*_str_cvx}
	Let assumptions of Theorem~\ref{thm:s*_local_sgd*} hold with $\mu > 0$. Then for 
	\begin{eqnarray*}
		\gamma &=& \min\left\{\frac{1}{4\left(\frac{\max L_{ij}}{n}+L\right)}, \frac{1}{8\sqrt{eL(\tau-1)\left(L(\tau-1)+\max L_{ij}\right)}}\right\}
	\end{eqnarray*}
	and for all $K\ge 1$ we have $\EE\left[f(\overline{x}^K)-f(x^*)\right]$ of order
	\begin{equation}
		\cO\left(\left(L\tau + \nicefrac{\max L_{ij}}{n} + \sqrt{(\tau-1)L\max L_{ij}}\right)\|x^0-x^*\|^2\exp\left(- \frac{\mu}{L\tau + \nicefrac{\max L_{ij}}{n} + \sqrt{(\tau-1)L\max L_{ij}}}K\right)\right).\notag
	\end{equation}
	That is, to achieve $\EE\left[f(\overline{x}^K)-f(x^*)\right] \le \varepsilon$ in this case {\tt S*-Local-SGD*} requires
	\begin{equation*}
		\cO\left(\left(\frac{L\tau}{\mu} + \frac{\max L_{ij}}{n\mu} + \frac{\sqrt{(\tau-1) L\max L_{ij}}}{\mu}\right)\log\frac{\left(L\tau + \nicefrac{\max L_{ij}}{n} + \sqrt{(\tau-1)L\max L_{ij}}\right)\|x^0-x^*\|^2}{\varepsilon}\right)
	\end{equation*}
	iterations/oracle calls per node and $\tau$ times less communication rounds.	
\end{corollary}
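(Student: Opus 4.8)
The plan is to combine the bound already recorded in Theorem~\ref{thm:s*_local_sgd*} with a few elementary estimates on the stepsize; there is essentially no new analytic content, since all the work has been done in establishing that {\tt S*-Local-SGD*} satisfies As.~\ref{ass:hetero_second_moment}. The key observation is that for this method $D_1' = D_2 = D_3 = H = B' = 0$ and $\sigma_k^2 \equiv 0$, so the neighbourhood term $\gamma\Psi^0$ in Theorem~\ref{thm:main_result} (equivalently in Corollary~\ref{cor:const_loop}) vanishes identically. Consequently Lemma~\ref{lem:lemma2_stich} applies in its degenerate regime $c_1 = c_2 = 0$, which produces a genuinely linear rate (an $\cO$, not $\widetilde\cO$, and with no side condition on $K$) obtained simply by taking the largest admissible stepsize. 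Indeed, for $\mu > 0$ and the stepsize $\gamma$ allowed by Theorem~\ref{thm:s*_local_sgd*}, that theorem already states
\[
\EE\left[f(\overline{x}^K) - f(x^*)\right] \le (1-\gamma\mu)^K \frac{2\|x^0-x^*\|^2}{\gamma},
\]
so the remaining task is purely to simplify the two factors on the right.

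First I would estimate $1/\gamma$. By definition, $1/\gamma = \max\{4(\max L_{ij}/n + L),\, 8\sqrt{eL(\tau-1)(L(\tau-1)+\max L_{ij})}\}$. Using $\sqrt{a+b}\le\sqrt a+\sqrt b$, the second quantity is at most $8\sqrt e\,L(\tau-1) + 8\sqrt e\,\sqrt{(\tau-1)L\max L_{ij}}$, so $1/\gamma = \cO(\nu)$ with $\nu \eqdef L\tau + \max L_{ij}/n + \sqrt{(\tau-1)L\max L_{ij}}$; conversely each of the two terms in the $\min$ defining $\gamma$ is $\Omega(1/\nu)$, hence $\gamma = \Theta(1/\nu)$ and in particular $\gamma\mu \ge c\mu/\nu$ for an absolute constant $c>0$. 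Moreover $\gamma \le 1/(4L) \le 1/(4\mu)$ because $L \ge \mu$, so $\gamma\mu \le 1/4 < 1$ (this is also what makes $\min\{\gamma\mu,\rho/4\}=\gamma\mu$ for $\rho=1$, as used inside Theorem~\ref{thm:s*_local_sgd*}) and the inequality $(1-x)^K\le e^{-xK}$ is legitimate.

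Then I would substitute these two estimates: $(1-\gamma\mu)^K \le \exp(-\gamma\mu K) \le \exp(-c\mu K/\nu)$ and $2\|x^0-x^*\|^2/\gamma = \cO(\nu\|x^0-x^*\|^2)$, which yields
\[
\EE\left[f(\overline{x}^K) - f(x^*)\right] = \cO\left(\nu\|x^0-x^*\|^2\exp\left(-\frac{\mu}{\nu}K\right)\right),
\]
i.e.\ exactly the claimed rate with $\nu$ written out. Setting the right-hand side $\le\varepsilon$ and solving for $K$ gives $K \ge \frac{\nu}{c\mu}\log\frac{\nu\|x^0-x^*\|^2}{\varepsilon}$, that is $K = \cO\left(\left(\frac{L\tau}{\mu} + \frac{\max L_{ij}}{n\mu} + \frac{\sqrt{(\tau-1)L\max L_{ij}}}{\mu}\right)\log\frac{\nu\|x^0-x^*\|^2}{\varepsilon}\right)$ iterations/oracle calls per node; and since the fixed local loop performs a communication only on iterations divisible by $\tau$, this corresponds to $\tau$ times fewer communication rounds.

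The only points that need care — rather than being a genuine obstacle — are (i) verifying that the hypotheses of Lemma~\ref{lem:lemma2_stich} are met in the $c_1=c_2=0$ case so that one legitimately gets a clean exponential bound valid for all $K\ge1$, and (ii) tracking the absolute constants through the $\sqrt{a+b}\le\sqrt a+\sqrt b$ step so that the exponent stays $\Theta(\mu/\nu)$ with $\nu$ as displayed.
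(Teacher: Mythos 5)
Your proposal is correct and follows essentially the paper's own route: the corollary is obtained by taking the largest stepsize permitted by Theorem~\ref{thm:s*_local_sgd*}, noting that $D_1'=D_2=D_3=H=0$ kills the neighbourhood term, and then estimating $1/\gamma=\Theta\left(L\tau+\nicefrac{\max L_{ij}}{n}+\sqrt{(\tau-1)L\max L_{ij}}\right)$ via $\sqrt{a+b}\le\sqrt{a}+\sqrt{b}$ before solving the exponential bound for $K$. The only cosmetic difference is that you carry out this degenerate ($c_1=c_2=0$) case directly rather than formally invoking Lemma~\ref{lem:lemma2_stich}, which is exactly what that lemma reduces to here (cf.\ the footnote in Corollary~\ref{cor:app_complexity_cor_str_cvx} allowing $\cO$ in place of $\widetilde{\cO}$), so no gap remains.
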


Next, we derive the following result for the convergence of {\tt S*-Local-SGD*} in the case when $\mu = 0$.
\begin{corollary}
	\label{cor:s*_local_sgd*_cvx}
	Let assumptions of Theorem~\ref{thm:s*_local_sgd*} hold with $\mu = 0$. Then for
	\begin{eqnarray*}
		\gamma &=& \min\left\{\frac{1}{4\left(\frac{\max L_{ij}}{n}+L\right)}, \frac{1}{8\sqrt{eL(\tau-1)\left(L(\tau-1)+\max L_{ij}\right)}}\right\},
	\end{eqnarray*}
	we have that $\EE\left[f(\overline{x}^K)-f(x^*)\right]$ is of the order
	\begin{eqnarray*}
		\cO\left(\frac{\left(L\tau + \nicefrac{\max L_{ij}}{n} + \sqrt{(\tau-1)L\max L_{ij}}\right)R_0^2}{K}\right),
	\end{eqnarray*}
	where $R_0 = \|x^0 - x^*\|$. That is, to achieve $\EE\left[f(\overline{x}^K)-f(x^*)\right] \le \varepsilon$ in this case {\tt S*-Local-SGD*} requires
	\begin{eqnarray*}
		\cO\left(\frac{\left(L\tau + \nicefrac{\max L_{ij}}{n} + \sqrt{(\tau-1)L\max L_{ij}}\right)R_0^2}{\varepsilon}\right)
	\end{eqnarray*}
	iterations/oracle calls per node and $\tau$ times less communication rounds.
\end{corollary}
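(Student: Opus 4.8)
\textbf{Plan of proof for Corollary~\ref{cor:s*_local_sgd*_cvx}.}
The statement follows in the same way as the analogous corollaries for {\tt Local-SGD} (Corollary~\ref{cor:local_sgd_cvx}, Corollary~\ref{cor:local_sgd_es_cvx}), except that the entire ``neighbourhood'' part of the bound vanishes because of Theorem~\ref{thm:s*_local_sgd*}. First I would recall from Theorem~\ref{thm:s*_local_sgd*} that {\tt S*-Local-SGD*} satisfies As.~\ref{ass:key_assumption} (via As.~\ref{ass:hetero_second_moment}) with $B'=0$, $D_1'=0$, $D_2=0$, $D_3=0$, $H=0$, $\sigma_k^2\equiv 0$, $\rho=1$, $A'=2(\nicefrac{\max L_{ij}}{n}+L)$, $F'=2L(\nicefrac{\max L_{ij}}{n}+L)$. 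The stated stepsize cap $\gamma\le\min\{\nicefrac{1}{4(\nicefrac{\max L_{ij}}{n}+L)},\ \nicefrac{1}{8\sqrt{eL(\tau-1)(L(\tau-1)+\max L_{ij})}}\}$ is exactly the cap appearing in Theorem~\ref{thm:s*_local_sgd*}, i.e.\ it already incorporates both the bound $\gamma\le\min\{\nicefrac{1}{2(A'+\nicefrac{4B'C}{3\rho})},\nicefrac{L}{F'+\nicefrac{4B'G}{3\rho}}\}$ from Corollary~\ref{cor:const_loop} and the extra constant-loop conditions from Lemma~\ref{lem:V_k_lemma}; since $B'=0$ and the $\tA,\hA,\tF,\hF$ parameters are the ones listed, no further restriction on $\gamma$ is needed.

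Next I would invoke Theorem~\ref{thm:main_result} with $\mu=0$, equivalently the specialized bound \eqref{eq:main_result_2_const_loop} of Corollary~\ref{cor:const_loop}. With all of $B'$, $D_1'$, $D_2$, $D_3$, $H$, $\EE\sigma_0^2$ equal to zero, the quantities $\Phi^0$ and $\Psi^0$ collapse: $\Phi^0=\nicefrac{2\|x^0-x^*\|^2}{\gamma}$ and $\Psi^0=0$. Hence Theorem~\ref{thm:main_result} gives directly
\[
\EE\left[f(\overline{x}^K)\right]-f(x^*)\ \le\ \frac{2\|x^0-x^*\|^2}{\gamma K}=\frac{2R_0^2}{\gamma K},
\]
which is the displayed inequality in the $\mu=0$ branch of Theorem~\ref{thm:s*_local_sgd*}. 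So at this point there is no trade-off to optimize over $\gamma$: the right-hand side is monotone decreasing in $\gamma$, so one simply takes $\gamma$ as large as allowed, namely $\gamma=\min\{\nicefrac{1}{4(\nicefrac{\max L_{ij}}{n}+L)},\ \nicefrac{1}{8\sqrt{eL(\tau-1)(L(\tau-1)+\max L_{ij})}}\}$, exactly as in the statement.

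Finally I would substitute this $\gamma$ into $\nicefrac{2R_0^2}{\gamma K}$. Using $\nicefrac{1}{\gamma}=\max\{4(\nicefrac{\max L_{ij}}{n}+L),\ 8\sqrt{eL(\tau-1)(L(\tau-1)+\max L_{ij})}\}$ and the elementary estimates $\sqrt{L(\tau-1)(L(\tau-1)+\max L_{ij})}\le L(\tau-1)+\sqrt{L(\tau-1)\max L_{ij}}\le L\tau+\sqrt{(\tau-1)L\max L_{ij}}$ together with $\nicefrac{\max L_{ij}}{n}+L\le L\tau+\nicefrac{\max L_{ij}}{n}$, one gets $\nicefrac{1}{\gamma}=\cO\!\left(L\tau+\nicefrac{\max L_{ij}}{n}+\sqrt{(\tau-1)L\max L_{ij}}\right)$, whence $\EE[f(\overline{x}^K)]-f(x^*)=\cO\!\left(\nicefrac{(L\tau+\nicefrac{\max L_{ij}}{n}+\sqrt{(\tau-1)L\max L_{ij}})R_0^2}{K}\right)$. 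Setting this $\le\varepsilon$ and solving for $K$ yields the claimed iteration complexity $\cO\!\left(\nicefrac{(L\tau+\nicefrac{\max L_{ij}}{n}+\sqrt{(\tau-1)L\max L_{ij}})R_0^2}{\varepsilon}\right)$, with a factor $\tau$ fewer communication rounds since communication happens once every $\tau$ iterations. There is no real obstacle here: the only mildly delicate point is the algebraic simplification of $\nicefrac{1}{\gamma}$ into the three-term form, and keeping track that all neighbourhood terms genuinely vanish (which is guaranteed by the doubly-idealized choice $a_i^k=\nabla f_{i,j_i}(x_i^k)-\nabla f_{i,j_i}(x^*)+\nabla f_i(x^*)$ and $b_i^k=\nabla f_i(x^*)$, making the fixed point of the local iteration exactly $x^*$).
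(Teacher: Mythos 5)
Your proposal is correct and matches the paper's route: Theorem~\ref{thm:s*_local_sgd*} already yields $\EE[f(\overline{x}^K)]-f(x^*)\le \nicefrac{2R_0^2}{\gamma K}$ with all neighbourhood parameters ($B'$, $D_1'$, $D_2$, $D_3$, $H$, $\sigma_k^2$) vanishing, so no stepsize-tuning lemma is needed and one simply takes the largest admissible $\gamma$. Your simplification of $\nicefrac{1}{\gamma}$ via $\sqrt{L(\tau-1)(L(\tau-1)+\max L_{ij})}\le L(\tau-1)+\sqrt{(\tau-1)L\max L_{ij}}$ and the resulting complexity bound are exactly the computation the paper performs implicitly.
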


%\begin{proposition}\cite{gorbunov2019unified} 
%Gradient estimator $a_i^k$ satisfies Assumption~\ref{ass:sigma_k_original} with parameters $A_i = \max L_{ij}, B_i=0, D_{1,i} = 0, \rho_i = 1, C_i = 0, D_{2,i}= 0$. 
%\end{proposition}
%
%It remains to use Lemma~\ref{lem:local_solver} along with the results from Section~\ref{sec:a_data_and_loop} to recover all parameters of Assumption~\ref{ass:key_assumption}. Plugging these in Theorem~\ref{thm:main_result} and Corollaries~\ref{cor:app_complexity_cor_cvx},  Corollaries~\ref{cor:app_complexity_cor_str_cvx}, we obtain the overall complexity of Algorithm~\ref{alg:local_sgd_star_star}.
%
%
%\filip{TODO: finish corollary}
%
%\begin{corollary}\label{cor:s*localsgd*}
%
%\end{corollary}

\subsection{{\tt S-Local-SVRG}}\label{sec:loopless_local_svrg_fs}

\begin{algorithm}[h]
   \caption{Shifted Local {\tt SVRG} ({\tt S-Local-SVRG}) for minimizing local finite sums}\label{alg:l_local_svrg_fs}
\begin{algorithmic}[1]
   \Require learning rate $\gamma>0$, initial vector $x^0 \in \R^d$, probability of communication $p\in(0,1]$, probability of local full gradient computation $q\in(0,1]$, initialization $y^0 = x^0$
	\For{$k=0,1,\dotsc$}
       	\For{$i=1,\dotsc,n$ in parallel}
       		\State Choose $j_i$ uniformly at random from $[m]$
            \State $g_i^k = \nabla f_{i,j_i}(x_i^k) - \nabla f_{i,j_i}(y^k) + \nabla f(y^k)$
            \State $x_i^{k+1} = \begin{cases}x^{k+1},&\text{w.p. } p,\\
            x_i^k - \gamma g_i^k,& \text{w.p. } 1 - p, \end{cases}$ where $x^{k+1} = \frac{1}{n}\sum\limits_{i=1}^n(x_i^k - \gamma g_i^k)$
            \State $y^{k+1} = \begin{cases}x^k,&\text{w.p. } q,\\
            y^k,& \text{w.p. } 1 - q \end{cases}$
        \EndFor
   \EndFor
\end{algorithmic}
\end{algorithm}

In this section we are interested in problem \eqref{eq:main_problem}+\eqref{eq:f_i_sum}. To solve this problem we propose a new method called Shifted {\tt Local-SVRG} ({\tt S-Local-SVRG}, see Algorithm~\ref{alg:l_local_svrg_fs}).

We note that our analysis works even when updates in lines $5$,$6$ are not independent. Moreover, in order for {\tt S-Local-SVRG} to be efficient, we shall require $q\leq p$.

\begin{remark}
Unlike all other special cases, the rate of {\tt S-Local-SVRG} can not be directly obtained from the theory of the local stochastic solver described in Section~\ref{sec:local_solver}. Specifically, we construct the sequence $l_i^k$ using $y^k$ in contrast to $x_i^k$ used in Section~\ref{sec:local_solver}. While we could construct $l_i^k$ from the local iterate sequences, setting it as the virtual iterates yields a tighter rate. We remark that such a choice is rather poor in general; we can implement it efficiently thanks to the specific structure of {\tt S-Local-SVRG}. 
\end{remark}

\begin{lemma}\label{lem:loopless_local_svrg_fs_second_moment}
	Let $f_{i}$ be convex and $L$-smooth and $f_{i,j}$ be convex and $\max L_{ij}$-smooth for all $i\in[n]$, $j\in[m]$. Then for all $k\ge 0$
	\begin{eqnarray}
		\frac{1}{n}\sum\limits_{i=1}^n \EE_k\left[g_i^k\right] &=& \frac{1}{n}\sum\limits_{i=1}^n\nabla f_i(x_i^k), \label{eq:unbiasedness_loopless_local_svrg_fs}\\
		\frac{1}{n}\sum\limits_{i=1}^n \EE\left[\|\bar{g}_i^k\|^2\right] &\le&8L\EE\left[f(x^k)-f(x^*)\right] + 2\EE[\sigma_k^2] + 4L^2\EE[V_k],\label{eq:second_moment_loopless_local_svrg_fs}\\
		\frac{1}{n}\sum\limits_{i=1}^n \EE\left[\|g_i^k-\bar{g}_i^k\|^2\right] &\le& 8\max L_{ij}\EE\left[f(x^k)-f(x^*)\right] + 2\EE[\sigma_k^2] + 4L\max L_{ij}\EE[V_k],\label{eq:variance_loopless_local_svrg_fs}\\
		\EE\left[\left\|\frac{1}{n}\sum\limits_{i=1}^ng_i^k\right\|^2\right] &\le& 4\left(\frac{2\max L_{ij}}{n} + L\right)\EE\left[f(x^k)-f(x^*)\right] + \frac{2}{n}\EE[\sigma_k^2]\notag \\
		&&\quad + 2L\left(\frac{2\max L_{ij}}{n} + L\right)\EE[V_k],\label{eq:second_moment_loopless_local_svrg_fs_2}
	\end{eqnarray}
	where $\sigma_k^2 \eqdef \frac{1}{nm}\sum\limits_{i=1}^n\sum\limits_{j=1}^m\left\|\nabla f_{i,j}(y^k) - \nabla f_{i,j}(x^*)\right\|^2 + \frac{1}{n}\sum\limits_{i=1}^n\left\|\nabla f_{i}(y^k) - \nabla f_{i}(x^*)\right\|^2$.
\end{lemma}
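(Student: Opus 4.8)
The plan is to establish the four inequalities of Lemma~\ref{lem:loopless_local_svrg_fs_second_moment} in sequence, exactly mirroring the structure of the proof of Lemma~\ref{lem:loopless_local_svrg_second_moment} but replacing the plain stochastic gradients by their {\tt SVRG}-type finite-sum analogues. First, for unbiasedness~\eqref{eq:unbiasedness_loopless_local_svrg_fs}, I would take $\EE_k[\cdot]$ with respect to the sampling of $j_i$ only; since $\EE_k[\nabla f_{i,j_i}(x_i^k)] = \nabla f_i(x_i^k)$ and $\EE_k[\nabla f_{i,j_i}(y^k)] = \nabla f_i(y^k)$, and the control term $\frac1n\sum_i\nabla f(y^k) - \frac1n\sum_i\nabla f_i(y^k)$ vanishes because $\nabla f(y^k) = \frac1n\sum_i\nabla f_i(y^k)$, the claim follows immediately. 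Here $\bar g_i^k = \nabla f_i(x_i^k) - \nabla f_i(y^k) + \nabla f(y^k)$.

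For~\eqref{eq:second_moment_loopless_local_svrg_fs}, I would bound $\frac1n\sum_i\EE\|\bar g_i^k\|^2$ by writing $\bar g_i^k = (\nabla f_i(x_i^k) - \nabla f_i(x^*)) - (\nabla f_i(y^k) - \nabla f_i(x^*)) + (\nabla f(y^k) - \nabla f(x^*))$, then applying $\|a+b\|^2\le 2\|a\|^2+2\|b\|^2$ (i.e.~\eqref{eq:a_b_norm_squared}), using $L$-smoothness consequence~\eqref{eq:L_smoothness_cor} to turn $\|\nabla f_i(x_i^k)-\nabla f_i(x^*)\|^2$ into $2LD_{f_i}(x_i^k,x^*)$, followed by~\eqref{eq:poiouhnkj} to pass from $D_{f_i}(x_i^k,x^*)$ to $2D_{f_i}(x^k,x^*) + L\|x_i^k-x^k\|^2$, and finally identifying $\frac1n\sum_i\|\nabla f_i(y^k)-\nabla f_i(x^*)\|^2$ (together with a Jensen step for the averaged term $\|\nabla f(y^k)-\nabla f(x^*)\|^2 \le \frac1n\sum_i\|\nabla f_i(y^k)-\nabla f_i(x^*)\|^2$) as part of $\sigma_k^2$. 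The bookkeeping of the numerical constants is what produces the coefficients $8L$, $2$, $4L^2$. For~\eqref{eq:variance_loopless_local_svrg_fs}, I would use $\|g_i^k - \bar g_i^k\|^2 \le \|g_i^k - \nabla f_i(x^*)\|^2$ via~\eqref{eq:variance_decomposition}, then split $g_i^k - \nabla f_i(x^*) = (\nabla f_{i,j_i}(x_i^k) - \nabla f_{i,j_i}(x^*)) - (\nabla f_{i,j_i}(y^k) - \nabla f_{i,j_i}(x^*) - (\nabla f_i(y^k) - \nabla f_i(x^*)))$, apply~\eqref{eq:a_b_norm_squared}, pass to expectations over $j_i$ to convert per-sample norms into averages over $j\in[m]$, use $\max L_{ij}$-smoothness of $f_{i,j}$ through~\eqref{eq:L_smoothness_cor} and~\eqref{eq:poiouhnkj}, and drop the variance-reduction cross term via~\eqref{eq:variance_decomposition} so that $\frac1{nm}\sum_{i,j}\|\nabla f_{i,j}(y^k)-\nabla f_{i,j}(x^*)\|^2$ appears as part of $\sigma_k^2$.

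For the last bound~\eqref{eq:second_moment_loopless_local_svrg_fs_2}, I would exploit the independence of $j_1,\dots,j_n$ across nodes: by~\eqref{eq:variance_decomposition}, $\EE\|\frac1n\sum_i g_i^k\|^2 = \frac1{n^2}\sum_i\EE\|g_i^k - \bar g_i^k\|^2 + \EE\|\frac1n\sum_i\bar g_i^k\|^2$; the first term is bounded via the same computation as~\eqref{eq:variance_loopless_local_svrg_fs} but with the $\frac1{n^2}$ prefactor, and the second term is bounded by observing $\frac1n\sum_i\bar g_i^k = \frac1n\sum_i\nabla f_i(x_i^k)$ (the shift terms cancel in the average) and then invoking~\eqref{eq:vdgasvgda}. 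Combining these, the $\max L_{ij}$ contributions come with a $\frac1n$ factor while the $L$-smoothness contribution from~\eqref{eq:vdgasvgda} is $O(1)$, yielding the stated coefficients $4(\tfrac{2\max L_{ij}}{n}+L)$, $\tfrac2n$, and $2L(\tfrac{2\max L_{ij}}{n}+L)$. The main obstacle is not any single inequality but the careful tracking of which terms land inside the single scalar $\sigma_k^2$ — in particular the delicate point (flagged in the Remark preceding the lemma) that $l_i^k$ is built from the virtual iterate $y^k$ rather than the local iterates $x_i^k$, so the $y^k$-dependent pieces must be packaged consistently into $\sigma_k^2$ across all four inequalities and be compatible with the recursion for $\EE[\sigma_{k+1}^2]$ that would be proved in a companion lemma.
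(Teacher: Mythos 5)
Your overall route is the same as the paper's (split around $\nabla f_i(x^*)$ resp.\ $\nabla f_{i,j}(x^*)$, use \eqref{eq:L_smoothness_cor} and \eqref{eq:poiouhnkj}, exploit independence of $j_1,\dots,j_n$ together with \eqref{eq:vdgasvgda} for the averaged bound), but as written two steps would not deliver the stated inequalities. First, for \eqref{eq:second_moment_loopless_local_svrg_fs}: if you split off $\nabla f(y^k)-\nabla f(x^*)$ as a separate summand and control it by Jensen, the coefficient in front of $\EE[\sigma_k^2]$ comes out as $6$ (three-way split) or $8$ (nested two-way splits), not the stated $2$, and the coefficient of $\EE[f(x^k)-f(x^*)]$ degrades correspondingly. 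The paper instead keeps the block $\nabla f_i(y^k)-\nabla f_i(x^*)-\bigl(\nabla f(y^k)-\nabla f(x^*)\bigr)$ intact and drops the subtracted mean for free via the variance decomposition \eqref{eq:variance_decomposition} (the average over $i$ of squared deviations from the mean is at most the average of squared vectors); this is what yields exactly $8L$, $2$, $4L^2$. Since these constants propagate into the parameters of Assumption~\ref{ass:key_assumption} and the stepsize bound for {\tt S-Local-SVRG}, proving the weaker constants does not prove the lemma as stated, though the repair is a one-line change.

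Second, in your argument for \eqref{eq:variance_loopless_local_svrg_fs} the claimed identity for $g_i^k-\nabla f_i(x^*)$ is not correct: $g_i^k$ contains $\nabla f(y^k)$, whereas the expression you decompose equals $\nabla f_{i,j_i}(x_i^k)-\nabla f_{i,j_i}(y^k)+\nabla f_i(y^k)-\nabla f_i(x^*)$, i.e.\ $a_i^k-\nabla f_i(x^*)$ for the \emph{unshifted} estimator $a_i^k=\nabla f_{i,j_i}(x_i^k)-\nabla f_{i,j_i}(y^k)+\nabla f_i(y^k)$. The bound still goes through once you observe that $g_i^k$ and $a_i^k$ differ by a vector that is deterministic given the iterates, so $\EE_k\|g_i^k-\bar g_i^k\|^2=\EE_k\|a_i^k-\EE_k a_i^k\|^2\le\EE_k\|a_i^k-\nabla f_i(x^*)\|^2$ by \eqref{eq:variance_decomposition}; this is essentially the paper's step (which centers at $0$ after cancelling $\nabla f(y^k)$). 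Be careful here: if you instead split around $\nabla f(y^k)$ literally, heterogeneity terms of the type $\zeta_*^2$ appear, which must not show up since the lemma (and the method's linear rate) has no additive $D_1$-type term. With these two fixes your plan coincides with the paper's proof, including the treatment of \eqref{eq:second_moment_loopless_local_svrg_fs_2} via independence across nodes and \eqref{eq:vdgasvgda}.
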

\begin{proof}
	First of all, we have
	\begin{eqnarray*}
		\frac{1}{n}\sum\limits_{i=1}^n \EE_k\left[g_i^k\right] &=& \frac{1}{n}\sum\limits_{i=1}^n \EE_k\left[\nabla f_{i,j^k}(x_i^k) - \nabla f_{i,j_i}(y^k)  + \nabla f(y^k)\right]\\
		&=& \frac{1}{nm}\sum\limits_{i=1}^n\sum\limits_{j=1}^m\left(\nabla f_{i,j}(x_i^k) - \nabla f_{i,j}(y^k) + \nabla f(y^k)\right)\\
		&=& \frac{1}{n}\sum\limits_{i=1}^n\nabla f_i(x_i^k)
	\end{eqnarray*}
	and, in particular, $\bar{g}_i^k = \EE_k[g_i^k] = \nabla f_i(x_i^k) - \nabla f_i(y^k) + \nabla f(y^k)$.
	Using this we get
	\begin{eqnarray*}
		\frac{1}{n}\sum\limits_{i=1}^n\EE\left[\|\bar{g}_i^k\|^2\right] &\overset{\eqref{eq:a_b_norm_squared}}{\le}& \frac{2}{n}\sum\limits_{i=1}^n\EE\left[\|\nabla f_i(x_i^k)-\nabla f_i(x^*)\|^2\right]\\
		&&\quad + \frac{2}{n}\sum\limits_{i=1}^n\EE\left[\|\nabla f_i(y^k)-\nabla f_i(x^*) - (\nabla f(y^k)-\nabla f(x^*))\|^2\right]\\
		&\overset{\eqref{eq:L_smoothness_cor},\eqref{eq:variance_decomposition}}{\le}& \frac{4L}{n}\sum\limits_{i=1}^n\EE\left[D_{f_i}(x_i^k,x^*)\right] + \frac{2}{n}\sum\limits_{i=1}^n\EE\left[\|\nabla f_i(y^k)-\nabla f_i(x^*)\|^2\right]\\
		&\overset{\eqref{eq:poiouhnkj}}{\le}& 8L\EE\left[f(x^k)-f(x^*)\right] + 2\EE[\sigma_k^2] + 4L^2\EE[V_k]
	\end{eqnarray*}
	and
	\begin{eqnarray}
		\frac{1}{n}\sum\limits_{i=1}^n \EE\left[\|g_i^k-\bar{g}_i^k\|^2\right] &=& \frac{1}{n}\sum\limits_{i=1}^n\EE\left[\|\nabla f_{i,j_i}(x_i^k) - \nabla f_{i,j_i}(y^k)-(\nabla f_i(x_i^k) - \nabla f_i(y^k))\|^2\right]\notag\\
		&\overset{\eqref{eq:variance_decomposition}}{\le}&\frac{1}{n}\sum\limits_{i=1}^n\EE\left[\|\nabla f_{i,j_i}(x_i^k) - \nabla f_{i,j_i}(y^k)\|^2\right]\notag\\
		&\overset{\eqref{eq:a_b_norm_squared}}{\le}& \frac{2}{nm}\sum\limits_{i=1}^n\sum\limits_{j=1}^m\EE\left[\|\nabla f_{i,j}(x_i^k) - \nabla f_{i,j}(x^*)\|^2\right]\notag\\
		&&\quad +\frac{2}{nm}\sum\limits_{i=1}^n\sum\limits_{j=1}^m\EE\left[\|\nabla f_{i,j}(y^k) - \nabla f_{i,j}(x^*)\|^2\right]\notag \\
		&\overset{\eqref{eq:L_smoothness_cor}}{\le}& \frac{4\max L_{ij}}{n}\sum\limits_{i=1}^n\EE\left[D_{f_i}(x_i^k,x^*)\right] + 2\EE[\sigma_k^2]\notag\\
		&\overset{\eqref{eq:poiouhnkj}}{\le}& 8\max L_{ij}\EE\left[f(x^k)-f(x^*)\right] + 2\EE[\sigma_k^2] + 4L\max L_{ij}\EE[V_k].\label{eq:hscdvgdvauaicna}
	\end{eqnarray}
	Finally, using independence of $j_1,j_2,\ldots,j_n$ we derive
	\begin{eqnarray*}
		\EE\left[\left\|\frac{1}{n}\sum\limits_{i=1}^n g_i^k\right\|^2\right] &\overset{\eqref{eq:variance_decomposition},\eqref{eq:unbiasedness_loopless_local_svrg_fs}}{=}&  \EE\left[\left\|\frac{1}{n}\sum\limits_{i=1}^n \nabla f_i(x_i^k)\right\|^2\right] \\
		&&\quad + \EE\left[\left\|\frac{1}{n}\sum\limits_{i=1}^n (\nabla f_{i,j_i}(x_i^k)-\nabla f_{i,j_i}(y^k) - (\nabla f_i(x_i^k) - \nabla f_i(y^k)))\right\|^2\right]\\
		&=& \EE\left[\left\|\frac{1}{n}\sum\limits_{i=1}^n \nabla f_i(x_i^k)\right\|^2\right] \\
		&&\quad + \frac{1}{n^2}\sum\limits_{i=1}^n \EE\left[\|(\nabla f_{i,j_i}(x_i^k)-\nabla f_{i,j_i}(y^k) - (\nabla f_i(x_i^k) - \nabla f_i(y^k)))\|^2\right]\\
		&\overset{\eqref{eq:vdgasvgda},\eqref{eq:hscdvgdvauaicna}}{\le}& 4\left(\frac{2\max L_{ij}}{n} + L\right)\EE\left[f(x^k)-f(x^*)\right] + \frac{2}{n}\EE[\sigma_k^2] + 2L\left(\frac{2\max L_{ij}}{n} + L\right)\EE[V_k].
	\end{eqnarray*}
\end{proof}

\begin{lemma}\label{lem:loopless_local_svrg_fs_sigma_k_bound}
	Let $f_{i}$ be convex and $L$-smooth and $f_{i,j}$ be convex and $\max L_{ij}$-smooth for all $i\in[n]$, $j\in[m]$. Then for all $k\ge 0$
	\begin{eqnarray}
		\EE\left[\sigma_{k+1}^2\right] &\le& \left(1-q\right)\EE\left[\sigma_k^2\right] + 2(L+\max L_{ij})q\EE\left[f(x^k) - f(x^*)\right] \label{eq:loopless_local_svrg_fs_sigma_k_bound}
	\end{eqnarray}
	where  $\sigma_k^2 \eqdef \frac{1}{nm}\sum\limits_{i=1}^n\sum\limits_{j=1}^m\left\|\nabla f_{i,j}(y^k) - \nabla f_{i,j}(x^*)\right\|^2 + \frac{1}{n}\sum\limits_{i=1}^n\left\|\nabla f_{i}(y^k) - \nabla f_{i}(x^*)\right\|^2$.
\end{lemma}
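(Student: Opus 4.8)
\textbf{Proof plan for Lemma~\ref{lem:loopless_local_svrg_fs_sigma_k_bound}.}

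The plan is to proceed exactly as in the companion lemma for {\tt SS-Local-SGD} (Lemma~\ref{lem:loopless_local_svrg_sigma_k_bound} and its expected-smoothness analogue), but now exploiting that the shift variate $y^k$ is updated to the \emph{virtual} iterate $x^k$ (with probability $q$) rather than to local iterates, and that the inner estimator $l_i^k$ is the \emph{full} local gradient $\nabla f_i(y^k)$, so there is no additional variance term. First I would condition on $x_1^k,\ldots,x_n^k$ and use the definition of $y^{k+1}$: with probability $1-q$ we have $y^{k+1}=y^k$, so the corresponding contribution to $\sigma_{k+1}^2$ is exactly $(1-q)\sigma_k^2$; with probability $q$ we have $y^{k+1}=x^k$, so the contribution is $q$ times the quantity
\[
\frac{1}{nm}\sum_{i=1}^n\sum_{j=1}^m\|\nabla f_{i,j}(x^k)-\nabla f_{i,j}(x^*)\|^2 + \frac{1}{n}\sum_{i=1}^n\|\nabla f_i(x^k)-\nabla f_i(x^*)\|^2.
\]
This last step is deterministic given $x^k$, since $l_i^k=\nabla f_i(y^k)$ carries no sampling noise — which is precisely why no $\sigma_*^2$-type residual appears here, in contrast to the expected-smoothness case.

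Next I would bound each of the two sums above using the standard consequence of $L$-smoothness~\eqref{eq:L_smoothness_cor} applied to $f_{i,j}$ (with constant $\max L_{ij}$) and to $f_i$ (with constant $L$), namely $\|\nabla h(x)-\nabla h(x^*)\|^2\le 2L_h D_h(x,x^*)$. This yields
\[
\frac{1}{nm}\sum_{i,j}\|\nabla f_{i,j}(x^k)-\nabla f_{i,j}(x^*)\|^2 \le \frac{2\max L_{ij}}{n}\sum_{i=1}^n D_{f_i}(x^k,x^*) = 2\max L_{ij}\,(f(x^k)-f(x^*)),
\]
using $\frac1n\sum_i D_{f_i}(x^k,x^*)=f(x^k)-f(x^*)$ (because $\nabla f(x^*)=0$), and similarly the second sum is bounded by $2L(f(x^k)-f(x^*))$. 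Combining, the $q$-branch contributes at most $2(L+\max L_{ij})q(f(x^k)-f(x^*))$, which after adding the $(1-q)\sigma_k^2$ term gives the claimed conditional inequality. Taking full expectation via the tower property~\eqref{eq:tower_property} then finishes the proof.

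The argument is essentially routine; the only point requiring care is the bookkeeping of the two-part definition of $\sigma_k^2$ — verifying that \emph{both} components get ``refreshed'' together when $y^{k+1}$ changes, so that the whole $\sigma_k^2$ is multiplied by $(1-q)$ in the no-update branch and the whole of the new value is $\sigma$-noise-free in the update branch. As long as one is consistent about this, no obstacle arises; there is no need to invoke Young's inequality or the $V_k$ bound here, unlike in the expected-smoothness version where the batched estimator $\nabla f_{\bar\xi_i}$ introduced extra terms.
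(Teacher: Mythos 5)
Your proposal is correct and follows essentially the same route as the paper's proof: condition on $x_1^k,\ldots,x_n^k$, split over the two branches of the $y^{k+1}$ update, bound the refreshed terms via the smoothness consequence~\eqref{eq:L_smoothness_cor} applied to $f_{i,j}$ (constant $\max L_{ij}$) and to $f_i$ (constant $L$) together with $\frac1n\sum_i D_{f_i}(x^k,x^*)=f(x^k)-f(x^*)$, and finish with the tower property~\eqref{eq:tower_property}. The only cosmetic difference is that the paper treats the two components of $\sigma_k^2$ as separate quantities $\sigma_{k,1}^2$ and $\sigma_{k,2}^2$ before recombining, whereas you handle them jointly; the argument is identical.
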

\begin{proof}
	First of all, we introduce new notations:
	\begin{eqnarray*}
		\sigma_{k,1}^2 \eqdef \frac{1}{nm}\sum\limits_{i=1}^n\sum\limits_{j=1}^m\left\|\nabla f_{i,j}(y^k) - \nabla f_{i,j}(x^*)\right\|^2,\quad \sigma_{k,2}^2 = \frac{1}{n}\sum\limits_{i=1}^n\left\|\nabla f_{i}(y^k) - \nabla f_{i}(x^*)\right\|^2.
	\end{eqnarray*}
	Secondly, by definition of $y^{k+1}$ we have
	\begin{eqnarray*}
		\EE\left[\sigma_{k+1,1}^2\mid x_1^k,\ldots, x_n^k\right] &=& \frac{1-q}{nm}\sum\limits_{i=1}^n\sum\limits_{j=1}^m\left\|\nabla f_{i,j}(y^k) - \nabla f_{i,j}(x^*)\right\|^2 + \frac{q}{nm}\sum\limits_{i=1}^n\sum\limits_{j=1}^m\left\|\nabla f_{i,j}(x^k) - \nabla f_{i,j}(x^*)\right\|^2\\
		&\overset{\eqref{eq:L_smoothness_cor}}{\le}& (1-q)\sigma_{k,1}^2 + 2q\max L_{ij}(f(x^k) - f(x^*)),
	\end{eqnarray*}
	hence
	\begin{equation}
		\EE\left[\sigma_{k+1,1}^2\right] \le (1-q)\EE\left[\sigma_{k,1}^2\right] + 2q\max L_{ij}\EE\left[f(x^k) - f(x^*)\right]. \label{eq:loopless_local_svrg_fs_sigma_k_bound_tech1}
	\end{equation}
	Next, the definition of $y^{k+1}$ implies
	\begin{eqnarray*}
		\EE\left[\sigma_{k+1,2}^2\mid x_1^k,\ldots, x_n^k\right] &=& \frac{1-q}{n}\sum\limits_{i=1}^n\|\nabla f_i(y^k) - \nabla f_i(x^*)\|^2 + \frac{q}{n}\sum\limits_{i=1}^n\|\nabla f_i(x^k) - \nabla f_i(x^*)\|^2\\
		&\overset{\eqref{eq:L_smoothness_cor}}{\le}& (1-q)\sigma_k^2 + 2Lq(f(x^k) - f(x^*)),
	\end{eqnarray*}
	hence
	\begin{equation}
		\EE\left[\sigma_{k+1,2}^2\right] \le (1-q)\EE\left[\sigma_{k,2}^2\right] + 2Lq\EE\left[f(x^k) - f(x^*)\right]. \label{eq:loopless_local_svrg_fs_sigma_k_bound_tech2}
	\end{equation}
	Finally, we combine obtained inequalities and get
	\begin{eqnarray*}
		\EE\left[\sigma_{k+1}\right] &=& \EE\left[\sigma_{k+1,1}^2\right] + \EE\left[\sigma_{k+1,2}^2\right]\\
		&\overset{\eqref{eq:loopless_local_svrg_fs_sigma_k_bound_tech1},\eqref{eq:loopless_local_svrg_fs_sigma_k_bound_tech2}}{\le}& (1-q)\left(\EE\left[\sigma_{k,1}^2\right] + \EE\left[\sigma_{k,2}^2\right]\right) + 2(L+\max L_{ij})q\EE\left[f(x^k) - f(x^*)\right] \\
		&=& \left(1-q\right)\EE\left[\sigma_k^2\right] + 2(L+\max L_{ij})q\EE\left[f(x^k) - f(x^*)\right],
	\end{eqnarray*}
	which concludes the proof.
\end{proof}

Using Corollary~\ref{cor:rand_loop} we obtain the following theorem.
\begin{theorem}\label{thm:loopless_local_svrg_fs}
	Assume that $f_{i}$ is $\mu$-strongly convex and $L$-smooth and $f_{i,j}$ is convex and $\max L_{ij}$-smooth for all $i\in[n]$, $j\in[m]$. Then {\tt S-Local-SVRG} satisfies Assumption~\ref{ass:hetero_second_moment} with
	\begin{gather*}
		\tA = 4L,\quad \hA = 4\max L_{ij},\quad \tB = \hB = 2,\quad \tF = 4L^2,\quad \hF = 4L\max L_{ij} \quad \tD_1 = \hD_1 = 0,\\
		A' = \frac{4\max L_{ij}}{n} + 2L,\quad B' = \frac{2}{n},\quad F' = 2L\left(\frac{2\max L_{ij}}{n} + L\right), \quad D_1' = 0,\\
		\sigma_k^2 = \frac{1}{nm}\sum\limits_{i=1}^n\sum\limits_{j=1}^m\left\|\nabla f_{i,j}(y^k) - \nabla f_{i,j}(x^*)\right\|^2 + \frac{1}{n}\sum\limits_{i=1}^n\left\|\nabla f_{i}(y^k) - \nabla f_{i}(x^*)\right\|^2,\\
		 \rho = q,\quad C = (L+\max L_{ij})q,\quad G = 0,\quad D_2 = 0,\quad H = \frac{256(1-p^2)(2+q)\gamma^2}{3p^2q},\quad D_3 = 0
	\end{gather*}
	under assumption that
	\begin{eqnarray*}
		\gamma &\le& \min\left\{\frac{1}{\nicefrac{56\max L_{ij}}{3n} + 4L + \nicefrac{32L}{3n}}, \frac{p\sqrt{3}}{32\sqrt{2L(1-p)\left(L(2+p)+p\max L_{ij} + \nicefrac{4(L+\max L_{ij})(1+p)}{(1-q)}\right)}}\right\}.
	\end{eqnarray*}
	Moreover, for $\mu > 0$ we have
	\begin{eqnarray}
		\EE\left[f(\overline{x}^K) - f(x^*)\right] &\le& \left(1 - \min\left\{\gamma\mu,\frac{q}{4}\right\}\right)^K\frac{2\|x^0-x^*\|^2 + \frac{16\gamma^2\sigma_0^2}{nq} + \frac{1024L(1-p^2)(2+q)\gamma^3\sigma_0^2}{3p^2q}}{\gamma}\notag
	\end{eqnarray}
	and when $\mu = 0$ we have
	\begin{eqnarray}
		\EE\left[f(\overline{x}^K) - f(x^*)\right] &\le& \frac{2\|x^0-x^*\|^2 + \frac{16\gamma^2\sigma_0^2}{nq} + \frac{1024L(1-p^2)(2+q)\gamma^3\sigma_0^2}{3p^2q}}{\gamma K}. \notag
	\end{eqnarray}
\end{theorem}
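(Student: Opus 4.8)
The plan is to verify that \texttt{S-Local-SVRG} is an instance of the general framework~\eqref{eq:local_sgd_def} with a random local loop (driven by $\text{Be}(p)$), so that Corollary~\ref{cor:rand_loop} applies, and then simply to substitute the parameters listed in the statement. Concretely, the local direction is $g_i^k = a_i^k - b_i^k$ with $a_i^k = \nabla f_{i,j_i}(x_i^k) - \nabla f_{i,j_i}(y^k) + \nabla f(y^k)$ and $b_i^k = h_i^k - \frac1n\sum_i h_i^k$, where $h_i^k$ jumps to $\nabla f_i(y^k)$ (equivalently $l_i^k = \nabla f_i(y^k)$) whenever $y^k$ refreshes. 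The key observation, flagged in the remark preceding the lemma, is that because $l_i^k$ is built from the \emph{virtual} iterate $y^k$ rather than a local iterate, we cannot invoke Lemma~\ref{lem:local_solver} verbatim; instead we must establish~\eqref{eq:unbiasedness}, \eqref{eq:second_moment_bound}, \eqref{eq:second_moment_bound_2}, \eqref{eq:sigma_k+1_bound} and Assumption~\ref{ass:hetero_second_moment} directly. Fortunately this is exactly the content of the two preparatory lemmas just stated: Lemma~\ref{lem:loopless_local_svrg_fs_second_moment} gives unbiasedness~\eqref{eq:unbiasedness_loopless_local_svrg_fs} and the three second-moment/variance bounds~\eqref{eq:second_moment_loopless_local_svrg_fs}--\eqref{eq:second_moment_loopless_local_svrg_fs_2}, and Lemma~\ref{lem:loopless_local_svrg_fs_sigma_k_bound} gives the recursion~\eqref{eq:loopless_local_svrg_fs_sigma_k_bound} for $\sigma_k^2 = \frac1{nm}\sum_{i,j}\|\nabla f_{i,j}(y^k)-\nabla f_{i,j}(x^*)\|^2 + \frac1n\sum_i\|\nabla f_i(y^k)-\nabla f_i(x^*)\|^2$.

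First I would read off the parameters of Assumption~\ref{ass:hetero_second_moment} from~\eqref{eq:second_moment_loopless_local_svrg_fs} and~\eqref{eq:variance_loopless_local_svrg_fs}: comparing $\frac1n\sum_i\EE\|\bar g_i^k\|^2$ with~\eqref{eq:hetero_second_moment_bound} gives $\tA = 4L$, $\tB = 2$, $\tF = 4L^2$, $\tD_1 = 0$; comparing $\frac1n\sum_i\EE\|g_i^k-\bar g_i^k\|^2$ with~\eqref{eq:hetero_var_bound} gives $\hA = 4\max L_{ij}$, $\hB = 2$, $\hF = 4L\max L_{ij}$, $\hD_1 = 0$. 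From~\eqref{eq:second_moment_loopless_local_svrg_fs_2} I would read $A' = \frac{4\max L_{ij}}{n}+2L$, $B' = \frac2n$, $F' = 2L\left(\frac{2\max L_{ij}}{n}+L\right)$, $D_1' = 0$; and from~\eqref{eq:loopless_local_svrg_fs_sigma_k_bound} the $\sigma_k^2$-recursion parameters $\rho = q$, $C = (L+\max L_{ij})q$, $G = 0$, $D_2 = 0$. Next I would plug these into the formulas for $H$ and $D_3$ supplied by Lemma~\ref{lem:V_k_lemma_random} (i.e.\ the ``random loop, heterogeneous data'' row of Table~\ref{tbl:data_loop}): since $\tD_1 = \hD_1 = 0$ and $D_2 = 0$ we immediately get $D_3 = 0$, while $H = \frac{64(1-p)((p+2)\tB + p\hB)(2+\rho)\gamma^2}{3p^2\rho} = \frac{64(1-p)(2(p+2)+2p)(2+q)\gamma^2}{3p^2q} = \frac{64(1-p)\cdot 2(2p+2)(2+q)\gamma^2}{3p^2 q} = \frac{256(1-p)(1+p)(2+q)\gamma^2}{3p^2q} = \frac{256(1-p^2)(2+q)\gamma^2}{3p^2q}$, matching the claim.

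Then I would assemble the stepsize restriction: it is the conjunction of the bound from Theorem~\ref{thm:main_result}/Corollary~\ref{cor:rand_loop}, namely $\gamma \le \min\{\frac{1}{2(A'+\frac{4B'C}{3\rho})}, \frac{L}{F'+\frac{4B'G}{3\rho}}, \frac{p}{16\mu}\}$, together with the extra conditions from Lemma~\ref{lem:V_k_lemma_random}. For the first of these one computes $\frac{4B'C}{3\rho} = \frac{4\cdot\frac2n\cdot(L+\max L_{ij})q}{3q} = \frac{8(L+\max L_{ij})}{3n}$, so $2(A' + \frac{4B'C}{3\rho}) = \frac{8\max L_{ij}}{n}+4L + \frac{16(L+\max L_{ij})}{3n} = \frac{56\max L_{ij}}{3n}+4L+\frac{32L}{3n}$, reproducing the first term in the displayed bound; since $G = 0$ the second term is $\frac{L}{F'}$, which is dominated by the others and can be absorbed. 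For the remaining square-root conditions I would substitute $\tA,\hA,\tB,\hB,\tF,\hF$ into the expressions of Lemma~\ref{lem:V_k_lemma_random} and simplify — e.g.\ $(2+p)\tA + p\hA + \frac{2C((p+2)\tB+p\hB)}{\rho(1-\rho)}$ becomes, up to the constant factors the theorem tracks, proportional to $L(2+p) + p\max L_{ij} + \frac{4(L+\max L_{ij})(1+p)}{1-q}$, matching the second term of the stepsize bound. The convergence guarantees~\eqref{eq:main_result_1_rand_loop}/\eqref{eq:main_result_2_rand_loop} then specialize directly: with $D_1' = D_3 = 0$ the residual term $2\gamma(D_1'+\frac{4B'D_2}{3\rho}+2L\gamma D_3)$ vanishes, leaving a pure exponential ($\mu>0$) or $\cO(1/K)$ ($\mu=0$) decay of $\Phi^0 = \frac{2\|x^0-x^*\|^2 + \frac{8B'}{3\rho}\gamma^2\EE\sigma_0^2 + 4LH\gamma\EE\sigma_0^2}{\gamma}$; substituting $B' = \frac2n$, $\rho = q$ and the value of $H$ gives the displayed $\Phi^0 = 2\|x^0-x^*\|^2 + \frac{16\gamma^2\sigma_0^2}{nq} + \frac{1024L(1-p^2)(2+q)\gamma^3\sigma_0^2}{3p^2q}$.

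The main obstacle is purely bookkeeping rather than conceptual: one must be careful that the reduction to a random-loop instance is legitimate — in particular that the refresh of $y^k$ with probability $q$ and the communication with probability $p$ (allowed to be coupled, with $q\le p$) genuinely fits the abstract recursion and that $b_i^k$ sums to zero so that~\eqref{eq:unbiasedness} holds — and that the constant-factor tracking through Lemma~\ref{lem:V_k_lemma_random} is done consistently (the theorem statement keeps explicit constants, unlike Table~\ref{tbl:data_loop}). The only genuinely non-mechanical point is the one stressed in the remark: $\sigma_k^2$ has \emph{two} pieces, one controlling the SVRG control variate $\nabla f_{i,j}(y^k)$ and one controlling the shift $\nabla f_i(y^k)$, and both decay at rate $1-q$ with a common quadratic-growth term $2(L+\max L_{ij})q\,\EE[f(x^k)-f(x^*)]$ — this is precisely why $a_i^k$ and $l_i^k$ being synchronized on $y^k$ makes the $D_3$ term disappear and yields a genuinely linearly convergent method, and I would take care to present that cancellation cleanly.
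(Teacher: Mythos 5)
Your proposal is correct and follows exactly the paper's own route: the paper proves Lemmas~\ref{lem:loopless_local_svrg_fs_second_moment} and~\ref{lem:loopless_local_svrg_fs_sigma_k_bound} to read off the parameters of Assumption~\ref{ass:hetero_second_moment} and then invokes Corollary~\ref{cor:rand_loop} (random loop, heterogeneous data) to get $H$, $D_3=0$, the stepsize condition and $\Phi^0$, which is precisely your plan, including the observation that Lemma~\ref{lem:local_solver} cannot be used verbatim because $l_i^k$ is built on the virtual point $y^k$. Only your constant bookkeeping is slightly off in two places — with $C=(L+\max L_{ij})q$, $B'=\nicefrac{2}{n}$, $\rho=q$ one gets $2\bigl(A'+\tfrac{4B'C}{3\rho}\bigr)=\tfrac{40\max L_{ij}}{3n}+4L+\tfrac{16L}{3n}$ and $\tfrac{8B'}{3\rho}\gamma^2\sigma_0^2=\tfrac{16\gamma^2\sigma_0^2}{3nq}$, not the exact figures in the theorem statement — but since the theorem's stepsize bound is only more restrictive and its $\Phi^0$ only larger, these one-sided discrepancies do not affect the validity of the argument.
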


The theorem above together with Lemma~\ref{lem:lemma2_stich} implies the following result.
\begin{corollary}\label{cor:s_local_svrg_str_cvx}
	Let assumptions of Theorem~\ref{thm:loopless_local_svrg_fs} hold with $\mu > 0$. Then for $q = \nicefrac{1}{m}$, $m \ge \nicefrac{1}{p}$,
	\begin{eqnarray*}
		\gamma &=& \min\left\{\frac{1}{\nicefrac{56\max L_{ij}}{3n} + 4L + \nicefrac{32L}{3n}}, \frac{p\sqrt{3}}{32\sqrt{2L(1-p)\left(L(2+p)+p\max L_{ij} + \nicefrac{4(L+\max L_{ij})(1+p)}{(1-q)}\right)}}\right\}
	\end{eqnarray*}
	and for all $K\ge 1$ we have $\EE\left[f(\overline{x}^K)-f(x^*)\right]$ of order
	\begin{equation}
		\cO\left(\left(\frac{L}{p} + \frac{\max L_{ij}}{n} + \frac{\sqrt{(1-p)L\max L_{ij}}}{p}\right)\Phi^0\exp\left(- \min\left\{\frac{\mu}{\frac{L}{p} + \frac{\max L_{ij}}{n} + \frac{\sqrt{(1-p)L\max L_{ij}}}{p}},\frac{1}{m}\right\}K\right)\right),\notag
	\end{equation}
	where $\Phi^0 = 2\|x^0-x^*\|^2 + \frac{16\gamma^2\sigma_0^2}{nq} + \frac{1024L(1-p^2)(2+q)\gamma^3\sigma_0^2}{3p^2q}$. That is, to achieve $\EE\left[f(\overline{x}^K)-f(x^*)\right] \le \varepsilon$ in this case {\tt S-Local-SVRG} requires
	\begin{equation*}
		K = \cO\left(\left(m+ \frac{L}{p\mu} + \frac{\max L_{ij}}{n\mu} + \frac{\sqrt{(1-p) L\max L_{ij}}}{p\mu}\right)\log\frac{\left(\frac{L}{p} + \frac{\max L_{ij}}{n} + \frac{\sqrt{(1-p)L\max L_{ij}}}{p}\right)\Phi^0}{\varepsilon}\right)
	\end{equation*}
	iterations/oracle calls per node (in expectation) and $\nicefrac{1}{p}$ times less communication rounds.	
\end{corollary}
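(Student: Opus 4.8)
The plan is to combine Theorem~\ref{thm:loopless_local_svrg_fs} with Lemma~\ref{lem:lemma2_stich} (the standard stepsize-tuning lemma for strongly convex recursions) applied to the bound established in the theorem. First I would record the two facts I need from Theorem~\ref{thm:loopless_local_svrg_fs} under the substitution $q = \nicefrac{1}{m}$: the admissible stepsize bound $\tfrac{1}{\nu}$, which in this case (after plugging $D_3 = 0$, $H = \tfrac{256(1-p^2)(2+q)\gamma^2}{3p^2q}$ into the general stepsize constraint of Corollary~\ref{cor:rand_loop}) reduces to exactly the value of $\gamma$ written in the statement, and the convergence guarantee
\begin{equation*}
\EE\left[f(\overline{x}^K) - f(x^*)\right] \le \left(1 - \min\left\{\gamma\mu,\tfrac{q}{4}\right\}\right)^K\frac{\Phi^0}{\gamma},
\end{equation*}
where $\Phi^0 = 2\|x^0-x^*\|^2 + \tfrac{16\gamma^2\sigma_0^2}{nq} + \tfrac{1024L(1-p^2)(2+q)\gamma^3\sigma_0^2}{3p^2q}$. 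The key point is that because $D_3 = 0$ and $D_1' = D_2 = 0$, the additive neighbourhood term $\gamma\Psi^0$ from Theorem~\ref{thm:main_result} vanishes, so the recursion is a pure linear contraction and no stepsize decrease is required: I simply take $\gamma$ to be the constant upper bound $\tfrac{1}{\nu}$.

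The next step is to identify the contraction factor. With $q = \nicefrac{1}{m}$ we have $\min\{\gamma\mu, \tfrac{q}{4}\} = \min\{\gamma\mu, \tfrac{1}{4m}\}$, so the number of iterations to reach accuracy $\varepsilon$ is
\begin{equation*}
K = \cO\!\left(\left(\frac{1}{\gamma\mu} + m\right)\log\frac{\Phi^0 \cdot (\text{leading constant})}{\varepsilon}\right),
\end{equation*}
and it remains to expand $\tfrac{1}{\gamma}$. Since $\tfrac{1}{\gamma} = \max\left\{\tfrac{56\max L_{ij}}{3n} + 4L + \tfrac{32L}{3n},\ \tfrac{32}{p\sqrt3}\sqrt{2L(1-p)\left(L(2+p) + p\max L_{ij} + \tfrac{4(L+\max L_{ij})(1+p)}{1-q}\right)}\right\}$, I would bound this from above, using $1 - q = 1 - \nicefrac1m \ge \nicefrac12$ for $m \ge 2$ (and noting $m \ge \nicefrac1p$ forces $m \ge 1$; the edge case can be absorbed into constants), by a quantity of order $L + \tfrac{\max L_{ij}}{n} + \tfrac{1}{p}\sqrt{L(1-p)(L + \max L_{ij})}$. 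A further simplification, $\sqrt{L(L+\max L_{ij})} \le L + \sqrt{L\max L_{ij}}$, together with $L \le \max L_{ij}$ and $\tfrac{\sqrt{1-p}}{p} \le \tfrac1p$, collapses this to $\cO\!\left(\tfrac{L}{p} + \tfrac{\max L_{ij}}{n} + \tfrac{\sqrt{(1-p)L\max L_{ij}}}{p}\right)$, which is exactly the bracketed factor in the claimed rate. Dividing by $\mu$ and adding the $m$ from the $\tfrac{q}{4}$ branch gives the stated iteration complexity $\cO\!\left(m + \tfrac{L}{p\mu} + \tfrac{\max L_{ij}}{n\mu} + \tfrac{\sqrt{(1-p)L\max L_{ij}}}{p\mu}\right)$, and the claimed $\exp(-\min\{\cdots\}K)$ rate follows directly from the contraction display above with $\Phi^0$ as the initial potential.

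The main obstacle I anticipate is purely bookkeeping: verifying that the stepsize restriction in Theorem~\ref{thm:loopless_local_svrg_fs}, after substituting $q=\nicefrac1m$ and $m \ge \nicefrac1p$, is genuinely equivalent (up to absolute constants) to $\gamma = \Theta\!\left(\min\left\{\tfrac{n}{\max L_{ij}},\ \tfrac1L,\ \tfrac{p}{\sqrt{L(1-p)(L+\max L_{ij})}}\right\}\right)$, so that $\tfrac1\gamma$ really does have the asserted form; this requires carefully checking that none of the three terms inside the square root dominate in a way that changes the order. A secondary subtlety is confirming that the logarithmic factor only involves $\Phi^0$ and the leading stepsize-dependent constant — which is immediate from Lemma~\ref{lem:lemma2_stich} since the contraction is exact — and that $\Phi^0$ itself can be bounded in terms of $\|x^0 - x^*\|$ and $\sigma_0^2$ as displayed (no hidden dependence on $K$). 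Given that all the structural work is already done in Theorem~\ref{thm:loopless_local_svrg_fs}, the proof is essentially a two-line invocation of the tuning lemma followed by algebraic simplification of the stepsize bound.
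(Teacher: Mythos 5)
Your proposal is correct and follows essentially the same route as the paper: Corollary~\ref{cor:s_local_svrg_str_cvx} is obtained by taking $\gamma$ equal to the constant stepsize bound of Theorem~\ref{thm:loopless_local_svrg_fs} (the degenerate case of Lemma~\ref{lem:lemma2_stich}, since $D_1'=D_2=D_3=0$ makes the recursion a pure linear contraction), and then bounding $\nicefrac{1}{\gamma}$ by $\cO\bigl(\tfrac{L}{p}+\tfrac{\max L_{ij}}{n}+\tfrac{\sqrt{(1-p)L\max L_{ij}}}{p}\bigr)$ exactly as you describe. Your algebraic simplification of the stepsize (via $\sqrt{L(L+\max L_{ij})}\le L+\sqrt{L\max L_{ij}}$ and $1-q\ge\nicefrac12$ for $m\ge 2$) is sound, and the implicit exclusion of $m=1$ is no worse than in the paper's own statement.
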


That is, {\tt S-Local-SVRG} is the first implementable linearly converging stochastic method with local updates with a convergence guarantee in terms of the number of communications that is not worse than that of {\tt GD} even in the arbitrary heterogeneous data regime.

Next, we derive the following result for the convergence of {\tt S-Local-SVRG} in the case when $\mu = 0$.
\begin{corollary}
	\label{cor:s_local_svrg_cvx}
	Let assumptions of Theorem~\ref{thm:loopless_local_svrg_fs} hold with $\mu = 0$. Then for $q = \nicefrac{1}{m}$, $m \ge \nicefrac{1}{p}$ and
	\begin{eqnarray*}
		\gamma_0 &=& \min\left\{\frac{1}{\nicefrac{56\max L_{ij}}{3n} + 4L + \nicefrac{32L}{3n}}, \frac{p\sqrt{3}}{32\sqrt{2L(1-p)\left(L(2+p)+p\max L_{ij} + \nicefrac{4(L+\max L_{ij})(1+p)}{(1-q)}\right)}}\right\},\\
		\gamma &=& \min\left\{\gamma_0,\sqrt{\frac{nR_0^2}{8m\sigma_0^2}},\sqrt[3]{\frac{3p^2R_0^2}{512L(1-p^2)(2m+1)\sigma_0^2}}\right\}
	\end{eqnarray*}
	we have 
	\begin{eqnarray*}
\EE\left[f(\overline{x}^K)-f(x^*)\right]=		\cO\left(\frac{\left(L + \nicefrac{p\max L_{ij}}{n} + \sqrt{(1-p)L\max L_{ij}}\right)R_0^2}{pK} + \frac{\sqrt{m\sigma_0^2 R_0^2}}{\sqrt{n}K} + \frac{\sqrt[3]{Lm\sigma_0^2 R_0^4}}{p^{\nicefrac{2}{3}}K}\right),
	\end{eqnarray*}
	where $R_0 = \|x^0 - x^*\|$. That is, to achieve $\EE\left[f(\overline{x}^K)-f(x^*)\right] \le \varepsilon$ in this case {\tt S-Local-SVRG} requires
	\begin{eqnarray*}
	K=	\cO\left(\frac{\left(L + \nicefrac{p\max L_{ij}}{n} + \sqrt{(1-p)L\max L_{ij}}\right)R_0^2}{p\varepsilon} + \frac{\sqrt{m\sigma_0^2 R_0^2}}{\sqrt{n}\varepsilon} + \frac{\sqrt[3]{Lm\sigma_0^2 R_0^4}}{p^{\nicefrac{2}{3}}\varepsilon}\right)
	\end{eqnarray*}
	iterations/oracle calls per node (in expectation) and $\nicefrac{1}{p}$ times less communication rounds.
\end{corollary}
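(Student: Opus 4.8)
\textbf{Proof plan for Theorem~\ref{thm:loopless_local_svrg_fs} (and Corollaries~\ref{cor:s_local_svrg_str_cvx},~\ref{cor:s_local_svrg_cvx}).}
The plan is to instantiate the general machinery of Corollary~\ref{cor:rand_loop} (the random local loop under heterogeneous data), for which I must supply the parameters of Assumption~\ref{ass:hetero_second_moment} and of inequalities~\eqref{eq:sigma_k+1_bound}. All the analytic content is already packaged in Lemmas~\ref{lem:loopless_local_svrg_fs_second_moment} and~\ref{lem:loopless_local_svrg_fs_sigma_k_bound}, so the proof of the theorem itself is essentially bookkeeping: first I would read off from~\eqref{eq:second_moment_loopless_local_svrg_fs} the constants $\tA = 4L$, $\tB = 2$, $\tF = 4L^2$, $\tD_1 = 0$ (the bound on $\tfrac1n\sum\|\bar g_i^k\|^2$), from~\eqref{eq:variance_loopless_local_svrg_fs} the constants $\hA = 4\max L_{ij}$, $\hB = 2$, $\hF = 4L\max L_{ij}$, $\hD_1 = 0$ (the bound on the averaged variance $\tfrac1n\sum\|g_i^k-\bar g_i^k\|^2$), and from~\eqref{eq:second_moment_loopless_local_svrg_fs_2} the primed constants $A' = \tfrac{4\max L_{ij}}{n}+2L$, $B' = \tfrac2n$, $F' = 2L(\tfrac{2\max L_{ij}}{n}+L)$, $D_1' = 0$ (the bound on $\|\tfrac1n\sum g_i^k\|^2$). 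The quantity $\sigma_k^2$ is exactly the two-part sequence defined in Lemma~\ref{lem:loopless_local_svrg_fs_sigma_k_bound}, and~\eqref{eq:loopless_local_svrg_fs_sigma_k_bound} gives $\rho = q$, $C = (L+\max L_{ij})q$, $G = 0$, $D_2 = 0$. Then I plug $\tB,\hB,\tD_1,\hD_1,D_2$ into formula~\eqref{eq:V_k_bound_random} of Corollary~\ref{cor:rand_loop}: since $\tD_1=\hD_1=D_2=0$ we immediately get $D_3 = 0$, and $H = \tfrac{64(1-p)((p+2)\tB+p\hB)(2+\rho)\gamma^2}{3p^2\rho} = \tfrac{64(1-p)\cdot 2(2p+2)(2+q)\gamma^2}{3p^2q} = \tfrac{256(1-p^2)(2+q)\gamma^2}{3p^2q}$, matching the stated $H$ (using $(1-p)(p+1)=1-p^2$). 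The remaining task is to verify that the stepsize bound in the theorem implies all four stepsize conditions of Corollary~\ref{cor:rand_loop}. The first condition, $\gamma\le\min\{\tfrac{1}{2(A'+4B'C/(3\rho))},\tfrac{L}{F'+4B'G/(3\rho)}\}$, becomes (with $G=0$) $\gamma\le\min\{\tfrac{1}{2(A'+4B'C/(3\rho))},\tfrac{L}{F'}\}$; substituting $A',B',C,\rho$ and simplifying $A' + \tfrac{4B'C}{3\rho} = \tfrac{4\max L_{ij}}{n}+2L + \tfrac{8(L+\max L_{ij})}{3n}$ recovers the first entry $\tfrac{1}{56\max L_{ij}/(3n)+4L+32L/(3n)}$ after a routine regrouping, and one checks $\tfrac{L}{F'}$ is dominated by it. For the remaining three conditions, note $\tF+p\hF/(1$-$p$ type terms): I would substitute $\tF=4L^2$, $\hF=4L\max L_{ij}$, $\tA=4L$, $\hA=4\max L_{ij}$, $C=(L+\max L_{ij})q$, $\tB=\hB=2$ and collect, recovering the second stated bound $\tfrac{p\sqrt3}{32\sqrt{2L(1-p)(L(2+p)+p\max L_{ij}+4(L+\max L_{ij})(1+p)/(1-q))}}$; the $\tfrac{p}{16\mu}$ and $\tfrac{p}{2\sqrt{(1-p)((2+p)\tF+p\hF)}}$ conditions and the $G$-term condition are all implied because $F$-terms here are at least as large and $\mu\le L$. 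Finally the displayed convergence bounds~\eqref{eq:main_result_1_rand_loop},~\eqref{eq:main_result_2_rand_loop} specialize with $D_1'=D_3=0$ to exactly the stated $\Phi^0$-form, where $\Phi^0 = 2\|x^0-x^*\|^2 + \tfrac{8B'}{3\rho}\gamma^2\sigma_0^2 + 4LH\gamma\sigma_0^2 = 2\|x^0-x^*\|^2+\tfrac{16\gamma^2\sigma_0^2}{3nq}+\tfrac{1024L(1-p^2)(2+q)\gamma^3\sigma_0^2}{3p^2q}$.

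For Corollary~\ref{cor:s_local_svrg_str_cvx} I would take $q=1/m$, $m\ge 1/p$ (so $q\le p$, the efficiency requirement), set $\gamma=\gamma_0$ (the deterministic upper bound from the theorem, which does not involve $K$), and invoke the $\mu>0$ conclusion of Theorem~\ref{thm:loopless_local_svrg_fs}. Since $D_3=0$ there is no polynomial-in-$K$ residual, only the geometrically decaying term $\theta^K\Phi^0/\gamma$ and the $\gamma\cdot 2(D_1'+\tfrac{4B'D_2}{3\rho}+2L\gamma D_3)$ term, which vanishes because $D_1'=D_2=D_3=0$; hence the iteration complexity is governed purely by the exponent $\min\{\gamma_0\mu,\rho/4\}=\min\{\gamma_0\mu,1/(4m)\}$. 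Reading $1/\gamma_0$ as $\nu$, the bound $\widetilde{\cO}((\tfrac1\rho+\tfrac\nu\mu)\log(\nu\Upsilon_1/\varepsilon))$ of Corollary~\ref{cor:main_complexity_cor_str_cvx} specializes to $\cO((m+\tfrac{L}{p\mu}+\tfrac{\max L_{ij}}{n\mu}+\tfrac{\sqrt{(1-p)L\max L_{ij}}}{p\mu})\log(\cdots/\varepsilon))$ once I expand $\nu = \tfrac{56\max L_{ij}}{3n}+4L+\tfrac{32L}{3n} + \tfrac{32\sqrt{2L(1-p)(\cdots)}}{p\sqrt3}$ and bound $\sqrt{L(1-p)(L(2+p)+p\max L_{ij}+\cdots)}\lesssim L + \sqrt{(1-p)L\max L_{ij}}$ (using $L\le\max L_{ij}$, $q\le 1$, so $1/(1-q)\le$ something controlled by $m$, which is absorbed into the $m$ term). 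For Corollary~\ref{cor:s_local_svrg_cvx} ($\mu=0$) I apply Corollary~\ref{cor:app_complexity_cor_cvx} (or equivalently the $\mu=0$ line of Corollary~\ref{cor:rand_loop}) with the same $\gamma_0$; with $c_2=4LD_3=0$ the only surviving terms are $a = 2\|x^0-x^*\|^2 = 2R_0^2$, $b_1 = 4LH\sigma_0^2$, $b_2 = \tfrac{8B'\sigma_0^2}{3\rho} = \tfrac{8\sigma_0^2}{nq}\cdot\tfrac{?}{}$ — more precisely $b_2$ carries the $\tfrac{1}{3}$ and $n,q$ factors — and $c_1 = 2D_1'+\tfrac{4B'D_2}{3\rho} = 0$, so the $\tfrac{ac_1}{\varepsilon^2}$ term drops out entirely. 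This leaves the three terms $\tfrac{\nu R_0^2}{\varepsilon}$, $\tfrac{\sqrt{ab_1}}{\varepsilon}$, $\tfrac{\sqrt[3]{a^2 b_2}}{\varepsilon}$; substituting $H$ and $b_2$ and using $q=1/m$ yields $\sqrt{ab_1} \asymp \tfrac{\sqrt{Lm(1-p^2)\sigma_0^2 R_0^4}}{p}$ (absorbed, up to $(2+q)\le 3$, into the stated $\tfrac{\sqrt[3]{Lm\sigma_0^2 R_0^4}}{p^{2/3}}$-type term after comparing with the cube-root term—actually one keeps both) and $\sqrt[3]{a^2 b_2}\asymp \sqrt{\tfrac{m\sigma_0^2 R_0^2}{n}}\cdot(\text{const})$, matching the displayed rate. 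I would be careful to write $\tfrac{\sqrt{m\sigma_0^2 R_0^2}}{\sqrt n K}$ for the $b_2$-term and $\tfrac{\sqrt[3]{Lm\sigma_0^2 R_0^4}}{p^{2/3}K}$ for the $b_1$-term and justify which dominates is problem-dependent, hence both appear.

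\textbf{Main obstacle.} The genuinely delicate part is neither the instantiation nor the two complexity corollaries — both are mechanical once Lemmas~\ref{lem:loopless_local_svrg_fs_second_moment} and~\ref{lem:loopless_local_svrg_fs_sigma_k_bound} are in hand — but rather the algebraic reconciliation of the four separate stepsize upper bounds demanded by Corollary~\ref{cor:rand_loop} with the single clean pair stated in the theorem, and in particular the appearance of the $\tfrac{1}{1-q}$ factor. The subtlety is that $C = (L+\max L_{ij})q$ scales with $q$, so the term $\tfrac{2C(\cdots)}{\rho(1-\rho)} = \tfrac{2(L+\max L_{ij})q(\cdots)}{q(1-q)} = \tfrac{2(L+\max L_{ij})(\cdots)}{1-q}$ has the $q$ cancel against $\rho=q$ in the numerator but leaves the $(1-q)$ in the denominator — this is exactly the source of the $\tfrac{4(L+\max L_{ij})(1+p)}{1-q}$ term inside the square root of the stated $\gamma$ bound, and I need to track the constants $(2+p)$, $(1+p)$, $\sqrt3$, $\tfrac{1}{16\sqrt2}$ versus $\tfrac{1}{32\sqrt2}$ through the chain $\gamma\le\tfrac{p\sqrt3}{16\sqrt{2L(1-p)(\cdots)}}$ versus the condition $\gamma\le\tfrac{p}{16\sqrt{L(1-p)(A+2BC/(\rho(1-\rho)))}}$ in Lemma~\ref{lem:V_k_lemma_random}, where $A = \tA+\hA = 4(L+\max L_{ij})$ and $B = \tB+\hB = 4$. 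Getting the numerical constants to land exactly as printed, and verifying that the $\mu$- and $G$-conditions are strictly dominated (so they can be dropped), is the one place where care is required; everything downstream follows by substituting $D_1'=D_2=D_3=0$ and reading off the template corollaries.
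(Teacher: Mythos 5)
Your route is the same as the paper's: specialize Theorem~\ref{thm:loopless_local_svrg_fs} to $\mu=0$ (the additive $\gamma$-terms vanish because $D_1'=D_2=D_3=0$, leaving only $\frac{2R_0^2}{\gamma K}+\frac{16\gamma\sigma_0^2}{3nq K}+\frac{1024L(1-p^2)(2+q)\gamma^2\sigma_0^2}{3p^2qK}$) and then tune $\gamma$ via Lemma~\ref{lem:lemma_technical_cvx} with $a=2R_0^2$, $c_1=c_2=0$, finally bounding $\frac{1}{\gamma_0}$ by $\frac{L}{p}+\frac{\max L_{ij}}{n}+\frac{\sqrt{(1-p)L\max L_{ij}}}{p}$ (up to constants, using $q=\frac1m\le p$), exactly as in the paper. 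The only caution is the one you resolve yourself at the end: since $H\propto\gamma^2$, the $4LH\gamma\EE\sigma_0^2$ part of $\Phi^0$ is the $\gamma^2/K$ contribution and must be balanced by the cube root $\sqrt[3]{a/b}$, giving $\frac{\sqrt[3]{Lm\sigma_0^2R_0^4}}{p^{2/3}K}$, while the $\frac{8B'}{3\rho}\gamma^2\EE\sigma_0^2$ part is the $\gamma/K$ contribution giving $\frac{\sqrt{m\sigma_0^2R_0^2}}{\sqrt{n}K}$ — your intermediate "$\sqrt{ab_1}\asymp\frac{\sqrt{Lm(1-p^2)\sigma_0^2R_0^4}}{p}$" uses the literal (and $\gamma$-dependent, hence circular) assignment $b_1=4LH\sigma_0^2$, but your final pairing is the correct one and matches both the stated $\gamma$ and the displayed rate.
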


\begin{remark}
	To get the rate from Tbl.~\ref{tbl:special_cases_weakly_convex} it remains to apply the following inequality:
	\begin{eqnarray*}
		\sigma_0^2 &=& \frac{1}{nm}\sum\limits_{i=1}^n\sum\limits_{j=1}^m\left\|\nabla f_{i,j}(x^0) - \nabla f_{i,j}(x^*)\right\|^2 + \frac{1}{n}\sum\limits_{i=1}^n\left\|\nabla f_{i}(x^0) - \nabla f_{i}(x^*)\right\|^2\\
		&\overset{\eqref{eq:L_smoothness}}{\le}& 2\left(\max L_{ij}^2 + L^2\right)\|x^0 - x^*\|^2.
	\end{eqnarray*}
\end{remark}

\clearpage

\section{Basic Facts}\label{sec:basic_facts}
For all $a,b,x_1,\ldots,x_n\in\R^d$, $\beta > 0$ and $p\in(0,1]$ the following inequalities hold
\begin{equation}\label{eq:fenchel_young}
	\langle a,b\rangle \le \frac{\|a\|^2}{2\beta} + \frac{\beta\|b\|^2}{2},
\end{equation}
\begin{equation}\label{eq:a-b_a+b}
	\langle a-b,a+b\rangle = \|a\|^2 - \|b\|^2,
\end{equation}
\begin{equation}\label{eq:1/2a_minus_b}
    \frac{1}{2}\|a\|^2 - \|b\|^2 \le \|a+b\|^2,
\end{equation}
\begin{equation}\label{eq:a+b_norm_beta}
    \|a+b\|^2 \le (1+\beta)\|a\|^2 + (1+\nicefrac{1}{\beta})\|b\|^2,
\end{equation}
\begin{equation}\label{eq:a_b_norm_squared}
	\left\|\sum\limits_{i=1}^n x_n\right\|^2 \le n\sum\limits_{i=1}^n\|x_i\|^2,
\end{equation}
\begin{equation}
	\left(1 - \frac{p}{2}\right)^{-1} \le 1 + p, \label{eq:1-p/2_inequality}
\end{equation}
\begin{equation}
	\left(1 + \frac{p}{2}\right)(1 - p) \le 1 - \frac{p}{2}. \label{eq:1+p/2_inequality}
\end{equation}

\textbf{Variance decomposition.} For a random vector $\xi \in \R^d$ and any deterministic vector $x \in \R^d$, the variance of $\xi$ can be decomposed as
\begin{equation}\label{eq:variance_decomposition}
	\EE\left[\left\|\xi - \EE[\xi] \right\|^2\right] = \EE\left[\|\xi-x\|^2\right] - \left\|\EE[\xi] - x\right\|^2
\end{equation}

\textbf{Tower property of mathematical expectation.} For random variables $\xi,\eta\in \R^d$ we have
\begin{equation}
	\EE\left[\xi\right] = \EE\left[\EE\left[\xi\mid \eta\right]\right]\label{eq:tower_property}
\end{equation}
under assumption that all expectations in the expression above are well-defined.

\clearpage

\section{Technical Lemmas}\label{sec:tech_lemmas}

We now present a key technical lemma enabling our analysis. This is a refined version of Lemma 14 from \cite{stich2019error}.

\begin{lemma}[see also Lemma 14 from \cite{stich2019error}]\label{lem:lemma14_stich}
	For any $\tau$ random vectors $\xi_1,\ldots,\xi_\tau\in\R^d$ such that for all $t=2,\ldots,\tau$ random vector $\xi_t$ depends on $\xi_{1},\ldots,\xi_{t-1}$ and does not depend on $\xi_{t+1},\ldots,\xi_{\tau}$ the following inequality holds
	\begin{equation}
		\EE\left[\left\|\sum\limits_{t=1}^\tau\xi_t\right\|^2\right] \le e\tau\sum\limits_{t=1}^\tau\EE\left[\left\|\EE_t[\xi_{t}]\right\|^2\right] + e\sum\limits_{t=1}^\tau\EE\left[\left\|\xi_t-\EE_t[\xi_{t}]\right\|^2\right], \label{eq:lemma14_stich}
	\end{equation}
	where $\EE_t[\cdot]$ denotes the conditional expectation $\EE[\cdot\mid \xi_{t-1},\ldots,\xi_1]$.
\end{lemma}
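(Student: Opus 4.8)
\textbf{Proof plan for Lemma~\ref{lem:lemma14_stich}.}

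The plan is to prove the inequality in two stages: first isolate the ``mean part'' from the ``noise part'' using the standard bias-variance split, and then handle each part separately. To set up, write $\xi_t = \EE_t[\xi_t] + \zeta_t$, where $\zeta_t \eqdef \xi_t - \EE_t[\xi_t]$ satisfies $\EE_t[\zeta_t] = 0$. Then
\begin{equation*}
	\EE\left[\left\|\sum_{t=1}^\tau \xi_t\right\|^2\right] \le (1+\alpha)\EE\left[\left\|\sum_{t=1}^\tau \EE_t[\xi_t]\right\|^2\right] + (1+\alpha^{-1})\EE\left[\left\|\sum_{t=1}^\tau \zeta_t\right\|^2\right]
\end{equation*}
for any $\alpha>0$, by \eqref{eq:a+b_norm_beta}. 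First I would bound the mean part by Cauchy–Schwarz \eqref{eq:a_b_norm_squared}: $\left\|\sum_{t=1}^\tau \EE_t[\xi_t]\right\|^2 \le \tau \sum_{t=1}^\tau \|\EE_t[\xi_t]\|^2$. For the noise part, the key observation is that the $\zeta_t$'s form a martingale difference sequence with respect to the filtration generated by $\xi_1,\dots,\xi_\tau$: since $\zeta_t$ is a (measurable) function of $\xi_1,\dots,\xi_t$ with $\EE_t[\zeta_t] = 0$, we have for $s<t$ that $\EE[\langle \zeta_s, \zeta_t\rangle] = \EE[\langle \zeta_s, \EE_t[\zeta_t]\rangle] = 0$ by the tower property \eqref{eq:tower_property}. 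Hence the cross terms vanish and $\EE\left[\left\|\sum_{t=1}^\tau \zeta_t\right\|^2\right] = \sum_{t=1}^\tau \EE[\|\zeta_t\|^2]$.

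Combining, for every $\alpha>0$ we obtain
\begin{equation*}
	\EE\left[\left\|\sum_{t=1}^\tau \xi_t\right\|^2\right] \le (1+\alpha)\tau \sum_{t=1}^\tau \EE[\|\EE_t[\xi_t]\|^2] + (1+\alpha^{-1})\sum_{t=1}^\tau \EE[\|\zeta_t\|^2].
\end{equation*}
The final step is to choose $\alpha$ so that both prefactors are bounded by $e$. Taking $\alpha = \tfrac{1}{\tau}$ (or more crudely $\alpha = \tfrac{1}{\tau-1}$) gives $1+\alpha^{-1} = 1+\tau \le e\tau$ trivially, and $(1+\tfrac1\tau)\tau = (1+\tfrac1\tau)^\tau \cdot \tau / (1+\tfrac1\tau)^{\tau-1}$; more directly, $(1+\tfrac1\tau)^\tau \le e$, so $(1+\alpha)\tau \le \tfrac{e\tau}{1} $ after noting $(1+\tfrac1\tau) \le (1+\tfrac1\tau)^\tau \le e$ is too lossy — instead use $(1+\tfrac1\tau)\tau \le e\tau$ which holds since $1+\tfrac1\tau \le e$. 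Both constants are then at most $e\tau$ and $e\cdot\max\{1,\tau\} \le e\tau$ respectively, and since $\tau\ge 1$ we get exactly \eqref{eq:lemma14_stich}.

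The only mildly delicate point — the ``hard part'' — is the martingale-difference argument for the noise term: one must be careful that $\zeta_t$ depends on $\xi_1,\dots,\xi_t$ but not on $\xi_{t+1},\dots,\xi_\tau$ (exactly the hypothesis stated), so that conditioning on $\xi_1,\dots,\xi_{t-1}$ and then invoking $\EE_t[\zeta_t]=0$ is legitimate, and that the inner products $\langle \zeta_s,\zeta_t\rangle$ for $s<t$ vanish in expectation after conditioning on the history up to time $t-1\ge s$. Everything else is a routine application of \eqref{eq:a+b_norm_beta}, \eqref{eq:a_b_norm_squared}, and the elementary bound $(1+1/\tau)^\tau \le e$.
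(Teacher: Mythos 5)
Your route is genuinely different from the paper's: you do a one-shot bias--variance split of the whole sum, kill the cross terms of the noise part by the martingale-difference (tower-property) argument, and apply Cauchy--Schwarz \eqref{eq:a_b_norm_squared} only to the conditional means; the paper instead expands $\EE_l\bigl[\|\sum_{t=1}^l\xi_t\|^2\bigr]$ one step at a time via the variance decomposition, applies Young's inequality \eqref{eq:a+b_norm_beta} with $\beta=\tfrac{1}{\tau-1}$ at every step, and unrolls the recursion using $(1+\tfrac{1}{\tau-1})^{\tau-1}\le e$, which is where the factor $e$ comes from. Your martingale-orthogonality step is correct under the stated dependence hypothesis, and when the parameters are chosen properly your argument actually yields the stronger constants $2\tau$ and $2$ in place of $e\tau$ and $e$, so in that sense it is cleaner than the paper's proof.

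However, the final constant selection as you wrote it does not deliver \eqref{eq:lemma14_stich}. With $\alpha=\tfrac{1}{\tau}$ (or $\tfrac{1}{\tau-1}$) the coefficient of the noise sum is $1+\alpha^{-1}=1+\tau$, which exceeds $e$ for every $\tau\ge 2$; your concluding claim that ``both constants are at most $e\tau$'' only establishes the weaker inequality in which \emph{both} sums carry the factor $e\tau$, whereas the lemma puts only $e$ (no $\tau$) in front of $\sum_t\EE\bigl[\|\xi_t-\EE_t[\xi_t]\|^2\bigr]$. This is not a cosmetic issue: the absence of a $\tau$-factor on the variance part is exactly what Lemma~\ref{lem:V_k_lemma} and the heterogeneous-data bounds exploit (the variance parameters $\hA,\hB,\hD_1$ enter without the extra $(\tau-1)$), so the weaker version would degrade the downstream rates. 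The repair is immediate: take $\alpha=1$, i.e.\ $\bigl\|\sum_t\xi_t\bigr\|^2\le 2\bigl\|\sum_t\EE_t[\xi_t]\bigr\|^2+2\bigl\|\sum_t\zeta_t\bigr\|^2$, which after your Cauchy--Schwarz and orthogonality steps gives coefficients $2\tau\le e\tau$ and $2\le e$, hence \eqref{eq:lemma14_stich} (in fact a slightly stronger statement); any $\alpha\in\bigl[\tfrac{1}{e-1},\,e-1\bigr]$ also works.
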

\begin{proof}
	First of all, if $\tau = 1$ then \eqref{eq:lemma14_stich} immediately follows from variance decompostion \eqref{eq:variance_decomposition}. Otherwise ($\tau > 1$) for all $l=1,\ldots,\tau$ we have
	\begin{eqnarray*}
		\EE_l\left[\left\|\sum\limits_{t=1}^l\xi_t\right\|^2\right] &\overset{\eqref{eq:variance_decomposition}}{=}& \left\|\EE_l[\xi_l] + \sum\limits_{t=1}^{l-1}\xi_t\right\|^2 + \EE_l\left[\|\xi_l - \EE_l[\xi_l]\|^2\right]\\
		&\overset{\eqref{eq:a+b_norm_beta}}{\le}& \left(1 + \frac{1}{\tau-1}\right)\left\|\sum\limits_{t=1}^{l-1}\xi_t\right\|^2 + \tau\left\|\EE_l[\xi_l]\right\|^2 + \EE_l\left[\|\xi_l - \EE_l[\xi_l]\|^2\right].
	\end{eqnarray*}
	Taking full mathematical expectation and using tower property \eqref{eq:tower_property} we derive
	\begin{equation*}
		\EE\left[\left\|\sum\limits_{t=1}^l\xi_t\right\|^2\right] \le \left(1 + \frac{1}{\tau-1}\right)\EE\left[\left\|\sum\limits_{t=1}^{l-1}\xi_t\right\|^2\right] + \tau\EE\left[\left\|\EE_l[\xi_l]\right\|^2\right] + \EE\left[\|\xi_l - \EE_l[\xi_l]\|^2\right]
	\end{equation*}
	for all $l=1,\ldots,\tau$. Unrolling the recurrence for $\EE\left[\left\|\sum\limits_{t=1}^l\xi_t\right\|^2\right]$ we obtain
	\begin{eqnarray*}
		\EE\left[\left\|\sum\limits_{t=1}^\tau\xi_t\right\|^2\right] &\le& \tau\sum\limits_{t=1}^\tau \left(1 + \frac{1}{\tau-1}\right)^{\tau-t}\EE\left[\left\|\EE_t[\xi_{t}]\right\|^2\right] + \sum\limits_{t=1}^\tau \left(1 + \frac{1}{\tau-1}\right)^{\tau-t}\EE\left[\left\|\xi_t-\EE_t[\xi_{t}]\right\|^2\right].
	\end{eqnarray*}
	Since $\left(1 + \frac{1}{\tau-1}\right)^{\tau-t} \le \left(1 + \frac{1}{\tau-1}\right)^{\tau-1} \le e$ for all $t=1,\ldots,\tau$ we get \eqref{eq:lemma14_stich}.
\end{proof}

\begin{lemma}[see also Lemma 2 from \cite{stich2019unified}]\label{lem:lemma2_stich}
	Let $\{r_k\}_{k\ge 0}$ satisfy
	\begin{equation}
		r_K \le \frac{a}{\gamma W_K} + c_1\gamma + c_2\gamma^2 \label{eq:lemma2_stich_tech_1}
	\end{equation}
	for all $K\ge 0$ with some constants $a, c_2\ge 0$, $c_1 \ge 0$ where $\{w_k\}_{k\ge 0}$ and $\{W_K\}_{K\ge 0}$ are defined in \eqref{eq:w_k_definition}, $\gamma \le \frac{1}{h}$. Then for all $K$ such that
	\begin{eqnarray*}
		\text{either} && \frac{\ln\left(\max\{2,\min\{\nicefrac{a\mu^2K^2}{c_1},\nicefrac{a\mu^3K^3}{c_2}\}\}\right)}{K}\le \rho\\
		\text{or} && \frac{1}{h}\le \frac{\ln\left(\max\{2,\min\{\nicefrac{a\mu^2K^2}{c_1},\nicefrac{a\mu^3K^3}{c_2}\}\}\right)}{\mu K}
	\end{eqnarray*}		
	 and
	\begin{equation}
		\gamma = \min\left\{\frac{1}{h}, \frac{\ln\left(\max\{2,\min\{\nicefrac{a\mu^2K^2}{c_1},\nicefrac{a\mu^3K^3}{c_2}\}\}\right)}{\mu K}\right\} \label{eq:lemma2_stich_gamma}
	\end{equation}
	we have that
	\begin{equation}
		r_K = \widetilde\cO\left(ha\exp\left(-\min\left\{\frac{\mu}{h}, \rho\right\}K\right) + \frac{c_1}{\mu K} + \frac{c_2}{\mu^2 K^2}\right). \label{eq:lemma2_stich}
	\end{equation}
\end{lemma}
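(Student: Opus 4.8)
\textbf{Proof plan for Lemma~\ref{lem:lemma2_stich}.}
The statement is a standard stepsize-tuning lemma in the spirit of \cite{stich2019unified}, and the plan is to substitute the prescribed $\gamma$ from~\eqref{eq:lemma2_stich_gamma} into the recursion bound~\eqref{eq:lemma2_stich_tech_1} and estimate the three resulting terms separately. First I would recall that $W_K \ge w_K = \theta^{-(K+1)} = (1-\eta)^{-(K+1)}$ with $\eta = \min\{\gamma\mu, \rho/4\}$, so that $\frac{1}{\gamma W_K} \le \frac{1}{\gamma}(1-\eta)^{K+1} \le \frac{1}{\gamma}\exp(-\eta K)$ using $1-\eta \le \exp(-\eta)$. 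Hence the leading term in~\eqref{eq:lemma2_stich_tech_1} is at most $\frac{a}{\gamma}\exp(-\min\{\gamma\mu,\rho/4\}K)$, and the remaining two terms are $c_1\gamma$ and $c_2\gamma^2$, which are monotonically increasing in $\gamma$, so we may upper bound them by plugging in either of the two candidate values in the $\min$ defining $\gamma$.

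The core of the argument is the standard two-case split on which term achieves the minimum in~\eqref{eq:lemma2_stich_gamma}. \emph{Case 1: $\gamma = 1/h$.} Then the hypothesis (either branch) guarantees $\frac{1}{h} \le \frac{\ln(\cdots)}{\mu K}$ or equivalently $\frac{\ln(\cdots)}{K} \le \rho$, which in turn forces $\min\{\gamma\mu,\rho/4\}K \gtrsim \ln(\cdots)$; substituting into the exponential term $\frac{a}{\gamma}\exp(-\min\{\gamma\mu,\rho/4\}K) = ha\exp(-\min\{\mu/h,\rho\}K)$ and noting that by the choice of the logarithm's argument this is already $\widetilde\cO(c_1/(\mu K) + c_2/(\mu^2K^2))$ up to the explicit exponential prefactor written in~\eqref{eq:lemma2_stich}, while $c_1/h + c_2/h^2$ contributes to the same order. \emph{Case 2: $\gamma = \frac{\ln(\max\{2,\min\{a\mu^2K^2/c_1, a\mu^3K^3/c_2\}\})}{\mu K}$.} Here I would write $s \eqdef \ln(\max\{2,\min\{a\mu^2K^2/c_1,a\mu^3K^3/c_2\}\})$ so $\gamma = s/(\mu K)$, and compute: the exponential term becomes $\frac{a\mu K}{s}\exp(-s) \le a\mu K \exp(-s)$; by the definition of $s$, $\exp(-s) \le \max\{1/2, c_1/(a\mu^2K^2), c_2/(a\mu^3K^3)\}$, giving $a\mu K\exp(-s) \le \max\{a\mu K/2, c_1/(\mu K), c_2/(\mu^2 K^2)\}$ — the first alternative is absorbed into the $\widetilde\cO$ exponential term since when it dominates we are effectively in the small-$K$ regime where $s = \ln 2$ and the bound is governed by $ha\exp(-\cdots)$ instead, and the other two match~\eqref{eq:lemma2_stich}. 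The term $c_1\gamma = c_1 s/(\mu K) = \widetilde\cO(c_1/(\mu K))$ since $s$ is logarithmic, and similarly $c_2\gamma^2 = c_2 s^2/(\mu^2K^2) = \widetilde\cO(c_2/(\mu^2K^2))$; these are exactly the non-exponential terms in~\eqref{eq:lemma2_stich}, the polylog factors being hidden by $\widetilde\cO$.

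The main obstacle is purely bookkeeping rather than conceptual: one must be careful that the $\min$ inside the logarithm (taking the smaller of the $c_1$-scaled and $c_2$-scaled quantities) correctly produces a bound that dominates \emph{both} the $c_1/(\mu K)$ and $c_2/(\mu^2K^2)$ terms simultaneously after exponentiation, and that the ``$\max\{2,\cdot\}$'' truncation is handled so that $s$ is never negative and the prefactor $a\mu K/s$ stays controlled (it is why the hypotheses restrict $K$ to the two stated branches — they ensure $\gamma = 1/h$ can only occur when the logarithmic stepsize would otherwise be too large, i.e.\ $K$ is small relative to the problem constants). I would finish by combining the two cases into the single unified bound~\eqref{eq:lemma2_stich}, absorbing all logarithmic-in-$K$ factors into $\widetilde\cO$, and remark (as the footnote in Corollary~\ref{cor:app_complexity_cor_str_cvx} does) that when $c_1 = c_2 = 0$ the logarithm argument is $\max\{2, +\infty\}$ interpreted as giving $\gamma = 1/h$ and the $\widetilde\cO$ collapses to $\cO$ with a clean linear rate.
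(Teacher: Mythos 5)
Your route is the same as the paper's: lower-bound $W_K$ by $w_K=(1-\eta)^{-(K+1)}$ with $\eta=\min\{\gamma\mu,\rho/4\}$ so that the first term of \eqref{eq:lemma2_stich_tech_1} becomes $\frac{a}{\gamma}\exp(-\eta K)$, then split on which argument attains the minimum in \eqref{eq:lemma2_stich_gamma}: when $\gamma=\frac1h$ use $\frac1h\le \frac{\ln(\cdots)}{\mu K}$ to absorb $\frac{c_1}{h}+\frac{c_2}{h^2}$ into $\widetilde\cO\left(\frac{c_1}{\mu K}+\frac{c_2}{\mu^2K^2}\right)$, and when $\gamma=\frac{s}{\mu K}$ with $s=\ln\left(\max\left\{2,\min\left\{\frac{a\mu^2K^2}{c_1},\frac{a\mu^3K^3}{c_2}\right\}\right\}\right)$ use the hypothesis $\frac{s}{K}\le\rho$ so that the exponential kills the prefactor $a\mu K$. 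This is exactly the paper's two-case argument.

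The one step whose justification does not hold as written is your treatment of the $\max\{2,\cdot\}$ truncation in the logarithmic-stepsize case. From the weak bound $\exp(-s)\le\max\left\{\frac12,\frac{c_1}{a\mu^2K^2},\frac{c_2}{a\mu^3K^3}\right\}$ you obtain the spurious alternative $\frac{a\mu K}{2}$ and claim it is ``absorbed into the exponential term'' because one is in a small-$K$ regime governed by $ha\exp(-\cdots)$; that is not correct, since when the truncation is active one may well have $\gamma=\frac{\ln 2}{\mu K}\le\frac1h$, so the bound is not of the form $ha\exp(-\cdots)$. The fix is immediate: with $m\eqdef\min\left\{\frac{a\mu^2K^2}{c_1},\frac{a\mu^3K^3}{c_2}\right\}$ one has $\max\{2,m\}\ge m$, hence $\exp(-s)\le\frac1m=\max\left\{\frac{c_1}{a\mu^2K^2},\frac{c_2}{a\mu^3K^3}\right\}$ unconditionally, so $a\mu K\exp(-s)\le\max\left\{\frac{c_1}{\mu K},\frac{c_2}{\mu^2K^2}\right\}$ with no extra alternative to dispose of (equivalently, the truncation being active means $m\le 2$, i.e.\ $a\mu K$ is already dominated by $\frac{2c_1}{\mu K}$ or $\frac{2c_2}{\mu^2K^2}$ — it is the non-exponential terms, not the exponential one, that absorb it). With that correction, and noting that the mismatch between $\rho/4$ in $\eta$ and $\rho$ in the hypothesis is handled by the same constant-absorbing $\widetilde\cO$ convention the paper itself uses, your argument coincides with the paper's proof.
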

\begin{proof}
	Since $W_K \ge w_K = (1-\eta)^{-(K+1)}$ we have
	\begin{eqnarray}
		r_K &\le& (1-\eta)^{K+1}\frac{a}{\gamma} + c_1\gamma + c_2\gamma^2 \le \frac{a}{\gamma}\exp\left(-\eta(K+1)\right) + c_1\gamma + c_2\gamma^2.\label{eq:lemma2_stich_tech_2}
	\end{eqnarray}
	Next we consider two possible situations.
	\begin{enumerate}
		\item If $\frac{1}{h} \ge \frac{\ln\left(\max\{2,\min\{\nicefrac{a\mu^2K^2}{c_1},\nicefrac{a\mu^3K^3}{c_2}\}\}\right)}{\mu K}$ then we choose $\gamma = \frac{\ln\left(\max\{2,\min\{\nicefrac{a\mu^2K^2}{c_1},\nicefrac{a\mu^3K^3}{c_2}\}\}\right)}{\mu K}$ and get that
		\begin{eqnarray*}
			r_K &\overset{\eqref{eq:lemma2_stich_tech_2}}{\le}& \frac{a}{\gamma}\exp\left(-\eta(K+1)\right) + c_1\gamma + c_2\gamma^2 \\
			&=& \widetilde\cO\left(a\mu K\exp\left(-\min\left\{\rho, \frac{\ln\left(\max\{2,\min\{\nicefrac{a\mu^2K^2}{c_1},\nicefrac{a\mu^3K^3}{c_2}\}\}\right)}{K}\right\}K\right)\right) \\
			&&\quad + \widetilde\cO\left(\frac{c_1}{\mu K} + \frac{c_2}{\mu^2 K^2}\right).
		\end{eqnarray*}
		Since $\frac{\ln\left(\max\{2,\min\{\nicefrac{a\mu^2K^2}{c_1},\nicefrac{a\mu^3K^3}{c_2}\}\}\right)}{K}\le \rho$ we have
		\begin{eqnarray*}
			r_K &=& \widetilde\cO\left(a\mu K\exp\left(-\ln\left(\max\left\{2,\min\left\{\frac{a\mu^2K^2}{c_1},\frac{a\mu^3K^3}{c_2}\right\}\right\}\right)\right)\right)\\
			&&\quad + \widetilde\cO\left(\frac{c_1}{\mu K} + \frac{c_2}{\mu^2 K^2}\right)\\
			&=& \widetilde\cO\left(\frac{c_1}{\mu K} + \frac{c_2}{\mu^2 K^2}\right).
		\end{eqnarray*}
		\item If $\frac{1}{h} \le \frac{\ln\left(\max\{2,\min\{\nicefrac{a\mu^2K^2}{c_1},\nicefrac{a\mu^3K^3}{c_2}\}\}\right)}{\mu K}$ then we choose $\gamma = \frac{1}{h}$ which implies that
		\begin{eqnarray*}
			r_K &\overset{\eqref{eq:lemma2_stich_tech_2}}{\le}& ha\exp\left(-\min\left\{\frac{\mu}{h},\frac{\rho}{4}\right\}(K+1)\right) + \frac{c_1}{h} + \frac{c_2}{h^2} \\
			&=& \widetilde\cO\left(ha\exp\left(-\min\left\{\frac{\mu}{h}, \rho\right\}K\right) + \frac{c_1}{\mu K} + \frac{c_2}{\mu^2K^2}\right). 
		\end{eqnarray*}
	\end{enumerate}
	Combining the obtained bounds we get the result. 
\end{proof}

\begin{lemma}\label{lem:lemma_technical_cvx}
	Let $\{r_k\}_{k\ge 0}$ satisfy
	\begin{equation}
		r_K \le \frac{a}{\gamma K} + \frac{b_1\gamma}{K} + \frac{b_2\gamma^2}{K} + c_1\gamma + c_2\gamma^2 \label{eq:lemma_technical_cvx_1}
	\end{equation}
	for all $K\ge 0$ with some constants $a> 0$, $b_1, b_2, c_1, c_2 \ge 0$ where $\gamma \le \gamma_0$. Then for all $K$ and
	\begin{equation*}
		\gamma = \min\left\{\gamma_0, \sqrt{\frac{a}{b_1}}, \sqrt[3]{\frac{a}{b_2}}, \sqrt{\frac{a}{c_1 K}}, \sqrt[3]{\frac{a}{c_2 K}}\right\}
	\end{equation*}
	we have that
	\begin{equation}
		r_K = \cO\left(\frac{a}{\gamma_0 K} + \frac{\sqrt{ab_1}}{K} + \frac{\sqrt[3]{a^2b_2}}{K} + \sqrt{\frac{ac_1}{K}} + \frac{\sqrt[3]{a^2c_2}}{K^{\nicefrac{2}{3}}} \right). \label{eq:lemma_technical_cvx_2}
	\end{equation}
\end{lemma}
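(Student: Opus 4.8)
\textbf{Proof proposal for Lemma~\ref{lem:lemma_technical_cvx}.} The plan is to mimic the standard tuning-of-the-stepsize argument that underlies Corollary~\ref{cor:app_complexity_cor_cvx}: we start from the recursion \eqref{eq:lemma_technical_cvx_1}, plug in the prescribed value of $\gamma$, and bound each of the five terms on the right-hand side separately using the fact that $\gamma$ is the minimum of five candidate quantities. Concretely, since $\gamma = \min\{\gamma_0, \sqrt{a/b_1}, \sqrt[3]{a/b_2}, \sqrt{a/(c_1K)}, \sqrt[3]{a/(c_2K)}\}$, we have $\gamma \le \gamma_0$ (so $\frac{a}{\gamma K} \le \frac{a}{\gamma_0 K}$ only goes the wrong way — instead we need the reverse and must be careful here: we should split on cases or just bound $\frac{a}{\gamma K}$ by summing the reciprocals). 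The cleaner route is: for the first term $\frac{a}{\gamma K}$, use $\frac{1}{\gamma} \le \frac{1}{\gamma_0} + \sqrt{\frac{b_1}{a}} + \sqrt[3]{\frac{b_2}{a}} + \sqrt{\frac{c_1 K}{a}} + \sqrt[3]{\frac{c_2 K}{a}}$, which when divided by $K$ and multiplied by $a$ yields exactly the five terms $\frac{a}{\gamma_0 K} + \frac{\sqrt{ab_1}}{K} + \frac{\sqrt[3]{a^2 b_2}}{K} + \sqrt{\frac{ac_1}{K}} + \frac{\sqrt[3]{a^2 c_2}}{K^{2/3}}$ that appear in \eqref{eq:lemma_technical_cvx_2}.

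For the remaining four terms I would bound each using the matching candidate in the minimum. Since $\gamma \le \sqrt{a/b_1}$, we get $\frac{b_1 \gamma}{K} \le \frac{b_1}{K}\sqrt{\frac{a}{b_1}} = \frac{\sqrt{ab_1}}{K}$. Since $\gamma \le \sqrt[3]{a/b_2}$, we get $\frac{b_2\gamma^2}{K} \le \frac{b_2}{K}\left(\frac{a}{b_2}\right)^{2/3} = \frac{\sqrt[3]{a^2 b_2}}{K}$. Since $\gamma \le \sqrt{a/(c_1 K)}$, we get $c_1\gamma \le c_1\sqrt{\frac{a}{c_1 K}} = \sqrt{\frac{ac_1}{K}}$. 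Since $\gamma \le \sqrt[3]{a/(c_2 K)}$, we get $c_2\gamma^2 \le c_2\left(\frac{a}{c_2 K}\right)^{2/3} = \frac{\sqrt[3]{a^2 c_2}}{K^{2/3}}$. Summing the five bounds gives \eqref{eq:lemma_technical_cvx_2} up to a factor of at most $2$ (the first term contributes copies of the other four which are then absorbed into the $\cO(\cdot)$), and the degenerate cases $b_1 = 0$, $b_2 = 0$, etc., are handled by the convention that the corresponding candidate is $+\infty$ and simply drops out of the minimum.

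I do not anticipate a genuine obstacle here — this is a routine stepsize-balancing computation of exactly the kind already carried out in the proof of Corollary~\ref{cor:app_complexity_cor_cvx} in Section~\ref{sec:corollaries}. The only point requiring mild care is the treatment of the $\frac{a}{\gamma K}$ term: one must not naively write $\frac{a}{\gamma K} \le \frac{a}{\gamma_0 K}$ (which is false since $\gamma \le \gamma_0$), but instead use the subadditivity of $x \mapsto 1/x$ over the finite set of candidates, i.e. $\frac{1}{\min_i x_i} = \max_i \frac{1}{x_i} \le \sum_i \frac{1}{x_i}$. With that observation in place the proof is three lines of inequalities, and the $\cO(\cdot)$ in the statement absorbs the small universal constants that arise.

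\begin{proof}
Write $\gamma = \min\left\{\gamma_0, \sqrt{\tfrac{a}{b_1}}, \sqrt[3]{\tfrac{a}{b_2}}, \sqrt{\tfrac{a}{c_1 K}}, \sqrt[3]{\tfrac{a}{c_2 K}}\right\}$, with the convention that a candidate equals $+\infty$ whenever its denominator is zero. Using $\tfrac{1}{\min_i x_i} = \max_i \tfrac{1}{x_i} \le \sum_i \tfrac{1}{x_i}$, we have
\begin{equation*}
	\frac{1}{\gamma} \le \frac{1}{\gamma_0} + \sqrt{\frac{b_1}{a}} + \sqrt[3]{\frac{b_2}{a}} + \sqrt{\frac{c_1 K}{a}} + \sqrt[3]{\frac{c_2 K}{a}},
\end{equation*}
hence
\begin{equation*}
	\frac{a}{\gamma K} \le \frac{a}{\gamma_0 K} + \frac{\sqrt{ab_1}}{K} + \frac{\sqrt[3]{a^2 b_2}}{K} + \sqrt{\frac{ac_1}{K}} + \frac{\sqrt[3]{a^2 c_2}}{K^{\nicefrac{2}{3}}}.
\end{equation*}
Moreover, since $\gamma \le \sqrt{\tfrac{a}{b_1}}$ we get $\tfrac{b_1\gamma}{K} \le \tfrac{\sqrt{ab_1}}{K}$; since $\gamma \le \sqrt[3]{\tfrac{a}{b_2}}$ we get $\tfrac{b_2\gamma^2}{K} \le \tfrac{\sqrt[3]{a^2 b_2}}{K}$; since $\gamma \le \sqrt{\tfrac{a}{c_1 K}}$ we get $c_1\gamma \le \sqrt{\tfrac{ac_1}{K}}$; and since $\gamma \le \sqrt[3]{\tfrac{a}{c_2 K}}$ we get $c_2\gamma^2 \le \tfrac{\sqrt[3]{a^2 c_2}}{K^{\nicefrac{2}{3}}}$. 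Substituting these five bounds into \eqref{eq:lemma_technical_cvx_1} yields
\begin{equation*}
	r_K \le \frac{a}{\gamma_0 K} + \frac{2\sqrt{ab_1}}{K} + \frac{2\sqrt[3]{a^2 b_2}}{K} + 2\sqrt{\frac{ac_1}{K}} + \frac{2\sqrt[3]{a^2 c_2}}{K^{\nicefrac{2}{3}}},
\end{equation*}
which is \eqref{eq:lemma_technical_cvx_2}.
\end{proof}
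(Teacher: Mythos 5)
Your proposal is correct and follows essentially the same route as the paper: plug the prescribed $\gamma$ into \eqref{eq:lemma_technical_cvx_1}, bound each of the last four terms by the matching candidate in the minimum, and control $\frac{a}{\gamma K}$ via the fact that the reciprocal of the minimum is at most the sum of the reciprocals (a step the paper leaves implicit but uses in exactly the same way). The explicit treatment of the degenerate zero-constant cases and the factor-of-two bookkeeping are harmless refinements absorbed by the $\cO(\cdot)$.
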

\begin{proof}
	We have
	\begin{eqnarray*}
		r_K &\le& \frac{a}{\gamma K} + \frac{b_1\gamma}{K} + \frac{b_2\gamma^2}{K} + c_1\gamma + c_2\gamma^2\\
		&\le& \frac{a}{\min\left\{\gamma_0, \sqrt{\frac{a}{b_1}}, \sqrt[3]{\frac{a}{b_2}}, \sqrt{\frac{a}{c_1 K}}, \sqrt[3]{\frac{a}{c_2 K}}\right\}K} + \frac{b_1}{K}\cdot\sqrt{\frac{a}{b_1}} + \frac{b_2}{K}\cdot\sqrt[3]{\frac{a^2}{b_2^2}} + c_1\cdot\sqrt{\frac{a}{c_1 K}} + c_2 \left(\sqrt[3]{\frac{a}{c_2 K}}\right)^2\\
		 &=& \cO\left(\frac{a}{\gamma_0 K} + \frac{\sqrt{ab_1}}{K} + \frac{\sqrt[3]{a^2b_2}}{K} + \sqrt{\frac{ac_1}{K}} + \frac{\sqrt[3]{a^2c_2}}{K^{\nicefrac{2}{3}}} \right).
	\end{eqnarray*}
\end{proof}

\newpage
\begin{table*}[!t]
\caption{The parameters for which the methods from Table~\ref{tbl:special_cases} satisfy Assumption~\ref{ass:key_assumption}/\ref{ass:hetero_second_moment}. Absolute  constants were omitted. The meaning of the expressions appearing in the table, as well as their justification, is detailed in Section~\ref{sec:special_cases}. UBV stands for the ``Uniform Bound on the Variance'' of local stochastic gradient, which is often assumed when $f_i$ is of the form~\eqref{eq:f_i_expectation}. ES stands for the ``Expected Smoothness'' inequality~\cite{gower2019sgd}, which does not impose any extra assumption on the objective/noise, but rather can be derived given the sampling strategy and the smoothness structure of $f_i$. Consequently, such a setup allows us to obtain local methods with importance sampling. Next, the simple setting is a special case of ES when we uniformly sample a single index on each node each iteration.}
\label{tbl:special_cases-parameters}
\begin{center}
\footnotesize
\begin{adjustbox}{angle=90}
\begin{tabular}{|c|c|c|c|c|c|c|c|}
\hline
 Method, Setting &   $A$, $\tA$, $\hA$, $A'$ & $B$, $\tB$, $\hB$, $B'$ & $\rho$ & $C$ & $F$, $\tF$, $\hF$, $F'$ & $G$ & $D_1'$, $D_1$, $\tD_1$, $\hD_1$, $D_2$, $D_3$ \\
\hline
 \begin{tabular}{c}
 	{\tt Local-SGD}\\ 
 	UBV, $\zeta$-Het.
 \end{tabular}   &  $L$, $-$, $-$, $L$ & $0$, $-$, $-$, $0$ & $1$ & $0$ & $L^2$, $-$, $-$, $L^2$ & $0$ & \makecell{$\frac{\sigma^2}{n}$, $\sigma^2+\zeta_*^2$, $-$, $-$, $0$,\\ $\tau\sigma^2+\tau^2\zeta^2$}\\
 %%%%%%%%%%%%%%%%%%%%
 %%%%%%%%%%%%%%%%%%%%
 \hline
 \begin{tabular}{c}
 	{\tt Local-SGD}\\ 
 	UBV, Het.
 \end{tabular}   &  $-$, $L$, $0$, $L$ & $-$, $0$, $0$, $0$ & $1$ & $0$ & $-$, $L^2$, $0$, $L^2$ & $0$ & \makecell{$\frac{\sigma^2}{n}$, $-$, $\zeta_*^2$, $\sigma^2$, $0$,\\ $(\tau-1)\sigma^2+(\tau-1)^2\zeta_*^2$}\\
 %%%%%%%%%%%%%%%%%%%%
 %%%%%%%%%%%%%%%%%%%%
 \hline
 \begin{tabular}{c}
 	{\tt Local-SGD}\\ 
 	ES, $\zeta$-Het.
 \end{tabular}   &  $\cL$, $-$, $-$, $\frac{\cL}{n}+L$ & $0$, $-$, $-$, $0$ & $1$ & $0$ & $\cL L$, $-$, $-$, $\frac{\cL L}{n}+L^2$ & $0$ & \makecell{$\frac{\sigma_*^2}{n}$, $\sigma_*^2 + \zeta_*^2$, $-$, $-$, $0$,\\ $(\tau-1)\left(\sigma_*^2+\zeta_*^2+\frac{\zeta^2}{\gamma\mu}\right)$}\\
 %%%%%%%%%%%%%%%%%%%%
 %%%%%%%%%%%%%%%%%%%%
 \hline
 \begin{tabular}{c}
 	{\tt Local-SGD}\\ 
 	ES, Het.
 \end{tabular}   &  $-$, $L$, $\cL$, $\frac{\cL}{n}+L$ & $-$, $0$, $0$, $0$ & $1$ & $0$ & $-$, $L^2$, $\cL L$, $\frac{\cL L}{n}+L^2$ & $0$ & \makecell{$\frac{\sigma_*^2}{n}$, $-$, $\zeta_*^2$, $\sigma_*^2$, $0$,\\ $(\tau-1)\sigma_*^2+(\tau-1)^2\zeta_*^2$}\\
 %%%%%%%%%%%%%%%%%%%%
 %%%%%%%%%%%%%%%%%%%%
 \hline
 \begin{tabular}{c}
	{\tt Local-SVRG}\\
	simple, $\zeta$-Het.  
\end{tabular}   &  \makecell{$\max L_{ij}$, $-$, $-$,\\ $\frac{\max L_{ij}}{n}+L$} & $1$, $-$, $-$, $\frac1n$ & $\psvrg$ & $\max L_{ij}\psvrg$ & \makecell{$\max L_{ij} L$, $-$, $-$,\\ $\frac{\max L_{ij} L}{n} + L^2$} & $\max L_{ij} L q$ & \makecell{$0$, $\zeta_*^2$, $-$, $-$, $0$,\\ $(\tau-1)\left(\zeta_*^2 + \frac{\zeta^2}{\gamma\mu}\right)$}\\
 %%%%%%%%%%%%%%%%%%%%
 %%%%%%%%%%%%%%%%%%%%
 \hline
 \begin{tabular}{c}
	{\tt Local-SVRG}\\
	simple, Het.  
\end{tabular}   &  \makecell{$-$, $L$, $\max L_{ij}$,\\ $\frac{\max L_{ij}}{n}+L$} & $-$, $0$, $1$, $\frac1n$ & $\psvrg$ & $\max L_{ij}\psvrg$ & \makecell{$-$, $L^2$, $\max L_{ij} L$,\\ $\frac{\max L_{ij} L}{n} + L^2$} & $\max L_{ij} L q$ & \makecell{$0$, $-$, $\zeta_*^2$, $0$, $0$, $(\tau-1)^2\zeta_*^2$}\\
 %%%%%%%%%%%%%%%%%%%%
 %%%%%%%%%%%%%%%%%%%%
 \hline
  \begin{tabular}{c}
  	{\tt S*-Local-SGD}\\
  	UBV, Het.
	\end{tabular}    &  $-$, $L$, $0$, $L$ & $-$, $0$, $0$, $0$ & $1$ & $0$ & $-$, $L^2$, $0$, $l^2$ & $0$ & $\frac{\sigma^2}{n}$, $-$, $0$, $\sigma^2$, $(\tau-1)\sigma^2$\\
 %%%%%%%%%%%%%%%%%%%%
 %%%%%%%%%%%%%%%%%%%%
 \hline
 \begin{tabular}{c}
 	{\tt SS-Local-SGD}\\
 	UBV, Het.,\\
 	$p=q$, $r=\lceil\nicefrac{1}{p} \rceil$
\end{tabular}    & $-$, $L$, $0$, $L$ & $-$, $1$, $0$, $0$ & $p$ & $Lp$ & $-$, $L^2$, $0$, $L^2$ & $0$ & \makecell{$\frac{\sigma^2}{n}$, $-$, $p\sigma^2$, $\sigma^2$, $0$, $\frac{(1-p)\sigma^2}{p}$ }  \\
 %%%%%%%%%%%%%%%%%%%%
 %%%%%%%%%%%%%%%%%%%%
 \hline
 \begin{tabular}{c}
 	{\tt SS-Local-SGD}\\
 	ES, Het.,\\
 	$p=q$, $r=\lceil\nicefrac{1}{p} \rceil$
\end{tabular}    & $-$, $L$, $\cL$, $\frac{\cL}{n}+L$ & $-$, $1$, $0$, $0$ & $p$ & $Lp+\cL p^2$ & $-$, $L^2$, $\cL L$, $\frac{\cL L}{n}+L^2$ & $0$ & \makecell{$\frac{\sigma_*^2}{n}$, $-$, $0$, $\sigma_*^2$, $p^2\sigma_*^2$, $\frac{(1-p)\sigma_*^2}{p}$ }  \\
 %%%%%%%%%%%%%%%%%%%%
 %%%%%%%%%%%%%%%%%%%%
 \hline
 \begin{tabular}{c}
	{\tt S*-Local-SGD*}\\
	simple, Het. 
\end{tabular}    &  \makecell{$-$, $L$, $\max L_{ij}$,\\ $\frac{\max L_{ij}}{n}+L$ }& $-$, $0$, $0$, $0$ & $p$ & $0$ & \makecell{$-$, $L^2$, $\max L_{ij} L$,\\ $\frac{L\max L_{ij}}{n}+L^2$} & $0$ & $0$, $-$, $0$, $0$, $0$, $0$ \\
 %%%%%%%%%%%%%%%%%%%%
 %%%%%%%%%%%%%%%%%%%%
 \hline
 \begin{tabular}{c}
	{\tt S-Local-SVRG}\\
	simple, Het.,\\
	$q = \frac{1}{m}$, $m \ge \frac{1}{p}$
\end{tabular}    &  \makecell{$-$, $L$, $\max L_{ij}$,\\ $\frac{\max L_{ij}}{n}+L$ }& $-$, $1$, $1$, $\frac{1}{n}$ & $\frac{1}{m}$ & $\frac{L+\max L_{ij}}{m}$ & \makecell{$-$, $L^2$, $\max L_{ij} L$,\\ $\frac{L\max L_{ij}}{n}+L^2$} & $0$ & $0$, $-$, $0$, $0$, $0$, $0$ \\
 %%%%%%%%%%%%%%%%%%%%
 %%%%%%%%%%%%%%%%%%%%
 \hline
\end{tabular}
\end{adjustbox}
\end{center}
\end{table*}

\end{document}